\newcommand{\myTitle}{Importance measures derived from Random Forests\xspace}
\newcommand{\mySubtitle}{Characterisation and Extension\xspace}
\newcommand{\myName}{Antonio Sutera\xspace}
\newcommand{\myFaculty}{Faculty of Applied Sciences\xspace}
\newcommand{\myUni}{University of Liege\xspace}
\newcounter{dummy} 
\providecommand{\mLyX}{L\kern-.1667em\lower.25em\hbox{Y}\kern-.125emX\@}
\newcommand{\backrefnotcitedstring}{\relax}
\newcommand{\backrefcitedsinglestring}[1]{(Cited on page~#1.)}
\newcommand{\backrefcitedmultistring}[1]{(Cited on pages~#1.)}
		   \renewcommand*{\backref}[1]{}  
		   \renewcommand*{\backrefalt}[4]{
		      \ifcase #1 %
		         \backrefnotcitedstring%
		      \or%
		         \backrefcitedsinglestring{#2}%
		      \else%
		         \backrefcitedmultistring{#2}%
		      \fi}%
\newtheorem{theorem}{Theorem}
\newtheorem*{theorem*}{Theorem}
\newtheorem{lemma}[theorem]{Lemma}
\newtheorem{proposition}[theorem]{Proposition}
\newtheorem{corollary}[theorem]{Corollary}
\newtheorem{definition}{Definition}
\newtheorem{property}{Property}
\newtheorem{example}{Example}
\newtheorem{algorithm}{Algorithm}
\numberwithin{theorem}{chapter}
\numberwithin{property}{chapter}
\numberwithin{example}{chapter}
\numberwithin{definition}{chapter}
\numberwithin{algorithm}{chapter}
\numberwithin{figure}{chapter}
\numberwithin{table}{chapter}
\DeclareMathOperator*{\argmin}{arg\,min}
\DeclareMathOperator*{\argmax}{arg\,max}
\numberwithin{equation}{chapter}
\renewcommand{\emph}{\textit}
\definecolor{rulecolor}{rgb}{0.80,0.80,0.80}
\definecolor{bgcolor}{rgb}{1.0,1.0,1.0}
\newcommand{\g}{\cellcolor{gray!20}}
\newcolumntype{L}[1]{>{\raggedright\let\newline\\\arraybackslash\hspace{0pt}}m{#1}}
\newcolumntype{C}[1]{>{\centering\let\newline\\\arraybackslash\hspace{0pt}}m{#1}}
\newcolumntype{R}[1]{>{\raggedleft\let\newline\\\arraybackslash\hspace{0pt}}m{#1}}
\newcommand*{\indep}{%
  \mathbin{%
    \mathpalette{\@indep}{}%
  }%
}
\newcommand*{\nindep}{%
  \mathbin{
    \mathpalette{\@indep}{\not}
  }%
}
\newcommand*{\@indep}[2]{%
  \sbox0{$#1\perp\m@th$}
  \sbox2{$#1=$}
  \sbox4{$#1\vcenter{}$}
  \rlap{\copy0}
  \dimen@=\dimexpr\ht2-\ht4-.2pt\relax
  \kern\dimen@
  {#2}%
  \kern\dimen@
  \copy0 
} 
\newcommand{\checkedbox}{{\Large \faCheckSquareO \,}}
\newcommand{\tocheckbox}{{\Large \faPencilSquareO \,}}
\newcommand{\qmark}{{\large \faFastForward \;}}
\newcommand{\pmark}{{\huge \ding{46} \,}}
\newtcolorbox{nutshell}[1]{colback=RoyalBlue!15!white, colframe=RoyalBlue!95!black,fonttitle=\bfseries, title=\qmark  #1}
\newtcolorbox{overview}{boxrule=0pt, frame hidden,interior hidden, borderline west={1pt}{0pt}{RoyalBlue!95!black}, colbacktitle=white, enhanced jigsaw, arc=0mm,outer arc=0pt, colframe=RoyalBlue!95!black, colback=white,fonttitle=\bfseries, coltitle=RoyalBlue,title=\tocheckbox Overview}
\newtcolorbox{summary}[0]{boxrule=0pt, frame hidden,interior hidden, borderline west={1pt}{0pt}{RoyalBlue!95!black}, colbacktitle=white, enhanced jigsaw, arc=0mm,outer arc=0pt, colframe=RoyalBlue!95!black, colback=white,fonttitle=\bfseries, coltitle=RoyalBlue,title=\checkedbox Chapter take-away}
\newtcolorbox{todobox}[0]{boxrule=0pt, frame hidden,interior hidden, borderline west={1pt}{0pt}{Red!95!black}, colbacktitle=white, enhanced jigsaw, arc=0mm,outer arc=0pt, colframe=RoyalBlue!95!black, colback=white,fonttitle=\bfseries, coltitle=Red,title=TODO}
\newenvironment{sidenote_}[1]{\begin{mdframed}[linewidth=1.5pt,roundcorner=10pt, leftmargin =+0cm,
		rightmargin=+0cm,frametitlerule=true,
		frametitlebackgroundcolor=RoyalBlue,
		usetwoside=false,linecolor=RoyalBlue!95!black,backgroundcolor=RoyalBlue!15!white,frametitle=\textsc{\textbf{\color{white}\noindent \pmark #1}}]}{\end{mdframed}}
\newenvironment{sidenote}[1]{%
	\begin{figure}[htbp]
	\begin{sidenote_}{#1}
	}{%
	\end{sidenote_}%
\end{figure}
}
\newcommand{\epi}[2]{\epigraph{{\LARGE ``}\textit{#1}{\LARGE ''}}{--- \textbf{#2}}}
\newcommand{\epianonymous}[1]{\epigraph{{\LARGE ``}\textit{#1}{\LARGE ''}}{}}
\newenvironment{mldescription}{%
	\vspace{-1em}
	\begin{addmargin}[2em]{0em}
		\setlength{\parindent}{-2em}%
		\newcommand*{\mlitem}[1]{\par\vspace{1em}\textsc{##1}\;}\indent 
	}{%
	\end{addmargin}
	\bigskip
}
\pgfplotsset{compat=newest}
\def\uncommentversion{0} 
\newcommand{\lwhnote}[1]{}
\newcommand{\asnote}[1]{}
\newcommand{\astobefinalized}[1]{\textit{To be finalized.}}
\newcommand{\pgnote}[1]{}
\newcommand{\lwhnote}[1]{{\color{red}LW: #1}}
\newcommand{\asnote}[1]{{\color{blue}AS: #1}}
\newcommand{\astobefinalized}[1]{#1}
\newcommand{\pgnote}[1]{{\color{green}PG: #1}}
\begin{document}
\frenchspacing
\raggedbottom
\pagenumbering{arabic}
\pagestyle{scrheadings}


\begin{titlepage}
	\begin{addmargin}[-1cm]{-3cm}
    \begin{center}
        \large
        {\Large \textsc{University of Li{\`e}ge}}\\[1ex]
        Faculty of Applied Sciences\\
        Department of Electrical Engineering \& Computer Science\\

        \vfill \vfill

        PhD dissertation\\ \vskip1cm
        \begingroup
            \huge
            \color{RoyalBlue}\spacedallcaps{\myTitle} \\ \bigskip
        \endgroup
        \spacedlowsmallcaps{\mySubtitle} \\ \vskip1cm \vfill \vfill
        \textbf{by} \textsc{Antonio Sutera}

        \vfill
        \vfill
        \vfill
	
		\hfill
        \begin{tabular}{rl}
        Advisors: & Prof. \textsc{Pierre Geurts}\\
        & Prof. \textsc{Louis Wehenkel}\\
        & \\
        & June 2019
        \end{tabular}
    \end{center}
    \vspace{-3.5cm}\includegraphics[width=0.5\textwidth]{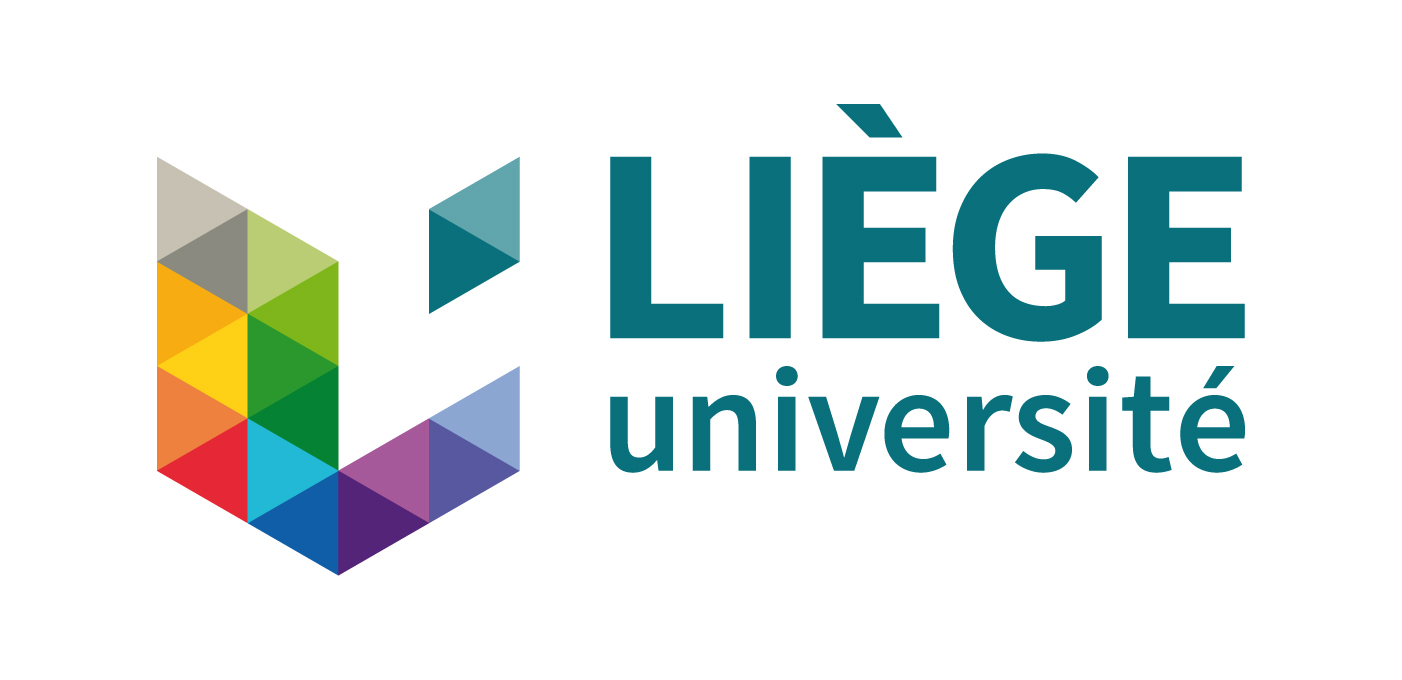}
  \end{addmargin}
\end{titlepage}


\pdfbookmark[1]{Jury members}{Jury members}
\chapter*{Jury members}

\noindent \textsc{Gilles Louppe}, Professor at the Universit{\'e} de Li{\`e}ge (President); \\[2ex]
\noindent \textsc{Pierre Geurts}, Professor at the Universit{\'e} de Li{\`e}ge (Advisor); \\[2ex]
\noindent \textsc{Louis Wehenkel}, Professor at the Universit{\'e} de Li{\`e}ge (Co-advisor); \\[2ex]
\noindent \textsc{Benoît Frénay}, Professor at the Universit{\'e} de Namur;\\[2ex]
\noindent \textsc{Robin Genuer}, Professor at the Universit{\'e} de Bordeaux (France);\\[2ex]
\noindent \textsc{Patrick Meyer}, Professor at the Universit{\'e} de Li{\`e}ge; \\[2ex]
\noindent \textsc{Erwan Scornet}, Professor at Ecole Polytechnique (France).\\[2ex]


\definecolor{ARED}{HTML}{E91E23}
\definecolor{ABLUE}{HTML}{02B2E7}
\definecolor{AYELLOW}{HTML}{FABC16}

\newcommand{\bd}[1]{\textcolor{#1}{\textbullet}\hspace{+0.1mm}}

\pdfbookmark[1]{Acknowledgments}{acknowledgments}
\chapter*{Acknowledgments}

To all the family members, friends and colleagues that helped me through the accomplishment of this thesis. 

\begin{center}
\textsc{T\bd{ARED}h\bd{ABLUE}a\bd{AYELLOW}n\bd{ARED}k \bd{AYELLOW} y\bd{ABLUE}o\bd{ARED}u}
\end{center}

\pdfbookmark[1]{Abstract}{Abstract}
\chapter*{Abstract}

Nowadays new technologies, and especially artificial intelligence,  are more and more established in our society.
Big data analysis and machine learning, two sub-fields of artificial intelligence, are at the core of many recent  breakthroughs  in many application fields (e.g., medicine, communication, finance, ...), including some that are strongly related to our day-to-day life (e.g., social networks, computers, smartphones, ...). In machine learning, significant improvements are usually achieved at the price of an increasing computational complexity and thanks to bigger datasets. Currently, cutting-edge models built by the most advanced machine learning algorithms typically became simultaneously very efficient and profitable but also extremely complex. Their complexity is to such an extent that these models are commonly seen as black-boxes providing a prediction or a decision which can not be interpreted or justified.
Nevertheless, whether these models are used autonomously or as a simple decision-making support tool, they are already being used in machine learning applications where health and human life are at stake.
Therefore, it appears to be an obvious necessity not to blindly believe everything coming out of those models without a detailed understanding of their predictions or decisions.

Accordingly, this thesis aims at improving the interpretability of models built by a specific family of machine learning algorithms, the so-called tree-based methods. Several mechanisms have been proposed to interpret these models and we aim along this thesis to improve their understanding, study their properties, and define their limitations.

The first part of this thesis introduces the techniques used to build these models, i.e. decision tree and ensemble of randomised trees induction algorithms. It also presents the basis of feature selection, a data analysis method aiming at identifying the essential features of a model and allowing to improve the model performances and/or its interpretability. 

The second part of this thesis focuses on the two most popular importance measures, aiming at measuring the relative importance of features in the model, derived from tree-based methods. Our contribution in this part is two-fold. On one hand, we review the main literature on that topic, with a focus on theoretical analyses. On the other hand, we improve the theoretical characterisation of one subclass of these importance measures, known as the Mean Decrease of Impurity (MDI), and study it in greater details, both theoretically and practically. 

The last part of this thesis is a collection of several works addressing some limitations of existing importance measures in some specific applications. We thus propose an extension of the MDI importance measure that can take into account different contexts in which the problem can be put, so as to provide further insight into the feature importances. We also study a new tree-based method that yields an efficient feature selection even in presence of large datasets and/or under memory constraints. Lastly we discuss the strengths and weaknesses of a solution to the network inference problem based on a tree-based importance measure, and propose a non tree-based method that we have designed as part of a network inference challenge that we eventually won. 

\pdfbookmark[1]{Résumé}{Résumé}
\chapter*{Résumé}

De nos jours, les nouvelles technologies, et tout particulièrement l'intelligence artificielle, sont toujours plus ancrées dans notre société. L'analyse de grands volumes de données et l'apprentissage automatique, deux sous-domaines de l'intelligence artificielle, sont  au centre des plus récentes percées  dans de nombreux domaines (e.g., la médecine, la communication, la finance, ...), et en particulier des applications intimement liées à notre vie quotidienne (réseaux sociaux, ordinateurs, smartphones, ...). 
En apprentissage automatique, les améliorations significatives sont souvent obtenues au prix d'une plus grande complexité computationelle et grâce à des quantités de données toujours plus grandes. A l'heure actuelle, les modèles de pointe obtenus par les algorithmes d'apprentissage automatique les plus sophistiqués sont généralement à la fois très efficaces et extrêmement complexes.  Leur complexité est telle qu'ils sont souvent vus comme des \guillemotleft bo\^ites noires\guillemotright\ fournissant une prédiction ou une décision qui ne peut ni être interprétée ni être justifiée. Néanmoins, que ces modèles soient considérés de manière autonome ou comme de simples outils d'aide à la décision, ils sont déjà utilisés dans des applications d'apprentissage automatique desquelles dépendent la santé et des vies humaines. Par conséquent, il apparait comme une évidente nécessité de ne pas croire les prédictions de ces modèles aveuglément, sans les avoir comprises.

Dans ce contexte, cette thèse a pour but d'améliorer l'interprétation qui peut être faite de modèles construits par une famille particulière d'algorithmes d'apprentissage automatique basées sur les arbres de décision. Plusieurs mécanismes ont été mis en œuvre pour interpréter ces modèles et nous visons tout au long de cette thèse à améliorer leur compréhension, à étudier leurs propriétés et à en définir les limites.

La première partie de cette thèse introduit les techniques de construction de ces modèles, à savoir les arbres de décision et les ensembles d'arbres aléatoires. Elle présente également les bases de la sélection de variables, méthode d'analyse de données qui a pour but d'identifier les variables essentielles d'un problème permettant à la fois d'améliorer les performances des modèles et leur interprétabilité. 

La seconde partie de cette thèse se concentre sur les deux mesures d'importance les plus populaires, visant à déterminer l'importance relative des variables dans le modèle, dérivées des méthodes \`a base d'arbres. Notre contribution dans cette partie est double. D'une part, nous examinons la littérature traitant ce sujet, avec une attention toute particulière pour les analyses théoriques. D'autre part, nous améliorons la caractérisation théorique d'une sous-classe de mesures d'importance, à savoir celle basée sur la réduction d'impureté (MDI), et nous l'étudions de manière détaillée théoriquement et pratiquement.

La dernière partie de cette thèse est une collection de plusieurs travaux qui se concentrent sur certaines limitations des mesures d'importance existantes dans des applications spécifiques. 
Ainsi, nous proposons une extension de la mesure d'importance MDI capable de prendre en compte les différents contextes dans lesquels le problème peut être placé, et cela de manière à fournir une connaissance approfondie sur l'importance des variables. Nous étudions également une nouvelle méthode à base d'arbres capable de fournir une sélection de variables performante, et ce, même en présence de grands volumes de données et/ou en cas de contraintes de mémoire. Et enfin, nous discutons les forces et les faiblesses d'une solution à ce problème au d'inférence de réseaux utilisant les mesures d'importances dérivées d'arbres de décision. Nous proposons également une méthode développée lors d'une compétition d'inférence de réseaux, qui ne fait pas intervenir les arbres de décision mais qui nous a permis de remporter cette compétition.


\refstepcounter{dummy}
\pdfbookmark[1]{\contentsname}{tableofcontents}
\setcounter{tocdepth}{3} 
\setcounter{secnumdepth}{3} 
\manualmark
\markboth{\spacedlowsmallcaps{\contentsname}}{\spacedlowsmallcaps{\contentsname}}
\tableofcontents
\automark[section]{chapter}
\renewcommand{\chaptermark}[1]{\markboth{\spacedlowsmallcaps{#1}}{\spacedlowsmallcaps{#1}}}
\renewcommand{\sectionmark}[1]{\markright{\thesection\enspace\spacedlowsmallcaps{#1}}}

\cleardoublepage

\newcommand\mytreeinabox{
	\draw[rounded corners=15pt]
	(0,0) rectangle ++(8,-6.5);
	
	\def\y{-1};
	\def\x{4};
	\def\xx{1.75};
	\def\xxx{0.9};
	\def\xxxx{0.5};
	
	\node[fill, circle, radius=0.1] at (\x,\y) (a) {};  
	\node[fill, circle, radius=0.1] at (\x-\xx,\y-1) (b1) {};
	\node[fill, circle, radius=0.1] at (\x+\xx,\y-1) (b2) {};
	
	\node[fill, circle, radius=0.1] at (\x-\xx-\xxx,\y-2) (c1) {};
	\node[fill, circle, radius=0.1] at (\x-\xx+\xxx,\y-2) (c2) {};
	\node[fill, circle, radius=0.1] at (\x+\xx-\xxx,\y-2) (c3) {};
	\node[fill, circle, radius=0.1] at (\x+\xx+\xxx,\y-2) (c4) {};
	
	\node[fill, circle, radius=0.1] at (\x-\xx+\xxx-\xxxx,\y-3) (d1) {};
	\node[fill, circle, radius=0.1] at (\x-\xx+\xxx+\xxxx,\y-3) (d2) {};
	\node[fill, circle, radius=0.1] at (\x+\xx-\xxx-\xxxx,\y-3) (d3) {};
	\node[fill, circle, radius=0.1] at (\x+\xx-\xxx+\xxxx,\y-3) (d4) {};
	\node[fill, circle, radius=0.1] at (\x+\xx+\xxx-\xxxx,\y-3) (d5) {};
	\node[fill, circle, radius=0.1] at (\x+\xx+\xxx+\xxxx,\y-3) (d6) {};
	
	\node[fill, circle, radius=0.1] at (\x-\xx+\xxx-\xxxx-\xxxx,\y-4) (e1) {};
	\node[fill, circle, radius=0.1] at (\x-\xx+\xxx-\xxxx+\xxxx,\y-4) (e2) {};
	\node[fill, circle, radius=0.1] at (\x+\xx-\xxx+\xxxx+\xxxx,\y-4) (e3) {};
	\node[fill, circle, radius=0.1] at (\x+\xx-\xxx+\xxxx-\xxxx,\y-4) (e4) {};
	
	\draw[] (a) -- (b1);
	\draw[] (a) -- (b2);
	
	\draw[] (b1) -- (c1);
	\draw[] (b1) -- (c2);
	\draw[] (b2) -- (c3);
	\draw[] (b2) -- (c4);
	
	\draw[] (c2) -- (d1);
	\draw[] (c2) -- (d2);
	\draw[] (c3) -- (d3);
	\draw[] (c3) -- (d4);
	\draw[] (c4) -- (d5);
	\draw[] (c4) -- (d6);
	
	\draw[] (d1) -- (e1);
	\draw[] (d1) -- (e2);
	\draw[] (d4) -- (e3);
	\draw[] (d4) -- (e4);
}

\newcommand\mytreeinaboxtwo{
	\draw[rounded corners=15pt]
	(0,0) rectangle ++(8,-6.5);
	
	\def\y{-1};
	\def\x{4};
	\def\xx{1.75};
	\def\xxx{0.9};
	\def\xxxx{0.5};
	
	\node[fill, circle, radius=0.1] at (\x,\y) (a) {};  
	\node[fill, circle, radius=0.1] at (\x-\xx,\y-1) (b1) {};
	\node[fill, circle, radius=0.1] at (\x+\xx,\y-1) (b2) {};
	
	\node[fill, circle, radius=0.1] at (\x-\xx-\xxx,\y-2) (c1) {};
	\node[fill, circle, radius=0.1] at (\x-\xx+\xxx,\y-2) (c2) {};
	\node[fill, circle, radius=0.1] at (\x+\xx-\xxx,\y-2) (c3) {};
	\node[fill, circle, radius=0.1] at (\x+\xx+\xxx,\y-2) (c4) {};
	
	\node[fill, circle, radius=0.1] at (\x-\xx-\xxx-\xxxx,\y-3) (d1) {};
	\node[fill, circle, radius=0.1] at (\x-\xx-\xxx+\xxxx,\y-3) (d2) {};
	\node[fill, circle, radius=0.1] at (\x-\xx+\xxx-\xxxx,\y-3) (d3) {};
	\node[fill, circle, radius=0.1] at (\x-\xx+\xxx+\xxxx,\y-3) (d4) {};
	\node[fill, circle, radius=0.1] at (\x+\xx+\xxx-\xxxx,\y-3) (d5) {};
	\node[fill, circle, radius=0.1] at (\x+\xx+\xxx+\xxxx,\y-3) (d6) {};
	
	\node[fill, circle, radius=0.1] at (\x-\xx-\xxx-\xxxx-\xxxx,\y-4) (e1) {};
	\node[fill, circle, radius=0.1] at (\x-\xx-\xxx-\xxxx+\xxxx,\y-4) (e2) {};

	\draw[] (a) -- (b1);
	\draw[] (a) -- (b2);
	
	\draw[] (b1) -- (c1);
	\draw[] (b1) -- (c2);
	\draw[] (b2) -- (c3);
	\draw[] (b2) -- (c4);
	
    \draw[] (c1) -- (d1);
	\draw[] (c1) -- (d2);
	\draw[] (c2) -- (d3);
	\draw[] (c2) -- (d4);
	\draw[] (c4) -- (d5);
	\draw[] (c4) -- (d6);
	
   	\draw[] (d1) -- (e1);
	\draw[] (d1) -- (e2);
}

\newcommand\mytreeinaboxthree{
	\draw[rounded corners=15pt]
	(0,0) rectangle ++(8,-6.5);
	
	\def\y{-1};
	\def\x{4};
	\def\xx{1.75};
	\def\xxx{0.9};
	\def\xxxx{0.5};
	
	\node[fill, circle, radius=0.1] at (\x,\y) (a) {};  
	\node[fill, circle, radius=0.1] at (\x-\xx,\y-1) (b1) {};
	\node[fill, circle, radius=0.1] at (\x+\xx,\y-1) (b2) {};
	
	\node[fill, circle, radius=0.1] at (\x-\xx-\xxx,\y-2) (c1) {};
	\node[fill, circle, radius=0.1] at (\x-\xx+\xxx,\y-2) (c2) {};
	\node[fill, circle, radius=0.1] at (\x+\xx-\xxx,\y-2) (c3) {};
	\node[fill, circle, radius=0.1] at (\x+\xx+\xxx,\y-2) (c4) {};
	
	\node[fill, circle, radius=0.1] at (\x-\xx-\xxx-\xxxx,\y-3) (d1) {};
	\node[fill, circle, radius=0.1] at (\x-\xx-\xxx+\xxxx,\y-3) (d2) {};
	\node[fill, circle, radius=0.1] at (\x-\xx+\xxx-\xxxx,\y-3) (d3) {};
	\node[fill, circle, radius=0.1] at (\x-\xx+\xxx+\xxxx,\y-3) (d4) {};
	\node[fill, circle, radius=0.1] at (\x+\xx-\xxx-\xxxx,\y-3) (d5) {};
	\node[fill, circle, radius=0.1] at (\x+\xx-\xxx+\xxxx,\y-3) (d6) {};
	\node[fill, circle, radius=0.1] at (\x+\xx+\xxx-\xxxx,\y-3) (d7) {};
	\node[fill, circle, radius=0.1] at (\x+\xx+\xxx+\xxxx,\y-3) (d8) {};

	\draw[] (a) -- (b1);
	\draw[] (a) -- (b2);
	
	\draw[] (b1) -- (c1);
	\draw[] (b1) -- (c2);
	\draw[] (b2) -- (c3);
	\draw[] (b2) -- (c4);
	
	\draw[] (c1) -- (d1);
	\draw[] (c1) -- (d2);
	\draw[] (c2) -- (d3);
	\draw[] (c2) -- (d4);
	\draw[] (c3) -- (d5);
	\draw[] (c3) -- (d6);
	\draw[] (c4) -- (d7);
	\draw[] (c4) -- (d8);

}

\newcommand{\oldmyrandomforest}{
\begin{scope}[scale=0.5, xshift=0,yshift=0,every node/.append style={transform shape}]
	\mytreeinabox
\end{scope}		

\begin{scope}[scale=0.5, xshift=9cm,yshift=0,every node/.append style={transform shape}]
	\mytreeinaboxtwo
\end{scope}		

\begin{scope}[scale=2, xshift=4.65cm,yshift=-22,every node/.append style={transform shape}]
	\node[] at (0,0) {$\dots$};
\end{scope}

\begin{scope}[scale=0.5, xshift=20.2cm,yshift=0,every node/.append style={transform shape}]
	\mytreeinaboxthree
\end{scope}


\node[draw] at (7,-4) (P) {Aggregate};
\draw[->,line width=2pt] (P) -- +(0,-1) node[anchor=north] () {$\hat{y}$};  

\coordinate (P1) at (2,-3);
\draw [->,line width=2pt] (P1.south) to [out=270,in=180] node [above] {$\hat{y}_1$} (P.west);

\coordinate (P2) at (6.39,-3);
\draw [->,line width=2pt] (P2.south) to [out=270,in=115] node [left] {$\hat{y}_2$} (P.north);

\coordinate (P3) at (12.1,-3);
\draw [->,line width=2pt] (P3.south) to [out=270,in=0] node [above] {$\hat{y}_T$} (P.east);

}

\newcommand{\myrandomforest}{

	\begin{scope}[scale=0.5, xshift=0,yshift=0,every node/.append style={transform shape}]
		\mytreeinabox		
		\coordinate (mid1) at (4,-6.5);
	\end{scope}		
	
	\begin{scope}[scale=0.5, xshift=9cm,yshift=0,every node/.append style={transform shape}]
		\mytreeinaboxtwo
		\coordinate (mid2) at (4,-6.5);
	\end{scope}

	\begin{scope}[scale=2, xshift=4.65cm,yshift=-22,every node/.append style={transform shape}]
		\node[] at (0,0) {$\dots$};
	\end{scope}
	
	\begin{scope}[scale=0.5, xshift=20.2cm,yshift=0,every node/.append style={transform shape}]
		\mytreeinaboxthree
		\coordinate (mid3) at (4,-6.5);
	\end{scope}

	 \node[draw,rectangle,rounded corners=5pt,fill=ForestGreen!20] at (7.5,-4.5) (agg) {\textit{Aggregation}};

	\draw [->,line width=2pt] (mid1) to [out=270,in=180] node [above] {$\hat{y}_1$} (agg.west);
	\draw [->,line width=2pt] (mid2) to [out=270,in=90] node [above,pos=0.7] {$\hat{y}_2$} (agg.north);
	\draw [->,line width=2pt] (mid3) to [out=270,in=0] node [above] {$\hat{y}_{N_T}$} (agg.east);
	
	\draw[->,line width=2pt] (agg) -- +(0,-1) node[below] () {$\mathbf{T}(\mathbf{x}) = \hat{y}$};

}

\newcommand{\myLS}[9]{
\def\scale{0.7};

\draw[fill=white] (0,0) rectangle (\scale*3,-2*\scale);

\def\s{-0.4*\scale};

\foreach \i in {#1,#2,#3,#4} {
	
	\ifthenelse{\equal{\i}{0}}
	{
	}
	{\fill[RoyalBlue!20] (0,\i*\s-\s) rectangle (3*\scale,\i*\s);}

}

\def\c{0.6*\scale};

\foreach \j in {#5,#6,#7,#8,#9}{
	\ifthenelse{\equal{\j}{0}}
	{
	}
	{\fill[RoyalBlue!20] (\j*\c -\c,0) rectangle (\j*\c,-2*\scale);}
}


\node[] at (\c*0.5,0.2) () {\footnotesize $X_1$};
\node[] at (\c*1.5,0.2) () {\footnotesize $X_2$};
\node[] at (\c*2.5,0.2) () {\footnotesize $X_3$};
\node[] at (\c*3.5,0.2) () {\footnotesize $X_4$};
\node[] at (\c*4.5,0.2) () {\footnotesize $X_5$};

\foreach \i in {1,2,3,4,5} {
	\draw[] (0,\i*\s-\s) rectangle (3*\scale,\i*\s); 
}
\foreach \j in {1,2,3,4,5}{
	\draw[] (\j*\c -\c,0) rectangle (\j*\c,-2*\scale);
}

}

\newcommand{\mypartialLS}[3]{
	\def\scale{0.7};
	
	
	\def\s{-0.4*\scale}
	
	\def\c{0.6*\scale}
	
	
	\node[] at (\c*1.5,0.2) () {\footnotesize #1};
	\node[] at (\c*2.5,0.2) () {\footnotesize #2};
	\node[] at (\c*3.5,0.2) () {\footnotesize #3};

	\foreach \i in {2,3,4} {
		\fill[RoyalBlue!20] (\c,\i*\s-\s) rectangle (3*\scale-\c,\i*\s); 

	}
	\foreach \j in {2,3,4}{
		\fill[RoyalBlue!20]  (\j*\c -\c,0) rectangle (\j*\c,-2*\scale);
		\draw[] (\j*\c -\c,0) rectangle (\j*\c,-2*\scale);
	}
	\foreach \i in {2,3,4} {
	\draw[] (\c,\i*\s-\s) rectangle (3*\scale-\c,\i*\s); 
	}

}

\newcommand{\mycoloredLS}[5]{
\def\scale{1};
\def\c{0.6*\scale};
\def\s{-0.4*\scale};

\draw[fill=#1] (0,1*\s-\s) rectangle (3*\scale,1*\s); 
\draw[fill=#2] (0,2*\s-\s) rectangle (3*\scale,2*\s); 
\draw[fill=#3] (0,3*\s-\s) rectangle (3*\scale,3*\s); 
\draw[fill=#4] (0,4*\s-\s) rectangle (3*\scale,4*\s); 
\draw[fill=#5] (0,5*\s-\s) rectangle (3*\scale,5*\s); 

\draw[] (1*\c -\c,0) rectangle (1*\c,-2*\scale);
\draw[] (2*\c -\c,0) rectangle (2*\c,-2*\scale);
\draw[] (3*\c -\c,0) rectangle (3*\c,-2*\scale);
\draw[] (4*\c -\c,0) rectangle (4*\c,-2*\scale);
\draw[] (5*\c -\c,0) rectangle (5*\c,-2*\scale);

}


\newcommand{\mysimpletreestructure}{
\node[draw,circle,RoyalBlue, line width=2pt,minimum size=1cm] (a) at (0,0) {};
\node[draw,circle,Gray!80, line width=2pt,minimum size=1cm,below left=0.5cm and 2.1cm of a] (b1) {};
\node[draw,circle,Gray!80, line width=2pt,minimum size=1cm,below right=0.5cm and 2.1cm of a] (b2) {};

\node[draw,circle,Orange, line width=2pt,minimum size=1cm,below left=0.5cm and 1.5cm of b1] (c1) {};
\node[draw,circle,Orange, line width=2pt,minimum size=1cm,below right=0.5cm and 1.5cm of b1] (c2) {};

\node[draw,circle,Orange, line width=2pt,minimum size=1cm,below left=0.5cm and 1.5cm of b2] (c3) {};
\node[draw,circle,Orange, line width=2pt,minimum size=1cm,below right=0.5cm and 1.5cm of b2] (c4) {};

\draw[Gray!80, line width=2pt,->] (a) -- node[sloped, anchor=center,above,black] {} (b1);
\draw[Gray!80, line width=2pt,->] (a) -- node[sloped, anchor=center,above,black] {} (b2);
\draw[Gray!80, line width=2pt,->] (b1) -- node[sloped, anchor=center,above,black] {} (c1);
\draw[Gray!80, line width=2pt,->] (b1) -- node[sloped, anchor=center,above,black] {} (c2);
\draw[Gray!80, line width=2pt,->] (b2) -- node[sloped, anchor=center,above,black] {} (c3);
\draw[Gray!80, line width=2pt,->] (b2) -- node[sloped, anchor=center,above,black] {} (c4);

\draw[Gray!80, line width=2pt] (6.2,0.5) -- node[right,black] {Internal nodes} (6.2,-1.5);
\draw[Orange, line width=2pt] (6.2,-1.8) -- node[right,black] {Terminal nodes}  (6.2,-3);

\draw[RoyalBlue, line width=2pt] (-0.5,0.7) -- node[above,black] {Root}  (0.5,0.7);

\node[black]  at (a) { $t_0$}; 
\node[black]  at (b1) { $t_1$}; 
\node[black] at (b2) { $t_2$};

\node[black] at (c1) { $t_3$};
\node[black] at (c2) { $t_4$};
\node[black] at (c3) { $t_5$};
\node[black] at (c4) { $t_6$};
}

\newcommand{\mysimplebinaryclassificationtreeexample}{
\node[draw,circle,Gray!80, line width=2pt,minimum size=1cm] (a) at (0,0) {};
\node[draw,circle,Gray!80, line width=2pt,minimum size=1cm,below left=0.5cm and 2.1cm of a] (b1) {};
\node[draw,circle,Gray!80, line width=2pt,minimum size=1cm,below right=0.5cm and 2.1cm of a] (b2) {};

\node[draw,circle,Gray!80, line width=2pt,minimum size=1cm,below left=0.5cm and 1.5cm of b1] (c1) {};
\node[draw,circle,Gray!80, line width=2pt,minimum size=1cm,below right=0.5cm and 1.5cm of b1] (c2) {};

\node[draw,circle,Gray!80, line width=2pt,minimum size=1cm,below left=0.5cm and 1.5cm of b2] (c3) {};
\node[draw,circle,Gray!80, line width=2pt,minimum size=1cm,below right=0.5cm and 1.5cm of b2] (c4) {};

\draw[Gray!80, line width=2pt,->] (a) -- node[sloped, anchor=center,above,black] {\small $top$} (b1);
\draw[Gray!80, line width=2pt,->] (a) -- node[sloped, anchor=center,above,black] {\small $bottom$} (b2);
\draw[Gray!80, line width=2pt,->] (b1) -- node[sloped, anchor=center,above,black] {\footnotesize $left$} (c1);
\draw[Gray!80, line width=2pt,->] (b1) -- node[sloped, anchor=center,above,black] {\footnotesize $right$} (c2);
\draw[Gray!80, line width=2pt,->] (b2) -- node[sloped, anchor=center,above,black] {\footnotesize $left$} (c3);
\draw[Gray!80, line width=2pt,->] (b2) -- node[sloped, anchor=center,above,black] {\footnotesize $right$} (c4);

\node[ line width=2pt,minimum size=1cm,below =0.5cm of c1] (y1) {$\hat{y}_3=\qquad$};
\node[ line width=2pt,minimum size=1cm,below =0.5cm of c2] (y2) {$\hat{y}_4=\qquad$};
\node[ line width=2pt,minimum size=1cm,below =0.5cm of c3] (y3) {$\hat{y}_5=\;$};
\node[ line width=2pt,minimum size=1cm,below =0.5cm of c4] (y4) {$\hat{y}_6=\qquad$};	

\draw[dashed,ForestGreen,line width=2pt,->] (c1) -- (y1);
\draw[dashed,RoyalBlue,line width=2pt,->] (c2) -- (y2);
\draw[dashed,Red,line width=2pt,->] (c3) -- (y3);
\draw[dashed,Orange,line width=2pt,->] (c4) -- (y4);	

\node[fill=ForestGreen, right=-0.75cm of y1] {};
\node[fill=RoyalBlue, right=-0.75cm of y2] {};
\node[fill=Red, right=-0.1cm of y3] {};
\node[fill=Orange, right=-0.75cm of y4] {};

\draw[Gray!0!white, line width=2pt, dashed] (6.2,-3.3) -- node[right,black] {Predictions}  (6.2,-4.5);
\draw[ForestGreen, line width=2pt] (6.2,-3.4) -- (6.2,-3.55);
\draw[RoyalBlue, line width=2pt] (6.2,-3.7) -- (6.2,-3.85);
\draw[Red, line width=2pt] (6.2,-4.0) -- (6.2,-4.15);
\draw[Orange, line width=2pt] (6.2,-4.3) -- (6.2,-4.45);

\node[] at (0,-0.9) {$Row?$}; 
\node[] at (-2.85,-2.1) {\small $Column?$}; 
\node[] at (+2.85,-2.1) {\small $Column?$}; 

\node[black, above right=-0.1cm and -0.05cm of a] {\footnotesize $t_0$}; 
\node[black, above left=-0.05cm and -0.05cm of b1] {\footnotesize $t_1$}; 
\node[black, above right=-0.05cm and -0.05cm of b2] {\footnotesize $t_2$};

\node[black, below left=-0.05cm and -0.05cm of c1] {\footnotesize $t_3$};
\node[black, below left=-0.05cm and -0.05cm of c2] {\footnotesize $t_4$};
\node[black, below right=-0.05cm and -0.05cm of c3] {\footnotesize $t_5$};
\node[black, below right=-0.05cm and -0.05cm of c4] {\footnotesize $t_6$};

\begin{scope}[]
	\node[fill=ForestGreen] at (-0.15,0.15) {};
	\node[fill=RoyalBlue] at (+0.15,0.15) {};
	\node[fill=Red] at (-0.15,-0.15) {};
	\node[fill=Orange] at (0.15,-0.15) {};
\end{scope}

\begin{scope}[shift={(b1)}]
	\node[fill=ForestGreen] at (-0.15,0.15) {};
	\node[fill=RoyalBlue] at (+0.15,0.15) {};
	\node[fill=Red!10] at (-0.15,-0.15) {};
	\node[fill=Orange!10] at (0.15,-0.15) {};
\end{scope}

\begin{scope}[shift={(b2)}]
	\node[fill=ForestGreen!10] at (-0.15,0.15) {};
	\node[fill=RoyalBlue!10] at (+0.15,0.15) {};
	\node[fill=Red] at (-0.15,-0.15) {};
	\node[fill=Orange] at (0.15,-0.15) {};
\end{scope}

\begin{scope}[shift={(c1)}]
	\node[fill=ForestGreen] at (-0.15,0.15) {};
	\node[fill=RoyalBlue!10] at (+0.15,0.15) {};
	\node[fill=Red!10] at (-0.15,-0.15) {};
	\node[fill=Orange!10] at (0.15,-0.15) {};
\end{scope}

\begin{scope}[shift={(c2)}]
	\node[fill=ForestGreen!10] at (-0.15,0.15) {};
	\node[fill=RoyalBlue] at (+0.15,0.15) {};
	\node[fill=Red!10] at (-0.15,-0.15) {};
	\node[fill=Orange!10] at (0.15,-0.15) {};
\end{scope}

\begin{scope}[shift={(c3)}]
	\node[fill=ForestGreen!10] at (-0.15,0.15) {};
	\node[fill=RoyalBlue!10] at (+0.15,0.15) {};
	\node[fill=Red] at (-0.15,-0.15) {};
	\node[fill=Orange!10] at (0.15,-0.15) {};
\end{scope}

\begin{scope}[shift={(c4)}]
	\node[fill=ForestGreen!10] at (-0.15,0.15) {};
	\node[fill=RoyalBlue!10] at (+0.15,0.15) {};
	\node[fill=Red!10] at (-0.15,-0.15) {};
	\node[fill=Orange] at (0.15,-0.15) {};
\end{scope}

\begin{scope}[shift={(-3.4,1.6)}]
	\draw[step=0.30] (-0.30,-0.30) grid (0.30,0.30);
	\node[] at (-0.8,0) {$\mathcal{X}=$};
\end{scope}

\begin{scope}[shift={(-2.2,0.7)}]
	\node[] at (-0.8,0) (Y1) {$\mathcal{Y}=\{\quad,\quad,\quad,\quad\}$};
	\node[fill=ForestGreen, below right= 0cm and -1.25cm of Y1] at (-0.15,0.15) {};
	\node[fill=RoyalBlue, below right= 0cm and -0.75cm of Y1] at (-0.15,0.15) {};
	\node[fill=Red, below right= 0cm and -0.2cm of Y1] at (-0.15,0.15) {};
	\node[fill=Orange, below right= 0cm and 0.35cm of Y1] at (-0.15,0.15) {};
\end{scope}
}

\newcommand{\myclassificationexample}{
		\begin{axis}[legend pos=north east,    
	x label style={at={(axis description cs:0.5,-0.075)},},
	y label style={at={(axis description cs:-0.05,.5)},rotate=0},
	xlabel={$X_1$},
	ylabel={$X_2$},
	legend style={,draw=gray!20},
	ymin=0,ymax=2.1,
	xmin=0,xmax=3.1]
	
	\addplot[scatter,only marks,scatter src=explicit symbolic,
	scatter/classes={
		a={mark=*,RoyalBlue,thick},
		b={mark=square*,Orange,thick}
	}]
	table[x=x,y=y,meta=label,row sep=crcr]{
		x	y	label\\
		0.7	1.2	a\\
		0.3	1.8 a\\
		1.1	0.7 a\\
		1.5 0.2 a \\
		1.7 0.7 a\\
		2.2 1.25 a\\
		2.5 1.1 a \\
		2.7 1.4 a\\
		0.3  0.7 b\\
		0.7  0.3 b\\
		0.9  0.9 b\\
		1.1	1.5 b\\
		1.7 1.1 b\\
		2.3 0.8 b\\
		2.5 0.2 b\\
		2.8 0.9 b\\
	};
	\legend{Class 1,Class 2}
\end{axis}
}
\newcommand{\myclassificationexampleexplained}{		
	\begin{axis}[legend pos=north east,    
		x label style={at={(axis description cs:0.5,-0.075)},},
		y label style={at={(axis description cs:-0.05,.5)},rotate=0},
		xlabel={$X_1$},
		ylabel={$X_2$},
		legend style={,draw=gray!20},
		ymin=0,ymax=2.1,
		xmin=0,xmax=3.1]
		\addplot[mark=none, black, dashed, samples=2,forget plot] {1};
		\addplot +[mark=none,dashed,black,forget plot] coordinates {(1, 0) (1, 3)};
		\addplot +[mark=none,dashed,black,forget plot] coordinates {(2, 0) (2, 3)};
		\addplot[scatter,only marks,scatter src=explicit symbolic,
		scatter/classes={
			a={mark=*,RoyalBlue,thick},
			b={mark=square*,Orange,thick}
		}]
		table[x=x,y=y,meta=label,row sep=crcr]{
			x	y	label\\
			0.7	1.2	a\\
			0.3	1.8 a\\
			1.1	0.7 a\\
			1.5 0.2 a \\
			1.7 0.7 a\\
			2.2 1.25 a\\
			2.5 1.1 a \\
			2.7 1.4 a\\
			0.3  0.7 b\\
			0.7  0.3 b\\
			0.9  0.9 b\\
			1.1	1.5 b\\
			1.7 1.1 b\\
			2.3 0.8 b\\
			2.5 0.2 b\\
			2.8 0.9 b\\
		};
		\legend{Class 1,Class 2}
\end{axis}
}

\newcommand{\mybinaryclassificationtreeexample}{
	\def\d{1.2}



\foreach \xi in {0,1,2,3} {\draw[Gray!50,dashed] (-2.5,-0.5*\d-\xi * \d) -- (-2.5,0+0.5*\d-\xi * \d) -- (4.5,+0.5*\d-\xi * \d) -- (4.5,-0.5*\d-\xi * \d);};
\draw[Gray!50,dashed] (-2.5,-3.5*\d) -- (4.5,-3.5*\d);


\node[Gray!80,draw,circle, line width=2pt,] (a) at (0,0) {\textcolor{red}{$X_2\le 1$}};

\node[Gray!80,draw,circle, line width=2pt,fill=white] (b1) at (-1.2,-\d) {\textcolor{red}{$X_1\le 1$}};
\node[Gray!80,draw,circle, line width=2pt,fill=white] (b2) at (+1.2,-\d) {\textcolor{red}{$X_1\le 1$}};

\node[draw,Orange, line width=2pt,fill=Orange!20] (c1) at (-1.8,-2*\d) {};
\node[Gray!80,draw,circle, line width=2pt,fill=white] (c2) at (-0.6,-2*\d) {\textcolor{red}{$X_1\le 2$}};
\node[draw,RoyalBlue, line width=2pt,fill=RoyalBlue!20] (c3) at (+0.6,-2*\d) {};
\node[Gray!80,draw,circle, line width=2pt,fill=white] (c4) at (+1.8,-2*\d) {\textcolor{red}{$X_1\le 2$}};

\node[draw,RoyalBlue, line width=2pt,fill=RoyalBlue!20] (d3) at (-0.9,-3*\d) {};
\node[draw,Orange, line width=2pt,fill=Orange!20] (d4) at (-0.3,-3*\d) {};
\node[draw,Orange, line width=2pt,fill=Orange!20] (d7) at (1.5,-3*\d) {};
\node[draw,RoyalBlue, line width=2pt,fill=RoyalBlue!20] (d8) at (2.1,-3*\d) {};

\draw[Gray!80, line width=2pt,->] (a) -- node[left,pos=0.1,black] {\footnotesize $\le 1$} (b1);
\draw[Gray!80, line width=2pt,->] (a) -- node[right,pos=0.1,black] {\footnotesize $>1$} (b2);

\draw[Orange, line width=2pt,->] (b1) -- node[left,pos=0.1,black] {\footnotesize $\le 1$} (c1);
\draw[Gray!80, line width=2pt,->] (b1) -- node[right,pos=0.1,black] {\footnotesize $>1$} (c2);
\draw[RoyalBlue, line width=2pt,->] (b2) -- node[left,pos=0.1,black] {\footnotesize $\le 1$} (c3);
\draw[Gray!80, line width=2pt,->] (b2) -- node[right,pos=0.1,black] {\footnotesize $>1$} (c4);

\draw[RoyalBlue, line width=2pt,->] (c2) -- node[left,pos=0.1,black] {\footnotesize $\le 2$} (d3);
\draw[Orange, line width=2pt,->] (c2) -- node[right,pos=0.1,black] {\footnotesize $>2$} (d4);
\draw[Orange, line width=2pt,->] (c4) -- node[left,pos=0.1,black] {\footnotesize $\le 2$} (d7);
\draw[RoyalBlue, line width=2pt,->] (c4) -- node[right,pos=0.1,black] {\footnotesize $>2$} (d8);

\begin{scope}[shift={(2.8,-0.47)},scale=0.37]
	\begin{axis}[legend pos=north east,    
		x label style={at={(axis description cs:1.1,+0.1)},},
		y label style={at={(axis description cs:-0.05,.5)},rotate=0},
		xlabel={$X_1$},
		ylabel={$X_2$},
		legend style={,draw=gray!20},
		ymin=0,ymax=2.1,
		xmin=0,xmax=3.1]
		
		\addplot[mark=none, red, samples=2,forget plot,line width=2pt ] {1};
		
		\addplot[scatter,only marks,scatter src=explicit symbolic,
		scatter/classes={
			a={mark=*,RoyalBlue,thick},
			b={mark=square*,Orange,thick}
		}]
		table[x=x,y=y,meta=label,row sep=crcr]{
			x	y	label\\
			0.7	1.2	a\\
			0.3	1.8 a\\
			1.1	0.7 a\\
			1.5 0.2 a \\
			1.7 0.7 a\\
			2.2 1.25 a\\
			2.5 1.1 a \\
			2.7 1.4 a\\
			0.3  0.7 b\\
			0.7  0.3 b\\
			0.9  0.9 b\\
			1.1	1.5 b\\
			1.7 1.1 b\\
			2.3 0.8 b\\
			2.5 0.2 b\\
			2.8 0.9 b\\
		};
	\end{axis}
\end{scope}

\begin{scope}[shift={(2.8,-0.47-\d)},scale=0.37]
	\fill[white!10] (0,0) rectangle (3.44,1.36);
	\fill[white!10] (0,1.36) rectangle (3.44,2.84);
	\begin{axis}[legend pos=north east,    
		x label style={at={(axis description cs:1.1,+0.1)},},
		y label style={at={(axis description cs:-0.05,.5)},rotate=0},
		xlabel={$X_1$},
		ylabel={$X_2$},
		legend style={,draw=gray!20},
		ymin=0,ymax=2.1,
		xmin=0,xmax=3.1]
		\addplot[mark=none, black, samples=2,forget plot,line width=1pt ] {1};
		\addplot +[mark=none,red,forget plot,line width=2pt] coordinates {(1, 0) (1, 3)};
		\addplot[scatter,only marks,scatter src=explicit symbolic,
		scatter/classes={
			a={mark=*,RoyalBlue,thick},
			b={mark=square*,Orange,thick}
		}]
		table[x=x,y=y,meta=label,row sep=crcr]{
			x	y	label\\
			0.7	1.2	a\\
			0.3	1.8 a\\
			1.1	0.7 a\\
			1.5 0.2 a \\
			1.7 0.7 a\\
			2.2 1.25 a\\
			2.5 1.1 a \\
			2.7 1.4 a\\
			0.3  0.7 b\\
			0.7  0.3 b\\
			0.9  0.9 b\\
			1.1	1.5 b\\
			1.7 1.1 b\\
			2.3 0.8 b\\
			2.5 0.2 b\\
			2.8 0.9 b\\
		};
	\end{axis}
\end{scope}

\begin{scope}[shift={(2.8,-0.47-2*\d)},scale=0.37]
	\fill[Orange!10] (0,0) rectangle (1.11,1.36);
	\fill[RoyalBlue!10] (0,1.36) rectangle (1.11,2.84);
	\fill[white!10] (1.11,0) rectangle (3.44,1.36);
	\fill[white!10] (1.11,1.36) rectangle (3.44,2.84);		
	
	\begin{axis}[legend pos=north east,    
		x label style={at={(axis description cs:1.1,+0.1)},},
		y label style={at={(axis description cs:-0.05,.5)},rotate=0},
		xlabel={$X_1$},
		ylabel={$X_2$},
		legend style={,draw=gray!20},
		ymin=0,ymax=2.1,
		xmin=0,xmax=3.1]
		\addplot[mark=none, black, samples=2,forget plot,line width=1pt ] {1};
		\addplot +[mark=none,black,forget plot,line width=1pt] coordinates {(1, 0) (1, 3)};
		\addplot +[mark=none,red,forget plot,line width=2pt] coordinates {(2, 0) (2, 3)};
		\addplot[scatter,only marks,scatter src=explicit symbolic,
		scatter/classes={
			a={mark=*,RoyalBlue,thick},
			b={mark=square*,Orange,thick}
		}]
		table[x=x,y=y,meta=label,row sep=crcr]{
			x	y	label\\
			0.7	1.2	a\\
			0.3	1.8 a\\
			1.1	0.7 a\\
			1.5 0.2 a \\
			1.7 0.7 a\\
			2.2 1.25 a\\
			2.5 1.1 a \\
			2.7 1.4 a\\
			0.3  0.7 b\\
			0.7  0.3 b\\
			0.9  0.9 b\\
			1.1	1.5 b\\
			1.7 1.1 b\\
			2.3 0.8 b\\
			2.5 0.2 b\\
			2.8 0.9 b\\
		};
	\end{axis}
\end{scope}

\begin{scope}[shift={(2.8,-0.47-3*\d)},scale=0.37]
	\fill[Orange!10] (0,0) rectangle (1.11,1.36);
	\fill[RoyalBlue!10] (0,1.36) rectangle (1.11,2.84);
	\fill[RoyalBlue!10] (1.11,0) rectangle (2.22,1.36);
	\fill[Orange!10] (1.11,1.36) rectangle (2.22,2.84);		
	\fill[Orange!10] (2.22,0) rectangle (3.44,1.36);
	\fill[RoyalBlue!10] (2.22,1.36) rectangle (3.44,2.84);				
	
	\begin{axis}[legend pos=north east,    
		x label style={at={(axis description cs:1.1,+0.1)},},
		y label style={at={(axis description cs:-0.05,.5)},rotate=0},
		xlabel={$X_1$},
		ylabel={$X_2$},
		legend style={,draw=gray!20},
		ymin=0,ymax=2.1,
		xmin=0,xmax=3.1]

		\addplot[scatter,only marks,scatter src=explicit symbolic,
		scatter/classes={
			a={mark=*,RoyalBlue,thick},
			b={mark=square*,Orange,thick}
		}]
		table[x=x,y=y,meta=label,row sep=crcr]{
			x	y	label\\
			0.7	1.2	a\\
			0.3	1.8 a\\
			1.1	0.7 a\\
			1.5 0.2 a \\
			1.7 0.7 a\\
			2.2 1.25 a\\
			2.5 1.1 a \\
			2.7 1.4 a\\
			0.3  0.7 b\\
			0.7  0.3 b\\
			0.9  0.9 b\\
			1.1	1.5 b\\
			1.7 1.1 b\\
			2.3 0.8 b\\
			2.5 0.2 b\\
			2.8 0.9 b\\
		};
	\end{axis}
\end{scope}
}

\newcommand{\mymultiwayclassificationtreeexample}{

	\def\d{1.2}



\foreach \xi in {0,1,2} {\draw[Gray!50,dashed] (-2.5,-0.5*\d-\xi * \d) -- (-2.5,0+0.5*\d-\xi * \d) -- (4.5,+0.5*\d-\xi * \d) -- (4.5,-0.5*\d-\xi * \d);};
\draw[Gray!50,dashed] (-2.5,-2.5*\d) -- (4.5,-2.5*\d);


\node[draw,circle, Gray!80, line width=2pt,] (a) at (0,0) {\textcolor{red}{$\,X_2\,$}};

\node[Gray!80, draw,circle, line width=2pt,fill=white] (b1) at (-1.2,-\d) {\textcolor{red}{$\,X_1\,$}};
\node[Gray!80, draw,circle, line width=2pt,fill=white] (b2) at (+1.2,-\d) {\textcolor{red}{$\,X_1\,$}};

\node[draw,Orange, line width=2pt,fill=Orange!20] (c1) at (-2,-2*\d) {};
\node[draw,RoyalBlue, line width=2pt,fill=RoyalBlue!20] (c2) at (-1.2,-2*\d) {};
\node[draw,Orange, line width=2pt,fill=Orange!20] (c3) at (-0.4,-2*\d) {};
\node[draw,RoyalBlue, line width=2pt,fill=RoyalBlue!20] (c4) at (+0.4,-2*\d) {};
\node[draw,Orange, line width=2pt,fill=Orange!20] (c5) at (+1.2,-2*\d) {};
\node[draw,RoyalBlue, line width=2pt,fill=RoyalBlue!20] (c6) at (+2,-2*\d) {};	

\draw[Gray!80, line width=2pt,->] (a) -- node[left,pos=0.1,black] {\footnotesize $\le 1$} (b1);
\draw[Gray!80, line width=2pt,->] (a) -- node[right,pos=0.1,black] {\footnotesize $>1$} (b2);

\draw[Orange, line width=2pt,->] (b1) -- node[above,sloped,pos=0.5,black,minimum height = 1em] {\tiny $\le 1$} (c1);
\draw[RoyalBlue, line width=2pt,->] (b1) -- node[above,sloped,pos=0.5,black,minimum height = 1em] {\tiny $1 < x \le 2$} (c2);
\draw[Orange, line width=2pt,->] (b1) -- node[above,sloped,pos=0.5,black,minimum height = 1em] {\tiny $2< x \le 3$} (c3);
\draw[RoyalBlue, line width=2pt,->] (b2) -- node[above,sloped,pos=0.5,black,minimum height = 1em] {\tiny $\le 1$} (c4);
\draw[Orange, line width=2pt,->] (b2) -- node[above,sloped,pos=0.5,black,minimum height = 1em] {\tiny $1 <x \le 2$} (c5);
\draw[RoyalBlue, line width=2pt,->] (b2) -- node[above,sloped,pos=0.5,black,minimum height = 1em] {\tiny $2 < x \le 3$} (c6);

\begin{scope}[shift={(2.8,-0.47)},scale=0.37]
	\begin{axis}[legend pos=north east,    
		x label style={at={(axis description cs:1.1,+0.1)},},
		y label style={at={(axis description cs:-0.05,.5)},rotate=0},
		xlabel={$X_1$},
		ylabel={$X_2$},
		legend style={,draw=gray!20},
		ymin=0,ymax=2.1,
		xmin=0,xmax=3.1]
		
		\addplot[mark=none, red, samples=2,forget plot,line width=2pt ] {1};
		
		\addplot[scatter,only marks,scatter src=explicit symbolic,
		scatter/classes={
			a={mark=*,RoyalBlue,thick},
			b={mark=square*,Orange,thick}
		}]
		table[x=x,y=y,meta=label,row sep=crcr]{
			x	y	label\\
			0.7	1.2	a\\
			0.3	1.8 a\\
			1.1	0.7 a\\
			1.5 0.2 a \\
			1.7 0.7 a\\
			2.2 1.25 a\\
			2.5 1.1 a \\
			2.7 1.4 a\\
			0.3  0.7 b\\
			0.7  0.3 b\\
			0.9  0.9 b\\
			1.1	1.5 b\\
			1.7 1.1 b\\
			2.3 0.8 b\\
			2.5 0.2 b\\
			2.8 0.9 b\\
		};
	\end{axis}
\end{scope}

\begin{scope}[shift={(2.8,-0.47-\d)},scale=0.37]
	\fill[white!10] (0,0) rectangle (3.44,1.36);
	\fill[white!10] (0,1.36) rectangle (3.44,2.84);
	\begin{axis}[legend pos=north east,    
		x label style={at={(axis description cs:1.1,+0.1)},},
		y label style={at={(axis description cs:-0.05,.5)},rotate=0},
		xlabel={$X_1$},
		ylabel={$X_2$},
		legend style={,draw=gray!20},
		ymin=0,ymax=2.1,
		xmin=0,xmax=3.1]
		\addplot[mark=none, black, samples=2,forget plot,line width=1pt ] {1};
		\addplot +[mark=none,red,forget plot,line width=2pt] coordinates {(1, 0) (1, 3)};
		\addplot +[mark=none,red,forget plot,line width=2pt] coordinates {(2, 0) (2, 3)};
		\addplot[scatter,only marks,scatter src=explicit symbolic,
		scatter/classes={
			a={mark=*,RoyalBlue,thick},
			b={mark=square*,Orange,thick}
		}]
		table[x=x,y=y,meta=label,row sep=crcr]{
			x	y	label\\
			0.7	1.2	a\\
			0.3	1.8 a\\
			1.1	0.7 a\\
			1.5 0.2 a \\
			1.7 0.7 a\\
			2.2 1.25 a\\
			2.5 1.1 a \\
			2.7 1.4 a\\
			0.3  0.7 b\\
			0.7  0.3 b\\
			0.9  0.9 b\\
			1.1	1.5 b\\
			1.7 1.1 b\\
			2.3 0.8 b\\
			2.5 0.2 b\\
			2.8 0.9 b\\
		};
	\end{axis}
\end{scope}

\begin{scope}[shift={(2.8,-0.47-2*\d)},scale=0.37]
	\fill[Orange!10] (0,0) rectangle (1.11,1.36);
	\fill[RoyalBlue!10] (0,1.36) rectangle (1.11,2.84);
	\fill[RoyalBlue!10] (1.11,0) rectangle (2.22,1.36);
	\fill[Orange!10] (1.11,1.36) rectangle (2.22,2.84);		
	\fill[Orange!10] (2.22,0) rectangle (3.44,1.36);
	\fill[RoyalBlue!10] (2.22,1.36) rectangle (3.44,2.84);				
	
	\begin{axis}[legend pos=north east,    
		x label style={at={(axis description cs:1.1,+0.1)},},
		y label style={at={(axis description cs:-0.05,.5)},rotate=0},
		xlabel={$X_1$},
		ylabel={$X_2$},
		legend style={,draw=gray!20},
		ymin=0,ymax=2.1,
		xmin=0,xmax=3.1]

		\addplot[scatter,only marks,scatter src=explicit symbolic,
		scatter/classes={
			a={mark=*,RoyalBlue,thick},
			b={mark=square*,Orange,thick}
		}]
		table[x=x,y=y,meta=label,row sep=crcr]{
			x	y	label\\
			0.7	1.2	a\\
			0.3	1.8 a\\
			1.1	0.7 a\\
			1.5 0.2 a \\
			1.7 0.7 a\\
			2.2 1.25 a\\
			2.5 1.1 a \\
			2.7 1.4 a\\
			0.3  0.7 b\\
			0.7  0.3 b\\
			0.9  0.9 b\\
			1.1	1.5 b\\
			1.7 1.1 b\\
			2.3 0.8 b\\
			2.5 0.2 b\\
			2.8 0.9 b\\
		};
	\end{axis}
\end{scope}

}


\newcommand{\mysplitselection}{
	\draw[draw=none, fill=red!20] (1.7-0.5,0) rectangle (2.3-0.5,-0.05);

\draw[] (0,0) to (3,0) node[right] () {$X_1$};
\draw[] (0.5,0) -- +(0,-0.1) node[below] () {$1$}; 
\foreach \x in {-3,...,23}{
	\draw[] (0.5+\x*0.1,0) -- +(0,-0.05) node[below] () {}; 
}

\draw[] (1.5,0) -- +(0,-0.1) node[below] () {$2$}; 
\draw[] (2.5,0) -- +(0,-0.1) node[below] () {$3$}; 

\def\ls{+0.15};
\def\ts{+0.30};

\node[] at (0.2,\ls) () {$\mathbf{LS}$};
\draw[dashed] (0,0.225) -- (3,0.225);
\node[] at (0.2,\ts) () {$\mathbf{TS}$};

\draw[red, line width=2pt] (2-0.5,0) -- +(0,0.6) node[above] () {$\tau=2$}; 
\draw[red, line width=1pt] (1.7-0.5,0) -- +(0,0.5);
\draw[red, line width=1pt] (2.3-0.5,0) -- +(0,0.5);
\draw[<->, line width=2pt, red] (1.7-0.5,0.45) -- (2.3-0.5,0.45);

\draw[ForestGreen, line width=2pt] (1.8-0.5,0) -- +(0,0.7) node[above] () {$\tau=1.8$}; 
\draw[ForestGreen, line width=2pt] (2.2-0.5,0) -- +(0,0.7) node[above] () {$\tau=2.2$}; 

\node[circle, fill=RoyalBlue] () at (1.1-0.5,\ls) {};
\node[circle, fill=RoyalBlue] () at (1.5-0.5,\ls) {};
\node[circle, fill=RoyalBlue] at (1.7-0.5,\ls) () {};
\node[circle, draw=RoyalBlue, line width=2pt] at (1.9-0.5,\ts) () {};

\node[, fill=Orange,] at (2.3-0.5,\ls) {};
\node[, fill=Orange] at (2.5-0.5,\ls) {};
\node[, fill=Orange] at (2.8-0.5,\ls) {};
\node[, draw=Orange, line width=2pt] at (2.1-0.5,\ts) {};
}


\newcommand{\mybinaryclassificationtreeinterpretation}{
	\def\d{1}

\node[fill=RoyalBlue!20, path picture={
	\fill[Orange!20,draw=Gray!80,line width=1pt]
	($(path picture bounding box.north west)!.0!(path picture bounding box.north east)$)
	rectangle
	($(path picture bounding box.south west)!.5!(path picture bounding box.south east)$);
},draw=Gray!80,circle, line width=2pt,] (a) at (0,0) {};
\node[] at ($(a) + (+0.25,0)$) () {\textcolor{RoyalBlue}{8}};
\node[] at ($(a) + (-0.25,0)$) () {\textcolor{Orange}{8}};


\node[fill=RoyalBlue!20, path picture={
	\fill[Orange!20,draw=Gray!80,line width=1pt]
	($(path picture bounding box.north west)!.0!(path picture bounding box.north east)$)
	rectangle
	($(path picture bounding box.south west)!.66!(path picture bounding box.south east)$);
},draw=Gray!80,circle, line width=2pt] (b1) at (-1.2,-\d) {};
\node[] at ($(b1) + (+0.25,0)$) () {\textcolor{RoyalBlue}{3}};
\node[] at ($(b1) + (-0.25,0)$) () {\textcolor{Orange}{6}};

\node[fill=Orange!20, path picture={
	\fill[RoyalBlue!20,draw=Gray!80,line width=1pt]
	($(path picture bounding box.north west)!.0!(path picture bounding box.north east)$)
	rectangle
	($(path picture bounding box.south west)!.28!(path picture bounding box.south east)$);
},draw=Gray!80,circle, line width=2pt] (b2) at (+1.2,-\d) {};
\node[] at ($(b2) + (+0.25,0)$) () {\textcolor{Orange}{2}};
\node[] at ($(b2) + (-0.25,0)$) () {\textcolor{RoyalBlue}{5}};

\node[minimum size=1cm,draw,Orange, line width=2pt,fill=Orange!20] (c1) at (-1.8,-2*\d) {\textcolor{Orange}{3}};
\node[fill=RoyalBlue!20, path picture={
	\fill[Orange!20,draw=Gray!80,line width=1pt]
	($(path picture bounding box.north west)!.0!(path picture bounding box.north east)$)
	rectangle
	($(path picture bounding box.south west)!.5!(path picture bounding box.south east)$);
},draw=Gray!80,circle, line width=2pt] (c2) at (-0.6,-2*\d) {};
\node[] at ($(c2) + (+0.25,0)$) () {\textcolor{RoyalBlue}{3}};
\node[] at ($(c2) + (-0.25,0)$) () {\textcolor{Orange}{3}};

\node[minimum size=1cm,draw,RoyalBlue, line width=2pt,fill=RoyalBlue!20] (c3) at (+0.6,-2*\d) {\textcolor{RoyalBlue}{2}};
\node[fill=RoyalBlue!20, path picture={
	\fill[Orange!20,draw=Gray!80,line width=1pt]
	($(path picture bounding box.north west)!.0!(path picture bounding box.north east)$)
	rectangle
	($(path picture bounding box.south west)!.4!(path picture bounding box.south east)$);
},draw=Gray!80,circle, line width=2pt] (c4) at (+1.8,-2*\d) {};
\node[] at ($(c4) + (+0.25,0)$) () {\textcolor{RoyalBlue}{3}};
\node[] at ($(c4) + (-0.25,0)$) () {\textcolor{Orange}{2}};

\node[minimum size=1cm,draw,RoyalBlue, line width=2pt,fill=RoyalBlue!20] (d3) at (-1.2,-3*\d) {\textcolor{RoyalBlue}{3}};
\node[minimum size=1cm,draw,Orange, line width=2pt,fill=Orange!20] (d4) at (-0,-3*\d) {\textcolor{Orange}{3}};
\node[minimum size=1cm, draw,Orange, line width=2pt,fill=Orange!20] (d7) at (1.2,-3*\d) {\textcolor{Orange}{2}};
\node[minimum size=1cm,draw,RoyalBlue, line width=2pt,fill=RoyalBlue!20] (d8) at (2.4,-3*\d) {\textcolor{RoyalBlue}{3}};

\draw[Gray!80, line width=2pt,->] (a) -- node[left,pos=0.1,black] {} (b1);
\draw[Gray!80, line width=2pt,->] (a) -- node[right,pos=0.1,black] {} (b2);

\draw[Orange, line width=2pt,->] (b1) -- node[left,pos=0.1,black] {} (c1);
\draw[Gray!80, line width=2pt,->] (b1) -- node[right,pos=0.1,black] {} (c2);
\draw[RoyalBlue, line width=2pt,->] (b2) -- node[left,pos=0.1,black] {} (c3);
\draw[Gray!80, line width=2pt,->] (b2) -- node[right,pos=0.1,black] {} (c4);

\draw[RoyalBlue, line width=2pt,->] (c2) -- node[left,pos=0.1,black] {} (d3);
\draw[Orange, line width=2pt,->] (c2) -- node[right,pos=0.1,black] {} (d4);
\draw[Orange, line width=2pt,->] (c4) -- node[left,pos=0.1,black] {} (d7);
\draw[RoyalBlue, line width=2pt,->] (c4) -- node[right,pos=0.1,black] {} (d8);
}

\newcommand{\myxorvariancelsone}{
	\begin{scope}[scale=0.8,x=1cm,y=1cm]
		\draw[ForestGreen,line width=2pt] (3.37,2.80) circle (0.2);
		\begin{axis}[legend pos=north east,    
			x label style={at={(axis description cs:0.5,0.05)},},
			y label style={at={(axis description cs:0.05,.5)},rotate=0},
			xlabel={$X_1$},
			ylabel={$X_2$},
			legend style={,draw=gray!20},
			ymin=-0.1,ymax=1.1,
			xmin=-0.1,xmax=1.1]
			
			\addplot[scatter,only marks,scatter src=explicit symbolic,
			scatter/classes={
				a={mark=*,RoyalBlue,thick},
				b={mark=square*,Orange,thick}
			}]
			table[x=x,y=y,meta=label,row sep=crcr]{
				x	y	label\\
				0.49 0.49 b \\
				0.274406751964 0.357594683186 b \\
				0.272441591498 0.211827399669 b \\
				0.218793605631 0.445886500391 b \\
				0.191720759413 0.395862519041 b \\
				0.284022280547 0.462798319146 b \\
				0.0435646498508 0.51010919872 a \\
				0.389078375475 0.935006074123 a \\
				0.399579282108 0.730739681126 a \\
				0.0591372129345 0.819960510664 a \\
				0.472334458525 0.760924160875 a \\
				0.632277806052 0.387116844717 a \\
				0.784216974434 0.00939490021818 a \\
				0.806047861361 0.308466998437 a \\
				0.840910149552 0.179753950287 a \\
				0.848815597964 0.0301127358146 a \\
				0.835318934809 0.105191280537 a \\
				0.657714175462 0.681855385471 b \\
				0.719300756731 0.99418691903 b \\
				0.604438378047 0.580654758942 b \\
				0.62664580127 0.733155386428 b \\
				0.579484791823 0.555187570582 b \\
			};
			\legend{Class 1,Class 2}
		\end{axis}
	\end{scope}
}
\newcommand{\myxorvariancelstwo}{
	\begin{scope}[shift={(3.5,0)},scale=0.8,x=1cm,y=1cm]
		\draw[ForestGreen,line width=2pt] (3.37,2.80) circle (0.2);
		\begin{axis}[legend pos=north east,    
			x label style={at={(axis description cs:0.5,0.05)},},
			y label style={at={(axis description cs:0.05,.5)},rotate=0},
			xlabel={$X_1$},
			ylabel={$X_2$},
			legend style={,draw=gray!20},
			ymin=-0.1,ymax=1.1,
			xmin=-0.1,xmax=1.1]
			
			\addplot[scatter,only marks,scatter src=explicit symbolic,
			scatter/classes={
				a={mark=*,RoyalBlue,thick},
				b={mark=square*,Orange,thick}
			}]
			table[x=x,y=y,meta=label,row sep=crcr]{
				x	y	label\\
				0.51 0.51 b \\
				0.274406751964 0.357594683186 b \\
				0.272441591498 0.211827399669 b \\
				0.218793605631 0.445886500391 b \\
				0.191720759413 0.395862519041 b \\
				0.284022280547 0.462798319146 b \\
				0.0435646498508 0.51010919872 a \\
				0.389078375475 0.935006074123 a \\
				0.399579282108 0.730739681126 a \\
				0.0591372129345 0.819960510664 a \\
				0.472334458525 0.760924160875 a \\
				0.632277806052 0.387116844717 a \\
				0.784216974434 0.00939490021818 a \\
				0.806047861361 0.308466998437 a \\
				0.840910149552 0.179753950287 a \\
				0.848815597964 0.0301127358146 a \\
				0.835318934809 0.105191280537 a \\
				0.657714175462 0.681855385471 b \\
				0.719300756731 0.99418691903 b \\
				0.604438378047 0.580654758942 b \\
				0.62664580127 0.733155386428 b \\
				0.579484791823 0.555187570582 b \\
			};
			\legend{Class 1,Class 2}
		\end{axis}
	\end{scope}
}
\newcommand{\myxorvariancetreeone}{
	\begin{scope}[scale=0.3,x=1cm,y=1cm, minimum size=0.2cm]
		\def\d{3}
		\def\s{3}
		
		\node[draw=Gray!80,circle, line width=2pt,] (a) at (0,0) {$X_1$};
		
		\node[draw=Gray!80,circle, line width=2pt,fill=white] (b1) at (-\s,-\d) {$X_2$};
		\node[draw=Gray!80,circle, line width=2pt,fill=white] (b2) at (+\s,-\d) {$X_2$};
		
		\node[draw=Gray!40,, line width=2pt,fill=white] (c1) at (-\s-0.6*\s,-2*\d) {};
		\node[draw=Gray!40,circle, line width=2pt,fill=white] (c2) at (-\s+0.6*\s,-2*\d) {};
		\node[draw=Gray!40, line width=2pt,fill=white] (c3) at (+\s-0.6*\s,-2*\d) {};
		\node[draw=Gray!40,circle, line width=2pt,fill=white] (c4) at (+\s+0.6*\s,-2*\d) {};	
		
		\node[draw=Gray!40,, line width=2pt,fill=white] (d3) at (-\s+0.6*\s-0.3*\s,-3*\d) {};
		\node[draw=Gray!40,, line width=2pt,fill=white] (d4) at (-\s+0.6*\s+0.3*\s,-3*\d) {};	
		\node[draw=Gray!40,, line width=2pt,fill=white] (d7) at (+\s+0.6*\s-0.3*\s,-3*\d) {};
		\node[draw=Gray!40,, line width=2pt,fill=white] (d8) at (+\s+0.6*\s+0.3*\s,-3*\d) {};

		\draw[Gray!80, line width=2pt,->] (a) -- node[left,pos=0.1,black] {} (b1);
		\draw[Gray!80, line width=2pt,->] (a) -- node[right,pos=0.1,black] {} (b2);
		
		\draw[Gray!40, line width=2pt,->] (b1) -- node[left,pos=0.1,black] {} (c1);
		\draw[Gray!40, line width=2pt,->] (b1) -- node[right,pos=0.1,black] {} (c2);
		\draw[Gray!40, line width=2pt,->] (b2) -- node[left,pos=0.1,black] {} (c3);
		\draw[Gray!40, line width=2pt,->] (b2) -- node[right,pos=0.1,black] {} (c4);
		
		\draw[Gray!40, line width=2pt,->] (c2) -- node[left,pos=0.1,black] {} (d3);
		\draw[Gray!40, line width=2pt,->] (c2) -- node[right,pos=0.1,black] {} (d4);
		\draw[Gray!40, line width=2pt,->] (c4) -- node[left,pos=0.1,black] {} (d7);
		\draw[Gray!40, line width=2pt,->] (c4) -- node[right,pos=0.1,black] {} (d8);
		\node[draw=white, line width=2pt,fill=white] (e1) at (+\s-0.6*\s+0.3*\s-0.3*\s,-4*\d) {};	
	\end{scope}
}
\newcommand{\myxorvariancetreetwo}{
\begin{scope}[scale=0.3,x=1cm,y=1cm, minimum size=0.2cm]
	\def\d{3}
	\def\s{3}
	
	\node[draw=Gray!80,circle, line width=2pt,] (a) at (0,0) {$X_2$};
	
	\node[draw=Gray!80,circle, line width=2pt,fill=white] (b1) at (-\s,-\d) {$X_1$};
	\node[draw=Gray!80,circle, line width=2pt,fill=white] (b2) at (+\s,-\d) {$X_1$};
	
	\node[draw=Gray!40,, line width=2pt,fill=white] (c1) at (-\s-0.6*\s,-2*\d) {};
	\node[draw=Gray!40,, line width=2pt,fill=white] (c2) at (-\s+0.6*\s,-2*\d) {};
	\node[draw=Gray!40,circle, line width=2pt,fill=white] (c3) at (+\s-0.6*\s,-2*\d) {};
	\node[draw=Gray!40,circle, line width=2pt,fill=white] (c4) at (+\s+0.6*\s,-2*\d) {};

	\node[draw=Gray!40,, line width=2pt,fill=white] (d5) at (+\s-0.6*\s-0.3*\s,-3*\d) {};
	\node[draw=Gray!40,circle, line width=2pt,fill=white] (d6) at (+\s-0.6*\s+0.3*\s,-3*\d) {};
	\node[draw=Gray!40,, line width=2pt,fill=white] (d7) at (+\s+0.6*\s-0.3*\s,-3*\d) {};
	\node[draw=Gray!40,, line width=2pt,fill=white] (d8) at (+\s+0.6*\s+0.3*\s,-3*\d) {};
	
	\node[draw=Gray!40, line width=2pt,fill=white] (e1) at (+\s-0.6*\s+0.3*\s-0.3*\s,-4*\d) {};	
	\node[draw=Gray!40, line width=2pt,fill=white] (e2) at (+\s-0.6*\s+0.3*\s+0.3*\s,-4*\d) {};	
	
	\draw[Gray!80, line width=2pt,->] (a) -- node[left,pos=0.1,black] {} (b1);
	\draw[Gray!80, line width=2pt,->] (a) -- node[right,pos=0.1,black] {} (b2);
	
	\draw[Gray!80, line width=2pt,->] (b1) -- node[left,pos=0.1,black] {} (c1);
	\draw[Gray!80, line width=2pt,->] (b1) -- node[right,pos=0.1,black] {} (c2);
	\draw[Gray!80, line width=2pt,->] (b2) -- node[left,pos=0.1,black] {} (c3);
	\draw[Gray!80, line width=2pt,->] (b2) -- node[right,pos=0.1,black] {} (c4);
	
	\draw[Gray!40, line width=2pt,->] (c3) -- node[left,pos=0.1,black] {} (d5);
	\draw[Gray!40, line width=2pt,->] (c3) -- node[right,pos=0.1,black] {} (d6);
	\draw[Gray!40, line width=2pt,->] (c4) -- node[left,pos=0.1,black] {} (d7);
	\draw[Gray!40, line width=2pt,->] (c4) -- node[right,pos=0.1,black] {} (d8);
	
	\draw[Gray!40, line width=2pt,->] (d6) -- node[left,pos=0.1,black] {} (e1);
	\draw[Gray!40, line width=2pt,->] (d6) -- node[right,pos=0.1,black] {} (e2);

\end{scope}
}
\newcommand{\myxorvarianceimpone}{
	\begin{scope}[scale=0.8,x=1cm,y=1cm]
		\begin{axis}[legend pos= north west,
			y=1.5cm,
			x=2.5cm,
			ybar,
			bar width=20pt,
			xlabel={Features},
			ylabel={Sum of $\Delta i$},
			ymin=0,
			ytick=\empty,
			xtick={$X_1$,$X_2$},
			axis x line=bottom,
			axis y line=left,
			enlarge x limits=0.7,
			symbolic x coords={$X_1$,$X_2$},
			xticklabel style={anchor=base,yshift=-\baselineskip},
			nodes near coords={\pgfmathprintnumber\pgfplotspointmeta}
			]
			\addplot[fill=RoyalBlue] coordinates {
				($X_1$,0.00596978852304)
			};
			\addplot[fill=Orange] coordinates {
				($X_2$,1.988060423)
			};
			\legend{Root,Depth = 1}
		\end{axis}
	\end{scope}	
}
\newcommand{\myxorvarianceimptwo}{
	\begin{scope}[scale=0.8,x=1cm,y=1cm]
		\begin{axis}[legend pos= north east,
			y=1.5cm,
			x=2.5cm,
			ybar,
			bar width=20pt,
			xlabel={Features},
			ylabel={Sum of $\Delta i$},
			ymin=0,
			ytick=\empty,
			xtick={$X_1$,$X_2$},
			axis x line=bottom,
			axis y line=left,
			enlarge x limits=0.7,
			symbolic x coords={$X_1$,$X_2$},
			xticklabel style={anchor=base,yshift=-\baselineskip},
			nodes near coords={\pgfmathprintnumber\pgfplotspointmeta}
			]
			\addplot[fill=RoyalBlue] coordinates {
				($X_2$,0.00596978852304)
			};
			\addplot[fill=Orange] coordinates {
				($X_1$,1.988060423)
			};
			\legend{Root,Depth = 1}
		\end{axis}
	\end{scope}
}
\newcommand{\myensemblerf}{
	
  \node[] at (1.2,-0.3) (X1) {$\mathbf{x}$};
\begin{scope}[scale=0.5, xshift=0,yshift=0,every node/.append style={transform shape}]

	\mytreeinabox
	\draw[->,RoyalBlue, line width=2pt] (X1) -- (a); 
	\node[fill, RoyalBlue, circle, radius=0.1] at (\x,\y) (a) {};  
	\node[fill, RoyalBlue, circle, radius=0.1] at (\x+\xx,\y-1) (b2) {};
	\node[fill, RoyalBlue,circle, radius=0.1] at (\x+\xx-\xxx,\y-2) (c3) {};
	\node[fill, RoyalBlue,circle, radius=0.1] at (\x+\xx-\xxx+\xxxx,\y-3) (d4) {};
	\node[fill, RoyalBlue,circle, radius=0.1] at (\x+\xx-\xxx+\xxxx+\xxxx,\y-4) (e3) {};
	\draw[RoyalBlue, line width=2pt] (a) -- (b2) -- (c3) -- (d4) -- (e3);
	
	\coordinate (mid1) at (4,-6.5);
\end{scope}

\node[] at (5.7,-0.3) (X2) {$\mathbf{x}$};
\begin{scope}[scale=0.5, xshift=9cm,yshift=0,every node/.append style={transform shape}]
	\mytreeinaboxtwo
	\draw[->,RoyalBlue, line width=2pt] (X2) -- (a); 
	\node[fill, RoyalBlue, circle, radius=0.1] at (\x,\y) (a) {};  
	\node[fill, RoyalBlue, circle, radius=0.1] at (\x-\xx,\y-1) (b2) {};
	\node[fill, RoyalBlue,circle, radius=0.1] at (\x-\xx-\xxx,\y-2) (c3) {};
	\node[fill, RoyalBlue,circle, radius=0.1] at (\x-\xx-\xxx+\xxxx,\y-3) (d4) {};
	\draw[RoyalBlue, line width=2pt] (a) -- (b2) -- (c3) -- (d4);
	\coordinate (mid2) at (4,-6.5);
\end{scope}

\begin{scope}[scale=2, xshift=4.65cm,yshift=-22,every node/.append style={transform shape}]
	\node[] at (0,0) {$\dots$};
\end{scope}

\node[] at (11.3,-0.3) (X3) {$\mathbf{x}$};
\begin{scope}[scale=0.5, xshift=20.2cm,yshift=0,every node/.append style={transform shape}]
	\mytreeinaboxthree
	\draw[->,RoyalBlue, line width=2pt] (X3) -- (a); 
	\node[fill, RoyalBlue, circle, radius=0.1] at (\x,\y) (a) {};  
	\node[fill, RoyalBlue, circle, radius=0.1] at (\x+\xx,\y-1) (b2) {};
	\node[fill, RoyalBlue,circle, radius=0.1] at (\x+\xx+\xxx,\y-2) (c3) {};
	\node[fill, RoyalBlue,circle, radius=0.1] at (\x+\xx+\xxx+\xxxx,\y-3) (d4) {};
	\draw[RoyalBlue, line width=2pt] (a) -- (b2) -- (c3) -- (d4);
	
	\coordinate (mid3) at (4,-6.5);
\end{scope}
	
	\node[] at (2.7,-3) (y1) {$T_1(\mathbf{x})=\hat{y}_1$};
	\node[] at (2.2+4.5,-2.6) (y2) {$T_2(\mathbf{x})=\hat{y}_2$};
	\node[] at (2.7+10.1,-2.6) (y3) {$T_{N_T}(\mathbf{x})=\hat{y}_{N_T}$};	  
	 
	 \node[draw,rectangle,rounded corners=5pt,fill=ForestGreen!20] at (7.5,-4.5) (agg) {\textit{Aggregation}};
	 
	 \draw [->,line width=2pt] (mid1) to [out=270,in=180] node [above] {$\hat{y}_1$} (agg.west);
	 \draw [->,line width=2pt] (mid2) to [out=270,in=90] node [above,pos=0.7] {$\hat{y}_2$} (agg.north);
	 \draw [->,line width=2pt] (mid3) to [out=270,in=0] node [above] {$\hat{y}_{N_T}$} (agg.east);
	 
	 \draw[->,line width=2pt] (agg) -- +(0,-1);
	 
	 \draw[->,line width=2pt] (agg) -- +(0,-1) node[below] () {$\mathbf{T}(\mathbf{x}) = \hat{y}$};
}

\newcommand{\myensemblerfshifted}{
	\def\s{0.5cm}
	
	

		\begin{scope}[xshift=6cm,yshift=-5cm,every node/.append style={transform shape}]
		\fill[white,rounded corners=15pt] (0,0) rectangle (8,-6);
		\mytreeinaboxtwo
		\node[] at (\x+1.2,\y+0.6) (X) {$\mathbf{x}$};
		\draw[->,RoyalBlue, line width=2pt] (X) -- (a); 
		\node[fill, RoyalBlue, circle, radius=0.1] at (\x,\y) (a) {};  
		\node[fill, RoyalBlue, circle, radius=0.1] at (\x+\xx,\y-1) (b2) {};
		\node[fill, RoyalBlue,circle, radius=0.1] at (\x+\xx+\xxx,\y-2) (c3) {};
		\node[fill, RoyalBlue,circle, radius=0.1] at (\x+\xx+\xxx+\xxxx,\y-3) (d4) {};
		\node[fill, RoyalBlue,circle, radius=0.1] at (\x+\xx+\xxx+\xxxx+\xxxx,\y-4) (e3) {};
		\draw[RoyalBlue, line width=2pt] (a) -- (b2) -- (c3) -- (d4) -- (e3);
		\node[] at (\x+\xx+\xxx+\xxxx+\xxxx-1,\y-4-0.5) (Y) {$T_{N_T}(\mathbf{x})=\hat{y}_{N_T} $};
		
		\end{scope}

	\foreach \i in {3,2,1}
	{
		\begin{scope}[xshift=\i*\s,yshift=-\i*\s,every node/.append style={transform shape}]
			\draw[->, line width=2pt] (4,-5.8) -- +(0,-1.2cm+\i*\s);
			\fill[white,rounded corners=15pt] (0,0) rectangle (8,-6);
			\mytreeinabox
			
		\end{scope}
		
	}

	\begin{scope}[xshift=0,yshift=0,every node/.append style={transform shape}]
		
		
		
		\fill[white,rounded corners=15pt] (0,0) rectangle (8,-6);
		
		\mytreeinabox
		
		\node[] at (\x+1.2,\y+0.6) (X) {$\mathbf{x}$};
		\draw[->,RoyalBlue, line width=2pt] (X) -- (a); 
		\node[fill, RoyalBlue, circle, radius=0.1] at (\x,\y) (a) {};  
		\node[fill, RoyalBlue, circle, radius=0.1] at (\x+\xx,\y-1) (b2) {};
		\node[fill, RoyalBlue,circle, radius=0.1] at (\x+\xx-\xxx,\y-2) (c3) {};
		\node[fill, RoyalBlue,circle, radius=0.1] at (\x+\xx-\xxx+\xxxx,\y-3) (d4) {};
		\node[fill, RoyalBlue,circle, radius=0.1] at (\x+\xx-\xxx+\xxxx+\xxxx,\y-4) (e3) {};
		\draw[RoyalBlue, line width=2pt] (a) -- (b2) -- (c3) -- (d4) -- (e3);
		\node[] at (\x+\xx-\xxx+\xxxx+\xxxx+0.5,\y-4-0.5) (Y) {$T_1(\mathbf{x})=\hat{y}_1 $};
		
	\end{scope}
}

\newcommand{\mybootstrapfigure}{
		\def\x{0.85}
\def\y{-0.75}


\draw [
thick,
decoration={
	brace,
	raise=0.5cm
},
decorate
] (-0.5,0) -- (\x*10-0.5,0) 
node [pos=0.5,anchor=south,yshift=+0.55cm] {$N$ \textit{samples}}; 

\draw [
thick,
decoration={
	brace,
	mirror,
	raise=0.5cm
},
decorate
] (-0.5,\y*6) -- (\x*10-0.5,\y*6) 
node [pos=0.5,anchor=north,yshift=-0.55cm] {$N$ \textit{samples}}; 

\draw [
thick,
decoration={
	brace,
	mirror,
	raise=0.5cm
},
decorate
] (-0.5+\x*11,\y*6) -- (\x*15-0.5,\y*6) 
node [pos=0.5,anchor=north,yshift=-0.55cm] {\textit{oob samples}};

\node[] at (-1,0) {$\mathbf{LS}$};
\node[draw,black, line width=1pt,] (a) at (\x*0,0) {$x^1$};
\node[draw,black, line width=1pt,] (a) at (\x*1,0) {$x^2$};
\node[draw,black, line width=1pt,] (a) at (\x*2,0) {$x^3$};
\node[draw,black, line width=1pt,] (a) at (\x*3,0) {$x^4$};
\node[draw,black, line width=1pt,] (a) at (\x*4,0) {$x^5$};
\node[draw,black, line width=1pt,] (a) at (\x*5,0) {$x^6$};
\node[draw,black, line width=1pt,] (a) at (\x*6,0) {$x^7$};
\node[draw,black, line width=1pt,] (a) at (\x*7,0) {$x^8$};
\node[draw,black, line width=1pt,] (a) at (\x*8,0) {$x^9$};
\node[draw,black, line width=1pt,] (a) at (\x*9,0) {$x^{10}$};


\node[] at (-1,\y*2) {$\mathbf{LS}_1^B$};
\node[draw,black, line width=1pt,] (a) at (\x*0,\y*2) {$x^2$};
\node[draw,black, line width=1pt,] (a) at (\x*1,\y*2) {$x^{10}$};
\node[draw,black, line width=1pt,] (a) at (\x*2,\y*2) {$x^9$};
\node[draw,black, line width=1pt,] (a) at (\x*3,\y*2) {$x^5$};
\node[draw,black, line width=1pt,] (a) at (\x*4,\y*2) {$x^{4}$};
\node[draw,black, line width=1pt,] (a) at (\x*5,\y*2) {$x^1$};
\node[draw,black, line width=1pt,] (a) at (\x*6,\y*2) {$x^1$};
\node[draw,black, line width=1pt,] (a) at (\x*7,\y*2) {$x^4$};
\node[draw,black, line width=1pt,] (a) at (\x*8,\y*2) {$x^1$};
\node[draw,black, line width=1pt,] (a) at (\x*9,\y*2) {$x^{9}$};

\node[] at (\x*15+0.4,\y*2) {$\mathbf{LS}_1^{oob}$};
\node[draw,black, line width=1pt,] (a) at (\x*11,\y*2) {$x^3$};
\node[draw,black, line width=1pt,] (a) at (\x*12,\y*2) {$x^6$};
\node[draw,black, line width=1pt,] (a) at (\x*13,\y*2) {$x^7$};
\node[draw,black, line width=1pt,] (a) at (\x*14,\y*2) {$x^8$};		

\node[] at (-1,\y*3) {$\mathbf{LS}_2^B$};
\node[draw,black, line width=1pt,] (a) at (\x*0,\y*3) {$x^5$};
\node[draw,black, line width=1pt,] (a) at (\x*1,\y*3) {$x^{10}$};
\node[draw,black, line width=1pt,] (a) at (\x*2,\y*3) {$x^6$};
\node[draw,black, line width=1pt,] (a) at (\x*3,\y*3) {$x^5$};
\node[draw,black, line width=1pt,] (a) at (\x*4,\y*3) {$x^6$};
\node[draw,black, line width=1pt,] (a) at (\x*5,\y*3) {$x^4$};
\node[draw,black, line width=1pt,] (a) at (\x*6,\y*3) {$x^5$};
\node[draw,black, line width=1pt,] (a) at (\x*7,\y*3) {$x^9$};
\node[draw,black, line width=1pt,] (a) at (\x*8,\y*3) {$x^8$};
\node[draw,black, line width=1pt,] (a) at (\x*9,\y*3) {$x^{1}$};

\node[] at (\x*15+0.4,\y*3) {$\mathbf{LS}_2^{oob}$};
\node[draw,black, line width=1pt,] (a) at (\x*11,\y*3) {$x^2$};
\node[draw,black, line width=1pt,] (a) at (\x*12,\y*3) {$x^3$};
\node[draw,black, line width=1pt,] (a) at (\x*13,\y*3) {$x^7$};

\node[] at (-1,\y*4) {$\mathbf{LS}_3^B$};
\node[draw,black, line width=1pt,] (a) at (\x*0,\y*4) {$x^1$};
\node[draw,black, line width=1pt,] (a) at (\x*1,\y*4) {$x^2$};
\node[draw,black, line width=1pt,] (a) at (\x*2,\y*4) {$x^3$};
\node[draw,black, line width=1pt,] (a) at (\x*3,\y*4) {$x^4$};
\node[draw,black, line width=1pt,] (a) at (\x*4,\y*4) {$x^5$};
\node[draw,black, line width=1pt,] (a) at (\x*5,\y*4) {$x^6$};
\node[draw,black, line width=1pt,] (a) at (\x*6,\y*4) {$x^7$};
\node[draw,black, line width=1pt,] (a) at (\x*7,\y*4) {$x^8$};
\node[draw,black, line width=1pt,] (a) at (\x*8,\y*4) {$x^9$};
\node[draw,black, line width=1pt,] (a) at (\x*9,\y*4) {$x^{10}$};

\node[] at (\x*15+0.4,\y*4) {$\mathbf{LS}_3^{oob}$};
\node[draw,black, line width=1pt,] (a) at (\x*11,\y*4) {$x^1$};
\node[draw,black, line width=1pt,] (a) at (\x*12,\y*4) {$x^2$};
\node[draw,black, line width=1pt,] (a) at (\x*13,\y*4) {$x^3$};
\node[draw,black, line width=1pt,] (a) at (\x*14,\y*4) {$x^4$};		

\node[] at (-1,\y*5) {$\mathbf{LS}_4^B$};
\node[draw,black, line width=1pt,] (a) at (\x*0,\y*5) {$x^1$};
\node[draw,black, line width=1pt,] (a) at (\x*1,\y*5) {$x^1$};
\node[draw,black, line width=1pt,] (a) at (\x*2,\y*5) {$x^1$};
\node[draw,black, line width=1pt,] (a) at (\x*3,\y*5) {$x^8$};
\node[draw,black, line width=1pt,] (a) at (\x*4,\y*5) {$x^9$};
\node[draw,black, line width=1pt,] (a) at (\x*5,\y*5) {$x^{10}$};
\node[draw,black, line width=1pt,] (a) at (\x*6,\y*5) {$x^3$};
\node[draw,black, line width=1pt,] (a) at (\x*7,\y*5) {$x^8$};
\node[draw,black, line width=1pt,] (a) at (\x*8,\y*5) {$x^8$};
\node[draw,black, line width=1pt,] (a) at (\x*9,\y*5) {$x^{5}$};

\node[] at (\x*15+0.4,\y*5) {$\mathbf{LS}_4^{oob}$};
\node[draw,black, line width=1pt,] (a) at (\x*11,\y*5) {$x^2$};
\node[draw,black, line width=1pt,] (a) at (\x*12,\y*5) {$x^4$};
\node[draw,black, line width=1pt,] (a) at (\x*13,\y*5) {$x^6$};
\node[draw,black, line width=1pt,] (a) at (\x*14,\y*5) {$x^7$};		

\node[] at (-1,\y*6) {$\mathbf{LS}_5^B$};
\node[draw,black, line width=1pt,] (a) at (\x*0,\y*6) {$x^{10}$};
\node[draw,black, line width=1pt,] (a) at (\x*1,\y*6) {$x^6$};
\node[draw,black, line width=1pt,] (a) at (\x*2,\y*6) {$x^8$};
\node[draw,black, line width=1pt,] (a) at (\x*3,\y*6) {$x^2$};
\node[draw,black, line width=1pt,] (a) at (\x*4,\y*6) {$x^3$};
\node[draw,black, line width=1pt,] (a) at (\x*5,\y*6) {$x^{10}$};
\node[draw,black, line width=1pt,] (a) at (\x*6,\y*6) {$x^5$};
\node[draw,black, line width=1pt,] (a) at (\x*7,\y*6) {$x^{10}$};
\node[draw,black, line width=1pt,] (a) at (\x*8,\y*6) {$x^9$};
\node[draw,black, line width=1pt,] (a) at (\x*9,\y*6) {$x^{7}$};

\node[] at (\x*15+0.4,\y*6) {$\mathbf{LS}_5^{oob}$};
\node[draw,black, line width=1pt,] (a) at (\x*11,\y*6) {$x^1$};
\node[draw,black, line width=1pt,] (a) at (\x*12,\y*6) {$x^4$};
}

\newcommand{\oldmybaggingfigure}{
	
\node[] at (2,1) (Z) {$\mathbf{LS}_1^B$};
\draw[->,line width=2pt] (Z) -- + (0,-0.9);

\node[] at (6.39,1) (Z) {$\mathbf{LS}_2^B$};
\draw[->,line width=2pt] (Z) -- + (0,-0.9);

\node[] at (12.1,1) (Z) {$\mathbf{LS}_T^B$};
\draw[->,line width=2pt] (Z) -- + (0,-0.9);

\myrandomforest
}

\newcommand{\mybaggingfigure}{
	%
	%

	\coordinate (LS) at (7.5,4.5);
	\coordinate (LS1) at (2,2.8);
	\coordinate (LS2) at (6.5,2.8);
	\coordinate (LS3) at (12.1,2.8);
	\draw [->,line width=2pt] (LS.south) to [out=230,in=90] node [above] {} (LS1.north);
	\draw [->,line width=2pt] (LS.south) to [out=270,in=90] node [above] {} (LS2.north);
	\draw [->,line width=2pt] (LS.south) to [out=310,in=90] node [above] {} (LS3.north);
	
	
	\begin{scope}[xshift=6cm,yshift=6.5cm,every node/.append style={transform shape}]
		\mycoloredLS{RoyalBlue!20}{Orange!20}{ForestGreen!20}{Red!20}{DarkOrchid!20}
		\node[] at (1.45,0.3) () {$\mathbf{LS}$};
		\node[] at (-0.2,-0.5*0.4) () {\small $x^1$};
		\node[] at (-0.2,-1.5*0.4) () {\small $x^2$};
		\node[] at (-0.2,-2.5*0.4) () {\small $x^3$};
		\node[] at (-0.2,-3.5*0.4) () {\small $x^4$};
		\node[] at (-0.2,-4.5*0.4) () {\small $x^5$};
	\end{scope}
	
	\begin{scope}[xshift=0.5cm,yshift=2.8cm,every node/.append style={transform shape}]
		\mycoloredLS{RoyalBlue!20}{Orange!20}{ForestGreen!20}{ForestGreen!20}{DarkOrchid!20}
		\node[] at (2.7,0.3) () {$\mathbf{LS}^B_1$};
		\node[] at (-0.2,-0.5*0.4) () {\small $x^1$};
		\node[] at (-0.2,-1.5*0.4) () {\small $x^2$};
		\node[] at (-0.2,-2.5*0.4) () {\small $x^3$};
		\node[] at (-0.2,-3.5*0.4) () {\small $x^3$};
		\node[] at (-0.2,-4.5*0.4) () {\small $x^5$};
		\draw[->,line width=2pt]  (1.5,-2) -- +(0,-0.8); 
	\end{scope}
	
	\begin{scope}[xshift=5cm,yshift=2.8cm,every node/.append style={transform shape}]	
		\mycoloredLS{Orange!20}{Orange!20}{ForestGreen!20}{Red!20}{Red!20}
		\node[] at (2.7,0.3) () {$\mathbf{LS}^B_2$};
		\node[] at (-0.2,-0.5*0.4) () {\small $x^2$};
		\node[] at (-0.2,-1.5*0.4) () {\small $x^2$};
		\node[] at (-0.2,-2.5*0.4) () {\small $x^3$};
		\node[] at (-0.2,-3.5*0.4) () {\small $x^4$};
		\node[] at (-0.2,-4.5*0.4) () {\small $x^4$};
		\draw[->,line width=2pt]  (1.5,-2) -- +(0,-0.8); 
	\end{scope}
	
	\begin{scope}[xshift=10.6cm,yshift=2.8cm,every node/.append style={transform shape}]	
		\mycoloredLS{RoyalBlue!20}{RoyalBlue!20}{Orange!20}{DarkOrchid!20}{DarkOrchid!20}
		\node[] at (2.7,0.3) () {$\mathbf{LS}^B_{N_T}$};
		\node[] at (-0.2,-0.5*0.4) () {\small $x^1$};
		\node[] at (-0.2,-1.5*0.4) () {\small $x^1$};
		\node[] at (-0.2,-2.5*0.4) () {\small $x^2$};
		\node[] at (-0.2,-3.5*0.4) () {\small $x^5$};
		\node[] at (-0.2,-4.5*0.4) () {\small $x^5$};
		\draw[->,line width=2pt]  (1.5,-2) -- +(0,-0.8); 
	\end{scope}

	\myrandomforest

%
	
}

\newcommand{\myRSfigure}{
%
%

\coordinate (LS) at (7.05,5.6);
\coordinate (LS1) at (2,4.4);
\coordinate (LS2) at (6.4,4.4);
\coordinate (LS3) at (12.1,4.4);
\draw [->,line width=2pt] (LS.south) to [out=230,in=90] node [above] {} (LS1.north);
\draw [->,line width=2pt] (LS.south) to [out=270,in=90] node [above] {} (LS2.north);
\draw [->,line width=2pt] (LS.south) to [out=310,in=90] node [above] {} (LS3.north);


\begin{scope}[xshift=6cm,yshift=7cm,every node/.append style={transform shape}]
	\myLS{0}{0}{0}{0}{0}{0}{0}{0}{0}
	
\end{scope}

\begin{scope}[xshift=0.95cm,yshift=4cm,every node/.append style={transform shape}]
	\myLS{0}{0}{0}{0}{1}{0}{3}{0}{5}
\end{scope}

\begin{scope}[xshift=0.95cm,yshift=1.8cm,every node/.append style={transform shape}]
	\mypartialLS{$X_1$}{$X_3$}{$X_5$}
\end{scope}

\begin{scope}[xshift=5.35cm,yshift=4cm,every node/.append style={transform shape}]	
	\myLS{0}{0}{0}{0}{0}{2}{0}{4}{5}
\end{scope}

\begin{scope}[xshift=5.35cm,yshift=1.8cm,every node/.append style={transform shape}]
	\mypartialLS{$X_2$}{$X_4$}{$X_5$}
\end{scope}

\begin{scope}[xshift=11.05cm,yshift=4cm,every node/.append style={transform shape}]	
	\myLS{0}{0}{0}{0}{1}{2}{3}{0}{0}
\end{scope}

\begin{scope}[xshift=11.05cm,yshift=1.8cm,every node/.append style={transform shape}]
	\mypartialLS{$X_1$}{$X_2$}{$X_3$}
\end{scope}

\myrandomforest

\draw[->, line width=2pt] (2,2.6) -- +(0,-0.4);
\draw[->, line width=2pt] (6.4,2.6) -- +(0,-0.4);
\draw[->, line width=2pt] (12.1,2.6) -- +(0,-0.4);

\draw[->, line width=2pt] (2,0.4) -- +(0,-0.4);
\draw[->, line width=2pt] (6.4,0.4) -- +(0,-0.4);
\draw[->, line width=2pt] (12.1,0.4) -- +(0,-0.4);

}

\chapter{Introduction}\label{ch:introduction}

\section{Motivation}

From Alan Turing and Claude Shannon in the 1940's and the birth of computer science to the recent breakthroughs in the Internet of Things and in Artificial Intelligence (AI), the scientific and technological worlds of data collection and computing have tremendously evolved.  
In the last 20 years, this phenomenon has been accelerating significantly. There was a real "boom" in terms of new discoveries and breakthroughs. Among those recent and popular successes, many were made in the field of Machine Learning (ML). This field unifies all researches that aim at equipping machines (high performance computer grids, robots, cars, smart-phones, etc.) with the ability of learning a new task, and then improving their performances, by the mere fact of exploiting more data. Let us mention for example  the famous softwares of Google, AlphaGo and AlphaGoZero, that learned how to play and even become champion of the game of Go as well as several other highly complex boardgames. Progresses in ML are either dedicated to help researchers to exploit growing empirical datasets in their fields (e.g., physics, medicine, environmental sciences, social sciences, linguistics... ) or to improve day-to-day life. ML applications include sorting incoming e-mails, translating text (e.g., Google translate, DeepL), understanding and producing spoken language (e.g., Siri from Apple, Ok Google, Alexa from Amazon), and even self-driving cars and autonomous robots.

Following the main trend of the ML domain, those applications are constantly improved with the avowed goal of always achieving better performances and reducing the costs. In machine learning, significant improvements are usually achieved at the price of an increasing computational complexity and thanks to bigger datasets.  Currently, cutting-edge models built by the most advanced machine learning algorithms are commonly seen as black-boxes because they are either too complex to be comprehensible, or because they are kept secret by their owners. 

In the future, there will be countless new ML applications in which human health and life are at stake. Making a diagnosis (i.e., identification of a disease), estimating a prognosis (i.e., predicting the expected development of a disease), personalising a medical treatment and so many other medical decisions are already available or currently developed. It is obvious that one will not blindly believe everything coming out of those machines. The failure of Google Flu (predicting flu pandemics) illustrates that machines are not always infallible, but they may be of great help. To gain trust in machine learning based solutions, it is and will remain crucial to understand these algorithms and the reasons behind their decisions or predictions in a given application, e.g., examine choices of (military) autonomous drones and self-driving cars and knowing why Google Death forecasts someone's near death.
That is one of the reasons motivating a second trend in ML focusing on the interpretability of models rather than on their mere predictive and computational performances only (see, e.g., \citep{lipton2016mythos,doshi2017towards}). An interpretable model means that one understands the problem that is modelled and apprehends the underlying inference mechanism. Therefore, in some circumstances, the preference is for an interpretable model, that is not necessarily the most accurate or the fastest one but that manages to extract relevant knowledge from the data.

Performances and interpretability are typically not concomitant and a trade-off between those two properties is usually a desirable feature for a ML method.  Works are then made to improve the interpretability of existing black-box approaches while others focus on boosting the performances of already interpretable models.\\

Among the broad set of existing machine learning methods, this thesis only considers tree-based models. Within that kind of methods, single decision trees are very popular method and considered as highly interpretable. The model takes the form of a tree-structured graph representing a sequential reasoning to take a complex decision. 
The interest of the model is that it follows the reasoning everyone can make to handle difficult problems. 
However, this approach often provides highly variable models (because of the greedy nature of the approach) which in turn leads to rather modest levels of accuracy. In this thesis, special attention will be given to tree-based ensemble methods, also known as random forests (RF). While improving significantly the accuracy with respect to single trees, they unfortunately provide also much less interpretable models. With an ensemble of trees, many different explanations for a single decision are aggregated and interpreting the resulting prediction is not possible any more.

As mentioned, some efforts are usually made to interpret accurate models and, in this case, to recover some of the interpretability of a single decision tree. This can be done by identifying the constitutive elements (variables) of the model and their relative importances. For example, trying to predict someone's wine taste, we could determine that the wine colour is quite important and plays a decisive role in the wine taste discovery. In the literature on random forests, several different so-called `feature-importance' measures have been proposed in order to restore some interpretability, and also in order to help selecting relevant subsets of features, whenever this is useful.

Despite their success, RF methods and in particular importance measures derived from these models still contain some grey areas:
\begin{enumerate}
	\item Parameters of the methods have been usually studied with the scope of maximising the model performances. How do these parameters impact the quality of importance measures? Are optimal values for performances similar to those providing the best understanding of the problem?
	\item What is actually measured by an importance measure? Is it its usefulness in the model? 
	Does the importance evaluate the contribution of the variable in the model? How is defined the contribution of a variable? 
	\item Are those importance measures consistent? Are all variables equally treated when their importance is evaluated? 
	\item For a given importance measure, one can retrieve a numerical score for each variable. Is this sufficient to interpret all kinds of data structures, such as interacting features? 
\end{enumerate}
Along this thesis, we focus on answering some of those important questions in the light of our own work and of major contributions from the literature. We also propose some improvements to respond to some of the main limitations of the importance measures. 

\section{Outline of the manuscript}

The first part of this manuscript aims at summarising important notions about supervised learning, feature selection and tree-based methods. 
In particular, Chapter \ref{ch:background} describes the different natures and roles of variables and how they may interact together to form complex structures. Then, in the context of supervised learning, the interest of a variable is formalised by various notions of relevance and redundancy. This chapter is concluded by a description of feature selection problems and methods,  that aim at using a dataset to find the most relevant features in order to improve performances of machine learning models and/or to improve their interpretability. 
Chapter \ref{ch:trees} introduces tree-based models: from the single decision tree algorithm to state-of-the-art tree-based ensemble methods. Some key points or methods are highlighted for a better understanding of the subsequent chapters.\\

The second part of this manuscript is dedicated to the most popular importance measures derived from tree-based ensembles. 
In particular, Chapter \ref{ch:importances} reviews the main literature on that topic, with a focus on theoretical analyses. Chapter \ref{ch:mdi} then focuses on one subclass of these importance measures (known as the Mean Decrease of Impurity (MDI)) and studies it in greater details, both theoretically and practically.\\

The third and last part collects several contributions made in order to improve existing importance measures and/or in the context of some specific applications. 
In particular, Chapter \ref{ch:context} proposes an extension of the MDI importance measure to take into account different contexts in which the problem can be put, so as to provide further insight into the feature importances. Chapter \ref{ch:SRS} describes a new method using tree-based ensembles to perform feature selection under memory constraints. Finally, Chapter \ref{ch:connectomics} considers the network inference application. Its first part describes a tree-based solution and highlights some of the limitations of the method facing some challenges of network inference. The second part focuses on a network inference challenge and a non tree-based approach that we have designed in order to win the competition.

\section{Publications}

This dissertation summarises several contributions to tree-based importance measures. Publications that are directly related to this work include:

\begin{itemize}
\item \bibentry{louppe2013understanding}

\textit{This publication is of interest in Chapters \ref{ch:importances} and \ref{ch:mdi}.}

\item \bibentry{sutera2015simple}

\textit{Chapter \ref{ch:connectomics} is the result of that publication.}

\item \bibentry{sutera2016context}

\textit{Chapter \ref{ch:context} is the result of that publication.}

\item \bibentry{sutera2017simple}

\textit{Second version of \citep{sutera2015simple}.}

\item \bibentry{sutera2018random}

\textit{Chapter \ref{ch:SRS} is the result of the methodological part of that publication.} \textit{Theoretical results of that publication are also of interest in Chapters \ref{ch:importances} and \ref{ch:mdi}.}

\end{itemize}

During the course of this thesis, several fruitful collaborations have also
led to the following publications. These are not discussed within
this dissertation.

\begin{itemize}
\item \bibentry{taralla2016decision}
\item \bibentry{olivier2018phase}
\item \bibentry{wehenkel2018random}
\end{itemize}

\part{Background}
\chapter{Machine Learning and Feature Selection}\label{ch:background}

\begin{overview}
The goal of this chapter is to provide some general background in supervised machine learning and feature selection. We start with a short motivation about the role of machine learning in the context of artificial intelligence. Then we discuss data structures and supervised learning problems. The bulk of the chapter focuses on feature selection methods. Along the way, we also introduce terminology, and some related mathematical notions and notations.  
\end{overview}

\epi{Can machines think?}{Alan Turing, 1950}

\section{Machine learning vs Artificial Intelligence}

By studying the possibility of a machine to think, which led to his famous test to establish human level  intelligence of a machine, \cite{turing1950computing} laid the foundation stone for a new field of research, called \textit{Artificial Intelligence} (AI). Since then, in their quest to give a sort of intelligence to machines (and most prominently to computers), scientists have developed theories and algorithms to enable computers to learn from examples. This topic forms a sub-domain of AI called \textit{machine learning} (ML). The goal of machine learning is to allow a machine to progressively improve its ability to solve some tasks by exploiting some relevant \textit{data} collected over time. This contrasts with the habit of classical programming that implements computer programs based on a frozen set of human-based knowledge. Learning algorithms may actually allow a machine to discover knowledge that was missed by human experts or that is too complex to be discovered by them. Thus, the purpose of ML methods is dual. On the one hand, ML methods aim at producing models derived from data that allow for accurate \textit{predictions}, e.g. to take decisions or to guess not yet observed values. On the other hand, those models need to be \textit{interpretable} in order to help humans to explore data and understand complex systems. Both goals however equally require the same thing: (a lot of) data. That is why the next section presents the notion of data and its constituent elements known as observations and features.

\section{What is data?}

In the context of this thesis, a \textit{dataset} $\mathbf{D}$ is a collection of data and is organised as a set of $N$ observations $\{\mathbf{o}^i\}_{i=1}^N$. An \textit{observation} $\mathbf{o}^i$, also called \textit{sample} or \textit{example}, is a (line)vector of $p$ values $\mathbf{o}^i=(o^i_1,\dots,o_p^i)$, where the element $o^i_j$ corresponds to the value of the feature $j$. 
A \textit{feature} (or equivalently a \textit{variable}\footnote{Both terms will be used in this thesis without distinction.}) is a function taking as argument an object (belonging to some underlying set of possible objects) and whose values belong to a certain domain. 

A dataset of $N$ observations described by $p$ features is usually represented by a matrix of size $N \times p$.  

A \textit{large dataset}  refers to a dataset where $N$ is very large while a \textit{small dataset} refers to a dataset  where $N$ is small. A high-dimensional (respectively low-dimensional) dataset corresponds to the case where $p$ is very large (respectively small), while a big dataset corresponds to the case where $N \times p$ is very large. 
From a statistical viewpoint, the number $N$ of samples should ideally be (much) larger than the number $p$ of features in order to cover sufficiently well all possible combinations of features values. In practice, datasets with $N\ll p$ are often encountered and they indeed raise important challenges in the learning process \citep{kuncheva2018feature}.

\subsection{Nature of features} \label{sec:nature}

In machine learning, a feature encodes some observed information by taking a value from its domain. The number of possible values and the relationship between them allow to define several types of features, listed hereunder.

\begin{description}
\item[Continuous] A feature is \textit{continuous} if it can take any value within an interval of $\mathbb{R}$.
This results in an uncountable number of possibilities, and one can always find a new value between two other ones as close as they can be.
A continuous feature is also \textit{ordered}: its values are inherently numerical and hence they are (logically) ordered.

A few examples of continuous features are \textit{height} (domain is $\mathbb{R}^+$), \textit{weight} ($\mathbb{R}^+$), \textit{time} ($\mathbb{R}^+$),  \textit{speed} ($\mathbb{R}$), \textit{flow} ($\mathbb{R}$), \textit{correlation score} ($[-1,1]$), \textit{error rate} ($[0,1]$).

\begin{sidenote}{Rescaling of continuous feature values} 
	A continuous feature may be \textit{rescaled} without loss of information by mapping its domain to $[-1,1]$ or $[0,1]$ for instance. 
	 In the same machine learning application, ranges of different continuous features may vary widely from each other and some machine learning algorithms (e.g., artificial neural networks or support vector machines) might require to rescale all continuous features to the same range to work properly (e.g., by helping or speeding up optimisation) or to compare features with each others (e.g., in k-nearest-neighbours so that all features can contribute equally) . 
\end{sidenote}

\item[Discrete] 
A feature is \textit{discrete} when it takes its values in a set of at most a countable (and usually finite) number of values. Its values can either be numerical or categorical, ordered or not. The number of possible values defines the \textit{cardinality} of such a feature. A \textit{m-ary feature} (i.e., a feature of cardinality $m$) can take  $m$ different values. In particular, a feature of cardinality two is a \textit{binary feature} and its set of possible values is typically represented as $\{0,1\}$ or $\{-1,1\}$.

Usually, discrete features are divided in three sub-types:

	\begin{enumerate}[$\bullet$]
		\item A \textit{numerical discrete feature} takes on numerical values from a countable or finite subset of $\mathbb{N}$ or $\mathbb{R}$. Its values are thus naturally \textit{ordered}. 

		Examples of numerical discrete features usually refer to counts or proportions of indivisible elements: \textit{the number of children}, \textit{the number of passengers}, \textit{the proportion of expensive of cars}, \textit{etc.} 

		\item An \textit{ordinal discrete feature} takes on values that are not numerical but are still following a logical order.
		
		Examples of ordinal discrete features usually refer to a scale, a degree of magnitude and can often straightforwardly be replaced by numerical values if necessary: \textit{position} $\{first, second, third\}$, \textit{the degree of severity (of a car accident, a disease)} $\{low, intermediate, high\}$, \textit{the coffee strength} $\{mild, strong\}$, \textit{etc.}
		
\begin{sidenote}{Numerical encoding of categorical features} \label{sn:encoding}

Some methods (e.g., neural networks, support vector machines) are not able to handle features with non-numerical values (i.e., ordinal and categorical discrete features). Values of such features thus need to be encoded, converted into numerical values. 
	
With an ordered feature, one can easily attribute a numerical value to each possible values while respecting the logical order between them (e.g., $\{low, middle, high\}$ into $\{1,2,3\}$ and $low<high$ is preserved through $1 < 3$). 
Similarly, numerical values can be assigned to each class of a categorical feature. For example, let us take a categorical feature representing the eye colour with possible classes $\{blue, brown, green\}$. A classical numerical encoding would give $\{blue=1, brown=2, green=3\}$. However, this introduces an order between the classes that was not originally there. Having $blue$ eyes is not "lower" than having $brown$ eyes but assigned numerical values ($1$ and $3$) induce a spurious ordering.

Another encoding consists in replacing a categorical feature by several binary features $B$.
Two binary variables are enough to perfectly encode a variable with four different classes ($x$ binary variables give up to $2^x$ combinations). However, all binary variables are required to unambiguously retrieve the value. This is the binary equivalent of the classical encoding.

One-hot encoding associates one binary feature ${h}_{i}$ to each possible value of the original feature such that the binary value is equal to $1$ only if the original feature has the corresponding class (e.g., $h_1$ corresponding to $blue$). In this case, a larger number of binary features are required to represent all possible values of the original feature but there is no ordering implied by this encoding. 

A summary is made in Table~\ref{table:encoding-categorical-variables}.

\begin{center}
	\centering
	\begin{tabular}{c|c|cc|ccc}\hline
		Eye colour & Classical Enc. & \multicolumn{2}{c|}{Binary Enc.} & \multicolumn{3}{c}{One-Hot Enc.}\\
		& & $b_1$ & $b_2$ & $h_1$ & $h_2$ & $h_3$\\ \hline
		$blue$ & 	1 & 0 & 0 & 1 & 0 & 0 \\ \hdashline 
		$brown$ & 2 & 0 & 1       & 0 & 1 & 0 \\ \hdashline 
		$green$&   3 & 1 & 1 & 0 & 0 & 1 \\ \hline
	\end{tabular}
	\captionof{table}{Example of different encodings of a categorical variable}
	\label{table:encoding-categorical-variables}
\end{center}

\end{sidenote}
		
		\item A \textit{categorical discrete feature} (also known as \textit{nominal discrete feature}) takes on values from an finite set of elements without logical order. Values, referred to as \textit{classes} or \textit{categories}, are \textit{unordered}. 
		
		Examples are \textit{eye colour} taking values in $\{blue, brown, green\}$, \textit{mood} $\in \{happy, sad\}$, \textit{etc.}. While they are not ordered, they can however be encoded as numerical values if necessary (see side note on page \pageref{sn:encoding}).

	\end{enumerate}

\end{description}

When only the existence of a logical order between the feature values is of interest, continuous, numerical or ordinal discrete features are united as \textit{ordered} features and, conversely, categorical discrete features are \textit{unordered features}.

\subsection{Interactions between features}\label{sec:role}

Beyond their individual natures, the relations between features may also play a key role. 
Indeed, features can be seen as individual entities that carry some information (e.g., a value), but to consider features to their full extent, they need to be seen in the context of other features possibly interacting with them. 

In what follows, we first define a model of interacting features and then focus on the interactions between variables.

\subsubsection*{A (causal) model of interacting features}

Following \cite{pearl2009causality}'s definition, a \textit{(causal) model} is a triple $M=(U,V,F)$ \citep{white2011linking}, where \begin{itemize}
	\item  $U$ is a set of \textit{background} variables $\{u_1, \dots, u_m\}$ that are determined outside the model. Such variables are also called \textit{exogenous}. 
	\item $V$ is a set of variables $\{v_1, \dots, v_n \}$ that are determined within the model. Such variables are also called \textit{endogenous}.
	\item $F$ is a set of functions $\{f_1, \dots, f_n\}$ specifying how each endogenous variable is determined by other variables of the model. More precisely, each $f_i$  provides the value of $v_i$ given the values of a subset of all other variables $U\cup V^{-i}$ where $V^{-i}$ is the set $V$ without the variable $v_i$ (i.e., $V^{-i} = V\setminus \{v_i\}$). 
\end{itemize}

The structure of such a model may be represented in the form of a directed graph, where each vertex corresponds to one of the (exogenous or endogenous) variables, and where for each endogenous variable $v_{i}$ there is an edge pointing to its vertex from each one of the vertices corresponding to the other variables actually intervening in the function $f_{i}$. More details about the associated graph and uniqueness are given in \citep{pearl2009causality}.

	\begin{sidenote}{Extending the model} Some exogenous variables may become endogenous if one extends the (causal) model by adding new features ($\not\in V \cup U$). In some way, the characterisation associated to one feature will depend on the considered model.

	\end{sidenote}

Based on this characterisation, endogenous and exogenous variables are particularly interesting in terms of interactions between variables. In the following section, we characterise some of those interactions.

\subsubsection*{Direct, indirect and confounded interactions between variables}

From the previous section, it appears that variables may interact with each others. An endogenous variable $v_i$ is determined by (potentially) all other variables in $V^{-i}$. It means that some variables in $V^{-i}$ interact with $v_i$ to determine its value. Let us notice that exogenous variables interact with endogenous variables asymmetrically. Indeed, they can influence the value of variables in $V$ but their values can not be determined, as defined, by variables in $V$. 
On the other side, interactions implying endogenous variables can be symmetrical because one endogenous variable $v_i$ may influence and be influenced by the value of another endogenous variable $v_j$.
 
Let us extend the characterisation of interacting variables to include indirect influences of variables. 

\begin{description}
	\item[Intermediate variable] is a variable providing a (causal\footnote{Causality is not specifically addressed in this thesis (see reference text book \citep{pearl2009causality} for more details on causality). Many scientific fields, such as medicine or economy, are however interested in causal mechanisms and study the effect of intermediary variables and confounders (see, e.g.,  \citep{pearl2001direct,pearl2009simpson,deng2013identifiability,ananth2017confounding}).}) link between two other variables\footnote{Such a variable is also known as an intervening, mediating or intermediary variable.}. 
	Let us consider two variables $x$ and $y$. There may be a (causal) path going from $x$ (a cause) directly to $y$ (an effect), or indirectly through some intermediate step(s).  A variable (i.e., the intermediate step) on the pathway from $x$ to $y$ is an \textit{intermediate variable}. An intermediate variable \textit{mediates} the effect of $x$ on $y$. Figure \ref{fig:intermediate} shows an example of model with an intermediate variable $z$ between $x$ and $y$.
	Practically, the starting point (the source) $x$ may be a treatment or an exposure and the ending point $y$ may be a survival status or a disease \citep{deng2013identifiability}. For example, let us associate $x$ with a certain drug that affects the heartbeat, $y$ with the survival status of a patient. One may observe that the drug have a positive effect on the survival of the patient. However, the drug does not directly modify the survival status. Actually, the drug helps to regulate the heartbeat which in turn may improve the survival expectation of the patient. In this example, the heartbeat is an intermediate variable between the treatment and the outcome \citep{deng2013identifiability}. A more trivial example is the relationship between the income and the life expectancy. One can not actually "buy" a longer life but money can contribute to better medical care that help to live longer. In this case, the quality of medical care is the intermediate variable.
	
	From that, we can define the  \textit{direct effect} as the influence of $x$ on $y$ that is not mediated by other variables \citep{pearl2001direct}. Conversely, the \textit{indirect effect} is the influence of $x$ on $y$ that is mediated by other variables.

	\usetikzlibrary{positioning,arrows}
	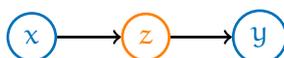
\begin{figure}[htbp]
		\centering
		\begin{tikzpicture}
			\node[circle,draw,RoyalBlue,line width=1pt] (a) at (0,0) {$x$};
			\node[circle,draw,orange,line width=1pt] (b) at (1.5,0) {$z$};
			\node[circle,draw,RoyalBlue,line width=1pt] (c) at (3,0) {$y$};
			\draw[->,line width=1pt] (a) -- (b);
			\draw[->,line width=1pt] (b) -- (c);
		\end{tikzpicture}
		\caption{Example of model with an intermediary variable $z$ in the pathway from the cause $x$ to the effect $y$.}
		\label{fig:intermediate}
	\end{figure}
	
	\begin{sidenote}{Direct and indirect effects simultaneously}
		There may be several paths from $x$ to $y$ and so $x$ may have simultaneously direct and indirect (through intermediates variables) effects on $y$. Figure \ref{fig:direct+indirect} illustrates two paths: a direct one and another that goes through an intermediate variable $z$. In this case, the indirect effect is meant to quantify the influence $x$ through indirect paths only. One may notice that this is not practically possible to block paths (i.e., holding a set of variables constant) such that the direct pathway would be circumvented. More thorough definitions of direct and indirect effects are given in \citep{pearl2001direct}. 		
		\begin{center}
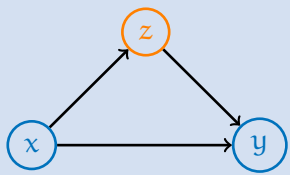

			\centering
			\begin{tikzpicture}
			\node[circle,draw,RoyalBlue,line width=1pt] (a) at (0,0) {$x$};
			\node[circle,draw,orange,line width=1pt] (b) at (1.5,1.5) {$z$};
			\node[circle,draw,RoyalBlue,line width=1pt] (c) at (3,0) {$y$};
			\draw[->,line width=1pt] (a) -- (c);
			\draw[->,line width=1pt] (a) -- (b);
			\draw[->,line width=1pt] (b) -- (c);
			\end{tikzpicture}
			\captionof{figure}{Framework where there is a direct path from $x$ to $y$ and an indirect path from $x$ going through $z$ to $y$.}
			\label{fig:direct+indirect}
		\end{center}
	\end{sidenote}

	\item[Confouding variable or confounder] is a (unstudied, exogenous) variable, say $z$, which influences two other variables $x$ and $y$ (conditionally or not to $x$), and tends to confound our reading of the effect of $x$ on $y$ \citep{pearl2009simpson,li2011ccsvm}. Figure \ref{fig:confounder} gives a possible model where $x$ and $y$ are confounded by a third variable $z$ that influences both $x$ and $y$ (conditionally or not to $x$). As illustrative example, let us examine an example proposed by \cite{kamangar2012confounding}: the risk of Down's syndrome\footnote{The Down's syndrome is a genetic disorder caused by the presence of an extra copy of human chromosome 21 \citep{patterson2009molecular}.} for a newborn baby. Let us associate $x$ with the parity (i.e., mother's number of pregnancies), $y$ with the Down's syndrome (i.e., whether or not the baby is affected by the syndrome), and $z$ with the maternal age (i.e., mother's age when giving birth to the baby). 
	Researches that only consider parity and the risk of Down's syndrome tend to show that the risk for a baby to be affected is associated with the number of his/her mother's pregnancies. For instance, the first-born has lower risk to be affected by the Down's syndrome than the fifth one. However, one needs to take the maternal age into account to determine the real association between the parity and the risk of Down's syndrome. The fifth children of a young 30-year-old mother has actually lower risk of getting affected than the first baby of a 40-year-old mother. In this case, the mother's age is a confounder\footnote{Let us note that a confounder is not on the path and can not be an intermediate variable. The number of pregnancies of a woman does not influence her age.} that accentuates the effect of parity on the risk of being affected by the syndrome.
	Many studies (e.g., in bioinformatics \citep{li2011ccsvm}, in ecology \citep{ewers2006confounding}, in medicine \citep{moller2000comet,del2012recommendations,ananth2017confounding,wu2018fair}) focus on the effect of confouding factors as a way of taking another look at previous observations.

\begin{figure}[htbp]
		\centering
		\begin{tikzpicture}
		\node[circle,draw,RoyalBlue,line width=1pt] (a) at (0,0) {$x$};
		\node[circle,draw,orange,line width=1pt] (b) at (1.5,1.5) {$z$};
		\node[circle,draw,RoyalBlue,line width=1pt] (c) at (3,0) {$y$};
		\draw[->,line width=1pt] (a) -- (c);
		\draw[->,line width=1pt] (b) -- (a);
		\draw[->,line width=1pt] (b) -- (c);
		\end{tikzpicture}
		\caption{Example of model with a confounding variable $z$ for $x$ and $y$.}
		\label{fig:confounder}
	\end{figure}
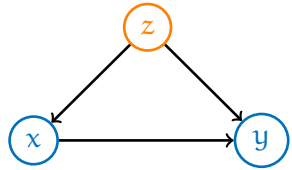

	When the confounding bias comes from contextual elements (e.g., the specific conditions in which an experiment is made), these circumstances are assumed to be encoded by a specific context variable, further referred as a \textit{contextual variable}.
	When taking into account the context, some feature dependencies may be accentuated or toned down while other may be unchanged being non-contextual (see side note about Simpson's paradox on page \pageref{sn:simpson}). 
		
		\end{description}

\begin{sidenote}{Simpson's paradox 
		\hypersetup{citecolor=white}\citep{simpson1951interpretation,pearl2009simpson}\hypersetup{citecolor=webgreen}
	} \label{sn:simpson}
	
	The Simpson's paradox \citep{simpson1951interpretation} refers to a setting where there is a trend in a given population $p$, and, at the same time, this trend disappears or reverses in every subpopulation of $p$. \cite{pearl2009simpson} formalises\footnote{\cite{pearl2009simpson} consciously chooses letters $C$ and $E$ to connote with \textit{cause} and \textit{effect}.} it as follows: 
	
	\begin{displayquote} \textit{
	``An event $C$ increases the probability of $E$ in a given population $p$ and, at the same time, decreases the probability of $E$ in every subpopulation of $p$. In other words, if $F$ and $\neg F$ are two complementary properties describing two subpopulations\footnote{Symbol $\neg$ is the logical \textit{not} operator. $\neg F$ refers to the complementary value of $F$, i.e., $not\ F$.}, we might well encounter the inequalities:
	\begin{align}
	&P(E|C) > P(E|\neg C) \label{eqn:simpson1}\\
	&P(E|C,F) < P(E|\neg C, F)\label{eqn:simpson2}\\
	&P(E|C,\neg F) < P(E|\neg C, \neg F)\label{eqn:simpson3}
	\end{align}
	[...] For example, if we associate $C$ with taking a certain drug, $E$ with recovery, and $F$ with being a female then - under the causal interpretation of Equations \ref{eqn:simpson2} and \ref{eqn:simpson3} - the drug seems to be harmful to both males and females yet beneficial to the population as a whole (Equation \ref{eqn:simpson1}). Intuition deems such a result impossible, and correctly so.''
}\end{displayquote}
	
	Such paradoxical setting - yet surprising - shows that this is possible to have a certain effect (or no effect in case of equality) without considering an external factor (here, $F$) and opposite effects when taking into account this factor (see Chapter~\ref{ch:context} in which variable $F$ will refer to some contextual conditions, i.e., a contextual variable).

\end{sidenote}

\section{Supervised learning}

In all generality, machine learning consists in learning models from data. This learning can be supervised when used data is \textit{labelled}, i.e., where each sample is associated with a label or a specific value. \textit{Supervised learning} thus focuses on learning a model from a learning set (i.e., labelled data) that can be used to predict the label of new (unseen) objects.

A \textit{learning set} $LS$ is a collection of input-output pairs  \citep{liu2012supervised,schrynemackers2015supervised}

$$\mathbf{LS} = \{(\mathbf{x}^1,y^1), \dots, (\mathbf{x^N},y^N)\} \in (\mathcal{X} \times \mathcal{Y})^N$$
where $\mathcal{X}$ and $\mathcal{Y}$ are respectively the input and output spaces, 
$\mathbf{x}^i = \{x^i_1,\dots,x^i_p\}\in \mathcal{X}$ is the vector of the $i^{th}$ sample made of $p$ input variable values and $y^i \in \mathcal{Y}$ is the corresponding output\footnote{Typically, there is only one output to predict as it will be the case in this thesis. However, sometimes applications require to predict several outputs simultaneously (e.g., the full state of a system in power system management). Learning with more than one output is called \textit{multi-output learning} (see, e.g., \citet{joly2017exploiting}).} (label).

From a learning set, a supervised learning algorithm $\mathcal{A}$ aims at finding a function $f: \mathcal{X} \rightarrow \mathcal{Y}$ that expresses the relationship between the inputs and the output. Such a model is able to provide a prediction $f(\mathbf{x})=\hat{y}$ approximating the true value $y$ for a new input vector $\mathbf{x}$.

Section \ref{sec:predictions} focuses on the prediction of a supervised model. {Section \ref{sec:modelassessment} defines the relevant notions of error for model assessment and selection.} Section \ref{sec:otherlearnings} briefly presents other forms of learning but only supervised learning is considered in the rest of this thesis.

\subsection{Predictions} \label{sec:predictions}

The \textit{output variable}, also known as a \textit{target variable}, can be either continuous or discrete and the learning algorithm must take this nature into account. The learnt model thus differs depending on the nature of the variable to predict.
The performance of the model (i.e., the quality of its predictions) is usually measured by means of a loss function $L : \mathcal{Y} \times \mathcal{Y} \rightarrow \mathbb{R}^+$ (see Section \ref{sec:modelassessment}). It provides a numerical score based on the comparison of the predictions with the targeted (actual) values.
\\ 

\noindent Two kinds of models are defined:
\begin{description}
	\item[A classification model] predicts the value of a discrete output. This model typically chooses its prediction from a set of pre-defined values (e.g., usually output values in the learning set) and is thus unable to predict an unseen value (e.g., predicting $yellow$ if only $blue$, $brown$ and $green$ have been observed in $\mathbf{LS}$). 
	A typical loss function for a classification model is the \textit{zero-one loss}  $L^{0-1}(f(\mathbf{x}),y)=\mathbb{1}(f(\mathbf{x})\ne y)$ which is equal to $1$ if the condition is verified (i.e., if the prediction is wrong and differs from the real value) and otherwise equal to zero.

	\item[A regression model] predicts the value of a continuous output. This model is usually able to produce new output values different from those found in the learning set (e.g., by averaging subsets of these latter values). 
	A typical loss function for regression is the \textit{squared error} (SE) $L^{se}(f(\mathbf{x}),y)=(y-f(\mathbf{x}))^2$ which computes the difference between the prediction and the real values exaggerating large deviations by taking the square of the difference. Another common loss function is the absolute error $L^{ae}(f(\mathbf{x}),y)=|y-f(\mathbf{x})|$.
		
\end{description}
The model and the loss functions must be chosen accordingly with the considered application. Let us note that when trying to predict the value of an ordered discrete variable, one can also use a regression model. Given the logical order between the values, even an unseen predicted value can be related to the others.

	\subsection{Model assessment and selection}\label{sec:modelassessment}

	In this section, we focus on the assessment of the prediction performance of a model $f$. Let us consider a set of input variables $X = \{x_1, x_2,\dots , x_p\}$ and an output variable $y$. We denote $P_{x_1,x_2,\dots,x_p,y}$, or equivalently $P_{X,y}$, the joint probability density of variables $x_1, x_2, \dots, x_p, y$ and $P_{y|x_1,x_2,\dots,x_p}$, or equivalently $P_{y|X}$, the conditional density of $y$ given variables $x_1, x_2, \dots, x_p$.
	
	Given a loss function $L$ (e.g., $L^{0-1}$,$L^{se}$,$L^{ae}$), the goal of supervised learning is to find a model $f$ which minimises the prediction error over an independent test set (usually drawn from the same distribution than the learning set), and defined as follows:
	\begin{definition}\label{def:generror}
		The \textbf{generalisation error} (a.k.a., \textbf{test error} or \textbf{expected prediction error}) is the expected\footnote{$\mathbb{E}_{X}\{f(X)\}$ denotes the expectation of a function $f(\cdot)$ with respect to the distribution $P_X$ of a set of random variables $X$ and defined as follows: $$\mathbb{E}_X\{f(X)\} = \sum_{x\in \mathcal{X}} P_X(x) f(x).$$} value of the loss function 
		\begin{eqnarray}\label{eqn:generror}
			Err(f) = \mathbb{E}_{X,y} \{ L(f(X),y)\}
		\end{eqnarray}
		over $X$ and $y$ randomly drawn from their joint distribution $P_{X,y}$.
	\end{definition}
	Given a model $\hat{f}_{\mathbf{LS}}$ learnt from a learning set $\mathbf{LS}$, its generalisation error is 
	\begin{eqnarray}\label{def:generrorls}
	Err (\hat{f}_{\mathbf{LS}}) = \mathbb{E}_{X,y} \{L(\hat{f}_{\mathbf{LS}}(X),y)\}.
	\end{eqnarray}
	Another quantity of interest is the \textit{expected generalisation error} $\mathbb{E}_{\mathbf{LS}}\{Err({\hat{f}_{\mathbf{LS}}})\}$ over random learning sets of size $N$. Typically, $Err (\hat{f}_{\mathbf{LS}})$ is used for model assessment and selection while $\mathbb{E}_{\mathbf{LS}}\{Err({\hat{f}_{\mathbf{LS}}})\}$ is useful to characterise a learning algorithm.\\
	
	From the distribution $P_{X,y}$ of a given problem and for a given loss function, it is actually possible the derive analytically and independently of any learning set the best possible model. First, let us rewrite the generalisation error by conditioning on $X$:
	\begin{eqnarray}Err(f) = \mathbb{E}_{X,y} \{L(X),y)\} = \mathbb{E}_{X} \{ \mathbb{E}_{y|X} \{L(f(X),y) \}\}.
	\end{eqnarray}
	From that, let us define the best possible model as follows:
	\begin{definition}
		The best possible model $f_B$, known as the \textbf{Bayes model}, that minimises $Err(f_B)$ is the one that minimises the inner expectation at each point $\mathbf{x}$ of the input space, that is:
		\begin{eqnarray}
		f_B(X) = \argmin_{y'\in \mathcal{Y}} \mathbb{E}_{y|X} \{L(y',y)\}.
		\end{eqnarray} 
	\end{definition}\noindent
	The generalisation error $Err(f_B)$ of the Bayes model is referred to as the \textit{residual error}.\\
	
	However, the joint distribution $P_{X,y}$ is usually unknown in practice and one needs to estimate the generalisation error from available data. Let us define the \textit{average prediction error} as the average loss over a set $\mathbf{LS}'$ of $N'$ observations (possibly different from the learning set $\mathbf{LS}$ used to learn $\hat{f}_{\mathbf{LS}}$), that is, 
	\begin{eqnarray}
	\widehat{Err}(\hat{f}_{\mathbf{LS}},\mathbf{LS}') = \dfrac{1}{N'} \sum_{(\mathbf{x}^i,y^i) \in \mathbf{LS}'}L(\hat{f}_{\mathbf{LS}}(\mathbf{x}^i),y^i).
	\end{eqnarray}
	When $\mathbf{LS}'$ is identical to the learning set $\mathbf{LS}$ used to learn the model, $\widehat{Err}(\hat{f}_{\mathbf{LS}},\mathbf{LS})$ is known as the \textit{training error} or \textit{empirical risk}. Another approach, known as the \textit{test set method}, consists in dividing the available learning set in two disjoint sets  $\mathbf{LS}_{train}$ (\textit{training set}) and $\mathbf{LS}_{test}$ (\textit{test set}) that are respectively use to learn the model and estimate the generalisation error\footnote{Let us note that $\widehat{Err}(\hat{f}_{\mathbf{LS}_{train}},\mathbf{LS}_{test})$ estimates the generalisation error conditional on the learning set while other approaches such as \textit{cross-validation} actually estimate the expected generalisation error.}. Similarly, the \textit{$K$ fold cross-validation} (CV) consists in dividing the available learning set in $K$ disjoint sets and learn in turn on $K-1$ folds and estimate the error on the remaining fold. When the number of folds $K$ corresponds to the number of samples, this method is then known as the \textit{leave-one-out cross validation}.

\subsection{Other forms of learning}\label{sec:otherlearnings}

Only one facet of machine learning is considered in this thesis, however many other forms of machine learning have been developed. This section is a brief summary of these other forms of learning.

 \paragraph{Unsupervised learning} differs from supervised learning by the absence of (labelled) outputs. Since, there are not outputs or targets to supervise the learning process, this part of machine learning focus on extracting informations from data (see, e.g., \textit{PCA}, \textit{ICA}, \textit{Gaussian mixture models}). Gathering similar samples together by making clusters is one way to get some information from unlabelled data. \textit{Clustering} is one of the most known unsupervised approaches and aims to gather similar samples into \textit{clusters} (see, e.g., \textit{k-means} and \textit{k-metroids}). 

\paragraph{Semi-supervised learning} is halfway between supervised and unsupervised learnings. In this case, some of the samples in the training data are not labelled. Semi-supervised techniques aim at using those additional unlabelled data to better characterise the underlying data distribution than what could be done using only labelled data. \textit{Active learning} is a particular case in which the learning algorithm can interact with the user in order to improve the quality of the learning process, e.g. by asking for a label.

\paragraph{Transfer learning} differs from other kinds of learning by the fact that the underlying distribution is not the same in the training data and in the testing data. Therefore, transfer learning mainly consists in learning a model and then apply it on a different but related application. 

\paragraph{Transductive learning} basically consists in transferring the information retrieved from labelled examples to unlabelled ones (see \citep{bousquet2002transductive} for details). The purpose is not to generate a model but only to label unlabelled samples. \textit{Transfer transductive learning} is a particular case considering transfer learning in a transductive setting \citep{arnold2007comparative,rohrbach2013transfer}.  In this setting, {the learning process can use labelled training data} but the test set is unlabelled on the target domain (which is different than the training domain as in the transfer learning) but can be seen during training.

\paragraph{Reinforcement learning} is apart from previously described forms of learning because it does not only rely on data. Indeed, the goal is not to discover an underlying distribution or mechanism but to determine an optimal control policy (i.e., the strategy that guides (future) chosen actions) from interaction with a system or from observations of a system \citep{ernst2005tree}.

\section{Feature selection for supervised learning}

Machine learning problems in bioinformatics, neuroimaging, engineering, psychology (and many others) have in common that their typical dimensions have increased very significantly within the last two decades \citep{guyon2003introduction,saeys2007review}. Such applications usually go with high-dimensional datasets that are characterised by a large number of input features. Exploring the whole input space in such applications often requires to consider hundreds of thousands of variables. However, many supervised learning techniques were originally designed to cope with only a few tens or hundreds of variables. Furthermore, most practical supervised learning algorithms decrease in
performances when facing many features that are not useful for the prediction of the output \citep{kohavi1997wrappers,blum1997selection}. 

Therefore, reducing the input data dimension, e.g., by selecting a subset of the original features \citep{liu2005toward}, has become a real prerequisite in such applications. In this context, the task of \textit{feature selection} mainly consists in finding as small as possible subsets of features that are \textit{sufficient} to build accurate predictors \citep{guyon2003introduction}), or alternatively in finding the subset of all \textit{informative} features, i.e., all those that are somehow related to the output variable \citep{nilsson2007consistent,paja2018decision}.

In addition to a dimensionality reduction, feature selection comes along with many potential benefits in terms of interpretability and  performances. 

\paragraph{Improving interpretability} Identifying and focusing on (the most) informative or useful features gives insight of the features involved in the underlying mechanism behind the data and facilitates the data understanding and data visualisation \citep{guyon2003introduction,saeys2007review}. 

Unlike feature extraction or construction techniques (e.g, principal component analysis \citep{jolliffe2011principal} or partial least squares \citep{wold1984collinearity}), feature selection preserves original features and thus resulting selected subsets of features remain interpretable by a domain expert \citep{kohavi1997wrappers,saeys2007review,wehenkel2018characterization}.

\paragraph{Increasing performances} 

The dimensionality reduction helps to overcome the curse of dimensionality and to avoid overfitting \citep{guyon2003introduction,saeys2007review}. Smaller data dimensions also reduce storage and computation requirements by providing faster and more cost-effective models \citep{guyon2003introduction, saeys2007review}.
In presence of many input features that are not necessary for predicting the output, performances of most practical algorithms decrease \citep{kohavi1997wrappers} and this can be toned down by removing irrelevant features (i.e., not related at all with the output). For example, feature selection often increases the prediction accuracy in supervised learning and often improves the quality of clustering in the case of unsupervised learning \citep{saeys2007review}.\\

So far, feature selection has been summarized as finding a subset of features. In what follows, we refine this concept by first characterising the relevance of a feature which quantifies the amount of information provided about the target variable. Then we define the usefulness of a feature which is its contribution for a given learning algorithm in prediction accuracy and therefore allows one to define what would be an optimal subset of features.
Then we describe the two flavours of feature selection mentioned in this introduction, namely the \textit{all-relevant} and the \textit{minimal-optimal} problems. While the first problem consists in finding all relevant features in the sense of all features that are somehow related with the output variable, the second problem aims at identifying the smallest subset that yields similar (or better) accuracy performances than any other subset of features. 

In the rest of this section, we review some concepts needed for our later developments while abstracting away from the fact that in practice we need to use a finite (and often small) learning set to identify suitable subsets of features for a given problem. We thus use concepts from probability theory and information theory, such as (conditional) independance, Markov boundary, and mutual information to characterize notions such as the relevance and optimality of input features and subsets of input features in the task of predicting the value of a particular output variable. 

The notions of Markov boundary and redundancy motivate the fact that all relevant features are not necessary to capture all the information about the target output. Some particular settings that limits the feature selection (or the interpretation that can be retrieved from) will also be reviewed in this chapter such as the multiplicity of Markov boundaries, the difficulty to distinguish direct from indirect effects as well as contextual effects.

\subsection{Relevance of features}\label{sec:relevance}

\subsubsection*{Notational conventions} 
\textit{In the present and subsequent sections we use uppercase letters to denote both individual random variables and sets of random variables, and we reserve lower case letters to denote values of variables or configurations of subsets of variables. In order to lighten the presentation, we assume that all considered random variables are discrete unless explicitly specified differently. We denote the joint probability density of variables $X, Y, Z$ by $P_{X,Y,Z}$ and its value for a combination of values of these variables by $P_{X,Y,Z}(x,y,z)$, and by $P_{X,Y |Z}$ (resp. $P_{X,Y |Z}(x,y|z)$) the conditional joint density of $X$ and $Y$ given $Z$ (respectively its value).}\\

Let us denote by $V$ the set of all original input variables, with $|V|=p$, and by
$Y$ the target output variable. Let $V^{-m}$ be the subset of $V$ excluding the input feature $X_m \in V$ (i.e.,$V^{-m}=V\setminus\{X_m\}$). 

One facet of feature selection is concerned about the
identification in $V$ of the (most) relevant variables.  
Many definitions of relevance have been proposed in the literature over the years \citep{gennari1989models,almuallim1991learning,kohavi1997wrappers,blum1997selection,guyon2006introduction} (usually incompatible with each other \citep{kohavi1997wrappers,kursa2011all}).  A common and popular set 
of relevance notions that we retain has been proposed by \cite{kohavi1997wrappers} and is as follows:
\begin{definition}\label{def:relevance}
	A variable $X_m\in V$ is \textbf{relevant} with respect to the output $Y$ iff there exists a subset $B\subset
	V^{-m}$ such that $X_m\nindep Y|B$. A variable is \textbf{irrelevant} if it is
	not relevant.
\end{definition}

In this definition the notation ``$X_m\nindep Y|B$'' indicates (probabilistic) conditional dependence and is equivalent (in the case of discrete variables) to saying that 
$$ \exists b, x_{m}, y : \begin{tabular}[t]{c} such that~ $P_{B}(b) > 0$ \mbox{~and~} \\ $P_{X_{m}, Y |B}(x_{m},y |b) \neq P_{X_{m} |B}(x_{m} |b)P_{Y |B}(y |b).$\end{tabular}$$

When the subset $B$ is empty, features are relevant by themselves:
\begin{definition}\label{def:marginallyrelevant}
	A variable $X_m \in V$ is \textbf{marginally relevant} with respect to the output $Y$ iff $X_m\nindep Y$.
\end{definition}

Relevant variables can be further divided into two categories:
\begin{definition}\label{def:strongrelevance}
	A variable $X_m$ is \textbf{strongly relevant}  with respect to the output $Y$ iff $Y\nindep
	X_m|V^{-m}$.
\end{definition}
\begin{definition}\label{def:weakrelevance}
	A variable $X_m$ is \textbf{weakly relevant} with respect to the output $Y$ if it is
	relevant but not strongly relevant.
\end{definition}

This definition is characterised by two degrees of relevance\footnote{\cite{kohavi1997wrappers} showed that earlier definitions were not consistent to identify relevance in the case of a Correlated XOR problem (i.e., where the target $Y$ is such that $Y = X_1 \oplus X_2$, where $\oplus$ denotes a logical XOR) with five boolean features $X_1,\dots,X_5$ and correlated/redundant features ($X_2$ and $X_4$ that are such that $X_4 = \overline{X_2}$) and that two degrees of relevance are required to achieve that. With respect to $Y$, $X_1$ is a strongly relevant feature, $X_2$ and $X_4$ are weakly relevant features due to their correlation/redundancy and $X_3$ and $X_5$ are irrelevant features.} in order to cope with particular settings such as features that are relevant but not marginally (e.g., a XOR problem) \citep{nilsson2007consistent}. Strongly relevant variables are thus variables that convey
information about the output that no other variable (or combination of
variables) in $V$ conveys \citep{nilsson2007consistent}. 
Figure \ref{fig:relevantsets} is a graphical representation of features in $V$ according to the type of relevance with respect to $Y$. It shows that the subset of relevant features is made of all weakly relevant features and all strongly relevant ones. Let us note that a system can be constructed so that it contains relevant but no strongly relevant features \citep{kursa2011all}.

Alternative, strictly equivalent, definitions of relevance can be formulated using the notion of conditional mutual informations\footnote{See Appendix \ref{app:information_theory}, for notations and definitions of several measures from information theory, including the conditional mutual information.} (see \citep{meyer2008information,louppe2013understanding}):

\begin{definition} \label{def:relevanceMI}
	A variable $X_m\in V$ is \textbf{relevant} to $Y$ iff there exists a subset $B\subset
	V$ such that $I(X_m;Y|B) > 0$. A variable is called \textbf{irrelevant} if it is
	not relevant.
\end{definition}

\begin{definition} \label{def:strongweakrelevanceMI}
	A variable $X_m$ is \textbf{strongly relevant} to $Y$ iff $I(X_m;Y|V^{-m}) > 0$. A variable $X_m$ is \textbf{weakly relevant} if it is
	relevant but not strongly relevant.
\end{definition}

The equivalence between these definitions and Definitions \ref{def:relevance}, \ref{def:strongrelevance}, and \ref{def:weakrelevance}, follows from the equivalence between zero (conditional) mutual information and (conditional) independence\footnote{$X\nindep Y|Z$ and  $X\indep Y|Z$ are equivalent to  $I(X;Y|Z) > 0$ and $I(X;Y|Z)=0$ respectively \citep{cover2012elements}.}.

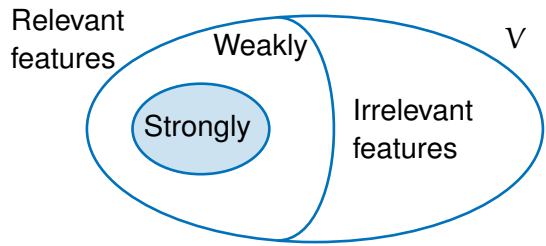
\begin{figure}[htbp]
	\centering
	\begin{tikzpicture}[line width=0.35mm, color=RoyalBlue]
	\draw [] (0,0) ellipse (3cm and 1.5cm);
	\coordinate (A) at (-0.5,+1.48);
	\coordinate (B) at (-0.5,-1.48);
	\coordinate (u) at (1,0);
	\coordinate (v) at (1,0);
	\draw[] (A) .. controls +(u) and +(v) .. (B);
	\draw [] (-1.5,0) ellipse (0.9cm and 0.6cm);
	\fill[opacity=0.2] (-1.5,0) ellipse (0.9cm and 0.6cm);
	\node[black, text width=1cm] at (3,1.2) {$V$};
	\node[black, text width=2cm] at (1.5,0) {Irrelevant features};
	\node[black, text width=2cm] at (-3,1.2) {Relevant features};
	\node[black, text width=2cm] at (-0.32,1.08) {Weakly};
	\node[black, text width=2cm] at (-1.25,0) {Strongly};
	
	\end{tikzpicture}
	\caption{Graphical decomposition of the set of input variables $V$ according to the feature relevance. The subset of relevant features can be further refined into two degrees of relevance: weak and strong relevances.}
	\label{fig:relevantsets}
\end{figure}

\subsubsection{On the quantitative measure of irrelevance}\label{sec:quantifirrelevance}

In relation to Definition \ref{def:relevanceMI}, several authors (eg., \citep{bell2000formalism,guyon2006introduction,meyer2008information}) proposed to use the notion of (conditional) mutual information to assess the level of relevance/irrelevance of a feature.

For example, \cite{guyon2006introduction} define a notion of ``approximate irrelevance'' as follows:
\begin{definition}\label{def:epsilon-irrelevance}
	A variable $X_m$ is \textbf{approximately irrelevant at level $\epsilon$} if for all\footnote{including the empty subset and the set $V^{-m}$ itself.} subsets of features $B \subseteq V^{-m}$, $I(X_m;Y|B) \leq \epsilon$.
\end{definition}

They further say that a variable $X_m$ is \textbf{surely irrelevant} if it is \textbf{approximately irrelevant at level $\epsilon=0$}. Notice that this notion is equivalent to the previously introduced notion of irrelevance (Definition \ref{def:relevanceMI}).

Let us finally mention that \cite{guyon2006introduction} claim that one single notion of (ir)relevance is enough if one simultaneously considers the notion of \textit{sufficient feature subset}, while \cite{kohavi1997wrappers} preferred  two degrees to characterise relevance. In addition to \citep{kohavi1997wrappers,guyon2006introduction}, we also further refer to \citep{bell2000formalism} for a review on relevance.

\subsection{Markov boundary} \label{sec:MB}

In this subsection, we introduce the notions of \textit{Markov blanket} and \textit{Markov boundary} that will be of interest in the rest of this chapter.

Let us consider a set of features $V$ and a target variable $Y$, Markov blanket and Markov boundaries are defined as follows \citep{pearl1988probabilistic,tsamardinos2003towards,statnikov2013algorithms}:

\begin{definition}\label{def:markovblanket}
	A \textbf{Markov blanket} of variable $Y$ relative to $V$ is a subset $M \subseteq V$ such $Y \indep V \setminus M | M$.
\end{definition}

\begin{definition}\label{def:markovboundary}
 A \textbf{Markov boundary} of variable $Y$ relative to $V$ is a Markov blanket of $Y$ relative to $V$ such that no proper subset of $M$ is also a Markov blanket of $Y$ relative to $V$.
\end{definition}

Trivially, the set of all input features $V$ is a Markov blanket of $Y$ and a given Markov blanket can be arbitrarily extended by adding features (even irrelevant ones with respect to $Y$) \citep{statnikov2013algorithms}. That is why minimal Markov blankets - Markov boundaries - are of greater interest in the context of feature selection\footnote{In computational biology, Markov boundaries are also known as (molecular) \textit{signatures}, which are minimal subset of features that are of best interest to predict the value (i.e., the phenotypic response) of a target variables\citep{statnikov2010analysis,geurts2011exploring}. In this context, the non-uniqueness of Markov boundaries is known as \textit{signature multiplicity}. Those two concepts are equivalent as it has been shown that maximally predictive and non-redundant molecular signatures are the Markov boundaries and vice-versa \citep{statnikov2010analysis}.} \citep{margaritis2000bayesian,tsamardinos2003towards,aliferis2003hiton,hardin2004theoretical,nilsson2007consistent,statnikov2013algorithms}. Figure \ref{fig:relevantsets-MB} shows how Markov boundaries relate with subsets of relevant features. As shown formally below, any  Markov blanket (and hence any Markov boundary) includes all strongly relevant features, and no Markov boundary can contain any irrelevant feature. On the other hand, some weakly relevant features may belong to some Markov boundaries. 

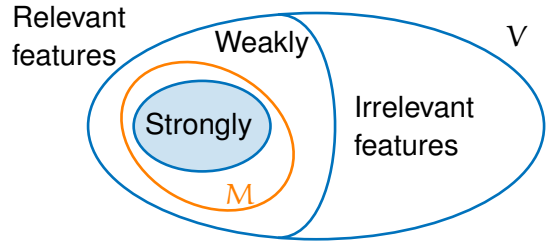
\begin{figure}[htbp]
	\centering
	\begin{tikzpicture}[line width=0.35mm, color=RoyalBlue]
	\draw [] (0,0) ellipse (3cm and 1.5cm);
	\coordinate (A) at (-0.5,+1.48);
	\coordinate (B) at (-0.5,-1.48);
	\coordinate (u) at (1,0);
	\coordinate (v) at (1,0);
	\draw[] (A) .. controls +(u) and +(v) .. (B);
	\draw [] (-1.5,0) ellipse (0.9cm and 0.6cm);
	\fill[opacity=0.2] (-1.5,0) ellipse (0.9cm and 0.6cm);
	\node[black, text width=1cm] at (3,1.2) {$V$};
	\node[black, text width=2cm] at (1.5,0) {Irrelevant features};
	\node[black, text width=2cm] at (-3,1.2) {Relevant features};
	\node[black, text width=2cm] at (-0.32,1.08) {Weakly};
	\node[black, text width=2cm] at (-1.25,0) {Strongly};
	\draw [rotate=60,color=orange] (-0.83,1.17) ellipse (0.9cm and 1.2cm);
	\node[orange, text width=2cm] at (-0.2,-0.9) {$M$};
	\end{tikzpicture}
	\caption{Graphical decomposition of the set of input variables $V$ according to the feature relevance. A Markov boundary $M$ in all generality gathers all strongly relevant features and some weakly relevant ones.}
	\label{fig:relevantsets-MB}
\end{figure}

A target variable $Y$ may have several Markov boundaries, for example because of redundancies between features \citep{statnikov2010analysis,geurts2011exploring,statnikov2013algorithms}. However, the intersection of all Markov boundaries always includes the set of strongly relevant features.

Indeed we have the following property \citep{tsamardinos2003towards}:

\begin{property}
	Let us consider a set $V$ of input features and an output $Y$. If $M$ is Markov blanket of $Y$, and $X_{m}$ is a strongly relevant feature, then $X_{m}\in M$. Therefore, any Markov boundary of $Y$, as well as the intersection of all these Markov boundaries, contains all strongly relevant features.
\end{property}

\begin{proof}	
	Consider  some subset $M$ of $V$ which is a Markov blanket of $Y$; thus 
	\begin{equation}Y \indep V \setminus M | M.\label{truc}\end{equation}
	Then consider some variable $X_m \in V\setminus M$; thus   \eqref{truc} may be rewritten as
	\begin{equation}Y \indep (\{X_m\} \cup (V \setminus (M \cup \{X_m\}))) | M.\label{baz}\end{equation}	
	The weak union property ($X\indep (Y\cup W) | Z \Rightarrow X\indep Y| (Z \cup W)$, see side note on page \pageref{sn:distributionproperties}) applied to \eqref{baz} yields 
	\begin{equation} 
 Y \indep X_m | M \cup (V\setminus (M\cup \{X_m\})), \mbox{~i.e.~} Y\indep X_m | V \setminus \{X_m\}.\end{equation}
Therefore $X_{m}$ is not strongly relevant.
\end{proof}
Furthermore, a Markov boundary of $Y$ never contains irrelevant features:
\begin{property}
Let us consider a set $V$ of input features and an output $Y$. If $M$ is a Markov boundary of $Y$, and $X_{i}$ is an irrelevant input feature, then $X_{i}\not\in M$. 
\end{property}
\begin{proof}
	Consider a Markov blanket $M$ of $Y$ containing an irrelevant variable $X_{i}$. Then, rewriting $M$ as $M^{-i} \cup \{X_i\}$, we have
	\begin{eqnarray}\label{eqn:demoirrMB-MBi}
	Y \indep V \setminus (M^{-i} \cup \{X_i\}) | (M^{-i} \cup \{X_i\}).
	\end{eqnarray}
Since $X_i$  is irrelevant with respect to $Y$, we also have
	\begin{eqnarray}\label{eqn:demoirrMB-irrelevance}
		Y \indep X_i | M^{-i}.
	\end{eqnarray}
Using the contraction property (i.e., $X\indep Y |Z \textit{ and } X\indep W |(Z\cup Y) \Rightarrow X \indep (Y\cup W)|Z$, see side note on page \pageref{sn:distributionproperties}) between Equations \ref{eqn:demoirrMB-irrelevance} and \ref{eqn:demoirrMB-MBi}, we thus have 
	\begin{eqnarray}
	&&Y \indep \{X_i\} \cup (V \setminus (M^{-i} \cup \{X_i\})) |M^{-i}\\
	\Leftrightarrow && Y \indep V \setminus M^{-i} |M^{-i}. \label{eqn:demoirrMB-imp}
	\end{eqnarray}
	Equation \ref{eqn:demoirrMB-imp} implies that $M^{-i}$ is also a Markov blanket of $Y$, so that $M$ can not be a Markov boundary of $Y$. 
\end{proof}

Following \citep{nilsson2007consistent}, let us define a {\textit{strictly
positive density} $P_{V}$ over the full set of input variables $V$ as a density such
that $P_{V}(v) > 0$ for all configurations $v$  of the variables
in $V$.} When {$P_{V}$}  is strictly positive\footnote{Equivalently, for any distributions satisfying the intersection property, there is a unique Markov boundary \citep{pearl1988probabilistic,statnikov2013algorithms}. Strictly positive distributions always verifies the intersection property \citep{nilsson2007consistent}.This also holds for faithful distributions (to some Bayesian network) satisfying the intersection property and being strictly positive \citep{tsamardinos2003towards,tsamardinos2003time,aliferis2010local,statnikov2013algorithms}.} (see side note on page \pageref{sn:distributionproperties}), the Markov boundary {of $Y$} is unique and it contains {only}  strongly relevant features \citep{tsamardinos2003towards,hardin2004theoretical,nilsson2007consistent,sutera2018random}.

Figure \ref{fig:relevantsets-MBfaithful} illustrates the relation between the concept of Markov boundary and relevance. Figure \ref{fig:relevantsets-MBfaithful(a)} shows that the (unique) Markov boundary coincides with the set of strongly relevant features when the distribution verifies the intersection property (proof in \citep[Theorem 10]{nilsson2007consistent}). Figure \ref{fig:relevantsets-MBfaithful(b)} illustrates the fact  that when the Markov boundary is not unique, the intersection of all Markov boundaries (or blankets) yields the set of strongly relevant features \citep{tsamardinos2003towards}.
The composition property prevents features to be irrelevant for some B but relevant when considered together for the same B.

\begin{figure}[htbp]
	\centering
	\subfloat[Distribution satisfying the intersection property]{
	\begin{tikzpicture}[line width=0.35mm, color=RoyalBlue]
	\draw [] (0,0) ellipse (3cm and 1.5cm);
	\coordinate (A) at (-0.5,+1.48);
	\coordinate (B) at (-0.5,-1.48);
	\coordinate (u) at (1,0);
	\coordinate (v) at (1,0);
	\draw[] (A) .. controls +(u) and +(v) .. (B);
	\draw [] (-1.5,0) ellipse (0.9cm and 0.6cm);
	\fill[opacity=0.2] (-1.5,0) ellipse (0.9cm and 0.6cm);
	\node[black, text width=1cm] at (3,1.2) {$V$};
	\node[black, text width=2cm] at (1.5,0) {Irrelevant features};
	\node[black, text width=2cm] at (-3,1.2) {Relevant features};
	\node[black, text width=2cm] at (-0.32,1.08) {Weakly};
	\node[black, text width=2cm] at (-1.25,0) {\hspace{0.1em}Strongly\\ \hspace{0.5em} \textcolor{orange}{= $M$}};
	
	\draw [orange,dashed] (-1.5,0) ellipse (0.9cm and 0.6cm);
	\end{tikzpicture}\label{fig:relevantsets-MBfaithful(a)}}
	\subfloat[Distribution \textbf{not} satisfying the intersection property]{
	\def\firstcircle {(0.2,0)    circle (1 cm)}
	\def\secondcircle{(45:0.5cm) circle (1 cm)}
	\def\thirdcircle {(0:0.5cm)  circle (1 cm)}
	\def\fourthcircle{(-45:0.5cm)circle (1 cm)}
	\begin{tikzpicture}[line width=0.35mm, color=RoyalBlue]
	\draw [] (0,0) ellipse (3cm and 1.5cm);
	\coordinate (A) at (-0.5,+1.48);
	\coordinate (B) at (-0.5,-1.48);
	\coordinate (u) at (1,0);
	\coordinate (v) at (1,0);
	\draw[] (A) .. controls +(u) and +(v) .. (B);
	\node[black, text width=1cm] at (3,1.2) {$V$};
	\node[black, text width=2cm] at (1.5,0) {Irrelevant features};
	\node[black, text width=2cm] at (-3,1.2) {Relevant features};
	
	\begin{scope}[scale=1.0]
	\node[black, text width=2cm] at (-0.18,1.12) {\small Weakly};
	\node[black, text width=2cm] at (-1.17,0) {\small Strongly};
	\end{scope}

	\begin{scope}[shift={(-1.82cm,0)},scale=0.83]
	\draw [] (0.33,0) ellipse (0.8cm and 0.5cm);
	\fill[opacity=0.2] (0.33,0) ellipse (0.8cm and 0.5cm);
	\draw[RedOrange] \firstcircle;
	\node[RedOrange, text width=2cm] at (-0.23,0) {\small $M_1$};
	\draw[orange] \secondcircle;
	\node[orange, text width=2cm] at (2.5,0.9) {\small $M_2$};
	\draw[ForestGreen] \thirdcircle;
	\node[ForestGreen, text width=2cm] at (2.75,0) {\small $M_3$};
	\draw[Orchid] \fourthcircle;
	\node[Orchid, text width=2cm] at (2.5,-0.9) {\small $M_4$};
	\end{scope}
	\end{tikzpicture}\label{fig:relevantsets-MBfaithful(b)}}	
	\caption{Correspondance between relevance and Markov boundaries in case of a distribution (a) satisfying the intersection property with a unique Markov boundary $M$ and (b) not satisfying the intersection property with four Markov boundaries $M_1,M_2,M_3,M_4$ whose the intersection is the set of strongly relevant features.}
	\label{fig:relevantsets-MBfaithful}
\end{figure}
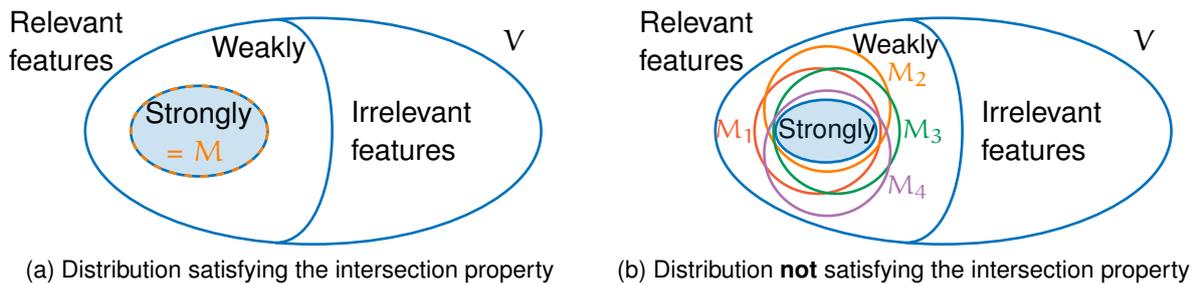

\begin{sidenote}{Distribution properties}
	\label{sn:distributionproperties}
Let $X$,$Y$,$Z$ and $W$ be any four subsets of features from $V$ and $T_i \in V$ be a single variable. Any distribution verifies the following properties \citep{pearl1988probabilistic,nilsson2007consistent,statnikov2013algorithms}:
\begin{enumerate}
	\item[$\bullet$] \textbf{Symmetry:} $X\indep Y | Z \Leftrightarrow Y \indep X | Z$,
	\item[$\bullet$] \textbf{Decomposition:} $X\indep(Y\cup W) | Z \Rightarrow X \indep Y | Z \textit{ and } X\indep W | Z$,
	\item[$\bullet$] \textbf{Weak union:} $X\indep (Y\cup W) | Z \Rightarrow X\indep Y| (Z \cup W)$,
	\item[$\bullet$] \textbf{Contraction:} $X\indep Y |Z \textit{ and } X\indep W |(Z\cup Y) \Rightarrow X \indep (Y\cup W)|Z$,
	\item[$\bullet$] \textbf{Self-conditioning:} $X\indep Z |Z$.	
\end{enumerate}
Strictly positive distributions (P) also satisfy \citep{pearl1988probabilistic, nilsson2007consistent,statnikov2013algorithms}:
\begin{enumerate}
	\item[$\bullet$] \textbf{Intersection:} $X\indep Y | (Z\cup W) \textit{ and } X\indep W|(Z\cup Y) \Rightarrow X \indep (Y \cup W)|Z$.
\end{enumerate}
\cite{nilsson2007consistent} also consider two additional classes of distributions:  strictly positive distributions that satisfy the composition property (PC):
\begin{enumerate}
	\item[$\bullet$] \textbf{Composition:} $X\indep Y|Z \textit{ and } X\indep W |Z \Rightarrow X\indep (Y\cup W)|Z$,
\end{enumerate}
and strictly positive distributions that satisfy both composition and weak transitivity (PCWT): 
\begin{enumerate}
	\item[$\bullet$] \textbf{Weak transitivity:} $X\indep Y |Z \textit{ and }  X\indep Y | R\cup \{T_i\} \Rightarrow X\indep \{T_i\} |Z      \textit{ and } \{T_i\} \indep Y | Z$.
\end{enumerate}
A more restricted class of distributions is strictly positive distributions that are DAG-faithful (PD) (i.e., faithful to some Bayesian network \citep{tsamardinos2003towards,statnikov2013algorithms}). PD is included in PCWT \citep{nilsson2007consistent} and verifies all its properties. While PD distributions offer some information about the causal structure (i.e., the Markov boundary of feature $Y$ is the set of direct causes, direct effects, and direct causes of direct effects (i.e., spouses) of $Y$), PCWT (including in particular jointly Gaussian distributions \citep{studeny2006probabilistic}) is claimed to be more realistic \citep{nilsson2007consistent}.
\end{sidenote}

\subsection{Redundancy} \label{sec:background-redundancy}

In many applications, and in particular in high-dimensional settings, the information about the output $Y$ to predict is shared and sometimes replicated among several input variables.
In neuroimaging for instance, one often observes a strong spatial correlation between voxels (i.e., pixels in 3D image) implying that neighbouring voxels are likely to be exchangeable when it comes to predict the output class \citep{wehenkel2018random}. The fact that the same information about the output is held by several features is called \textit{redundancy}. It can be \textit{total}, i.e., several features carry exactly the same information about the output and are exchangeable, or \textit{partial}, i.e., several features carry some of the same information about the target. 

From a feature selection point of view, features that share similar information about the target, such as neighbouring voxels in neuroimaging, are relevant but not necessarily useful together for a learning algorithm. Taking into account redundancy in feature selection may thus help to reduce the number of selected features.

In this section, we first review and refine formal definitions of feature redundancy
, propose a quantitative measure of redundancy
, and discuss the relation between redundancy and relevance 
and between redundancy and correlation 
.

\subsubsection{\citet{yu2004efficient}'s redundancy}\label{sec:yuetalredundancy}

Using the concept of Markov blankets, \citet{yu2004efficient} define the following notion of redundancy:

\begin{definition}\label{def:redundancy}
	Let us consider a subset $B\subset V$ of features and a variable $X_i \in V \setminus B$. We say that $X_i$ is {\textbf{redundant} to the set $B$ with respect to the target $Y$} iff (i) $X_i$ is weakly relevant with respect to $Y$ and (ii) there exists a subset $M \subseteq B$ such that $X_i \indep (\{Y\} \cup (V \setminus (M \cup \{X_i\})) | M$.
\end{definition}

Condition (i) excludes irrelevant features from consideration, since they are anyhow not useful to predict $Y$. Condition (ii) implies that $B$ contains a Markov blanket $M$ of variable $X_i$ relative to all other features including the target $Y$. This subset of variables can thus replace $X_i$ without loss of information, both about $Y$ and about any variables from $V$ not in the set $B$. According to this definition, and as expected, a strongly relevant feature can thus never be redundant to any subset because it conveys information about $Y$ that can not be found in other features and thus condition (ii) can not be satisfied. 

This definition was proposed by \citet{yu2004efficient} to identify features that can be safely ignored when $B$ is an intermediate approximate solution in the search for a Markov boundary of the target $Y$. A relaxed definition could have been adopted by changing condition (ii) simply into $Y\indep X_i | B$, but this would have excluded features $X_i$ that might bring complementary information about $Y$ with respect to $B$ when combined with some other features from $V\setminus B$.

\subsubsection{Total redundancy}\label{sec:totalredundancy}

\citet[Definition 7.1]{louppe2014understanding} defines
  \textit{totally redundant features} as pairs of features $X_i$ and
  $X_j$ such that\begin{eqnarray}\label{eqn:totallyred-gilles}
  H(X_i|X_j)=H(X_j|X_i)=0.
\end{eqnarray}
Note that an asymmetrical version\footnote{$X_i$ is defined as redundant with respect to $X_j$ if $H(X_i|X_j)=0$, which does not imply $H(X_j|X_i)=0$ and the redundancy of $X_j$ with respect to $X_i$.} of Equation \ref{eqn:totallyred-gilles} has also been proposed to define redundancy (e.g., \citep{meyer2008information}). One limitation of these definitions is that they do not involve the output variable $Y$. Therefore, based on \citep[Lemma 7.1]{louppe2014understanding}, let us define \textit{total redundancy with respect to $Y$} as follows:

\begin{definition}\label{def:totalred}
  $X_i$ and $X_j$ are \textbf{totally redundant} 
  variables with respect to the target $Y$  if for any conditioning set $B\subseteq V^{-i,j}(= V \setminus \{X_{i},X_{j}\})$, we have:
  \begin{eqnarray}\label{def:totalred:eqn1}
    Y\indep X_i|B\cup\{X_j\} &\mbox{ and }&Y\indep X_j|B\cup\{X_i\}
  \end{eqnarray}
\end{definition}

Equation \ref{def:totalred:eqn1} states that $X_i$ provides no additional information about the output once $X_{j}$ is given, whatever the context $B$, and vice versa. A direct consequence of this definition is that for all $B\subseteq V^{-i,j}$, we have\footnote{This is an immediate consequence of Equations \ref{eqn:proofpartially:red-chainrule1}-\ref{eqn:proofpartially:red-chainrule2} and the fact that $Y\indep X_i|B\cup\{X_j\} \Rightarrow I(X_i;Y|B\cup\{X_j\})=0$ and $Y\indep X_j|B\cup\{X_i\} \Rightarrow I(X_j;Y|B\cup\{X_i\})=0$.}
\begin{equation}\label{def:totalred:eqn2}
  I(X_i;Y|B) = I(X_j;Y|B),
\end{equation}
ie., $X_i$ and $X_j$ are equally informative about $Y$ in all circumstances. Total redundancy defines the ability of one feature to replace entirely the other in any context without loss of information about the output. Two totally redundant features are such that one is irrelevant iff the other is irrelevant. Obviously, none of them can be strongly relevant since Equation \ref{def:totalred:eqn1} for $B=V^{-i,j}$ gives $Y \indep X_i|V^{-i,j}\cup\{X_j\} \Leftrightarrow Y\indep X_i|V^{-i}$ and also $Y\indep X_j|V^{-j}$.

Note that Equation \ref{eqn:totallyred-gilles}, which implies that $X_{i}$ and $X_{j}$ are copies of each other, implies Definition \ref{def:totalred} (see \cite[Lemma 7.1]{louppe2014understanding} for a proof and \cite[Equations 3.7-3.9]{meyer2008information} for a proof in the asymmetrical case) but the converse is not true. Two features might be totally redundant with respect to the target, while not explaining perfectly each other. As defined, total redundancy and \citet{yu2004efficient}'s redundancy (Definition \ref{def:redundancy}) are also different concepts. Given two totally redundant features $X_i$ and $X_j$, we do not have necessarily that $X_i$ is redundant with respect to the subset $B=\{X_j\}$ according to Definition \ref{def:redundancy}. There might indeed exist a distinct feature $X_k\in V$ such that $X_i \nindep X_k| X_j$ and thus condition (ii) in Definition \ref{def:redundancy} might not be satisfied. It would be always satisfied however if using \citet{louppe2014understanding}'s definition of total redundancy (Equation \ref{eqn:totallyred-gilles}).

\subsubsection{Asymmetric and partial redundancies}\label{sec:partialredundancy}

In this section, we propose and discuss two relaxations of the definitions of redundancy given in the two previous sections.

First, while total redundancy as defined in Definition \ref{def:totalred} is symmetric, one can also define total redundancy in an asymmetric way:
\begin{definition}\label{asymmetrictotallyredundant}
$X_i$ is \textbf{totally redundant} to $X_j$ with respect to $Y$ if $\forall B\subseteq V^{-i,j}$, $X_i\indep Y|B\cup X_j$.
\end{definition}
In other words, $X_i$ is totally redundant to $X_j$ if it never brings any additional information about $Y$ when $X_j$ is known. $X_i$ and $X_j$ are thus totally redundant if they are totally redundant to each other.

\textit{Total} redundancy means that $X_i$ is always useless for predicting the output when $X_j$ is known. A notion of \textit{partial} redundancy could also be defined that relaxes this constraint.
\begin{definition}\label{partiallyredundant}
  $X_i$ is \textbf{partially redundant} to $X_j$ with respect to $Y$ if (i) $\exists B\subseteq V^{-i,j}$ such that $X_i\nindep Y|B\cup X_j$ and (ii) $\forall B\subseteq V^{-i,j}$ such that $X_i\nindep Y|B\cup X_j$:
  \begin{equation}\label{eqn:partialredundancy}
    I(X_i;Y|B) > I(X_i;Y|B\cup \{X_j\}).
  \end{equation}
\end{definition}
Condition (i) excludes $X_i$ from being totally redundant to $X_j$. Condition (ii) means that the information that $X_i$ brings about the output is always reduced when $X_j$ is known. Having instead $I(X_i;Y|B) < I(X_i;Y|B\cup \{X_j\})$ would mean that $X_i$ is more complementary than redundant to $X_j$. Note that the equality is impossible since $X_i\nindep Y|B\cup X_j$ implies that $I(X_i;Y|B\cup \{X_j\})>0$.

Interestingly, Definition \ref{partiallyredundant} implies that $X_i$ and $X_j$ are both relevant to $Y$. 
 \begin{property}
 	If $X_i$ is partially redundant to $X_j$ with respect to $Y$, then $X_i$ and $X_j$ are both relevant with respect to the output $Y$. 
  \end{property}
\begin{proof}
  By definition of partial redundance, there exists at least one $B$ such that $X_i\nindep Y|B\cup X_j$. For one such $B$, condition (ii) implies that:
	\begin{eqnarray} \label{eqn:proofpartially:red-partially}
	I(X_i;Y|B) > I(X_i;Y|B\cup \{X_j\}) > 0.
	\end{eqnarray}
	From Equation \ref{eqn:proofpartially:red-partially}, we directly have that
	\begin{eqnarray}
		I(X_i;Y|B) > 0
	\end{eqnarray}
	implying that $X_i$ is relevant with respect to $Y$.
	
	Then, the first inequality of Equation \ref{eqn:proofpartially:red-partially} is equivalent to 
	\begin{eqnarray} \label{eqn:proofpartially:red-first}
	I(X_i;Y|B) - I(X_i;Y|B\cup \{X_j\}) > 0.
	\end{eqnarray}
	The chain rule ($I(X_1,X_2,\dots,X_n;Y) = \sum_{i=1}^n I(X_i;Y|X_{i-1},\dots,X_1)$) applied to the mutual information between both features $X_i,X_j$ and $Y$ yields
	\begin{eqnarray}
	I(X_i,X_j;Y|B) &=& I(X_i;Y|B) + I(X_j;Y|B\cup\{X_i\}) \label{eqn:proofpartially:red-chainrule1}\\
	&=& I(X_j;Y|B) +   I(X_i;Y|B\cup\{X_j\}) \label{eqn:proofpartially:red-chainrule2}
	\end{eqnarray}
	where Equations \ref{eqn:proofpartially:red-chainrule1} and \ref{eqn:proofpartially:red-chainrule2} depend on the order in which $X_i$ and $X_j$ are used.
	By rearranging terms in \ref{eqn:proofpartially:red-chainrule1} and \ref{eqn:proofpartially:red-chainrule2}, we have 
	\begin{eqnarray}
	I(X_i;Y|B)  - I(X_i;Y|B\cup\{X_j\})  =  I(X_j;Y|B)   -  I(X_j;Y|B\cup\{X_i\}).
	\end{eqnarray}
	Since the left member is strictly positive given Equation \ref{eqn:proofpartially:red-first}, we thus have
	\begin{eqnarray} \label{eqn:proofpartially:posxj}
		I(X_j;Y|B)   -  I(X_j;Y|B\cup\{X_i\}) > 0
	\end{eqnarray}
	which implies that  $I(X_j;Y|B) > 0$ because $I(X_j;Y|B\cup\{X_i\}) \ge 0$ (positivity of conditional mutual information) and $I(X_j;Y|B)  > I(X_j;Y|B\cup\{X_i\})$. 
	Therefore $X_j$ is also relevant with respect to $Y$.  
	
\end{proof}

The proof of the previous theorem shows that Equation \ref{eqn:partialredundancy} is equivalent to Equation \ref{eqn:proofpartially:posxj}. In consequence, if $X_j$ reduces the information brought by $X_i$ about $Y$, then $X_i$ also reduces the information brought by $X_j$ about $Y$. Nevetheless, partial redundancy is not symmetric because the sets $B$ such that $X_i\nindep Y|B\cup X_j$ do not necessarily coincide with the sets $B$ such that $X_j\nindep Y|B\cup X_i$.

\subsubsection{Quantitative measure of redundancy}\label{sec:quantitative}


A measure of redundancy among $p$ random variables $X_1,\dots,X_p$ can be defined as follows (see, e.g.,  \citep{mcgill1954multivariate,watanabe1960information,wienholt1996determine,jakulin2003quantifying,meyer2008information}):
	\begin{eqnarray}\label{eqn:redmcgill}
	R(X_1;X_2;\dots;X_p) = \sum_{i=1}^{p} H(X_i) - H(X_1,X_2,\dots,X_p)
	\end{eqnarray}
where $H(X_i)$ and $H(X_1,X_2,\dots,X_p)$ are respectively the entropy of $X_i$ and the joint entropy of $X_1, X_2, \dots, X_p$ (see Appendix \ref{app:information_theory}). However, like total redundancy (Definition \ref{def:totalred}), this measure does not involve the output variable
 
	$Y$ \citep{meyer2008information}. Therefore, following the use of $I(X_m;Y|B)$ to quantify feature relevance (see Section \ref{sec:quantifirrelevance}), one could use similarly \textit{multivariate mutual information} \citep{mcgill1954multivariate} to quantify redundancy.

\textit{Multivariate mutual information} is usually defined as follows : 
\begin{eqnarray} \label{eqn:multivariateMI}
I(X;Y;Z) &= & I(X;Y,Z) -  I(X;Z|Y) - I(X;Y|Z)
\end{eqnarray}
It can be shown that $I(X;Y;Z)$ is symmetric with respect to a permutation of the roles of $X,Y,Z$ (e.g., $I(X;Y;Z) = I(X;Z;Y)$) and, applying the chain rule on $I(X;Y,Z)$, that 
\begin{eqnarray}\label{eqn:decomp-multivariateMI}
I(X;Y;Z)=I(X;Y)-I(X;Y|Z). \end{eqnarray}
Unlike standard (conditional) mutual information, $I(X;Y;Z)$ can be negative as $I(X;Y)$ can be increased by conditioning on $Z$. \cite{mcgill1954multivariate} sees $I(X;Y;Z)$ (Equation~\ref{eqn:decomp-multivariateMI}) as the the gain (or loss) of common information between two variables (i.e., $X$ and $Y$) due to the additional knowledge of a third one (i.e., $Z$). A negative value is therefore due to an increase of the dependence between $X$ and $Y$  knowing $Z$. Noting the symmetry, $I(X;Y;Z)$ (Equation~\ref{eqn:multivariateMI}) can also be seen intuitively as a generalisation of the mutual information common to three random variables \citep{cover2012elements}.

The degree of redundancy between two features (in a given context $B$)
could then be defined as follows:
\begin{definition}\label{def:measureofred}
	For a given conditioning set $B \subseteq V^{-i,j}$, the \textbf{degree of redundancy} between $X_i$ and $X_j$ with respect to $Y$ is measured by
	\begin{eqnarray}
	I(X_i;X_j;Y|B) = I(X_i;Y|B) - I(X_i;Y|B\cup\{X_j\}).
	\end{eqnarray}	
\end{definition}

$I(X_i;X_j;Y|B)$  has several desirable properties as a measure of the degree of redundancy:
\begin{itemize}
\item It is positive as soon as $I(X_i;Y|B)>I(X_i;Y|B\cup\{X_j\})$ or equivalently $I(X_i;Y|B)>I(X_i;Y|B\cup\{X_j\})$, which corresponds precisely to condition (ii) of partial redundancy (Definition \ref{partiallyredundant}).
\item It is equal to zero when $I(X_i;Y|B)=I(X_i;Y|B\cup \{X_j\})$, which corresponds to $X_j$ not impacting the information brought by $X_i$ about the output.
\item It is negative when $X_i$ and $X_j$ are complementary. For instance, in the case of a XOR problem,  $X_i$ and $X_j$ are marginally irrelevant but together perfectly explain the output $Y$. Mathematically, we have in this case $I(X_i;Y) = I(X_j;Y) = 0$ and $I(X_i;Y|X_j) = I(X_j;Y|X_i) = H(Y)$, which is strictly greater than 0 unless $Y$ is constant. Therefore, $I(X_i;Y|X_j)  > I(X_i;Y)$ and thus $I(X_i;X_j;Y)<0$.
\item It is maximal and equal to $I(X_i;Y|B)=I(X_j;Y|B)$ when $X_i$ and $X_j$ are totally redundant, as in this case $I(X_i;Y|B\cup\{X_i\})=I(X_i;Y|B\cup\{X_i\})=0$.
\end{itemize}

Note that several authors have proposed to use the opposite of Equation \ref{def:measureofred} to quantify the synergy or the complementarity between two features, which is indeed the opposite of redundancy. This measure can also be generalised to more than two features. See, e.g., \citep{meyer2013information} for a review of these measures.

\subsubsection{Redundancy and relevance}\label{sec:redundancyrelevance}

Like relevance, redundancy characterises the interest of (de)selecting features. Depending on how the feature selection problem is formulated (see Section \ref{sec:fsproblems}), it is often desirable not to select totally redundant features that convey the exact same information about the output as other features. By definition, strongly relevant features always contain some unique information and thus only weakly relevant features can be considered as (totally) redundant with respect to some other features. Figure \ref{fig:relevantsets-red} (adapted from \cite{yu2004efficient}) illustrates that input features can be divided into four categories: irrelevant, strongly relevant, non-redundant and redundant weakly relevant features. Non-redundant and redundant features are such that the redundant ones are redundant to both the non-redundant ones and the strongly relevant features with respect to the target (according for example to Definition \ref{asymmetrictotallyredundant} extended to sets of features). Since redundancy is a relative notion that is defined for pairs of features (or sets of features), the division of the weakly relevant features is typically not unique. For instance, if two copies of the same (relevant) feature are present, each one of them could play the role of the redundant one to the other leading to at least two divisions.

\begin{figure}[htbp]
	\centering
	\begin{tikzpicture}[line width=0.35mm, color=RoyalBlue]
	\draw [] (0,0) ellipse (3cm and 1.5cm);
	\coordinate (A) at (-0.5,+1.48);
	\coordinate (B) at (-0.5,-1.48);
	\coordinate (u) at (1,0);
	\coordinate (v) at (1,0);
	\draw[] (A) .. controls +(u) and +(v) .. (B); 
	\draw[] (-1.3,+1.35) .. controls +(1,0.2) and +(1,-0.2) .. (-1.3,-1.35); 
	\draw[] (-2.2,+1.02) .. controls +(1,0.5) and +(1,-0.5) .. (-2.2,-1.02); 
	\node[black, text width=1cm] at (3,1.2) {$V$};
	\node[black, text width=2cm] at (1.5,0) {Irrelevant features};
	\node[black, text width=2cm] at (-3,1.2) {Relevant features};
	
	\node[black] at (-2.2,0) {Strongly};
	\node[black,text width=1cm] at (-0.95,-0.7) {\footnotesize \baselineskip=10pt Not Red.\par};	
	\node[black] at (-0.2,-0.7) {\footnotesize Red.};	

	\fill[white] (-0.58,-0.2) rectangle (-0.53,0.2);
	\node[black] at (-0.6,0) {Weakly};
	
	\begin{scope} 
	\clip [] (0,0) ellipse (3cm and 1.5cm);
	\clip[] (A) .. controls +(u) and +(v) .. (B) -- (-2,-1.48) -- (-2,1.48) -- (A) ;
	\clip[] (-1.3,+1.35) .. controls +(1,0.2) and +(1,-0.2) .. (-1.3,-1.35) -- (-1.3,-1.48) -- (0.5,-1.48) -- (0.5,+1.48) -- (-1.3,+1.48) -- (-1.3,+1.35);
	\fill [RedOrange,opacity=0.2] (0,0) ellipse (3cm and 1.5cm);
	\end{scope}
	
	\begin{scope} 
	\clip [] (0,0) ellipse (3cm and 1.5cm);
	\clip[] (-2.2,+1.02) .. controls +(1,0.5) and +(1,-0.5) .. (-2.2,-1.02) -- (-3,-1.02) -- (-3,+1.02) -- (-2.2,+1.02); 
	\fill [RoyalBlue,opacity=0.2] (0,0) ellipse (3cm and 1.5cm);
	\end{scope}
	
	\end{tikzpicture}
	\caption{Graphical decomposition of the set of input variables $V$ according to the feature relevance. The subset of relevant features can be refined into two degrees of relevance: weak and strong relevance. Weakly relevant features can furthermore be divided into completely redundant (with respect to non-redundant features) and non-redundant features.}
	\label{fig:relevantsets-red}
\end{figure}
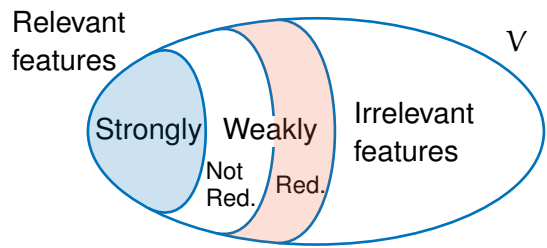

\subsubsection{Redundancy and correlation}\label{sec:background-redcorr}

Correlation is a statistical measure of the dependence between two numerical random variables. The most common measure of correlation is the Pearson correlation coefficient defined for two random variables $\mathcal{A}$ and $\mathcal{B}$ as \citep{pearson1896mathematical,lee1988thirteen,guyon2006introduction}:
\begin{eqnarray}
\rho(A,B) = \dfrac{cov(A,B)}{\sigma_A \sigma_B}
\end{eqnarray}
where $cov(A,B) = \mathbb{E}\left \{ (A-\mu_A)(B-\mu_B)\right \}$ is the covariance between both variables and where $\mu$ and $\sigma$ denote respectively the mean and the standard deviation. When the values of both variables move in the same direction (resp. opposition direction) in a similar fashion (i.e., by keeping a fixed distance), they are perfectly correlated (resp. anti-correlated) and this corresponds to $\rho=1$ (resp. $\rho=-1$).

Correlation and redundancy are different notions. We saw that duplicated (relevant) features are subsequently totally redundant with respect to the target. Intuitively, one may expect that a high correlation (or anti-correlation) between the values of two features suggests that those features are also redundant. However, correlation does not imply redundancy \citep{guyon2006introduction}. Figure \ref{fig:corrred} gives examples (inspired from \citep{guyon2006introduction}) showing that highly correlated features are not necessary redundant. But, if $cov(X_{i}, X_{j}) = \pm 1$ then Equation \ref{eqn:totallyred-gilles} holds and thus $X_{i}$ and  $X_{j}$ are totally redundant with respect to any target $Y$.

\begin{figure}[htbp]
	\centering
	\subfloat[Features are correlated ($\rho(X_1,X_2)=0.94$) and not redundant.]{\label{fig:corrred(a)}\includegraphics[width=0.48\linewidth]{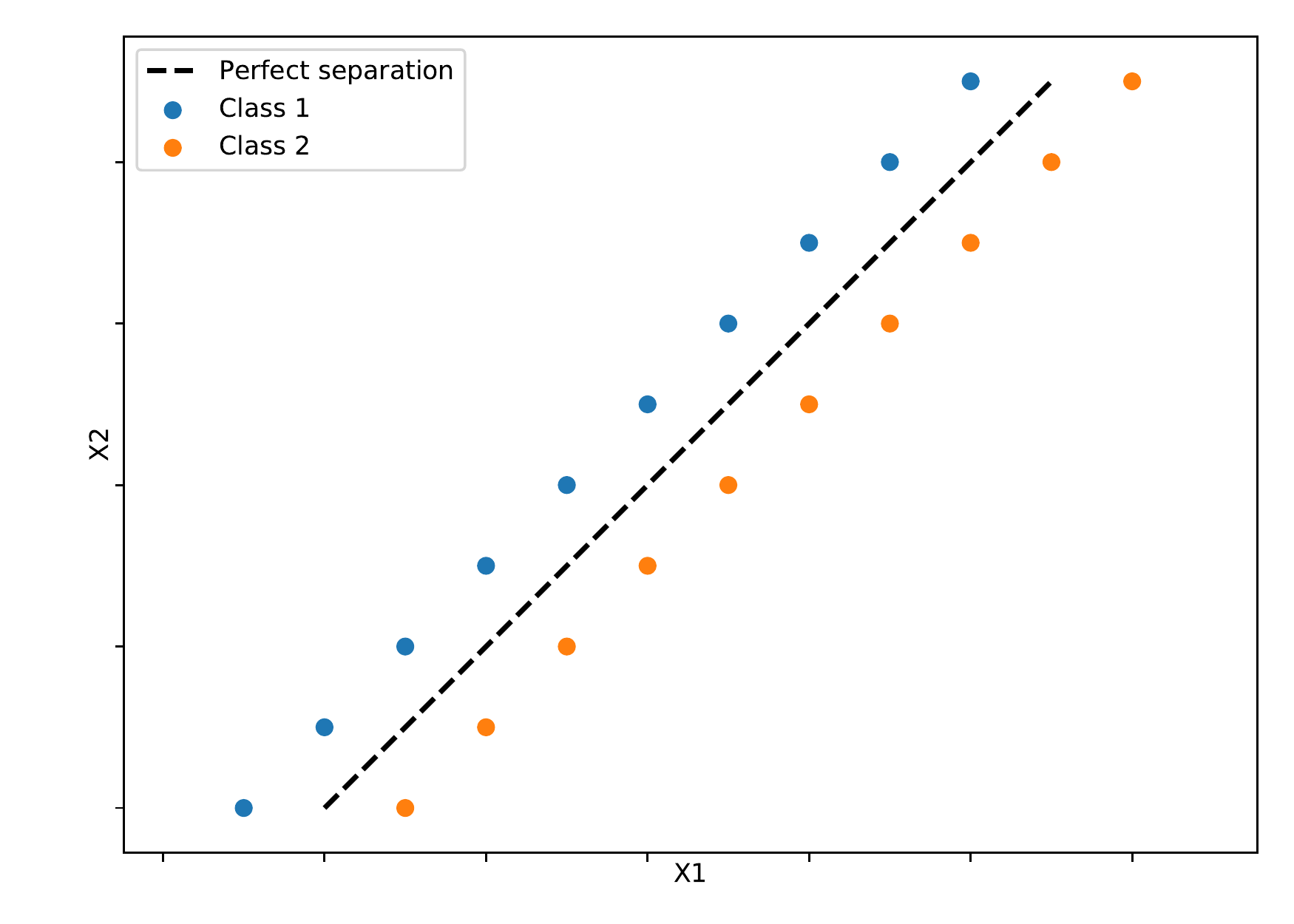}}\hfill
	\subfloat[Features are anti-correlated ($\rho(X_1,X_2)=-0.94$) and not redundant ]{\label{fig:corrred(b)}\includegraphics[width=0.48\linewidth]{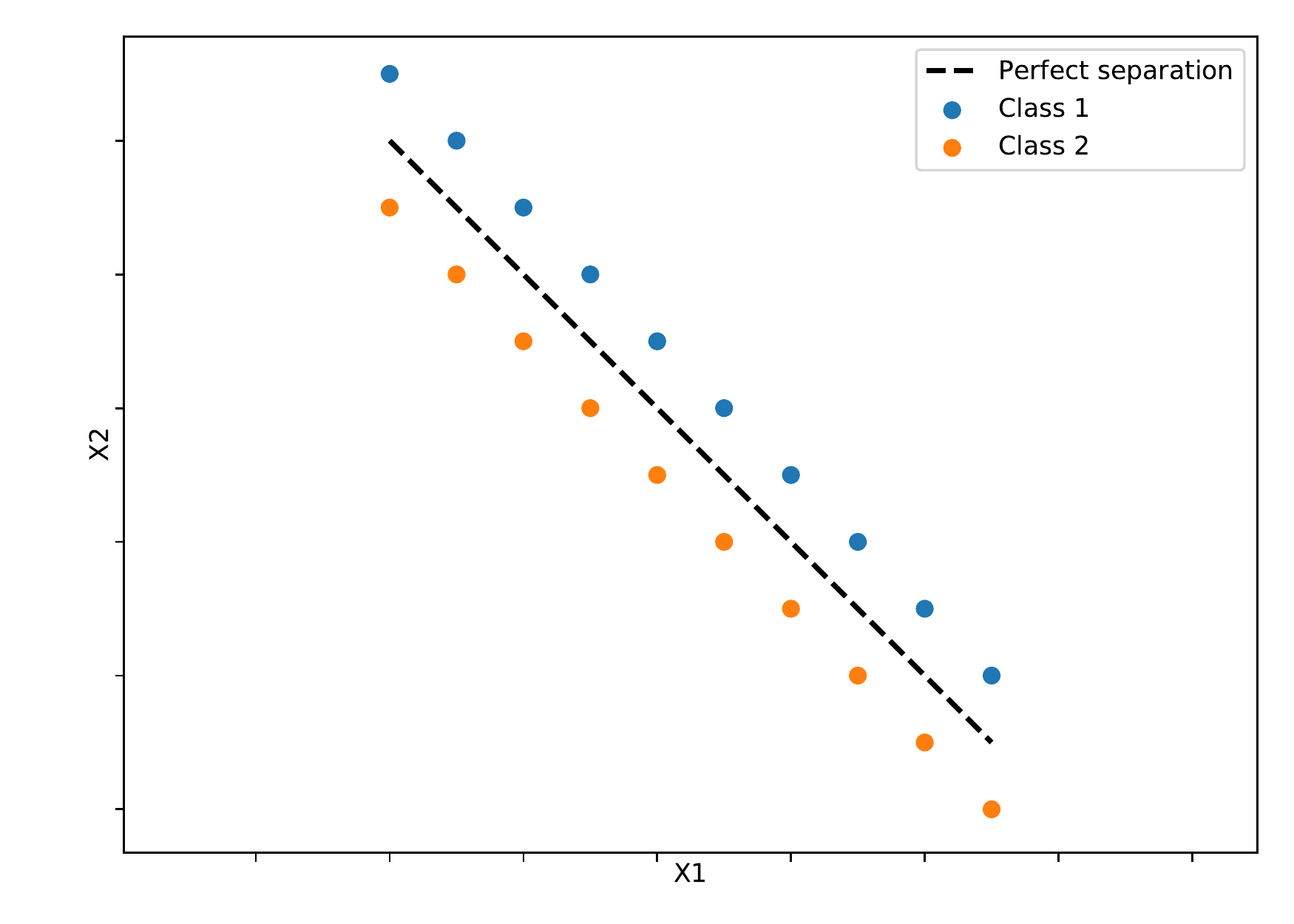}}\\
	\subfloat[Features are correlated ($\rho(X_1,X_2)=0.99$) and not redundant.]{\label{fig:corrred(c)}\includegraphics[width=0.48\linewidth]{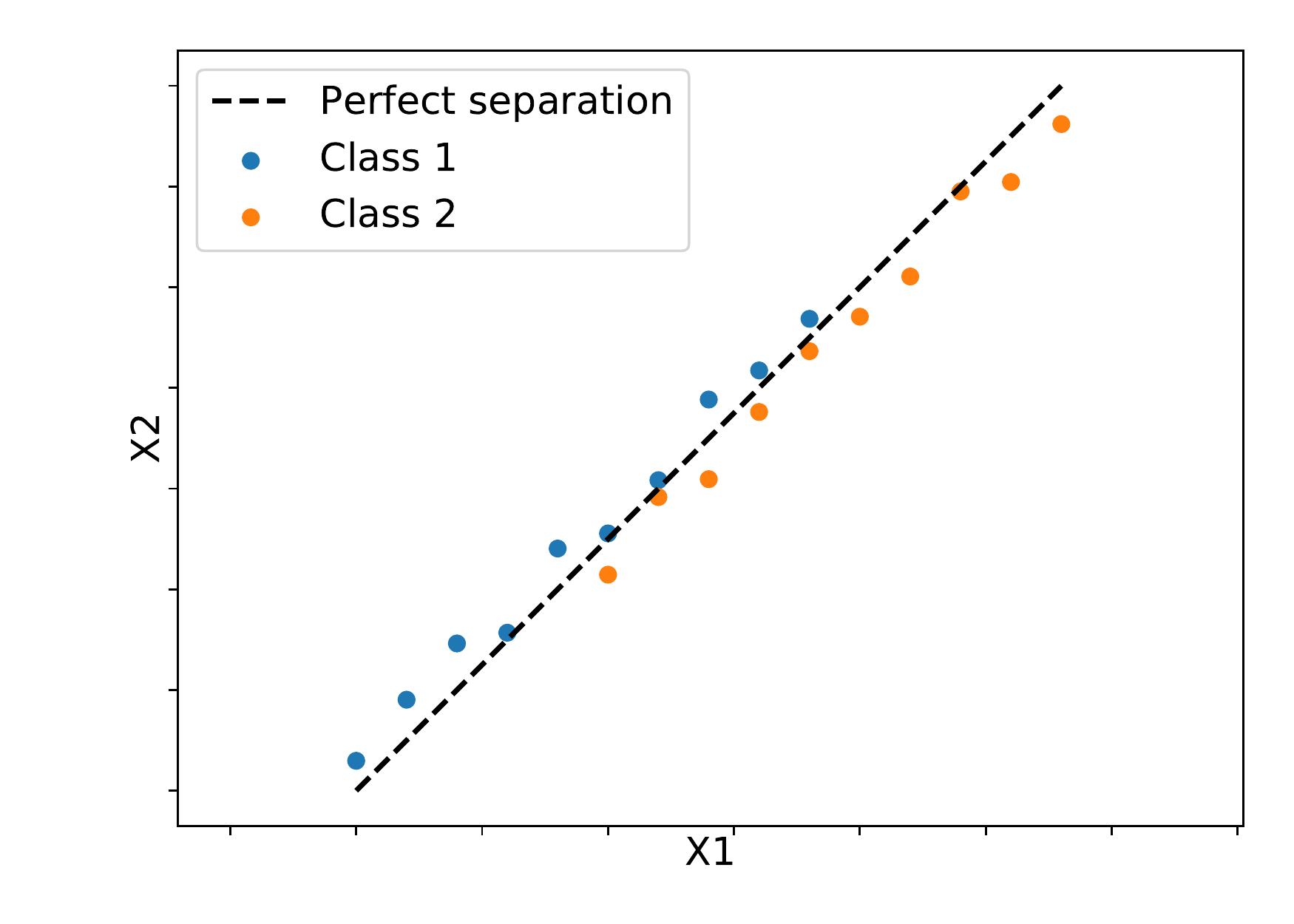}}\hfill
	\subfloat[Features are anti-correlated ($\rho(X_1,X_2)=-0.99$) and not redundant.]{\label{fig:corrred(d)}\includegraphics[width=0.48\linewidth]{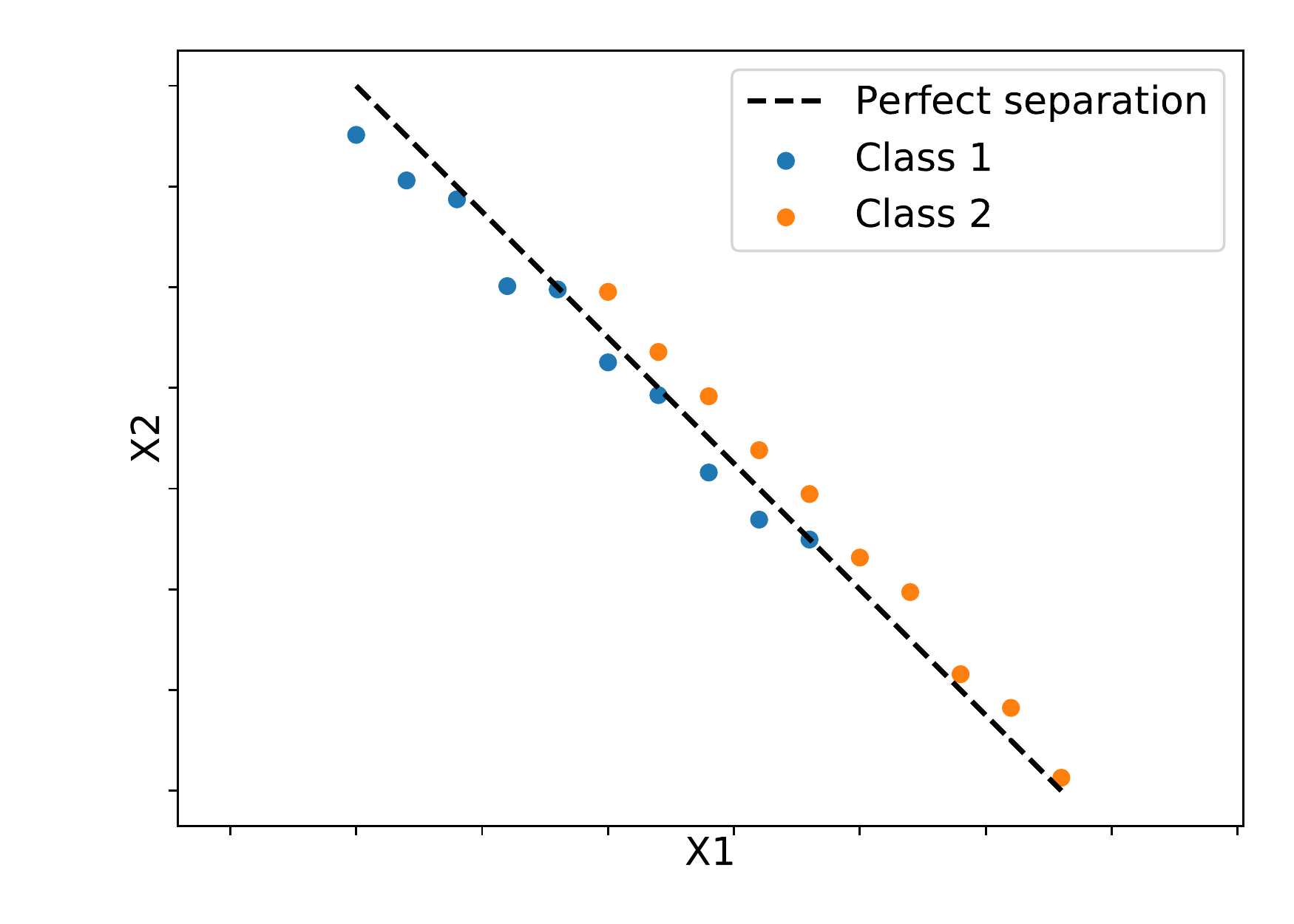}}\\
	\subfloat[Features are correlated ($\rho(X_1,X_2)=1$) and indeed redundant.]{\label{fig:corrred(e)}\includegraphics[width=0.48\linewidth]{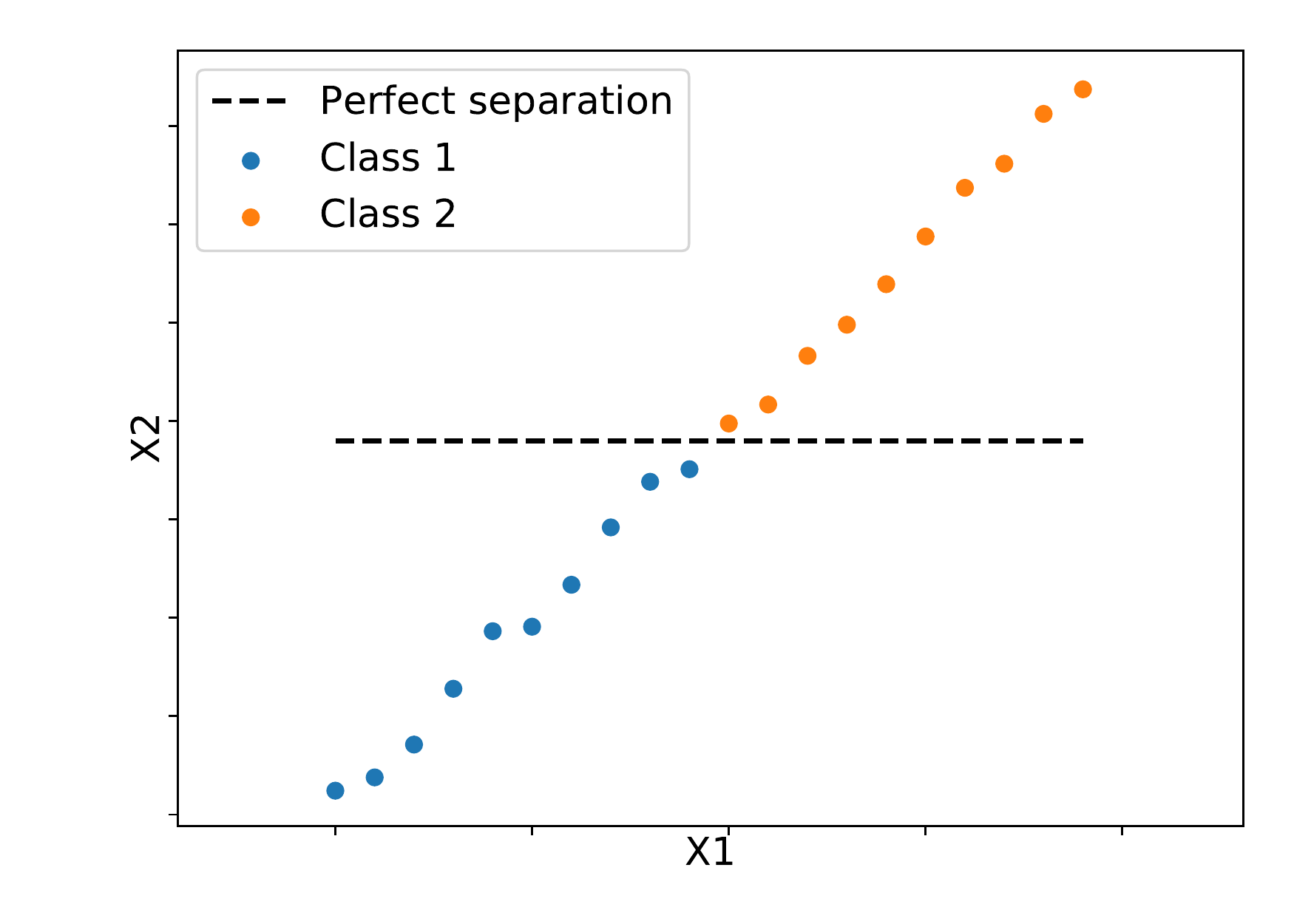}}\hfill
	\subfloat[Features are anti-correlated ($\rho(X_1,X_2)=-1$) and indeed redundant.]{\label{fig:corrred(f)}\includegraphics[width=0.48\linewidth]{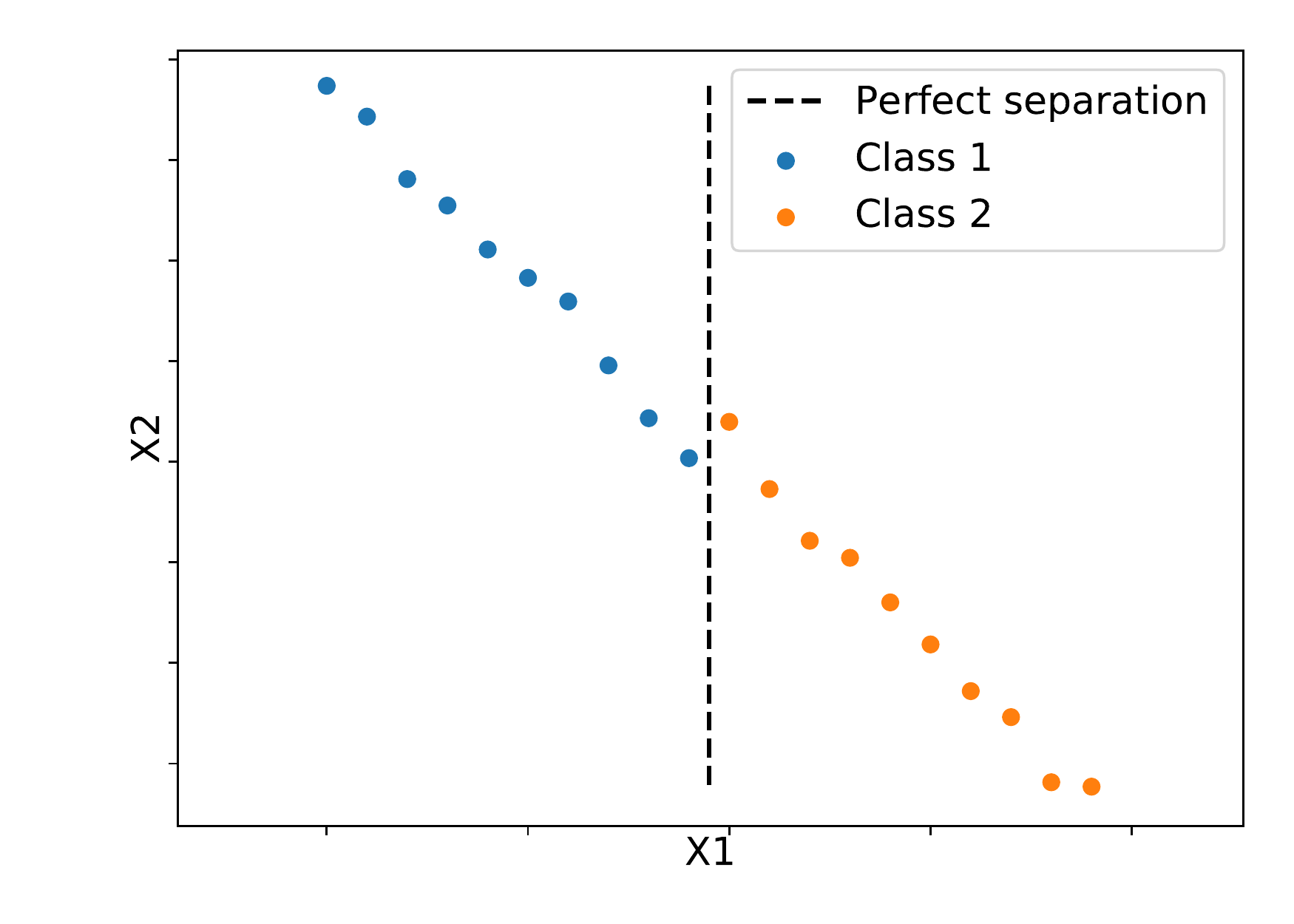}}
	\caption{Illustrating examples where correlation does not necessary imply redundancy. Figures (a) to (d) show that features can be highly correlated while being not redundant as both features are required to achieve a perfect separation between the two classes. Figures (e) and (f) show that correlated features can indeed be redundant as one feature out of the two is enough to perfectly separate classes. Let us note that in both last examples, both features can individually lead to a perfect separation.}
	\label{fig:corrred}
\end{figure}

\subsection{Feature selection problems}\label{sec:fsproblems}

Besides the objective of size reduction, the problem of feature selection usually can take two flavours \citep{guyon2003introduction,nilsson2007consistent,genuer2010variable,kursa2011all}. Typically, those side-objectives guide the feature selection and determine the subset of features that end up being selected. 

Many studies (e.g., with microarray gene-expression data \citep{ambroise2002selection} or in drug discovery application \citep{janecek2008relationship}) showed that dealing with small sets of relevant features usually gives better results and facilitate learning accurate classifiers \citep{guyon2003introduction,nilsson2007consistent,kursa2011all}.

In presence of many features, it is common that a large number of features are either irrelevant or redundant (to the target). Such variables are in principle not necessary to predict the output and computational performances of supervised learning algorithms can often be optimised by discarding them \citep{yu2004efficient}. Discarding some non-redundant features (with respect to those that are kept), may however be detrimental in terms of accuracy. Furthermore, for some specific learning algorithms, it may actually be beneficial in terms of accuracy to keep some redundant features. Moreover, when sample sizes are small compared to the number of features, it may even become beneficial (in terms of accuracy), to discard some non-redundant features (to decrease overfitting). Usually, as the number of selected features grows, it is expected that the performances of a learning algorithm increases and then decreases. The optimal size for the feature subset being the one that maximises the accuracy \citep{hua2004optimal}, the \textit{minimal-optimal problem} is the first problem of feature selection and consists in finding the smallest optimal subset for a given learning algorithm and a given dataset.

When only accuracy of the learnt predictor is used a criterion to select an optimal subset of features, many weakly relevant features (and sometimes even some strongly relevant one)  might be discarded.  
There is however an interest of identifying all features that are somehow related to the target in order to get a full understanding of the underlying mechanism (e.g., in gene expression analysis \citep{golub1999molecular}). The \textit{all-relevant} problem is the second approach of feature selection and consists in finding all relevant features.

Those two approaches are usually complementary for a given application. Let us take the example of a medical diagnosis that consists in predicting a disease. The doctor has to evaluate a given number of factors before making his diagnosis. The number of factors has to be as a small as possible to save time and money. Hence, one would want to identify a small set of features that provides the best possible diagnosis. The minimal-optimal approach aims at providing such a feature subset.
In different circumstances, for research purposes for instance, the all-relevant approach may be more appropriate. One may want to identify all factors that are related to the output even if some of them are redundant with respect to other. 

Both feature selection approaches are further described below.

\paragraph{All-relevant problem}

The all-relevant problem is defined as follows \citep{nilsson2007consistent,kursa2011all}:
\begin{definition} 
	The all-relevant feature selection problem consists in finding all relevant features. The solution to this problem is the set of all strongly and weakly relevant features.
\end{definition}
The solution of this problem is in principle unique, as suggested by Figure \ref{fig:relevantsets-allrel}. One further step, in such an analysis, would be to also distinguish between strongly and weakly relevant features. 

\begin{figure}[htbp]
	\centering

	\begin{tikzpicture}[line width=0.35mm, color=RoyalBlue]
\draw [] (0,0) ellipse (3cm and 1.5cm);
\coordinate (A) at (-0.5,+1.48);
\coordinate (B) at (-0.5,-1.48);
\coordinate (u) at (1,0);
\coordinate (v) at (1,0);
\draw[] (A) .. controls +(u) and +(v) .. (B); 
\draw[] (-1.3,+1.35) .. controls +(1,0.2) and +(1,-0.2) .. (-1.3,-1.35); 
\draw[] (-2.2,+1.02) .. controls +(1,0.5) and +(1,-0.5) .. (-2.2,-1.02); 
\node[black, text width=1cm] at (3,1.2) {$V$};
\node[black, text width=2cm] at (1.5,0) {Irrelevant features};
\node[black, text width=2cm] at (-3,1.2) {Relevant features};

\node[black] at (-2.2,0) {Strongly};
\node[black,text width=1cm] at (-0.95,-0.7) {\footnotesize \baselineskip=10pt Not Red.\par};	
\node[black] at (-0.2,-0.7) {\footnotesize Red.};	

\fill[white] (-0.58,-0.2) rectangle (-0.53,0.2);
\node[black] at (-0.6,0) {Weakly};

\begin{scope} 
\draw[orange] (A) .. controls +(u) and +(v) .. (B); 
\clip[] (A) .. controls +(u) and +(v) .. (B) -- (-0.5,-1.5) -- (-3.2,-1.5) -- (-3.2,1.5) -- (-0.5,+1.5) -- (A) ;
\draw [orange] (0,0) ellipse (3cm and 1.5cm);

\fill [orange,opacity=0.2] (0,0) ellipse (3cm and 1.5cm);

\end{scope}
\node[orange] at (-0.4,1) {$S$};
\end{tikzpicture}
	\caption{The solution $S$ to the all-relevant problem is the union of weakly relevant and strongly relevant features to the target variable. This set includes all relevant features even if there is redundant information about the target.}
	\label{fig:relevantsets-allrel}
\end{figure}
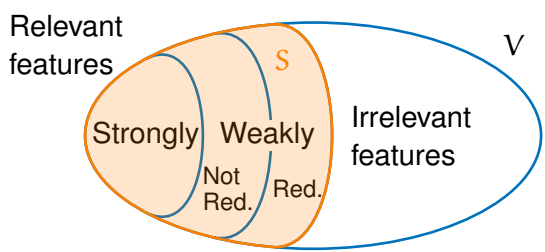

\paragraph{Minimal-optimal problem}

In terms of learning algorithm performances, the minimal-optimal problem is usually defined as follows \citep{kohavi1997wrappers,nilsson2007consistent}:

\begin{definition}
	Let $\mathcal{A}$ be a learning algorithm, $V$ the set of input features and $Y$ be the target feature. The minimal-optimal feature selection problem consists in finding a subset of $V$ of minimal size that minimises the generalisation error of $\mathcal{A}$.
\end{definition}

A solution of this problem is usually a subset of all relevant features, even if for some very specific combinations of problems and algorithms, including irrelevant features may actually be beneficial from the viewpoint of accuracy \citep{kohavi1997wrappers}.

{For regression and \textit{calibrated}\footnote{A classification problem which requires the exact distribution of predictions of $Y$ and not only the most probable class of $Y$ is said to be calibrated \citep{tsamardinos2003towards}. Such problems correspond for instance to classification problems where the mean squared loss is used instead of the zero-one loss.} classification tasks, \cite{tsamardinos2003towards} showed that a Markov boundary of minimal size is a solution to the minimal-optimal problem (see \citep[Proposition 3]{tsamardinos2003towards} for more details and see side note on page \pageref{sn:markov} for a word on Markov blanket discovery algorithms). Therefore, a solution of the minimal-optimal problem is a set made of all strongly relevant and a maximal subset of non-redundant\footnote{Features providing non-redundant information about the output but that are redundant with some of non-selected features. In other words, relevant features that are included in a Markov boundary but that are not strongly relevant with respect to the output.} weakly relevant features \citep{kursa2011all}. Let us however note that when the zero-one loss is used (i.e., only the most probable class of $Y$ is required), \cite{tsamardinos2003towards} state that only some features of the Markov boundary are required or features that do not belong to the Markov boundary.}

\begin{sidenote}{Markov blanket discovery algorithms} \label{sn:markov}
	Markov blanket and boundary discovery algorithms constitute another broad family of feature selection techniques (see, e.g., \citep{guyon2003introduction,tsamardinos2003time,aliferis2010local,statnikov2013algorithms,tsamardinos2003algorithms}). They are usually independent of any learning algorithm and are mainly based on graph theory and related to causality. They are however not addressed in this thesis. 
	
\end{sidenote}

{Resulting of the multiplicity of Markov boundaries, the minimal-problem problem does not have a unique solution in general.} For example, in presence of two totally redundant features, each can be kept (without the other one) giving two valid options. 
For strictly positive distributions however, the Markov boundary $M$ of $Y$ is
unique and corresponds to the set of all
strongly relevant variables \citep{nilsson2007consistent}.

Figure \ref{fig:relevantsets-minopt} shows typical solutions to the minimal-optimal problem with respect to the relevance of features. In the case of a strictly positive distribution, Figure \ref{fig:relevantsets-minopt(a)} gives the unique solution to the minimal-optimal problem which is the set of strongly relevant features. In the case of non-strictly positive distribution, Figure \ref{fig:relevantsets-minopt(b)} illustrates a solution to the minimal-optimal problem which includes in all generality some weakly relevant features and all strongly relevant ones. 

Finding an optimal subset is usually intractable because some distributions may require an exhaustive search of all possible subsets to guarantee optimality \citep{cover1977possible, kohavi1997wrappers,blum1997selection,yu2004efficient,nilsson2007consistent}. With $p$ features, there are $2^p$ possible subsets which is clearly impractical, especially for high-dimensional datasets.
However, letting this search be guided by a heuristic (see Section \ref{sec:fs-search} and \citep{guyon2003introduction}) or considering only strictly positive distributions \citep{nilsson2007consistent} make this problem more tractable computationally.

\begin{figure}[htbp]
	\centering
	\subfloat[Distribution satisfying the intersection property]{

			\begin{tikzpicture}[line width=0.35mm, color=RoyalBlue]
		\draw [] (0,0) ellipse (3cm and 1.5cm);
		\coordinate (A) at (-0.5,+1.48);
		\coordinate (B) at (-0.5,-1.48);
		\coordinate (u) at (1,0);
		\coordinate (v) at (1,0);
		\draw[] (A) .. controls +(u) and +(v) .. (B); 
		\draw[] (-1.3,+1.35) .. controls +(1,0.2) and +(1,-0.2) .. (-1.3,-1.35); 
		\draw[] (-2.2,+1.02) .. controls +(1,0.5) and +(1,-0.5) .. (-2.2,-1.02); 
		\node[black, text width=1cm] at (3,1.2) {$V$};
		\node[black, text width=2cm] at (1.5,0) {Irrelevant features};
		\node[black, text width=2cm] at (-3,1.2) {Relevant features};
		
		\node[black] at (-2.2,0) {Strongly};
		\node[black,text width=1cm] at (-0.95,-0.7) {\footnotesize \baselineskip=10pt Not Red.\par};	
		\node[black] at (-0.2,-0.7) {\footnotesize Red.};	
		
		\fill[white] (-0.58,-0.2) rectangle (-0.53,0.2);
		\node[black] at (-0.6,0) {Weakly};
		
	\begin{scope} 
	\draw[orange] (-2.2,+1.02) .. controls +(1,0.5) and +(1,-0.5) .. (-2.2,-1.02);
	\clip[] (-2.2,+1.02) .. controls +(1,0.5) and +(1,-0.5) .. (-2.2,-1.02) -- (-2.2,-1.1) -- (-3.2,-1.1) -- (-3.2,+1.1) -- (-2.2,+1.1) -- (-2.2,+1.02); 
	\draw [orange] (0,0) ellipse (3cm and 1.5cm);
	\clip [] (0,0) ellipse (3cm and 1.5cm);
	
	\fill [orange,opacity=0.2] (0,0) ellipse (3cm and 1.5cm);
	\end{scope}

		\node[orange] at (-2.2,-0.5) {$=S$};
		\end{tikzpicture}
		\label{fig:relevantsets-minopt(a)}}
	\subfloat[Distribution \textbf{not} satisfying the intersection property]{

			\begin{tikzpicture}[line width=0.35mm, color=RoyalBlue]
\draw [] (0,0) ellipse (3cm and 1.5cm);
\coordinate (A) at (-0.5,+1.48);
\coordinate (B) at (-0.5,-1.48);
\coordinate (u) at (1,0);
\coordinate (v) at (1,0);
\draw[] (A) .. controls +(u) and +(v) .. (B); 
\draw[] (-1.3,+1.35) .. controls +(1,0.2) and +(1,-0.2) .. (-1.3,-1.35); 
\draw[] (-2.2,+1.02) .. controls +(1,0.5) and +(1,-0.5) .. (-2.2,-1.02); 
\node[black, text width=1cm] at (3,1.2) {$V$};
\node[black, text width=2cm] at (1.5,0) {Irrelevant features};
\node[black, text width=2cm] at (-3,1.2) {Relevant features};

\node[black] at (-2.2,0) {Strongly};
\node[black,text width=1cm] at (-0.95,-0.7) {\footnotesize \baselineskip=10pt Not Red.\par};	
\node[black] at (-0.2,-0.7) {\footnotesize Red.};	
\draw[orange] (-1.3,+1.35) .. controls +(1,0.2) and +(1,-0.2) .. (-1.3,-1.35);
\fill[white] (-0.58,-0.2) rectangle (-0.53,0.2);

\begin{scope} 
\clip[] (-1.3,+1.35) .. controls +(1,0.2) and +(1,-0.2) .. (-1.3,-1.35) -- (-1.3,-1.4) -- (-3.2,-1.4)-- (-3.2,1.4)-- (-1.3,1.4) -- (-1.3,1.35);
\draw [orange] (0,0) ellipse (3cm and 1.5cm);
\clip [] (0,0) ellipse (3cm and 1.5cm);
\fill [orange,opacity=0.2] (0,0) ellipse (3cm and 1.5cm);
\end{scope}

\node[orange] at (-1,+1) {$S$};
\node[black] at (-0.6,0) {Weakly};
\end{tikzpicture}
\label{fig:relevantsets-minopt(b)}}
	\caption{Typical solutions $S$ to the minimal-optimal problem for distributions satisfying the intersection property or not. Solutions are Markov boundaries of $Y$ with respect to $V$.}
	\label{fig:relevantsets-minopt}
\end{figure}
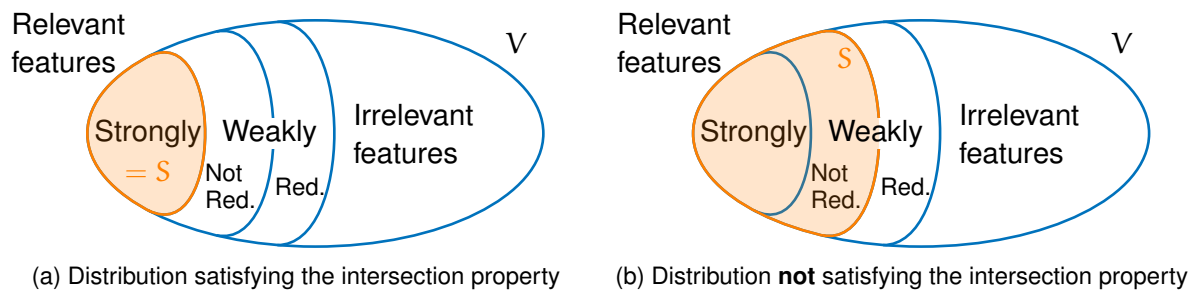

\subsection{Feature selection methods}\label{sec:fsmethods}

Feature selection methods are usually classified in three categories depending on how they interact with the learning algorithm: filters, wrappers and embedded methods \citep{blum1997selection,guyon2003introduction,tsamardinos2003towards,saeys2007review}. 

\paragraph{Filters} 

\textit{A filter approach} aims at selecting features independently of the learning algorithm (i.e., without optimising its performance) \citep{kohavi1997wrappers,blum1997selection,guyon2003introduction,guyon2006introduction,saeys2007review,brown2012conditional,chandrashekar2014survey}. 
A filter tries to assess the interest of keeping features solely from the data in order to then filter out irrelevant features. As a pre-processing step that selects inputs, any learning algorithm can thus be combined with a filtering feature selection.

A common filter method is feature ranking\footnote{Feature ranking is sometimes referred to as feature weight based approach as a weight is assigned to each feature \citep{blum1997selection,kira1992feature}.} \citep{stoppiglia2003ranking,blum1997selection,guyon2003introduction,chandrashekar2014survey} and consists in ordering features according to a suitable ranking criterion. Any feature relevance measure providing a numerical score can be used (e.g., correlation \citep{guyon2003introduction} or mutual information \citep{blum1997selection,brown2012conditional} with the target, decision tree\footnote{Feature selection using tree-based models will be discussed in Chapter \ref{ch:importances}.} \citep{cardie1993using}, ...). Then the top $k$ features (i.e., with highest value) are selected \citep{blum1997selection,saeys2007review,chandrashekar2014survey}. 
The number $k$ of selected features is determined using an (arbitrary or not) threshold value or based on a \textit{random probe} (i.e., a random variable is introduced in the process in order to determine which features  are statistically better than an artificial irrelevant feature and thus relevant) \citep{stoppiglia2003ranking}. 


Filter techniques are usually computationally fast and scale very well to high-dimensional datasets \citep{saeys2007review}. As they are independent of the learning algorithm, they only need to be performed once and for all whatever what follows.

A downside of this independence is that filter techniques totally ignore the performance of the learning algorithm with the selected subset \citep{kohavi1997wrappers}. 
In the filtering approach, most proposed techniques (e.g., correlation and mutual information) are univariate: each feature is considered individually and feature dependencies and redundancies are not taken into account \citep{guyon2003introduction,saeys2007review,chandrashekar2014survey}. Ignoring such effects may lead to a selected set of features that yields poor performance when compared to other types of (multivariate) feature selection techniques \citep{saeys2007review}. Besides, a subset of the selected set of features may be sufficient in presence of redundancy \citep{chandrashekar2014survey}. Consequently, multivariate criteria have been proposed to integrate feature dependencies (e.g., based on mutual information \citep{peng2005feature,meyer2008information,frenay2013mutual,meyer2013information} or based on Markov blankets \citep{koller1996toward}, and see \citep{brown2012conditional} for a unifying framework based on conditional likelihood maximisation) but at the cost of scalability and computational speed \citep{saeys2007review}.


In the light of the feature selection problems introduced in Section \ref{sec:fsproblems}, two filter approaches are of interest:
\begin{description}
	\item[The FOCUS algorithm] \citep{almuallim1991efficient,almuallim1991learning,almuallim1994learning} conducts an \textbf{exhaustive search} among all possible subsets of features for the minimal one providing a perfect discrimination (or the best possible) of the target values \citep{koller1996toward,kohavi1997wrappers}.  It has a preference for a small set of features and suffer from the so-called \textit{Min-features bias} \citep{almuallim1991efficient} which may lead to a poor feature selection (see side note on page \pageref{sn:minfeaturebias}).
	Nevertheless, it is expected that the selected set of features includes all strongly relevant features and some weakly relevant ones. 

\begin{sidenote}{Min-features bias} \label{sn:minfeaturebias}
	By chance, an irrelevant feature could be sufficient to perfectly determine the target value in the training data (e.g., a unique sample ID such as the social security number in a medical dataset). A learning algorithm receiving such a feature would surely overfit the training data leading to poor performances in generalisation \citep{kohavi1997wrappers}. A preference towards small set of features - the Min-features bias - would choose that variable as the best subset in comparison with other subsets made of a single variable.
\end{sidenote}

\item[The Relief algorithm] \citep{kira1992feature,kira1992practical} is an instance of \textbf{feature ranking} and aims at assigning a relevance score to each feature\footnote{In \cite{kira1992feature}, the relevance level of the $j^{th}$ feature of the $i^{th}$ sample, denoted $x_j^i$, is based on two distances: (i) the difference $c_j^i$ between values of $x_j^i$ and $x_j^c$ where $x_j^c$ is the value of the same feature for a sample $s$ which is the closest one with the same class (i.e., $y^j = y^c$) as sample $i$; (ii) the difference $d_j^i$ between values of  $x_j^i$  and $x_j^d$ where $x_j^d$ is the value of the same feature for a sample $d$ which is the closest one with a different class (i.e., $y^j \ne y^d$). The relevance level of a feature $X_j$ is based on an average over all samples of the square of those two distances.}. The selection is then made by considering as relevant (and thus to be kept) features with a relevance score above a given threshold (determined for instance by a statistical method of interval estimation).  The selected subset of features is expected to be the set of all relevant features (weak and strong ones) including redundant features. 


\end{description}
Figure \ref{fig:relevantsets-focus/relief} illustrates a typical solution according to the relevance for both algorithms. One can see that Focus algorithm aims to solve the minimal-optimal problem (although ignoring the usefulness of the selected set of features) and that Relief algorithm aims to solve the all-relevant problem \citep{kohavi1997wrappers}. 

\begin{figure}[htbp]
	\centering

	\subfloat[Focus ($F_F$) aims to solve the minimal-optimal problem]{
	
		\begin{tikzpicture}[line width=0.35mm, color=RoyalBlue]
		\draw [] (0,0) ellipse (3cm and 1.5cm);
		\coordinate (A) at (-0.5,+1.48);
		\coordinate (B) at (-0.5,-1.48);
		\coordinate (u) at (1,0);
		\coordinate (v) at (1,0);
		\draw[] (A) .. controls +(u) and +(v) .. (B); 
		\draw[] (-1.3,+1.35) .. controls +(1,0.2) and +(1,-0.2) .. (-1.3,-1.35); 
		\draw[] (-2.2,+1.02) .. controls +(1,0.5) and +(1,-0.5) .. (-2.2,-1.02); 
		\node[black, text width=1cm] at (3,1.2) {$V$};
		\node[black, text width=2cm] at (1.5,0) {Irrelevant features};
		\node[black, text width=2cm] at (-3,1.2) {Relevant features};
		
		\node[black] at (-2.2,0) {Strongly};
		\node[black,text width=1cm] at (-0.95,-0.7) {\footnotesize \baselineskip=10pt Not Red.\par};	
		\node[black] at (-0.2,-0.7) {\footnotesize Red.};	
		\draw[orange] (-1.3,+1.35) .. controls +(1,0.2) and +(1,-0.2) .. (-1.3,-1.35);
		\fill[white] (-0.58,-0.2) rectangle (-0.53,0.2);

		\begin{scope} 
		\clip[] (-1.3,+1.35) .. controls +(1,0.2) and +(1,-0.2) .. (-1.3,-1.35) -- (-1.3,-1.4) -- (-3.2,-1.4)-- (-3.2,1.4)-- (-1.3,1.4) -- (-1.3,1.35);
		\draw [orange] (0,0) ellipse (3cm and 1.5cm);
		\clip [] (0,0) ellipse (3cm and 1.5cm);
		\fill [orange,opacity=0.2] (0,0) ellipse (3cm and 1.5cm);
		\end{scope}

		\node[orange] at (-1.3,+1) {$F_F$};
		\node[black] at (-0.6,0) {Weakly};
		\end{tikzpicture}
}	
	\subfloat[Relief ($F_R$) aims to solve the all-relevant problem]{
	\begin{tikzpicture}[line width=0.35mm, color=RoyalBlue]
\draw [] (0,0) ellipse (3cm and 1.5cm);
\coordinate (A) at (-0.5,+1.48);
\coordinate (B) at (-0.5,-1.48);
\coordinate (u) at (1,0);
\coordinate (v) at (1,0);
\draw[] (A) .. controls +(u) and +(v) .. (B); 
\draw[] (-1.3,+1.35) .. controls +(1,0.2) and +(1,-0.2) .. (-1.3,-1.35); 
\draw[] (-2.2,+1.02) .. controls +(1,0.5) and +(1,-0.5) .. (-2.2,-1.02); 
\node[black, text width=1cm] at (3,1.2) {$V$};
\node[black, text width=2cm] at (1.5,0) {Irrelevant features};
\node[black, text width=2cm] at (-3,1.2) {Relevant features};

\node[black] at (-2.2,0) {Strongly};
\node[black,text width=1cm] at (-0.95,-0.7) {\footnotesize \baselineskip=10pt Not Red.\par};	
\node[black] at (-0.2,-0.7) {\footnotesize Red.};	

\fill[white] (-0.58,-0.2) rectangle (-0.53,0.2);
\node[black] at (-0.6,0) {Weakly};

\begin{scope} 
\draw[orange] (A) .. controls +(u) and +(v) .. (B); 
\clip[] (A) .. controls +(u) and +(v) .. (B) -- (-0.5,-1.5) -- (-3.2,-1.5) -- (-3.2,1.5) -- (-0.5,+1.5) -- (A) ;
\draw [orange] (0,0) ellipse (3cm and 1.5cm);

\fill [orange,opacity=0.2] (0,0) ellipse (3cm and 1.5cm);

\end{scope}
\node[orange] at (-0.4,1) {$F_R$};
\end{tikzpicture}\label{fig:relevantsets-relief}}

	\caption{Expected set of features selected by Focus ($F_F$) and Relief ($F_R$) algorithm according to the feature relevance.}
	\label{fig:relevantsets-focus/relief}
\end{figure}
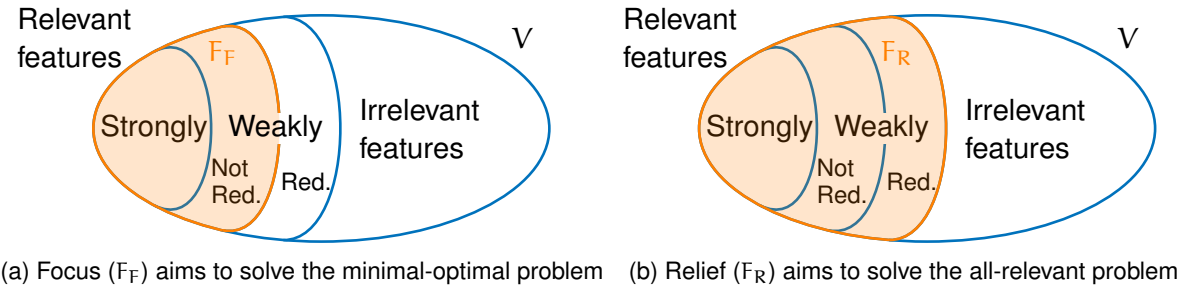

\paragraph{Wrappers} A \textit{wrapper method} aims at selecting a set of features using a learning algorithm as a "black box" \citep{kohavi1997wrappers,blum1997selection,guyon2003introduction, saeys2007review}. A set of features is presented to a learning algorithm and the corresponding accuracy performances is used as an estimation of the relative usefulness of the given set of features. The search over all possible subsets is usually guided by a search algorithm (see Section \ref{sec:fs-search} for more details) "wrapped" around the learning algorithm \citep{saeys2007review}. At the end, the best set of features is then selected as the one leading to the best performances of the given learning algorithm. 

In the wrapper approach, the optimal feature subset search is carried out in interaction with a specific learning algorithm $\mathcal{A}$. The resulting selected set of features is therefore the most useful for $\mathcal{A}$ but also tailored to it. Wrapper methods benefit from learning algorithm characteristics (e.g., feature dependencies) but depend on its complexity implying a high computational cost. 
In contrast with filter techniques, the feature selection is coupled with the learning algorithm performances increasing the risk of overfitting. {Examples of wrapper methods (e.g., sequential feature selection and sequential backward elimination) are given in Section \ref{sec:fs-search}.}

\paragraph{Embedded methods} An \textit{embedded method} of feature selection is comprised in the learning algorithm \citep{blum1997selection,guyon2003introduction,geurts2006extremely,saeys2007review, chandrashekar2014survey}. Similarly to wrapper methods, the selected set of features is specific to the learning algorithm. However, the feature subset search and evaluation are incorporated in the training algorithm \citep{guyon2006introduction} and thus embedded methods are usually less computationally expensive than wrapper methods \citep{saeys2007review,chandrashekar2014survey}. Two examples of embedded methods are two regularised linear regressions known as Lasso and Ridge regressions. Both methods construct a linear model that minimises its error for a given loss function and uses a subset of the variables while including a penalty term that limits the number of variables used. Similarly with wrappers, the selected set of features may depend on the considered embedded method. Indeed, Lasso and Ridge regressions use different penalisation terms and therefore may select different set of features.

\subsection{Feature subset search algorithms}\label{sec:fs-search}
	
	Given $p$ features, the number of possible feature subsets (i.e., equal to $2^p$) grows exponentially with the number of features making the feature selection space (i.e., the space of all possible subsets of features) very large. Several approaches have been proposed to explore this space. An exhaustive search is optimal but computationally intensive. Heuristic searches have been introduced to explore this space more efficiently. In the rest of this section, we describe well-known search algorithms that will be of interest in this thesis.
	
	\begin{description}
	\item[Exhaustive search] \citep{kira1992feature} consists in exploring the whole feature selection space. 
	All possible subsets are evaluated and the smallest one that maximises a given criterion (which can be a relevance index for a filter approach or the accuracy for a wrapper method for example) is selected. The optimal subset is thus always found at the expense of computational efficiency. For example, the Focus algorithm \citep{almuallim1991efficient,almuallim1991learning,almuallim1994learning} examines subsets by increasing order of size and stops as soon as an optimal subset is found. This approach limits the computational burden while preserving optimality \citep{kira1992feature}.
	
	\item[Heuristic search] explores more efficiently the search space while trying to find the best (possible) subsets of features. Several approaches aim at reducing the number of subsets to evaluate. A first way consists in limiting the maximal size to $d \le p$. Only subsets with $d$ or less features are considered but this requires an explicit value of $d$ which is in practice unknown \citep{kira1992feature,devijver1982pattern}.
	
	The \textit{Sequential Feature Selection} (SFS, also known as \textit{Forward Selection}) \citep{whitney1971direct,miller1990subset,kira1992feature,blum1997selection,jain2000statistical,guyon2003introduction,reunanen2003overfitting,chandrashekar2014survey} starts by selecting the single feature that maximises the given criterion and then sequentially adds one feature at a time. At each step, each remaining feature is evaluated in combination with already selected features and the best one is permanently added to the current subset. The process stops when all features have been added or when the required size of subset is reached.  
	Conversely, the \textit{Sequential Backward Elimination} (SBE, also known as \textit{Sequential Backward Selection}) \citep{marill1963effectiveness,kira1992feature,pudil1994floating,kohavi1997wrappers,blum1997selection,jain2000statistical,guyon2003introduction,chandrashekar2014survey} starts with all features and then evaluates shrinking feature sets. At each step, the less promising feature (i.e., the one whose removal is the less penalising according to the criterion) is removed, one at a time, until the required subset size is reached.
	
	SFS is more computationally advantageous than SBE as first evaluated subsets are made of few features \citep{kohavi1997wrappers} (see side node on page \pageref{sn:sfssbe}).
	Feature dependency is not taken into account as some features may not be very useful individually while being highly informative together \citep{kira1992feature,chandrashekar2014survey}. However, the backward elimination strategy can theoretically capture feature interactions \citep{kohavi1997wrappers}. 
	Both approaches do not examine all possible subsets and yield nested feature subsets in the sense that a selected (respectively removed) feature can not be removed (respectively re-selected) even if it would lead to a better subset of features \citep{pudil1994floating,guyon2003introduction,reunanen2003overfitting,chandrashekar2014survey}. 
	Therefore, optimality of the selected subset can not be guaranteed \citep{pudil1994floating,jain1997feature}.	
	More complexed algorithms have been proposed in order to overcome nested subsets. The approach \textit{"Plus-$l$-Minus-$r$"} consists in combining the forward selection and backward elimination in selecting at each step the $l$ most promising features and removing the $r$ less promising ones \citep{stearns1976selecting,kittler1978feature}. Parameters $l$ and $r$ need however to be fixed. \textit{Sequential Floating Forward Selection} (SFFS) follows the sequential search procedure but includes a potential feature elimination at each step \citep{pudil1994floating,somol1999adaptive}. 
	Similarly, \textit{Sequential Floating Backward Elimination} (SFBE) includes a potential feature selection at each step \citep{pudil1994floating,somol1999adaptive}. 
	\textit{Adaptive Sequential Forward Floating Selection} (ASFFS) generalises above-mentioned approaches with an adaptive determination of $l$ and $r$ at each step \citep{somol1999adaptive}.
	\end{description}

\begin{sidenote}{A word on the complexity of SFS and SBE} \label{sn:sfssbe}
			 Let us consider a dataset $\mathbf{D}$ made of a set $F$ of $p$ features and $N$ samples. We want to solve the minimal-optimal problem. Thus, we need to identify the best feature subset of $F$ according to a function $J(E)$ that evaluates a feature subset $E \subseteq F$ on the dataset $\mathbf{D}$. The evaluating function $J$ can either be an independent criterion (in a filtering approach) or an induced model (in a wrapper approach). In both cases, $J$ returns a score that assess the quality of the selected subset $E$ and has a computational cost $\mathcal{O}(J)$ (e.g., that may be the computation cost of the model). In the case of an exhaustive search, all $2^p$ subsets must be examined in order to find the best one. The overall complexity is therefore $\mathcal{O}(2^p) \mathcal{O}(J)$ but guarantees optimality. This can be conceivably performed, if $p$ is not too large \citep{guyon2003introduction} but otherwise it is computationally intractable. {In the case of a heuristic search\footnote{{Let us notice that complexities of those sequential search algorithms given in \citep{kira1992feature} rather correspond to approaches that exhaustively consider all subsets of sizes lower than (resp. greater or equal) $d\le p$ for the sequential forward (respectively backward) selection.}} (either SFS or SBE), the complexity\footnote{{Note that the size of the selected feature set can be fixed ($d \le p$) to stop earlier these sequential searches, but this does not change the complexity.}} is $\mathcal{O}(p^2)\mathcal{O}(J)$, which is much less than $\mathcal{O}(2^p)\mathcal{O}(J)$.}
			Both are then much more efficient than exhaustive search but may not yield optimal results as the procedure may miss some feature interactions. It should be stressed that the evaluating function cost may overburden the overall search complexity and therefore it is important to have an efficient and reliable evaluation of each subset. 
			In Chapter~\ref{ch:SRS}, we propose to use a computationally inexpensive model (i.e., a randomised tree, see Chapter~\ref{ch:trees} for a definition) to perform a multivariate sequential feature selection.
			Let us also mention that \cite{nilsson2007consistent} showed that if the distribution is restricted to be strictly positive (i.e., all weakly relevant variables are necessarily redundant and thus can be ignored): the minimal-optimal problem can be solved in polynomial time in the number of features and SBE approaches become consistent (but SFS ones do not).
			
\end{sidenote}

\subsection{Discussion}\label{sec:background-discussion}

This section aims at reviewing some of the main limitations of feature selection and open problems that motivate some of the  research questions considered in the rest of this thesis.

\paragraph{Feature relevance in the context of others}

Multivariate approaches are usually preferred over univariate ones because they take into account feature dependencies even though they are computationally less efficient. It shows that feature dependencies is crucial in many applications. 
Relevant features (even strongly) can be marginally irrelevant while being (highly) relevant in combination with other features \citep{domingos1996exploiting,guyon2003introduction}. A well-known example is the exclusive-OR (XOR) structure \citep{guyon2003introduction,kohavi1997wrappers}. 
Redundancy (another form of feature dependency) may tone relevance or usefulness of features down. Consequently a feature may be not selected (or identified as relevant) while being highly marginally relevant.

The problem of finding all relevant features requires thus to carefully take into account feature dependencies and the only way to do so is to perform an exhaustive search \citep{nilsson2007consistent}, especially to identify weakly relevant features. {
For example, a sequential forward selection would systematically fail in the identification of relevant features structured such as \textit{cliques}, i.e. all features are relevant together but are irrelevant in any subset of the clique. Indeed, let us for instance consider a clique made of two features, i.e. an XOR structure. SFS evaluates the relevance of features in the context of already selected ones. In our example, if both features are marginally irrelevant, then none will be selected preventing also the identification of the other feature. SFS is thus unable to identify features that are only relevant in the context of non-selected ones.
Nevertheless, features that make other features relevant need to be relevant as well and thus may be end up being selected \citep{sutera2018random}. However, if such structures are excluded (e.g., by considering only PCWT distributions), the exhaustive search is not required any more and the all-relevant problem can be solved efficiently \citep{nilsson2007consistent}.} Complex feature structures such as the clique are studied in Chapter~\ref{ch:SRS} in the context of tree-based feature selection. 

Last but not least, the confounding effect is an indirect feature interaction.  One input feature may seem irrelevant to the target but another feature, an external feature known as a confounding factor, provides the key to understand the relationship between the input feature and the output. This confounding effect can be enlarged to features that appear at first sight to be irrelevant but, taking into account the context, are indeed relevant. Such feature interactions are studied in Chapter \ref{ch:context}.

\paragraph{Feature ranking is limited for interpretation}

Feature ranking is extremely limited as it only provides a single ordering of features. This ranking can not render the full complexity of feature interactions or the multiplicity of optimal subsets of features. {The subset evaluation function is also critical, e.g. a univariate criterion will only rank features according to their marginal relevance missing potential interactions}.

In all generality, the most relevant features are not necessary the best ones (or the only ones) to select \citep{guyon2003introduction}. The top-ranked feature may be a rather good feature to predict the target but some other features with lower ranks may perfectly discriminate the target together. Redundancy may have lowered the rank of redundant but highly relevant features \citep{guyon2003introduction}. 
Selecting a top-ranked feature may also be counter-productive, {e.g. selecting only one feature of a clique is not interesting without all the rest of the clique (which might typically be much lower in the ranking).} Although very useful, feature selection/ranking methods however only provide very limited information about the often very complex input-output relationships that can be modelled by supervised learning methods. There is no information about feature dependencies in a classical feature ranking. In case of a contextual effect, two similar ranked features may have totally different roles. One may be always relevant while the relevance of the other one depends on the context. The interpretation is totally different but the rank similarity seems to indicate that they are similarly relevant as well.

Feature ranking does not allow to 
distinguish among features that are directly related to the output and those that influence it only indirectly. 
Applications focusing on direct links (e.g., network inference \citep{de2010advantages,huynh2010inferring,altay2011differential,marbach2012wisdom}, see also Chapter \ref{ch:connectomics}), must therefore filter out the indirect component from feature selection methods.

 There is thus a high interest in designing new techniques to extract more complete information about input-output relationships than a single global feature subset or feature ranking.  A first step towards more interpretable results could be to derive more than one (relevance) score to capture the interest of a feature in several settings. Chapter~\ref{ch:context} extends classical tree-based feature ranking to incorporate a contextual analysis.

\paragraph{Finite sample size makes feature selection more difficult}

High dimensionality together with small sample-size are nowadays typical in many application domains and it poses a great challenge for classical machine learning techniques \citep{raudys1991small,braga2004cross,molinaro2005prediction,saeys2007review} and in particular for feature selection \citep{sima2006should,saeys2008robust,meinshausen2010stability,kuncheva2007stability,bolon2015recent,kuncheva2018feature}. In such conditions, feature selection is however all the more interesting and may help, for example, to counter-balance the disadvantageous features/samples ratio by reducing the number of variables. Nevertheless, studies show that selecting features in such datasets (e.g., micro-arrays of gene-expression) is less reliable \citep{jain1997feature,sima2006should}. In this case, feature selection methods may not necessarily provide a close-to-optimal feature set (i.e., whose  error is close to the minimal achievable error) \citep{sima2006should,hua2009performance}. They also may be unable to find a satisfying feature subset and this does not imply either that an optimal subset does not exist \citep{sima2006should,hua2009performance}. 

Despite an expensive computational cost, the evaluation function must be properly (cross-)validated\footnote{\cite{meinshausen2010stability} however claim that cross-validation may fail for high-dimensional data and alternatively propose a stability selection based on subsampling in combination with selection algorithms.} to avoid the risk of overfitting and overestimated accuracy performances (known as the so-called "\textit{peeking phenomenom}"\footnote{It occurs when data dedicated for testing the model is already used in a pre-processing stage such as feature selection. This results in an optimistically biased estimation of accuracy performances for the selected model \citep{diciotti2013peeking,kuncheva2018feature}.} or as "\textit{selection bias}" problem \citep{ambroise2002selection}) \citep{reunanen2003overfitting,smialowski2009pitfalls,pereira2009machine,diciotti2013peeking,kuncheva2018feature}. Hybrid data (i.e., coexistence of categorical and numerical data) are also worthy of attention \citep{wang2016efficient,jiang2016efficient}. 

In small sample-size conditions, small changes  (e.g., addition/removal of samples or noise added to features \citep{saeys2008robust}) may have a strong influence on the selected feature subset\footnote{Let us note that the existence of multiple sets that are equally good may also lead to some instability in selected feature sets \citep{he2010stable}.}. For the sake of interpretation for instance, one would usually prefer some stability in the outcomes of feature selection algorithm. In a cross-validation feature selection, this would be highly undesirable to have tremendously different selected feature sets from two folds drawn from the same dataset. Stability of feature selection with respect to sampling variation have drawn researchers' attention as another step towards a more robust feature selection \citep{kuncheva2007stability,kalousis2007stability,saeys2008robust,saeys2008towards,abeel2009robust,he2010stable}. 

In small sample-size conditions, irrelevant variables may seem relevant due to random fluctuations. Indeed, the risk of having spurious associations between irrelevant features and the output increases with a decreasing sample-size, especially if the number of features is large \citep{kursa2011all}. Discerning barely but truly relevant from falsely relevant features is a common issue in feature selection with high-dimensional datasets. Solutions, such as introducing an artificial random contrast variable \citep{stoppiglia2003ranking,tuv2006feature,rudnicki2006statistical,kursa2011all, huynh2012statistical} or using dimensionality reduction techniques (by random projections, see random subspace method \citep{ho1998random} in Chapters \ref{ch:trees} and \ref{ch:SRS}), are required to do so. \\

\begin{summary}
Supervised machine learning aims at exploiting a learning set to gain understanding about the interactions among input features and a target output and to build models to make as accurate as possible predictions of the target based on a subset of the inputs. When considering the relation between the input features and the target output, several notions of relevance and redundancy have been defined in the literature and are of interest. These notions may be exploited in many different ways in order to propose feature ranking and feature selection algorithms. Feature selection is often paramount in order to optimize the accuracy of machine learning algorithms, specially in the context of small sample-size and/or high-dimensionality. More and more practical applications are concerned. 
\end{summary}


\chapter{Decision trees and ensemble methods}
\label{ch:trees}

\begin{overview}
In this chapter we explain the essential ideas of tree-based supervised learning methods. We focus on classification problems, i.e. supervised learning problems where the target variable $Y$ takes a finite number of unordered values called classes. Occasionally we however mention how presented ideas would carry over to the case of regression trees. Our goal is to provide the required notions used in subsequent chapters, while also providing an intuitive understanding of the main features tree-based supervised learning. After a brief introduction, Section \ref{sec:DT} provides the main building blocks, namely single decision trees and their greedy recursive partitioning based learning algorithm. Then, in section \ref{sec:RF}, we consider tree-based ensemble methods, and more particularly those used in the subsequent chapters. 
\end{overview}

\epianonymous{May the forest be with you.}{}

\section{Introduction}

A popular and classical approach to solve a complex problem is the divide-and-conquer strategy. It consists in (recursively) dividing the problem into several sub-problems easier to solve. The solution of the original problem is then a combination of the sub-problem solutions. Based on that strategy, the \textit{recursive partitioning method} aims at simplifying a task to carry out on a set of elements (e.g., sorting, labelling, \dots ) by recursively dividing the set into smaller and smaller subsets in such a way that doing this task is easier in each subset than in the original set.
For example, sorting can be achieved efficiently using this strategy: the \textit{merge-sort algorithm} recursively divides the list of elements into smaller and smaller groups until each sub-group is easy (or trivial) to sort, and then combines sorted sub-lists.

The decision tree algorithm successfully applies this method to provide a supervised learning model that partitions the input space into distinct (smaller) subspaces \citep{breiman1984classification,quinlan1986induction,quinlan2014c4}. As a sub-problem, an output value is then assigned to each subspace. From there, the prediction of a new object simply consists in identifying the subspace in which it falls to retrieve its predicted output value.

Single decision trees are simple and consistent supervised models making them easy to use and to understand. They however suffer from variance, and their accuracy performances are consequently affected. 

In order to circumvent variance issues and thus improve model performances, \cite{ho1998random,dietterich2000experimental,breiman2001random} were among the firsts to propose to grow an ensemble of trees instead of settling for a single one. Making a prediction by letting every tree vote and then aggregating these votes results in significant improvement in accuracy.  Many state-of-the-art algorithms stemmed from that idea, including random forests and boosting methods. In particular, a random forest is an ensemble (i.e., a forest) of randomised trees and is at the centre of this thesis. Randomisation is introduced to create some diversity between trees of the same ensemble. The motivating assumption of this approach is that the prediction of an ensemble of weak models is better than the prediction of a single (supposedly stronger) model. 

Furthermore, the success of tree-based methods is also explained by their following common characteristics \citep{geurts2002contributions,louppe2014understanding}:
\begin{description}
	\item[non-parametric nature] 
	by not requiring a priori assumptions on the relationships between inputs and output,
	\item[ability to handle heteregeneous data] 
	by handling learning sets made of a mix of continuous, discrete (ordered or not), and categorical variables (but not necessarily fairly, see Section \ref{sec:biases-cardinality} for more details),
	\item[robustness to outliers or errors in labels] by usually avoiding to completely modify the model to fit a few spurious values in the data, 
	\item[robustness to irrelevant or noisy variables] 
	by automatically selecting the most useful (and relevant) features to build the tree structure (at least to some extent, see Chapters \ref{ch:importances} and \ref{ch:mdi} for more details),
	\item[interpretability] by providing a decision path (with decision trees) or an importance degree for used features (with ensemble methods, see Chapters \ref{ch:importances} and \ref{ch:mdi} for more details),
\end{description}

In this chapter, Section \ref{sec:DT} describes the decision tree algorithm. Then, Section \ref{sec:RF} presents ensemble methods as a way of circumventing the high variance of decision trees.

\section{Supervised Learning with Decision trees}\label{sec:DT}

\subsection{Semantics of tree based prediction models}

\subsubsection{From graph theory to decision tree terminology}

In all generality, let $G=(V,E)$ be a graph where $V$ is a finite set of \textit{nodes} $t$ (also denoted as \textit{vertices} in graph theory), and $E \subset V \times V$ is the set of \textit{edges}. The graph is called \textit{undirected}, if $(t_{i},t_{j}) \in E$ implies that also  $(t_{j},t_{i}) \in E$. In graph theory, a tree is an undirected graph in which any two vertices are connected by exactly one (undirected) path.

We use the term \textit{tree structure} to denote a directed graph obtained from a tree by choosing a node as the root (denoted $t_0$), and by directing all edges `away' from this root (see Figure \ref{fig:treestructuregraph} for an illustrative example). A \textit{branch} $(t_i,t_{i+1})$ is an edge going from $t_i$ towards $t_{i+1}$ where $t_i$ is called \textit{the parent} of $t_{i+1}$, and $t_{i+1}$ is \textit{a  child} of $t_i$. A node is \textit{internal} if it has at least one child, and \textit{terminal} (also known as \textit{leaf node} in the tree terminology) if it has no children.\footnote{Internal nodes generally have several children, while every node has exactly one parent. The number of branches of a tree structure is always equal to its number of nodes minus 1.}

Figure \ref{fig:treestructuregraph} gives an example of a tree structure (i.e., a tree-structured graph). It is represented with the (internal) root node $t_0$ on top and such that nodes at the same depth (i.e., distance with respect to the root node) are horizontally aligned. Nodes $t_1$ and $t_2$ are also internal because they respectively have the children $t_3,t_4$ and $t_5,t_6$. Here $t_3,t_4, t_5,t_6$ are the leaves of the tree.

The following section describes a decision tree model: a tree structure with an additional layer of information.

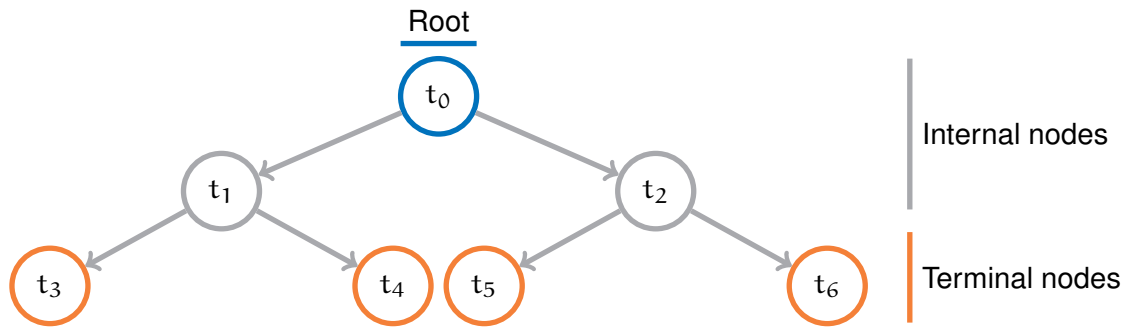
\begin{figure}[htbp]
	\centering

	\begin{tikzpicture}[node distance=0.5cm]
	\mysimpletreestructure

	\end{tikzpicture}
	\caption{Example of a tree-structure. }
	\label{fig:treestructuregraph}
	
\end{figure}

\subsubsection{A tree  structure shaped by the features}

A tree structure $G_T$ recursively partitions the input space $\mathcal{X}$ into subsets where each node $t$ is associated to one specific subset $\mathcal{X}_t$. The subsets corresponding to the terminal nodes are disjoint and such that their union is the original input space $\mathcal{X}$, i.e., $\cup_{t\, \text{is terminal}} \mathcal{X}_t = \mathcal{X}$. The subset corresponding to an internal node is the union of the subsets attached to its children; hence the subset corresponding to the root is always the whole input space. To define all these subsets the tree structure uses features as building blocks. Each internal node typically uses one specific feature, in order to partition its own subset into the subsets corresponding to its children. 

In all generality, a \textit{split} $s$ is a partition of a set $\mathcal{L}$ into a finite number of non-empty and disjoint subsets $\mathcal{L}^i$.\footnote{i.e. such that $\forall i : \mathcal{L}^i \neq \emptyset$, $\forall i \neq j:  \mathcal{L}^i \cap \mathcal{L}^j = \emptyset$, and $\cup_i \mathcal{L}^i = \mathcal{L}$.} In other words, every element of $\mathcal{L}$ belongs to one and only one $\mathcal{L}^i$.
A split on a node $t$, also known as a \textit{test} and denoted by $s_t$, is a split of $\mathcal{X}_t$ using the value of a feature to compute the partition. A \textit{split variable} $v(s_t)$ is the variable on which the test $s_t$ is based and is the one that corresponds to node $t$ in the tree structure.

The cardinality of a split $s_t$, denoted $|s_t|$, corresponds to the number of created subsets, or equivalently the number of possible test outcomes. Cardinalities may or may not be the same for all $t$. The cardinality $|s_t|$ also determines the number of children of node $t$ (i.e., the \textit{node cardinality}) and may depend on the number of possible values for the split variable (the \textit{variable cardinality}). 

A split is said to be \textit{binary} if exactly two subsets are created. However, a node can be divided in more than two by a so-called \textit{multiway} splits. A multiway split is said to be \textit{exhaustive} if the split cardinality is equal to the number of values of the split variable (i.e., one value per branch).

Some authors have looked at more exotic splits. An \textit{oblique split} is made by using a linear combination of several numerical features to create the partition.\footnote{Such splits are said to be \textit{oblique} because they produce separating hyperplanes that are not axis-parallel like classical splits made on a single numerical feature. They lead to shorter trees but are more complex to learn \citep{heath1993induction,murthy1995growing,rokach2008data}.} In an even more general framework, \textit{multivariate splits} also consider complex models (e.g., a decision tree \citep{botta2013walk}) as separating functions, extending axis-parallel and oblique splits \citep{gama2004functional}. \textit{Fuzzy trees} do not longer consider disjoint subsets for children but take advantage of the fuzzy logic to allow some (uncertain) samples to be in several terminal nodes \citep{janikow1998fuzzy,olaru2003complete}.

\begin{figure}[htbp]
	\centering
	
	\begin{tikzpicture}[node distance=0.5cm]
	\mysimplebinaryclassificationtreeexample

	\end{tikzpicture}
	\caption{Example of a decision tree model: a tree-structure that recursively splits the $2\times 2$ input space with four colours.}
	\label{fig:binarytreeexample}
	
\end{figure}
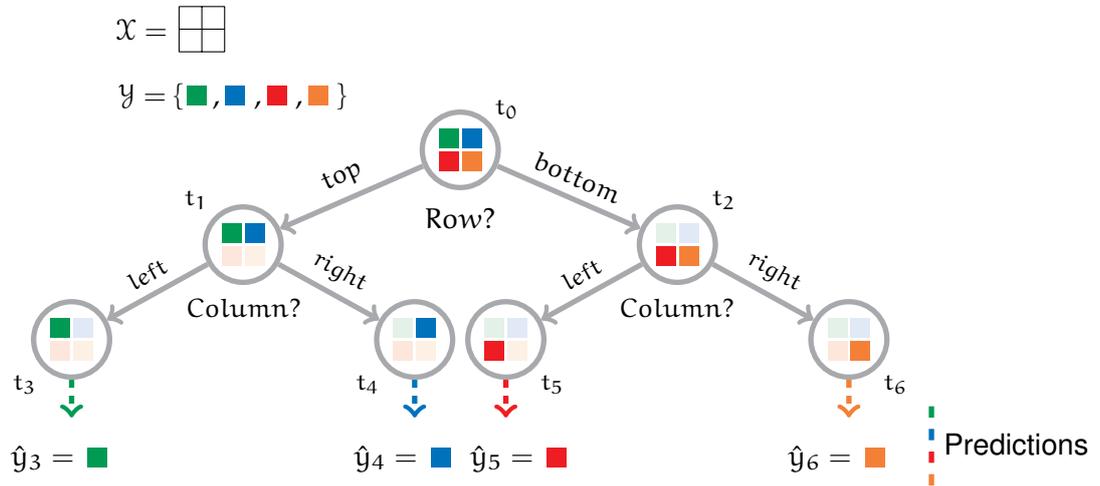

\subsubsection{Decision tree models}

A \textit{decision tree model} $T: \mathcal{X} \rightarrow \mathcal{Y}$ recursively partitions the input space $\mathcal{X}$ into subspaces to provide an input-output model in the form of a tree structure (see Figure \ref{fig:binarytreeexample} for an illustrative example). The model is such that 
\begin{enumerate}
	\item each node $t$ corresponds to one subset $\mathcal{X}_t\subseteq \mathcal{X}$, 
	in particular the one associated to the root node is the input space $\mathcal{X}$ itself,
	
	\item each internal node $t$  is labelled with a split $s_t$, 
	
	\item each branch going from an internal node $t$ indicates one possible outcome $i$ of the split $s_t$, and leads to one child $c_i$ of $t$ such that its subset is  $\mathcal{X}_{c_i} = \mathcal{X}_t \cap \mathcal{X}^i$ where $\mathcal{X}^i \subset \mathcal{X}$ is the subset of inputs satisfying outcome $i$, 
	\item all terminal nodes $t$ have their subsets (called \textit{terminal subsets}) assigned to a predicted value $\hat{y}_t\in \mathcal{Y}$; $\hat{y}_t$ is also called the label of the leaf $t$. 
\end{enumerate}

Figure \ref{fig:binarytreeexample} shows a decision tree model that decomposes an input space of two dimensions (represented by a $2\times 2$ matrix) with four possible output values (i.e., \textit{green}, \textit{blue}, \textit{red}, or \textit{orange}) using the tree-structure of Figure \ref{fig:treestructuregraph}.
The root node $t_0$ corresponds to the complete input space $\mathcal{X}$. Its split is made on the vertical axis ("Which row?") and gives two children ($t_1$ and $t_2$) corresponding to the two possible outcomes (i.e., top or bottom). Each child has its own subset that is still made of two colours. By splitting them on the horizontal axis ("Which column?"), we obtain four terminal nodes, each with a subset of only one colour. At this point, there is no interest in further partitioning these subsets. The output label associated to each terminal node is immediate and corresponds to the remaining colour. The prediction of the model for a new input value $\mathbf{x}$ is the associated value of the terminal node reached by $\mathbf{x}$.
Let us observe that, for each internal node $t$, the input subsets of its children are disjoint and their union is the subset of that node $t$, i.e., $\mathcal{X}_t = \cup_{i=1}^{|s_t|} \mathcal{X}_{c_i}$.\\

Let us consider a more realistic classification problem, described in Example \ref{ex:example_tree}, that will be used to illustrate the two following decision tree models.  
\begin{example}\label{ex:example_tree}
	Let us consider a classification problem with two input variables $X_1$ and $X_2$ with two possible output classes $c_1$ and $c_2$. Figure \ref{fig:example_data} illustrates the learning set where each input variable corresponds to one dimension. At first sight, based on Figure \ref{fig:example_data(a)}, this is not straightforward to give a model that will perfectly separate the two classes. For the sake of illustration, Figure \ref{fig:example_data(b)} gives a decomposition of the input space that provides a perfect separation between objects of different classes. 
\end{example}

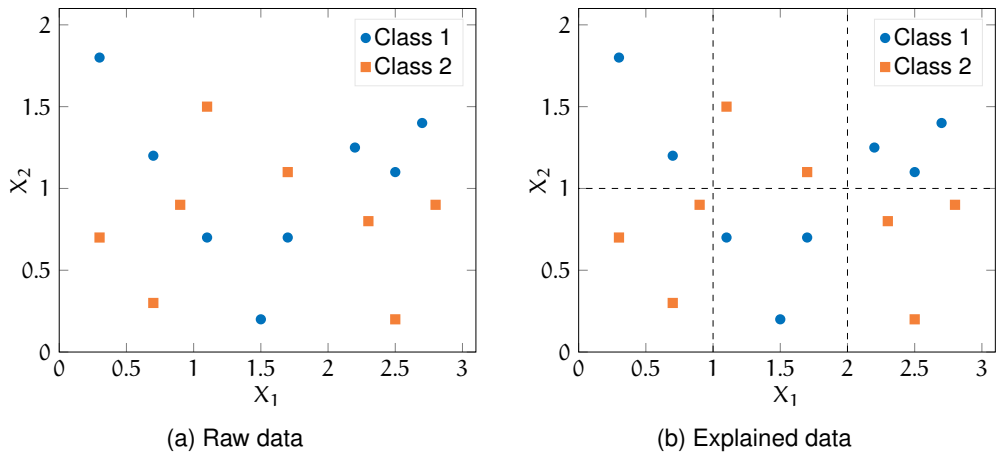
\begin{figure}[htbp]
	
	\subfloat[Raw data]{
		\begin{tikzpicture}[x=2cm,y=2cm,scale=0.8]
		\myclassificationexample
		
		\end{tikzpicture}\label{fig:example_data(a)}}\hspace{1em}
	\subfloat[Explained data]{
		\begin{tikzpicture}[x=2cm,y=2cm,scale=0.8]
		\myclassificationexampleexplained
		\end{tikzpicture}\label{fig:example_data(b)}}
	\caption{Example of a classification problem with two input variables $X_1$ and $X_2$ and two possible values of the output $y$ ($c_1$ and $c_2$). Blue dots correspond to objects class $c_1$ while orange squares correspond to objects class $c_2$. On the right figure, the underlying decomposition of the input space is explicitly given.}
	\label{fig:example_data}
\end{figure}

\begin{definition}
		A \textbf{binary decision tree} is a decision tree model in which all internal nodes have exactly two children.
\end{definition}

This is the case when all splits are binary, that is to say, when there are only two possible outcomes (e.g., \textit{true} or \textit{false}, \textit{yes} or \textit{no}), or when all input features are binary.

 A split $s$ divides the input space between the part that satisfies the test $\mathcal{X}^s$ and the rest $\mathcal{X}^{\bar{s}}$.
 Therefore, the input subspace of the left child $c_l$ of $t$  (i.e., satisfying the test) is $\mathcal{X}_{c_l} = \mathcal{X}_t \cap \mathcal{X}^s$
, and the input subspace of the right child $c_r$ is $\mathcal{X}_{c_r} = \mathcal{X}_t \cap \mathcal{X}^{\bar{s}} = \mathcal{X}_t \cap (\mathcal{X} \setminus  \mathcal{X}^s)$.

Figure \ref{fig:example_bintree} shows a binary classification tree applied on Example \ref{ex:example_tree}.\\

\begin{figure}[htbp]
	\centering
	\scalebox{0.9}{	
	\begin{tikzpicture}[x=2cm,y=2cm,minimum size=0.8cm]
	\mybinaryclassificationtreeexample
	\end{tikzpicture}
	}
	
	\caption{Example of a binary classification tree applied on Example \ref{ex:example_tree}.}
	\label{fig:example_bintree}
\vspace*{10mm}
	
	\scalebox{0.9}{
	\begin{tikzpicture}[x=2cm,y=2cm,minimum size=0.8cm]
	\centering
	
	\mymultiwayclassificationtreeexample
	
	\end{tikzpicture}
	}
	\caption{Example of a multiway classification tree applied on Example \ref{ex:example_tree}.}
	\label{fig:example_multitree}
\end{figure}

Decision trees are typically binary but they can also be built using multiway splits. 
Figure \ref{fig:example_multitree} illustrates a multiway decision tree applied on Example \ref{ex:example_tree}. In comparison with the binary decision tree of Figure \ref{fig:example_bintree}, threeway splits are used at the second level to 
create three children corresponding to three intervals of values of $X_{1}$ (in the example, intervals $[0,1],]1,2],]2,3]$). Notice that while the 
binary tree uses two more splits, it manages to find the same final partition.

When all input variables are categorical, let us define a decision tree using multiway exhaustive splits: 
\begin{definition}
	Let all input variables $V=\{X_1, \dots, X_p\}$ be categorical. A \textbf{multiway exhaustive decision tree} is a decision tree model in which splits on feature $X_{i}$ yield exactly $|X_i|$ children, namely one for each possible value of the split variable $X_i$.
\end{definition}

Multiway exhaustive splits\footnote{In the rest of this thesis, multiway splits on categorical features will always be exhaustive, i.e., one child for each value and not for only a subset of values. Therefore, the term "exhaustive" is sometimes omitted.} are typically of various cardinalities as they depend on the number of possible values of each split variable. 
Notice that for such a tree, the maximal depth is limited by the number of features, as each feature can be used at most once along a path.

\subsection{Learning a decision tree model from data}

The tree model aims at fitting at best the partition induced by $\mathcal{Y}$ over $\mathcal{X}$ and thus approximating the Bayes model (i.e., the optimal model yielding the lowest error rate). 
In practice, the partition induced by $\mathcal{Y}$ over $\mathcal{X}$ is unknown and the input space is only partially observed through a learning set. Given a learning set $\mathbf{LS}$, a decision tree model $T^{\mathbf{LS}}$ is learnt on $\mathbf{LS}$ and provides a partitioning of $\mathbf{LS}$, denoted $\varphi$. While growing the decision tree, the objective is to find the partitioning $\varphi$ that provides the lowest possible error rate, the optimal induced partitioning $\varphi^*$. Assuming that the learning set represents faithfully the input space, $\varphi^*$ should be close to the partition induced by $\mathcal{Y}$ over $\mathcal{X}$.

The tree learning algorithms that we consider in this thesis (and which have become a standard in supervised learning) proceed in a top-down fashion, by starting with the root node and progressively developing the tree structure, while at each step choosing a node to split and a way to split the node, until the tree fits the learning sample sufficiently well (see side note on page \pageref{sn:tree-growing-algo}). 

\begin{sidenote}{Generic top-down decision tree growing algorithm} \label{sn:tree-growing-algo}
\begin{itemize}
\item \emph{Initialization:} create the root node of the tree, attach the whole learning set to this node, and set the list of open nodes to contain only this node.
\item \emph{Recursion:} {until the list of open nodes is empty, remove a node from the list of open nodes {(following a given growing strategy\footnote{{Well-known strategies are \textit{depth-first}, \textit{breadth-first} or \textit{best-first}. Each strategy may yield different decision trees if the stop splitting rule is global, i.e. based on the whole tree.}})}, and decide whether this node should be split:} 
\begin{itemize}
\item If yes, the node becomes a test node, and a good split for it is determined and used to split the learning set of the node into two or more subsets. For each subset a child node is created and inserted in the list of open nodes. 
\item If no, the node becomes a leaf and a class label is assigned to it based on its learning subset.
\end{itemize}
\end{itemize}
\end{sidenote}

This procedure 
aims at finding a suitable tree structure, and at associating the right class label to each one of its terminal nodes. 
This thought has been summarised by \cite{breiman1984classification} as follows:

\begin{displayquote} \textit{
		``It turns out that the class assignment problem is simple. The whole story [of the construction of a tree] is in finding good splits and in knowing when to stop splitting.'' \citep{breiman1984classification}
}\end{displayquote}

The three next sections are dedicated to a detailed description of these three key steps of a decision tree learning procedure. In Section \ref{sec:splittingrules}, we describe how to find the variable (and the associated test) that provides a "good" split for a learning subset. In Section \ref{sec:stoppingcriteria}, we review some stopping criteria that define the end of the building process. In Section \ref{sec:treeprediction}, how to choose the labels attached to leaves and used for making predictions.

\subsubsection{Splitting rules} \label{sec:splittingrules}

\paragraph{The impurity framework} The \textit{growing/learning procedure} of a decision tree model recursively divides the learning set in subsets of learning samples $\mathbf{LS}_t$ where $\mathbf{LS}_t$ is the set of all objects reaching node $t$ (i.e., $\mathbf{LS}_t = \{(\mathbf{x},y)| \mathbf{x} \in \mathcal{X}_t\}$). For a given node $t$ and its set of learning samples $\mathbf{LS}_t$, let us define $p(c_j|t)$ as the proportion of samples in $\mathbf{LS}_t$ such that $y=c_j,\, c_j\in\mathcal{Y}$. The sum of $p(c_j|t)$ for all $c_j\in \mathcal{Y}$ is $1$. Based on $\mathbf{LS}$, the learnt model tries to mimic the optimal induced partitioning $\varphi^*$. 

A good decision tree is one that minimises the generalisation error while minimising some complexity criterion of three, e.g., the size of the tree. Even though several trees can equivalently represent the optimal partitioning $\varphi^*$, the shorter tree is usually the easiest to interpret and consequently the best one. Naively, one can generate all possible decision trees in order to keep the best one (minimising a criterion depending on the accuracy performances and the complexity of the model). However, even if the number of trees may be finite when the number of (discrete/categorical) features is limited, this number can increase exponentially and becomes intractable from a computational point of view when considering a large number of (continuous) features. 

Circumventing the intractability of an exhaustive search for the optimal tree model (giving $\varphi^*$), the idea of \cite{breiman1984classification}'s heuristic algorithm is to keep splitting nodes until they are (almost\footnote{The purity of a node is a natural stopping criterion, but some other criteria exist and may stop the growing process before having pure nodes. See Section \ref{sec:stoppingcriteria} for more details.}) \textit{pure}. The resulting partitioning is expected to be close to $\varphi^*$. A node $t$ is \textit{pure} when all learning samples reaching that node ($\mathbf{LS}_t$) are of the same class label $c_j$ ($p(c_j|t)=1,\, c_j \in \mathcal{Y}$ and $p(c_i|t)=0$ for all $c_i\ne c_j$,$c_i\in \mathcal{Y}$) (see terminal nodes of Figures \ref{fig:binarytreeexample}, \ref{fig:example_bintree} and \ref{fig:example_multitree}). Hereafter, we refer to the output distribution of a pure node as a \textit{pure distribution}.
A pure node is always terminal because there is no gain in splitting more its samples. Conversely, the impurity of a node is the largest when all class labels are equally likely ($p(c_j|t) = p(c_i|t)$ for all $c_i,c_j \in \mathcal{Y}$). 

From that, one can logically assume that the purer a node is, the more striking is the majority class making the prediction easier and usually better.

Following the framework of \cite{breiman1984classification}, let us define an \textit{impurity measure} $i(t)$ as a non-negative function $\phi$ that evaluates the purity of a node $t$ from the vector of class proportion samples $\pi$ (where the $j^{th}$ term of $\pi$, $\pi^j = p(c_j|t)$) and verifies the following three properties \citep{breiman1984classification,joly2017exploiting}:
\begin{enumerate}
	\item $i(t)$ is minimal (typically equal to $0$) when the node $t$ is pure, i.e., $p(c_j|t) = 1$ for some $c_{j} \in \mathcal{Y}$ and $\forall  c_{i} \neq c_{j} : p(c_i|t)=0$, 
	\item $i(t)$ is maximal only when the distribution of output values in $\mathbf{LS}_t$ is uniform, i.e.  such that $p(c_i|t) = \frac{1}{|\mathcal{Y}|}$ for any $c_i \in \mathcal{Y}$,
	\item $i(t)$ is not biased towards some output values (symmetrical with respect to the class proportion samples), e.g., the impurity measures of two nodes $t_1$ and $t_2$ are the same if $\pi_2$ is a permutation\footnote{The same numerical values but not necessarily in the same order.} of $\pi_1$.
\end{enumerate}

\paragraph{The goodness of a split}
A good split is one that reduces the impurity $i(t)$ of a node $t$, i.e., such that children of $t$ are purer than $t$ itself. The goodness of a split dividing a node $t$ in two\footnote{For the sake of clarity, only binary splits are considered hereafter but one can naturally generalise what follows for multiway splits by considering $|s_t|$ children instead of two.} can be formalised using the impurity measure as follows:
\begin{definition}
	Let $s$ be a binary split that divides a node $t$ into a left node $t_L$ and a right node $t_R$. 	
	The decrease of impurity is
	\begin{eqnarray}
	{\Delta {i(s, t)}} &=& i(t) - \dfrac{N_{t_L}}{N_t} i(t_L) - \dfrac{N_{t_R}}{N_t} i(t_R)\\
	&=& i(t) - p_{t_L} i(t_L) - p_{t_R} i(t_R)
	\end{eqnarray}
	where $N_t$ is the number of learning samples in node $t$, $N_{t_L}$ and $p_{t_L}$ (respectively, $N_{t_R}$ and $p_{t_R}$) are the number of samples and the proportion of samples that fall into $t_L$ (resp., $t_R$).
\end{definition}

We will discuss later on several impurity measures that may be used for growing decision trees. Once the impurity is chosen, the greedy procedure for growing a decision tree consists in searching at each node for the split that yields locally the largest decrease of impurity $\Delta i(s,t)$ among all valid splits.

\paragraph{Candidate splits for different types of features} 
{Let $S_{t,m}$ be the set of all candidate splitting functions for node $t$ on feature $X_m$, consisting of all candidate ways to divide $\mathcal{X}_{t,m}$ in two non-empty subsets, where $\mathcal{X}_{t,m}$ denotes the set of all values of $X_m$ observed in the learning sample of node $t$.

\paragraph{}If $X_m$ is an \textit{unordered variable}, defining a split amounts to find two non-empty subsets $\mathcal{X}_{t_L,m}$ and $\mathcal{X}_{t_R,m}$ such that every element of $\mathcal{X}_{t,m}$ is in one and only one of them, i.e., $\mathcal{X}_{t,m}= \mathcal{X}_{t_L,m}\cup \mathcal{X}_{t_R,m}$ and $ \mathcal{X}_{t_L,m} \cap \mathcal{X}_{t_R,m}=\emptyset $. In that case, $S_{t,m}$ can be formally defined as follows:
\begin{eqnarray} S_{t,m} = \{s(\mathbf{x}) = \mathbb{1}(x_m \in \mathcal{X}_{t_L,m})  | \mathcal{X}_{t_L,m} \subset \mathcal{X}_{t,m} \}
\end{eqnarray}  
where $\mathbf{x}$ is a vector of input values and $x_m$ is the value of $X_m$. All splits guide samples whose value $x_m$ is in $\mathcal{X}_{t_L,m}$ in the left child, while all others go in the right child.
Let us note that $\mathcal{X}_{t_L,m}$ must be non-empty, and a proper subset of $\mathcal{X}_{t,m}$ to ensure that $\mathcal{X}_{t_R,m}$ is also non-empty.\footnote{In practice, it prevents one of the child nodes from having zero learning samples (i.e., $N_{t_L} = 0$ or $N_{t_R} = 0$) which corresponds to a split devoid of interest.} 
A combinatorial analysis gives that the number of possible splits $|S_{t,m}|$ is equal to $2^{|X_{t,m}|-1}- 1$ where $|X_{t,m}|$ is the cardinality of $X_{t,m}$.\footnote{Taking into account the fact that exchanging $\mathcal{X}_{t_L,m}$ with $\mathcal{X}_{t_R,m}$ leads to an equivalent split.}}

\paragraph{}If $X_m$ is an \textit{ordered variable}, the logic between values should be preserved by the split. Consequently, the two disjoint non-empty subspaces $\mathcal{X}_{t_L,m}$ and $\mathcal{X}_{t_R,m}$ must be such that every element in one subspace has a split variable value strictly lower than the split variable value of any element from the other subspace, i.e., $\mathbf{x}_{t_L,m} < \mathbf{x}_{t_R,m}$ for all pairs $(\mathbf{x}_{t_L}, \mathbf{x}_{t_R}) \in \mathcal{X}_{t_L,m} \times \mathcal{X}_{t_R,m}$. 
{An equivalent way to fulfil that condition is to determine a threshold value $\tau$ (also called cut-point), and  to assign every value below $\tau$ to the left child and to the right child otherwise, i.e.: 
\begin{eqnarray}
S_{m} = \{s(\mathbf{x}) = \mathbb{1}(x_m \le \tau)  | \tau \in \mathcal{X}_{m} \}
\end{eqnarray} where $\tau$ is a threshold value referred to as the \textit{cut-point} of the split. 

In practice, it suffices to consider a single candidate cut-point between each pair of successive values of the concerned feature observed in the learning subset of the node $t$ (in most implementations it is the mid-point). Indeed, different cut-points between a given pair of such successive values yield the same partition of the learning sample of the considered node, and are thus equivalent from the viewpoint of impurity reduction. The number of different splits to consider is thus $|S_{t,m}| = |X_{t,m}| - 1$.  Let us however notice that all cut-points between two successive values (as observed in the learning set) are not necessarily equivalent outside the learning set (see Figure \ref{fig:splitselection} for an illustrative example).}

	\begin{figure}[hbtp]
	\centering
	\begin{tikzpicture}[x=4cm,y=4cm]
	
	\mysplitselection
	
	\end{tikzpicture}
	
	\caption{Split selection. Projection on the $X_1$ axis of samples reaching the second node that splits on $X_1$ (i.e., $X_1\le 2$) on the left branch (i.e., $X_2\le 1$) of the decision tree of Figure \ref{fig:example_bintree}. Filled circles and squares are samples from the learning set $\mathbf{LS}$ and non-filled ones are samples from the testing set $\mathbf{TS}$ (unknown in the learning phase). In practice, all cut-points in the red zone (i.e., between two successive values $]1.7,2.3[$) are equivalent on the learning set and $\tau=2$ was chosen in Figure \ref{fig:example_bintree}. However, other values such as $\tau=1.8$ or $\tau=2.2$ also perfectly separate classes in the learning but not on the test set.}
	\label{fig:splitselection}
\end{figure}
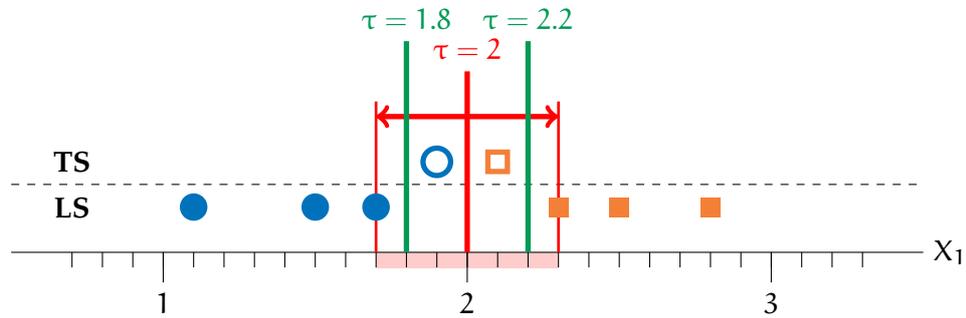

Let $S_t$ be the set of splits on all $p$ features and such that $S_t= \cup_{j=1}^{p} S_{t,j}$. 
The best split $s_{t}^*$ is therefore 
\begin{eqnarray}\label{eqn:s_t^*}
{s_t^* = \argmax_{s\in S_t} \Delta i(s,t).}
\end{eqnarray}

In practice, Equation \ref{eqn:s_t^*} is solved by exhaustively considering all features and all possible splits on those features (either all cut-points $\tau$ or all subsets $\mathcal{X}_{t_L,m}$). This approach however only optimises the split for the current node. Growing a decision tree while foreseeing some future splits is known as (limited) lookahead search and has been shown to provide shorter but not significantly better trees while being computationally more costly \citep{murthy1995lookahead,louppe2014understanding}.

\paragraph{Suitable impurity measures}

Any function satisfying the three properties of an impurity measure can be plugged in the decision tree algorithm. Classical impurity measures used for classification problems\footnote{The above ideas have also been extended to regression problems, where the (empirical) variance is typically used to measure impurity \cite{breiman1984classification}.} are the Shannon entropy and the Gini index.  

\begin{definition}
	The impurity function $i_h(t)$ of a node $t$ derived from Shannon entropy \citep{shannon1949mathematical} is 
	\begin{eqnarray}
	i_h(t) = -\sum_{j=1}^C p(c_j|t) \log_2(p(c_j|t))
	\end{eqnarray}
	where $C$ is the number of possible classes. 
\end{definition}

Shannon entropy quantifies the uncertainty of a discrete random variable based on its probability density. It is non-negative, maximal for a uniform density, and equal to zero (hence minimal) when only one value has a strictly positive probability. Notice that the entropy-based impurity reduction $\Delta i_h(s,t)$ is actually an estimation, based on the learning subset reaching the node $t$, of the mutual information between the split outcome and the output $t$. This impurity reduction is also non-negative, and equal to zero only if the class proportions in the two subsets are identical.

\begin{definition}
	The impurity function $i_g(t)$ of a node $t$ derived from Gini index \citep{gini1912variabilita} is 
	\begin{eqnarray}
	i_g(t) = \sum_{j=1}^C p(c_j|t) (1-p(c_j|t))
	\end{eqnarray}
	where $C$ is the number of possible classes. 
\end{definition}

The Gini index quantifies the dispersion of a distribution. The gini-based impurity $i_g(t)$ aims at evaluating the error rate of a random labelling of objects from $\mathbf{LS}_t$ following the distribution of labels within node $t$, $p(y|t)$. That is, the probability of labelling an object with class $c_j$ is given by the probability $p(c_j|t)$ while $1-p(c_j|t) = \sum_{i\ne j}^C p(c_i|t)$ is the probability of error when labelling an object $c_j$. Similarly to Shannon entropy, $i_g(t)$ is non-negative, maximal for a uniform distribution, and equal to zero and hence minimal for a pure distribution. The resulting impurity reduction is also non-negative, and equal to zero only if the class proportions in the two subsets are identical.

\paragraph{Extension to regression trees}

In order to extend the tree growing algorithm to the case where the output is numerical (i.e. for regression), various alternative goodness of split measures have been defined in the literature. In particular, for 'least squares regression', a natural way to do this is to use the same approach as above while using as  ``impurity'' measure the variance of the output $Y$ estimated from a learning subset \citep{breiman1984classification}. 

\begin{definition}
	The ``impurity'' function $i_v(t)$ of a node $t$ derived from the variance is 
	\begin{eqnarray}
	i_v(t) = \dfrac{1}{N_t} \sum_{y\in \mathcal{Y}_t} (y-\bar{y}_t)^2
	\end{eqnarray}
	where $N_t$ is the number of learning samples in node $t$ and $\bar{y}_t = \dfrac{1}{N_t} \sum_{y\in\mathcal{Y}_t} y$ is the average of $y$ in $\mathbf{LS}_t$.
\end{definition}
The variance estimate $i_v(t)$ is non-negative and equal to zero when all samples have the same target value (equal to the mean value). It also leads to an impurity reduction measure that is non-negative.

\subsubsection{Stopping rules and pruning} \label{sec:stoppingcriteria}

In the previous section, we described how to develop a tree by starting with its root node and splitting its nodes so as to maximise at every step the impurity reduction. 

Given the recursive nature of the growing process, there comes a stage when it is no longer possible to further divide a sample set. The splitting process then has no choice but to stop if there is no more valid splits for the node. It occurs in the two following situations, seen as inherent stopping criteria:
\begin{enumerate}
	\item \textit{Constant output value}: all learning observations reaching the node have the same output value, meaning that the impurity of the learning subset is already equal to zero and hence can not be further reduced,  
	\item \textit{Constant input values}: all learning observations reaching the node have the same value for every input feature, so that the set of available candidate splits is empty. 
\end{enumerate} 

Let us note that all learning samples may have the same input values (case (b)) while not having the same output value. 

\begin{definition}
	A decision tree is said to be \textbf{fully developed} if all learning subsets corresponding to its leaves  have either a constant output  (case (a)) or constant inputs (case (b)) and consequently none of the leaves could have been split in a meaningful way. 
\end{definition}

Fully developed trees are often overfitting the training data. To limit this phenomenon, additional criteria  for stopping to split have been imposed. 
\begin{enumerate}
	\item \textit{Complexity-based stopping criteria} aim at preventing the decision tree from becoming too complex. Typical complexity measures are the total number of nodes or the maximal (or average) depth of the tree.
	
	\item \textit{Impurity-based stopping criteria} stops the growing procedure when the possible impurity reduction is not significant anymore. Indeed, since the growing procedure recursively splits the learning set, the number of learning samples reaching deeper nodes decreases typically rather quickly with the tree depth. Deeper nodes therefore typically yield impurity reductions that are less and less significant from a statistical point of view. Thus it has been proposed to stop splitting if 
	\begin{enumerate}
	\item  the size of the learning subset of a node is below a given threshold, or if learning subset sizes of its child nodes would be below a given threshold, 
	\item  if the best achievable impurity reduction is too small given the size of the learning subset. 
	Instead of setting explicitly a threshold, some statistical measures (e.g., a $\chi^2$ test or a permutation test) can associate a split impurity reduction to a significance level (e.g., a p-value) for which it is easier to find an interpretable threshold value.\end{enumerate}
\end{enumerate}

It should be noted that a single criterion may be sufficient to stop the construction of a tree although several can be combined.
In practice, all criteria are defined by a hyper-parameter whose value must be carefully chosen. By being too restrictive with their values, these criteria would result in a shallow tree that potentially misses some information about the output in the dataset (i.e., a situation of under-fitting). On the other hand, choosing parameter values that are too permissive would not limit the size of the tree enough, causing over-fitting and sub-optimal performances (in terms of generalisation error). All parameters must therefore be carefully tuned in order to achieve the best trade-off for the size of the tree.

Although those stopping criteria may give in practice good results, they may also lead to sub-optimal trees. A few nodes more or less might indeed sometimes produce a significantly better tree. Another way of finding the best model is to first build a fully developed tree and then choose one of its subtrees a posteriori. Techniques following this approach are known as \textit{post-pruning methods}. In practice, a post-pruning method consists in finding the best subtree $T^* \subseteq T$, obtained by contracting an internal node of the fully developed tree $T$  (i.e., replacing it by a terminal node and dropping all its descendent nodes), say one which minimises a given criterion such as the error rate on a independent test set for example.

Therefore, stopping criteria that preventively control the growing of the tree are usually referred to as \textit{pre-pruning methods}.

\subsubsection{Labeling the leaves} \label{sec:treeprediction}

The prediction $T(\mathbf{x})=\hat{y}(\mathbf{x})$ for an input vector $\mathbf{x}$ is obtained by propagating $\mathbf{x}$ through the tree (following branches according to its values) and then returning the prediction (or label) $\hat{y}_t$ associated to the terminal node reached by $\mathbf{x}$.

During the learning stage, each terminal node $t$ must thus receive a label $\hat{y}_t \in \mathcal{Y}$. The choice of $\hat{y}_{t}$ of course aims at maximizing accuracy and hence essentially depends on the nature of the output variable and on the loss function used to measure accuracy. In practice the output label values found in the learning subset of each leaf are used to choose a label such that in the end the total loss is minimised over the learning set.

\paragraph{For classification trees and zero-one loss} Let us consider a decision tree model to predict $\mathcal{Y} = \{c_1,\dots, c_J\}$. If the goal is to minimise the probability of mis-classification, the label $\hat{y}_t$ associated to a terminal node $t$ is chosen as the most frequent class (output value) among objects reaching node $t$. That is
\begin{eqnarray}
\hat{y}_t = \argmax_{c_j} p(c_j|t).
\end{eqnarray}

{Indeed, in classification tasks, the commonly used loss is the zero-one loss, which for a decision tree and its learning set sums up to $$L^{0-1}= \sum_{t}\sum_{(\mathbf{x}^i, y^i)\in \mathbf{LS}_t} \mathbb{1}(y^i\ne \hat{y}_t),$$
where the outer sum is over all leaves of the tree. And thus, choosing for each leaf its label as the most frequent class in its learning subset $\mathbf{LS}_t$ therefore minimises the total zero-one loss over the complete learning set.}

\paragraph{For regression trees and square loss} Let us consider a regression tree model ($\mathcal{Y} \in \mathbb{R}$). If the goal is to minimise the expected square error, the label $\hat{y}_{t}$ associated to a terminal node $t$ is chosen as the average of all output values of objects reaching this terminal node. That is
\begin{eqnarray}
\hat{y}_t = \dfrac{1}{N_t} \sum_{y_t \in \mathcal{Y}_t} y_t.
\end{eqnarray}

{Indeed, in regression tasks, the commonly used loss is the square loss, which for a regression tree and its learning set sums up to  $$L^{se}=\sum_{t}\sum_{(\mathbf{x}^i,y^i) \in \mathbf{LS}_t} (y^i - \hat{y}_t)^2.$$ And thus, choosing for each leaf $t$ its label as the average of all $y^i$ values in $\mathbf{LS}_t$ therefore minimises the total square loss over the complete learning set.}

\subsection{Interpretability of decision tree models}\label{sec:dt-interpretation}
One of the main strengths of decision tree models is their interpretability \citep{hastie2005elements}. A decision tree model can be naturally represented in the form of a tree-structured graph or seen as a set of mutually exclusive rules. It recursively partitions the input space into subregions.  Each of these regions is described by a sequence of feature-based tests.

A decision tree model also helps to fully understand the reasons for a prediction. By following the path of a sample from the root to the terminal node, one can directly retrieve the explanation for the predicted value. This property is desirable in many domains and in particular in medical applications where a model can provide sensitive results such as a diagnosis or a prognosis. In such cases, understanding the reasons driving the model to some conclusions is crucial as wrong decisions might have severe consequences. 

In practice, the tree structure gives all features that are involved in the model. More specifically, the followed branch gives the features used for the prediction in particular and the sequential order in which they are used. In addition to that, one can follow the progress of a prediction by tracking the evolution of output values (i.e., class proportions or output averaged value) within nodes in the path. Figure \ref{fig:classproportions} is another graphical representation of the classification tree shown in Figure \ref{fig:example_bintree} which  highlights class proportions within nodes. Note that sometimes left and right nodes are rearranged so that the left child always corresponds to an increase of the same class (even if the splitting function must be reversed). However, it can be laborious to understand each decision/node of a decision tree, especially if it is large or deep (see \citep{luvstrek2016makes} for a study of factors impacting the interpretability of a decision tree). 

\usetikzlibrary{calc}
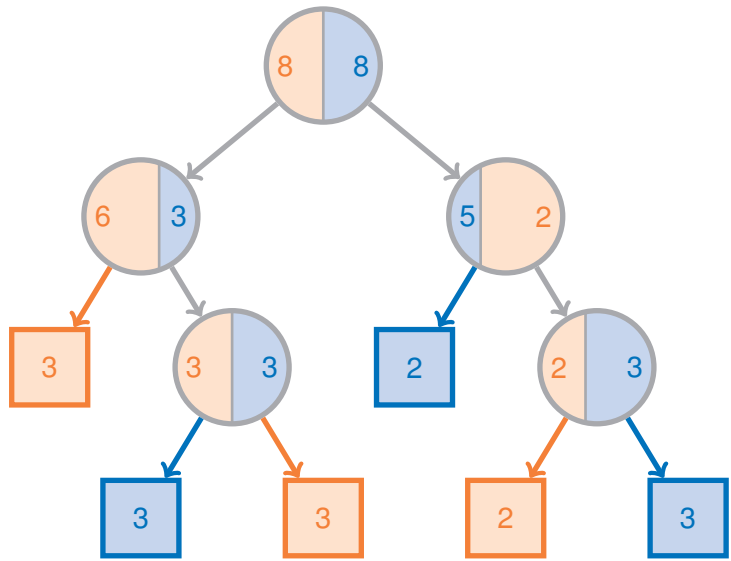
\begin{figure}[htbp]
	\centering
	\begin{tikzpicture}[x=2cm,y=2cm,node distance=0.5cm,  minimum size=1.5cm]
	\mybinaryclassificationtreeinterpretation	
	\end{tikzpicture}
	\caption{Another representation of the binary classification tree in Figure \ref{fig:example_bintree}. In each node, class proportions are represented by the part of the circle filled with the class colour and number of samples of each class are given.}
	\label{fig:classproportions}
	
\end{figure}

Furthermore, one may exploit the impurity reductions computed when growing the tree in order to measure the ``relevance'' of the different input features (see e.g. \citep{breiman1984classification}). Since we will focus on this idea in the subsequent chapters of this thesis, we do not elaborate too much on it here.

On the other hand, an important caveat concerning interpretability stems from the high learning variance of the decision tree growing algorithms \citep{geurts2002contributions} and the so-called ``masking effect'' \citep{breiman1984classification}. A high learning variance means that small changes to the learning set may lead to large changes in the learnt model. The masking effect denotes situations where several candidate splits on different features yield roughly the same impurity reduction, but one of the features is always slightly better so that none of the other ones has a chance to be selected by the tree-growing algorithm. We highlight both effects on the ``XOR'' example explained in Figure \ref{fig:xor-variance}.

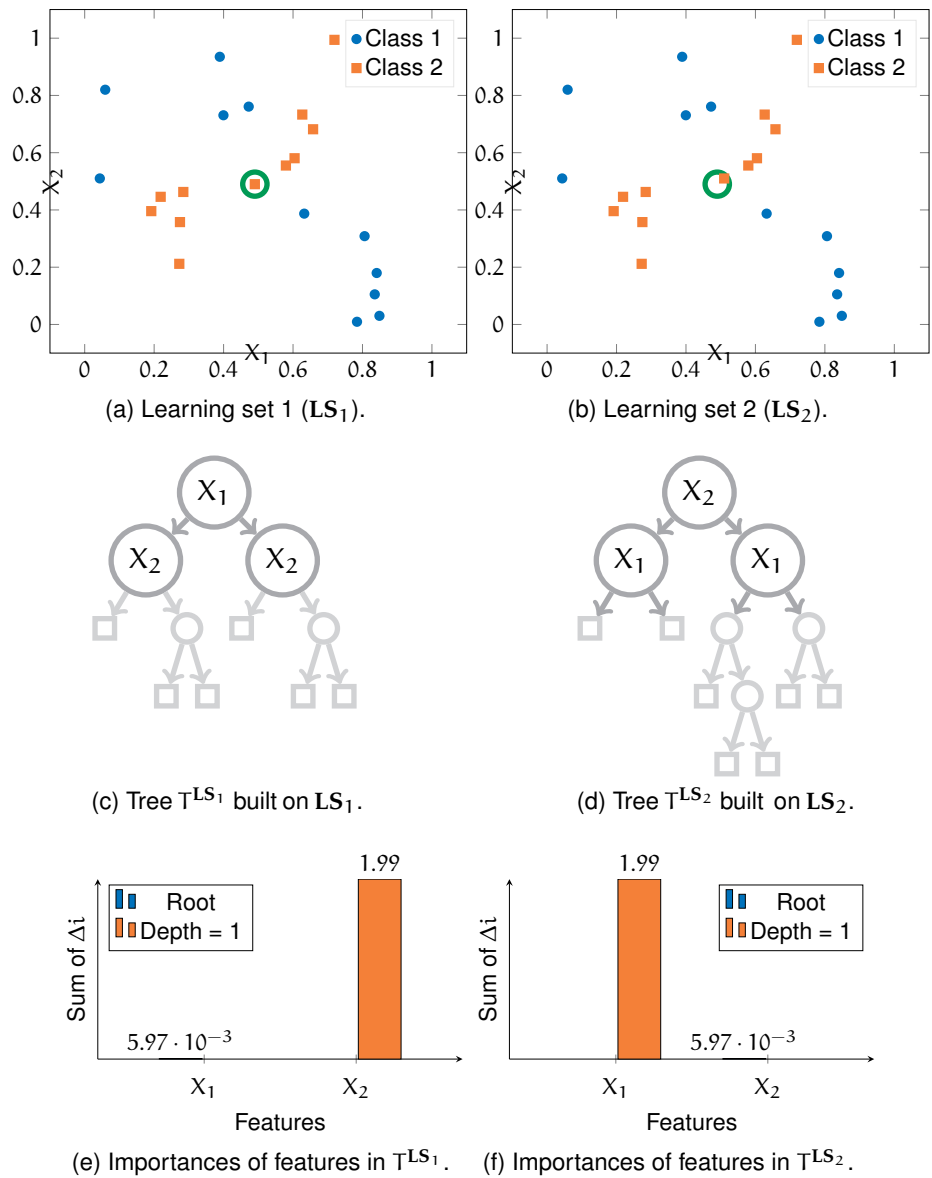
\begin{figure}[htbp]
	\centering
	\subfloat[Learning set 1 ($\mathbf{LS}_1$).\label{fig:xor-variance-a}]{\begin{tikzpicture}[x=1cm,y=1cm]
	\myxorvariancelsone
	\end{tikzpicture}}
	\subfloat[Learning set 2 ($\mathbf{LS}_2$).\label{fig:xor-variance-b}]{\begin{tikzpicture}[x=1cm,y=1cm]
	\myxorvariancelstwo
	\end{tikzpicture}}\\

	\subfloat[Tree~$T^{\mathbf{LS}_1}$~built~on~$\mathbf{LS}_1$.\label{fig:xor-variance-c}]{\begin{tikzpicture}[x=1cm,y=1cm]
	\myxorvariancetreeone
	\end{tikzpicture}} \hspace{2.8cm}
	\subfloat[Tree~$T^{\mathbf{LS}_2}$~built~ on~$\mathbf{LS}_2$.\label{fig:xor-variance-d}]{\begin{tikzpicture}[x=1cm,y=1cm]
	\myxorvariancetreetwo
	\end{tikzpicture}}\\

	\pgfplotsset{
		compat=newest,
		xlabel near ticks,
		ylabel near ticks
	}

\subfloat[Importances of features in $T^{\mathbf{LS}_1}$.\label{fig:xor-variance-e}]{\begin{tikzpicture}[x=1cm,y=1cm]
\myxorvarianceimpone
\end{tikzpicture}}
\subfloat[Importances of features in $T^{\mathbf{LS}_2}$.\label{fig:xor-variance-f}]{
	\begin{tikzpicture}[x=1cm,y=1cm]
\myxorvarianceimptwo
\end{tikzpicture}}
	
\caption{{Let us consider two highly similar datasets $\mathbf{LS}_1$ and $\mathbf{LS}_2$ made of a set of input features $V$ and a binary output (of two classes). Two features $X_1 \in V$ and $X_2 \in V$ (represented in Figures \ref{fig:xor-variance-a} and \ref{fig:xor-variance-b}) form a $XOR$ structure that determines the output, i.e. all points with ($X_1 \le 0.5$ and $X_2\le 0.5$), or ($X_1 > 0.5$ and $X_2 > 0.5$) belong to the first class, and to the second class otherwise. Both datasets are identical except one sample (surrounded by a green circle) that has been slightly moved in $\mathbf{LS}_2$. Figures \ref{fig:xor-variance-c} and \ref{fig:xor-variance-d} show trees built  on each learning set respectively. For sake of simplicity, let us assume that $X_1$ and $X_2$ are used on top of the tree and each split has a cut-point at $0.5$. In $\mathbf{LS}_1$, $X_1$ is slightly better than $X_2$ (masking $X_2$) and thus selected first, while in $\mathbf{LS}_2$, the situation is reversed ($X_1$ is now masked by $X_2$) and $X_2$ is selected first. The small change only is enough to completely change the (top of the) tree (i.e., the order in which $X_1$ and $X_2$ are used) and potentially all the rest of the tree, symbolised by shaded different sub-trees (see \citep[Figure 5.8]{breiman1984classification} for a complete example). Figures \ref{fig:xor-variance-e} and \ref{fig:xor-variance-f} show the importances of $X_1$ and $X_2$ computed as the (unweighted) sum of Shannon impurity decreases.}}
\label{fig:xor-variance}
\end{figure}

\section{Tree-based ensembles} \label{sec:RF}
Decision trees are simple and interpretable models but fail to compete with other machine learning algorithms in terms of accuracy. This lack of performances is mostly caused by their very high variance \citep{geurts2002contributions}.

This variability stems from the strong sensitivity of the decision tree algorithm to the variability of the learning dataset. Indeed, a small change in the learning set (e.g., due to sampling or noise) may cause significant differences between induced models such as the split choices, the branch depths or the distributions of samples in terminal nodes \citep{breiman1996heuristics,geurts2002contributions}.  Any modification has a strong impact on all following decisions because of the recursive nature of the algorithm, resulting in a greatly modified tree structure \citep{dietterich1995machine,schrynemackers2015supervised}.  In addition, the choice of splits or predictions in deep nodes are made with only few training samples and hence are expected to be of very high variance \citep{dietterich1995machine,geurts2002contributions}.
Ultimately, the high variance of a decision tree model penalises both its accuracy and its interpretability (at least to some extent).

As a way of increasing the performances, \textit{ensemble learning} is a technique that is particularly adapted for variance reduction in the context of decision tree models \citep{louppe2014understanding}. Based on the idea of \cite{kwok1990multiple}'s \textit{'Multiple decision trees'}, the principle of this approach consists in combining several different models to achieve better performances than individual ones by aggregating their predictions \citep{hastie2005elements}.
Base models of an ensemble are usually built independently of each other and their predictions are either averaged (for a regression task) or aggregated by majority vote (for a classification task). 
In the same vein, \textit{boosting methods} do not build independent individual predictors but rather build a sequence of models in which each step builds a predictor trying to refine the predictions of its predecessors.

In what follows, we focus on the first family of methods, usually referred to as \textit{averaging methods}, where models are built independently and usually differ from each other because of some randomisation introduced in one way or another. We generically denote these methods by ``Random forest type of method'' to distinguish the family from its particular well-known instance proposed by Leo Breiman and called ``Random forests''.

\subsection{Random forest type of methods} \label{sec:trees-methods}

\textit{Random forest type of methods} refers to several tree-based ensemble learning methods based on the idea of randomisation and aggregation. The main common principle is to generate an ensemble of randomised trees (i.e., a \textit{forest}) in which each individual tree is induced by a randomised version of the classical decision tree growing algorithm, and to combine in a suitable way the predictions of all the elements of this ensemble. Formally, a random forest consists of a collection of $N_T$ tree-structured models 
$\mathbf{T}=\{T_{i} | i = 1, ..., N_T\}$ used together in the way suggested by Figure \ref{fig:ensembleoftrees} in order to make predictions. 

	\begin{figure}[hbtp]
	\centering
	\begin{tikzpicture}[x=1cm,y=1cm]

	\myensemblerf
	
	\end{tikzpicture}
	
	\caption{Principle of the random forests method. The model $\mathbf{T}$ consists of an ensemble of $N_T$ (different) trees. The model prediction $\mathbf{T}(\mathbf{x})=\hat{y}$ is the aggregation of the predictions of every individual decision tree model.}
	\label{fig:ensembleoftrees}
	\end{figure}
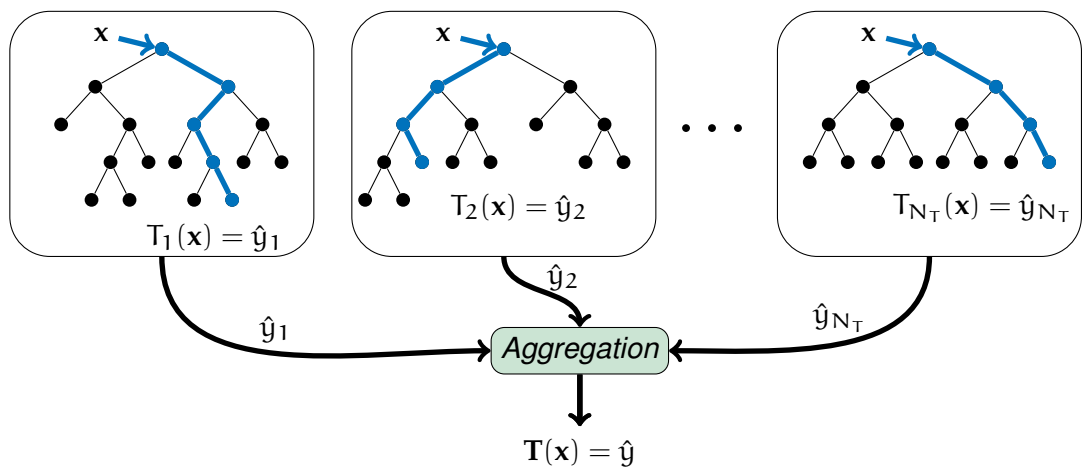

The goal of introducing randomisation is to generate diverse tree models, i.e., models whose errors are as much as possible uncorrelated. Indeed, for a given average behavior of the members of the ensemble, the more diverse they are, the smaller is the variance of the ensemble model and the higher is its accuracy (see side note on page \pageref{sn:diversityoftrees} and in particular \citep{hastie2005elements,louppe2014understanding,joly2017exploiting} for more details).

\begin{sidenote}{Number and diversity of trees in an ensemble} \label{sn:diversityoftrees}
	
	\cite{hastie2005elements} motivate the aggregation of several models by giving the variance of the average of :
	\begin{enumerate}
		\item $N_T$ independent and identically distributed (i.i.d.) random variables, each with a variance of $\sigma^2$, is
		\begin{eqnarray}
		\dfrac{1}{N_T} \sigma^2.
		\end{eqnarray}
		As the number of random variables $N_T$ increases, the variance tends to disappear.
		\item $N_T$ identically distributed (but not independent) (i.d.) random variables, each with a  variance of $\sigma^2$ and a positive pairwise correlation of $\rho$, is
		\begin{eqnarray}
		\rho \sigma^2 + \dfrac{1-\rho}{N_T} \sigma^2.
		\end{eqnarray}
		Similarly to the first case, the second term disappears with an increasing $N_T$. The first term however is independent of $N_T$ but decreases as the variables are de-correlated (i.e., lowering the value of $\rho$). 
	\end{enumerate}
	Both examples show that trees must as numerous and diverse (i.e., de-correlated) as possible to decrease the variance. It motivates the use of randomisation to generate trees for an ensemble. We refer to \cite{louppe2014understanding} for a detailed bias-variance decomposition of an ensemble of trees.
\end{sidenote}

In addition to a potential increase of performances, let us note that building a random forest is usually advantageous from a computational point of view. Indeed, the randomisation often cuts the complexity down as it removes heavy computations or reduces the dimensionality of the problem. In addition, the bulk of the  learning of a random forest can be parallelised by growing the individual trees independently and exploiting several computers to do so.

Several random forest type of methods have been proposed over the years. They all apply the \textit{'perturb and combine' paradigm} and essentially differ from each other only in the way the decision tree procedure is perturbed \citep{geurts2002contributions}. The random perturbation can be introduced in several parts of the algorithm (mainly where the variability is observed), namely at the level of:
\begin{enumerate}
	\item \textit{the learning set}: As discussed in the context of the high variance of decision trees, models are expected to vary if they are built on different learning sets \citep{breiman1996bagging};
	\item \textit{the split variable selection}, i.e., \textit{features that are considered at each tree node}: not considering all features at each node allows sometimes alternative (e.g. masked) features to be selected;
	\item \textit{the split value selection}: the cut-point for numerical features or the binary splitting function for categorical features is chosen at each node at random rather than being optimised in terms of impurity reduction for the learning subset of that node.
\end{enumerate}

Below we explain the involved randomization mechanism of the main random forest type of methods published in the literature \footnote{See e.g. \cite{louppe2014understanding} for a more exhaustive list of random forests methods.}.

\begin{description}
	\item[Bagging] -- \textit{tree-wise learning set randomization}\\
	\textit{Bagging}, standing for \textit{bootstrap aggregating}  \citep{breiman1996bagging}, consists in  growing each tree of the ensemble from a bootstrap replicate of the learning set. Given a learning set $\mathbf{LS}$ of $N$ samples, a \textit{bootstrap sample} $\mathbf{LS}^B$ is obtained by sampling $n$ samples from $\mathbf{LS}$ at random and with replacement \citep{efron1994introduction}. Let us note that some samples of $\mathbf{LS}$ may appear multiple times in $\mathbf{LS}^B$ or not at all. On average, around 37\% of original samples are not represented in the bootstrap sample \citep{louppe2014understanding}, this will be of interest in Section \ref{sec:oob}. Figure \ref{fig:bootstrap} sketches the principle of generating bootstrap copies of a learning set,  for an ensemble of 5 copies gotten from a learning set of ten samples. Figure \ref{fig:bagging} illustrates the Bagging approach.

	\begin{figure}[hbtp]
		\centering
		
		\begin{tikzpicture}[x=1cm,y=1cm,minimum width=0.85cm]
		\mybootstrapfigure
		\end{tikzpicture}
		\caption{Example of five bootstrap replicates of a learning set $\mathbf{LS}$ of $N=10$ samples. Each $x^i$ represents a sample $(\mathbf{x}^i,y^i)$ of the learning set ($y^i$ is omitted for sake of clarity). On the left, five bootstrap replicates $\mathbf{LS}^{B}_{1},\mathbf{LS}^{B}_{2}, \dots, \mathbf{LS}^{B}_{5}$ of $\mathbf{LS}$ are shown. On the right, sets $\mathbf{LS}^{oob}_{1},\mathbf{LS}^{oob}_{2}, \dots, \mathbf{LS}^{oob}_{5}$ of (out-of-bag) samples that are not used in the corresponding bootstrap samples are highlighted. Sizes of oob sample sets are not necessarily the same. (Figure inspired from \cite{raschka2016model}).}
		\label{fig:bootstrap}
	\end{figure}
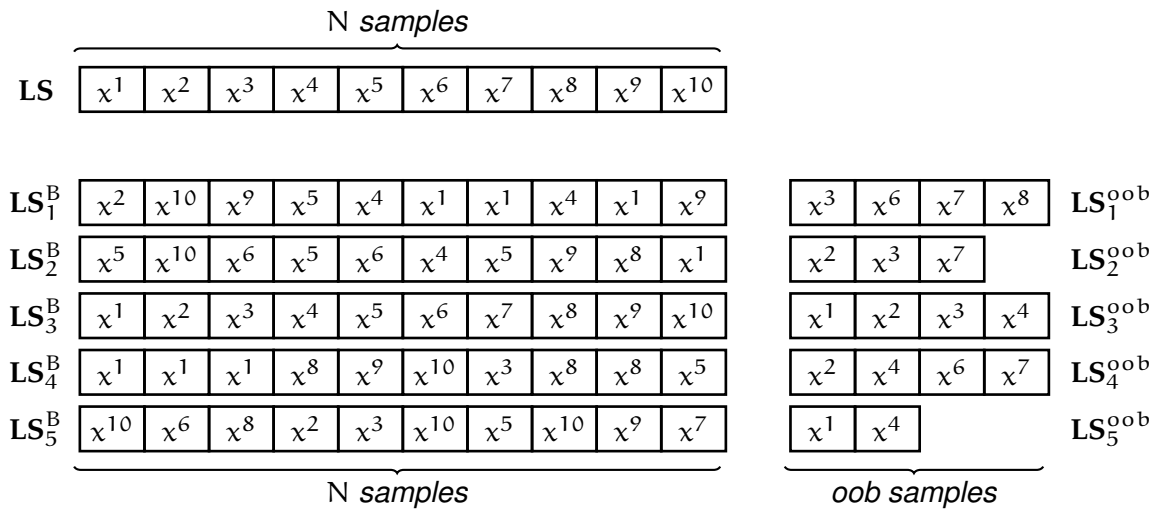
	
	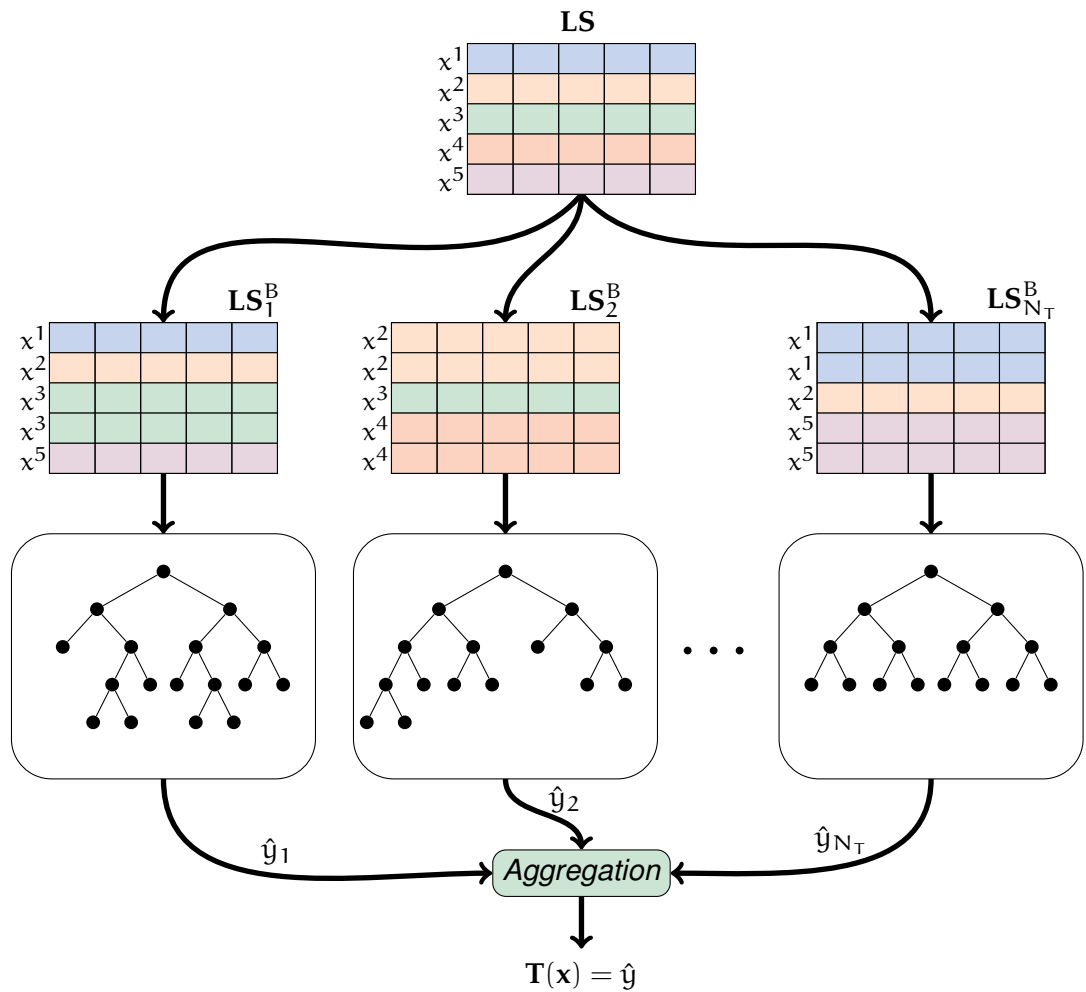
\begin{figure}[hbtp]

	\begin{tikzpicture}[x=1cm,y=1cm]
	
	\mybaggingfigure
	
	\end{tikzpicture}
	\caption{Bagging method. It consists of an ensemble of $N_T$ trees, each built on bootstrap replicates of $\mathbf{LS}$. Classically, the prediction $\hat{y}$ of the bagging model is the aggregation (majority vote or average) of every individual predictions $\hat{y}_i$.}
	\label{fig:bagging}
	\end{figure}
	
	\item[Randomized Trees] -- \textit{node-wise randomized split selection among best ones}\\
	 With this first randomised version of the decision tree algorithm itself, \cite{dietterich1995machine} extend the idea of \cite{kwok1990multiple} and propose to randomise the choice of the split for each node. For a given node $t$, instead of selecting the best split $s^*_t$, one of the $20$ best splits of node $t$ is selected uniformly at random.

	\item[Random Feature Subset] -- \textit{node-wise variable randomization}\\
	When the number of variables $p$ is large (e.g., in a handwritten character recognition application), the number of potential splits at each node is typically very large too. In order to avoid a search for the best split among too many possibilities, \cite{amit1997shape} propose to limit the search for the best split among a random subset of only $K$ variables chosen at each node. 
	
	\item[Random Subspace] -- \textit{tree-wise variable randomization}\\
	\cite{ho1998random} propose to  grow each tree of the ensemble on a \textit{random subspace}, i.e., a learning set in which only $K$ ($\le p$) features have been randomly chosen. Figure \ref{fig:RS} illustrates this approach.
	This method appears as similar to the ``Random feature subset'' approach, but here one particular tree of the ensemble faces the same subset of features at all its nodes.

\item[Random Patches] -- \textit{tree-wise variable and learning set randomization}\\ \cite{louppe2012ensembles} propose to build an ensemble of trees on \textit{random patches} where, before building a tree, both a subset of (say $K$) features and a subset of (say $L$) learning samples is selected at random. This allows to handle very big datasets and adapt to different types of problems by tuning $K$ and $L$ while keeping $K\times L$ compatible with memory capacity. 
	
	\begin{figure}[hbtp]

	\begin{tikzpicture}[x=1cm,y=1cm]

	\myRSfigure

	\end{tikzpicture}

	\caption{Building an ensemble of trees with the random subspace method. Given $p=5$ features,  each individual tree is learnt on an input subspace made of $K=3$ features that have been randomly sampled.}
	\label{fig:RS}
	\end{figure}
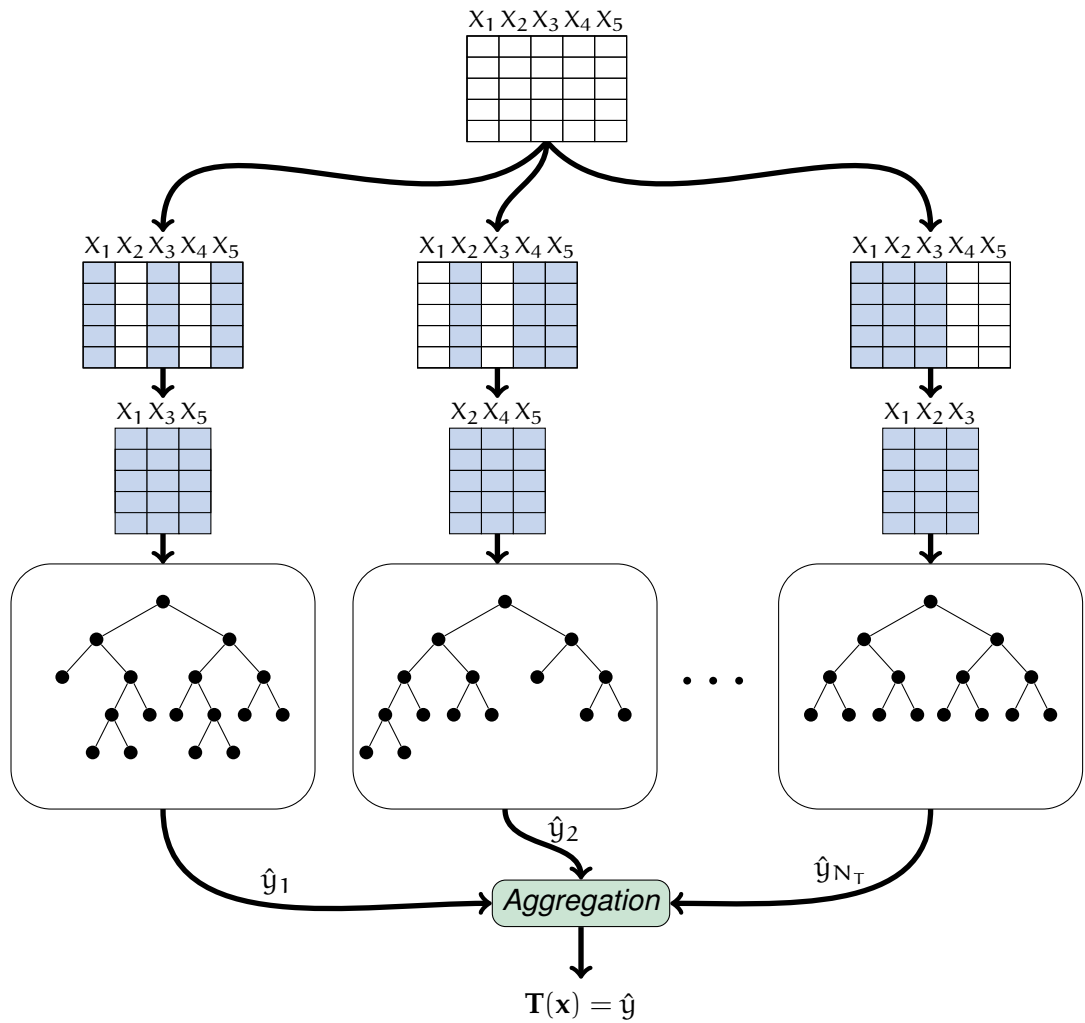

	\item[Random Forests] -- \textit{tree-wise learning set, node-wise variable randomization}\\ 
	With \textit{Random Forests} (RFs), \cite{breiman2001random} combines his idea of bagging with the random feature subset at each node of \cite{amit1997shape} in order to differentiate even more trees by perturbing them in two simultaneous ways. This is undoubtedly the most well known and used version of the random forests methods and more details are given in the following section.
	
	\item[Perfect Random Tree Ensembles] -- \textit{node-wise split randomization}\\
	The novelty of the \textit{Perfect Random Tree Ensembles} (PERT) proposed by \citep{cutler2001pert} is to combine a feature selection totally at random, similar to the random feature subset approach with only one feature considered at each node (i.e., $K=1$), and then a random split on that feature. Given an ordered split variable $X_m$ and a node $t$, two samples of different output values (classes) in $\mathbf{LS}_t$ are selected, say $(\mathbf{x}^i,y^i)$ and $(\mathbf{x}^j,y^j)$ with $y^i\ne y^j$, and the cut-point $\tau$ (the split value) is found as follows $\tau = \alpha x^i_m + (1-\alpha) x^j_m$ where $\alpha$ is drawn uniformly at random between $[0,1]$, $x^i_m$ and $x^j_m$ are respectively the values of variable $X_m$ for samples $\mathbf{x}^i$ and $\mathbf{x}^j$.  
	
	\item[Extra-Trees] -- \textit{node-wise candidate variable and split randomization}.\\
        The method of \textit{Extremely Randomized Trees} or \textit{Extra-Trees} (ETs) \cite{geurts2002contributions,geurts2006extremely} draws a random subset of $K$ variables at each node (as the ``Random feature subset method'') and for each one a single random split, and selects among these $K$ candidate splits the one yielding the largest impurity reduction to split a node. In this method, the cut-point selected for a numerical feature is drawn at each node according to a uniform distribution between the minimum and maximum values of that feature as observed in the local learning subset.
	
	\item[Totally randomized Trees] -- \textit{node-wise split randomization}\\
	The method of \textit{Totally Randomized Trees} (TRTs) is a variant of ``Extremely randomized trees'' maximising the randomization \cite{geurts2002contributions,geurts2006extremely}. Concretely, it consists in building ETs with $K=1$. Node splitting is thus carried independently of the output variable. The method of ``Totally randomized trees'' is especially of interest in theoretical analyses in the rest of this thesis, in particular in Chapters \ref{ch:importances} and \ref{ch:mdi}.
	
\end{description}

Without further explanation, let us also mention the \textit{Rotation Forests} method \citep{rodriguez2006rotation} which exploits feature extraction principle to build an ensemble of trees on different learning sets.

\subsection{Random Forests and Extra-Trees: parameters, properties, interpretability} \label{sec:rf-prop}

Among all methods, \cite{breiman2001random}'s Random Forests is certainly the most widely known. 
It was implemented from the very beginning in a freely available and well documented library \citep{breiman2002manual,breiman2003random}. Today, it is available within ``R'' and in the \textit{Scikit-learn} open-source platform (one of the most used machine learning libraries) which proposes a very efficient and simple to use implementation of both Random Forests and Extra-Trees \citep{pedregosa2011scikit}.
From a theoretical viewpoint, several authors studied the consistency (i.e., theoretical guarantees that the model converges towards optimality given asymptotic conditions, including a learning set of infinite size) of the method (see, e.g., \citep{zhao2000new,breiman2000some,breiman2004consistency,biau2008consistency,biau2012analysis,denil2014narrowing,scornet2015consistency}). In conclusion, all the results point in the direction that random forests methods work well in practice (see \cite{louppe2014understanding} for a review).

In this section, we first go through the different parameters of the Random Forest and Extra-Trees methods and then describe some of their properties that allow us to go beyond a simple predictor, and to some extent interpret the model.

\subsubsection{Parameters}

In this section, we discuss the common parameters of the Random Forest and the Extra-Tree methods. Specific parameters of other random forest type of methods are not mentioned here.

\begin{enumerate}
	\item \textit{Randomisation parameter} $K$: It concerns the number of features considered at each node as split variable candidates. Usually given as a function of the number of features, it directly impacts the degree of randomisation of the tree-based model. With $p$ features, typical default values for this parameter are $K=\sqrt{p}$, $K=\log_2{p}$ or $K=p$. Experimentally, it has been shown that $\sqrt{p}$ is usually an appropriate choice for classification tasks, while $K=p$ is often a better choice in case of regression \citep{hastie2005elements,geurts2006extremely}. The minimal value, $K=1$, implies a maximal randomisation. It may be of interest when all features are a priori known to be more or less  equally informative, while large values of $K$ are preferable when a large proportion of irrelevant variables is suspected.
	
	\item \textit{Number of trees} $N_T$: It defines the number of trees in the ensemble. Intuitively and theoretically, it seems that the number of trees should not be limited as it does not cause over-fitting \citep{hastie2005elements}, but performance stabilises after a certain number of trees depending on the problem considered. However, the number of trees should not be too small either as it has been shown that a certain number of trees is required to achieve the best prediction accuracy or to capture the whole problem structure \citep{latinne2001limiting,genuer2010variable,wehenkel2018characterization}. One usually needs to find a good trade-off for the number of trees to achieve good performance while not being too costly in terms of memory or computational resources.
	
	\item \textit{Individual tree complexity}: This parameter, unlike the first two, is not only defined by a single value. Several criteria, including of course a simple constraint on the maximal tree depth $d$, aim at limiting the complexity of the trees. As this corresponds to pre-prune the tree, we retrieve parameters that correspond to the stopping criteria that were discussed in Section \ref{sec:stoppingcriteria}. In addition to a maximal depth parameter $d$, $n_{min}$ and $n_{leaf}$ control the growing process of a branch and respectively define the minimal number of samples required to split a node and the minimal number of samples required in child nodes after the split. $\Delta i_{min}$ and $i_{min}$ respectively prevent the splitting of a node if the impurity reduction is not large enough or if the node has low impurity (i.e., pure enough). $N_{nodes}$ and $N_{leaf}$ control the overall complexity of the tree by defining a maximal number of nodes or leaves. 
\end{enumerate}

Let us mention that the choice of the impurity function (typically, Gini or Shannon) for classification tasks is usually left to the discretion of the user.

\subsubsection{Variable importances}

The decision tree model is interpretable. From this model, one can directly read the tree structure giving  features that have been used to build the model and how they are split, and the reasons behind a prediction. This was however limited by the high variance of the decision tree model. 

When taking an ensemble of trees, the resulting model is indeed more accurate in general but the multiplicity of trees it contains makes it difficult to read and synthesise the information provided by this model. Moreover, because of randomisation, every individual tree structure is also less relevant. 

In order to recover some interpretability, the random forest type of algorithms however offer, similarly to single decision trees, the possibility to derive a numerical ``importance'' value for each feature. This score aims at evaluating the contribution of a feature in the model. Reviewing, studying, and assessing such variable importances derived from tree-based ensemble models is the focus of Chapters \ref{ch:importances} and \ref{ch:mdi}. More specifically, Chapter \ref{ch:importances} revisits the main variable importance measures, while Chapter \ref{ch:mdi} is devoted to a detailed analysis of one of these measures in particular, namely the mean decrease of impurity, on which we have focused our research.

\subsubsection{Out-of-bag samples and estimates}\label{sec:oob}

In methods using bootstrapping such as Bagging or Random Forests, for each tree model, there are some samples that have not been used for construction. Given a bootstrap sample set $\mathbf{LS}^{B}_{i}$ used for tree $i$, left-out samples $\mathbf{LS}^{oob}_{i} = \mathbf{LS}\setminus \mathbf{LS}^{B}_{i}$ are said to be \textit{out-of-bag} (OOB) for tree $i$ (see Figure \ref{fig:bootstrap}). 
These OOB samples can be used to estimate important statistics of the ensemble of trees such as the generalisation error or variable importances (see Section \ref{sec:MDA} of Chapter \ref{ch:importances}).

For each training sample $(\mathbf{x}^j,y^j) \in \mathbf{LS}$, some trees are built on bootstrap samples that did not include sample $j$. Let us denote this subset of trees as $\mathbf{T}^{-j}=\{T^{-j}_i|i=1,\dots,N_T^{-j}\}$ where $N_T^{-j}$ is the number of such trees.
The \textit{out-of-bag error estimate} at $(\mathbf{x}^j,y^j)\in \mathbf{LS}$ consists in evaluating the prediction $\mathbf{T}^{-j}(\mathbf{x}_j)$ of the ensemble of trees $\mathbf{T}^{-j}$ for the input $\mathbf{x}^j$. Mathematically, the out-of-bag error estimate over all the learning set is computed as follows
\begin{eqnarray}
\widehat{Err}^{oob}  = \dfrac{1}{N} \sum_{(\mathbf{x}^j,y^j)\in \mathbf{LS}} L( \mathbf{T}^{-j}(\mathbf{x}^j) , y^j )
\end{eqnarray}
where $N$ is the number of samples in $\mathbf{LS}$. In classification, $L$ and $\mathbf{T}^{-j}(\mathbf{x}_j)$ are respectively the zero-one loss and the result of a majority vote between all individual predictions $\{T^{-j}_i(\mathbf{x}_j) |i=1,\dots, N_T^{-j}\}$. In regression, $L$ and $\mathbf{T}^{-j}(\mathbf{x}_j)$ are respectively the MSE loss and the average of all individual  prediction, i.e., $\frac{1}{N_{T}^{-j}} \sum_{i=1}^{N_T^{-j}} T^{-j}_i(\mathbf{x}_j)$.

The out-of-bag error estimate provides an accurate approximation of the generalisation error (compared to one resulting from a test set of the same size as the training set \citep{breiman1996out} and from a K-fold cross validation\footnote{K-fold cross validation consists in dividing the learning set into $K$ folds (subsets) of same size and then learning a model on K-1 folds in turn and testing it on the remaining fold.} \citep{wolpert1999efficient}). Let us note that the out-of-bag error estimate requires only one ensembles of $N_T$ trees while K-fold cross validation needs to learn $K$ ensemble of $N_T$ trees.

\subsubsection{Proximity measure}

As another by-product, the Random Forests algorithm offers a \textit{proximity measure} between samples from which a \textit{proximity matrix} can be derived from the tree-based model \citep{breiman2002manual,breiman2003random}. Given a set of $N$ samples, each element $(i,j)$ of the matrix $N\times N$ is the proximity value between samples $(\mathbf{x}^i,y^i)$ and $(\mathbf{x}^j,y^j)$ which corresponds to the fraction of trees in which both samples fall in the same terminal node. 
The intuition is that samples sharing regularly the same terminal node (and thus the same prediction) are close to each other from the point of view of the random forests model. This also provides a comparison of samples that may of high dimensionality and/or made of mixed variables. 

This proximity measure can be used to identify structures in the data or for unsupervised learning (see for more details and examples of proximity plot, e.g., \citep{breiman2002manual,liaw2002classification,breiman2003random,hastie2005elements,louppe2014understanding,scornet2016random}).\\

\begin{summary}
Tree based supervised learning methods have been proposed several decades ago, and studied and used extensively since. With respect to other supervised learning methods, these algorithms are highly scalable, provide interpretable information, but are often suboptimal from an accuracy point of view. In the last twenty years, a significant body of research has been carried out in the machine learning community in order to understand the theoretical features of these methods, and to find out how to improve them. This work has culminated with the idea of building ensembles of randomized trees, rather than one single fully optimized tree. Many different variants of this idea have been proposed over the years, the two most widely used ones being ``Random Forests'' and ``Extremely Randomized Trees''. These methods have shown to be very effective in terms of accuracy (among the best general purpose supservised learning algorithms). On the other hand, they lead to less easily interpretable models than the original single decision trees. 
\end{summary}

\part{Characterisation of importance measures}

\chapter{A survey of the literature about tree-based feature importance measures}
\label{ch:importances}
\begin{overview}
In  this chapter we review the literature  on tree-based feature importance measures. We start by an intuitive description of this notion and then focus on the two most popular feature importance measures, namely the Mean Decrease of Impurity (MDI) and the Mean Decrease of Accuracy (MDA). We  examine theoretical and empirical analyses carried out on those measures and discuss their limitations and biases. Finally, the last parts of this chapter focus on practical applications of those measures, and in particular how to distinguish relevant from irrelevant features based on their importance scores. 
\end{overview}

%

Tree-based ensemble methods are known to be powerful methods for modelling complex systems while providing accurate predictions \citep{auret2011empirical}. In many problems, including for example micro-array studies  \citep{archer2008empirical} or medical prognosis \citep{wehenkel2017tree}, a black-box that only provides predictions is however not enough, or even not the main goal. Such applications require indeed to understand how the model is built, to allow some interpretation of results and predictions so as to gain insights on the underlying problem structure \citep{archer2008empirical}. However, at first sight, tree based ensemble models are not directly interpretable as the number of trees and the introduction of perturbations in the growing process make their individual interpretation difficult and certainly unreliable \citep{auret2011empirical}. Indeed, two questions are raised among others: \begin{quote}
``Is a feature used at the top of only one tree necessarily important?'' 

``What about features that are only used in a few trees of the ensemble, are they necessarily useless?''\end{quote}

Anticipating this need for interpretability, the Random Forests algorithm (presented in Section \ref{sec:RF}) was proposed together with several built-in measures of feature importance \citep{breiman2001random,breiman2002manual,breiman2003random}. 
Identifying the constitutive elements of the forest model (and their relative importance) is a way to interpret it, and so to gain insight about the underlying problem. Indeed, the variable importance is often presented as a robust statistic to assess the feature contribution in the random forests model of the underlying data generating mechanism \citep{archer2008empirical}. Furthermore,   these importance measure give an aggregated information, contrasting with the local interpretation of each individual tree. 

Concretely, given an ensemble of trees, the principle of \textit{feature importance evaluation} is to derive a numerical score that reflects the ``(relative) contribution''  of the different candidate features in the learnt model. Based on those scores, one can now evaluate the usefulness of a feature and compare the contributions of two features, whatever the way they are used in the individual trees. A feature having a larger importance score than another one indicates that it is more useful in the learnt model than the other one \citep{archer2008empirical}. Conversely, a feature with a very low importance score is not really useful in the learnt model. In addition, ordering all features according to their importance scores provides a feature ranking \citep{guyon2006introduction} that may be exploited in different ways.

In this chapter we focus on the subclass of tree-based ensemble methods where all trees are drawn from the same distribution and independently of the others. This choice corresponds, for example, to Tree Bagging, Random Forests, and Totally or Extremely Randomised Trees; but it excludes, for example, Tree Boosting\footnote{Let us note that feature importance can also be derived from ensembles of boosted trees (see, e.g., \citep{auret2011empirical} for a study).} or non-tree-based supervised learning methods. Whenever suitable, we will indicate how the discussed methods could apply to other types of predictors. 

Section \ref{sec:imp-imp} gives an intuitive discussion of the contribution of a feature in a tree-based ensemble model. Section \ref{sec:mdivsmda} provides the definitions of the MDA and MDI measures, the two most used ones, while Sections \ref{sec:th-imp} and \ref{sec:imp-emp} summarise the main theoretical and empirical studies on these measures reported in the literature. Then, the last sections aim at reviewing the main use of those importance measures. In particular, Section \ref{sec:threshold} focuses on techniques to distinguish important features from non-important ones based on their importance scores. Section \ref{sec:imp-ext} describes several machine learning methods exploiting importance measures or extending them. Section \ref{sec:imp-new} is dedicated to other importance measures that have been proposed in the literature. Finally, Section \ref{sec:imp-app} aims at describing some practical applications using successfully tree-based feature importance measures.

{\textit Remark: in order to make this chapter self-consistent and as complete as possible, we have included in our review results that will be discussed in more details in subsequent chapters of this thesis (and published in \citep{sutera2016context,sutera2018random}).}

\section{Contribution of a feature to a tree-based  model} \label{sec:imp-imp}

In this  section, we discuss several possible indicators to evaluate the contribution of a feature in a tree-based predictor. We first look at the role of a feature inside a single decision tree built by the classical CART approach \citep{breiman1984classification} and then consider the case of randomised tree ensembles.

\paragraph{Position of feature splits in the tree}
Intuitively, the position in the tree structure of the splits using a given feature gives an indication on the importance of that feature: splits close to the root should be more important than those used deeper in the tree. Indeed, in ordre to produce simple trees, the tree growing procedure first considers the most useful splits (corresponding to largest decreases of node impurity) and then refines the model by using less useful ones. 

However, this intuitive principle can not be directly transposed to ensemble of randomised trees. In all generality, a feature is used in more than one tree. Instead of a single position, the same feature may be at several (and different) positions in the different trees and one would need to take all of these positions into account to determine which features are the most important ones. For example, a feature might be used deeper in a tree because it has some redundant information with other variables used higher in that tree. Such a feature could be seen as important despite its deep positions in some tree.
The randomised nature of the growing procedure (e.g., at the level of split variable selection\footnote{See Section \ref{sec:RF} for the other mechanisms.}) also disrupts the intuitive order in which features are used in the tree. A feature may be used in the top of a tree while being barely useful or relevant, e.g., if the split variable selection is randomised, this feature may be considered simultaneously with a lot of noisy irrelevant variables and be the best choice among them. 

\paragraph{Feature selection frequency}
When extended to an ensemble of randomised trees, the position in a tree does not longer reflect the importance of a feature. If we put the node position aside, the decision tree growing procedure still naturally performs a feature selection by selecting the best feature in each node except for the most randomised variant of random forests methods. Intuitively, irrelevant features are not supposed to be selected, or only a very limited number of times by chance, because there is no interest of using them anywhere in the tree.  Conversely, relevant features are statistically related to the output and therefore should be regularly used in the model \citep{konukoglu2014approximate}. A feature can therefore be seen as important if it is used frequently in many trees. From there, the most straightforward way - although naive - to measure the importance of a feature is to simply count the number of times a feature is used as split variable in all individual trees in the ensemble \citep{strobl2007bias, konukoglu2014approximate, lundberg2017consistent, lundberg2018consistent}. 

Although it is sometimes not done in the literature, we prefer to normalise the ``feature selection importance'' by the total number of test nodes of all the trees composing the ensemble, in the following fashion:

\begin{definition}
Let us consider an ensemble $\mathbf{T}=\{T_1,\dots,T_{N_T}\}$ of $N_T$ trees using a set of input features $V$ to predict an output variable $Y$. The \textbf{feature selection frequency importance measure} $Imp^{freq}$ of  $X_m \in V$ in $\mathbf{T}$ is the proportion of nodes of the tree ensemble in which $X_m$ has been used as split variable, i.e.,
\begin{eqnarray}
  Imp^{freq}(X_m) = \frac{\sum_{i=1}^{N_T} \sum_{t \in T_i} \mathbb{1}(v(s_t) = X_m)}{\sum_{i=1}^{N_T} \sum_{t\in T_i} 1}
\end{eqnarray}
where a node is denoted $t$ and associated to a split $s_t$ with a split variable $v(s_t)$.
\end{definition}

Despite its intuitive interest, this importance measure is biased towards features used deeply in trees. Indeed, being selected at the root node only counts for one, while the same feature can be used multiple times deeper in the trees. For example, a barely important feature always selected in each last node of a branch (and providing only marginal impurity reductions) would outscore a feature selected only once at each root node. Moreover, the actual contributions of two features with the same importance (i.e., used the same number of times in the forest model) can be completely different if one yields much larger decreases of impurity than the other. Indeed, some features can be seen many times despite their irrelevance (e.g., because of randomisation) while relevant features are missed because of some undesirable effects (e.g., a masking effect of another feature, see Section \ref{sec:biases} for other examples), impacting directly their importance.

To address those limitations, other criteria of feature importance taking into account the actual contribution of a feature in the learnt predictor should be considered.

\paragraph{Two ways for evaluating the actual contribution of a feature to a decision tree prediction}

As presented in Section \ref{sec:splittingrules}, the tree growing procedure aims at splitting nodes until all terminal nodes are pure. To that end, each split is optimised by selecting as split variable the feature yielding locally the largest decrease of impurity. The construction of a model is thus completely based on the notion of impurity decrease, and in the eyes of the learning algorithm, a variable is indeed important if it provides a large decrease of impurity. Based on that observation, it makes sense to integrate the amount of impurity decrease obtained thanks to all the splits using a particular feature, in order to evaluate its contribution to making predictions. This rationale leads to the Mean Decrease of Impurity (MDI) importance measure.

Beyond its specific mechanism, the purpose of supervised learning is to enable accurate predictions of the target variable. In this respect, the importance of a feature should be directly related to its contribution to the predictive accuracy of the learnt predictor, or in other ways how this accuracy is affected by not using the concerned feature. This rationale leads to the Mean Decrease of Accuracy (MDA) feature importance measure.

Notice that these two importance measures are not equivalent, since reducing impurity on a learning sample does not necessarily imply increasing accuracy out of the learning sample. 

Section \ref{sec:MDI} describes the first importance measure based on the contribution of a feature in the building mechanism while Section \ref{sec:MDA} presents the second importance measure that associates the contribution of a feature to the impact of its removal on the prediction accuracy.

\section{MDI and MDA feature importance measures}\label{sec:mdivsmda}

In this section, we present the two importance measures, each considering a different aspect of the contribution of features. Section \ref{sec:MDI} introduces the \textit{Mean Decrease of Impurity} (MDI) that assesses the importance of a feature based on its average contribution in the impurity reduction in the tree-ensemble growing procedure. Section \ref{sec:MDA} defines the \textit{Mean Decrease of Accuracy} (MDA) that evaluates the contribution  a feature in terms of its impact on predictive accuracy.
 Anticipating on the rest of this chapter, let us notice the parallel that can be made with the two feature selection problems (described in Section \ref{sec:fsproblems}). The minimal-optimal approach focuses on selecting features that provide the highest accuracy. The all-relevant approach aims at identifying all features that are relevant to the target variable.

\subsection{MDI importance measure}\label{sec:MDI}

Used as splitting criterion in decision tree growing \citep{breiman1984classification} and then in tree-based ensemble methods \citep{breiman2001random}, the computation of impurity and impurity reductions is at the heart of these supervised learning algorithms. Taking advantage of these computations of  impurity reductions, \cite{breiman2002manual} proposed to evaluate the importance of an input feature $X_m$ for predicting the output $Y$ by its \textit{Mean Decrease of Impurity} (MDI), also presented as the empirical improvement in the splitting criterion \citep{strobl2007bias,friedman2001greedy}\footnote{Let us note that the sum of all impurity decreases provided by a feature was already proposed by \cite{breiman1984classification} as an importance measure for that feature in a single decision tree.}. Concretely, it consists in summing all impurity decreases due to $X_m$, weighted by the size of the node (in terms of the relative number of observations reaching that node) and divided by the number of trees composing the ensemble model. For a forest made out of $N_T$ trees, the MDI importance measure is computed as follows:

\begin{definition} \label{imp:eqn:mdi-imp}
	The \textbf{Mean Decrease of Impurity importance} $Imp^{mdi}$ of a feature $X_m \in V$ about the output $Y$ is
	\begin{eqnarray}
	Imp^{mdi}(X_m) = \dfrac{1}{N_T}\sum_{T} \sum_{t\in T: v(s^{*}_t)=X_m} p(t) \Delta i (s^{*}_t,t) \label{eqn:MDI}
	\end{eqnarray}
	where $p(t)$ is the ratio $N_t/N$ between samples reaching node $t$ $(N_t)$ and the total number of samples $(N)$, and $v(s_t)$ is the split variable of $s_t$.
\end{definition}

This definition of the MDI importance can be applied with any impurity measure, including Gini impurity and Shanon entropy used for decision tree growing, and variance used for regression tree growing (see Section \ref{sec:splittingrules}). 

\paragraph{Discussion}

The underlying assumption of MDI is that all relevant features, i.e., related to the output and thus important, will show up to be useful to discriminate $Y$ at some point of the ensemble learning, and thus yield  a high enough decrease of impurity to lead to their selection as split variable, while, on the contrary, irrelevant features are expected to provide no (too small) impurity decrease in any context, and so will be selected only with very low probability as split variable when growing a tree. It may occur that some noisy features yield (e.g., at nodes with a small number of samples) are still selected,  but their (low) impurity decrease should be toned down by the weighting mechanism. Let us however note that the MDI importance can not be negative as a split never increases the impurity of a node, i.e., $\Delta i(s^{*}_t,t) \ge 0$.
\cite{konukoglu2014approximate} see the MDI importance as an extension of the selection frequency importance where the split count is weighted by the actual contribution of the feature, i.e., $\Delta i(s^{*}_t,t)$. The size of the node $p(t)$ is moreover taken into account to balance deep and shallow nodes. There are more deep nodes than shallow ones but usually with less samples. 

One of the main advantages of this measure is its computational efficiency. MDI computation is indeed a direct byproduct of the ensemble learning: all impurity decreases are already computed in order to build the tree ensemble \citep{breiman2003random}. 
However, it does not explicitly take into account the quality of the generated model, while being important according to MDI in a poor model does not imply much.

\subsection{MDA importance measure}\label{sec:MDA}

In tree ensemble learning methods using bootstrapping (Bagging, Random Forests), a tree of the ensemble does not use all samples for its construction. Using these out-of-bag samples, \cite{breiman2001random} proposed to evaluate the importance of an input feature $X_m$ by its \textit{Mean Decrease of Accuracy} (MDA) based the out-of-bag (OOB) error estimate. To this end, the contribution of a feature in a particular tree is evaluated by the impact of its removal on the OOB error-rate for that tree (which is expected to increase for an important feature). The removal of the feature is simulated by permuting in a random fashion its values in the OOB sample, and by evaluating the impact of this on the prediction accuracy of the tree estimated over its OOB sample.\footnote{Therefore, the MDA importance is also known in the literature as the \textit{permutation importance}.} The contribution of a feature for the whole forest is then obtained by averaging this measure over all trees.\footnote{Notice that the original definition of MDA importance derived from a Random Forest,  as introduced in \cite{breiman2001random}, is quite different from the current one adopted later on by several authors (e.g., \citep{friedman2009elements,genuer2010variable,biau2016random,gregorutti2017correlation}); in the original definition, the impact of removing a feature on the accuracy of the whole ensemble model was evaluated, instead of the now used average impact on the accuracy of the individual terms of the ensemble model. It is the more recent interpretation to which we refer in our work.}

To formalize this idea, let us first consider a given predictor $f(\cdot) \in {\cal Y}^{\cal X}$ and a given sample $\cal D$ of input-output pairs $(x,y)$ and some loss function $L$. Let us denote by $\tilde{{\cal D}}_{m}$ a modified sample obtained from $\cal D$ by permuting the values of the variable $X_{m}$ randomly (and thus independently of the values of $Y$ and all other input features), and define the MDA-estimate   of  $X_{m}$  (in $f$)  over $\cal D$ by 
\begin{equation} 
Imp_{f}^{{mda}}(X_{m}, f, {\cal D}, \tilde{{\cal D}}_{m}) = \frac{1}{|{\cal D}|}\left(\sum_{(x,y) \in \tilde{\cal D}_{m}} L(f(x),y) - \sum_{(x,y) \in {\cal D}}L(f(x),y) \right).\label{eqn:emp-mda-f}
\end{equation}
This quantity is an empirical estimate, based on the sample $\cal D$, of how much the ``removal'' of variable $X_{m}$ influences the accuracy of $f$ as a predictor of $y$. Its value depends on the particular permutation  $\tilde{\cal D}_{m}$ used. This dependence can be factored out by averaging over a uniform distribution of permutations, yielding
\begin{equation} 
Imp_{f}^{{mda}}(X_{m}, f, {\cal D}) = \mathbb{E}_{ \tilde{{\cal D}}_{m}}\{Imp_{f}^{{mda}}(X_{m}, f, {\cal D}, \tilde{{\cal D}}_{m})\}
\label{eqn:emp-mda-f-exp}.
\end{equation}

Now, consider a  learning set $\mathbf{LS}$ of input-output pairs and a tree growing algorithm $Algo$. Denote by $\mathbf{T}=\{T_1,\dots, T_{N_T}\}$ an ensemble of trees where each tree $T_i$ is grown by  $Algo$ on a bootstrap replicate $\mathbf{LS}_{i}$ of $\mathbf{LS}$, and evaluated on the corresponding OOB sample ($\mathbf{LS}^{oob}_{i} = \mathbf{LS} \setminus \mathbf{LS}_{i}$). The MDA importance of a feature $X_m$ derived from $Algo$ is defined as follows:
\begin{definition}\label{def:empiricalMDA}
	The \textbf{Mean Decrease of Accuracy Importance} $Imp_{Algo}^{mda}$ of a feature $X_m$ about the output $Y$ derived from a bagged version of $Algo$ applied on the learning sample $\mathbf{LS}$ is
	\begin{eqnarray}\label{eqn:emp-mda}
	Imp_{Algo}^{mda}(X_m, Algo, \mathbf{LS}) = \dfrac{1}{N_T} \sum_{i=1}^{N_T} Imp_{f}^{{mda}}(X_{m}, T_{i}, \mathbf{LS}^{oob}_{i}, \widetilde{\mathbf{LS}}^{oob}_{i,m}).
	\end{eqnarray}
\end{definition}

\paragraph{Discussion}

The underlying assumption of MDA is that all important features are related to the output $Y$, and thus contribute to the ability of the model to predict $Y$. The permutation of the values of a feature $X_m$ breaks the statistical link between $X_m$ and $Y$, and thus mimics predictions made without using feature $X_m$, which are expected to be worse if $X_m$ is an important feature. 

A high (and positive) importance value indicates that the variable is important and its removal strongly reduces the accuracy of the tree ensemble-based predictor. 

Contrary to MDI, MDA importances can take negative values \citep{genuer2010variable}.

\subsection{Discussion of MDI versus MDA}

Both methods can be used for classification and regression problems. MDA depends explicitly on the loss function used, whereas MDI depends explicitly on the impurity measure used. Both $Imp^{mdi}$ and $Imp^{mda}$ are random quantities depending on the random learning sample and on the tree ensemble randomisation; $Imp^{mda}$ further depends on the random permutations of the values of $X_{m}$.
While MDI is defined only for tree-based models, MDA can be used with any bagged supervised learning algorithm, and with slight modification in the loss-estimation method with any supervised learning algorithm.

\section{Theoretical analyses} \label{sec:th-imp}

Supported by the broad success of tree-based methods in applied research (see, e.g.,   \citep{svetnik2003random,diaz2006gene,cutler2007random,statnikov2008comprehensive,ghimire2010contextual,zaklouta2011traffic,nayak2016brain,belgiu2016random}), many authors studied tree-based variable importances to increase their understanding of the methods. Some theoretical analyses about the consistency of the Random Forests algorithm were already mentioned in Section \ref{sec:rf-prop}. But only a few works focused on tree-based variable importances from a theoretical point of view and this section aims at summarising these results and at providing the reader with a better understanding of their theoretical properties. 

Mechanisms for building a tree-based ensemble, and consequently to derive importance measures, are highly complex because of their randomisation and their data-dependent nature. For that reason, theoretical studies on MDI and MDA usually deal with that complexity by considering either a simplified version of the tree-based algorithm \citep{ishwaran2007variable}, an asymptotic setting \citep{louppe2013understanding,louppe2014understanding,sutera2018random}, or even a specific class of supervised learning problems \citep{gregorutti2017correlation}. 

In the present section, we first review the main known theoretical properties of the importance measures focusing on so-called asymptotic conditions, i.e., when the ensemble of trees and the training sample are both assumed to be of infinite sizes. We then discuss theoretical analyses studying the impact of feature correlation or redundancy on importance measures. Empirical analyses of these measures in real settings are discussed in the next section.

\subsubsection*{Notational conventions}

 In the present and subsequent sections, MDI and MDA importances derived in asymptotic conditions, i.e. their population versions, are respectively denoted  $Imp_{\infty}^{mdi}$ and $Imp^{mda}_{\infty}$. Additional parameters are specified as subscript or superscripts when they have an influence on the importance measure.

\label{sec:theory}

\subsection{Asymptotic properties of MDA \label{sec:th-mda}}

Following \cite{gregorutti2017correlation}, let us introduce the population version of the MDA importance measure (Equation \ref{eqn:emp-mda-f}) in the context of least-squares regression problems. Denote by $P(Y,X)$ the joint distribution of inputs and all outputs, and by $\tilde{P}_{m}(Y,X)$ the joint distribution obtained by replacing in $P$ the factor $P(X_{m}|Y, X^{-m})$ by the marginal distribution of $P(X_{m})$, i.e. by breaking any link between $X_{m}$ with the output and all other input features will leaving the marginal distribution of $X_{m}$ unchanged. Denote also by $f_{B}$ the Bayes model with respect to the original distribution $P$ and the square loss-function (i.e. $L(y,y') = (y - y')^{2}$): $$f_{B}(X) =  \mathbb{E}_{P}\left \{Y | X \right \},$$
where the subscript $P$ indicates the distribution used for computing the conditional expectation. 
Then the population version of MDA introduced by \cite{gregorutti2017correlation} is defined as follows 
\begin{eqnarray}\label{eqn:pop-mda}
Imp_\infty^{mda} (X_m) = \mathbb{E}_{\tilde{P}_{m}}\left \{ ( Y - f_{B}(X) )^2 \right \} - \mathbb{E}_{P}\left \{ ( Y - f_{B}(X)  )^2 \right \}.
\end{eqnarray}
Notice that this quantity is non-negative, since $f_{B}$ is the Bayes model with respect to the original distribution\footnote{{More formally, we can rewrite the first term of \ref{eqn:pop-mda} as
  $$\mathbb{E}_{\tilde{P}_{m}}\left \{ ( Y - f_{B}(X) )^2 \right \} = \mathbb{E}_{P}\left \{ \mathbb{E}_{\tilde{X}_m\sim P(X_m)}\left \{ (Y-f_{B,\tilde{X}_m}(X))^2\right\}\right\},$$
  where $f_{B,\tilde{X}_m}(X)$ returns the value of $f_{B}$ at $\tilde{X}^m$ obtained from $X$ by replacing $X_m$ by $\tilde{X}_m$ and leaving all other features unchanged. Inverting the two expectations, one gets:
  $$\mathbb{E}_{\tilde{X}_m\sim P(\tilde{X}_m)} \left \{ \mathbb{E}_{P} \left \{ (Y-f_{B,\tilde{X}_m}(X))^2\right\}\right\}.$$
  By definition of $f_B$, the inner expectation, and thus also the outer expectation, is greater or equal to $\mathbb{E}_{P}\left\{(Y-f_B(X))^2\right\}$, which proves that $Imp_\infty^{mda} (X_m)$ is non-negative.}}.

Obviously\footnote{The two terms in Equation \ref{eqn:emp-mda-f} are indeed unbiased and consistent sample estimates of the two population mean square errors in \ref{eqn:pop-mda}.},

both $Imp_{f}^{mda} (X_m, f_{B}, {\cal D}, \tilde{\cal D}_{m})$ (Equation \ref{eqn:emp-mda-f}) and $Imp_{f}^{mda} (X_m, f_{B}, {\cal D})$ (Equation \ref{eqn:emp-mda-f-exp}) are unbiased and consistent finite sample estimates of  $Imp_\infty^{mda} (X_m)$.

On the other hand, while Equation \ref{eqn:pop-mda} only depends on the joint distribution between $Y$ and $X$, the ``Bagging'' estimate  of Equation \ref{eqn:emp-mda} also depends on the base learner $Algo$ used.  The consistency of $Imp_{Algo}^{mda}$ with respect to $Imp_{\infty}^{mda}$ thus depends on the properties (and obviously the consistency) of the base learner. In particular, 
\citep{gregorutti2017correlation} note that this consistency was shown by \cite{zhu2015reinforcement} under several hypotheses, including the use of purely random forests \cite{biau2008consistency} and the independence between features\footnote{This assumption is quite strong and excludes works on correlated features for instance.}. 

\subsubsection*{Additive regression model.}

To handle the complexity of the theoretical analysis of the MDA importance measure, \citep{gregorutti2017correlation} consider the particular case of a joint distribution $P$ satisfying the following additive regression model
\begin{eqnarray}\label{eqn:add}
Y = \sum_{j=1}^p f_j(X_j) + \epsilon
\end{eqnarray} where $\epsilon$ is such that $\mathbb{E}\{\epsilon| X\}=0$ and $\mathbb{E}\{\epsilon^2 |X\}$ is finite (and where all functions $f_j$ are measurable) implying that $f_{B}(\mathbf{x}) = \sum_{j=1}^p f_j(x_j)$. 

In this setting, \cite{gregorutti2017correlation} show that the MDA importance of a variable $X_m$ is 
\begin{eqnarray} \label{eqn:pop-mda-add}
Imp^{mda}_\infty (X_m) = 2\, var\{f_m(X_m)\}.
\end{eqnarray}
Equation \ref{eqn:pop-mda-add} states that the MDA importance of a feature is (twice) the variance of the contribution $f_m(X_m)$ of $X_m$ in the additive Bayes model (Equation \ref{eqn:add}). 
{In the classification setting, \cite{gregorutti2017correlation} show that this result is not valid with zero-one loss in the case of an additive logistic regression model, as they note that $Imp^{mda}_\infty(X_{m}) > 0$ only if the contribution of $X_{m}$ to $P(Y|X)$ is large enough to change the predicted class.}

{\cite{zhu2015reinforcement} use a slightly different notion of population importance, which is a normalised version of $$\mathbb{E}\{(f_{B}(X)-f_{B}(\tilde{X}^{m}))^{2}\}$$ where $\tilde{X}^{m}$ denotes the vector of inputs where the $m$th coordinate was replaced by an independent copy of $X_{m}$ and the expectation is taken with respect to the joint distribution of $Y$, the original inputs $X$, and the independent copy of $X_{m}$. Under the above additive model, this definition actually coincides with the former notion introduced above, as shown by \citep{gregorutti2017correlation}.}

\subsubsection*{Simplified permutation scheme.}

Instead of considering a specific model and still circumventing the complexity of the permutation scheme, \cite{ishwaran2007variable} study a variant of MDA importance sharing similar key properties but implementing another permutation scheme. Instead of permuting the values of a feature $X_m$ in oob samples, \cite{ishwaran2007variable} propose to "noise up" the feature $X_m$ by ignoring all nodes coming after one splitting on $X_m$. In practice, it comes to a random left-right assignment of samples in all ignored nodes. The beginning of the tree however remains unchanged. For this setting and assuming that the model can provide a good approximation\footnote{In details, in asymptotic conditions, the tree-based model must be able to provide a good approximation of the true inputs-output function which implies the consistency of the model and the piecewise constance of the regression function \citep{ishwaran2007variable}.}, the asymptotic behaviour of this variant can be derived. 

In particular, \citep{ishwaran2007variable} focus on the \textit{position bias} and show that variables split close to the root node tend to have a stronger effect on the predictive accuracy than other variables. It seems reasonable that the model performances are highly impacted as most of the tree is ignored when evaluating the importance of a feature close to the root. A similar behaviour is expected in the classical MDA importance. Indeed, the relation between features used at the top of the tree structure and their expected usefulness is obvious.  

Nevertheless, some irrelevant features may appear as important in this variant because of the feature noising. Since all nodes are ignored after one splitting on the evaluated feature $X_m$, the observed decreases in predictive accuracy is not only due to $X_m$ but also to all features used in deeper nodes. Therefore, the importance of $X_m$ reflects both the actual contribution of $X_m$ and the contribution of all split variables of ignored nodes. The importance of $X_m$ can thus be strictly positive even if $X_m$ is irrelevant. In response to that, \cite{ishwaran2007variable} suggest that non-informative features are more likely used down in trees and thus spurious importance scores should be limited.
He also claims that noising up only the right node (i.e., the one using $X_m$ to split)  is too difficult to be theoretically analysed without additional assumptions.

\subsection{Asymptotic properties of MDI} \label{sec:th-mdi}

\subsubsection*{Regression tree-based models}

{According to \cite{friedman2001greedy}, the MDI importance measure is an approximated measure of the relative influence of variables. In the context of regression problems, let us consider a given predictor $f(\cdot) \in \mathcal{Y}^{\mathcal{X}}$. Following \cite{friedman2001greedy}, the relative importance of an input variable $X_j$ in the predictor $f$ is its relative influence on the variation of $f$ over the joint input variable distribution and computed as follows
	\begin{eqnarray}\label{def:friedman-influence}
	Imp_{f}^{infl}(X_j,f) = \sqrt{ \mathbb{E}_{X} \left \{ \left( \dfrac{\partial f(X)}{\partial X_j} \right)^2\right \} .\;  var_{X} \{X_j\} }.
	\end{eqnarray}  
\cite{friedman2001greedy} note that Equation \ref{def:friedman-influence} does not strictly exist for piecewise constant functions such as produced by regression tree-based models. \cite{friedman2001greedy} therefore suggest that the MDI importance measure\footnote{Actually, the MDI importance computed as the sum of empirical improvement in squared error over all nodes splitting on $X_j$ in a given tree and its average over all trees.} of $X_j$ was proposed as a surrogate measure to approximate Equation \ref{def:friedman-influence} for piecewise constant functions and shown to be consistent with expected feature influences in the case of linear relationships between inputs and output variable \citep{friedman2001greedy}.}\\

Beyond this intuitive motivation, we now turn to classification problems, and analyse the main properties of the  MDI importance measure when it is based on the Shannon entropy as an impurity measure.

\subsubsection*{Totally randomized decision-tree based ensembles with categorical input features and multiway exhaustive splits.}

Following \cite{louppe2013understanding,louppe2014understanding}, let us consider a set $V=\{X_1,\dots, X_p\}$ of categorical input features and a categorical output $Y$. For the sake of simplicity, only the
Shannon impurity is considered below but most results can be go generalised to other impurity measures \citep{louppe2013understanding, louppe2014understanding}. Let us also consider totally randomized trees (defined in Section \ref{sec:RF}) with multiway exhaustive splits (see Section \ref{sec:DT}). In case of categorical variables, each node $t$ is split into $|X_i|$ sub-trees, i.e., one for each possible value of $X_i$.  It implies that features can only be used once and thus limits the depth of a branch to $p$. 

In this setting, the MDI importance of feature $X_m \in V$ for $Y$ computed in asymptotic conditions\footnote{Infinite learning sample size, infinite ensemble of fully developed (ie., unpruned) totally randomised trees.} is given by \citep{louppe2013understanding}: 
\begin{eqnarray}\label{eqn:mdiXm}
Imp^{mdi}_{\infty}(X_m) = \sum_{k=0}^{p-1} \dfrac{1}{C_p^k} \dfrac{1}{p-k} \sum_{B\in \mathcal{P}_k(V^{-m})}I(X_m;Y|B)
\end{eqnarray}
where $V^{-m}$ denotes the subset of features $V \setminus \{X_m\}$, $\mathcal{P}_k(V^{-m})$ is the set of subsets of $V^{-m}$ of cardinality $k$, and $I(X_m;Y|B)$ is the conditional mutual information of $X_m$ and $Y$ given the variables in the conditioning set $B$. Additionally, \cite{louppe2013understanding} show that 
\begin{eqnarray}\label{eqn:mdiall}
\sum_{m=1}^p Imp^{mdi}_{\infty}(X_m) = I(X_1,\dots,X_p;Y)
\end{eqnarray}
where  $I(X_1,\dots,X_p;Y)$ is the joint mutual information between all features in $V$ and the output $Y$.

Equation \ref{eqn:mdiXm} shows that each importance can be divided along the interaction degree $k$, i.e., the number of features in the conditioning set $B$, and along the combinations of $B$ of fixed size of $k$ features.

Equation \ref{eqn:mdiall} states that all the information $I(X_1,\dots,X_p;Y)$ contained in the set of input variables $V$ about the output $Y$ can be decomposed between the importance of all features.  The equality of Equation \ref{eqn:mdiall} induces that the sum of all importances equals a fixed value (of the joint mutual information). It implies that the increase or decrease of one feature importance is made to the detriment of other importances. 

Let us mention that any (conditional) mutual information term involving $Y$ (of the form $I(X;Y|B)$ or $I(X_1,\cdots,X_q;Y|B)$ with $B$ potentially empty) is upper bounded by $H(Y)$. It gives in particular that $\sum_{m=1}^p Imp^{mdi}_{\infty}(X_m) \le H(Y)$ where the equality indicates that $Y$ is perfectly explained by $V$ (i.e., $I(X_1,\dots,X_p;Y)=H(Y)$).

\cite{louppe2013understanding} show also that the form of these expressions remains valid for any impurity measure leading to non negative impurity decreases, including obviously all classical impurity measures such as Shannon-, Gini-, and variance-based ones.

\subsubsection*{Non-totally randomized trees with multiway exhaustive splits and categorical input features.}

Beyond its asymptotic behaviour, \citep{louppe2013understanding, louppe2014understanding} establish a relationship between relevance and MDI importance. This relationship follows from the definition of relevance in terms of mutual information (see Definitions \ref{def:relevanceMI} and \ref{def:strongweakrelevanceMI} in Section \ref{sec:relevance}).

In what follows, results can be extended to MDI importances derived from non-totally randomised trees (i.e., with $K>1$). Thus, let us denote the MDI importance computed with totally or non-totally randomized trees depending on the value of $K$ as $Imp^{mdi,1}_{\infty}$ and $Imp^{mdi,K}_{\infty}$ respectively.

In this context, a feature $X$ which is irrelevant for $Y$ with respect to $V$ always verifies $Imp^{mdi,K}_{\infty}(X) = 0$ \citep{louppe2013understanding,sutera2018random}. In case of totally randomised trees ($K=1$), a null score is only associated to an irrelevant feature and consequently all relevant features (strongly and weakly) have strictly positive MDI importance scores. Additionally, this result implies that irrelevant features do not impact importance scores of other features. Consequently, the relevant feature MDI importances are thus independent of the number of irrelevant features.

On the contrary, with non-totally randomised trees ($K>1$), some relevant features can also have a zero importance score due to the effect of $K$ on the tree construction. \cite{sutera2018random} show that only strongly relevant features are guaranteed to have strictly positive MDI importance score as they convey information about the output that no other variable (or combination of variables) in $V$ conveys
Depending on the value of $K$, some weakly relevant features may have a zero importance score. The randomisation parameter $K$ (when $>1$) thus affects the number and nature of relevant variables that can be found. 
 
 In the same conditions, \citep{louppe2013understanding} also show that the MDI importance derived from pruned trees (i.e., built up to a depth $q<p$) is equivalent to the ones obtained from unpruned trees built on random subspaces of $q$ variables randomly drawn from $V$.

\subsection{Correlated and redundant features}

By definition, totally redundant features share exactly the same information about the target variable $Y$, while correlated features often share information without necessarily being totally redundant with respect to $Y$. Tree-based or model-based importance measures described so far evaluate the contribution of a feature in the tree-based predictor or in the Bayes model. In the presence of redundant or correlated features, the sum of all contributions can no longer be shared unequivocally between all features. For example, the same "piece" of contribution might be attributed to several totally redundant features as they are interchangeable in the eyes of the model. The rest of this section describes works focusing on that aspect of importance measures. 

\subsubsection{MDA} \label{sec:th-mda-correlation}

\subsubsection*{Additive regression model with centred $f_j(X_j)$ functions.}

\cite{gregorutti2017correlation} continue their theoretical study of the additive model, by analysing the population version of the MDA importance in terms of feature correlations, assuming in addition that all $f_j(X_j)$ functions have zero mean. Under these conditions, Equation \eqref{eqn:add} becomes\footnote{See \citep[Proposition 2]{gregorutti2017correlation} for a proof; the zero-mean assumption is not essential but simplifies the reading of the expression.}
\begin{eqnarray}
Imp^{mda}_\infty (X_m) = 2 cov\{Y,f_m(X_m)\} - 2 \sum_{k\ne m} cov\{f_m(X_m), f_k(X_k)\}
\end{eqnarray}
where $cov$ denotes the covariance function. In this alternative formulation, interactions between input features are explicitly shown in the second term. 

\subsubsection*{Additive regression model and a normal distribution.}

\cite{gregorutti2017correlation} further consider the case of normal joint distribution $P_{V,Y}\sim \mathcal{N}_{p+1}\left ( 0 , Z \right )$ with a group $C$ of $c$ features $\{X_1,\dots,X_c\}$ equally correlated with each other and with the output. In order to highlight relationships between block of features, the covariance matrix $Z$ can be expressed as follows
 \begin{eqnarray}\label{eqn:setting-corr}
 Z = \begin{pmatrix}
 Z_V & \mbox{\boldmath$\tau$}^T\\
 \mbox{\boldmath$\tau$} & \sigma_y^2
 \end{pmatrix}  = \begin{pmatrix}
\mbox{\boldmath$\rho$} & 0 & \mbox{\boldmath$\tau$}_{\in C}^T\\
0 & \mathbb{1} & \mbox{\boldmath$\tau$}_{\not\in C}^T\\
\mbox{\boldmath$\tau$}_{\in C} & \mbox{\boldmath$\tau$}_{\not\in C} & \sigma_y^2
\end{pmatrix} 
\end{eqnarray}
where \begin{enumerate}[\indent $\color{Gray!80} \bullet$]
	\item $Z_V$ is the covariance between input features;
	\item $\mbox{\boldmath$\rho$}$ is the covariance sub-matrix $c\times c$ of features in the correlated group such that $cov(X_i,X_i) = 1$ and $cov(X_i,X_j)=\rho$ for all $1\le i,j\le c$, i.e. $\mbox{\boldmath$\rho$}=(1-\rho)I_c + \rho \mathbb{1}\mathbb{1}^T$;
	\item $\mbox{\boldmath$\tau$}_{\in C}$ is a (line)vector of $c$ elements $\mbox{\boldmath$\tau$}_{\in C} =\{\tau_C, \dots, \tau_C\}$, i.e. $cov\{X_m,Y\} = \tau_C$ with $0<m\le c$;
	\item $\mbox{\boldmath$\tau$}_{\not\in C}$ is a (line)vector of $(p-c)$ elements $\mbox{\boldmath$\tau$}_{\not\in C} =\{\tau_{c+1}, \dots, \tau_p\}$, i.e. $cov\{X_j,Y\} = \tau_j$ with $c<j<p$; 
	\item and $\sigma_y^2$ is the variance of $Y$.
\end{enumerate}
 In this setting, \cite{gregorutti2017correlation} specify the MDA importances as follows:
\begin{eqnarray}\label{eqn:mda_normal}
Imp^{mda}_\infty(X_m) = 2 \alpha_m^2 var\{X_m\} = 2 \alpha_m cov\{X_m,Y\} - 2\alpha_m \sum_{k\ne m} \alpha_k cov\{X_m,X_k\}
\end{eqnarray}
where $\alpha$'s are deterministic coefficient\footnote{See \citep[Proposition 3]{gregorutti2017correlation} for a proof.} equal to $\alpha_m = [Z_V^{-1}  \mbox{\boldmath$\tau$}]_m$.\\

\noindent
For a feature $X_j \not\in C$, Equation \ref{eqn:mda_normal} becomes
	\begin{eqnarray}\label{eqn:mda_noncorr}
	Imp^{mda}_{\infty}(X_j) = 2 \tau_j ^2
	\end{eqnarray}
	where $\tau_j$ corresponds to $cov\{X_j,Y\}$.\\

\noindent
For a feature $X_i \in C$, Equation \ref{eqn:mda_normal} becomes
	\begin{eqnarray} \label{eqn:MDAcorroutC} 
	Imp^{mda}_{\infty}(X_i) = 2 \left (\dfrac{\tau_C}{1-\rho+c\rho} \right )^2
	\end{eqnarray} 
	where $\tau_C = cov\{X_i,Y\}$ and $\rho = cov\{X_i,X_k\}$ with $0 \le k \le c,k\ne i$.
	In the particular case of two copies of the same feature, i.e. $c=2$ and $\rho=1$, it is
	\begin{eqnarray}\label{eqn:MDAcorrinC}
	Imp^{mda}_{\infty}(X_i) = 2 \left ( \dfrac{\tau_C}{2} \right )^2 = \dfrac{\tau_C^2}{2}.
	\end{eqnarray}

Equation \ref{eqn:mda_noncorr} states that the importance of a non-correlated feature is not impacted by potential correlation between other features. Equation \ref{eqn:MDAcorroutC} shows that the importance of a feature correlated with others is influenced by $\rho$ and $c$.
A large number of correlated features $c$ or a strong correlation, i.e. $c$ close to $1$, decrease the MDA importance of each individual feature. Combining Equations \ref{eqn:mda_noncorr} and \ref{eqn:MDAcorroutC} suggests that $X_j$ may appear more important, i.e. corresponds to a higher MDA importance, than $X_i$ even if $\tau_j < \tau_C$ if $\rho$ is large enough. Conversely, anti-correlation $\rho<0$ tends to increase the MDA importance. 

\subsubsection{MDI} \label{sec:th-mdi-redundant}

\subsubsection*{Totally randomized trees with multiway exhaustive splits and categorical input features.}

Let $X_j \in V$ be a relevant variable with respect to $Y$ and $V$ and let $X'_j \not\in V$ be a new variable such that $X_j$ and $X'_j$ are totally redundant with respect to $Y$ (see Definition \ref{def:totalred:eqn1}).  \cite{louppe2014understanding} extends the analytical formulation of the MDI importances of $X_j$ and any non-redundant variable $X_l \in V^{-j}$ in order to show the impact of the addition of $X'_j$. For sake of clarity, only one pair of totally redundant featuresis considered but see \citep{louppe2014understanding} for a generalisation to $c$ such features.\\

\noindent
The asymptotic importance of variable $X_j$ as computed from an ensemble built on $V\cup\{X'_j\}$ is\footnote{See \citep[Proposition 7.2]{louppe2014understanding} for a proof.}:
\begin{eqnarray}\label{eqn:MDIredin}
Imp_{\infty}^{mdi,1} (X_j) = \sum_{k=0}^{p-1}\dfrac{p-k}{p+1} \dfrac{1}{C^k_p} \dfrac{1}{p-k} \sum_{B\in \mathcal{P}_k(V^{-j})} I(X_j;Y|B)
\end{eqnarray}
For any other variable $X_l$ from $V^{-j}$, the importance becomes\footnote{See \citep[Proposition 7.4]{louppe2014understanding} for a proof.} 
\begin{eqnarray}\label{eqn:MDIredout}
\begin{split}
Imp^{mdi,1}_{\infty}(X_l) = \sum_{k=0}^{p-2} \dfrac{p-k}{p+1} \dfrac{1}{C^k_p} \dfrac{1}{p-k} \sum_{B \in \mathcal{P}_k(V^{-l}\setminus \{X_j\})} I(X_l;Y|B) \\+ \sum_{k=0}^{p-2} \left [ \sum_{k'=1}^{2} \dfrac{C^{k'}_{2}}{C^{k+k'}_{p+1}} \dfrac{1}{p+1-(k+k')} \right ] \sum_{B\in \mathcal{P}_k(V^{-l}\setminus \{ X_j\})} I(X_l;Y|B\cup \{X_j\})
\end{split}
\end{eqnarray}

A comparison of Equations \ref{eqn:MDIredin} and \ref{eqn:mdiXm} shows that the introduction of a variable $X'_j$ totally redundant with $X_j$ decreases the importance of $X_j$. Indeed, with respect to \ref{eqn:mdiXm}, all terms of the sum in \ref{eqn:MDIredin} are multiplied by a factor $\dfrac{p-k}{p+1} <1$. Intuitively, this is a consequence of the fact that both $X_j$ and $X'_j$ convey the exact same information about the output and they now both compete to explain the output, as the sum of all importances is not affected by the introduction of $X'_j$. Indeed, $X'_j$ does not bring any new information about the output with respect to $X_j$ (by definition) and therefore the right side of Equation \ref{eqn:mdiall} is unchanged. Although we obviously have $Imp^{mdi,1}_{\infty}(X_j) = Imp^{mdi,1}_{\infty}(X'_j)$ by symmetry, notice that the importance of $X_j$ is not simply divided by a factor 2 since the importances of the other variables are also affected by the introduction of $X'_j$, as shown in Equation \ref{eqn:MDIredout}.

Equation \ref{eqn:MDIredout} shows that the impact of the introduction of $X'_j$ on the importances of the variables in $V^{-j}$ is the combination of two effects. The first sum in \ref{eqn:MDIredout} is over all $B$ composed of variables from $V^{-j}$. With respect to the corresponding terms in \ref{eqn:mdiXm}, each term is multiplied by a factor  $\dfrac{p-k}{p+1}$ strictly lower than 1. The second sum in \ref{eqn:MDIredout} is over all conditionings including $X_j$ and the weights of the corresponding terms are now increased with respect to similar terms in \ref{eqn:mdiXm}. Whether or not the importance of $X_l$ will increase will thus depend on the way $X_l$ interacts with $X_j$. If the mutual informations $I(X_l;Y|B,X_j)$ are large ($X_l$ and $X_j$ are complementary), then adding $X_j$ will reinforce these terms and the net effect could be an increase of the importance of $X_l$. On the other hand, if these mutual informations are small ($X_l$ and $X_j$ are redundant), the net effect could be a decrease of the importance of $X_l$.

\section{Empirical analyses} \label{sec:imp-emp}
\newcommand{\com}[1]{\textcolor{blue}{#1}\\}

In the previous section, we studied theoretically both importance measures in asymptotic conditions. Although those results are helpful to better understand the mechanisms of MDA and MDI importance measures, they do not provide insights on how they actually behave in practice. In the light of their expected behaviours, the goal of this section is to analyse those two measures in a more realistic setting, i.e. with finite sample size and number of trees. To do so, we review many empirical analyses of their practical behaviours in numerous settings. In particular, we aim at highlighting the main biases and practical limitations of MDI and MDA importance measures in several view angles.

\subsection{Soundness}

Variable importance measures derived from tree-based ensemble methods have been suggested for the identification and selection of relevant features in numerous applications, e.g. gene selection in micro-array data \citep{huang2005comparative,diaz2006gene,pang2006pathway,rodenburg2008framework}, SNPs in large-scale/genome-wide association study data (GWAS) \citep{lunetta2004screening,bureau2005identifying,botta2014exploiting}, proteins \citep{qi2006evaluation}, or, more recently, brain regions involved in neuronal disease in neuroimaging data \citep{wehenkel2018characterization}. Along with this wide practical use, some works have tried to assess the quality of this identification. 

\cite{archer2008empirical,gromping2009variable} show that MDI and MDA feature importance measures manage to identify true predictors in different settings, and results are usually in agreement with other machine learning methods.

In presence of feature interactions, it was also noted that these measures provide interesting alternatives to classical statistical tests because they do not require explicit modelling or assumptions on the problem (e.g., gaussianity, (non-)linearity, or independence) and naturally handle feature interactions \citep{gromping2009variable,geurts2009supervised}. Differences between univariate approaches and tree-based importance scores may additionally be indicative of multivariate interactions \citep{rodenburg2008framework,auret2011empirical}. For example, \cite{lunetta2004screening} show that selections of relevant genetic markers (SNPs) provided by random forest feature importance measures outperform those obtained from a standard univariate screening method (i.e., Fisher Exact test), especially in presence of many interacting features.

In presence of correlated features, \cite{archer2008empirical} showed, in a setting similar to \cite{gregorutti2017correlation}'s (i.e., one group of correlated and equally predictive features, see Section \ref{sec:th-mda-correlation}), that both Gini MDI and MDA importance measures manage to identify most predictive features in many settings. They however noted that in case of strong correlation ($\rho$ close to $1$), the highest importance score may be associated to one feature correlated with the most predictive one. When there were more than one group of predictive correlated features or uncorrelated predictive features, some experiments show that both importance measures are sensitive to correlation structures and this may sometimes impact the reliability and stability of importance scores \citep{strobl2008conditional,nicodemus2009predictor,tolocsi2011classification,auret2011empirical}. Depending on tree parameters and correlation structures, empirical observations seems to diverge. Therefore, a more detailed analysis of those experimental results will be the focus of Section \ref{sec:biases-correlation}. 

From another point of view, \cite{lundberg2017consistent,lundberg2018consistent} claim that MDI importance measure is not "consistent" in the case of a (non-randomised) single tree. In the chosen example of two equally relevant features, increasing the predictive contribution of one does not necessarily correspond to an increase of its MDI importance. Conversely, the MDA importance measure appears to be "consistent" in this example. 

\subsection{Split randomisation parameter $K$}\label{sec:empirical-K}

In random forest methods, $K$ is the number of features considered at each node as split variable candidates. A low value of $K$ (e.g., $K=1$) maximises randomisation as one feature is selected totally at random without optimising the node impurity reduction. Consequently, all features can be selected and all relevant features may be identified. In contrast, high values (e.g., $K=p$) induce more optimised trees and only strongly relevant features are guaranteed to be identifiable. 

The interaction between $K$ and feature importance measures is not clear. For several authors \citep{auret2011empirical, strobl2008conditional,nicodemus2010behaviour}, importance measures are more accurate when derived from ensemble of trees built with large $K$ values. In these studies, experiments are carried out on simulated data where the output is a linear combination of several features, i.e. $Y = \alpha_1 X_1 + \alpha_2 X_2 + \dots + \alpha_p X_p$ where non-zero coefficients correspond to predictive features while zero coefficients refer to non-predictive ones. Additionally, some features may be correlated, possibly in a strong fashion. In this setting, a feature importance measure is said to be inaccurate if it provides importance scores that do not comply with the $\alpha_{i}$  coefficients of the true model. Below, we argue that feature importance measures should not be necessarily considered as less accurate for low values of $K$ because importance scores do not align with these coefficients, especially when correlated features are not equally contributive in the linear combination as it is the case in their analyses. In particular, as explained above, low values of $K$ might be more appropriate to address the all-relevant problem, even if this leads to importances that do not match coefficients $\alpha$. Results in these papers are also of interest to discuss biases in feature importance measures due to correlation and we analyse them with this different angle in Section \ref{sec:biases-correlation}. 

It should be noted that a non predictive feature $X_i$ ($\alpha_i=0$) that is strongly correlated with a predictive feature $X_j$ ($\alpha_j > 0$) may therefore be weakly relevant to the target as it may provide part of the information of $X_j$ about $Y$. \cite{nicodemus2010behaviour} characterised such features that appears to be predictive as long as some other features are not included in the model as ``spurious correlation". In our terminology, feature $X_i$ is weakly relevant and totally redundant to $X_j$ with respect to the target. Consequently, coefficients $\alpha$ do not reflect the actual contribution of each feature in a tree-based model.

Authors adopting the minimal-optimal point of view for feature selection (like those mentioned above) concentrate their efforts on identifying only a part of relevant features (i.e., strongly relevant features and a maximal subset of non-redundant ones). It therefore makes sense that redundant features are expected not to be identified as important. However, except in trees built without node-wise split randomisation (i.e., $K=p$), even totally redundant and weakly relevant features can be selected in tree models if they do not compete at some nodes with features that are most useful (and eventually provide the same information). This explains why \citep{auret2011empirical, strobl2008conditional,nicodemus2010behaviour} observe that feature importance measures seem more accurate for high values of $K$ even if low values of $K$ would be more appropriate when interested in solving the all-relevant problem. In such cases, theoretical results (from \citep{sutera2018random} and summarised in Section \ref{sec:th-mdi}) confirm that high values of $K$ imply that redundant features are more frequently masked by strongly relevant features (with positive coefficients) and therefore importance scores are more similar to coefficients $\alpha$. Strongly (``truly'') relevant features are also expected to be used more often and to recover most of the importance in the tree model. \cite{genuer2010variable} indeed observed experimentally that higher values of $K$ increase the importance of truly important variables.

 In addition, low-sample conditions imply that only few variables can be evaluated before reaching nodes with too few samples for an accurate impurity estimation (see Section \ref{sec:biases-estimation}). Increasing the value of $K$ may actually improve importance scores for relevant features that are more often chosen near the root. They are estimated more often and with more samples, potentially making them more stable and more accurately estimated.

Similarly, in presence of many irrelevant features, using a small value of $K$ may induce that numerous splits are made on irrelevant features (because all split variable candidates are irrelevant) . On one hand, such splits do not provide information about the target. On the other hand, a feature is selected based on its spurious relationship with the output and is unfairly credited of some importance for it. Less randomised trees (i.e., $K$ close to $p$) are therefore preferable in such situations. In contrast, if all features are assumed to be equally relevant, then more randomised trees ($K$ close to $1$) are more suitable because they consider all features and not just some of them. 

From all those observations, a trade-off for the value of $K$ needs to be found in order to identify the right set of relevant features while taking into account the nature of the problem.

\subsection{Feature ranking stability and number of trees} \label{sec:importances:stability}

Typically, the number of trees necessary for good performances grows with the number of features \citep{liaw2002classification}. There is no need to grow more trees when the predictions of a subset of the forest are as good as the predictions of the whole forest. This approach however requires to build an unnecessary large number of trees. Therefore, several works propose simple procedure to determine a priori the number of trees for stable and accurate predictions \citep{latinne2001limiting,hernandez2013large}. However, these only concern the predictive ability of tree-based ensemble and the number of trees may not be optimal with respect to the feature importance measures.  In \citep{huynh2012statistical,paul2012stability}, experiments show that the numbers of required trees yielding stable feature selection and predictive performances differ from several orders of magnitude.  

Theoretically, feature importance measures only attribute zero importance scores for irrelevant or masked features. However, in practice, this property relies on one fundamental principle: the number of trees is large enough. Indeed, as pointed in \citep{wehenkel2018random}, in case of too small trees and/or high-dimensional datasets ($p\gg N$), some features may have a zero importance value because they never have been considered during the tree growing process. Additionally, some feature importance may have been evaluated in too few occasions to fairly represent its true contribution. For example, two features forming a XOR structure need to be used at least two times such that both features can be used once before each other. Ultimately, one expect that their averaged importances over a sufficient number of evaluations is the same for both features. In that context, \cite{wehenkel2018characterization} uses the idea of the so-called coupon collector's problem and derives a minimal number of trees (for given parameters $N$, $p$ and $K$) that should be built to have some minimum guarantee that all features are seen at least once.

Even if all features have been considered and receive an importance score, the interpretation of feature importance measures is only possible if results are stable enough, i.e., do not vary significantly if a few additional trees are taken into account, for another ensemble of same size  or if small changes are made to the dataset \citep{strobl2008conditional,saeys2008robust}. Typically, it has been suggested and observed that increasing the number of trees in the forest improves the stability of feature importance measures \citep{liaw2002classification,archer2008empirical,genuer2010variable,paul2012stability}. 
In practice, \cite{liaw2002classification} however observed that importance scores may vary from one ensemble to another while ranking of importances is usually more stable for the same number of trees. In a discussion about stability of ranked gene lists (which aims at identifying a short-list of genes of interest for further analyses), \cite{boulesteix2009stability} state that the rank of a particular feature is usually as important as its value from a practical point of view. \cite{saeys2008robust} note that the analyses of selected features typically require much effort and time and this stresses the need for a stable feature ranking and robust feature selection techniques, especially for model interpretation in biomedical applications \citep{tolocsi2011classification}. 

Assuming enough trees and a stabilised feature ranking, it appears in several data sets that the most important features have typically the highest importance scores \citep{auret2011empirical}. This also suggests that efficient feature selection can be performed by selecting the best $k$ features, where $k$ can be determined by selecting a judicious importance thresholds so as to minimise the number of selected irrelevant features (false positive). Section \ref{sec:threshold} focuses on approaches proposed in the literature to determine this threshold. However, a stable feature ranking does not imply that importance scores are reliable, i.e. that one feature better ranked than another is not necessarily more important. Feature importance measures may be sensitive to different factors, such as the presence of correlated features, and provides unfair importance scores. In Section \ref{sec:biases}, we review the main sources of unfairness (biases) that have been studied in literature.

\subsection{Importance measures vs prediction performances}

Tree-based feature importance measure is usually seen as a side-product of the random forest model. However, a model optimised so as the maximise its performances is typically not adjusted for measuring feature importances \citep{vanderlaan2006statistical}. For example, \cite{paul2012stability} show that the number of trees yielding stable prediction performances is smaller of several orders of magnitude than what is required for a stable feature selection. The number of trees should then be carefully chosen. 
In relation with Section \ref{sec:empirical-K}, randomisation parameter $K$ is usually considered as crucial to obtain good accuracy performances, by controlling the randomisation of the model (and thus the bias-variance trade-off). In classification (respectively, in regression), empirical studies typically suggest that $K=\sqrt{p}$ (resp., $K=p$) is an appropriate and often optimal value with respect to prediction accuracy \citep{geurts2006extremely,strobl2008conditional}. It has however been noticed that model performances is  usually not related to the goodness of tree ensemble parameters for variable importance purposes \citep{auret2011empirical,huynh2012statistical}. Theoretical results suggest that low values of $K$ are more suitable for feature importance measures as $K=1$ is the only way to guarantee that all relevant features can be identified, but this usually requires a larger number of trees to consider all features. Conversely, higher values of $K$ tend to focus more on strongly relevant features. In terms of prediction accuracy, larger values (e.g., $K=\sqrt{p}$ or $K=p$) are more suitable, especially in presence of many irrelevant features, to avoid useless but will definitely prevent some weakly relevant features to be identified. As a result of this discussion, one should carefully choose tree-based parameters and find an appropriate trade-off between feature importance measures (selection or ranking) and prediction performances.

\subsection{Biases} \label{sec:biases}

In what follows, we discuss some experimental results that reveal the presence of biases that affect one or both importance measures. In this work, an importance measure is \textit{biased} if its use in practical conditions differs from its expected and theoretical behaviour. In particular, it is biased if it does not equally treat similar variables, i.e. it does not attribute the same importance score to all features that are equally relevant (or irrelevant) \citep{dobra2001bias}. For example, let us consider two features that are completely independent of the output and are thus irrelevant. An unbiased measure would attribute the same score for both variables while a biased one may have a systematic preference for one of them resulting in a higher importance score. 

We refer to an importance score over-estimation (respectively, under-estimation) as a \textit{positive bias} (resp. \textit{negative} bias). For example, an importance measure that gives a positive score to an irrelevant feature, that should receive a zero importance, is positively biased. 

As a preamble, let us note that MDA feature importance measure relies on the tree structure that has been induced using an impurity criterion. Therefore, some biases that affect impurity measures and thus MDI importance measures, may sometimes also affect MDA. For example, if a feature is never selected because it produces for some reasons no impurity decrease, its permutation does not change the accuracy performances of the model. Conversely, it is also possible that MDA importance measure reduces the importance of features that have been unfairly selected. For the sake of example, let us imagine a bias favouring the selection of redundant features, each providing strictly positive impurity decreases (partly due to noise). Permuting the value of one variable may be ineffective on the prediction of the model, yielding to a null MDA importance scores while the corresponding MDI value might be slightly higher.

\subsubsection{Bias due to masking effect}
\subsubsection*{Source of bias: tree-based method randomisation parameter $K$.}

Masking effect was already mentioned in several occasions in this thesis as a consequence of non-totally randomised split variable selections. In Section \ref{sec:dt-interpretation}, we showed that the inversion between masked and masking features by the means of a small change in the learning set can induce totally different decision tree model (and non randomised), illustrating the high variance of the decision tree algorithms. In Sections \ref{sec:th-mdi} and \ref{sec:empirical-K}, we highlighted that large values of $K$ increase the range of masking effect, resulting in giving preference to strongly relevant features that can not be masked to the detriment of weakly relevant features. The masking effect is maximal when $K=p$. In this section, we discuss the impact of the masking effect on importance measures.

The masking effect denotes situations where several candidate splits on different variables yield roughly the same impurity reduction, but one is always slightly better so that none of the other ones has a chance to be selected by the tree-growing algorithm. Concretely, some branches are never explored as splits are never selected. This induces a positive bias for importances of masking features as they are more frequently selected and their contributions is prioritised over features carrying similar information about the target, i.e., in case of two redundant features with one masking the other, the first one always receives credit for its information because the second one is never selected before. 
In contrast, importance of masked features are negatively biased and under-estimated. Let us note that this bias impacts both importance measures as it affects the building of the tree models.  

A straightforward way to reduce this bias is to reduce the value of $K$. This bias can be totally removed by using totally randomised trees ($K=1$) but this usually requires to increase the number of trees and might jeopardise the predictive performance of the model in presence of many irrelevant features. However, in order to reach global optimality of the ensemble \citep{strobl2008conditional}, it may also be necessary to unveil some feature interactions (e.g., cliques where features are marginally irrelevant and thus unlikely to be selected at first sight) or feature importances (e.g., the second feature in an imbalanced XOR\footnote{An imbalanced XOR is the example used in Section \ref{sec:dt-interpretation}. Two features form a XOR but one is always slightly more marginally relevant and is thus always selected first, obtaining therefore a lower importance score than the other one.}). 

\subsubsection{Bias due to correlation}\label{sec:biases-correlation}
\subsubsection*{Source of bias: presence of correlated features in learning samples.}

Random forest methods are popular in many scientific fields for their ability to handle high-dimensional datasets, as it is particularly the case in biomedical applications. In addition, it is quite common in biomedical studies that features are strongly correlated with each other and this strong correlation usually has a biological explanation. For example, co-regulated genes in expression data are expected to be similar as they relate to the same molecular pathway \citep{tolocsi2011classification}. Neighbouring pixels/voxels in biomedical images are likely associated to the same biological entities (e.g., neurons) implying a spatial correlation \citep{wehenkel2018random}. These examples have motivated several empirical studies of feature importance measures in presence of correlated features.

We however need to distinguish two different biases due to correlation that have been identified in the literature: a preference for correlated features with respect to uncorrelated ones and a preference for correlated groups of smaller sizes. In what follows, let us note that the correlation structure is not the same in both parts. All features in a group share the same predictive power to study the effect of the size of correlated feature groups \citep{tolocsi2011classification} while features within the same group can vary in their information about the target in order to highlight preference for correlated features \citep{strobl2008conditional,nicodemus2010behaviour}.

\paragraph{Preference for (un)correlated features}

In their experimental studies, \cite{strobl2008conditional,nicodemus2009predictor,nicodemus2010behaviour} analyse feature importance measures in presence of correlated features that are not equally contributive in the prediction of the output. Several effects are observed in those studies.

Gini MDI importance measure appears to be biased in the presence of correlation \citep{nicodemus2009predictor}. \cite{strobl2008conditional} observe that correlated features are positively biased with MDA feature importance measure. \cite{strobl2008conditional,nicodemus2010behaviour} report that correlated features are more frequently selected at the first split of the tree (when $K>1$). Nevertheless, across all splits, \cite{nicodemus2010behaviour} observe a slight preference for selection of uncorrelated features. Most of these results are studied for different values of $K$, including totally randomised trees with $K=1$ but excluding non-randomised trees with $K=p$. A first observation is that a correlated feature with zero coefficient in the generating model (see Section \ref{sec:empirical-K} for the description) ends up with larger importance than uncorrelated features with zero coefficient. For \cite{strobl2008conditional}, this phenomenon is due to a spurious correlation that makes a zero coefficient feature marginally informative but conditionally useless. However, such feature carrying redundant information is actually weakly relevant and thus might be selected and contribute to the model. Because of randomisation, it may occur that those features are evaluated without being in competition with their correlated features and end up being selected at some nodes. Such situations are expected to be less likely when the level of randomisation decreases, as observed in those studies with an increasing $K$. Moreover, correlation does not necessarily imply redundancy (as shown in Section \ref{sec:background-redcorr} and in \citep{guyon2006introduction}) and it may slightly increase the predictive contribution of some correlated features with respect to uncorrelated ones with similar coefficients, making them more frequently selected. Simultaneously, non-predictive features that are weakly relevant because of correlation necessarily provide redundant informations. If those features are selected, it reduces the potential interest of selecting correlated features in subsequent nodes in favour of uncorrelated features.

In conclusion, we believe that some of these observations are not actually directly due to the presence of correlation but consequences of masking effect (and the preference for strongly relevant features with high $K$ values) and weakly relevance of features with zero coefficient that benefits from their correlation with highly informative features. Furthermore, \cite{nicodemus2009predictor} noticed that pre-pruning trees by limiting node-size tends to reduce the effect of bias. Therefore, part of observed effects may actually be due to other reasons, such as empirical impurity misestimations in nodes with too few samples. 

\paragraph{Preference for smaller groups of correlated features}

In many biomedical applications, all features within a correlated group are roughly equivalent (e.g., neighbouring voxels in neuroimaging) and can typically be used interchangeably yielding equally performing tree-based models. One can thus associate a group of correlated features with a certain contribution in the prediction of the output. 

Theoretical results, especially MDI importance of totally redundant features (see Section \ref{sec:th-mdi-redundant}), suggest that if features are equivalent\footnote{They are assumed to be strictly equivalent and not masked, or equivalently, that $K=1$. Moreover, let us consider that they are also identical on other aspects, such as their cardinalities, to prevent other biases.}, they are expected to be equally informative and the importance corresponding to the group contribution is equally shared between all correlated features. This implies that features belonging to larger groups receive smaller importance scores compared to a equally informative group but with less correlated features. \cite{tolocsi2011classification} refer to this phenomenon as the correlation bias and noted that if the group is large enough, all features may appear as irrelevant (because of their low importance scores), even if they are highly informative about the output. 

Let us however mention that due to the masking effect can counter-balance this bias as only some features of the group may collect the whole group importance, implying that some other features are masked and so of lower importances.

\subsubsection{Bias due to number of categories and scale of measurement}\label{sec:biases-cardinality}
\subsubsection*{Source of bias: features of various natures and different cardinalities.}

It is known for a long time that the Gini impurity is biased in favour of features of higher cardinalities which thus offer more potential splits \cite{breiman1984classification,kim2001classification}. This phenomenon is usually referred to as the so-called ``bias selection''. Since (Gini) MDI importance measure is directly derived from impurity decreases within trees, it suffers from the same bias towards features of higher cardinalities and numerous studies reported this selection bias for the MDI importance measure (see, e.g., \citep{dobra2001bias,strobl2007unbiased,boulesteix2012overview}). \citep{strobl2007bias} noted that features with high cardinality (i.e., categorical features with a large number of categories or continuous ones)  offer more potential cut-points (splits on that feature) and are thus more likely to provide a good split with respect to features of lower cardinalities. Consequently, the number of categories and the scale of measurement affects the feature and some features might be more frequently selected by a (Gini-based) impurity criterion yielding biased MDI importance scores and misleading feature ranking. 

In contrast, it has been observed that this bias does not impact MDA importance measure \citep{strobl2007bias,boulesteix2012overview}. The explanation given is that a feature that is more frequently selected does not necessarily improves the oob accuracy and thus may receive low MDA importance scores despite being often used in the model. This however increase the variance of MDA importances \citep{boulesteix2012overview}.  

Let us note that comparison between continuous and discrete features (i.e., with different domain size) is not specific to trees and has been studied in other context (see, e.g., \citep{jiang2016efficient}).

\cite{louppe2014understanding} however suggests that the observed bias in \cite{strobl2007bias}'s study is mainly due to empirical misestimations (see Section \ref{sec:biases-estimation}). Indeed, this bias was also observed when no feature or split value selections are performed (e.g., for Extra-Trees with $K=1$ or for totally randomised trees). This suggests that the bias is not only caused by a preference for features of higher cardinalities.

\subsubsection{Bias due to the category frequencies}
\subsubsection*{Source of bias: features of various category frequencies.}

In genetic epidemiology, single nucleotide polymorphisms (SNPs), i.e., variation of a single nucleotide that occurs at a specific position in the genome, are of interest to study some diseases and personalised medicine \citep{carlson2008snps} and are known to interact with each others. In the context of genetic association studies, all SNPs have the same number of categories but vary in their category frequencies. Experiments reveal that both importance measures prefer informative SNPs with larger \textit{minor allele frequency}\footnote{It refers to the frequency of the second most frequent allele value.} (MAF) with respect to informative SNPs with lower MAF and (Gini) MDI importance measure is still biased in case of non-informative SNPs \citep{nicodemus2011letter,boulesteix2011random}. 

This phenomenon, known as the minor allele frequency bias, highlight the bias due to an unbalance in the category frequencies, or more generally in the value distribution.  Let us note that the presence of missing values modifies the actual value distribution and therefore may also impact importance measures.

\subsubsection{Bias due to empirical impurity estimations} \label{sec:biases-estimation}
\subsubsection*{Source of bias: number of learning samples $N$.}

In the beginning of this chapter, the size of the learning set was never actually taken into account. In all theoretical analyses, a learning set of infinite size (in asymptotic conditions) assumes that the joint probability density $P_{V,Y}$ is known. Similarly, most empirical studies consider artificial datasets and thus the generating model was also known. In practice however, the learning set size is finite and this may cause empirical misestimations. For multiway splits, \cite{louppe2014understanding} observes that misestimation bias in (Shannon) MDI importance relates to the misestimations of the mutual information terms $\Delta i (s,t) \approx I(X_i;Y|t)$. For independent random variables $X_i$ and $Y$, the mean of the distribution of finite sample size estimates of their mutual information is proportional to the cardinalities $|X_i|$ and $|Y|$ and inversely proportional to $N_t$, the number of samples in node $t$. This explains why MDI importance measures tend to positively bias importance of features of higher cardinalities. We refer to \citep{louppe2014understanding} for a detailed analysis.  
The case of binary splits is discussed in Section \ref{sec:biases-binary}. 

Notice that the estimation of impurity measures and impurity decreases, in particular Shannon entropy and mutual information, has been widely studied in general frameworks that are not directly related to tree-based methods (see, e.g., \citep{moddemeijer1989estimation,beirlant1997nonparametric,paninski2003estimation,schurmann2004bias}).

\subsubsection{Bias due to binary splits and split value selection} \label{sec:biases-binary}
\subsubsection*{Source of bias: tree-based algorithms.}

Unlike multiway splits, binary splits do not fully exploit a variable. A binary split only discretises the information contained in a variable and therefore the same variable (if not binary) can be reused several times in the same branch. Therefore, binary splits $\Delta i(s,t)$ actually estimates the mutual information between the output and the split outcome (as mentioned in Section \ref{sec:splittingrules}) while multiway splits would provide an estimate of the mutual information between the split variable and the outcome. As a consequence, the estimated mutual information $I(X_i;Y)$ is actually a collection of potentially biased estimates provided by all binary splits \citep{louppe2014understanding}. From a different angle, explored branches are not equivalent in binary and multiway trees. A feature can be used several times in binary trees but only once in multiway tree because branches correspond to single value of the split variable. Feature importance scores are therefore not computed from the same sequence of impurity terms and can therefore be different. 
\cite{louppe2014understanding} gives an illustrative example of two features whose importance scores are different depending on the kind of tree used to compute them. Moreover, the discretisation directly depends on the split value selection and thus the chosen strategy may have an impact on the feature importance scores. For example, a random split value selection such as in Extra-Trees may induce more splits on the same variable and thus more impurity terms, each providing part of the information contained in the feature, compared to an optimal split value such as in Random Forest that may yield all the information contained in the feature in only one split. Feature importance scores obtained with one or the other technique can thus also differ \citep{louppe2014understanding}.

\subsubsection{Bias due to bootstrapping}
\subsubsection*{Source of bias: tree-based algorithms and number of learning samples $N$.}

\cite{strobl2007bias} observe that the bootstrap sampling increases the bias due to the cardinality and therefore suggest not to use bootstrap. Moreover, it has been shown experimentally in \citep{louppe2012ensembles} that bootstrapping is rarely crucial for random forest to obtain good accuracy. 

The second observation is not directly due to the bootstrap mechanism but related to the number of OOB samples. The number of samples $N$ has a direct impact on the resolution of MDA importance measure. On average, around $37\%$ of original samples are not represented in the bootstrap sample. Therefore, when computing the MDA importance score on those samples, only granular values of accuracy change can be obtained when $N$ is small because to resolution is limited to approximately $3/N$, yielding over- and under-estimations of true feature importances \citep{archer2008empirical}.

\section{Meaningful thresholds on feature importances}\label{sec:threshold}

Feature importance measures can be used to rank features in order to facilitate the identification of a useful subset of important features. In this way those features having an importance below some threshold would be considered as unimportant and thus eliminated from further consideration. Unfortunately, there is no natural way to choose a ``good'' threshold on importances \citep{janitza2015computationally}. Therefore, in practice, performing feature selection from such a ranking consists in selecting the $k$ top features (i.e., with highest importance scores). This then reduces to the determination of a ``good'' value of $k$. This may be trivial if one observes a huge gap between relevant and irrelevant features, however in practice, such differences are not common and importance scores are usually smoothly decreasing when going down in the ranking. In such cases, distinguishing when features are no longer informative and when their importances are due to random fluctuations or some undesirable effects, is much more complicated. In this section, we give a non-exhaustive list of several approaches that allow to find either a threshold separating importance scores of relevant features from irrelevant, or propose to use or derive some statistical measure scores for which thresholds are usually more interpretable \citep{konukoglu2014approximate}. Let us note that methods that are not specific to tree-based methods are asterisked. 

\begin{mldescription}

	\mlitem{Random probe* \citep{stoppiglia2003ranking}} In the probe feature method, the key idea is to introduce a random feature in the feature ranking technique. This probe is expected to be ranked similarly as other irrelevant features and all features ranked below the probe should be naturally discarded. However, this probe rank can actually be seen as a random variable and its cumulative distribution function can be computed exactly or estimated (through the generation of several realisations of that random variable). One can then choose an acceptable value of risk and derive the corresponding rank position (and the corresponding threshold importance value) in order to discriminate relevant from irrelevant features.   
	\mlitem{Artificial contrast variables \citep{tuv2006feature}} Similarly to random probes, \cite{tuv2006feature} propose to introduce $M$ \textit{contrast features} that are known to be truly independent of the output and to generate them by randomly permuting values of $M$ input features. By the means of a t-test and a significance level, this allows to identify relevant features as those with importance scores significantly better than those of contrast features. Additionally, they propose to estimate split weights from oob samples and to introduce the mechanism of contrast features in an procedure building iteratively ensemble of trees on kept features and a residual of the target.
	\mlitem{Feature importance as a real-valued parameter* \citep{vanderlaan2006statistical}} The principle of their approach is to define the wished feature importance measure (in particular, in prediction tasks) as a real-valued parameter and propose estimators for those feature importance parameters, accompanied with a p-value and confidence interval.
	\mlitem{MDA Z-score \citep{breiman2008random}}
As defined in Section \ref{sec:MDA}, the MDA importance score for a feature $X_m$ derived from an ensemble $\mathbf{T}$ of $N_T$ trees  consists of the average impact of removing a feature on the accuracy of every tree in the forest.
 In contrast to this ``raw'' MDA importance score of a feature, \citep{breiman2008random} propose a ``scaled" version for which the raw importance score is divided by its standard error. This importance measure is usually referred to as the \textit{z-score} of a feature. If all individual importance scores have the same standard deviation $\sigma$, the standard error of the mean of those individual scores is $\sigma / \sqrt{N_T}$ \citep{strobl2008danger}. The z-score of $X_m$ is therefore given by 
 \begin{eqnarray}\label{eqn:zscore}
 Imp^{mda}_z(X_m, rf, \mathbf{LS}) = \dfrac{Imp^{mda}_{rf}(X_m,rf, \mathbf{LS})}{\frac{\sigma}{\sqrt{N_T}}},
 \end{eqnarray}
 where $rf$ is a random forest algorithm. Assuming that individual importance scores are independent because they are computed from independent bootstrap samples \citep{strobl2008danger}, then Equation \ref{eqn:zscore} tends towards a normal distribution by the central limit theorem. Therefore, a statistical test can be conducted to check whether
 the null hypothesis of zero importance for variable $X_m$ (i.e., corresponding to an irrelevant variable $X_m$) is true or not for a given significance level.
However, \citep{strobl2008danger} find out that the power of this test based on z-scores decreases with an increasing sample size and increases boundlessly with the number of trees and claim that these are undesirable properties for an importance measure.

	\mlitem{Feature set permutation scheme \citep{tang2009identification}} Instead of permuting a single feature, \cite{tang2009identification} propose to permute a set of features. In their application, each gene corresponds to a set of SNPs. Permuting all SNPs corresponding to the same gene allows to make a gene-permutation that directly evaluates the importance of the gene.
	\mlitem{Label permutation scheme \citep{altmann2010permutation}}\label{ref:label-permutation-scheme}
	In their work, they use a permutation test to obtain a threshold for the selection of relevant features. Firstly, un-permuted feature importance scores are computed. Secondly, $m$ permutations are generated by randomly permuting the labels and then, for each permutation, ``permuted" feature importance scores are computed. From that, p-values can be determined by the fraction of permuted importances that are larger than the un-permuted importances and then a threshold can be chosen from a given significance level. \cite{rodenburg2008framework} also suggest a second approach that consists in keeping all features whose importance scores are larger than the mean value of maximal permuted importances. This approach however appears to be very restrictive. Alternatively, \cite{altmann2010permutation} propose to fit a parametrised probability distribution on permuted importance scores. 
	\mlitem{Conditional permutation scheme \citep{strobl2008conditional}} is an alternative permutation scheme aiming at measuring the impact of a feature on the output conditionally to other features in comparison with the classical permutation scheme, and so to correct for the bias towards correlated features.  See side note on page \pageref{sn:nullhyp} for details on this permutation scheme.
	\mlitem{Separate feature permutation scheme \citep{hapfelmeier2013new}} Instead of permuting labels or group of features, \cite{hapfelmeier2013new} propose to permute feature individually while keeping the output and all other features unchanged. The proposed new permutation scheme aims at measuring only the impact of a feature on the output. 
	
	\begin{sidenote}{About permutation schemes}\label{sn:nullhyp}
		Let us consider a set $V$ of input features and an output $Y$. Following \citep{strobl2008conditional,hapfelmeier2013new}, we detail hereafter permutation schemes that have been proposed to evaluate the MDA importance of a feature $X_m \in V$. 
		
		The classical permutation scheme, as described in Section \ref{sec:MDA}, consists in permuting $X_m$ against both the output $Y$ and the remaining features $V^{-m}$. It therefore simulates the independence between $X_m$ and both $Y$ and $V^{-m}$. Mathematically, the evaluated independence (null hypothesis) is $$X_m\indep (Y\cup V^{-m})
	 \Rightarrow  X_m\indep Y \textit{ and } X_m \indep V^{-m} \; \text{(decomposition property)}.$$
	 Note that the converse ($\Leftarrow$) is also verified if the composition property is satisfied (e.g., for a strictly positive distribution). Consequently, a deviation yielding a positive importance can result of a violation of the independence either between $X_m$ and $Y$, or $X_m$ and $V^{-m}$.

	The label permutation scheme (proposed by \citep{altmann2010permutation}, see page \pageref{ref:label-permutation-scheme}) consists in permuting the output value. On one hand, this breaks all relationships between $X_m$ and $Y$, but on the other hand it also breaks any relationships between any input feature in $V^{-m}$ and $Y$. Therefore, the evaluated independence is $$(X_m \cup V^{-m}) \indep Y $$ and would wrongly attribute to $X_m$ the importance of all input features. Permuting the output values is therefore equivalent to permuting all input feature values jointly (i.e., $v^{-m}$ of $V^{-m}$). Instead of permuting all input features, \cite{tang2009identification} suggest to only permute a group of features $P$ including $X_m$. In this work focusing on identifying relevant SNPs (input features) in GWAS\footnote{Genome wide association studies.}, they propose to simultaneously permute all SNPs which belong to the same gene. Within this permutation scheme, the evaluated independence is 
	$$P \indep (Y \cup V\setminus P)$$
	 which does not allow to evaluate the importance of the single feature $X_m$. On the contrary this gives the importance of the group to every feature within this group.
	 \cite{hapfelmeier2013new} argue that each feature needs to be permuted separately in order to correctly estimate the importance of a single variable which is not possible by means of a label permutation.
	 	  
	 With the conditional permutation scheme, \cite{strobl2008conditional} suggest to permute $X_m$ only within groups of observations with $V^{-m}=v^{-m}$ in order to preserve the relationships between $X_m$ and all features in $V^{-m}$ while destroying the link with $Y$. It corresponds to the following evaluated independence 
	 $$X_m\indep Y | V^{-m}$$
	 which highlights the conditioning on $V^{-m}$. Interestingly, it corresponds to the definition of strongly relevant features. The conditional permutation scheme may therefore miss some weakly relevant features that are independent of $Y$ knowing all other features (e.g., redundant features).
	 
	\end{sidenote}
	
	\mlitem{Approximate false positive rate control \citep{konukoglu2014approximate}} Permutation techniques can be intractable for high-dimensional datasets and therefore \cite{konukoglu2014approximate} propose an approach to determine thresholds and control the false positive rate in random forest method at no additional computational cost. Based on the feature selection frequency importance measure (see Section \ref{sec:imp-imp}), they rely their approach on the estimation of the probability that a feature is selected $k$ times in a tree ensemble if it is assumed to be irrelevant to the output. They propose an approximate model for selection frequency in random forest from which one can determine a desired level of false positive rate and obtain an optimal threshold on the selection frequency importance scores.
	
	\mlitem{Conditional error rate* \citep{huynh2008exploiting,huynh2012machine}} In order to overcome limitations of classical permutation-based techniques of false positive rate estimation, \cite{huynh2008exploiting} propose the conditional error rate (CER) as an alternative measure to be associated with each importance threshold $\tau_i$. It estimates the probability to include an irrelevant feature when selecting all features (assumed to be relevant) with an importance score greater or equal to $\tau_i$.
 	
	Note that \citep{huynh2012statistical,wehenkel2018characterization} review statistical interpretation of (tree-based) feature importance scores, including random probe techniques and conditional error rate.  

	\mlitem{Rank-based conditional error rate \citep{wehenkel2017tree}} While the CER is based on the importance scores, \cite{wehenkel2017tree} propose an adaptation of CER based on rank for group of features. Let us assume an original order of feature groups ranked by order of decreasing importance scores. The key principle is that a relevant group should not be as well or better ranked than originally once all statistical links within this group and in all groups ranked below (in the original order) are broken. \cite{wehenkel2018characterization} note that this variant is less restrictive than the original method.

	\mlitem{Subsampling and delete-d jacknife \citep{ishwaran2018standard}} Recently, \cite{ishwaran2018standard} study several sampling approaches for estimating MDA importance measure variance, such as double-bootstrap, subsampling and delete-d jacknife algorithm. They additionally propose a subsampling approach that can be used to estimate the standard error of MDA importance measure and for defining confidence intervals. 
	
\end{mldescription}

\section{Extensions and derivations} \label{sec:imp-ext}
In this section, we briefly review some methodologies that exploit feature importance measures or derive their use to perform new tasks.  

\begin{mldescription}
	\mlitem{Recursive Feature Elimination \citep{diaz2006gene}} Their approach is an instance of the Sequential Back Elimination (SBE, see \ref{sec:fs-search}) that recursively removes features with the smallest importance scores computed with a tree-based ensemble method.  
	\mlitem{Enriched random forests \citep{amaratunga2008enriched}} In presence of many irrelevant features, many splits can be made on irrelevant features because all split variables candidates were irrelevant. In order to circumvent that, \cite{amaratunga2008enriched} propose a weighted random sampling in each node instead of a uniform one. They suggest to determine weights as the p-value of a t-test.  
	\mlitem{Guided regularized random forest \citep{deng2013gene}} Similarly to \citep{amaratunga2008enriched}, their approach first builds a classical random forest and then use feature importance scores to guide the feature selection process in a second model (i.e., regularized random forest \citep{deng2012feature}). 
	\mlitem{Variable importance-weighted feature selection \citep{liu2017variable}} Similarly with the previous approach, instead of selecting split variable candidates at random, \cite{liu2017variable} propose to sample features according to their importance scores in order to focus on informative features. 
	\mlitem{Random Subspace for feature selection \citep{ho1998random, lai2006random}} Inspired from the Random Subspace method proposed by \citep{ho1998random}, this approach consists in growing each tree of the ensemble on a random subspace of $K$ ($\le p$) features randomly chosen. Similarly to a classical forest, feature importance scores are then computed for each tree and then aggregated with the difference that at least $p-K$ features have necessarily a zero importance for each tree. One can however expect that all available features can be considered in the tree, even if it is made of only few nodes. Let us note that this approach is also compatible with the Random patches method \citep{louppe2012ensembles}.
	\mlitem{Sequential Random Subspace \citep{sutera2018random}}	In this sequential variant of the random subspace method, the key ideas are that (i) some relevant features may be difficult to identify because they need to be considered conditionally to some other features (ii) which are necessarily relevant. Therefore, the principle is to reuse more frequently features that have already been identified as relevant in order to make the detection of other relevant features easier. In contrast with approaches such as variable importance-weighted feature selection \citep{liu2017variable}, one can force the method to keep a part of exploration to discover masked features for instance. 
	 
	\mlitem{Feature selection with a knock-out strategy \citep{ganz2015relevant}} This approach is interesting in several respects. Uncommonly, they consider the frequency selection as importance measure on which they apply a false positive rate control (see Section \ref{sec:threshold} and \citep{konukoglu2014approximate}). Moreover, at each iteration, the identified set of relevant features are removed (``knocked out") in order to force the algorithm to identify remaining relevant features since already identified are no longer available. This method has the merit of looking for all relevant features without taking care of accuracy performances. However, \citep{sutera2018random} show that relevant features may be required to reveal some others that are more difficult to detect (e.g., a clique) but this may be circumvented by the use of frequency selection instead of other importance measures.
	
	\mlitem{Representative feature(s) \citep{tolocsi2011classification}} Proposed as a way to reduce the correlation bias, the idea developed in \citep{tolocsi2011classification} is to group several ``similar'' features into \textit{representative feature(s)} that can then be used as input features for the model. At the end, the importance scores of the original features can be retrieved as the importance of the representative feature (or the average in case of several representatives). 
	
	\cite{wehenkel2018characterization} reviewes some approaches to determine the representative features and discusses those based on a priori knowledge (e.g., atlas for brain regions \citep{wehenkel2018characterization}, self-organizing maps for genes \citep{rodenburg2008framework}) and on neighbouring positions. Let us also mention that similar features can also be identified with techniques such as hierarchical clustering \citep{rodenburg2008framework}.
	
	\mlitem{Group importance smoothing \citep{wehenkel2018characterization}} Because of masking effect or some other biases, similar features may receive different importance scores. \citep{wehenkel2018characterization} proposes two ways to post-process importance scores in order to rebalance more fairly importance scores among similar features. The first approach consists in sharing the importance score of a feature with its neighbours. The second approach consists in assigning all features of a group (e.g., based on a priori knowledge) the same ``group" importance scores that have been derived from the distribution of all importance scores within the group. A group importance is then derived using either the average, the sum, or the maximum of the individual importance scores within the group.
		
\end{mldescription}
\section{Other importance measures} \label{sec:imp-new}

Previous sections show in several respects that MDI and MDA feature importance measures are not perfect and can not address all needs. Thus, several other importance measures have been proposed in the literature. Some of them are described in this section.
Note that we exclude from the following list \textit{local feature importance measures}, such as \textit{Shapley values} \citep{lundberg2017consistent,lundberg2018consistent}, that evaluate feature importances for a given input vector $x$, although global feature importance measures can be obtained from such local measures by aggregating them over a sample of input vectors.

\begin{mldescription}
	
		\mlitem{Cross-validated MDA feature importance \citep{janitza2015computationally}} Firstly, they propose an alternative approach to compute the MDA importance measures of cross-validated subsets instead of oob samples. The principle is similar: the accuracy is estimated on samples that have not been used to learn the model, i.e., the remaining fold. Secondly, they propose a new variable importance test that is computationally more efficient than traditional permutation schemes discussed in Section \ref{sec:threshold}.
		
		\mlitem{Contextual importance measures \citep{sutera2016context}} MDI importance measures are extended to identify and characterise features whose relevance is context-dependent (i.e., varying depending on the context) or context-indepen\-dent. 
		
		\mlitem{AUC-based permutation importance measure \citep{janitza2013auc}} To overcome the sub-optimality of random forest methods in presence of strongly unbalanced data, \cite{janitza2013auc} propose to use an AUC-based criterion instead of an error-rate-based one for the MDA importance measure. 
		
		\mlitem{Change in class vote distribution \citep{paul2013identification}} In this work, a new feature importance index is proposed that uses a statistical test to determine whether permuting a variable significantly influences the class vote distribution of the forest. This new importance measure correlates well with MDA importance and has the advantage of providing directly a p-value.

\end{mldescription}

Without more explanations, let us however mention two works proposing bias-corrected impurity importance measures: \cite{sandri2008bias} add uninformative features (e.g., permutation of original features) among input ones, and \cite{nembrini2018revival} propose an efficient procedure that does not require permutation and is feasible for extremely large datasets.

\section{Some applications exploiting feature importances} \label{sec:imp-app}

To conclude this chapter, we briefly mention in this section two applications of feature importance measures in the biomedical domain.

\begin{mldescription}
	\mlitem{Gene network inference} In genomics, regulatory gene network inference consists in the identification of all gene-to-gene interactions from their expression level and reconstruct a network with these interactions. Concretely, one needs to infer a (un)directed graph where each nodes is a biological entity (e.g., a gene) and edges connecting two nodes represent an interaction between them. In all generality, the GENIE3 method aims at inferring a network of $p$ nodes by decomposing it into $p$ independent supervised learning problems. Each feature is in turn considered as the target to predict from all $p-1$ other features. When these sub-problems are solved by the means of tree-based methods, feature importance measures can be derived and seen as indications of the degree of association between input features and the target. Concretely, in a model predicting $X_j$ from $V^{-j}$, the importance score of a feature $X_i \in V^{-j}$ is used a the degree of association between node $i$ and node $j$. Once all sub-problems are solved, the ranking of all gene-gene pairs can be used to reconstruct the global network (e.g., by selecting the stronger interactions). Chapter \ref{ch:connectomics} focuses on that application.
	\mlitem{Neuroimaging} Random forest methods are able to handle high-dimensional data\-set ($p\gg N$), such as neuroimaging datasets, and therefore constitute interesting alternatives to SVM and deep learning methods in the context of neuroimaging datasets. For example, in the context of fMRI datasets, \citep{langs2011detecting} use Gini MDI importance to identify interacting brain regions that are activated under experimental stimuli, and \citep{richiardi2010brain} exploit tree-based feature importance measures to determine relevant brain region connections. In the particular case of Alzheimer's disease, \cite{wehenkel2018random,wehenkel2018characterization} exploit feature importance measures to identify important (group of) voxels from Positron Emission Tomography (PET) images in order to identify brain regions involved in the prognosis of the disease.

\end{mldescription}

\begin{summary}

In the litterature, several measures have been proposed to quantify the importance of features from tree-based ensemble models. Because of their ability to handle feature interactions and non-linearities, these measures are interesting alternatives to classical statistical tests. Driven by many successful applications (notably in the biomedical domain), several studies have been carried out to analyse these measures from different perspectives that have revealed several biases from which these measures suffer. Recent theoretical works also give new insight on previous empirical results. One major drawback of standard importance measures is that they lack a statistical interpretation that would allow to naturally determine a threshold value to distinguish truly important from non-important features. Several techniques, mostly based on random permutations, have however been proposed in the literature to address this issue. In addition, new tree-based importance measures have been designed to go further in the exploitation and interpretation of tree-based ensemble models.

\end{summary}

\chapter{Characterisation of MDI importance measure}
\label{ch:mdi}
\begin{overview}

In this chapter, we characterise the Mean Decrease of Impurity (MDI) feature importance measure as computed by an ensemble of randomised trees. First, in asymptotic conditions, we derive a multi-level decomposition of the information jointly provided by all input features about the output, with a particular attention on the link between importance and relevance of features. We also extend the characterisation to take into account the presence of redundant features. We then analyse importance measure properties in the case of non-totally randomised, non-fully developed and binary trees, respectively. Finally, we discuss how these properties may change in the finite case, in particular in the number of trees and samples.\\

\textbf{\textcolor{RoyalBlue}{References:}} This chapter presents results that were published in the following publications:
\begin{enumerate}[\textcolor{Gray!80}{$\bullet$}]
	\item \bibentry{louppe2013understanding};
	\item \bibentry{sutera2018random}.
\end{enumerate}
Note that proofs from the first publication, which were also part of \citep{louppe2014understanding}, are not reproduced below.
\end{overview}

Nowadays, most of state-of-the-art supervised learning algorithms typically provide a black-box model able to accurately predict the output. In many applications, a particular attention is paid to an understanding of the modelled system, which is typically not possible with a black-box model. Random forest methods, by the means of importance measures, allow to identify important features which are the key elements of the model. This interpretation provides insights to understand the underlying mechanism. 
Concretely, given an ensemble of trees, one may derive a numerical score for each feature that assesses its importance in the tree-based model. \cite{breiman2001random,breiman2003random} proposed two importance measures\footnote{Note that both importance measures are described in Chapter \ref{ch:importances}.}. Firstly, the Mean Decrease of Accuracy aims at evaluating the contribution of a feature for predicting the output as the change in accuracy of the model when this feature is permuted.
Secondly, the Mean Decrease of Impurity (MDI) relies on the impurity criterion used to grow trees. In this chapter, we only focus on that particular importance measure. It adds up the weighted impurity decreases $\Delta i(s,t)$ over all nodes $t$ in a tree $T$ where the variable $X_m$ is used to split and then averages this quantity over all trees in the ensemble, i.e.\footnote{From now on, this thesis only focuses on the MDI importance measure and the notation is thus simplified accordingly, i.e., $Imp(X)$ is equivalent to $Imp^{mdi}(X)$.} : 
\begin{eqnarray}\label{eqn:mdi-mdi-empirical}
Imp(X_m) = \dfrac{1}{N_T} \sum_{T} \sum_{t\in T: v(s^*_t) = X_m} p(t) \Delta i(s_t^*,t)\\
\nonumber \text{with}\; \Delta i(s,t) = i(t) - \dfrac{p(t_L)}{p(t)} i(t_L) -\dfrac{p(t_R)}{p(t)}i(t_R)
\end{eqnarray}
 where $i$ is the impurity measure (introduced in Section \ref{sec:splittingrules}), $p(t)$ is the proportion of samples reaching node $t$, $v(s_t)$ is the variable used in the split $s_t$, at node $t$, and $t_L$ and $t_R$ are the left and right successors of $t$ after the split. 
 
In Chapter \ref{ch:importances}, we outlined a theoretical analysis of both importance measures and then focused on their practical uses, in particular biases that may provide misleading interpretations of feature ranking and importance scores. We also consider several extensions, derivations and applications in which feature importance are typically used. 

Despite these numerous works, only few studied theoretically feature importance measures from a theoretical point of view \citep{ishwaran2007variable,louppe2013understanding,louppe2014understanding,zhu2015reinforcement,gregorutti2017correlation,sutera2018random}. In order to go one step further in the understanding of this measure, this chapter aims at providing an in-depth theoretical analysis of the MDI importance derived from ensembles of randomised trees in an infinite sample setting. We also discuss how it may change in the case of finite sample and tree ensemble size conditions.

As a preambule, Section \ref{sec:degreedef} first defines the degree of a relevant variable and provide two propositions that characterize minimal conditionings that make relevant variables dependant of the output. Section \ref{sec:mdi-totally} then provides a theoretical characterisation of MDI importance measures in asymptotic conditions in the case of totally randomized and fully developed and presents an interpretable decomposition of the information jointly provided by all input features about the output at several levels of feature interactions. Section \ref{sec:mdi-rel} shows that MDI importance measures can be used to identify relevant features. 
Section \ref{sec:mdi-red} extends the characterisation of MDI importance measures to highlight the impact of the presence of redundant features. Sections \ref{sec:mdi-K} and \ref{sec:mdi-pruning} consider respectively non-totally randomised and non-fully developed trees and analyse to what extent MDI properties are still verified. Section \ref{sec:mdi-binary} examines trees made with binary splits and aims at extending the characterisation of multiway trees to binary ones that are more common in practice. Finally, Section \ref{sec:mdi-finite} discusses the finite case and in particular considers a finite number of trees (Section \ref{sec:mdi-finite-trees}) and a finite number of samples (Section \ref{sec:mdi-finite-samples}).

\subsection*{Notational conventions}
\textit{For sake of clarity, the setting under study is reminded  at the beginning of the sections and summarised by some of the following parameters (described in Chapter \ref{ch:trees}): the split selection randomisation parameter $K$ of the random forest algorithm, the maximal depth $D$ of the tree structure, the split cardinality\footnote{$|s_t|=|v(s_t)|$ denotes a tree built with multiway exhaustive splits as the split cardinality equals the number of values of split variable $v(s_t)$.} $|s_t|$ used in decision trees, the number of trees $N_T$ in the ensemble, and the number of samples $N$ of the learning set. For the sake of completeness, subscripts and superscripts will be used to specify the parameter values of the tree-based method used to derive importance scores: $Imp^{K,D}_{N,N_T}$ corresponds to the importance measure computed with an ensemble of $N_T$ trees built with a split randomisation parameter $K$ and a maximal depth $D$ on a dataset of $N$ samples.}

\section{Degree of relevant variables}\label{sec:degreedef}
 
In addition to the definitions of relevance provided in Section \ref{sec:relevance} (Definitions \ref{def:strongrelevance}, \ref{def:marginallyrelevant} and \ref{def:weakrelevance} in terms of conditional independences and Definitions \ref{def:relevanceMI} and \ref{def:strongweakrelevanceMI} in terms of mutual informations), for some results derived below, we need to qualify relevant variables according to their degree:
\begin{definition}{\citep[Definition 3]{sutera2018random}}\label{def:mdi:degree}
The \textbf{degree} of a relevant variable $X$, denoted $deg(X)$, is
defined as the minimal size of a subset $B\subseteq V$ such that
$Y\nindep X|B$.
\end{definition}
Relevant variables $X$ of degree 0, i.e. such that $Y\nindep X$
unconditionally, will be called \textbf{marginally relevant}.

We will say that a subset $B$ such that $Y\nindep
X|B$ is \textbf{minimal} if there is no proper subset $B'\subseteq B$
such that $Y\nindep X|B'$. The following two propositions give a
characterisation of these minimal subsets.

\begin{proposition}{\citep[Proposition 1]{sutera2018random}} \label{prop:mdi-only-relevant}
A minimal subset $B$ such that $Y\nindep X|B$ for a relevant
  variable $X$ contains only relevant variables. 
\end{proposition}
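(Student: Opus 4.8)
The plan is to argue by contradiction. Suppose that a minimal subset $B$ (with $Y\nindep X|B$) contained an irrelevant variable $Z$. I would then exploit the two hypotheses simultaneously---the minimality of $B$ and the irrelevance of $Z$---and combine them through the chain rule for conditional mutual information to contradict $Y\nindep X|B$. Throughout I would use the mutual-information formulation of (ir)relevance from Definition \ref{def:relevanceMI}, which is the most convenient here since the chain rule is an identity between mutual informations.

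First I would record the consequences of the two hypotheses. Since $B$ is minimal and $B\setminus\{Z\}$ is a proper subset of $B$, minimality gives $Y\indep X|B\setminus\{Z\}$, i.e. $I(X;Y|B\setminus\{Z\})=0$. Since $Z$ is irrelevant, Definition \ref{def:relevanceMI} yields $I(Z;Y|C)=0$ for \emph{every} subset $C\subseteq V\setminus\{Z\}$. I would invoke this for the two particular choices $C=B\setminus\{Z\}$ and $C=(B\setminus\{Z\})\cup\{X\}$; both are legitimate because $X\notin B$ (recall that in Definition \ref{def:relevance} the conditioning set lies in $V^{-m}$, so in particular $X\neq Z$), and hence both sets are indeed contained in $V\setminus\{Z\}$.

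Next I would expand $I(X,Z;Y|B\setminus\{Z\})$ in its two possible orders, exactly as in Equations \ref{eqn:proofpartially:red-chainrule1}--\ref{eqn:proofpartially:red-chainrule2}:
\begin{align}
I(X,Z;Y|B\setminus\{Z\}) &= I(X;Y|B\setminus\{Z\}) + I(Z;Y|(B\setminus\{Z\})\cup\{X\}) \\
&= I(Z;Y|B\setminus\{Z\}) + I(X;Y|B).
\end{align}
The term $I(X;Y|B\setminus\{Z\})$ vanishes by minimality, and the two terms $I(Z;Y|(B\setminus\{Z\})\cup\{X\})$ and $I(Z;Y|B\setminus\{Z\})$ vanish by the irrelevance of $Z$. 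Equating the two expansions therefore forces $I(X;Y|B)=0$, that is $Y\indep X|B$, which contradicts the hypothesis $Y\nindep X|B$. This contradiction shows that no variable of $B$ can be irrelevant, as claimed.

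The argument is a short algebraic manipulation, so I expect no serious analytic obstacle. The only point requiring care is the bookkeeping of the conditioning sets: I must check that each set on which the irrelevance of $Z$ is invoked is genuinely a subset of $V\setminus\{Z\}$---in particular that $X\notin B$ and $X\neq Z$---so that Definition \ref{def:relevanceMI} applies verbatim. Once that is verified, the two vanishing terms fall out immediately and the conclusion follows.
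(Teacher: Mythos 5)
Your proof is correct and is essentially the paper's argument: the paper combines the same two hypotheses (minimality gives $Y\indep X|B\setminus\{Z\}$, irrelevance gives $Y\indep Z|(B\setminus\{Z\})\cup\{X\}$) via the contraction and weak-union properties, and your double chain-rule expansion of $I(X,Z;Y|B\setminus\{Z\})$ is exactly the mutual-information rendering of those two graphoid steps. The only cosmetic difference is that you invoke irrelevance a second time to annihilate $I(Z;Y|B\setminus\{Z\})$, where non-negativity of conditional mutual information (the content of weak union) would already suffice.
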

\begin{proof}
  Let us assume that $B$ contains an irrelevant variable $X_i$. Let us
  denote by $B^{-i}$ the subset $B\setminus\{X_i\}$. Since $X_i$ is
  irrelevant, we have $Y\indep X_i|B^{-i}\cup\{X\}$. Given that $B$ is
  minimal we furthermore have $Y\indep X|B^{-i}$ where
  $B^{-i}=B\setminus\{X_i\}$. By using the contraction property of any
  probability distribution (see side note on page
  \pageref{sn:distributionproperties}), one can then conclude from
  these two independences that $Y\indep \{X,X_i\}|B^{-i}$ and, by
  using the weak union property, that $Y\indep X|B$, which proves the
  theorem by contradiction.
\end{proof}

\begin{proposition}{\citep[Proposition 2]{sutera2018random}} \label{prop:only-degree}
Let $B$ denote a minimal subset such that $Y\nindep X|B$ for a
relevant variable $X$. For all $X'\in B$, $deg(X')\leq |B|$. 
\end{proposition}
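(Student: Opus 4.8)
The plan is to prove the bound by exhibiting, for each $X'\in B$, an explicit conditioning set of size $|B|$ that renders $Y$ and $X'$ dependent; by Definition~\ref{def:mdi:degree} this immediately gives $deg(X')\leq |B|$. The natural candidate is obtained by \emph{swapping} $X'$ out of $B$ and the relevant variable $X$ in, namely $C = (B\setminus\{X'\})\cup\{X\}$. First I would observe that $X\notin B$: indeed, if $X$ belonged to $B$ then conditioning on $B$ would already include $X$ itself and force $Y\indep X|B$ (self-conditioning), contradicting the hypothesis $Y\nindep X|B$. Since moreover $X'\neq X$ and $X'\notin B\setminus\{X'\}$, the set $C$ has exactly $|B|$ elements, lies in $V$, and does not contain $X'$.

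The core of the argument is then to show $Y\nindep X'|C$, i.e. $I(X';Y|(B\setminus\{X'\})\cup\{X\})>0$. I would obtain this from the chain rule for mutual information, expanding $I(X,X';Y|B\setminus\{X'\})$ in the two possible orders:
\begin{align}
I(X,X';Y|B\setminus\{X'\}) &= I(X;Y|B\setminus\{X'\}) + I(X';Y|(B\setminus\{X'\})\cup\{X\}), \label{eqn:mdi-prop2-order1}\\
I(X,X';Y|B\setminus\{X'\}) &= I(X';Y|B\setminus\{X'\}) + I(X;Y|B), \label{eqn:mdi-prop2-order2}
\end{align}
where I used $(B\setminus\{X'\})\cup\{X'\}=B$ in the last term. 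Two facts drive the conclusion: minimality of $B$ gives $Y\indep X|B\setminus\{X'\}$, hence $I(X;Y|B\setminus\{X'\})=0$; and $Y\nindep X|B$ gives $I(X;Y|B)>0$. Plugging the former into \eqref{eqn:mdi-prop2-order1} shows $I(X,X';Y|B\setminus\{X'\}) = I(X';Y|(B\setminus\{X'\})\cup\{X\})$, while \eqref{eqn:mdi-prop2-order2} together with $I(X';Y|B\setminus\{X'\})\geq 0$ (nonnegativity of conditional mutual information) shows that this same quantity is strictly positive. Combining the two, $I(X';Y|(B\setminus\{X'\})\cup\{X\})>0$, as desired.

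Having established dependence for the size-$|B|$ witness $C$, the bound $deg(X')\leq|B|$ follows at once. The main obstacle is bookkeeping rather than conceptual: one must check that the swap preserves the cardinality (which is exactly where $X\notin B$ is needed) and that $X'\notin C$, so that $Y\nindep X'|C$ is a legitimate witness for the degree of $X'$. The probabilistic engine is the same two-way chain-rule decomposition already exploited in the partial-redundancy proof (Equations~\ref{eqn:proofpartially:red-chainrule1}--\ref{eqn:proofpartially:red-chainrule2}); here it is combined with minimality, which supplies the vanishing term $I(X;Y|B\setminus\{X'\})=0$ that isolates the conditional mutual information of interest.
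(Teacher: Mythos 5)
Your proof is correct, but it takes a genuinely different route from the paper's. The paper's own proof is a two-line reduction: restrict the feature set to $V'=B\cup\{X\}$, invoke Proposition~\ref{prop:mdi-only-relevant} to conclude that $X$ and every $X'\in B$ remain relevant within this reduced set, and then observe that any conditioning subset of $V'\setminus\{X'\}$ has cardinality at most $|B|$, whence $deg(X')\leq |B|$. You instead bypass Proposition~\ref{prop:mdi-only-relevant} entirely and verify a \emph{specific} witness, $C=(B\setminus\{X'\})\cup\{X\}$, via the two-way chain-rule expansion of $I(X,X';Y|B\setminus\{X'\})$, with minimality of $B$ supplying the vanishing term $I(X;Y|B\setminus\{X'\})=0$ and the hypothesis supplying $I(X;Y|B)>0$. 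Note that your computation is precisely the mutual-information rendition of the contraction-plus-weak-union argument that underlies the paper's Proposition~\ref{prop:mdi-only-relevant}: if one had $Y\indep X'|C$ together with $Y\indep X|B\setminus\{X'\}$, contraction would give $Y\indep \{X,X'\}|B\setminus\{X'\}$ and weak union would give $Y\indep X|B$, contradicting the hypothesis. What your version buys: it is self-contained; it pins down an explicit size-$|B|$ witness and proves dependence for it (the paper names $B'=B\cup\{X\}\setminus X'$, but strictly speaking relevance within $V'$ only guarantees that \emph{some} subset of $V'\setminus\{X'\}$ works, so your argument is in fact more precise on this point); and it yields the quantitative strengthening $I(X';Y|C)\geq I(X;Y|B)>0$. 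What the paper's version buys: brevity and reuse of an already-proved lemma. Your bookkeeping ($X\notin B$ by self-conditioning, hence $|C|=|B|$ and $X'\notin C$) is implicitly needed by the paper's argument as well, and you handle it correctly.
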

\begin{proof}
      If we reduce the set of features $V$ to a new set $V'=B\cup\{X\}$,
$X$ will remain relevant, as well as all features in $B$, given
Proposition \ref{prop:mdi-only-relevant}. So, for any feature $X'$ in
$B$, there exists a subset $B'= B\cup\{X\}\setminus X'$ such
that $Y \nindep X'|B'$ and the degree of $X'$ is therefore $\leq
|B|$.
\end{proof}

These two propositions show that a minimal conditioning $B$ that makes a
variable dependent on the output is composed of only relevant variables whose
degrees are all smaller or equal to the size of $B$. Let us note that we will provide in Section \ref{sec:convergence} a more stringent characterisation of variables in minimum conditionings in
the case of specific classes of distributions.\\

\section{Totally randomised and totally developed trees}\label{sec:mdi-totally}

\subsection*{Setting of this section: $K=1, D=p, |s_t|=|v(s_t)|, N_T \rightarrow\infty, N\rightarrow\infty$}

Let us assume a set $V={X_1,...,X_p}$ of categorical input variables and a categorical output $Y$. Let us consider a joint probability density $P_{V,Y}$ of $X_1,\dots,X_p,Y$ and a learning set $\mathbf{LS}$ of $N$ observations of $X_1,\dots,X_p,Y$ independently drawn from that distribution. From $\mathbf{LS}$, an infinitely large ensemble of totally randomised, multiway and fully developed trees is inferred. As a reminder of Chapter \ref{ch:trees}, such trees are built such that, for each node $t$, a split variable $X_i$ is selected totally at random among those not yet picked and used to split the node $t$ into $|X_i|$ branches (i.e., one for each value of $X_i$), until there is no more remaining unused features. Let us note that all branches have the same depth $p$, because each feature is used once along each branch. For sake of simplicity, we only consider Shannon impurity to evaluate the importances, but results can be extended to some extent to other impurity measures as shown in \citep[Appendix I]{louppe2013understanding}. Note that in the totally randomized setting, the tree structure does not depend on the impurity measure, but the MDI importance measure derived from this structure obviously does.

In that context, let us consider the MDI importance as defined by Equation \ref{eqn:mdi-mdi-empirical} computed by this ensemble of trees. 

\begin{theorem}{\citep[Theorem 1]{louppe2013understanding}}\label{thm:mdi-totally-imp}
The MDI importance of $X_m \in V$ for $Y$ as computed
with an infinite ensemble of fully developed totally randomized trees and an
infinitely large learning set is:
  \begin{equation}\label{eqn:mdi-totally-imp}
  Imp_{\infty,\infty}^{1,p}(X_m)=\sum_{k=0}^{p-1} \frac{1}{C_p^k} \frac{1}{p-k} \sum_{B \in {\cal P}_k(V^{-m})} I(X_m;Y|B),
  \end{equation}
\end{theorem}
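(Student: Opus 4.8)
The plan is to write the importance as an expectation over the random tree-construction process, reduce each weighted impurity decrease to a conditional mutual information term in the double asymptotic limit, and then perform a purely combinatorial computation that produces the weights $\frac{1}{C_p^k}\frac{1}{p-k}$. Concretely, for a node $t$ reached by fixing the variables along its root-to-$t$ path, I denote by $B_t$ this set of already-used variables (necessarily a subset of $V^{-m}$ when $t$ splits on $X_m$) and by $b_t$ their values at $t$.

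First I would carry out the two asymptotic reductions. With an infinitely large learning set, the empirical quantities converge to their population counterparts: $p(t)\to P(B_t=b_t)$, and the Shannon impurity decrease at a split on $X_m$ converges to $I(X_m;Y|B_t=b_t)$. Indeed, as recalled in the discussion of splitting rules, the entropy-based $\Delta i(s_t,t)$ is exactly the finite-sample estimate of the mutual information between the split outcome and $Y$, which for a multiway exhaustive split on $X_m$ is the estimate of $I(X_m;Y|B_t=b_t)$; its consistency delivers the limit. With an infinite ensemble, $Imp(X_m)$ becomes the expectation, over the distribution of totally randomized, fully developed, multiway trees, of the per-tree sum $\sum_{t:\,v(s_t)=X_m} p(t)\,\Delta i(s_t,t)$.

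Second comes the combinatorial core. I would index each node by its ordered sequence of (variable, value) pairs from the root. Fix a node at depth $k$ whose path uses a set $B$ of $k$ distinct variables, all different from $X_m$, and which selects $X_m$ as split variable. Because the split variable at each node is drawn uniformly among the not-yet-used variables, and these draws are independent across nodes, the probability over the tree randomization that this node both exists and splits on $X_m$ is $\frac1p\cdot\frac1{p-1}\cdots\frac1{p-k+1}\cdot\frac1{p-k}=\frac{(p-k-1)!}{p!}$, which depends only on $k$. Assembling the expectation, for a fixed $B\in\mathcal{P}_k(V^{-m})$ the sum over all value assignments $b$ of $B$ collapses $\sum_b P(B=b)\,I(X_m;Y|B=b)$ into $I(X_m;Y|B)$, while the sum over the $k!$ orderings of $B$ contributes a factor $k!$. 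Combining with $\frac{(p-k-1)!}{p!}$ yields the coefficient $k!\,(p-k-1)!/p!=\frac{1}{C_p^k}\frac{1}{p-k}$, and summing over $k$ from $0$ to $p-1$ and over $B\in\mathcal{P}_k(V^{-m})$ produces exactly Equation~\eqref{eqn:mdi-totally-imp}.

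The main obstacle will be the rigorous handling of the two limits: justifying that the node weights and impurity decreases converge (and that the expectation over the random trees commutes with the infinite-sample limit), together with the care needed in the node-existence probability — one must exploit that split-variable selections at distinct nodes are independent and that no variable is reused along a path, so that the product telescopes to $\frac{(p-k-1)!}{p!}$ independently of which variables and values are involved. The remaining algebra (recognizing $k!\,(p-k-1)!/p!=\frac{1}{C_p^k(p-k)}$ and the $b$-summation giving $I(X_m;Y|B)$) is then routine.
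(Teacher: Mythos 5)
Your proposal is correct and follows essentially the same route as the proof the paper points to (\citealp[Appendix~B]{louppe2013understanding}): decompose the expected importance over all possible root-to-node paths, use the infinite-sample limits $p(t)\to P(B=b)$ and $\Delta i \to I(X_m;Y|B=b)$, and obtain the weights from the telescoping path probability $\frac{(p-k-1)!}{p!}$ multiplied by the $k!$ orderings of $B$, giving $k!\,(p-k-1)!/p! = \frac{1}{C_p^k}\frac{1}{p-k}$. Your observation that node existence depends only on the split-variable draws is justified here because the paper's totally randomized trees split independently of the output until every feature is used, so every branch reaches depth $p$ and no purity-based stopping interferes with the counting.
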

\noindent where $V^{-m}$ denotes the subset $V \setminus \{X_m\}$, ${\cal
P}_k(V^{-m})$ is the set of all subsets of cardinality $k$ of  $V^{-m}$, and
$I(X_m;Y|B)$ is the conditional mutual information of $X_{m}$ and $Y$ given the
variables in $B$. A setting when both learning set and tree ensemble sizes are assumed to be infinitely large is further referred to as \textit{asymptotic conditions}.

\begin{proof}
	See \citep[Appendix B]{louppe2013understanding} for a proof.
\end{proof}
\begin{theorem}{\citep[Theorem 2]{louppe2013understanding}}\label{thm:mdi-totally-sum}
For any ensemble of fully developed trees in asymptotic learning sample size
conditions (e.g., in the same conditions as those of Theorem~\ref{thm:mdi-totally-imp}), we
have that
\begin{equation}\label{eqn:mdi-totally-sum}
\sum_{m=1}^{p}Imp^{1,p}_{\infty,\infty}(X_m) = I(X_{1}, \ldots, X_{p} ; Y).
\end{equation}
\end{theorem}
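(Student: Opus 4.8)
The plan is to prove the identity for a single fully developed tree and then average, exploiting the fact that impurity decreases telescope along root-to-leaf paths. First I would observe that in any tree each internal (test) node splits on exactly one feature, so when we sum the per-tree MDI contributions over all features $X_m$, the selection constraint $v(s_t^*)=X_m$ simply disappears and every internal node is counted once. Concretely, for a fixed tree $T$,
\begin{equation}
\sum_{m=1}^{p} \sum_{t \in T:\, v(s^*_t)=X_m} p(t)\, \Delta i(s^*_t,t) = \sum_{t \in T,\, t\ \mathrm{internal}} p(t)\, \Delta i(s^*_t,t).
\end{equation}
Next I would expand the weighted decrease using the definition of $\Delta i$, noting that $p(t)\,\frac{p(c)}{p(t)} = p(c)$ for each child $c$ of $t$, which gives
\begin{equation}
p(t)\, \Delta i(s^*_t,t) = p(t)\, i(t) - \sum_{c\ \mathrm{child\ of}\ t} p(c)\, i(c).
\end{equation}

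The key step is then the telescoping argument. Summing the previous display over all internal nodes, the positive terms are $\sum_{t\ \mathrm{internal}} p(t)\, i(t)$, while the negative terms range over all nodes that are a child of some internal node, i.e. over every node except the root, each counted exactly once. Splitting the latter into internal non-root nodes and leaves, everything cancels except the root contribution and the leaf contributions, so
\begin{equation}
\sum_{t\ \mathrm{internal}} p(t)\, \Delta i(s^*_t,t) = p(t_0)\, i(t_0) - \sum_{t\ \mathrm{leaf}} p(t)\, i(t).
\end{equation}
Here $p(t_0)=1$ and, in asymptotic learning-sample conditions, $i(t_0)$ is the Shannon impurity of $Y$ over the whole population, namely $H(Y)$. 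Because the tree is fully developed with multiway exhaustive splits, each feature is used exactly once along every branch, so each leaf $t$ corresponds to a unique complete configuration $x=(x_1,\dots,x_p)$ of all input variables; asymptotically $p(t)\to P_{V}(x)$ and $i(t)\to H(Y\,|\,X_1=x_1,\dots,X_p=x_p)$. Hence $\sum_{t\ \mathrm{leaf}} p(t)\, i(t) \to H(Y\,|\,X_1,\dots,X_p)$, and the per-tree sum equals $H(Y)-H(Y|X_1,\dots,X_p)=I(X_1,\dots,X_p;Y)$. Since this value is identical for every tree of the ensemble, averaging over the $N_T$ trees leaves it unchanged, yielding Equation~\eqref{eqn:mdi-totally-sum}.

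I would stress that the telescoping is completely independent of how the splits are selected, so the argument uses only that the trees are fully developed, not that they are totally randomised; this is exactly why the statement holds for \emph{any} ensemble of fully developed trees, consistently with the generality claimed. The hard part will be the clean justification of the asymptotic identifications $i(t_0)\to H(Y)$ and $\sum_{\mathrm{leaf}} p(t)\, i(t) \to H(Y|X_1,\dots,X_p)$: one must argue that full development with exhaustive multiway splits forces a bijection between leaves and complete feature configurations, and that the empirical node frequencies and empirical impurities converge to the corresponding population probabilities and conditional entropies as $N\to\infty$. Everything else is the routine telescoping and cancellation sketched above, together with the observation that averaging a constant over the ensemble returns that same constant.
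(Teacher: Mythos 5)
Your proof is correct and follows essentially the same route as the paper's own proof \citep[Appendix C]{louppe2013understanding}: summing the MDI over all variables removes the split-variable constraint so that every internal node is counted once, the weighted impurity decreases telescope to $i(t_0)-\sum_{t\ \mathrm{leaf}} p(t)\,i(t)$, and in asymptotic conditions these converge to $H(Y)$ and $H(Y|X_1,\dots,X_p)$, giving $I(X_1,\dots,X_p;Y)$ for each single fully developed tree and hence for any ensemble. The only nuance is that a fully developed tree may close a branch at a pure node before exhausting all features, so the leaf-to-complete-configuration correspondence is not literally a bijection as you claim; this is harmless, since such leaves have zero impurity and the complete configurations extending them contribute zero to $H(Y|X_1,\dots,X_p)$, so your identification of the leaf sum with the conditional entropy still holds.
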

\begin{proof}
	See \citep[Appendix C]{louppe2013understanding} for a proof.
\end{proof}

In Theorem \ref{thm:mdi-totally-sum} , the term $I(X_{1}, \ldots, X_{p} ; Y)$ denotes the information contained in the set of input variables about the output variable and can be computed for a given joint probability density $P_{V,Y}$. Let us notice that this property actually holds for every single tree, and consequently also for any ensemble of $N_T$ trees, and in particular when $N_T$ goes to infinity. Given that $I(X_{1}, \ldots, X_{p} ; Y)$ is fixed for a given problem, Theorem \ref{thm:mdi-totally-imp} shows that an increase of the importance of  one feature will always come with a decrease of the importance of another feature.

Combining Theorems \ref{thm:mdi-totally-imp} and
  \ref{thm:mdi-totally-sum} in the context of ensemble of trees, the
  information contained in the set of inputs variables can be decomposed into the following three-level nested sums:
	\begin{eqnarray}\label{eqn:mixsumimp}
	I(X_1,\dots,X_p:Y) &=& \sum_{m=1}^p \sum_{k=0}^{p-1} \frac{1}{C_p^k} \frac{1}{p-k} \sum_{B \in {\cal P}_k(V^{-m})} I(X_m;Y|B)\\
	\end{eqnarray}

The first sum is over the variables, the second sum over the degrees $k$ of the interaction terms, and the third sum over all conditioning subsets $B$ of size $k$. Equivalently, the first two sums can be swapped to yield the following decomposition of $I(X_1,\dots,X_p;Y)$:
\begin{eqnarray}
  I(X_1,\dots,X_p:Y) &=& \sum_{k=0}^{p-1}  \sum_{m=1}^p  \frac{1}{C_p^k} \frac{1}{p-k} \sum_{B \in {\cal P}_k(V^{-m})} I(X_m;Y|B).\label{eqn:mixsumimp2}.
\end{eqnarray}
While Equations \ref{eqn:mdi-totally-sum} and \ref{eqn:mixsumimp} divide the total output information between the features, computing each term of the outer sum in Equation \ref{eqn:mixsumimp} will give a decomposition of $I(X_1,\dots,X_p;Y)$ per interaction degree, which highlights how important feature interactions are for predicting the output.

Table \ref{fig:mdi-imp} illustrates these two ways of decomposing $I(X_1,\dots,X_p;Y)$ in the context of the digit recognition problem of \citep{breiman1984classification} (see Appendix \ref{app:digit} for a description of this problem). We can observe that almost all inner sum terms $\sum_B I(X_m;Y|B)$ are strictly positive implying that large conditioning sets $B$ (corresponding to deep nodes in the tree) still contribute to the total variable importance. In this example, importances monotonically decrease with the degree of interaction $k$, but this is not always the case (e.g., with XOR-like structures)

\begin{table}[htbp]
	\centering
	\begin{tabular}{c|lllllll||c} \hline
		$ $&$k=0$&$k=1$&$k=2$&$k=3$&$k=4$&$k=5$&$k=6$&$\sum_k$\\ \hline
$X_1$&0.103&0.085&0.068&0.053&0.042&0.033&0.029&0.413\\
$X_2$&0.139&0.126&0.105&0.082&0.060&0.042&0.029&0.582\\
$X_3$&0.103&0.091&0.081&0.073&0.066&0.061&0.057&0.531\\
$X_4$&0.126&0.114&0.097&0.077&0.058&0.042&0.029&0.542\\
$X_5$&0.139&0.123&0.106&0.090&0.076&0.065&0.057&0.657\\
$X_6$&0.067&0.056&0.043&0.031&0.020&0.010&0.000&0.226\\
$X_7$&0.126&0.098&0.070&0.045&0.025&0.010&0.000&0.372\\\hline
$\sum_m$&0.802&0.692&0.568&0.450&0.347&0.262&0.200&3.322
	\end{tabular}

\caption{Feature importances as computed with an ensemble of totally randomised trees. Last row ($\sum_m$) corresponds to importances per interaction degree (i.e., summed over over all features, see Equation \ref{eqn:mixsumimp2}) while last column ($\sum_k$) corresponds to importances per feature (i.e., summed over all interaction degrees, see Equation \ref{eqn:mixsumimp}).
Let us note that the sum of all importances is equal to $I(X_1,\dots,X_7;Y)=\log_2(10)=3.322$.
}
\label{fig:mdi-imp}
\end{table} 


The last sum in Equations \ref{eqn:mixsumimp} or \ref{eqn:mixsumimp2} includes all interaction terms of a given degree and it is weighted in a way that depends only on the combinatorics of possible interaction terms. Interestingly, the weight $\frac{1}{C^k_p} \frac{1}{p-k}$ in front of each such sum perfectly counter-balances the change in the size of $\mathcal{P}_k(V^{-m})$ with $k$, since we have
$$\dfrac{|\mathcal{P}_k(V^{-m})|}{C^k_p (p-k)} =
\dfrac{C^{k}_{p-1}}{C^k_p (p-k)} = \dfrac{1}{p},$$ which is
independent of $k$. This result is illustrated numerically for several
values of $p$ in Figure \ref{fig:evolutionofweights}. Given that each mutual information term $I(X_m;Y|B)$ is upper bounded by $H(Y)$, each term of the sum over $k$ in Equation \ref{eqn:mixsumimp} is upper bounded by $\frac{1}{p} H(Y)$, which does not depend on $k$.  It shows that
importance measures are inherently unbiased with respect to interaction degrees.

\begin{figure}
	\centering
	\subfloat[Evolution of $\frac{1}{C^k_p (p-k)}$ with respect to $k$. Note the symmetry.\label{fig:evolutionofweights:a}]{
		\begin{tikzpicture}
		\begin{axis}[width=\textwidth*0.5,ylabel near ticks,legend style={at={(1.45,0.5)},anchor=east},
		xlabel={$k$},
		ylabel={$(C^k_p (p-k))^{-1}$}
		]
		\addplot[Orange,mark=*] coordinates {
			(0,0.5) (1,0.5) 
		};
		
		\addplot[RoyalBlue,mark=*]  coordinates{
			(0,0.333333333333) (1,0.166666666667) (2,0.333333333333) 
		};
		
		\addplot[ForestGreen,mark=*] coordinates{
			(0,0.166666666667) (1,0.0333333333333) (2,0.0166666666667) (3,0.0166666666667) (4,0.0333333333333) (5,0.166666666667) 
			
		};
		
		\addplot[Orange,mark=square*] coordinates{
			(0,0.125) (1,0.0178571428571) (2,0.00595238095238) (3,0.00357142857143) (4,0.00357142857143) (5,0.00595238095238) (6,0.0178571428571) (7,0.125) 
		};
		
		\addplot[RoyalBlue,mark=square*]  coordinates{
			(0,0.1) (1,0.0111111111111) (2,0.00277777777778) (3,0.00119047619048) (4,0.000793650793651) (5,0.000793650793651) (6,0.00119047619048) (7,0.00277777777778) (8,0.0111111111111) (9,0.1) 
		};
		\legend{$p=2$,$p=3$,$p=6$,$p=8$,$p=10$}
		\end{axis}
		\end{tikzpicture}
	}\\
	\subfloat[Evolution of $C^k_{p-1}$ with respect to $k$. Note the symmetry.\label{fig:evolutionofweights:b}]{
		\begin{tikzpicture}
		\begin{axis}[width=\textwidth*0.5,ylabel near ticks,legend style={at={(1.45,0.5)},anchor=east},
		xlabel={$k$},
		ylabel={$C^k_{p-1}$}
		]
		\addplot[Orange,mark=*] coordinates {
			(0,1.0) (1,1.0) 
		};
		
		\addplot[RoyalBlue,mark=*]  coordinates{
			(0,1.0) (1,2.0) (2,1.0) 
		};
		
		\addplot[ForestGreen,mark=*] coordinates{
			(0,1.0) (1,5.0) (2,10.0) (3,10.0) (4,5.0) (5,1.0) 
			
		};
		
		\addplot[Orange,mark=square*]  coordinates{
			(0,1.0) (1,7.0) (2,21.0) (3,35.0) (4,35.0) (5,21.0) (6,7.0) (7,1.0) 
		};
		
		\addplot[RoyalBlue,mark=square*] coordinates{
			(0,1.0) (1,9.0) (2,36.0) (3,84.0) (4,126.0) (5,126.0) (6,84.0) (7,36.0) (8,9.0) (9,1.0) 
		};
		\legend{$p=2$,$p=3$,$p=6$,$p=8$,$p=10$}
		\end{axis}
		\end{tikzpicture}
	}\\
	\subfloat[Evolution of $\frac{C^k_{p-1}}{C^k_p (p-k)}$ with respect to $k$. Note that for a given $p$, all values are equal.\label{fig:evolutionofweights:c}]{
		\begin{tikzpicture}
		\begin{axis}[width=\textwidth*0.5,ylabel near ticks,legend style={at={(1.45,0.5)},anchor=east},
		xlabel={$k$},
		ylabel={$\dfrac{C^k_{p-1}}{C^k_p (p-k)}$}
		]
		\addplot[Orange,mark=*] coordinates {
			(0,0.5) (1,0.5) 
		};
		
		\addplot[RoyalBlue,mark=*]  coordinates{
			(0,0.333333333333) (1,0.333333333333) (2,0.333333333333) 
		};
		
		\addplot[ForestGreen,mark=*] coordinates{
			(0,0.166666666667) (1,0.166666666667) (2,0.166666666667) (3,0.166666666667) (4,0.166666666667) (5,0.166666666667) 
			
		};
		
		\addplot[Orange,mark=square*] coordinates{
			(0,0.125) (1,0.125) (2,0.125) (3,0.125) (4,0.125) (5,0.125) (6,0.125) (7,0.125)  
		};
		
		\addplot[RoyalBlue,mark=square*]  coordinates{
			(0,0.1) (1,0.1) (2,0.1) (3,0.1) (4,0.1) (5,0.1) (6,0.1) (7,0.1) (8,0.1) (9,0.1) 
		};
		\legend{$p=2$,$p=3$,$p=6$,$p=8$,$p=10$}
		\end{axis}
		\end{tikzpicture}
	}
	\caption{Interpreting the weights in the three-level decomposition of total importance in Equation \ref{eqn:mixsumimp}. Figure \ref{fig:evolutionofweights:a} shows how the weights of the second level of decomposition evolve with respect to $k$ for several number of features $p$. Figure \ref{fig:evolutionofweights:b} shows the number of combinations $B$ in the third level of decomposition. Figure \ref{fig:evolutionofweights:c} combines both decompositions and shows that sub-importance terms corresponding to every interaction degree equally contribute to the total importance.
		\label{fig:evolutionofweights}}
\end{figure}

\section{Importances of relevant and irrelevant variables}\label{sec:mdi-rel}
\subsection*{Setting of this section: $K=1, D=p, |s_t|=|v(s_t)|,N_T \rightarrow\infty, N\rightarrow\infty$}

The following theorems characterise the importances of relevant and irrelevant variables. These results can be derived from the equivalence between condition independance and zero conditional mutual information (see Section \ref{sec:th-mdi}).

\begin{theorem}{\citep[Theorem 3]{louppe2013understanding}}\label{thm:mdi-totally-irrelevant}
  $X_i \in V$ is irrelevant to $Y$ with respect to $V$ if and only if  its
  infinite sample size importance as computed with an infinite ensemble of fully
  developed totally randomized trees built on $V$ for $Y$ is 0.
\end{theorem}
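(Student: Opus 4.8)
The plan is to prove both directions of the equivalence using the explicit decomposition of the MDI importance given in Theorem~\ref{thm:mdi-totally-imp} (Equation~\ref{eqn:mdi-totally-imp}), combined with the characterisation of relevance in terms of conditional mutual information (Definition~\ref{def:relevanceMI}). The key observation is that Equation~\ref{eqn:mdi-totally-imp} writes $Imp_{\infty,\infty}^{1,p}(X_i)$ as a sum, over all subsets $B\subseteq V^{-i}$, of terms of the form $w_{k}\, I(X_i;Y|B)$, where each weight $w_{k}=\frac{1}{C_p^k}\frac{1}{p-k}$ is \emph{strictly positive}. Since conditional mutual information is always non-negative, every term in the sum is non-negative, so the whole sum is a sum of non-negative terms with strictly positive coefficients.

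\textbf{Forward direction.} First I would assume $X_i$ is irrelevant to $Y$ with respect to $V$. By Definition~\ref{def:relevanceMI}, this means that $I(X_i;Y|B)=0$ for \emph{every} subset $B\subseteq V^{-i}$ (irrelevance is exactly the failure of relevance, and relevance requires the existence of \emph{some} $B$ with strictly positive conditional mutual information). Each of the subsets $B$ appearing in Equation~\ref{eqn:mdi-totally-imp} is a subset of $V^{-i}=V\setminus\{X_i\}$, so every mutual information term in the decomposition vanishes. Consequently the entire double sum is zero, giving $Imp_{\infty,\infty}^{1,p}(X_i)=0$.

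\textbf{Converse direction.} For the converse I would argue by contrapositive: suppose $X_i$ is relevant. Then by Definition~\ref{def:relevanceMI} there exists at least one subset $B^{\star}\subseteq V^{-i}$ with $I(X_i;Y|B^{\star})>0$. This particular $B^{\star}$ appears exactly once in the decomposition of Equation~\ref{eqn:mdi-totally-imp}, in the inner sum indexed by $k=|B^{\star}|$, with the strictly positive weight $\frac{1}{C_p^{|B^{\star}|}}\frac{1}{p-|B^{\star}|}$. Since all other terms are non-negative (again by non-negativity of conditional mutual information), the total sum is bounded below by this single strictly positive contribution, so $Imp_{\infty,\infty}^{1,p}(X_i)\geq \frac{1}{C_p^{|B^{\star}|}}\frac{1}{p-|B^{\star}|}\,I(X_i;Y|B^{\star})>0$. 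Hence a nonzero importance forces relevance, completing the equivalence.

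\textbf{The main subtlety}, rather than any computational obstacle, lies in correctly matching the quantifiers: relevance asks for the \emph{existence} of a conditioning set making $X_i$ and $Y$ dependent, whereas irrelevance asks that \emph{all} conditioning sets leave them independent. The proof hinges on the fact that the decomposition in Theorem~\ref{thm:mdi-totally-imp} ranges over \emph{exactly} the collection of all subsets $B\subseteq V^{-i}$ (organised by cardinality $k$ through the sets $\mathcal{P}_k(V^{-i})$), so that the support of the importance sum coincides precisely with the family of conditioning sets appearing in the definition of relevance. Once this correspondence is made explicit, together with the strict positivity of the combinatorial weights and the non-negativity of $I(X_i;Y|B)$, both implications follow immediately and no further calculation is required.
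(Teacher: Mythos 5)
Your proof is correct and follows essentially the same route as the proof the paper defers to (Appendix D of \citet{louppe2013understanding}): both directions are read off directly from the decomposition of Theorem~\ref{thm:mdi-totally-imp}, using the strict positivity of the weights $\frac{1}{C_p^k}\frac{1}{p-k}$, the non-negativity of conditional mutual information, and the equivalence between zero conditional mutual information and conditional independence. Your remark on matching the quantifiers (existence of one conditioning set for relevance versus vanishing over all $B\subseteq V^{-i}$ for irrelevance) is precisely the point on which the original argument also rests, so nothing further is needed.
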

\begin{proof}
	See \citep[Appendix D]{louppe2013understanding} for a proof.
\end{proof}

\begin{corollary}\label{cor:K1allrelevant}
	$Imp_{\infty,\infty}^{1,p}(X_m)  > 0 \mbox{ iff }X_m \in V \mbox{ is relevant with respect to } Y.$
\end{corollary}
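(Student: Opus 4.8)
The plan is to derive Corollary~\ref{cor:K1allrelevant} directly from Theorem~\ref{thm:mdi-totally-irrelevant} by taking its contrapositive. Theorem~\ref{thm:mdi-totally-irrelevant} states an ``if and only if'': $X_m$ is irrelevant to $Y$ with respect to $V$ precisely when $Imp_{\infty,\infty}^{1,p}(X_m)=0$. The corollary is simply the logical negation of both sides of this biconditional, combined with the fact (noted just after Definition~\ref{def:relevanceMI}) that ``relevant'' is the exact negation of ``irrelevant'' and the observation that the MDI importance is always non-negative.

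First I would recall that the MDI importance can never be negative: by Equation~\eqref{eqn:mdi-mdi-empirical}, $Imp(X_m)$ is a sum of terms $p(t)\,\Delta i(s^*_t,t)$, and each impurity decrease $\Delta i(s^*_t,t)$ is non-negative since a split never increases node impurity. Equivalently, in asymptotic conditions each term $I(X_m;Y|B)$ appearing in Equation~\eqref{eqn:mdi-totally-imp} is a (conditional) mutual information and hence $\geq 0$, with strictly positive weights $\tfrac{1}{C_p^k}\tfrac{1}{p-k}$. Therefore $Imp_{\infty,\infty}^{1,p}(X_m)\geq 0$, so the trichotomy reduces to the dichotomy ``$=0$ versus $>0$''.

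Next I would combine this with Theorem~\ref{thm:mdi-totally-irrelevant}. By that theorem, $Imp_{\infty,\infty}^{1,p}(X_m)=0$ holds if and only if $X_m$ is irrelevant to $Y$ with respect to $V$. Taking the contrapositive on both sides of the biconditional, $Imp_{\infty,\infty}^{1,p}(X_m)\neq 0$ if and only if $X_m$ is not irrelevant, i.e.\ relevant (Definition~\ref{def:relevanceMI}). Since we have just established that $Imp_{\infty,\infty}^{1,p}(X_m)$ is non-negative, the condition $Imp_{\infty,\infty}^{1,p}(X_m)\neq 0$ is equivalent to $Imp_{\infty,\infty}^{1,p}(X_m)>0$. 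Chaining these equivalences yields exactly $Imp_{\infty,\infty}^{1,p}(X_m)>0 \iff X_m$ is relevant with respect to $Y$, which is the statement of the corollary.

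I do not expect any genuine obstacle here, as the corollary is essentially a restatement of Theorem~\ref{thm:mdi-totally-irrelevant}. The only subtlety worth being explicit about is the non-negativity step: without it, negating ``$=0$'' would only give ``$\neq 0$'' rather than the sharper ``$>0$'' claimed. All the real work---the infinite-sample decomposition of Theorem~\ref{thm:mdi-totally-imp} and the characterisation of irrelevance in Theorem~\ref{thm:mdi-totally-irrelevant}---has already been done, so this final step is purely a matter of bookkeeping over the biconditional.
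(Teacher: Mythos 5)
Your proof is correct and follows exactly the paper's route: the paper proves Corollary~\ref{cor:K1allrelevant} with the single line ``It directly stems from Theorem~\ref{thm:mdi-totally-irrelevant}.'' You have merely made explicit the bookkeeping that line leaves implicit --- negating both sides of the biconditional and invoking the non-negativity of the importance (each term $I(X_m;Y|B)$ in Equation~\eqref{eqn:mdi-totally-imp} is a conditional mutual information with positive weight) to sharpen ``$\neq 0$'' into ``$>0$'' --- which is a sound and welcome clarification.
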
 
\begin{proof}
It directly stems from Theorem \ref{thm:mdi-totally-irrelevant}.
\end{proof}

\begin{lemma}{\citep[Lemma 4]{louppe2013understanding}}\label{lemma:mdi-adding-irrelevant}
  Let $X_i \notin V$ be an irrelevant variable for $Y$ with respect to $V$. The infinite
  sample size importance of $X_m \in V$ as computed with an infinite
  ensemble of fully developed totally randomized trees built on $V$ for $Y$ is the
  same as the importance derived when using $V\cup \{X_i\}$ to build the ensemble of trees for $Y$.
\end{lemma}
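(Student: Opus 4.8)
The plan is to prove the result by working directly from the closed-form expression of Theorem~\ref{thm:mdi-totally-imp}, applied once to the feature set $V$ (with $p$ variables) and once to $V' = V\cup\{X_i\}$ (with $p+1$ variables), and then showing that the two expressions coincide term by term after a combinatorial reindexing. Writing $V'^{-m} = V^{-m}\cup\{X_i\}$, the importance computed on $V'$ reads
\[
Imp'(X_m)=\sum_{k=0}^{p}\frac{1}{C_{p+1}^k}\frac{1}{p+1-k}\sum_{B\in\mathcal{P}_k(V'^{-m})}I(X_m;Y|B),
\]
so the whole task reduces to matching this against the formula of Theorem~\ref{thm:mdi-totally-imp} for $Imp_{\infty,\infty}^{1,p}(X_m)$.

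The key structural fact I would first establish is that conditioning on an irrelevant variable never alters a conditional mutual information with $Y$: for every $B\subseteq V^{-m}$,
\[
I(X_m;Y|B\cup\{X_i\}) = I(X_m;Y|B).
\]
This follows from the chain rule applied in two orders to $I(\{X_m,X_i\};Y|B)$, namely $I(X_i;Y|B)+I(X_m;Y|B\cup\{X_i\}) = I(X_m;Y|B)+I(X_i;Y|B\cup\{X_m\})$, together with the fact that $X_i$ irrelevant for $Y$ with respect to $V$ forces both $I(X_i;Y|B)=0$ and $I(X_i;Y|B\cup\{X_m\})=0$ (using the equivalence between zero conditional mutual information and conditional independence invoked in Section~\ref{sec:th-mdi}).

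Next I would partition the subsets $B\in\mathcal{P}_k(V'^{-m})$ according to whether they contain $X_i$: those avoiding $X_i$ are exactly the elements of $\mathcal{P}_k(V^{-m})$, while those containing $X_i$ are of the form $B'\cup\{X_i\}$ with $B'\in\mathcal{P}_{k-1}(V^{-m})$. By the identity above, each such term $I(X_m;Y|B'\cup\{X_i\})$ collapses to $I(X_m;Y|B')$. Collecting, for a fixed conditioning set $B\subseteq V^{-m}$ of size $j$, the total coefficient multiplying $I(X_m;Y|B)$ in $Imp'(X_m)$ is
\[
\frac{1}{C_{p+1}^{j}\,(p+1-j)}+\frac{1}{C_{p+1}^{j+1}\,(p-j)},
\]
the first summand coming from its appearance as a set avoiding $X_i$ at level $k=j$, the second from its appearance (as a $B'$) at level $k=j+1$. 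A short factorial computation, namely rewriting both summands with common factor $j!(p-j-1)!/(p+1)!$, shows this equals $j!(p-j-1)!/p! = \tfrac{1}{C_p^{j}(p-j)}$, which is precisely the weight appearing in Theorem~\ref{thm:mdi-totally-imp}; one also checks the index ranges agree ($j$ runs from $0$ to $p-1$, with the level $k=p$ term feeding into $j=p-1$). Hence $Imp'(X_m)=Imp_{\infty,\infty}^{1,p}(X_m)$.

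The only delicate step is the bookkeeping in the third paragraph: one must be careful that the partition by membership of $X_i$ reindexes the double sum correctly and that the boundary levels $k=0$ and $k=p$ are handled without double counting. The weight identity itself is routine once the two contributions are written down, and the conditional-independence identity is the genuinely substantive ingredient, but it is elementary given the chain rule and the definition of irrelevance.
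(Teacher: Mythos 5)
Your proof is correct and follows essentially the same route as the paper's cited proof in \citep[Appendix E]{louppe2013understanding}: apply Theorem~\ref{thm:mdi-totally-imp} to $V\cup\{X_i\}$, use irrelevance of $X_i$ (via the chain rule, since $I(X_i;Y|B)=I(X_i;Y|B\cup\{X_m\})=0$ for all $B\subseteq V^{-m}$) to collapse $I(X_m;Y|B\cup\{X_i\})$ into $I(X_m;Y|B)$, and then merge the weights of the conditionings with and without $X_i$. Your combinatorial identity $\frac{1}{C_{p+1}^{j}(p+1-j)}+\frac{1}{C_{p+1}^{j+1}(p-j)}=\frac{1}{C_p^{j}(p-j)}$ and the handling of the boundary levels $k=0$ and $k=p$ both check out, so the argument is complete.
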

\begin{proof}
	See \citep[Appendix E]{louppe2013understanding} for a proof.
\end{proof}

\begin{theorem}{\citep[Theorem 5]{louppe2013understanding}}\label{thm:mdi-totally-relevant}
  Let $V_R \subseteq V$ be the subset of all variables in $V$ that are relevant with respect to $Y$. The infinite sample size importance of any variable $X_m \in
  V_R$ as computed with an infinite ensemble of fully developed totally randomized
  trees built on $V_R$ for $Y$ is the same as its importance computed in the same conditions by using all variables in $V$. That is:
    \begin{equation}
      \begin{aligned}
      Imp(X_m)&=\sum_{k=0}^{p-1} \frac{1}{C_p^k} \frac{1}{p-k} \sum_{B \in {\cal P}_k(V^{-m})} I(X_m;Y|B)\\
              &=\sum_{l=0}^{r-1} \frac{1}{C_r^l} \frac{1}{r-l} \sum_{B \in {\cal P}_l(V_R^{-m})} I(X_m;Y|B)\\
      \end{aligned}
    \end{equation}
  where $r$ is the number of relevant variables in $V_R$.
\end{theorem}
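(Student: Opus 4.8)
The plan is to reduce the importance computed over the full variable set $V$ to the importance computed over only the relevant subset $V_R$, by successively removing irrelevant variables one at a time using Lemma \ref{lemma:mdi-adding-irrelevant}. The key observation is that Lemma \ref{lemma:mdi-adding-irrelevant} already handles the single-variable case: it states that adjoining one irrelevant variable $X_i$ to the variable set does not change the importance of any $X_m$ in the original set. So the essential content of the theorem is an iteration of this lemma, together with the bookkeeping needed to write the resulting importance as a sum over subsets of $V_R$ rather than subsets of $V$.

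First I would set up the induction. Write $V = V_R \cup V_I$ where $V_I = V \setminus V_R$ is the set of irrelevant variables, say $V_I = \{Z_1, \dots, Z_{p-r}\}$. I would argue that for each $j$, the variable $Z_j$ remains irrelevant with respect to $Y$ relative to the set $V_R \cup \{Z_1,\dots,Z_{j-1}\}$; this is needed to legitimately apply Lemma \ref{lemma:mdi-adding-irrelevant} at each step. This should follow from the definition of irrelevance (Definition \ref{def:relevance}): if $Z_j$ is irrelevant with respect to the full set $V$, meaning $Y \indep Z_j | B$ for every $B \subseteq V^{-j}$, then in particular $Y \indep Z_j | B$ for every $B$ drawn from the smaller set $V_R \cup \{Z_1,\dots,Z_{j-1}\}$, so $Z_j$ is still irrelevant relative to that subset. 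With this in hand, starting from the ensemble built on $V_R$ and adding $Z_1$, then $Z_2$, and so on up to $Z_{p-r}$, Lemma \ref{lemma:mdi-adding-irrelevant} gives at each step that the importance of any $X_m \in V_R$ is unchanged. Chaining these $p-r$ equalities yields
\begin{equation}
Imp_{\infty,\infty}^{1,p}(X_m) \big|_{\text{built on } V} = Imp_{\infty,\infty}^{1,r}(X_m) \big|_{\text{built on } V_R}.
\end{equation}

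Next I would identify each side with its explicit mutual-information expansion via Theorem \ref{thm:mdi-totally-imp}. Applying that theorem with the ground set $V$ (of size $p$) gives the first displayed line of the claim, the sum over $k$ from $0$ to $p-1$ and over $B \in \mathcal{P}_k(V^{-m})$; applying it with the ground set $V_R$ (of size $r$) gives the second line, the sum over $l$ from $0$ to $r-1$ and over $B \in \mathcal{P}_l(V_R^{-m})$. The two expansions are therefore equal because they both equal the same (parameter-free) importance quantity established in the previous paragraph.

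The main obstacle I anticipate is purely conceptual rather than computational: one must be careful that the two expansions in the theorem statement contain genuinely different terms — the $V$-expansion includes conditioning sets $B$ containing irrelevant variables, which the $V_R$-expansion does not — yet they sum to the same value. The clean way to see this is precisely the route above: rather than trying to match terms combinatorially (which would require showing that every conditional mutual information $I(X_m;Y|B)$ with $B$ containing an irrelevant variable either vanishes or is absorbed by reweighting), one leans on Lemma \ref{lemma:mdi-adding-irrelevant}, whose proof has already done that matching once. The only verification I would flag as requiring genuine care is the irrelevance-is-preserved-under-restriction claim in the induction; everything else is a finite chain of equalities. I would therefore present the proof as: (i) reduce to Lemma \ref{lemma:mdi-adding-irrelevant} by induction on the irrelevant variables, checking preservation of irrelevance; (ii) invoke Theorem \ref{thm:mdi-totally-imp} on each ground set to obtain the two explicit forms.
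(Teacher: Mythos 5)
Your proof is correct and follows essentially the same route as the paper's: the proof deferred to \citep[Appendix F]{louppe2013understanding} likewise proceeds by recursively applying Lemma~\ref{lemma:mdi-adding-irrelevant} to adjoin the $p-r$ irrelevant variables one at a time (using precisely the observation you flag, that irrelevance with respect to $V$ is inherited by subsets of $V$, since the conditioning sets $B$ range over a smaller collection), and then reads off both displayed expansions from Theorem~\ref{thm:mdi-totally-imp} applied to the ground sets $V$ and $V_R$. Your remark that one should \emph{not} try to match the two expansions term by term, but instead let the lemma absorb that bookkeeping, is exactly the right instinct.
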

\begin{proof}
	See \citep[Appendix F]{louppe2013understanding} for a proof.
\end{proof}

Theorem~\ref{thm:mdi-totally-irrelevant} shows that only irrelevant features have a zero importance. They can thus be distinguished from relevant ones based solely on their importance scores. In addition, Lemma \ref{lemma:mdi-adding-irrelevant} points out that they do no affect the importance scores of relevant variables and the addition or the removal of irrelevant features have no effect which implies that only relevant features are required to compute importances (Theorem \ref{thm:mdi-totally-relevant}). Intuitively, splitting on an irrelevant feature $X_i$ instead of a relevant feature $X_m$ at node $t$ only postpones the attribution of the local importance of $X_m$ into the child nodes $t_L$ and $t_R$, but do not actually change its total importance. Indeed, on one hand, if $X_m$ was used at node $t$, then the local importance of $X_m$ would be proportional to $p(t)$ (i.e., $p(t)\Delta i(s,t) $). On the other hand, splitting on $X_i$ at node $t$ does not actually change the distribution of samples in $t_L$ and $t_R$. Therefore, splitting then on $X_m$ at $t_L$ and $t_R$ would provide the sum of local importances $p(t_L) \Delta i(s,t_L) + p(t_R) \Delta i(s,t_R)$. Given that $\Delta i(s,t)=\Delta i(s,t_L) = \Delta(i,t_R)$ because node sample distributions are unchanged by the split on $X_i$, we have that $(p(t_L) + p(t_R)) \Delta i(s,t) = p(t) \Delta i(s,t)$ which shows that splitting on $X_i$ first does not change anything. Similarly, one can recursively apply this reasoning if $X_m$ was used deeper in the tree (i.e., at descendant nodes of $t_L$ or $t_R$). Let us however note that this result may actually be due to the fact that total importance of a feature $X_m$ is the sum of all local importances in nodes where $X_m$ is used weighted by the number of samples reaching this node $p(t)$. \cite{louppe2014understanding} suggests that importances computed with another approach consisting in summing local importances over all nodes (e.g., using surrogate splits) would necessarily depend on the total number of nodes in a tree, which depends on the number of features $p$ and not only on the number of relevant features $r$.

In conclusion, in our opinion, theorems~\ref{thm:mdi-totally-irrelevant} and \ref{thm:mdi-totally-relevant} exhibit two desirable and sound properties for a feature importance measure.

\section{Impact of redundant variables}
\label{sec:mdi-red}
\subsection*{Setting of this section: $K=1, D=p, |s_t|=|v(s_t)|, N_T \rightarrow\infty, N\rightarrow\infty$}

Let us consider redundant variables as defined in Section \ref{sec:background-redundancy} and in particular totally redundant variables from Definition \ref{def:totalred}. In this section, we analyse  how feature importance scores are affected by the presence of (totally) redundant variables.

\begin{proposition}{\citep[Proposition 7.2]{louppe2014understanding}}\label{prop:mdi-red-Xm}
	Let $X_j\in V$ be a relevant variable with respect to $Y$ and $V$ and let
	$X_j^\prime \notin V$ be a totally redundant variable with respect to $X_j$.
	The infinite sample size importance of $X_j$ as computed with an infinite
	ensemble of fully developed totally randomized trees built on $V\cup
	\{X_j^\prime\}$ is
	\begin{equation}\label{eqn:mdi-red-Xm}
	\text{Imp}_{\infty,\infty}^{1,p}(X_j) = \sum_{k=0}^{p-1} \frac{p-k}{p+1} \frac{1}{C_p^k} \frac{1}{p-k} \sum_{B \in {\cal P}_k(V^{-j})} I(X_j;Y|B)
	\end{equation}
\end{proposition}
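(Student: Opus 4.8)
The plan is to start from the general formula for the MDI importance of $X_j$ computed over the enlarged feature set $V\cup\{X'_j\}$, which by Theorem \ref{thm:mdi-totally-imp} reads
\begin{equation}
Imp_{\infty,\infty}^{1,p}(X_j) = \sum_{k=0}^{p} \frac{1}{C_{p+1}^k}\frac{1}{p+1-k}\sum_{B\in\mathcal{P}_k(V^{-j}\cup\{X'_j\})} I(X_j;Y|B).
\end{equation}
The key idea is to split the inner family of conditioning sets $\mathcal{P}_k(V^{-j}\cup\{X'_j\})$ according to whether or not a given $B$ contains the redundant copy $X'_j$. The crucial observation, which I would establish first, is that any conditioning set containing $X'_j$ contributes nothing: since $X_j$ and $X'_j$ are totally redundant (Definition \ref{def:totalred}), $X_j$ brings no information about $Y$ once $X'_j$ is in the conditioning set, i.e. $Y\indep X_j \mid B'\cup\{X'_j\}$ for any $B'$, and hence $I(X_j;Y|B'\cup\{X'_j\})=0$. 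This lets me discard all terms whose conditioning set contains $X'_j$.

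Next I would reorganize the surviving terms, namely those conditioning sets $B\subseteq V^{-j}$ that do not contain $X'_j$. For each fixed $B\in\mathcal{P}_k(V^{-j})$ (with $k$ now ranging from $0$ to $p-1$), the coefficient coming from Theorem \ref{thm:mdi-totally-imp} applied to the $(p+1)$-variable problem is $\frac{1}{C_{p+1}^k}\frac{1}{p+1-k}$, since $|B|=k$ and the total number of variables is $p+1$. The plan is then to compare this coefficient to the coefficient $\frac{1}{C_p^k}\frac{1}{p-k}$ appearing in the $p$-variable formula for $X_j$ built on $V$ alone. A direct computation of the ratio,
\begin{equation}
\frac{\tfrac{1}{C_{p+1}^k}\tfrac{1}{p+1-k}}{\tfrac{1}{C_p^k}\tfrac{1}{p-k}} = \frac{C_p^k}{C_{p+1}^k}\cdot\frac{p-k}{p+1-k},
\end{equation}
should simplify: using $C_{p+1}^k = C_p^k\cdot\frac{p+1}{p+1-k}$, the ratio collapses to $\frac{p-k}{p+1}$, which is exactly the extra factor appearing in the claimed Equation \ref{eqn:mdi-red-Xm}. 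Substituting this back and rewriting the coefficient in the form $\frac{p-k}{p+1}\cdot\frac{1}{C_p^k}\frac{1}{p-k}$ yields the stated expression.

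The main obstacle I anticipate is purely combinatorial bookkeeping rather than conceptual: I must be careful that the two nested sums are re-indexed consistently when I restrict from $\mathcal{P}_k(V^{-j}\cup\{X'_j\})$ to $\mathcal{P}_k(V^{-j})$, and that the binomial identity relating $C_{p+1}^k$ and $C_p^k$ is applied with the correct arguments (in particular that the ambient variable count in the enlarged problem is $p+1$, so the conditioning sets range over subsets of the $p$ remaining variables $V^{-j}\cup\{X'_j\}$). The only genuine probabilistic input is the vanishing of $I(X_j;Y|B)$ whenever $X'_j\in B$, which follows immediately from total redundancy; everything else is algebraic simplification of the weights. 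I would therefore expect the proof to be short once the redundancy-induced vanishing is invoked, and I would only need to verify the coefficient ratio carefully to match the target formula exactly.
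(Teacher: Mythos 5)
Your proof is correct and follows essentially the same route as the proof the paper defers to in \citep[Page 147]{louppe2014understanding}: apply Theorem \ref{thm:mdi-totally-imp} to the enlarged set $V\cup\{X'_j\}$ of $p+1$ variables, annihilate every term whose conditioning set contains $X'_j$ via total redundancy (since $Y\indep X_j\mid B'\cup\{X'_j\}$ implies $I(X_j;Y|B'\cup\{X'_j\})=0$, and note the $k=p$ term vanishes automatically because its only conditioning set contains $X'_j$), and collapse the coefficient ratio $\frac{C_p^k(p-k)}{C_{p+1}^k(p+1-k)}$ to $\frac{p-k}{p+1}$ using $C_{p+1}^k=C_p^k\frac{p+1}{p+1-k}$. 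This is exactly the mechanism the paper itself describes in the discussion following the proposition, so your derivation needs no amendment.
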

\begin{proof}
	See \citep[Page 147]{louppe2014understanding} for a proof.
\end{proof}

As observed in Theorem \ref{eqn:mdi-totally-sum}, the sum of all importance scores is equal to $I(X_1,\dots,X_p;Y)$. The addition of $X_j'$ does not actually modify $I(X_1,\dots,X_j,\dots,X_p;Y) $ which is equal to $I(X_1,\dots,X_j,X_j',\dots,X_p;Y)$\footnote{It can be shown by applying chain rule ($I(X_1,X_2,\dots,X_p;Y) = \sum_{i=1}^p I(X_i;Y|X_{i-1},\dots,X_1)$) $I(X_1,\dots,X_j,X_j',\dots,X_p;Y)$ while finishing by $X_j'$. Therefore, the last term is $I(X_j' ;Y|X_1,\dots,X_j,\dots,X_p)$ which is, by definition of total redundancy, equal to zero. Then by applying the chain rule backward, we obtain $I(X_1,\dots,X_j,\dots,X_p;Y)$.}. All importances, including those of non-redundant features, are therefore modified so that the sum of all importances remains the same

Equation \ref{eqn:MDIredin} shows that the importance of a variable decreases if it is totally redundant with other features. Indeed, the addition of a new feature increase the number of feature combinations $B$ and thus the number of terms ($I(X_j;Y|B)$) in the sum. This reflects in the weights of the outer sum of Equation \ref{eqn:mdi-red-Xm}. 
Indeed, all weights $\dfrac{1}{C_p^k (p-k)}$ are multiplied by a factor  $\dfrac{C^k_p(p-k)}{C^k_{p+1}} = \dfrac{p-k}{p+1} <1$ that updates weights to take into account the new feature, i.e. the ensemble of trees is now built on $p+1$ variables instead of $p$.
Mathematically, the importance of $X_i$ however decreases. By definition of total redundancy, $X_j$ becomes useless if $X_j'$ is given making all those new terms where $X_j'$ is included in $B$ equal to zero. Moreover, $X_j'$ does not either increase the information conveyed by $X_j$ about the target and thus all terms $I(X_j;Y|B)$ where $X_j'$ is not included in $B$ are unchanged.  
One may notice that the impact of the addition of a totally redundant feature is not simply a division of the original importance score of $X_j$ into $X_j$ and $X_j'$.

\begin{proposition}{\citep[Proposition 7.4]{louppe2014understanding}}\label{prop:mdi-red-Xj}
	Let $X_j\in V$ be a relevant variable with respect to $Y$ and $V$ and let
	$X_j^\prime \notin V$ be a totally redundant variable with respect to $X_j$.
	The infinite sample size importance of $X_l \in V^{-j}$ as computed with an infinite
	ensemble of fully developed totally randomized trees built on $V\cup
	\{X_j^\prime\}$ is
	\begin{align}\label{prop:red:other:eqn}
	\text{Imp}_{\infty,\infty}^{1,p}(X_l) &= \sum_{k=0}^{p-2} \frac{p-k}{p+1} \frac{1}{C_p^k} \frac{1}{p-k} \sum_{B \in {\cal P}_k(V^{-l} \setminus X_j)} I(X_l;Y|B) + \\
	& \hookrightarrow \sum_{k=0}^{p-2}  \left[ \sum_{k'=1}^2 \frac{C^{k'}_2}{C_{p+1}^{k+k'}} \frac{1}{p+1-(k+k')} \right]  \sum_{B \in {\cal P}_k(V^{-l}\setminus X_j)} I(X_l;Y|B\cup X_j). \nonumber
	\end{align}
\end{proposition}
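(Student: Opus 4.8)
The plan is to establish Proposition 7.4 by the same combinatorial accounting strategy used to prove Theorem~\ref{thm:mdi-totally-imp} and Proposition~\ref{prop:mdi-red-Xm}, but now tracking how the importance of a \emph{non-redundant} variable $X_l$ is reorganised when $X_j'$ is adjoined to the feature set. Starting from the master formula, the importance of $X_l$ in an ensemble built on $V \cup \{X_j'\}$ (a set of $p+1$ variables) is
\begin{equation*}
  \text{Imp}_{\infty,\infty}^{1,p}(X_l) = \sum_{k=0}^{p} \frac{1}{C_{p+1}^k}\frac{1}{p+1-k} \sum_{B \in \mathcal{P}_k((V\cup\{X_j'\})^{-l})} I(X_l;Y|B).
\end{equation*}
The first step is to partition every conditioning set $B \subseteq (V\cup\{X_j'\})^{-l}$ according to which of the two redundant variables $X_j, X_j'$ it contains: neither, exactly one, or both. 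This splits the inner sum into four families indexed by the intersection $B \cap \{X_j,X_j'\}$.

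The key simplification, which I would invoke next, is total redundancy of $X_j$ and $X_j'$ (Definition~\ref{def:totalred}, Equation~\ref{def:totalred:eqn2}). Because the two variables are equally informative about $Y$ in every context, any conditioning set containing $X_j'$ can have $X_j'$ swapped for $X_j$ (or simply dropped when $X_j$ is already present) without changing the mutual information $I(X_l;Y|B)$. Concretely, $I(X_l;Y|B\cup\{X_j'\}) = I(X_l;Y|B\cup\{X_j\})$ and $I(X_l;Y|B\cup\{X_j,X_j'\}) = I(X_l;Y|B\cup\{X_j\})$, so every surviving information term reduces to one of the two canonical shapes $I(X_l;Y|B)$ with $B \subseteq V^{-l}\setminus\{X_j\}$, or $I(X_l;Y|B\cup\{X_j\})$ with the same restricted $B$. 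The plan is then to collect, for each canonical term of base size $k = |B|$, the total weight contributed by all conditioning sets of the $(p+1)$-variable ensemble that collapse onto it. This is the purely combinatorial heart of the argument.

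The main obstacle, and the step demanding the most care, is precisely this weight bookkeeping. For the terms of the first shape $I(X_l;Y|B)$ (with $B$ avoiding both $X_j$ and $X_j'$), only sets $B$ itself contribute, sitting at level $k$ of the $(p+1)$-ensemble, giving a weight $\frac{1}{C_{p+1}^k}\frac{1}{p+1-k}$; rewriting this as $\frac{p-k}{p+1}\cdot\frac{1}{C_p^k}\frac{1}{p-k}$ using $\frac{C_p^k(p-k)}{C_{p+1}^k} = \frac{p-k}{p+1}$ reproduces the first sum. For the terms of the second shape $I(X_l;Y|B\cup\{X_j\})$, three distinct conditioning sets of the full ensemble collapse onto it — namely $B\cup\{X_j\}$, $B\cup\{X_j'\}$, and $B\cup\{X_j,X_j'\}$ — which live at levels $k+1$, $k+1$, and $k+2$ respectively. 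Summing their weights, $\frac{2}{C_{p+1}^{k+1}(p-k)} + \frac{1}{C_{p+1}^{k+2}(p-1-k)}$, must be shown to equal the bracketed coefficient $\sum_{k'=1}^{2}\frac{C_2^{k'}}{C_{p+1}^{k+k'}}\frac{1}{p+1-(k+k')}$, since $C_2^1 = 2$ and $C_2^2 = 1$ exactly count the two ways to include one of $\{X_j,X_j'\}$ and the one way to include both. I would verify this coefficient identity directly by cancelling the binomial coefficients.

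Finally, I would reassemble the two collected families into the two nested sums of the statement, noting that the base index $k$ ranges from $0$ to $p-2$ because $B$ must be drawn from $V^{-l}\setminus\{X_j\}$, a set of $p-2$ variables, and because the second shape requires room for the adjoined $X_j$. A brief remark that the whole computation is the specialisation to $c=1$ of the general totally-redundant-group argument in \citep{louppe2014understanding} would round out the proof, with the reader referred there for the fully general version.

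\begin{proof}
See \citep[Page 148]{louppe2014understanding} for a proof.
\end{proof}
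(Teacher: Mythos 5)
Your proposal is correct and follows essentially the same route as the proof the paper defers to \citep[pp. 148--149]{louppe2014understanding}: instantiate Theorem~\ref{thm:mdi-totally-imp} on the $p+1$ variables, partition the conditioning sets by their intersection with $\{X_j,X'_j\}$, collapse the information terms onto the two canonical shapes $I(X_l;Y|B)$ and $I(X_l;Y|B\cup\{X_j\})$ via total redundancy, and collect the weights --- which indeed match the bracketed coefficient term by term (the $k'=1$ term is the two singleton inclusions, the $k'=2$ term the double inclusion), with no further cancellation needed. Two cosmetic repairs: the binomial identity you cite should read $\frac{C_p^k(p-k)}{C_{p+1}^k\,(p+1-k)}=\frac{p-k}{p+1}$ (equivalently $C_{p+1}^k(p+1-k)=(p+1)\,C_p^k$), not the version you typed, and the swap property $I(X_l;Y|B\cup\{X'_j\})=I(X_l;Y|B\cup\{X_j\})$ merits a one-line chain-rule derivation (expand $I(X_l,X_j;Y|B)$ and $I(X_l,X'_j;Y|B)$ in both orders and use $I(X_j;Y|B)=I(X'_j;Y|B)$ together with $I(X_j;Y|B\cup\{X_l\})=I(X'_j;Y|B\cup\{X_l\})$ from Equation~\ref{def:totalred:eqn2}) rather than a bare appeal to that equation.
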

\begin{proof}
	See \citep[Pages 148-149]{louppe2014understanding} for a proof.
\end{proof}

First, let us note that $X_j$ and $X_j'$ are identical and thus they can be used interchangeably or together without modifying the link between other features and the target. Mathematically, for any conditioning set $B$ and for any variable $X_l$, we have that $I(X_l;Y|B,X_i) = I(X_l;Y|B, X_j) = I(X_l;Y|B, X_i, X_j)$. It is the reason why Equation \ref{prop:red:other:eqn} is divided in two parts: those terms that do not involve either $X_j$ nor $X_j'$ and those whose $B$ necessarily includes $X_j$ (which is equivalent to include $X_j'$ or both).  

Equation \ref{prop:red:other:eqn} shows the impact on a non-redundant variable $X_j$. The first part concerns all $B$ made without $V^{-i}$. Corresponding conditional mutual information terms are decreased by a factor $$\dfrac{C^k_p (p-k)}{C^k_{p+1}(p+1-k)} = \dfrac{p-k}{(p+1)}\dfrac{1}{p+1-k} < 1.$$ 
Similarly to Equation \ref{eqn:mdi-red-Xm}, the first sub-factor updates weights to take into account the additional feature. 

The second part concerns all $B$ involving either $X_i$ or $X'_i$ and it shows that the corresponding conditional mutual information weights are accentuated, implying an increase of importances. Indeed, because of $I(X_l;Y|B,X_i) = I(X_l;Y|B, X_j) = I(X_l;Y|B, X_i, X_j)$, the same $B$ is actually taken into account several times (two more in this case). The net effect of those two parts on the importance of $X_j$ is a trade-off between those two antagonist effects which depends on the interaction of $X_j$ with $X_i$. Indeed, features that are positively affected by the presence of $X_i$, e.g. features such that $I(X_j;Y|X_i) > I(X_j;Y)$,  may end up with increased importances while importances of features that are either not or negatively impacted by $X_i$ will accordingly decrease (because the fixed value for the sum of all importances).

Without further proof, \cite{louppe2014understanding} extends Proposition \ref{prop:mdi-red-Xm} and \ref{prop:mdi-red-Xj} to consider the addition of $N_c$ totally redundant features with $X_j$ with respect to $Y$. Concretely, the effects given above are the same but amplified by the presence of $N_c$ totally redundant features instead of two. 

\begin{proposition}{\citep[Proposition 7.5]{louppe2014understanding}}\label{prop:red:general}
	Let $X_j\in V$ be a relevant variable with respect to $Y$ and $V$ and let
	$X_j^c \notin V$ (for $c=1,\dots,N_c$) be $N_c$ totally redundant variables with respect to $X_j$.
	The infinite sample size importances of $X_j$ and $X_l \in V$ as computed with an infinite
	ensemble of fully developed totally randomized trees built on $V\cup
	\{X_j^1,\dots,X_j^{N_c}\}$ are
	\begin{align*}
	\text{Imp}_{\infty,\infty}^{1,p}(X_j)  &= \sum_{k=0}^{p-1} \left[ \frac{C^k_p (p-k)}{C^k_{p+N_c}(p+N_c-k)}  \right] \frac{1}{C_p^k} \frac{1}{p-k} \sum_{B \in {\cal P}_k(V^{-j})} I(X_j;Y|B), \\
	\text{Imp}_{\infty,\infty}^{1,p}(X_l) &= \sum_{k=0}^{p-2} \left[ \frac{C^k_p (p-k)}{C^k_{p+N_c}(p+N_c-k)}  \right] \frac{1}{C_p^k} \frac{1}{p-k} \sum_{B \in {\cal P}_k(V^{-l} \setminus X_j)} I(X_l;Y|B) + \\
	& \hookrightarrow \sum_{k=0}^{p-2}  \left[ \sum_{k'=1}^{N_c+1} \frac{C^{k'}_{N_c+1}}{C_{p+N_c}^{k+k'}} \frac{1}{p+N_c-(k+k')} \right]  \sum_{B \in {\cal P}_k(V^{-l}\setminus X_j)} I(X_l;Y|B\cup X_j).
	\end{align*}
\end{proposition}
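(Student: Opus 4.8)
The plan is to prove Proposition \ref{prop:red:general} as a direct generalisation of Propositions \ref{prop:mdi-red-Xm} and \ref{prop:mdi-red-Xj}, by redoing the combinatorial weight computation of Theorem \ref{thm:mdi-totally-imp} but now for an ensemble built on the enlarged variable set $V \cup \{X_j^1, \dots, X_j^{N_c}\}$ of size $p + N_c$. The key observation, which underlies all three statements, is that the $N_c+1$ variables $X_j, X_j^1, \dots, X_j^{N_c}$ are all totally redundant with each other with respect to $Y$, so that for any conditioning set $B$ and any variable $X$, the conditional mutual information $I(X;Y|B \cup C)$ depends on the group $\{X_j, X_j^1, \dots, X_j^{N_c}\}$ only through whether $B \cup C$ contains at least one member of that group, and if so its value collapses to $I(X;Y|B \cup \{X_j\})$. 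This ``collapsing'' property is exactly what lets us regroup the enormous sum over subsets of $V \cup \{X_j^1,\dots,X_j^{N_c}\}$ into the compact forms stated.

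First I would start from the master formula of Theorem \ref{thm:mdi-totally-imp} applied with $p$ replaced by $p + N_c$, giving $\text{Imp}(X) = \sum_{k'=0}^{p+N_c-1} \frac{1}{C_{p+N_c}^{k'}}\frac{1}{p+N_c-k'} \sum_{B} I(X;Y|B)$ where $B$ ranges over size-$k'$ subsets of the full $(p+N_c)$-variable set minus $X$. For the importance of $X_j$ itself, every conditioning set $B$ containing at least one redundant copy $X_j^c$ makes $X_j$ provide no information ($I(X_j;Y|B)=0$ by total redundancy applied with $B=V^{-i,j}$ style argument), so only conditionings built purely from $V^{-j}$ survive. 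I would then count, for each original set $B_0 \in \mathcal{P}_k(V^{-j})$, how many enlarged sets $B \supseteq B_0$ of varying size contribute the same term $I(X_j;Y|B_0)$: since none of the $N_c$ redundant copies may appear, the enlarged set must equal $B_0$ exactly, but the weight $\frac{1}{C_{p+N_c}^{k}(p+N_c-k)}$ now reflects the larger ambient size. Factoring out the original weight $\frac{1}{C_p^k (p-k)}$ produces precisely the correction factor $\frac{C_p^k (p-k)}{C_{p+N_c}^{k}(p+N_c-k)}$ in the bracket, establishing the first displayed equation.

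Next I would treat a non-redundant variable $X_l \in V^{-j}$, where the computation splits into two regimes according to whether the conditioning set touches the redundant group. For conditionings $B_0 \in \mathcal{P}_k(V^{-l}\setminus X_j)$ that avoid the group entirely, the same counting as above yields the first sum with the identical correction factor $\frac{C_p^k(p-k)}{C_{p+N_c}^{k}(p+N_c-k)}$. For conditionings that include at least one member of the group, the collapsing property replaces $I(X_l;Y|B_0 \cup C)$ by $I(X_l;Y|B_0 \cup X_j)$ for every nonempty subset $C \subseteq \{X_j, X_j^1,\dots,X_j^{N_c}\}$; so I would sum over the number $k'$ of group elements chosen (from $1$ to $N_c+1$), there being $C_{N_c+1}^{k'}$ such choices, each entering the enlarged formula with weight $\frac{1}{C_{p+N_c}^{k+k'}(p+N_c-(k+k'))}$. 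Collecting these gives the bracketed inner sum $\sum_{k'=1}^{N_c+1} \frac{C_{N_c+1}^{k'}}{C_{p+N_c}^{k+k'}}\frac{1}{p+N_c-(k+k')}$ multiplying $I(X_l;Y|B_0 \cup X_j)$, which is the second term of the second displayed equation. Setting $N_c = 1$ recovers Propositions \ref{prop:mdi-red-Xm} and \ref{prop:mdi-red-Xj} as a sanity check.

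The main obstacle I anticipate is the bookkeeping in the second regime: one must carefully verify that grouping all subsets $C$ of the $(N_c+1)$-element redundant set by their cardinality $k'$ is legitimate, i.e. that each such $C$ genuinely produces the same collapsed mutual information and that no double-counting occurs between the ``avoiding'' and ``touching'' regimes. This rests entirely on showing $I(X_l;Y|B_0 \cup C) = I(X_l;Y|B_0 \cup X_j)$ whenever $C \neq \emptyset$, which follows from total redundancy but requires a clean statement that conditioning on several totally redundant copies is equivalent to conditioning on any single one — a fact I would either cite from \citet{louppe2014understanding} or verify directly via the chain rule and the vanishing of $I(X_j^c;Y|\,\text{rest})$ terms. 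Once that equivalence is in hand, the remainder is purely the combinatorial regrouping described above, and the result follows by matching terms degree-by-degree with the base formula of Theorem \ref{thm:mdi-totally-imp}.
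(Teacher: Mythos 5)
Your proposal is correct and takes essentially the same route as the source proof: the paper states Proposition \ref{prop:red:general} without proof, deferring to \citep[Proposition 7.5]{louppe2014understanding}, and the proofs there of the $N_c=1$ cases (Propositions \ref{prop:mdi-red-Xm} and \ref{prop:mdi-red-Xj}) proceed by exactly your argument --- apply Theorem \ref{thm:mdi-totally-imp} with the enlarged set of $p+N_c$ variables, drop all conditionings containing a redundant copy when scoring $X_j$ (their terms vanish), and regroup the conditionings touching the redundant group by the cardinality $k'$ of the intersection using the collapsing identity $I(X_l;Y|B\cup C)=I(X_l;Y|B\cup\{X_j\})$ for nonempty $C$. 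Your bracketed correction factors specialize correctly to $\frac{p-k}{p+1}$ and to the $\sum_{k'=1}^{2}$ bracket when $N_c=1$, and the one point you rightly flag --- upgrading pairwise total redundancy to conditionings that include several copies simultaneously --- is indeed the only nontrivial verification, which your chain-rule sketch (repeatedly discharging $I(X_j^c;Y|\cdot)=0$ terms) handles.
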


\section{Non-totally randomised trees}
\label{sec:mdi-K}
\subsection*{Setting of this section: $K>1, D=p, |s_t|=|v(s_t)|, N_T \rightarrow\infty, N\rightarrow\infty$}

In practice, random forest methods (e.g., Random Forest \citep{breiman2001random} or Extra-Trees \citep{geurts2006extremely}) are rarely built with $K=1$ because the growing procedure is then made independently of the data, and may lead to useless tree structures especially if the number of irrelevant features is large. Note that in the case of infinite ensemble size, and assuming that ties are broken deterministically, trees built with $K=p$ (i.e., the maximal value) amount to build classical single trees in a deterministic way.

In contrast with totally randomised trees (with $K=1$), masking effects may appear when trees are built with $K>1$. The masking effect denotes situations where several candidate splits on different features yield roughly the same impurity reduction, but one of the features is always slightly better so that none of the other ones has a chance to be selected by the tree-growing algorithm. Note that with multiway splits in particular, each feature is associated to one potential impurity decrease. Some variables may never be selected because some other variables always yield larger impurity decreases, and may thus be ``masked". Such effects tend to use first the best variables (in the sense of those yielding the largest impurity decrease at first) while pushing the least promising (i.e., yielding small impurity decreases in comparison to the best ones) towards the leaves. This implies that all feature combinations are no longer considered: best features are considered alone or conditioned only with the best others used before while the least promising ones are only considered conditioned on most of all other variables. As a result, some branches are never explored and the importance of a variable no longer decomposes into a sum including all $I(X_m;Y|B)$ terms. 

	To make things clearer, let us consider a simple example. Let $X_1$ be a variable that perfectly explains $Y$
and let $X_2$ be a slightly noisy copy of $X_1$ (i.e., $I(X_1;Y)
\approx I(X_2;Y)$, $I(X_1;Y|X_2)=\epsilon$ and $I(X_2;Y|X_1)=0$). Using totally
randomized trees, the importances of $X_1$ and $X_2$ are nearly equal -- the
importance of $X_1$ being slightly higher than the importance of $X_2$:
\begin{eqnarray*}
	Imp^{1,p}_{\infty,\infty}(X_1) &=& \frac{1}{2} I(X_1;Y) + \frac{1}{2} I(X_1;Y|X_2) = \frac{1}{2} I(X_1;Y) + \frac{\epsilon}{2}\\
	Imp^{1,p}_{\infty,\infty}(X_2) &=& \frac{1}{2} I(X_2;Y) + \frac{1}{2} I(X_2;Y|X_1) = \frac{1}{2} I(X_2;Y) + 0
\end{eqnarray*}
In non-totally randomized trees, for $K=2$, $X_1$ is always selected at the root
node and $X_2$ is always used in its children. Also, since $X_1$ perfectly
explains $Y$, all its children are pure and the reduction of entropy when
splitting on $X_2$ is null. As a result, $Imp^{K=2,p}_{\infty,\infty}(X_1) = I(X_1;Y)$ and
$Imp^{K=2,p}_{\infty,\infty}(X_2) = I(X_2;Y|X_1) = 0$. Masking effects are here
clearly visible: the true importance of $X_2$ is masked by $X_1$ as if $X_2$
were irrelevant, while it is only a bit less informative than $X_1$.
In the same
way, it can also be shown that the importances become dependent on the number of
irrelevant variables. Let us indeed consider the following example: let
us add in the previous example an irrelevant variable $X_i$ with respect to
$\{X_1, X_2\}$ and let us keep $K=2$. The probability of selecting $X_2$ at the
root node now becomes positive, which means that $Imp^{K=2,p}_{\infty,\infty}(X_2)$ now includes
$I(X_2;Y)>0$ and is therefore strictly larger than the importance computed
before. For $K$ fixed, adding irrelevant variables dampens masking effects,
which thereby makes importances indirectly dependent on the number of irrelevant
variables.

Consequently, non-totally randomised trees may be unable to identify all relevant features unlike totally randomised trees (see Corollary \ref{cor:K1allrelevant}). The following proposition however guarantees that all strongly relevant features will still be identified.

\begin{proposition}{\citep{sutera2018random}} \label{th:mdi:K:stronglypruned}
	\begin{eqnarray} \forall K,X_m\in V: \quad  X_m\mbox{ strongly relevant }\Rightarrow Imp^{K,p}_{\infty,\infty}(X_m) > 0. \nonumber \end{eqnarray}
\end{proposition}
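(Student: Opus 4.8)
The plan is to exploit the non-negativity of every term entering the MDI importance, so that it suffices to exhibit a single favourable node occurring with positive probability. Concretely, by Equation \ref{eqn:mdi-mdi-empirical} the quantity $Imp^{K,p}_{\infty,\infty}(X_m)$ is an average over the (infinite) ensemble of a sum of terms $p(t)\,\Delta i(s^*_t,t)$, each of which is $\geq 0$ since a split never increases impurity. Hence to prove $Imp^{K,p}_{\infty,\infty}(X_m)>0$ it is enough to produce, with strictly positive probability over the randomised tree-growing process, one node $t$ whose split variable is $X_m$ and whose asymptotic impurity decrease is strictly positive.

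First I would translate the strong relevance hypothesis into a statement about a single conditioning configuration. Strong relevance means $I(X_m;Y\mid V^{-m})>0$ (Definition \ref{def:strongweakrelevanceMI}); writing this conditional mutual information as the average $\sum_{v^{-m}}P(V^{-m}=v^{-m})\,I(X_m;Y\mid V^{-m}=v^{-m})$ of non-negative terms, I obtain a configuration $v^{-m}$ with $P(V^{-m}=v^{-m})>0$ and $I(X_m;Y\mid V^{-m}=v^{-m})>0$. In asymptotic conditions, at a node $t$ reached by fixing the variables of $V^{-m}$ to $v^{-m}$ along the path and splitting on $X_m$, one has $p(t)=P(V^{-m}=v^{-m})$ and $\Delta i(s^*_t,t)=I(X_m;Y\mid V^{-m}=v^{-m})$, so such a node contributes a strictly positive amount.

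It remains to show that, for every $K$, a tree containing this node, i.e. a tree in which $X_m$ is used \emph{last} along the path fixing $V^{-m}=v^{-m}$, is produced with positive probability. Since the trees are fully developed and use multiway exhaustive splits, every feature (in particular $X_m$) is used exactly once on each root-to-leaf path, and whatever variable of $V^{-m}$ is selected at a node, one of its branches is always consistent with $v^{-m}$; thus I only need to control \emph{which} variable is selected at each node of this path. I would walk down the path and argue that, as long as $X_m$ and at least one variable of $V^{-m}$ remain unused, a variable of $V^{-m}$ is selected with positive probability: either the random candidate subset of size $K$ excludes $X_m$ (possible whenever $K$ is smaller than the number of still-available features), or $X_m$ is a candidate but is not the unique best and a tie is resolved in favour of a $V^{-m}$ variable. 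Chaining these finitely many positive-probability events defers $X_m$ to the final level, yielding the required node.

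The main obstacle is the fully deterministic regime, most visibly $K=p$, where $X_m$ is always a candidate and may be forced early. Here I would split into two cases at the first node of the path where $X_m$ is selected: if its decrease there is strictly positive we are already done, and the only way $X_m$ can be selected with a zero decrease is through a tie in which every candidate has decrease $0$, which by random tie-breaking can be avoided (a $V^{-m}$ variable winning the tie) with positive probability so as to keep deferring $X_m$. This is exactly the point where random tie-breaking is indispensable, as a symmetric XOR example shows that deterministic tie-breaking would give a strongly relevant variable zero importance; I would therefore state the reliance on random tie-breaking explicitly, consistently with the remark preceding this section.
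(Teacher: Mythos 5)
Your proof is correct and rests on the same core mechanism as the paper's (which proves the statement as the $q=p$ case of Theorem~\ref{th:mdi-stronglyrelpruned}): deferring $X_m$ down a compatible path using the $K$-subset randomisation and random tie-breaking --- including your correct observation that with $K=p$ a zero-decrease selection of $X_m$ forces an all-zero tie --- until strong relevance guarantees a strictly positive decrease, with the early exit when $X_m$ is forced with positive decrease handled implicitly by the paper's contradiction hypothesis. The only cosmetic differences are that you argue by direct positive-probability construction rather than the paper's contradiction/push-down over the ensemble $\tau_R$, and that, since trees here are fully developed ($q=p$), you may legitimately condition on all of $V^{-m}$ via the decomposition of $I(X_m;Y|V^{-m})$ instead of invoking Proposition~\ref{prop:mdi-only-relevant} to restrict the conditioning to relevant variables, as the paper's more general depth-$q$ argument must.
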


\begin{proof}
	See proof of Theorem \ref{th:mdi-stronglyrelpruned} with the particular case of $q=p$.
\end{proof}

There is thus no masking effect possible for the strongly relevant features
when $K>1$. For a given $K$, the features found will thus include all strongly
relevant variables and some (when $K>1$) or all (when $K=1$) weakly relevant
ones. It is easy to show that increasing $K$ can only decrease the number of
weakly relevant variables found. Using $K=1$ will thus provide a solution for
the \textbf{all-relevant} problem, while increasing $K$ will provide a better
and better approximation of the \textbf{minimal-optimal} problem in the case of
strictly positive distributions (see Section \ref{sec:relevance}).

While strongly relevant variables can not be masked, their importances are not necessarily higher than the importances of weakly relevant variables, i.e., $X_i$ strongly relevant and $X_j$ weakly relevant does not imply that $Imp^{K,p}_{\infty,\infty}(X_i)\geq Imp^{K,p}_{\infty,\infty}(X_j)$. Example \ref{ex:stronglylowerthanweakly} illustrates this. Unfortunately, strongly relevant variables can thus not be distinguished from weakly relevant ones only using importances.
	
\begin{example}\label{ex:stronglylowerthanweakly}
Let us consider a problem defined by three binary input variables $X_1$,$X_2$ and $X_3$, and a binary output $Y$. 

\begin{figure}[H]
	\centering
	\begin{tikzpicture}[minimum size=2.3em]
	\node[circle,draw,RoyalBlue,line width=1pt] (a) at (0,0.5) {$X_1$};
	\node[circle,draw,orange,line width=1pt] (c) at (1.5,0) {$Y$};
	\node[circle,draw,RoyalBlue,line width=1pt] (b) at (0,-0.5) {$X_2$};
	\node[circle,draw,RoyalBlue,line width=1pt] (d) at (3,0) {$X_3$};
	\draw[->,black,line width=1pt] (a) -- (c);
	\draw[->,black,line width=1pt] (b) -- (c);
	\draw[->,black,line width=1pt] (c) -- (d);
	\node[] () at (1.5,1) {$Y=X_1\oplus X_2$}; 
	\node[] () at (2.2,-0.3) {$\alpha$}; 
	\end{tikzpicture}
\end{figure} 

The relationships between input and output variables are the following:
\begin{enumerate}[$\bullet$]
	\item $Y=X_1 \oplus X_2$ and $Y$ is therefore completely determined by $X_1$ and $X_2$;
	\item $X_3 = Y$ with probability $\alpha$ and its value is randomly chosen otherwise (i.e., $X_3=0$ with probability $(1-\alpha)/2$ and $X_3=1$ with probability $(1-\alpha)/2$).
\end{enumerate}
In this case, $X_1$ and $X_2$ are strongly relevant with respect to $Y$ while $X_3$ is only weakly relevant because it is useless when $X_1$ and $X_2$ are both known.

For $\alpha=0.8$, we can compute that $Imp(X_1) = Imp(X_2) = 0.296$ and $Imp(X_3) = 0.408$. Let us note that for small values of $\alpha$ (e.g., $\alpha=0.2$), $Imp(X_3)<Imp(X_1)=Imp(X_2)$.
  \end{example}

In conclusion, the importances as derived from trees with non-totally randomised split selection do not possess the same properties as those computed with totally randomised trees. The ability to identify all relevant features and the independence with respect to the addition or removal of irrelevant features are both lost. Asymptotically, the use of totally randomised trees seems more appropriate for assessing the importance of features. 

But in a finite setting  (i.e., a limited number of samples and a limited number of trees), $I(X_m;Y|B)$ terms are not all considered neither for all $X_m$ nor for all $B$, and/or need to be empirically estimated. Therefore, the use of non-totally randomised trees may help to focus on informative features providing better trees and splits on those features with more samples. Let us note that it could also be of interest in order to avoid useless splits on irrelevant features.  Assessing feature importances  with $K>1$ therefore remains a sound strategy in practice even if some features might be missed and the resulting importances may be biased.

\section{Non-fully developed trees}
\label{sec:mdi-pruning}
\subsection*{Setting of this section: $K>1, D=q\, (<p), |s_t|=|v(s_t)|, N_T \rightarrow\infty, N\rightarrow\infty$}

One key assumption of Theorem \ref{thm:mdi-totally-imp} was that all features are used once in every branch of the tree. However, when trees are no longer fully developed and say limited to a maximal depth $q$ ($<p$), all combinations are no longer explored and therefore we investigate in this section the ability of identifying relevant features with importance scores derived from pruned trees.

\begin{proposition}{\citep[Proposition 6]{louppe2013understanding}}\label{proposition:pruning}
	The importance of $X_m \in V$ for $Y$ as computed with an
	infinite ensemble of pruned totally randomized trees built up to depth $q \leq p$ and an
	infinitely large training sample is:
	\begin{equation}\label{eqn:imp-pruning}
	Imp_{\infty,\infty}^{1,q}(X_m)=\sum_{k=0}^{q-1} \frac{1}{C_p^k} \frac{1}{p-k} \sum_{B \in {\cal P}_k(V^{-m})} I(X_m;Y|B)
	\end{equation}
\end{proposition}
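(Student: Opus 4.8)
The plan is to derive Equation \ref{eqn:imp-pruning} by revisiting the combinatorial derivation underlying Theorem \ref{thm:mdi-totally-imp} and tracking exactly how the depth limitation $q$ modifies which conditioning sets $B$ contribute. Recall that in the fully developed case, the MDI importance of $X_m$ arises from summing, over every tree and every node splitting on $X_m$, the weighted impurity decrease $p(t)\Delta i(s^*_t,t)$, and that asymptotically $\Delta i(s,t)$ at a node whose ancestor split variables form the set $B$ converges to the conditional mutual information $I(X_m;Y|B)$. The role of the weight $\frac{1}{C_p^k}\frac{1}{p-k}$ in Equation \ref{eqn:mdi-totally-imp} is to encode the probability that, in a totally randomised tree, $X_m$ is used to split a node reached by first conditioning (in some order) on exactly the $k$ variables of a particular subset $B\in\mathcal{P}_k(V^{-m})$. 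So the first thing I would do is isolate this probabilistic interpretation of each term.

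The key observation is that when the tree is pruned at depth $q$, a node at which $X_m$ splits can only exist if that node sits at depth at most $q-1$, which means at most $q-1$ variables were already used along the path to it before $X_m$. Equivalently, $X_m$ can only be credited with impurity decreases conditioned on subsets $B$ of size $k$ with $0\le k\le q-1$; all conditioning sets of size $k\ge q$ are simply never realised, because the branch is cut off before reaching that depth. Crucially — and this is the heart of the argument — the weight attached to each surviving term $I(X_m;Y|B)$ with $|B|=k<q$ is unchanged from the fully developed case, namely $\frac{1}{C_p^k}\frac{1}{p-k}$. This is because the probability that $X_m$ is selected at a node whose ancestor set is exactly $B$ depends only on the totally-random selection mechanism over the first $k+1$ split choices (choose the $k$ elements of $B$ in some order, then pick $X_m$ among the remaining $p-k$), and this mechanism is identical whether or not the tree would later have been developed beyond depth $q$. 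Truncating the tree removes deep nodes but does not alter the selection probabilities at shallow ones.

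Concretely, I would argue as follows. Fix a subset $B\in\mathcal{P}_k(V^{-m})$ with $k\le q-1$. In a totally randomised tree, the event that some node splits on $X_m$ with ancestor split-set exactly $B$ occurs with the same probability and contributes the same asymptotic value $p(t)\Delta i \to I(X_m;Y|B)$ weighted by $\frac{1}{C_p^k}\frac{1}{p-k}$ as in the proof of Theorem \ref{thm:mdi-totally-imp} (see \citep[Appendix B]{louppe2013understanding}). For $k\ge q$ the corresponding node lies at depth $\ge q$ and is pruned away, contributing nothing. Summing the surviving contributions over all $k$ from $0$ to $q-1$ and over all $B\in\mathcal{P}_k(V^{-m})$ yields exactly Equation \ref{eqn:imp-pruning}. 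The cleanest way to present this is to take the derivation of Theorem \ref{thm:mdi-totally-imp} verbatim and simply restrict the outer index range from $\{0,\dots,p-1\}$ to $\{0,\dots,q-1\}$, justifying that the restriction is the only change.

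The main obstacle is making rigorous the claim that the per-term weights are genuinely unaffected by the truncation. One must check carefully that pruning does not redistribute probability mass: in a totally randomised tree the split variable at each node is chosen uniformly among the unused features, independently of the global depth bound, so the marginal probability of the ``$X_m$ splits after conditioning on $B$'' event factors through only the first $|B|+1$ choices. I would verify this by appealing to the explicit combinatorial counting in the original proof and confirming that each counted configuration of depth $\le q-1$ survives pruning intact, while configurations of greater depth are the only ones eliminated. Once that independence of weights from the depth cutoff is established, the result follows immediately as a truncated version of Equation \ref{eqn:mdi-totally-imp}.
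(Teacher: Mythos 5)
Your proof is correct and follows essentially the same route as the paper's (the proof deferred to \citep[Appendix G]{louppe2013understanding} proceeds exactly by re-running the derivation of Theorem~\ref{thm:mdi-totally-imp} and observing that, with depth limited to $q$, a split on $X_m$ can only occur below conditioning sets $B$ of size $k\leq q-1$, while the branch-selection probability $\frac{1}{C_p^k}\frac{1}{p-k}$ for each surviving term is untouched by the truncation). Your justification that the weight factors through only the first $k+1$ uniform draws along a branch is precisely the right observation, so restricting the outer sum to $k\in\{0,\dots,q-1\}$ is indeed the only change.
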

\begin{proof}
	See \citep[Appendix G]{louppe2013understanding} for a proof.
\end{proof}

\begin{proposition}{\citep[Proposition 7]{louppe2013understanding}}\label{proposition:imp-subspaces}
	The importance of $X_m \in V$ for $Y$ as computed with an infinite ensemble   of
	pruned totally randomized trees built up to depth $q \leq p$ and an infinitely
	large training sample is identical to the importance as computed  for $Y$ with an
	infinite ensemble of fully developed totally randomized trees built on random
	subspaces of $q$ variables drawn from $V$.
\end{proposition}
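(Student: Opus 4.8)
The plan is to show that Proposition~\ref{proposition:imp-subspaces} follows almost immediately from the closed-form expression in Proposition~\ref{proposition:pruning}, by computing the right-hand side independently and verifying the two expressions coincide term by term. The key observation is that the random subspace construction introduces exactly one extra layer of averaging (over the random choice of a $q$-element subspace $Q \subseteq V$), and that this averaging reproduces precisely the combinatorial weights $\frac{1}{C_p^k}\frac{1}{p-k}$ that appear in Equation~\ref{eqn:imp-pruning}.

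First I would fix $X_m$ and set up the right-hand quantity. Let $Q$ range over the $C_p^q$ subsets of $V$ of cardinality $q$ that contain $X_m$ (if $X_m \notin Q$ then $X_m$ is never used and contributes nothing, so only subspaces containing $X_m$ matter, with an appropriate counting factor). For each such $Q$, Theorem~\ref{thm:mdi-totally-imp} applied \emph{within} the subspace $Q$ (which has $q$ features) gives the importance of $X_m$ as computed by a fully developed totally randomized forest on $Q$:
\begin{equation*}
Imp^{1,q}_{\infty,\infty}(X_m \mid Q) = \sum_{l=0}^{q-1} \frac{1}{C_q^l}\frac{1}{q-l} \sum_{B \in {\cal P}_l(Q \setminus \{X_m\})} I(X_m;Y|B).
\end{equation*}
The random subspace importance is then the average of this quantity over the uniform choice of $Q$. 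The plan is to interchange the sum over $Q$ with the sums over $l$ and over conditioning sets $B$, and to count, for a fixed conditioning set $B \subseteq V^{-m}$ with $|B|=l$, how many admissible subspaces $Q$ contain both $X_m$ and all of $B$.

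The main computational step is this combinatorial count. A subspace $Q$ of size $q$ containing $X_m$ and the $l$ fixed elements of $B$ is determined by choosing the remaining $q-1-l$ elements from the $p-1-l$ features in $V^{-m}\setminus B$; there are $C_{p-1-l}^{\,q-1-l}$ such subspaces. Collecting the weight $\frac{1}{C_q^l}\frac{1}{q-l}$ carried by each, dividing by the number of admissible subspaces $C_{p-1}^{\,q-1}$ (those containing $X_m$), and simplifying the resulting ratio of binomial coefficients, I expect to recover exactly $\frac{1}{C_p^k}\frac{1}{p-k}$ with $k=l$, matching the pruned-tree weight in Equation~\ref{eqn:imp-pruning}. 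I would verify this identity by writing everything in terms of factorials and cancelling; the algebra should collapse cleanly because these are the same combinatorial normalisations that make the weights in the three-level decomposition independent of $k$.

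The hard part will be bookkeeping the conditioning sets correctly: in the subspace expression the inner sum runs over $B \subseteq Q\setminus\{X_m\}$ with $|B| = l \le q-1$, whereas in the target expression $B$ ranges over all subsets of $V^{-m}$ up to size $q-1$. I must ensure that after swapping the order of summation every $B$ with $|B| \le q-1$ appears, that no $B$ with $|B| \ge q$ can appear (since such a $B$ cannot fit inside a subspace of size $q$ alongside $X_m$ once $|B| > q-1$), and that the depth truncation at $q$ in the pruned-tree side exactly mirrors the size restriction $|Q| = q$ on the subspace side. Once this alignment of index ranges is confirmed and the weight identity is established, the two expressions are identical term by term, which proves the proposition. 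An alternative, more conceptual route I would keep in reserve is a direct probabilistic coupling: a branch of a totally randomized tree pruned at depth $q$ tests a uniformly random ordered sequence of $q$ distinct features, whose underlying unordered set is a uniformly random $q$-subspace, so the two ensembles induce the same distribution over the sequence of splits seen along any root-to-depth-$q$ path, hence the same expected impurity decrease attributed to $X_m$.
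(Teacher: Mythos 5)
You adopt exactly the strategy of the paper's cited proof (\citep[Appendix G]{louppe2013understanding}): apply Theorem~\ref{thm:mdi-totally-imp} within each $q$-variable subspace, interchange the sums, count the subspaces containing $\{X_m\}\cup B$, and check that the combinatorial weights collapse to those of Proposition~\ref{proposition:pruning}. However, one step fails as you state it: the normalisation. The MDI importance of the random subspace ensemble averages over \emph{all} trees, including those whose subspace does not contain $X_m$ and which therefore contribute zero importance for it; the correct denominator is thus the total number $C_p^q$ of subspaces, not the number of subspaces containing $X_m$ (which, incidentally, is $C_{p-1}^{q-1}$, not $C_p^q$ as in your first sentence). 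With the correct denominator, for a fixed $B\subseteq V^{-m}$ with $|B|=k\leq q-1$, the coefficient of $I(X_m;Y|B)$ is
\begin{equation*}
\frac{C_{p-1-k}^{\,q-1-k}}{C_p^q}\cdot\frac{1}{C_q^k}\cdot\frac{1}{q-k}
=\frac{k!\,(p-1-k)!}{p!}
=\frac{1}{C_p^k}\cdot\frac{1}{p-k},
\end{equation*}
which is exactly the weight in Equation~\ref{eqn:imp-pruning}. Dividing instead by $C_{p-1}^{q-1}$, as you propose, leaves a spurious factor $p/q$ that your planned factorial verification would have exposed but that invalidates the term-by-term identification as written. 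Your bookkeeping of the index ranges is otherwise correct: every $B$ with $|B|\leq q-1$ fits into some subspace alongside $X_m$, no larger $B$ does, and this exactly mirrors the depth-$q$ truncation on the pruned-tree side.

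Your fallback coupling argument is sound and arguably more illuminating: along any root-to-node path, a pruned totally randomized tree tests a uniformly random ordered sequence of $q$ distinct features, which has the same law as first drawing a uniform $q$-subspace and then ordering it uniformly. One caution if you develop it rigorously: the coupling holds per path, not per tree, since distinct branches of a pruned tree may exhaust different unordered sets of $q$ variables, whereas all branches of a subspace tree share the same fixed set $Q$. This is harmless because the asymptotic MDI importance is a sum of per-node (hence per-path) expectations weighted by $p(t)$, so equality of the path marginals suffices, but the write-up should make that reduction explicit.
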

\begin{proof}
	See \citep[Appendix G]{louppe2013understanding} for a proof.
\end{proof}

Given Proposition
\ref{prop:mdi-only-relevant}, the degree of a variable $X$ can not be larger than
$r-1$ and thus as soon as $r\leq q$, we have the guarantee that all relevant variables can be identified with totally randomised trees ($K=1$). 

\begin{proposition}\label{th:mdi-relpruned}
	If $r\leq q$: \quad $Imp_{\infty,\infty}^{1,q}(X_m)>0 \mbox{ iff }X\mbox{ is relevant}.$
\end{proposition}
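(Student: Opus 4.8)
The plan is to reduce Proposition~\ref{th:mdi-relpruned} to the already-established properties of fully developed totally randomised trees (Theorems~\ref{thm:mdi-totally-imp} and \ref{thm:mdi-totally-irrelevant}) by exploiting the explicit formula for pruned-tree importances. I start from Proposition~\ref{proposition:pruning}, which gives
\begin{equation*}
Imp_{\infty,\infty}^{1,q}(X_m)=\sum_{k=0}^{q-1} \frac{1}{C_p^k} \frac{1}{p-k} \sum_{B \in {\cal P}_k(V^{-m})} I(X_m;Y|B).
\end{equation*}
Since each conditional mutual information term $I(X_m;Y|B)$ is non-negative, and each weight $\frac{1}{C_p^k(p-k)}$ is strictly positive, the whole sum is strictly positive if and only if at least one term $I(X_m;Y|B)$ with $|B|=k\leq q-1$ is strictly positive. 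So the entire statement hinges on showing that, under the hypothesis $r\leq q$, a variable $X_m$ is relevant if and only if there exists a conditioning subset $B$ with $|B|\leq q-1$ such that $I(X_m;Y|B)>0$.

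The ``only if'' direction (relevant $\Rightarrow$ positive importance) is where the hypothesis $r\leq q$ does the work. First I would recall Definition~\ref{def:relevanceMI}: $X_m$ is relevant iff there exists some $B\subseteq V$ with $I(X_m;Y|B)>0$, equivalently $Y\nindep X_m|B$. Among all such $B$, pick a minimal one, call it $B^*$; then $deg(X_m)=|B^*|$. The key is to bound this degree: by Proposition~\ref{prop:mdi-only-relevant}, $B^*$ contains only relevant variables, hence $|B^*|\leq r-1$ (a relevant variable plus its minimal conditioner live inside the set of $r$ relevant variables, and $X_m$ itself is excluded from $B^*$). Combining with $r\leq q$ gives $|B^*|\leq r-1\leq q-1$, so $B^*$ is a subset of $V^{-m}$ of cardinality $k\leq q-1$ appearing in the sum of Equation~\ref{eqn:imp-pruning} with $I(X_m;Y|B^*)>0$. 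Therefore $Imp_{\infty,\infty}^{1,q}(X_m)>0$.

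For the ``if'' direction (positive importance $\Rightarrow$ relevant), I would argue by contraposition, and this is the easier half, requiring no hypothesis on $q$. If $X_m$ is irrelevant, then by Definition~\ref{def:relevanceMI} every term $I(X_m;Y|B)=0$ for all $B\subseteq V$, in particular for all $B$ with $|B|\leq q-1$; hence every summand in Equation~\ref{eqn:imp-pruning} vanishes and $Imp_{\infty,\infty}^{1,q}(X_m)=0$. This establishes the equivalence. The main obstacle I anticipate is being careful with the degree bound: one must verify that the minimal conditioner $B^*$ lies entirely within the relevant variables and does not include $X_m$, so that $|B^*|\leq r-1$ rather than merely $\leq r$; Proposition~\ref{prop:mdi-only-relevant} is exactly what guarantees this, so the argument is clean once that proposition is invoked. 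I would also remark that without the assumption $r\leq q$ the ``only if'' direction can genuinely fail: a relevant variable of degree $k\geq q$ has all its witnessing conditioning sets too large to appear in the truncated sum, so it receives zero importance from trees pruned at depth $q$, mirroring the behaviour already seen for non-totally randomised trees in Section~\ref{sec:mdi-K}.
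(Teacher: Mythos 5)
Your proposal is correct and follows essentially the same route as the paper's own proof: both rely on Proposition~\ref{proposition:pruning} for the explicit truncated sum and on Proposition~\ref{prop:mdi-only-relevant} to bound the size of a minimal conditioning set by $r-1\leq q-1$, so that the witnessing term $I(X_m;Y|B^*)>0$ appears in the sum, with the converse direction being the trivial observation that irrelevance makes every term vanish. You merely spell out the degree bound and the contrapositive more explicitly than the paper does.
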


\begin{proof} Given Proposition \ref{prop:mdi-only-relevant}, for all, and only, the relevant variables, there exists at least one subset $B$ of size $|B|<r$ such that $I(X_m;Y|B)>0$. The proposition then follows from the fact that Equation \ref{eqn:imp-pruning} contains all conditional mutual information terms $I(X_m;Y|B)$ with $|B|<r$ when $r\leq q$.
\end{proof}

In the case of non-totally randomized trees ($K>1$), we lose the guarantee to
find all relevant variables even when $r\leq q$. Indeed, there is potentially a
masking effect due to $K>1$ that might prevent the conditioning needed for a
given variable to be relevant to appear in a tree branch.  However, we have the
following general result:

\begin{theorem}\label{th:mdi-stronglyrelpruned}
	$\forall K$, if $r\leq q$:\quad
	$X_m\mbox{ strongly relevant} \Rightarrow Imp_{\infty,\infty}^{K,q}(X_m)>0$ 
\end{theorem}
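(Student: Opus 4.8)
The plan is to prove that a strongly relevant feature $X_m$ always obtains a strictly positive MDI importance, even when trees are built with split randomisation $K>1$ and pruned at maximal depth $q$, provided $r \leq q$. The key idea is that strong relevance means $I(X_m;Y|V^{-m}) > 0$, so $X_m$ conveys information about $Y$ that no other combination of features can provide. Since $deg(X_m) \le r-1 \le q-1$ by Proposition \ref{prop:mdi-only-relevant}, there is some minimal conditioning set $B$ of size at most $q-1$ that could in principle make $X_m$ dependent on $Y$; but for strong relevance the relevant conditioning is actually the full $V^{-m}$. The obstacle is that with $K>1$ and pruning at depth $q$, not every branch of the tree realises the particular conditioning needed, so I cannot simply invoke the explicit decomposition of Equation \ref{eqn:imp-pruning}, which holds only for $K=1$.

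First I would argue that it suffices to exhibit, with strictly positive probability over the random tree construction, a single branch in which $X_m$ is used to split a node $t$ whose conditioning context $B_t$ (the set of features already tested on the path from the root to $t$) yields a strictly positive impurity decrease $\Delta i(s_t^*, t) > 0$. Because the MDI importance (Equation \ref{eqn:mdi-mdi-empirical}) averages over an infinite ensemble of non-negative weighted impurity decreases, a single configuration occurring with positive probability and contributing a positive term forces $Imp_{\infty,\infty}^{K,q}(X_m) > 0$; no cancellation is possible since all terms are non-negative (as noted, $\Delta i(s_t^*,t) \ge 0$ always).

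Next I would construct such a branch. Since $r \le q$, a tree developed to depth $q$ has room to test all $r$ relevant variables along a branch. The strategy is to show that there is a positive-probability event in which the path from the root tests exactly the relevant variables $V_R^{-m}$ (or a superset within the $q$ available levels) before reaching a node $t$ where $X_m$ becomes a candidate among the $K$ drawn features. Conditioned on reaching such a node $t$ with $B_t \supseteq V_R^{-m} \setminus \{\text{those equivalent to } X_m\}$, the local impurity decrease is an estimate of $I(X_m;Y|B_t)$, and by strong relevance combined with the degree characterisation, this conditional mutual information is strictly positive for the appropriate $B_t$. The delicate point is to verify that the masking effect cannot suppress $X_m$ at \emph{every} such node: since $X_m$ is strongly relevant, at a node where the conditioning already contains all other relevant variables, $X_m$ is the unique feature providing any further impurity reduction, so whenever it is drawn among the $K$ candidates it \emph{must} be selected (all competitors give zero decrease). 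This removes the masking worry at precisely the critical node.

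The hard part will be handling the interplay between the depth bound $q$ and the randomised selection of $K$ candidates so as to guarantee the critical node is reached with positive probability. I would make this rigorous by a probabilistic counting argument: the probability that a totally-randomised-like ordering places the relevant conditioning variables first and draws $X_m$ at the right level is bounded below by a strictly positive constant depending only on $p$, $K$, and $q$, independent of the sample distribution. I would then invoke Proposition \ref{th:mdi:K:stronglypruned} as the special case $q=p$ to check consistency, and note that the present theorem is precisely its generalisation to pruned trees under the hypothesis $r \le q$, the latter guaranteeing that the necessary conditioning of size at most $r-1 \le q-1$ fits within the allowed depth. Formally, this completes the contrapositive-free direct argument that strong relevance implies strictly positive importance.
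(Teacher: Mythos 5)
There is a genuine gap in your argument, and it lies exactly where you flag the ``hard part''. Your plan is to reach, with strictly positive probability, a node whose path has tested all other relevant variables (in a compatible value assignment) before $X_m$, and you propose to secure this by a ``probabilistic counting argument'' giving a lower bound on the probability of the desired variable ordering that depends only on $p$, $K$ and $q$, \emph{independent of the sample distribution}. This claim is false for $K>1$: the split selection is impurity-greedy among the $K$ drawn candidates, so the ordering of relevant variables along a branch is distribution-dependent (masking), and in the extreme case $K=p$ the variable tested at each node is fully determined by the impurity decreases up to tie-breaking, so no positive-probability ordering argument applies. You do correctly dispose of masking \emph{at} the critical node (where all competitors are irrelevant and score zero, so $X_m$ wins whenever it scores positively), but masking \emph{above} the critical node is the real obstruction: a higher-scoring relevant variable can always preempt the ordering you need, and worse, $X_m$ itself can be consumed prematurely with a zero score via tie-breaking (each variable is used at most once per branch in multiway trees), after which it contributes nothing in that tree. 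A secondary imprecision: you state that once the conditioning contains all other relevant variables, $X_m$ gives a positive decrease; by strong relevance and Proposition~\ref{prop:mdi-only-relevant} this holds only for at least one specific value assignment $v^{-m,*}$, so the argument must be restricted to paths compatible with that assignment, not merely to paths testing the right \emph{set} of variables.

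The paper's proof closes this gap with a different device that your sketch lacks. It first restricts attention to the subfamily $\tau_R$ of trees in which relevant variables are exhausted before any irrelevant one is tested — such trees occur with positive probability for every $K$, because irrelevant candidates always score zero and ties are broken at random; since $r\leq q$, the depth bound never truncates this phase. It then argues by contradiction with a recursive exchange: assume $X_m$ scores zero on every compatible path; whenever $X_m$ is split prematurely it was selected with a zero score, hence only through tie randomisation or through the randomness of the $K$-subset, so the infinite ensemble also contains a tree identical above that node in which another relevant variable is used instead, pushing $X_m$ one level deeper along the compatible path. Iterating, some tree in $\tau_R$ tests $X_m$ last among the relevant variables on a compatible path, where strong relevance forces $I(X_m;Y|V_R^{-m}=v_R^{-m,*})>0$, contradicting the assumption. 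This exchange argument is precisely what replaces your invalid distribution-free ordering bound; without it (or an equivalent mechanism), your construction does not go through for $K>1$.
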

\begin{proof}
  
	By definition, $Imp_{\infty,\infty}^{K,q}(X_m)>0$ means that there is
	at least one tree (grown with parameters $q$ and $K$) in which $X_m$
	receives a strictly positive score for its split, i.e. such that $Y$ depends on
	$X_m$ conditionally to the variable assignment defined by the path from the root
	node to the node where $X_m$ is used to split. Let us show that one such tree always
	exists whatever $K$ when $X_m$ is strongly relevant and $r\leq q$.

        Within the infinite ensemble, let us consider only the trees such that irrelevant variables
	are tested in each branch only when all relevant variables (including $X_m$) are
	exhausted. These trees are always explored whatever the value of $K$. This
	derives from the fact that a relevant variable can always be picked with non
	zero probability at any tree node, except if all relevant variables have been
	tested above that node. Indeed, except in this latter case, the $K$ tested
	variables can always include at least one relevant variable. If some relevant
	variable gets a non zero score, one relevant variable will be automatically
	used to split since irrelevant variables can only get zero scores. Even when
	all tested relevant variables get a zero score, one of them can still be
	selected instead of an irrelevant one given that ties are resolved by
	randomisation.
        
	Let us denote by $\tau_R$ the set of trees as just defined and let us show that
	$X_m$ gets a non zero score in at least one tree in $\tau_R$.
	
	By definition of relevance and proposition \ref{prop:mdi-only-relevant}, $X_m$ strongly relevant implies that there exists
	at least one assignment of values to all relevant variables but $X_m$ such that
	conditionally to this assignment, $Y$ is dependent on $X_m$. In each tree in
	$\tau_R$, there is a path from the root node to a node where $X_m$ is used to
	split that is compatible with this assignment. Let us assume that $X_m$ always
	gets a zero score in all these compatible paths and show that this leads to a
	contradiction.
	
	If all relevant variables are tested above $X_m$ in a compatible path
        then $X_m$ should receive a non zero score at its node, which would
        contradict our hypothesis. Thus, $X_m$ can only be tested in a
        compatible path before all relevant variables have been tested. Given
        our hypothesis that $X_m$ only gets zero scores, if $X_m$ is used to
        split in one compatible path, then there exists another tree in
        $\tau_R$ with the same splits above $X_m$ in the compatible path and
        with the split on $X_m$ replaced by a split on another relevant
        variables (because of tie randomization or because of the randomisation
        due to the use of a $K<p$). In this new tree, $X_m$ is thus used to
        split at least one level below in the compatible path. Applying this
        argument recursively, one can thus show that there is at least one tree
        in $\tau_R$ where $X_m$ is the last variable used to split in the
        compatible path. In this tree, $X_m$ thus gets a non zero score, which
        contradicts the hypothesis and therefore concludes the theorem.
\end{proof}

There is thus no masking effect possible for the strongly relevant features
when $K>1$ as soon as the number of relevant features is lower than $q$.

When $q<r$, we do not have the guarantee any more to explore all minimal conditionings required to find all (strongly or not) relevant variables, whatever the values of $K$. We nevertheless still have the guarantee to find all (strongly) relevant variables of degree lower than $q$ (proofs are straightforward from proofs of Proposition \ref{th:mdi-relpruned} and Theorem \ref{th:mdi-stronglyrelpruned}):

\begin{proposition} \label{mdi:pop:deg:k1:relevantimp}\begin{eqnarray} 
	& X_m \in V\mbox{ relevant with respect to $Y$ and }  deg(X_m)<q \Rightarrow Imp^{1,q}_{\infty,\infty}(X_m) >0. \nonumber\end{eqnarray}
\end{proposition}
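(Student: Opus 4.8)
The statement to prove is Proposition \ref{mdi:pop:deg:k1:relevantimp}:
\begin{eqnarray}
X_m \in V \text{ relevant with respect to } Y \text{ and } deg(X_m) < q \Rightarrow Imp^{1,q}_{\infty,\infty}(X_m) > 0. \nonumber
\end{eqnarray}
The setting is totally randomised trees ($K=1$), pruned to maximal depth $q < p$, with multiway exhaustive splits, in asymptotic conditions (infinite sample size and infinite ensemble size).

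The plan is to exploit the explicit formula for the importance of a feature computed with pruned totally randomised trees, namely Equation \ref{eqn:imp-pruning} from Proposition \ref{proposition:pruning}:
\begin{equation}
Imp_{\infty,\infty}^{1,q}(X_m)=\sum_{k=0}^{q-1} \frac{1}{C_p^k} \frac{1}{p-k} \sum_{B \in {\cal P}_k(V^{-m})} I(X_m;Y|B). \nonumber
\end{equation}
All the weights $\frac{1}{C_p^k}\frac{1}{p-k}$ are strictly positive, and every conditional mutual information term $I(X_m;Y|B)$ is non-negative. Hence the whole sum is strictly positive as soon as at least one term $I(X_m;Y|B)$ with $|B| = k \le q-1$ is strictly positive. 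So the entire argument reduces to producing a single conditioning set $B$ with $|B| < q$ for which $I(X_m;Y|B) > 0$.

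First I would invoke the definition of the degree of a relevant variable (Definition \ref{def:mdi:degree}): $deg(X_m)$ is the minimal size of a subset $B \subseteq V$ such that $Y \nindep X_m | B$, which, by the equivalence between conditional dependence and strictly positive conditional mutual information (Definitions \ref{def:relevanceMI} and \ref{def:strongweakrelevanceMI}), means the minimal size of a $B$ with $I(X_m;Y|B) > 0$. By hypothesis $deg(X_m) < q$, so there exists a subset $B^*$ of size $|B^*| = deg(X_m) \le q-1$ with $I(X_m;Y|B^*) > 0$. (One should note that $B^* \subseteq V^{-m}$: it cannot contain $X_m$ itself, since conditioning on $X_m$ trivially makes $Y$ depend on it and would not reflect genuine relevance — more carefully, the degree is defined over subsets not containing $X_m$, consistent with Definition \ref{def:relevanceMI}, so $B^* \in {\cal P}_{k}(V^{-m})$ for $k = deg(X_m)$.) This $B^*$ is exactly one of the terms appearing in the inner sum of Equation \ref{eqn:imp-pruning} at interaction degree $k = deg(X_m)$, which is included in the outer sum precisely because $deg(X_m) \le q-1$.

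The concluding step is then immediate: the term $\frac{1}{C_p^{k}}\frac{1}{p-k} I(X_m;Y|B^*)$ is strictly positive, while every other term in Equation \ref{eqn:imp-pruning} is non-negative, so $Imp^{1,q}_{\infty,\infty}(X_m) > 0$. I do not anticipate a serious obstacle here, since the hard analytic work is already packaged in Proposition \ref{proposition:pruning}; the proof is essentially a bookkeeping argument checking that the degree hypothesis guarantees the relevant conditioning set lies within the truncated range of the sum. The only subtlety worth stating explicitly is that the truncation to depth $q$ keeps exactly the terms with $k \le q-1$, so the condition $deg(X_m) < q$ (equivalently $deg(X_m) \le q-1$) is precisely what is needed for the witnessing term to survive. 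This mirrors the structure of the proof of Proposition \ref{th:mdi-relpruned}, of which the present statement is a per-variable refinement.
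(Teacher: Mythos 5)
Your proof is correct and follows exactly the argument the paper intends: the paper states that this proposition follows straightforwardly from the proof of Proposition \ref{th:mdi-relpruned}, which is precisely your argument with the bound $r$ replaced by $deg(X_m)$ --- namely, plug the degree hypothesis into Equation \ref{eqn:imp-pruning} to exhibit a witnessing term $I(X_m;Y|B^*)>0$ with $|B^*|=deg(X_m)\leq q-1$ surviving the truncation, while all weights are positive and all other terms non-negative. Your explicit remark that the witnessing set $B^*$ must lie in $V^{-m}$ (since conditioning on $X_m$ itself forces $I(X_m;Y|B)=0$ by self-conditioning) is a correct and worthwhile clarification that the paper leaves implicit.
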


\begin{proposition}\label{mdi:pop:deg:stronglyrelevantnotmasked} \begin{eqnarray} \forall K: \quad  X_m \in V\mbox{ strongly relevant with respect to $Y$ and } deg(X_m)<q \Rightarrow Imp^{K,q}_{\infty,\infty} >0. \nonumber \end{eqnarray}
\end{proposition}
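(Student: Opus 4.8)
The final statement, Proposition \ref{mdi:pop:deg:stronglyrelevantnotmasked}, claims that for any $K$, if $X_m$ is strongly relevant with respect to $Y$ and $deg(X_m) < q$, then $Imp^{K,q}_{\infty,\infty}(X_m) > 0$. The proof plan is to adapt the argument of Theorem \ref{th:mdi-stronglyrelpruned} verbatim, replacing the global assumption $r \leq q$ (which guaranteed that \emph{all} relevant variables fit within a branch of depth $q$) by the weaker, feature-specific assumption $deg(X_m) < q$. The key insight is that Theorem \ref{th:mdi-stronglyrelpruned} used $r \leq q$ only to ensure that a minimal conditioning making $X_m$ dependent on $Y$ could be realised along a path of length at most $q$; but by Definition \ref{def:mdi:degree}, such a minimal conditioning for $X_m$ has size exactly $deg(X_m)$, so the requirement we actually need is just $deg(X_m) < q$, which is strictly what is assumed.

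First I would recall, exactly as in Theorem \ref{th:mdi-stronglyrelpruned}, that $Imp^{K,q}_{\infty,\infty}(X_m) > 0$ holds as soon as there exists at least one tree in the infinite ensemble (grown with parameters $q$ and $K$) in which $X_m$ is used to split at some node and receives a strictly positive score there, i.e.\ a node where $Y$ depends on $X_m$ conditionally on the variable assignment defined by the root-to-node path. So the entire task reduces to exhibiting one such tree. Since $X_m$ is strongly relevant, we have $Y \nindep X_m \mid V^{-m}$, and by the degree characterisation there exists a minimal subset $B$ with $|B| = deg(X_m) < q$ and an assignment of values to the variables in $B$ such that, conditionally on that assignment, $Y$ depends on $X_m$.

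Next I would reuse the construction of the subfamily $\tau_R$ of trees in which irrelevant variables are tested along a branch only once all relevant variables are exhausted; the argument that these trees are explored with non-zero probability for every $K$ is unchanged, since it relies only on the fact that a relevant variable can always be picked at any node where not all relevant variables have yet been tested (tie-breaking by randomisation handles the case of all-zero scores). The crucial adaptation is that, because $|B| < q$, a path realising the assignment on $B$ and then splitting on $X_m$ has length at most $deg(X_m) + 1 \leq q$, so it fits within the depth-$q$ budget and such a compatible path genuinely exists in the pruned trees. I would then run the same recursive ``push $X_m$ down the compatible path'' argument: assuming $X_m$ always gets a zero score along every compatible path leads, via tie/$K$-randomisation, to a tree in $\tau_R$ where $X_m$ is the last variable split in the compatible path, at which node it must receive a non-zero score, contradicting the hypothesis. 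The main obstacle to watch is ensuring the compatible path stays within depth $q$ at every stage of the recursion; this is precisely where $deg(X_m) < q$ (rather than $r \leq q$) is used, and since the minimal conditioning $B$ has size $< q$ by definition of degree, the budget is never exceeded. The statement then follows; the excerpt's own remark that the proof is ``straightforward from'' Theorem \ref{th:mdi-stronglyrelpruned} confirms that this substitution is the entire content of the argument.
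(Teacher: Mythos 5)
There is a genuine gap here, and it is not the one you flag. Re-read the terminal step of the proof of Theorem \ref{th:mdi-stronglyrelpruned}: the contradiction is reached in a tree where $X_m$ is the \emph{last relevant variable} split along the compatible path, so that the conditioning at its node consists of \emph{all} other relevant variables. Positivity of the score there follows from strong relevance precisely because strong relevance is a statement about the full conditioning $V^{-m}$, and is therefore insensitive to \emph{which} variables the push-down recursion interposes above $X_m$ and with which values. That is what $r\leq q$ buys, and it is exactly what $deg(X_m)<q$ does not buy. The degree hypothesis only controls one specific conditioning $B=b^*$ with $|B|=deg(X_m)$; but with $K>1$ the split selection is an argmax over the $K$ drawn candidates, so variables outside $B$ with strictly positive scores can be forcibly interposed above $X_m$ (masking), and conditional dependence is not monotone under enlarging the conditioning set: $Y\nindep X_m|B=b^*$ gives no guarantee that $Y\nindep X_m|B=b^*,C=c$ for the supersets $B\cup C$ that the algorithm actually realizes. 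When $r>q$ the recursion therefore terminates at a node whose conditioning (of size up to $q-1$) is neither $B$ nor the full relevant set, and nothing in the hypotheses makes $X_m$'s score positive there. Your remark that ``the budget is never exceeded'' addresses the wrong obstacle: fitting $B$ into the depth budget is easy; forcing the ensemble to test $X_m$ beneath (an extension of) exactly $B=b^*$ is what fails.

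The gap is not cosmetic: the verbatim adaptation actually breaks on concrete distributions when $K=p$. Take $X'_m$ uniform binary, $X_m=X'_m\oplus N$ with independent noise $N$ of crossover $\epsilon\in(0,1/2)$, and $Y=(X'_m,\, X_m\oplus X_{w_1}\oplus\cdots\oplus X_{w_q})$ with i.i.d.\ uniform partners $X_{w_i}$ independent of the rest. Then $deg(X_m)=0<q$ and $X_m$ is strongly relevant (given $V^{-m}$ the parity coordinate reveals it), yet in any tree pruned at depth $q$ every node conditions on at most $q-1$ variables, so at least one partner is free and the parity coordinate is pure noise at every node; consequently $I(X_m;Y|t)>0$ forces $X'_m$ to be untested at $t$, where the data-processing inequality gives $I(X_m;Y|t)\leq 1-H(\epsilon)<1=I(X'_m;Y|t)$ strictly. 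With $K=p$, $X_m$ is thus never selected at a node where its impurity decrease is positive, and $Imp^{p,q}_{\infty,\infty}(X_m)=0$ (note $r=q+2>q$, so Theorem \ref{th:mdi-stronglyrelpruned} is not contradicted, and for $K=1$ one recovers Proposition \ref{mdi:pop:deg:k1:relevantimp}, where your argument is fine). So a correct proof of the proposition as stated cannot be a pure substitution of $deg(X_m)<q$ for $r\leq q$: it would have to restrict $K$, or invoke an additional distributional assumption that rules out such high-order mechanisms (e.g., the composition property, as in the single-tree theorem of Section \ref{sec:mdi-finite-trees}, which excludes exactly the parity construction above). The paper itself gives no proof — it only declares the adaptation straightforward — and identifying and resolving this termination issue is precisely what a complete argument would have to do.
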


\section{Binary trees}
\label{sec:mdi-binary}

\subsection*{Setting of this section: $|s_t|=2, N_T \rightarrow\infty, N\rightarrow\infty$}

The last simplifying assumption on tree model is the number of nodes created when splitting a node. 
So far, we considered multiway trees (i.e., with exhaustive splits) where one branch was created for each value of the split variable. 
This way of growing trees allows to consider a variable in a branch only once and limits the maximal depth of a tree to the number of features. It also implies that once a variable is used for splitting in a node, all subsequent nodes have access to all the information (about the target) held by this variable. 

However, binary trees are most often used in practice. Instead of creating a branch per value, only two branches are created regardless of the cardinality of the split variable. 
The splitting rule is from now on of the form of a boolean condition (e.g., "\textit{less than a given threshold value}" or not, "\textit{in a subset of values}" or not) where samples verifying this condition go in one branch while the others necessarily go in the other branch.
As a consequence, a variable can now be used several times in a given tree branch, since a variable potentially only partially delivers its information at each split. There are also now several ways to split a node on the basis of a categorical variable of cardinality greater than two. When growing a tree, a binary split can be determined for such variable either by identifying among a set of predefined candidate binary splits the one that maximizes the impurity reduction (as in the standard Random Forests method) or by picking one binary split at random among these candidates (as in the Extra-Trees method).

As a consequence of these changes, one can not expect that Theorem~\ref{thm:mdi-totally-imp} and formula \ref{eqn:mdi-totally-imp} that were derived in the case of multiway trees will remain valid in the case of binary trees (except, trivially, if all variables are binary). And indeed, Example \ref{ex:mdi:bin}, taken from \cite{louppe2014understanding}, shows that importances computed from binary trees can be different from importances computed from multiway trees.

\begin{example}\label{ex:mdi:bin}
We present here the example as it is given in \citep{louppe2014understanding} and we refer the reader to the original source for more details on the exact computation of importance scores. Let us consider two ordered input variables of different cardinalities: $X_1$ is a ternary variable (i.e., its cardinality is $3$) and $X_2$ is a binary variable. The output variable $Y$ is defined as $Y=X_1<1$ and as a copy of $X_2$. The possible combinations of values are given in Table \ref{tab:mdi:bin:ex}.
	
	\begin{table}[h!]
		\centering
		\begin{tabular}{cc|c}
			$X_1$&$X_2$&$Y$\\ \hline
			$0$ & $0$ & $0$ \\
			$1$ & $1$ & $1$ \\
			$2$ & $1$ & $1$
		\end{tabular}
	\caption{Possible combinations of values for $X_1$,$X_2$ and $Y$.}
	\label{tab:mdi:bin:ex}
	\end{table}
	Only two totally randomised trees with multiway splits can be built from this setting as a single node split is sufficient to exhaust a variable of any cardinality (either $X_1$ or $X_2$) and to fully determine the output value. Importances derived from such trees (in asymptotic conditions) are as follows: 
	\begin{eqnarray*}
		Imp(X_1) &=& \dfrac{1}{2} I(X_1;Y) = \dfrac{1}{2} H(Y) = 0.459;\\
		Imp(X_2) &=& \dfrac{1}{2} I(X_2;Y) = \dfrac{1}{2} H(Y) = 0.459.
	\end{eqnarray*}
        Despite different trees, features are used in exactly half of the trees with the same usefulness and thus their importances are logically identical. Note that since both features perfectly explain the output, their importances do not depend on $K$.

        On the other hand, a binary split can not exhaust $X_1$ all at once. Using ordered binary splits, four possible decision trees can now be constructed. Assuming that the Extra-Trees split randomization is used and that $K$ is set to 1, the importances of $X_1$ and $X_2$ are respectively (in asymptotic conditions):
	\begin{eqnarray*}
		Imp(X_1) &=& \dfrac{1}{4} I(X_1\le 1;Y) + \dfrac{1}{8}P(X_1\le 1) I(X_1\le 0;Y|X_1\le 1) + \dfrac{1}{4}I(X_1\le0;Y)\\
		&=& 0.375,\\
		Imp(X_2) &=& \dfrac{1}{2}I(X_2;Y) + \dfrac{1}{8}P(X_1\le 1) I(X_2;Y|X_1\le 1)\\
		&=& 0.541,
	\end{eqnarray*}
which are strictly different from the importance scores derived from multiway splits. 
\end{example}

In this section, our aim is to revisit some of our previous results in the context of binary trees. In Section~\ref{sec:mdi:bin:splits}, we discuss different ways to generate binary splits for unordered and ordered categorical variables, focusing only on sets of candidate binary splits that are totally redundant with the original variable.
Example \ref{ex:mdi:bin} shows that importance scores computed with binary trees can be different from those computed with multiway trees. In Section \ref{sec:mdi:bin:relevance}, we show that the links between variable relevances and variable importances that were highlighted in Sections \ref{sec:mdi-rel} and \ref{sec:mdi-K} are preserved despite this difference. In Section \ref{sec:mdi:bin:experiments}, we illustrate further how binary splits influence variable importance scores on Breiman's digit recognition problem. 

\subsection{Binary splits}\label{sec:mdi:bin:splits}

As defined in Section \ref{sec:splittingrules}, binary splits may or may not take into account the value logic, i.e., a potential ordering between the values. An unordered split simply divides all the values into two disjoint sets, while an ordered split creates two partitions consisting of all the values that are respectively either lower or equal, or greater than a given threshold.

In the case of binary variables, both ways of splitting are strictly equivalent. In the case of variables of higher cardinality, they lead to different numbers of candidate splits. For example, there are only two possible ways of splitting a ternary variable of values $\{1,2,3\}$ while preserving the order (i.e., $(\{1\},\{2,3\})$ and $(\{1,2\},\{3\})$). By contrast, there are three possible ways of making two disjoint sets of values if the order is not taken into account (the split $(\{1,3\}$,$\{2\})$ being the additional binary partition that does not preserve the order). In general, a categorical variable of cardinality $m$ will lead to $2^{m-1}-1$ candidate unordered binary splits and to $m-1$ candidate ordered binary splits.

In addition to these two kinds of binary splits, let us also mention a third one based on the principle of "one value vs. all", where each binary split isolates one value of the variable in one branch and all the others in the other branch. In the case of a ternary variable, it provides the same candidate splits as the unordered binary splits (i.e., $(\{1\},\{2,3\})$, $(\{2\},\{1,3\})$, $(\{3\},\{1,2\})$) but for variables of higher cardinalities, less splits are considered than in the unordered case (see e.g., Figures \ref{fig:mdi:bin-oh:a} and \ref{fig:mdi:bin-unord:a}). For a variable of cardinality $m$, it leads to $m$ candidate binary splits.

All three ways of defining binary splits actually replace a categorical variable $X_m$ by a set of new binary variables, each one corresponding to a candidate binary split defined on $X_m$. Let us denote by $T_m=\{T_{m,1},\ldots,T_{m,|T_m|}\}$ the set of binary variables of size $|T_m|$ defined by one of these three families of binary splits. Figures \ref{fig:mdi:bin-ord:b} ,\ref{fig:mdi:bin-oh:b} and \ref{fig:mdi:bin-unord:b} illustrate the three sets of binary variables corresponding to the different ways of defining binary splits described above, and Figures \ref{fig:mdi:bin-ord:a}, \ref{fig:mdi:bin-oh:a} and \ref{fig:mdi:bin-unord:a} illustrate all possible splits, in the case of a quaternary variable $X_m$.

In all three cases, it is easy to show that $T_m$ and $X_m$ are totally redundant with respect to the target $Y$, i.e., mathematically (see Definition \ref{def:totalred}):
\begin{eqnarray}
\forall B\subseteq V^{-m},\quad X_m \indep Y | B\cup T_m \quad and \quad T_m \indep Y|B\cup\{X_m\}.
\end{eqnarray}
Thus, collectively, variables in $T_m$ convey the exact same information about the output as the original variable $X_m$ from which they are derived. There is thus no loss in information when replacing multiway splits with binary splits in all three cases. Note that in the case of unordered and one-value-vs-all splits, there are redundancy in $T_m$ in the sense that some variables can be removed from $T_m$ without impacting its total redundancy with $X_m$.

  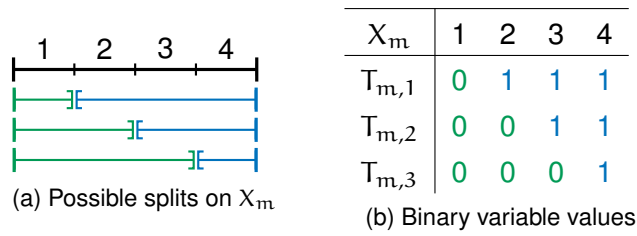
\begin{figure}[htbp]
  	\centering
	\subfloat[Possible splits on $X_m$\label{fig:mdi:bin-ord:a}]{

		\adjustbox{valign=c}{\begin{tikzpicture}[y=1cm, x=1cm,thick]
			\begin{scope}[scale=0.8]
			\draw[line width=1.2pt, -, ](0,0) -- (4,0) node[right] {}; 
			
			\foreach \x in {1,...,4} \draw (\x,0.1) -- (\x,-0.1) node[below] {}; 
			
			\foreach \x in {1,...,4} \node[above] at (\x-0.5,0) {\x}; 
			\foreach \x in {0,4} \draw[line width=1.2pt] (\x,0.2) -- (\x,-0.2); 
			\foreach \x in {0,1,2} \draw[ForestGreen,line width=1.2pt] (0.01,0.2-0.5-\x/2) -- (0.01,-0.2-0.5-\x/2); 
			\foreach \x in {0,1,2} \draw[RoyalBlue,line width=1.2pt] (3.99,0.2-0.5-\x/2) -- (3.99,-0.2-0.5-\x/2); 
			
			\foreach \x in {0} \draw[ForestGreen, [-|,] (\x+1-0.02,-0.5-\x/2) -- (0,-0.5-\x/2)  {};
			\foreach \x in {0} \draw[RoyalBlue, [-|,] (\x+1+0.02,-0.5-\x/2) -- (4,-0.5-\x/2)  {};
			
			\foreach \x in {1} \draw[ForestGreen, [-|,] (\x+1-0.02,-0.5-\x/2) -- (0,-0.5-\x/2)  {};
			\foreach \x in {1} \draw[RoyalBlue, [-|,] (\x+1+0.02,-0.5-\x/2) -- (4,-0.5-\x/2)  {};
			
			\foreach \x in {2} \draw[ForestGreen, [-|,] (\x+1-0.02,-0.5-\x/2) -- (0,-0.5-\x/2)  {};
			\foreach \x in {2} \draw[RoyalBlue, [-|,] (\x+1+0.02,-0.5-\x/2) -- (4,-0.5-\x/2)  {};

			\end{scope}
			\end{tikzpicture}}

	}
	\qquad
	\subfloat[Binary variable values\label{fig:mdi:bin-ord:b}]{\begin{tabular}{c | c c c c} \hline
	$X_m$ & 1 & 2 & 3& 4\\ \hline
	$T_{m,1}$ & \textcolor{ForestGreen}{0} & \textcolor{RoyalBlue}{1} & \textcolor{RoyalBlue}{1} & \textcolor{RoyalBlue}{1} \\
	$T_{m,2}$ & \textcolor{ForestGreen}{0} & \textcolor{ForestGreen}{0} & \textcolor{RoyalBlue}{1} & \textcolor{RoyalBlue}{1} \\
	$T_{m,3}$ & \textcolor{ForestGreen}{0} & \textcolor{ForestGreen}{0} & \textcolor{ForestGreen}{0} & \textcolor{RoyalBlue}{1} \\
	\end{tabular}\hspace{1em}}
	\caption{Set $T_{num}$ of binary variables corresponding to possible ordered splits, i.e. between two successive values of $X_m$. Each colour is associated to one of the two branches leaving the node after the spit. For instance, intervals of values in green correspond to the left branch whereas intervals in blue correspond to the right one.}
	\label{fig:mdi:bin-ord}
\end{figure}

  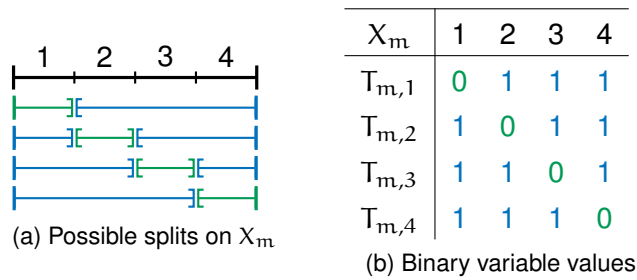
\begin{figure}[htbp]
	\centering
	\subfloat[Possible splits on $X_m$\label{fig:mdi:bin-oh:a}]{

		\adjustbox{valign=c}{\begin{tikzpicture}[y=1cm, x=1cm,thick]
			\begin{scope}[scale=0.8]
			\draw[line width=1.2pt, -, ](0,0) -- (4,0) node[right] {}; 
			
			\foreach \x in {1,...,4} \draw (\x,0.1) -- (\x,-0.1) node[below] {}; 
			
			\foreach \x in {1,...,4} \node[above] at (\x-0.5,0) {\x}; 
			\foreach \x in {0,4} \draw[line width=1.2pt] (\x,0.2) -- (\x,-0.2); 
	
			\foreach \x in {0} \draw[ForestGreen,line width=1.2pt] (0.01,0.2-0.5-\x/2) -- (0.01,-0.2-0.5-\x/2); 
			\foreach \x in {1,2,3} \draw[RoyalBlue,line width=1.2pt] (0.01,0.2-0.5-\x/2) -- (0.01,-0.2-0.5-\x/2); 
			\foreach \x in {0,1,2} \draw[RoyalBlue,line width=1.2pt] (3.99,0.2-0.5-\x/2) -- (3.99,-0.2-0.5-\x/2); 
			\foreach \x in {3} \draw[ForestGreen,line width=1.2pt] (3.99,0.2-0.5-\x/2) -- (3.99,-0.2-0.5-\x/2); 

			\foreach \x in {0} \draw[ForestGreen, [-|,] (\x+1-0.02,-0.5-\x/2) -- (0,-0.5-\x/2)  {};
			\foreach \x in {0} \draw[RoyalBlue, [-|,] (\x+1+0.02,-0.5-\x/2) -- (4,-0.5-\x/2)  {};
			
			\foreach \x in {1} \draw[ForestGreen, {[}-{]},] (\x+1-0.02,-0.5-\x/2) -- (\x+0.02,-0.5-\x/2)  {};
			\foreach \x in {1} \draw[RoyalBlue, [-|,] (\x+1+0.02,-0.5-\x/2) -- (4,-0.5-\x/2)  {};
			\foreach \x in {1} \draw[RoyalBlue, [-|,] (\x-0.02,-0.5-\x/2) -- (0,-0.5-\x/2)  {};

			\foreach \x in {2} \draw[ForestGreen, {[}-{]},] (\x+1-0.02,-0.5-\x/2) -- (\x+0.02,-0.5-\x/2)  {};
			\foreach \x in {2} \draw[RoyalBlue, [-|,] (\x+1+0.02,-0.5-\x/2) -- (4,-0.5-\x/2)  {};
			\foreach \x in {2} \draw[RoyalBlue, [-|,] (\x-0.02,-0.5-\x/2) -- (0,-0.5-\x/2)  {};

			\foreach \x in {3} \draw[ForestGreen, |-{]},] (\x+1,-0.5-\x/2) -- (\x+0.02,-0.5-\x/2)  {};
			\foreach \x in {3} \draw[RoyalBlue, [-|,] (\x-0.02,-0.5-\x/2) -- (0,-0.5-\x/2)  {};

			\end{scope}
			\end{tikzpicture}}
	}
	\qquad
	\subfloat[Binary variable values\label{fig:mdi:bin-oh:b}]{\begin{tabular}{c | c c c c} \hline
			$X_m$ & 1 & 2 & 3& 4\\ \hline
			$T_{m,1}$ & \textcolor{ForestGreen}{0} & \textcolor{RoyalBlue}{1} & \textcolor{RoyalBlue}{1} & \textcolor{RoyalBlue}{1} \\
			$T_{m,2}$ & \textcolor{RoyalBlue}{1} & \textcolor{ForestGreen}{0} & \textcolor{RoyalBlue}{1} & \textcolor{RoyalBlue}{1} \\
			$T_{m,3}$ & \textcolor{RoyalBlue}{1} & \textcolor{RoyalBlue}{1} & \textcolor{ForestGreen}{0} & \textcolor{RoyalBlue}{1} \\
			$T_{m,4}$ & \textcolor{RoyalBlue}{1} & \textcolor{RoyalBlue}{1} & \textcolor{RoyalBlue}{1} & \textcolor{ForestGreen}{0} \\
			
	\end{tabular}\hspace{1em}}
	\caption{Set $T_{oh}$ of binary variables corresponding to possible unordered "one value vs. all" splits, i.e. one-hot encoding of values of $X_m$. Each colour is associated to one of the two branches leaving the node after the spit. For instance, intervals of values in green correspond to the left branch whereas intervals in blue correspond to the right one.}
	\label{fig:mdi:bin-oh}
\end{figure}

  \begin{figure}[htbp]
	\centering
	\subfloat[Possible splits on $X_m$\label{fig:mdi:bin-unord:a}]{

		\adjustbox{valign=c}{\begin{tikzpicture}[y=1cm, x=1cm,thick]
			\begin{scope}[scale=0.8]
			\draw[line width=1.2pt, -, ](0,0) -- (4,0) node[right] {}; 
			
			\foreach \x in {1,...,4} \draw (\x,0.1) -- (\x,-0.1) node[below] {}; 
			
			\foreach \x in {1,...,4} \node[above] at (\x-0.5,0) {\x}; 
			\foreach \x in {0,4} \draw[line width=1.2pt] (\x,0.2) -- (\x,-0.2); 
			\foreach \x in {0,1,2} \draw[ForestGreen,line width=1.2pt] (0.01,0.2-0.5-\x/2) -- (0.01,-0.2-0.5-\x/2); 
			\foreach \x in {0,1,2} \draw[RoyalBlue,line width=1.2pt] (3.99,0.2-0.5-\x/2) -- (3.99,-0.2-0.5-\x/2); 
			
			\foreach \x in {0} \draw[RoyalBlue, |-{]},]   (0,-0.5-\x/2) -- (1-0.02,-0.5-\x/2)  {};
			\foreach \x in {0} \draw[ForestGreen, [-|,] 		  (1+0.02,-0.5-\x/2) -- (4,-0.5-\x/2)  {};
			
			\foreach \x in {1} \draw[ForestGreen, |-{]},]   (0,-0.5-\x/2) -- (1-0.02,-0.5-\x/2)  {};
			\foreach \x in {1} \draw[RoyalBlue, {[}-{]},]   (1+0.02,-0.5-\x/2) -- (2-0.02,-0.5-\x/2)  {};
			\foreach \x in {1} \draw[ForestGreen, {[}-|,]   (2+0.02,-0.5-\x/2) -- (4,-0.5-\x/2)  {};

			\foreach \x in {2} \draw[RoyalBlue, |-{]},]   (0,-0.5-\x/2) -- (2-0.02,-0.5-\x/2)  {};
			\foreach \x in {2} \draw[ForestGreen, {[}-|,]   (2+0.02,-0.5-\x/2) -- (4,-0.5-\x/2)  {};
			
			\foreach \x in {3} \draw[ForestGreen, |-{]},]   (0,-0.5-\x/2) -- (2-0.02,-0.5-\x/2)  {};
			\foreach \x in {3} \draw[RoyalBlue, {[}-{]},]   (2+0.02,-0.5-\x/2) -- (3-0.02,-0.5-\x/2)  {};
			\foreach \x in {3} \draw[ForestGreen, {[}-|,]   (3+0.02,-0.5-\x/2) -- (4,-0.5-\x/2)  {};

			\foreach \x in {4} \draw[RoyalBlue, |-{]},]   (0,-0.5-\x/2) -- (1-0.02,-0.5-\x/2)  {};
			\foreach \x in {4} \draw[ForestGreen, {[}-{]},]   (1+0.02,-0.5-\x/2) -- (2-0.02,-0.5-\x/2)  {};
			\foreach \x in {4} \draw[RoyalBlue, {[}-{]},]   (2+0.02,-0.5-\x/2) -- (3-0.02,-0.5-\x/2)  {};
			\foreach \x in {4} \draw[ForestGreen, {[}-|,]   (3+0.02,-0.5-\x/2) -- (4,-0.5-\x/2)  {};

			\foreach \x in {5} \draw[ForestGreen, |-{]},]   (0,-0.5-\x/2) -- (1-0.02,-0.5-\x/2)  {};
			\foreach \x in {5} \draw[RoyalBlue, {[}-{]},]   (1+0.02,-0.5-\x/2) -- (3-0.02,-0.5-\x/2)  {};
			\foreach \x in {5} \draw[ForestGreen, {[}-|,]   (3+0.02,-0.5-\x/2) -- (4,-0.5-\x/2)  {};

			\foreach \x in {6} \draw[RoyalBlue, |-{]},]   (0,-0.5-\x/2) -- (3-0.02,-0.5-\x/2)  {};
			\foreach \x in {6} \draw[ForestGreen, {[}-|,]   (3+0.02,-0.5-\x/2) -- (4,-0.5-\x/2)  {};
			
			\foreach \x in {7} \draw[ForestGreen, |-{]},]   (0,-0.5-\x/2) -- (3-0.02,-0.5-\x/2)  {};
			\foreach \x in {7} \draw[RoyalBlue, {[}-|,]   (3+0.02,-0.5-\x/2) -- (4,-0.5-\x/2)  {};

			\foreach \x in {8} \draw[ForestGreen, {[}-{]},]   (1+0.02,-0.5-\x/2) -- (3-0.02,-0.5-\x/2)  {};
			\foreach \x in {8} \draw[RoyalBlue, |-{]},]   (0,-0.5-\x/2) -- (1-0.02,-0.5-\x/2)  {};
			\foreach \x in {8} \draw[RoyalBlue, [-|,] 		  (3+0.02,-0.5-\x/2) -- (4,-0.5-\x/2)  {};
			
			\foreach \x in {9} \draw[ForestGreen, |-{]},]   (0,-0.5-\x/2) -- (1-0.02,-0.5-\x/2)  {};
			\foreach \x in {9} \draw[ForestGreen, {[}-{]},]   (2+0.02,-0.5-\x/2) -- (3-0.02,-0.5-\x/2)  {};
			\foreach \x in {9} \draw[RoyalBlue, {[}-{]},]   (1+0.02,-0.5-\x/2) -- (2-0.02,-0.5-\x/2)  {};
			\foreach \x in {9} \draw[RoyalBlue, [-|,] 		  (3+0.02,-0.5-\x/2) -- (4,-0.5-\x/2)  {};
			
			\foreach \x in {10} \draw[ForestGreen, {[}-{]},]   (2+0.02,-0.5-\x/2) -- (3-0.02,-0.5-\x/2)  {};
			\foreach \x in {10} \draw[RoyalBlue, |-{]},]   (0,-0.5-\x/2) -- (2-0.02,-0.5-\x/2)  {};
			\foreach \x in {10} \draw[RoyalBlue, [-|,] 		  (3+0.02,-0.5-\x/2) -- (4,-0.5-\x/2)  {};
			
			\foreach \x in {11} \draw[ForestGreen, |-{]},]   (0,-0.5-\x/2) -- (2-0.02,-0.5-\x/2)  {};
			\foreach \x in {11} \draw[RoyalBlue, {[}-|,]   (2+0.02,-0.5-\x/2) -- (4,-0.5-\x/2)  {};

			\foreach \x in {12} \draw[ForestGreen, {[}-{]},]   (1+0.02,-0.5-\x/2) -- (2-0.02,-0.5-\x/2)  {};
			\foreach \x in {12} \draw[RoyalBlue, |-{]},]   (0,-0.5-\x/2) -- (1-0.02,-0.5-\x/2)  {};
			\foreach \x in {12} \draw[RoyalBlue, {[}-|,]   (2+0.02,-0.5-\x/2) -- (4,-0.5-\x/2)  {};

			\foreach \x in {13} \draw[ForestGreen, |-{]},]   (0,-0.5-\x/2) -- (1-0.02,-0.5-\x/2)  {};
			\foreach \x in {13} \draw[RoyalBlue, {[}-|,]   (1+0.02,-0.5-\x/2) -- (4,-0.5-\x/2)  {};

			\end{scope}
			\end{tikzpicture}}
	}
	\qquad
	\subfloat[Binary variable values.\label{fig:mdi:bin-unord:b}]{\begin{tabular}{c | c c c c} \hline
			$X_m$ & 1 & 2 & 3& 4\\ \hline
			$T_{m,1}$ & \textcolor{RoyalBlue}{1} & \textcolor{ForestGreen}{0} & \textcolor{ForestGreen}{0} & \textcolor{ForestGreen}{0}\\
			$T_{m,2}$ & \textcolor{ForestGreen}{0} & \textcolor{RoyalBlue}{1} & \textcolor{ForestGreen}{0} & \textcolor{ForestGreen}{0}\\
			$T_{m,3}$ & \textcolor{RoyalBlue}{1} & \textcolor{RoyalBlue}{1} & \textcolor{ForestGreen}{0} & \textcolor{ForestGreen}{0}\\
			$T_{m,4}$ & \textcolor{ForestGreen}{0} & \textcolor{ForestGreen}{0} & \textcolor{RoyalBlue}{1} & \textcolor{ForestGreen}{0}\\
			$T_{m,5}$ & \textcolor{RoyalBlue}{1} & \textcolor{ForestGreen}{0} & \textcolor{RoyalBlue}{1} & \textcolor{ForestGreen}{0}\\
			$T_{m,6}$ & \textcolor{ForestGreen}{0} & \textcolor{RoyalBlue}{1} & \textcolor{RoyalBlue}{1} & \textcolor{ForestGreen}{0}\\
			$T_{m,7}$ & \textcolor{RoyalBlue}{1} & \textcolor{RoyalBlue}{1} & \textcolor{RoyalBlue}{1} & \textcolor{ForestGreen}{0}\\
	\end{tabular}\hspace{1em}
\begin{tabular}{c | c c c c} \hline
	$X_m$ & 1 & 2 & 3& 4\\ \hline
			$T_{m,8}$ & \textcolor{ForestGreen}{0} & \textcolor{ForestGreen}{0} & \textcolor{ForestGreen}{0} & \textcolor{RoyalBlue}{1}\\
			$T_{m,9}$ & \textcolor{RoyalBlue}{1} & \textcolor{ForestGreen}{0} & \textcolor{ForestGreen}{0} & \textcolor{RoyalBlue}{1}\\
			$T_{m,10}$ & \textcolor{ForestGreen}{0} & \textcolor{RoyalBlue}{1} & \textcolor{ForestGreen}{0} & \textcolor{RoyalBlue}{1}\\
			$T_{m,11}$ & \textcolor{RoyalBlue}{1} & \textcolor{RoyalBlue}{1} & \textcolor{ForestGreen}{0} & \textcolor{RoyalBlue}{1}\\
			$T_{m,12}$ & \textcolor{ForestGreen}{0} & \textcolor{ForestGreen}{0} & \textcolor{RoyalBlue}{1} & \textcolor{RoyalBlue}{1}\\
			$T_{m,13}$ & \textcolor{RoyalBlue}{1} & \textcolor{ForestGreen}{0} & \textcolor{RoyalBlue}{1} & \textcolor{RoyalBlue}{1}\\
			$T_{m,14}$ & \textcolor{ForestGreen}{0} & \textcolor{RoyalBlue}{1} & \textcolor{RoyalBlue}{1} & \textcolor{RoyalBlue}{1}\\
\end{tabular}}
	\caption{Set $T_{bp}$ of binary variables corresponding to possible unordered splits, i.e. all binary partitions of values of $X_m$. Each colour is associated to one of the two branches leaving the node after the spit. For instance, intervals of values in green correspond to the left branch whereas intervals in blue correspond to the right one.}
	\label{fig:mdi:bin-unord}
\end{figure}
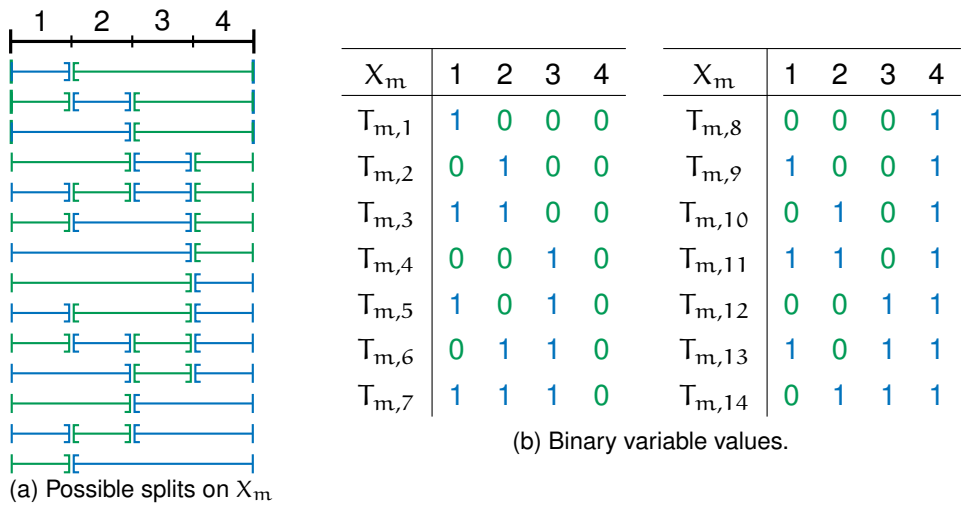

\subsection{Relevance in binary trees} \label{sec:mdi:bin:relevance}

In this section, we assume that a binary tree is grown from a set of categorical variables using the Extra-Trees split randomization, i.e., by randomly selecting $K$ variables at each node, picking for each of them a random binary split in its set of candidate binary splits, and finally using the split among $K$ that leads to the most important decrease of impurity (breaking ties at random). The importance of a variable $X_m$ is then obtained by summing total impurity reductions at all nodes where a binary split has been performed on $X_m$.

In this setting, we would like first to check whether Theorem \ref{thm:mdi-totally-irrelevant}, stating that a variable is irrelevant if and only if its infinite sample size importance is 0, remains valid when using (fully developed totally randomized) binary trees instead of multiway ones.

Let us denote by $X_m$ a categorical variables of cardinality greater than 2 and by $T_m$ a set of totally redundant binary variables corresponding to the candidate binary splits used for this variable during tree growing. The following theorem first shows that $X_m$ is relevant if and only if at least one variable in $T_m$ is relevant.
\begin{proposition} \label{sec:mdi:bin:selfrelevance}
  Let $X_m\in V$ be an input variable and let $T_m=\{T_{m,1},\ldots,T_{m,|T_m|}\}$ be a set of binary variables such that $X_m$ and $T_m$ are totally redundant with respect to $Y$. There exists a subset $B\subseteq V^{-m}$ such that $I(X_m;Y|B)>0$ if and only if there exists a subset $B\subseteq V^{-m}$, a variable $T_{m,j}\in T_m$ and a subset $T'\subseteq T_m\setminus\{T_{m,j}\}$ such that $I(T_j;Y|B\cup{T'})>0$. 
\end{proposition}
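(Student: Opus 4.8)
The plan is to reduce the whole statement to a single clean identity relating the information that $X_m$ and the collection $T_m$ carry about $Y$ in an arbitrary context, and then to exploit only the chain rule for mutual information and the non-negativity of conditional mutual information. First I would establish that, for every $B\subseteq V^{-m}$,
\[
I(X_m; Y \mid B) = I(T_m; Y \mid B).
\]
This follows by expanding the joint information $I(\{X_m\}\cup T_m; Y \mid B)$ with the chain rule in two different orders. Revealing $X_m$ first gives $I(X_m;Y\mid B) + I(T_m;Y\mid B\cup\{X_m\})$, and the second term vanishes because total redundancy (Definition \ref{def:totalred}) asserts $T_m \indep Y \mid B\cup\{X_m\}$, i.e. $I(T_m;Y\mid B\cup\{X_m\})=0$ via the equivalence between conditional independence and zero conditional mutual information. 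Revealing $T_m$ first gives $I(T_m;Y\mid B) + I(X_m;Y\mid B\cup T_m)$, whose second term vanishes because total redundancy also gives $X_m \indep Y \mid B\cup T_m$. Equating the two expansions yields the identity. This is the key step and the only place where the hypotheses are genuinely used; I expect it to be the main obstacle, while everything that follows is routine.

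For the forward direction, I would assume $I(X_m;Y\mid B)>0$ for some $B$, so that $I(T_m;Y\mid B)>0$ by the identity. Writing $T_m=\{T_{m,1},\dots,T_{m,|T_m|}\}$ and applying the chain rule,
\[
I(T_m;Y\mid B) = \sum_{l=1}^{|T_m|} I\bigl(T_{m,l};Y\mid B\cup\{T_{m,1},\dots,T_{m,l-1}\}\bigr).
\]
Since every summand is non-negative, at least one is strictly positive; choosing $T_{m,j}=T_{m,l}$ and the prefix $T'=\{T_{m,1},\dots,T_{m,l-1}\}\subseteq T_m\setminus\{T_{m,j}\}$ for that index yields exactly the required conclusion.

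For the backward direction, I would suppose $I(T_{m,j};Y\mid B\cup T')>0$ for some $B$, some $T_{m,j}$, and some $T'\subseteq T_m\setminus\{T_{m,j}\}$, and lift this back to $X_m$. Setting $S=T'\cup\{T_{m,j}\}\subseteq T_m$, the chain rule gives $I(S;Y\mid B)=I(T';Y\mid B)+I(T_{m,j};Y\mid B\cup T')\ge I(T_{m,j};Y\mid B\cup T')>0$ by non-negativity of the first term. A second application of the chain rule together with non-negativity gives $I(T_m;Y\mid B)=I(S;Y\mid B)+I(T_m\setminus S;Y\mid B\cup S)\ge I(S;Y\mid B)>0$. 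The identity then returns $I(X_m;Y\mid B)=I(T_m;Y\mid B)>0$, so the same $B$ witnesses the relevance of $X_m$. Note that both directions in fact reuse the identical conditioning set $B$, so the equivalence obtained is marginally stronger than stated.
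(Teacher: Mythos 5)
Your proof is correct and follows essentially the same route as the paper's: both directions rest on the identity $I(X_m;Y\mid B)=I(T_m;Y\mid B)$ (which the paper asserts directly from total redundancy and you derive cleanly via the two-order chain-rule expansion), followed by a chain-rule decomposition of $I(T_m;Y\mid B)$ into non-negative terms. Your explicit two-step treatment of an arbitrary, non-prefix $T'$ in the backward direction merely tidies a step the paper leaves implicit, so there is nothing substantively different to compare.
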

\begin{proof}
  \textbf{Necessary condition}: 
  ($\exists B: I(X_m;Y|B) > 0 \Rightarrow \exists B, T_{m,j}, T': I(T_{m,j};Y|B\cup T')>0$)
  
	As a consequence of the total redundancy between $T_m$ and $X_m$, we directly have that $$I(T_m;X_m|B) = I(X_m;Y|B) > 0.$$ Applying the chain rule on $I(T_m;X|B) = I(T_{m,1},\dots,T_{m,|T_m|};Y|B)$, we have that $$I(T_m;X_m|B)=\sum_{i=1}^{|T_m|} I(T_{m,i};Y|B\cup\{T_{m,1},\dots,T_{m,i-1}\}) >0$$ which implies that a least one term of the sum should be strictly positive. That is, $$\exists T_{m,j}:\, I(T_{m,j};Y|B\cup T') >0.$$
	where $T'=\{T_{m,1},\dots,T_{m,j-1}\}$.\\[2ex]

	\textbf{Sufficient condition}:  ($\exists B, T_{m,j}, T': I(T_{m,j};Y|B\cup T')>0 \Rightarrow \exists B: I(X_m;Y|B) > 0$)

        Given $I(T_{m,j};Y|B\cup T')>0  $, the proof is a direct consequence of the chain rule where variables in $T'$ are used first and then $T_{m,j}$. Indeed, $$I(T_m;Y|B)=\sum_{j=1}^{|T_m|} I(T_{m,j};Y|B\cup\{T_{m,1},\dots,T_{m,j-1}\})$$ is therefore necessarily strictly positive and thus $$I(T_m;Y|B) = I(X_m;Y|B)>0$$ by total redundancy. 
\end{proof}

The following proposition further shows that variables whose relevance is conditioned on $X_m$ will remain relevant conditionally to some variables in a totally redundant set $T_m$.

\begin{proposition}\label{prop:mdi:bin:otherelevance}
	For any relevant variable $X_i\in V^{-m}$ with respect to $Y$, there exists a subset $B$ such that $I(X_i;Y|B\cup X_m)>0$ if and only if there exists a subset $T' \subseteq T$ such that $I(X_i;Y|B\cup T')$ where $T$ is a set of binary variables which is totally redundant with $X_m$ with respect to $Y$.
\end{proposition}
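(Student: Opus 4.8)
The plan is to prove both directions by exploiting the chain rule for conditional mutual information together with the total redundancy between $X_m$ and $T$, exactly as in the proof of Proposition~\ref{sec:mdi:bin:selfrelevance}. The key structural fact is that $I(X_i;Y|B\cup X_m) = I(X_i;Y|B\cup T)$ for any conditioning set $B\subseteq V^{-m,i}$, because conditioning on $X_m$ and conditioning on the full totally redundant set $T$ carry exactly the same information (this follows from the definition of total redundancy, Definition~\ref{def:totalred}, applied symmetrically). So the whole proof reduces to relating $I(X_i;Y|B\cup T)$ to the existence of a single subset $T'\subseteq T$ with $I(X_i;Y|B\cup T')>0$.

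First I would establish the sufficient condition ($\Leftarrow$). Suppose there exists $T'\subseteq T$ with $I(X_i;Y|B\cup T')>0$. I would apply the chain rule to expand $I(X_i;T;Y|B)$ in two orderings, or more directly expand $I(X_i;Y|B\cup T)$ by adding the remaining variables $T\setminus T'$ one at a time on top of $B\cup T'$; since each conditional mutual information term is non-negative and the first block already contributes a strictly positive term $I(X_i;Y|B\cup T')$, monotonicity-type reasoning via the chain rule gives $I(X_i;Y|B\cup T)>0$. By total redundancy this equals $I(X_i;Y|B\cup X_m)>0$, which is what we want.

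For the necessary condition ($\Rightarrow$), I would start from $I(X_i;Y|B\cup X_m)>0$, rewrite it as $I(X_i;Y|B\cup T)>0$ using total redundancy, and then expand $I(X_i;Y|B\cup T)$ by the chain rule as a telescoping sum over the variables of $T$ added successively to $B$:
\begin{equation}
I(X_i;Y|B\cup T) = \sum_{j=1}^{|T|} I(X_i;Y|B\cup\{T_{1},\dots,T_{j-1}\}) - \text{(correction terms)},
\end{equation}
or, more cleanly, I would decompose the joint information $I(X_i;T;Y|B)$ so that the strict positivity of the total forces at least one intermediate term of the form $I(X_i;Y|B\cup T')$ (with $T'$ an initial segment of some ordering of $T$) to be strictly positive. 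Picking that $T'$ yields the desired subset. The main obstacle I anticipate is bookkeeping: the chain rule naturally produces terms conditioned on \emph{initial segments} of $T$, and I must argue carefully that the strict positivity of the full conditional mutual information propagates to at least one such segment without sign ambiguities. Because all conditional mutual informations are non-negative, this is really just the observation that a sum of non-negative terms equal to something strictly positive must contain a strictly positive summand, so the argument should mirror Proposition~\ref{sec:mdi:bin:selfrelevance} almost verbatim, with $X_i$ and the background set $B$ playing the roles previously played by $Y$ and the empty background.
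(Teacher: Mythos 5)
Your central structural fact---that $I(X_i;Y|B\cup \{X_m\})=I(X_i;Y|B\cup T)$ for every conditioning set $B$---is correct, and it is exactly the content behind the paper's own (one-line) proof, which simply declares the proposition ``a direct consequence of the total redundancy between $X_m$ and $T_m$'': two chain-rule expansions of $I(X_i,T;Y|B\cup\{X_m\})$ and of $I(X_i,X_m;Y|B\cup T)$, combined with the two independences in Definition~\ref{def:totalred}, give the equality. This settles the necessary direction immediately by taking $T'=T$; the telescoping machinery you invoke there is both unnecessary and unsound, because the chain rule does \emph{not} decompose $I(X_i;Y|B\cup T)$ into a non-negative sum of terms of the form $I(X_i;Y|B\cup T')$. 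The decomposition that drives Proposition~\ref{sec:mdi:bin:selfrelevance} peels the variables of $T_m$ off the \emph{first argument} of the mutual information ($I(T_m;Y|B)=\sum_j I(T_{m,j};Y|B\cup\{T_{m,1},\dots,T_{m,j-1}\})$); here the peeled variables would sit inside the \emph{conditioning set}, where no such identity with non-negative summands exists, so the argument cannot ``mirror Proposition~\ref{sec:mdi:bin:selfrelevance} almost verbatim.''

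The genuine gap is your sufficiency direction. The step ``adding the remaining variables $T\setminus T'$ on top of $B\cup T'$ preserves strict positivity'' is a monotonicity claim that is false: conditional mutual information can vanish when the conditioning set grows. Concretely, let $X_i$ and $Y$ be independent fair bits, let $X_m=2X_i+Y\in\{0,1,2,3\}$, and let $T'=\{\mathbb{1}(X_m\in\{1,2\})\}=\{X_i\oplus Y\}$, a legitimate unordered binary split of $X_m$ (and $T_m$ is totally redundant with $X_m$ since each determines the other). Then $I(X_i;Y|T')=1>0$ while $I(X_i;Y|T)=I(X_i;Y|X_m)=0$: in your two-ordering chain-rule identity, the strictly positive term $I(X_i;Y|B\cup T')$ is entirely absorbed by $I(X_i;T\setminus T'|B\cup T')$, not by $I(X_i;Y|B\cup T)$. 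Note also that your argument for this direction never uses the hypothesis that $X_i$ is relevant (in the counterexample $X_i$ is irrelevant with respect to $\{X_i,X_m\}$, so the proposition itself is not contradicted), yet that hypothesis, or something beyond total redundancy alone, is precisely what a correct proof of sufficiency would have to exploit. As written, the proof establishes only the easy direction.
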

\begin{proof}
	The proof is a direct consequence of the total redundancy between $X_m$ and $T_m$.
\end{proof}

Propositions \ref{sec:mdi:bin:selfrelevance} and \ref{prop:mdi:bin:otherelevance} can be combined to show that Theorem \ref{thm:mdi-totally-irrelevant} remains valid in the case of fully developed totally randomized binary trees, when candidate binary splits are totally redundant with the original variables.

\begin{theorem}\label{thm:mdi-totally-irrelevant-binary}
Let us assume binary trees constructed by using totally redundant candidate binary splits and the Extra-Trees split randomization. Then, $X_i\in V$ is irrelevant to $Y$ with respect to $V$ if and only if its infinite sample size importance as computed with an infinite ensemble of fully developed totally randomized binary trees built on $V$ for $Y$ is 0.
\end{theorem}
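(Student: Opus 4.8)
The plan is to reduce the claim to the already-established multiway result (Theorem~\ref{thm:mdi-totally-irrelevant}) by viewing each categorical variable together with its family of candidate binary splits as a collection of genuine binary variables, and then to translate the resulting notion of irrelevance back to the original variable set with the help of Propositions~\ref{sec:mdi:bin:selfrelevance} and~\ref{prop:mdi:bin:otherelevance}. Concretely, I would first argue that, because Shannon impurity decreases are non-negative, the binary-tree importance of a variable $X_i$ vanishes if and only if \emph{every} node that can be reached with non-zero probability and that splits on one of the binary variables $T_{i,j}\in T_i$ yields a zero asymptotic impurity decrease. As in the derivation of Theorem~\ref{thm:mdi-totally-imp}, the asymptotic decrease at such a node equals the conditional mutual information $I(T_{i,j};Y\mid C)$, where $C$ collects the binary conditions imposed by the ancestor nodes along the branch. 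Hence $Imp(X_i)=0$ iff $I(T_{i,j};Y\mid C)=0$ for all $j$ and all reachable conditionings $C$.

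For the direction \emph{relevant $\Rightarrow$ positive importance}, I would invoke Proposition~\ref{sec:mdi:bin:selfrelevance}: if $X_i$ is relevant, there exist a subset $B\subseteq V^{-i}$, an index $j$ and a subset $T'\subseteq T_i$ with $I(T_{i,j};Y\mid B\cup T')>0$. A strictly positive conditional mutual information means that for at least one configuration of $B\cup T'$ the output still depends on $T_{i,j}$. In a fully developed totally randomized binary tree this configuration is reachable with strictly positive probability: one exhausts each variable of $B$ through a sequence of binary splits (which is possible precisely because each $T_k$ is totally redundant with $X_k$, Definition~\ref{def:totalred}) and performs the splits of $T'$, after which a split on $T_{i,j}$ produces a strictly positive decrease. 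Since contributions never cancel, this forces $Imp(X_i)>0$.

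For the converse, \emph{irrelevant $\Rightarrow$ zero importance}, I would argue contrapositively and show that no reachable conditioning can make any $T_{i,j}$ informative. The contrapositive of Proposition~\ref{sec:mdi:bin:selfrelevance} already disposes of every conditioning of the form $B\cup T'$ in which the other variables appear \emph{fully exhausted}. The genuinely new situations are those where $C$ specifies other variables only \emph{partially}, through a strict subset of their binary families; this is exactly the role of Proposition~\ref{prop:mdi:bin:otherelevance}, which relates a conditioning on a variable to conditionings on subsets of its totally redundant binary family. Chaining the two propositions with total redundancy, one would argue that any such partial conditioning can be completed to a full-variable conditioning without introducing dependence, and conclude that $I(T_{i,j};Y\mid C)=0$ for every reachable $C$ and every $j$, whence $Imp(X_i)=0$.

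The hard part will be this last direction. The delicate point is that a binary tree may condition on a sub-rectangle of the input space in which a formally irrelevant $X_i$ becomes correlated with $Y$ through a confounding relevant variable that is only partially resolved; ruling out such ``spurious'' impurity decreases is precisely what the combination of Propositions~\ref{sec:mdi:bin:selfrelevance} and~\ref{prop:mdi:bin:otherelevance} must guarantee, together with the total redundancy of the candidate splits $T_i$ with $X_i$. I would therefore devote most of the effort to making the passage between partial and full conditionings airtight, since the forward (relevant) direction and the non-negativity reduction are comparatively routine.
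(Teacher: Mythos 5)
Your plan mirrors the paper's own treatment of this theorem: the paper explicitly declines to give a formal proof and offers only the two-sentence sketch that the importance decomposes into a weighted sum of terms $I(T_{i,j};Y|C)$ over binary-split conditionings, with Propositions~\ref{sec:mdi:bin:selfrelevance} and~\ref{prop:mdi:bin:otherelevance} invoked for the equivalence with relevance. Your reduction of $Imp(X_i)$ to such a sum and your forward direction (relevant $\Rightarrow$ positive importance) are sound, modulo the small point that realising a full-variable conditioning $B=b$ inside a binary tree requires the family $T_k$ to determine $X_k$ (true for the three split families of the paper) rather than mere total redundancy in the sense of Definition~\ref{def:totalred}. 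The genuine gap is exactly the step you defer: the bridging claim that ``any such partial conditioning can be completed to a full-variable conditioning without introducing dependence'' is false as a statement about conditional independence, and neither cited proposition can deliver it. Proposition~\ref{sec:mdi:bin:selfrelevance} only allows partial conditionings within $X_i$'s own family $T_i$, with all other variables entering \emph{fully exhausted}; and Proposition~\ref{prop:mdi:bin:otherelevance} presupposes that $X_i$ is relevant, so invoking it to show that an \emph{irrelevant} $X_i$ stays uninformative under partial conditionings is circular.

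Here is a concrete obstruction. Take $V=\{X_i,Z\}$ with $Z$ uniform over $\{1,2,3,4\}$, $X_i=\mathbb{1}(Z\in\{1,3\})$ and $Y=\mathbb{1}(Z\in\{1,4\})$. Then $X_i\indep Y$ (the joint of $(X_i,Y)$ is uniform on $\{0,1\}^2$) and $X_i\indep Y|Z$ (both are deterministic given $Z$), so $X_i$ is irrelevant to $Y$ with respect to $V$ in the sense of Definition~\ref{def:relevance}; the ordered binary family of $Z$ determines $Z$ and is therefore totally redundant with it, so all hypotheses of the theorem hold. Yet conditionally on the ordered-split event $\mathbb{1}(Z\le 2)=1$ one has $X_i=Y$, i.e. $I(X_i;Y|\mathbb{1}(Z\le 2)=1)=1$ bit: a coarsened conditioning on a relevant confounder makes the irrelevant variable informative (a mixture of product distributions need not be a product). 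In a fully developed totally randomised binary tree this branch — split $Z\le 2$ at the root, then $X_i$ — occurs with strictly positive probability, and since impurity decreases are non-negative they cannot cancel, so the very decomposition you set up yields $Imp(X_i)>0$ for an irrelevant variable. The passage from partial to full conditionings therefore cannot be made airtight with the tools you cite; closing the converse direction would require either restricting the class of distributions or recasting (ir)relevance with respect to the binarised variable set, a difficulty that the paper's sketch glosses over as well.
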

We do not provide a formal proof of this theorem to not overload the text. Intuitively, the theorem can be proven by noting that when $K=1$ and with split randomization, the importance of a variable $X_m$ is a weighted sum of all possible terms $I(T_{m,i};Y|B)$, where $T_{m,i}$ is a binary split based on $X_m$ and $B$ is a subset of binary splits defined on all features (including $X_m$). Given Propositions \ref{sec:mdi:bin:selfrelevance} and \ref{prop:mdi:bin:otherelevance}, at least one such term is strictly positive if and only if $X_m$ is relevant.

In the case of multiway trees, Proposition \ref{th:mdi:K:stronglypruned} shows that strongly relevant variables will be always found whatever the value of $K$. A similar result can be shown in the case of binary trees.

Let us first characterize the relevance of binary variables in $T_m$ with respect to the relevance of $X_m$. 
The following corollary of Proposition \ref{sec:mdi:bin:selfrelevance} first shows that if $X_m$ is only weakly relevant, no variable in a totally redundant set $T_m$ can be strongly relevant.
\begin{corollary}\label{cor:mdi:bin:weakrelevance}
If $X_m$ is weakly relevant with respect to $Y$, then each $T_{m,j}\in T_m$, with $X_m$ and $T_m$ totally redundant with respect to $Y$, is either irrelevant or weakly relevant with respect to $Y$.
\end{corollary}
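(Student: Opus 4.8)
The plan is to prove the contrapositive at the level of (conditional) mutual information: assuming $X_m$ is weakly relevant, I will show that no $T_{m,j}\in T_m$ can be strongly relevant, i.e. that the conditional mutual information of $T_{m,j}$ and $Y$ given all the remaining variables vanishes. Weak relevance of $X_m$ means, by Definition~\ref{def:strongweakrelevanceMI}, that $X_m$ is relevant but $I(X_m;Y|V^{-m})=0$, equivalently $Y\indep X_m|V^{-m}$, equivalently $H(Y|V^{-m},X_m)=H(Y|V^{-m})$. This last entropy identity is the only consequence of weak relevance I will actually use.

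First I would pin down precisely what is being conditioned on when we test the strong relevance of $T_{m,j}$. In the binary-tree setting each categorical feature $X_i$ is replaced by its totally redundant set $T_i$, and, by the construction of Section~\ref{sec:mdi:bin:splits}, each $T_{i,l}$ is a deterministic function of $X_i$ while $T_i$ collectively determines $X_i$; thus $H(X_i|T_i)=H(T_{i,l}|X_i)=0$, which is exactly the ``copy'' relation of Equation~\ref{eqn:totallyred-gilles}. Consequently $\bigcup_{i\neq m}T_i$ carries the same information as $V^{-m}$, and $T_m$ determines $X_m$ and conversely. Writing $W=V^{-m}\cup(T_m\setminus\{T_{m,j}\})$, these deterministic equivalences let me replace the full ``everything-else'' conditioning set by $W$ without changing any entropy of $Y$, and then rewrite $I(T_{m,j};Y|W)$ entirely in terms of entropies conditioned on $V^{-m}$, $X_m$ and $T_m$.

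The core of the argument is a squeeze on two conditional entropies. On one side, $H(Y|W,T_{m,j})=H(Y|V^{-m},T_m)=H(Y|V^{-m},X_m)$, the last equality holding because $T_m$ and $X_m$ determine each other; by weak relevance this equals $H(Y|V^{-m})$. On the other side, monotonicity of conditioning gives $H(Y|W)\le H(Y|V^{-m})$ since $W\supseteq V^{-m}$, while $H(Y|W)\ge H(Y|W,T_{m,j})=H(Y|V^{-m})$. Hence $H(Y|W)=H(Y|V^{-m})$ too, and therefore $I(T_{m,j};Y|W)=H(Y|W)-H(Y|W,T_{m,j})=0$. This says $T_{m,j}$ is not strongly relevant, so it is either irrelevant or weakly relevant, which is the claim.

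I expect the main obstacle to be the bookkeeping of ``what else is conditioned on'': strong relevance of $T_{m,j}$ is defined relative to the full binary feature set, not to $V$, so the proof must justify rigorously that conditioning on $\bigcup_{i\neq m}T_i$ may be swapped for conditioning on $V^{-m}$, and that recovering $X_m$ from $W\cup\{T_{m,j}\}=V^{-m}\cup T_m$ is legitimate. This rests precisely on the deterministic total-redundancy structure of the candidate binary splits (Equation~\ref{eqn:totallyred-gilles}) rather than on the weaker generic notion of total redundancy of Definition~\ref{def:totalred}, and it is worth flagging this dependence explicitly so that the squeeze step is airtight.
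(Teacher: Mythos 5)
Your proof is correct, and at its core it is the paper's own argument rewritten in entropy language: the paper derives $T_m \indep Y|V^{-m}$ from weak relevance of $X_m$ together with total redundancy (implicitly via contraction and decomposition), then applies the weak union property to conclude $T_{m,j}\indep Y|V^{-m}\cup (T_m\setminus\{T_{m,j}\})$ for every $j$. Your two-sided squeeze, $H(Y|V^{-m}) \ge H(Y|W) \ge H(Y|W,T_{m,j}) = H(Y|V^{-m})$ with $W=V^{-m}\cup(T_m\setminus\{T_{m,j}\})$, is precisely an information-theoretic proof of that weak-union step, applied to the same conditioning set, so the two routes coincide up to notation.

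One correction, on the point you yourself flag: the dependence on the deterministic ``copy'' relation of Equation~\ref{eqn:totallyred-gilles} is not needed, and asserting it narrows your proof below the corollary's stated hypothesis, which assumes only total redundancy in the generic sense (Definition~\ref{def:totalred}, extended to the set $T_m$ as in Section~\ref{sec:mdi:bin:splits}). The middle equality $H(Y|V^{-m},T_m)=H(Y|V^{-m},X_m)$ follows directly from the two clauses of generic total redundancy taken with $B=V^{-m}$: from $Y\indep X_m|V^{-m}\cup T_m$ one gets $H(Y|V^{-m},T_m)=H(Y|V^{-m},T_m,X_m)$, and from $Y\indep T_m|V^{-m}\cup\{X_m\}$ one gets $H(Y|V^{-m},X_m,T_m)=H(Y|V^{-m},X_m)$; chaining the two gives the equality with no appeal to determinism. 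Similarly, your bookkeeping worry about swapping $\bigcup_{i\neq m}T_i$ for $V^{-m}$ dissolves: strong relevance of $T_{m,j}$ is assessed here, exactly as in the paper's proof, relative to the conditioning set $V^{-m}\cup(T_m\setminus\{T_{m,j}\})$, which is your $W$, so no substitution of binary surrogates for the other original variables is required. With these two simplifications your squeeze argument establishes the corollary at its full stated generality, whether or not the candidate splits are deterministic functions of $X_m$.
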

\begin{proof}
	The relevance of some $T_{m,j} \in T_m$ directly results from Proposition \ref{sec:mdi:bin:selfrelevance}. No $T_{m,j}$ can however be strongly relevant. Indeed, if $X_m$ is weakly relevant with respect to $Y$, we have that $X_m\indep Y|V^{-m}$ which is equivalent to $T_m \indep Y|V^{-m} \Leftrightarrow T_{m,1},\dots,T_{m,|T_m|}\indep Y | V^{-m}$, given the total redundancy between $T_m$ and $X_m$. By weak union, the latter independence implies that:  $$T_i\indep Y |V^{-m} \cup T^{-i}$$ for all $T_i\in T$.
\end{proof}

$X_m$ strongly relevant does not ensure that a variable in a totally redundant set $T_m$ will be strongly relevant (Surely, this can not be the case if $T_m$ contains redundant features), which would have sufficed to show that Proposition \ref{th:mdi:K:stronglypruned} remains valid for binary trees. However, the following results show that at least one $T_{m,i}\in T_m$ can not be masked by variables in $V^{-m}$.

\begin{proposition}
	Let $X_m$ be a strongly relevant variable with respect to $Y$ and let $T_m=\{T_{m,1},\ldots,T_{m,|T_m|}\}$ be a set of binary variables such that $X_m$ and $T_m$ are totally redundant with respect to $Y$. There exists at least one variable $T_{m,i}\in T_m$ such that $T_{m,i}\nindep Y|V^{-m}\cup T'$ for at least one subset $T'\subseteq T_m\setminus\{T_{m,i}\}$.
\end{proposition}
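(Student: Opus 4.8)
The plan is to show that the strong relevance of $X_m$ forces at least one binary variable in a totally redundant set $T_m$ to remain dependent on $Y$ even when we condition on all of $V^{-m}$ together with the other binary variables in $T_m$. The natural starting point is the definition of strong relevance: $X_m$ strongly relevant means $Y\nindep X_m|V^{-m}$, equivalently $I(X_m;Y|V^{-m})>0$. Since $X_m$ and $T_m$ are totally redundant with respect to $Y$, we have $I(X_m;Y|V^{-m})=I(T_m;Y|V^{-m})$, so the joint mutual information $I(T_{m,1},\dots,T_{m,|T_m|};Y|V^{-m})$ is strictly positive. This is exactly the kind of quantity that decomposes via the chain rule, which is the same device used to prove Proposition \ref{sec:mdi:bin:selfrelevance}.

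First I would apply the chain rule for (conditional) mutual information to expand
\begin{equation}
I(T_m;Y|V^{-m}) = \sum_{i=1}^{|T_m|} I(T_{m,i};Y|V^{-m}\cup\{T_{m,1},\dots,T_{m,i-1}\}).
\end{equation}
Because the left-hand side is strictly positive, at least one summand on the right must be strictly positive. Pick an index $i$ for which $I(T_{m,i};Y|V^{-m}\cup\{T_{m,1},\dots,T_{m,i-1}\})>0$, and set $T'=\{T_{m,1},\dots,T_{m,i-1}\}\subseteq T_m\setminus\{T_{m,i}\}$. This immediately gives a conditioning set $V^{-m}\cup T'$ for which $T_{m,i}\nindep Y|V^{-m}\cup T'$, which is precisely the claim. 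The argument is essentially the necessary-condition half of Proposition \ref{sec:mdi:bin:selfrelevance}, specialised to the maximal conditioning $B=V^{-m}$, so the proof follows the same template already in place.

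The one point that requires care — and the place where I expect the only real subtlety to lie — is the equality $I(X_m;Y|V^{-m})=I(T_m;Y|V^{-m})$. This is not literally one of the two total-redundancy conditions $X_m\indep Y|B\cup T_m$ and $T_m\indep Y|B\cup\{X_m\}$, but it follows from them: both $X_m$ and $T_m$ carry exactly the same information about $Y$ in every context, so conditioning on either in addition to the other adds nothing, and a chain-rule manipulation on $I(X_m,T_m;Y|V^{-m})$ expanded in the two possible orders yields the identity. I would state this explicitly as a short lemma-style remark (it mirrors Equation \ref{def:totalred:eqn2}, which asserts equal informativeness of totally redundant features) and then proceed. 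The remainder is routine: positivity of the joint term, chain-rule splitting, and extracting a single strictly positive summand to name the witnessing variable $T_{m,i}$ and subset $T'$. No new machinery beyond the chain rule and the total-redundancy identity is needed.
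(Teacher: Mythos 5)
Your proof is correct, and while it reaches the same decomposition as the paper, it travels a different road: the paper argues by contradiction with graphoid axioms, whereas you argue directly with mutual information. Concretely, the paper assumes $T_{m,i}\indep Y|V^{-m}\cup T'$ for every $T_{m,i}$ and every $T'$, fixes an ordering of $T_m$, and applies the contraction property recursively along the prefixes $\{T_{m,1},\dots,T_{m,i-1}\}$ to conclude $T_m\indep Y|V^{-m}$, contradicting strong relevance of $X_m$ transferred to $T_m$ by total redundancy. Your chain-rule expansion $I(T_m;Y|V^{-m})=\sum_i I(T_{m,i};Y|V^{-m}\cup\{T_{m,1},\dots,T_{m,i-1}\})$ runs over exactly those prefix conditionings, and for discrete variables ``all chain-rule terms vanish'' and ``recursive contraction applies'' are the same statement, so the two proofs are logically equivalent. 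Your version buys three things: it is direct rather than contrapositive; it yields the marginally sharper conclusion that the witnessing $T'$ can always be taken as a prefix of an arbitrary ordering (the paper's contradiction hypothesis quantifies over all $T'$ but only ever uses prefixes); and it makes explicit, via the two-order expansion of $I(X_m,T_m;Y|V^{-m})$, the identity $I(X_m;Y|V^{-m})=I(T_m;Y|V^{-m})$, which the paper invokes without derivation in its closing sentence. It also has the stylistic merit of reusing the template of the necessary-condition half of Proposition \ref{sec:mdi:bin:selfrelevance}, specialised to $B=V^{-m}$. What the paper's version buys is that it stays in the axiomatic language of the distribution-properties side note used by the other relevance proofs (e.g., Proposition \ref{prop:mdi-only-relevant}) and never needs the equivalence between zero conditional mutual information and conditional independence --- though since the chapter assumes discrete variables and that equivalence throughout, nothing is lost in your formulation.
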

\begin{proof}

  Let us assume that one such $T_{m,i}$ does not exist and show that this leads to a contradiction. For all $T_{m,i}\in T_m$ and all $T'\subseteq T_m \setminus\{T_{m,i}\}$ (possibly empty), we thus have $T_{m,i} \indep Y | V^{-m} \cup T'$.
  Let us consider any ordering of the variables in $T_m$ and let us recursively apply the contraction property. We then have the following sequence of independences: $T_{m,1}\indep Y | V^{-m}$, $T_{m,2}\indep Y | V^{-m}\cup T_{m,1}$ and $T_{m,1} \indep Y | V^{-m}$ gives $\{T_{m,1},T_{m,2}\} \indep Y | V^{—m}$, $\dots$, $T_{m,|T_m|} \indep Y |V^{-m} \cup \{T_{m,1},\dots,T_{m,|T_m|-1}\}$ and $\{T_{m,1},\ldots,T_{m,|T_m|-1}\}\indep Y |V^{—m}$ gives $\{T_{m,1},\dots,T_{m,|T_m|}\}\indep Y |V^{-m}$. The latter independence is impossible because of the strong relevance of $X_m$ that implies that $X_m\nindep Y|V^{-m}$ and thus $T_m\nindep Y|V^{-m}$, given that $T_m$ and $X_m$ are totally redundant.
	\end{proof}

Using this result, one can adapt the proof of Proposition \ref{th:mdi:K:stronglypruned} in a straightforward way to show the following result (provided without proof):

\begin{proposition}\label{th:mdi:K:stronglypruned_binary}
  Let us assume binary trees constructed by using totally redundant candidate binary splits and the Extra-Trees split randomization.
	\begin{eqnarray} \forall K,X_m\in V: \quad  X_m\mbox{ strongly relevant }\Rightarrow Imp^{K,p}_{\infty,\infty}(X_m) > 0. \nonumber \end{eqnarray}
\end{proposition}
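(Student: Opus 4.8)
The plan is to mirror the structure of the proof of Proposition~\ref{th:mdi-stronglyrelpruned} (the multiway case), but carried out at the level of the binary variables in $T_m$ rather than the original categorical feature $X_m$. The key to making the translation work is the immediately preceding proposition, which guarantees that when $X_m$ is strongly relevant there is at least one binary split $T_{m,i}\in T_m$ and a subset $T'\subseteq T_m\setminus\{T_{m,i}\}$ with $T_{m,i}\nindep Y\,|\,V^{-m}\cup T'$. This is precisely the binary analogue of strong relevance that the masking argument needs: it says that some atomic split built from $X_m$ conveys information about $Y$ that no assignment of the other features (together with the other splits of $X_m$) can wash out.

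First I would restate the setting: $K=1$ is not assumed (we want all $K$), the trees are fully developed binary trees built with the Extra-Trees split randomisation and totally redundant candidate split sets $T_m$, and importances are infinite-sample, infinite-ensemble. I would recall that $Imp^{K,p}_{\infty,\infty}(X_m)>0$ holds as soon as there exists at least one tree in the infinite ensemble in which some binary split on $X_m$ receives a strictly positive impurity decrease, i.e. a node whose root-to-node path defines a conditioning set $B$ (a collection of binary splits on various features) such that $Y$ depends on a candidate split $T_{m,i}$ conditionally on $B$. So the whole task reduces to exhibiting one such tree.

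The core of the argument is the masking-free construction, taken verbatim in spirit from the multiway proof. I would define $\tau_R$ to be the subtrees in which irrelevant features are only tested once all relevant features are exhausted along each branch; as before, these trees are explored with nonzero probability for every $K$, because at any node where some relevant feature (or relevant binary split) is still available, the $K$ sampled candidates can include it, and ties are broken by randomisation so a relevant split is never passed over in favour of a zero-scoring irrelevant one. Then, using the binary strong-relevance statement above, I would fix the split $T_{m,i}$ and the conditioning $V^{-m}\cup T'$ that witnesses $T_{m,i}\nindep Y\,|\,V^{-m}\cup T'$, and argue by a recursive ``push $T_{m,i}$ down the branch'' argument that there is a tree in $\tau_R$ in which, along a path compatible with that conditioning, $T_{m,i}$ is the last split performed, so that it necessarily receives a strictly positive score. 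Assuming the contrary (that $T_{m,i}$ always scores zero on every compatible path) yields a contradiction exactly as in the multiway case, since if all witnessing features had been tested above $T_{m,i}$ it would have to score positively, and if not, tie/randomisation freedom lets us rebuild a tree where $T_{m,i}$ is split one level deeper.

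The main obstacle, and the point where the binary case genuinely differs from the multiway case, is bookkeeping the conditioning set $T'\subseteq T_m\setminus\{T_{m,i}\}$: the witnessing conditioning involves other binary splits of the \emph{same} feature $X_m$, which in a binary tree can indeed be used repeatedly along a branch, whereas in a multiway tree each feature appears at most once. I therefore expect the delicate step to be showing that a path in some $\tau_R$ tree can realise both the $V^{-m}$ part and the intra-feature $T'$ part of the conditioning simultaneously, and that the recursive descent argument still terminates once we allow repeated splits on $X_m$. I would handle this by treating the elements of $T_m$ as ordinary (totally redundant) binary features for the purpose of the masking argument—legitimate because Extra-Trees with split randomisation effectively samples among the $T_{m,\cdot}$—so that ``last split in the compatible path'' is well defined and the finiteness of $T_m$ guarantees termination. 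I would then simply cite Proposition~\ref{th:mdi:K:stronglypruned} and state that the adaptation is straightforward, exactly as the excerpt does, rather than reproducing the full recursion.
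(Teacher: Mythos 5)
Your proposal follows exactly the route the paper intends: the paper provides no written proof for this proposition, stating only that one should combine the preceding proposition (existence of a split $T_{m,i}\in T_m$ with $T_{m,i}\nindep Y\,|\,V^{-m}\cup T'$) with a straightforward adaptation of the masking-free recursive argument of Proposition~\ref{th:mdi:K:stronglypruned} (i.e., Theorem~\ref{th:mdi-stronglyrelpruned} with $q=p$), which is precisely what you do. Your additional care about realising the intra-feature conditioning $T'$ along a branch and about termination of the push-down recursion (finiteness of $T_m$, treating its elements as totally redundant binary features sampled by the Extra-Trees randomisation) correctly fills in the one genuinely new point of the binary setting, so the proposal is sound and essentially coincides with the paper's argument.
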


Theorem \ref{thm:mdi-totally-irrelevant-binary} and Proposition \ref{th:mdi:K:stronglypruned_binary} thus show that using binary instead of multiway splits fortunately does not affect the ability of variable importances to identify the relevant features and filter out the irrelevant ones.

\subsection{Importance scores in binary trees}\label{sec:mdi:bin:experiments}

Through Example \ref{ex:mdi:bin}, we already know that importance scores are expected to be different in binary trees compared to multiway trees. In this section, we further illustrate this difference in more details by computing variable importance scores in various settings on \citet{breiman1984classification}'s digit recognition problem (see Appendix \ref{app:digit} for a description of this problem).

\begin{figure}[htbp]
	
  \subfloat[$K=1$\label{fig:mdi:bin:k1}]{\includegraphics[width=\linewidth]{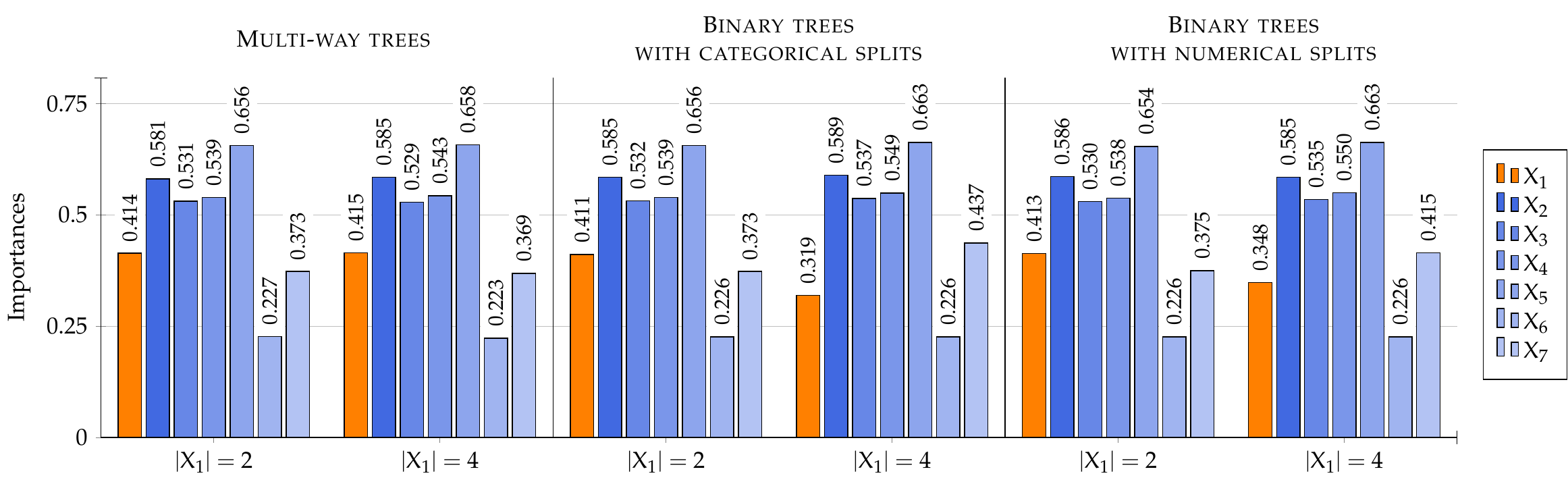}}\\
  \subfloat[$K=p$\label{fig:mdi:bin:kp}]{\includegraphics[width=\linewidth]{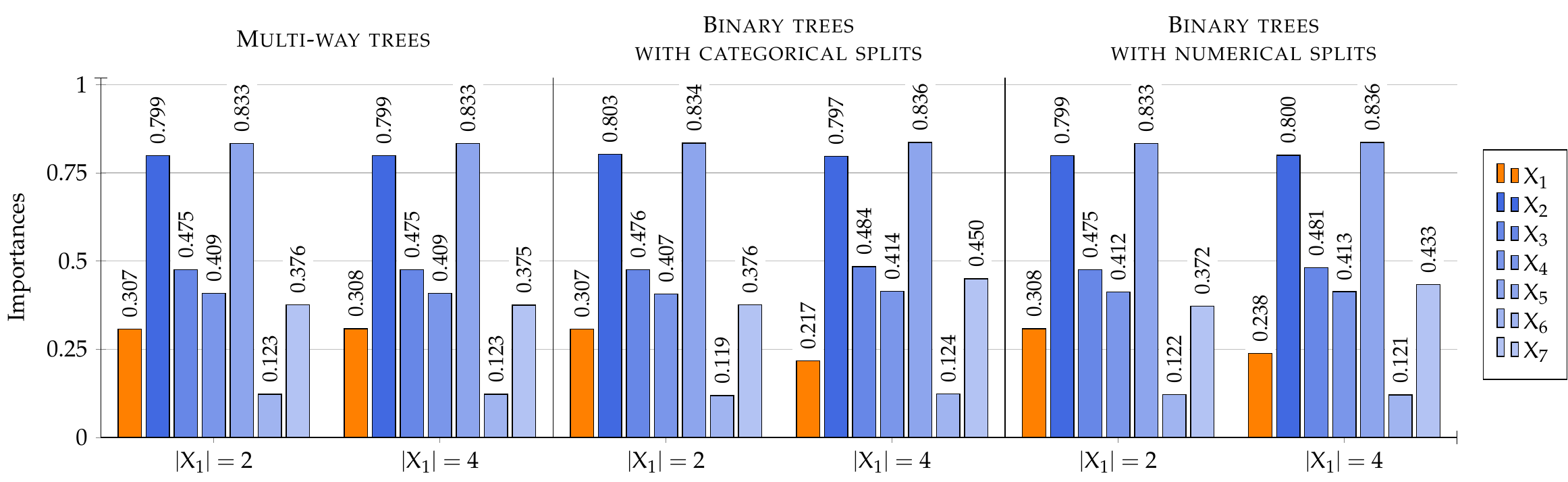}}\\
	\caption{Importance scores as computed by an ensemble of $10000$ trees with $K=1$ (top) and $K=p$ (bottom), with multiway trees (left), binary trees with (unordered) categorical splits (center), and binary trees with ordered splits (right). The considered problem is the digit recognition problem of \citep{breiman1984classification}. $|X_1|=2$ corresponds to the original problem with only binary variables while $|X_1|=4$ corresponds to the same problem where the cardinality of $X_1$ has been artificially increased to $4$ (with both values $0$ and $1$ splitted each into two new values, $\{0,1\}$ and $\{2,3\}$ respectively, with equal probability). The six other variables $X_2,\dots,X_7$ are left unchanged.}
	\label{fig:mdi:bin}
\end{figure}

Figures \ref{fig:mdi:bin:k1} and \ref{fig:mdi:bin:kp} show the importance scores computed respectively from totally randomised (i.e., $K=1$) and non-totally randomised (i.e., $K=p$) Extra-Trees, in which the split is randomly selected (split-wise randomisation). Both figures compare multiway trees and binary trees with either unordered or ordered binary splits. While all variables are binary in the original problem, we artificially increased the cardinality of variable $X_1$ from 2 to 4 by splitting both values $0$ and $1$ of this variable each into two new values, respectively $\{1,2\}$ and $\{3,4\}$, with equal probability. This transformation does not change the information brought by $X_1$ about $Y$ but it allows us to illustrate the effect of the different binary split strategies on importance scores.

When $|X_1|=2$, all tree growing methods lead to the same importance scores for all variables as expected (the slight differences are due to the use of a finite number of trees). The importance of $X_1$ is nevertheless decreased when $K$ goes from 1 to $p$, due to masking effects. When the cardinality of $X_1$ is increased to 4, we notice that the three splitting strategies lead to different importance scores. With ordered splits (see Figure \ref{fig:mdi:bin-ord} for all candidate splits), all candidate splits are somehow useful because they all provide part (or all for the mid-split) of the information content of $X_1$. By contrast, there are much more candidate unordered splits (see Figure \ref{fig:mdi:bin-unord} for all of them) including several ones that do not provide any information about $Y$ (e.g., the split $(\{1,3\},\{2,4\})$ does not change the distribution of $Y$). With $K=1$, split variables are selected totally at random. In the case of ordered splits, any variable except $X_1$ that is used is granted for all its information while $X_1$ only receives its full importance in one third of all splits. In the case of unordered splits, the chance of $X_1$ to be granted of its full importance is even smaller because of the useless splits. In both cases, this gives more opportunity to another variable to capture part of the information contained in $X_1$ about $Y$ and hence leads to a reduction of the importance of $X_1$ in the case of binary trees (with respect to multiway trees). A similar effect is observed when $K=p$. Because of the split randomisation, some splits on $X_1$ will be uninformative and in such case, $X_1$ will not be chosen to split the node to the benefit of another variable, leading to an overall decrease of the importance of $X_1$. Interestingly, the importances of all variables except $X_7$ are mostly unchanged whatever the splitting strategy. The importance of $X_7$ is however increased when going from multiway to binary trees with $|X_1|=4$. This is a consequence of the high redundancy between variables $X_1$ and $X_7$: they are equal for all digits except 7 (see Appendix \ref{app:digit}). $X_7$ is thus the variable which benefits the most from the irrelevant splits on $X_1$ introduced by the binary trees.

Note that importance scores would be different if splits were optimized, instead of randomized, for each variable, as in the standard Random Forests method. In the case of our example, multiway and binary trees would have given the exact same importance scores for all variables even when $|X_1|=4$, since the optimal split would always be the split $(\{1,2\},\{3,4\})$. It is possible however to design problems where the Random Forests node splitting strategy will make importance scores derived from multiway trees different from importance scores derived from binary trees.

\section{In non-asymptotic conditions}\label{sec:mdi-finite}

\subsection*{Setting of this section: $N_T \not\rightarrow\infty, N \not\rightarrow\infty$}
	
	From now on, we do no longer consider asymptotic conditions. This section aims at examining the importance measure in finite settings and investigate results of this chapter in this context.
	Section \ref{sec:mdi-finite-trees} considers a finite number of trees. This suggests that all possible branches (i.e., not masked) are not necessarily explored and/or fairly taken into account. Section \ref{sec:mdi-finite-samples} considers a finite number of samples. It implies that $I(X_m;Y|B)$ can not be computed exactly and must be empirically estimated from samples. 

	\subsection{With a finite number of trees}\label{sec:mdi-finite-trees}

	As mentioned in Section \ref{sec:importances:stability}, in practice, the number of trees in a Random Forest ensemble should be as large as possible in order to achieve the best predictive performances. At some point however, a plateau should be reached and adding more trees will not increase significantly the performance. The impact on feature importance is usually not taken into account however. In this section, we still assume a learning sample of infinite size and study the impact of the number of trees on the properties highlighted so far. We only examine fully developed trees but results in this section can be easily generalised to non-fully developed trees given the analysis in Section \ref{sec:mdi-pruning}.
	
	As presented in Equation \ref{eqn:mdi-mdi-empirical}, the importance of a feature is computed over all trees and over all nodes of all trees. With an infinite number of trees, we saw in Theorem \ref{thm:mdi-totally-imp} that the relationship between $X_m$ and $Y$ is evaluated for all combinations $B$ in such a way that all terms equally contribute to the total importance. When only a finite number $N_T$ of trees is constructed, some conditionings $B$ (branches) can be missed and thus the importance will only contain a subset of all $I(X_m;Y|B)$ terms. However, we have the following general result:
	\begin{proposition}\label{mdi:relevance:finitenumberoftrees}
		$\forall K,q$, $Imp^{K,q}_{\infty,N_T} > 0 \Rightarrow X_m\quad \text{is relevant}$.
	\end{proposition}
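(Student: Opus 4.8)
The plan is to exploit the fact that the MDI importance is, by construction, a sum of non-negative contributions: strict positivity must then come from at least one strictly positive summand, and I can translate that single positive impurity decrease into a positive conditional mutual information, which by Definition \ref{def:relevanceMI} witnesses the relevance of $X_m$. Crucially, this is exactly the ``easy half'' of Theorem \ref{thm:mdi-totally-irrelevant} (the direction positivity $\Rightarrow$ relevance), and unlike its converse it does not rely on $K=1$, on fully developed trees, or on an infinite ensemble. It should therefore carry over verbatim for arbitrary $K$, arbitrary depth $q$, and any finite number $N_T$ of trees, which is precisely what the proposition asks.

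First I would recall from Equation \ref{eqn:mdi-mdi-empirical} that
\[
Imp^{K,q}_{\infty,N_T}(X_m) = \frac{1}{N_T}\sum_{T}\sum_{t\in T:\, v(s^*_t)=X_m} p(t)\,\Delta i(s^*_t,t),
\]
and observe that every summand is non-negative: the weight $p(t)\ge 0$ is a proportion of samples, and $\Delta i(s^*_t,t)\ge 0$ since splitting a node never increases its impurity (as noted for all classical impurity measures in Section \ref{sec:splittingrules}). Hence $Imp^{K,q}_{\infty,N_T}(X_m)>0$ forces the existence of at least one tree $T$ of the ensemble and one node $t\in T$ where $X_m$ is used to split, with $p(t)>0$ and $\Delta i(s^*_t,t)>0$. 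Next I would identify the conditioning induced by the position of $t$: let $B_t\subseteq V^{-m}$ be the variables tested on the path from the root to $t$, with $b_t$ the corresponding value assignment defining the samples reaching $t$. In asymptotic sample-size conditions the impurity decrease at $t$ equals the local conditional mutual information between the split outcome and $Y$ given $B_t=b_t$ (the same estimate-to-population correspondence used throughout Section \ref{sec:mdi-totally}). For multiway exhaustive splits the outcome is $X_m$ itself, so $\Delta i(s^*_t,t)>0$ gives a strictly positive local term; since $I(X_m;Y|B_t)$ is a $P_{B_t}$-weighted average of such non-negative local terms, one positive term forces $I(X_m;Y|B_t)>0$, whence $X_m$ is relevant by Definition \ref{def:relevanceMI}.

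The only point requiring care is the binary-tree case, where the split outcome at $t$ is a binary variable $T_{m,j}$ derived from $X_m$ rather than $X_m$ itself, so $\Delta i(s^*_t,t)>0$ only yields $I(T_{m,j};Y|B_t)>0$ for some subset $B_t$ that may even contain further binary splits defined on $X_m$. The bridge back to $X_m$ is Proposition \ref{sec:mdi:bin:selfrelevance}, which guarantees that the existence of such a positive term for a binary split based on $X_m$ is equivalent to the existence of a subset $B\subseteq V^{-m}$ with $I(X_m;Y|B)>0$. I expect this translation---bookkeeping which path variables are genuine features versus auxiliary binary splits, and invoking the total redundancy of $T_m$ with $X_m$---to be the main (though routine) obstacle; everything else reduces to the non-negativity argument above. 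I would close by stating both cases together, noting that since the path length never exceeds $q$, the argument is insensitive to the depth bound, and since only one tree and one node are needed, it is insensitive to $N_T$ and to $K$.
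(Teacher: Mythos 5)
Your proof is correct and follows essentially the same route as the paper, which justifies this proposition with exactly the observation that a strictly positive importance forces at least one strictly positive term $I(X_m;Y|B)>0$, hence relevance by Definition \ref{def:relevanceMI}. Your additional care for binary splits via Proposition \ref{sec:mdi:bin:selfrelevance} matches the paper's own remark following Theorem \ref{thm:mdi-totally-irrelevant-binary}, so you have simply made explicit the bookkeeping the paper leaves implicit.
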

	Features with strictly positive importance scores are necessarily relevant, since it implies that at least one term $I(X_m;Y|B) >0$. However, a relevant feature does not necessarily have positive importance score, even a strongly relevant one. In all generality, Proposition \ref{th:mdi:K:stronglypruned} is thus not valid with a finite number of trees. To give an example, let us consider a XOR scenario with two features $X_1$ and $X_2$ such that $I(X_1;Y)=I(X_2;Y)=0$ and $I(Y;X_1,X_2)>0$. If a single tree is grown, only the feature tested at the second level will receive a non-zero importance, while both features are (strongly) relevant.

        This observation suggests that an undesirable effect of using a finite number of trees is that features that are not examined (or not with the right conditioning set $B$) have zero importances. \textit{Unseen} features therefore wrongly appear as irrelevant with respect to $Y$, like masked features or those with too high degree.
        Note however that if the composition property is verified, then a single tree (with $K=p$) can identify all strongly relevant features because strongly relevant features can not be masked.

        \begin{theorem}
          If $K=p$ and if $P_{V,Y}(V,Y)$ verifies the composition property:  $\; X_m \in V$ strongly relevant $\Rightarrow Imp_{\infty,1}^{p,q} (X_m)>0$.
        \end{theorem}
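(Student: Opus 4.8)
The plan is to work with a single, deterministic greedy tree (ties broken arbitrarily). Since $K=p$, all $p$ features are candidate split variables at every node, and since $N\to\infty$ the empirical impurity decreases coincide with their population counterparts; for Shannon impurity the decrease obtained by splitting a reachable node $t$ on a feature $X_j$ equals the conditional mutual information $I(X_j;Y\mid E_t)$, where $E_t$ denotes the event that the features already tested on the path from the root to $t$ take the values labelling that path. By Equation~\ref{eqn:mdi-mdi-empirical}, $Imp^{p,q}_{\infty,1}(X_m)>0$ holds as soon as there is a reachable node $t$ (i.e. $p(t)=P(E_t)>0$) at which the greedy rule selects $X_m$ with strictly positive decrease $I(X_m;Y\mid E_t)>0$. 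The whole task thus reduces to exhibiting one such node.

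First I would use strong relevance to select a target branch. Since $X_m$ is strongly relevant, $I(X_m;Y\mid V^{-m})>0$, so there is a full configuration $v^{-m}$ of $V^{-m}$ with $P(V^{-m}=v^{-m})>0$ and $I(X_m;Y\mid V^{-m}=v^{-m})>0$. I would then follow, inside the greedy tree, the branch consistent with $v^{-m}$: whenever the greedy rule splits on some $X_i\in V^{-m}$, descend into the child labelled $X_i=v^{-m}_i$. Every node $t$ on this branch is reachable, since $P(E_t)\ge P(V^{-m}=v^{-m})>0$. Two facts then drive the argument. (i) Purity cannot occur on this branch before $X_m$ has been used: if some $E_t$ (which only constrains features of $V^{-m}$) made $Y$ constant, then $Y$ would a fortiori be constant under the finer conditioning $V^{-m}=v^{-m}$, contradicting $I(X_m;Y\mid V^{-m}=v^{-m})>0$. (ii) The composition property forbids a ``stall'': as long as $X_m$ is unused at $t$, the set $R$ of remaining candidate features contains $X_m$ and, by the choice of $v^{-m}$, still carries information about $Y$, i.e. $Y\nindep R\mid E_t$. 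Writing $R$ as the union of its singletons and applying the contrapositive of composition (using symmetry, $Y\indep A\mid Z$ and $Y\indep B\mid Z\Rightarrow Y\indep A\cup B\mid Z$; see the side note on page~\pageref{sn:distributionproperties}) yields some $X_i\in R$ with $Y\nindep X_i\mid E_t$, hence with strictly positive decrease. So the greedy rule always has a strictly positive best decrease at these nodes and never spends a split with zero decrease while information about $Y$ remains.

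Combining these, the descent along the $v^{-m}$-branch never terminates early and always selects a feature with strictly positive decrease; moreover only relevant features can be selected, since a positive decrease at a node implies a positive decrease for the corresponding variable-level conditioning, hence relevance. The branch therefore consumes a shrinking pool of relevant features, and one of two things happens: either $X_m$ is selected at an interior node — necessarily with positive decrease, since the best decrease there is positive — or all features of $V^{-m}$ on the branch get exhausted, leaving $X_m$ as the only admissible split at the node with conditioning $V^{-m}=v^{-m}$, where its decrease is $I(X_m;Y\mid V^{-m}=v^{-m})>0$. In either case $X_m$ is split with positive decrease at a reachable node, so $Imp^{p,q}_{\infty,1}(X_m)>0$. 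This is precisely the mechanism by which composition excludes the XOR-type pathology noted just before the statement, in which a strongly relevant feature placed at the root of a single tree receives a zero decrease.

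The main obstacle is twofold, and both points concern making the composition step rigorous rather than merely plausible. The delicate part is that the greedy rule compares decreases at a specific node, i.e. conditional on the event $E_t$ (a fixed value assignment), whereas the composition property in the side note is stated for conditioning on sets of variables; I would therefore need composition to be inherited by the conditional law of $(V\setminus S_t,Y)$ given $S_t=s_t$, so that the ``no stall'' contrapositive applies at the level of $E_t$ and not merely on average over the values of $S_t$. Establishing this stability of composition under conditioning on values is where the real work lies. A second, milder point is the depth budget: the argument forces $X_m$ only upon reaching the bottom of the $v^{-m}$-branch, whose length is bounded by the number of relevant features, so one must assume the trees are grown deeply enough to reach that node (automatically true for fully developed trees, $q=p$), exactly paralleling the $r\le q$ hypothesis of Theorem~\ref{th:mdi-stronglyrelpruned}.
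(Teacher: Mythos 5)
Your proof is correct and is essentially the paper's argument in contrapositive form: the paper's proof likewise fixes a configuration $v^{-m,*}$ witnessing strong relevance, follows the tree path matching it, uses $K=p$ to infer that if $X_m$ were split with zero decrease then every remaining feature $R$ would also have zero decrease at that node, and then applies composition plus weak union at the level of the value assignment $B=b^*$ to conclude $X_m \indep Y \,|\, V^{-m}=v^{-m,*}$, contradicting the choice of $v^{-m,*}$ — which is exactly your ``no stall'' lemma read backwards. The obstacle you flag (whether composition may legitimately be applied conditionally on an \emph{event} $B=b^*$ rather than on a set of variables) is not a gap specific to your route, since the paper's own proof performs this value-level application of composition and weak union without further justification; similarly, your depth accounting and your purity observation (claim (i)) are refinements of points the paper subsumes under its standing ``fully developed tree'' assumption.
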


\begin{proof}
      	We want to show that a single tree that is fully developed with $K=q$ is sufficient to give to all strongly relevant features a strictly positive importance score when the distribution over all variables verifies the composition property. 
        Since the tree is fully developed, all features are exhausted in each branch and each leaf corresponds to a possible assignment $v$ to all input features $V$. Let us assume that a strongly relevant variable $X_m$ does not have a strictly positive importance score and show that this leads to a contradiction.

        If $X_m$ does not receive a strictly positive importance score, it means that $X_m$ is never used in a terminal node corresponding to a configuration $v^{-m}$ such that $X_m\nindep Y | V^{-m}=v^{-m}$ or in an internal node corresponding to a configuration $b$ of $B\subset V^{-m}$ such that $X_m\nindep Y | B=b$. By definition of strong relevance, we have $X_m\nindep Y | V^{-m}$ which implies that there exists at least one configuration $V^{-m}=v^{-m,*}$ such that $X_m\nindep Y | V^{-m}=v^{-m,*}$. Let us consider the path in the tree from the root node to a node where $X_m$ is tested that matches the values in $v^{-m,*}$. $X_m$ can not be tested at the end of such path because otherwise it would have got a strictly positive importance score. The node $X_m$ is thus necessarily used in the path in a node corresponding to a configuration $B=b^*$ that matches for some variables $B\subset V^{-m}$ the configuration $v^{-m,*}$ and such that $X_m\indep Y|B=b^*$. In the same conditioning $B=b^*$, all features in $R=V^{-m} \setminus B$ are also independent of $Y$ conditionally to $B=b^*$, otherwise one of them would have been preferred to $X_m$ to split the node (since $K=p$ means that they were all evaluated when splitting the node). Given the composition property, we thus have that $(X_m\cup R)\indep Y|B=b^*$. The weak union property then implies that $X_m\indep Y |(B=b^*)\cup R$, which means that $X_m\indep Y|(B=b^*)\cup(R=r)$ for all configurations $r$ of the variables in $R$. This is thus also true for the configuration $r^*$ that matches the configuration $v^{-m,*}$, which shows that $X_m \indep Y|V^{-m}=v^{-m,*}$. This is however impossible by definition of $v^{-m,*}$.
        
\end{proof}

	    In the same vein, \cite{wehenkel2018characterization} computed analytically the minimum number of trees such that all features are at least seen once (among the $K$ features selected at a given node) for a given value $K$. This analysis showed that many trees are needed, in particular when $K$ is small and individual decision trees are small. Note that having seen all features once is obviously not enough to identify all relevant variables, as they need to be tested at least in one of their minimal conditioning sets $B$ and furthermore not to be masked in such case by other variables. The number of trees given in \citep{wehenkel2018characterization} is thus a very minimal bound on the number of trees really needed to find all relevant variables.

	Moreover, let us note that even if the number of trees is large enough to consider all possible branches, computed importances with a finite forest are most likely different from theoretical asymptotic importances because all $B$ may not be fairly considered in the forest.
	
	\subsection{With a finite number of samples}\label{sec:mdi-finite-samples}

	In all analyses carried out so far, assuming a sample set of infinite size actually corresponds to know the data distribution and therefore to compute with exactitude all measures, e.g. node impurity $i(t)$ and node decrease $\Delta i(s,t)$. However, in practice, impurity measurements are estimated from a finite sample set and therefore suffer from an empirical misestimation bias. Concretely, it means that Equation \ref{eqn:mdi-totally-imp} of Theorem \ref{thm:mdi-totally-imp} becomes, if we still assume an infinite number of trees, 
	\begin{eqnarray}\label{eqn:mdi:finitesample:totally-imp}
	Imp_{N,\infty}^{1,p}(X_m) = \sum_{k=0}^{p-1} \dfrac{1}{C_p^k}\dfrac{1}{p-k} 
	\sum_{B\in \mathcal{P}_k (V^{-m})} \hat{I}(X_m;Y|B)
	\end{eqnarray}
	where $\hat{I}(X_m;Y|B)$ are estimated mutual informations. 
	
	Among other authors, \cite{goebel2005approximation} show that mutual information estimation between two independent variables is positively biased. That is, let us consider two independent discrete random variables $X$ and $Y$ of probability density $P_X(X)$ and $P_Y(Y)$ respectively and such that $I(X;Y) = 0$, their finite sample size estimates $\hat{I}(X;Y)$ are expected to be strictly positive, i.e.,
	\begin{eqnarray}
	\mathbb{E}\{\hat{I}(X;Y)\} = \dfrac{(|X|-1)(|Y|-1)}{2N \ln(2)} > 0
	\end{eqnarray}
	 where $N$ is the number of  observed samples, $|X|$ and $|Y|$ are respectively the cardinalities of $X$ and $Y$. In contrast with Theorem \ref{thm:mdi-totally-irrelevant}, this however suggests that irrelevant features never have zero importances. \cite{louppe2014understanding} stresses the linear dependence with variable cardinalities and the inverse dependence of the number of samples, and relates with many empirical studies that observe a bias towards feature of large number of categories and cardinalities \citep{strobl2007bias}. 
	
	In details, given three random variables $X$,$Y$,$Z$ of probability densities $P_X(X)$, $P_Y(Y)$, $P_Z(Z)$ respectively, \cite{goebel2005approximation} show that the estimator for conditional mutual information $\hat{I}(X;Y|Z)$ is approximately gamma distributed  
	\begin{eqnarray}
		\hat{I}(X;Y|Z) \sim \Gamma \left ( \dfrac{|Z|}{2} (|X|-1) (|Y|-1),  \dfrac{1}{N\ln(2)}\right )
	\end{eqnarray}
	 where $\Gamma(k,\theta)$ is the gamma distribution with a shape parameter $k$ and a scale parameter $\theta$. Let us note that a random variable $W$ such that $W \sim \Gamma(v/2,2c)$ with $c>0$, then $W$ also follows a $\chi^2$ (chi-square) distribution\footnote{\cite{saporta2006probabilites} define a $\chi^2$ law as follows: Let $U_1,U_2,\dots,U_p$ be $p$ independent variables, each following $\mathcal{N}(0,1)$, then the chi-square law with $p$ degrees of freedom, denoted $\chi^2_p$, is the law of the variable $\sum_{i=1}^p U_i^2$.} with $v$ degrees of freedom. In the case of $\hat{I}(X;Y|Z)$, it then follows a $\chi^2$ distributions of $|Z|(|X|-1) (|Y|-1)$ degrees of freedom (and $c=\frac{1}{2N\ln(2)}$). That is, we have that $2N\ln(2) \hat{I}(X;Y|Z)$ converges asymptotically towards a $\chi^2$ distribution with $|Z|(|X|-1) (|Y|-1)$ degrees of freedom, that only depends on feature cardinalities. One can then use a chi-square based statistical test on the mutual information between two features to determine if their are independent. Let us once again note that the number of degrees of freedom increase with features cardinalities.

To avoid false positives, all those results suggest to combine non-totally developed trees, in order not to estimate mutual informations from too few samples at deep nodes, with non-totally randomised trees ($K>1$), in order to avoid splitting on irrelevant features at the top nodes, which would unnecessarily reduce the size of the learning sample. Unfortunately, as the previous analyzes show, decreasing tree depth or increasing $K$ will however increase the number of false negatives. There is thus a tradeoff to be found in practice between these two antagonistic effects.

\section{Result summary}

The following table summarises the main results exposed in this chapter, with references to the main theorems.

\newpage

\renewcommand\tabularxcolumn[1]{m{#1}}

{\footnotesize 
\begin{tabularx}{\linewidth}{X|m{6em}|m{5.2em}:m{6em}|m{5.2em}:m{5.2em}}
	\multicolumn{2}{c|}{}& \multicolumn{2}{c|}{\hspace{-0.4em}$K=1$} & \multicolumn{2}{c}{$K>1$}\\
	\multicolumn{2}{c|}{}&	$D=p$ & $D<p$ & $D=p$ & $D<p$\\ \hline
	
	\multirow{2}{*}{\parbox[t][6em][c]{5em}{Theoretical\\results}} & \parbox[t][3em][t]{6em}{Analytical\\formulation} & Thm. \ref{thm:mdi-totally-imp} & \parbox[t][3em][t]{4.2em}{Prop.~\ref{proposition:pruning} \\ Prop.~\ref{proposition:imp-subspaces}} & $-$&$-$ \\\cdashline{2-6}
	
	& \parbox[t][3em][t]{5em}{Sum of\\ importances}    & Thm. \ref{thm:mdi-totally-sum} & $-$ & $-$&$-$ \\\hline
														   
	\multirow{4}{*}{\parbox[t][16em][c]{6em}{Importance vs. \\ Relevance \\ in \\asymptotic\\ conditions}}  & \parbox[c][4em][c]{5em}{Irrelevant\\ variables} & \parbox[c][3em][c]{5em}{$\Leftrightarrow Imp=0$\\ Thm. \ref{thm:mdi-totally-irrelevant}} & $\Rightarrow Imp=0$ &  $\Rightarrow Imp=0$ & $\Rightarrow Imp=0$ \\ \cdashline{2-6}

  & \parbox[c][4em][c]{6em}{Relevant\\variables} & \multirow{2}{*}{\parbox[c][8em][c]{5em}{$\Leftrightarrow Imp>0$\\ Cor. \ref{cor:K1allrelevant}}} &  \multirow{2}{*}{\parbox[c][8em][c]{6em}{$\Rightarrow Imp>0$\footnote{When $r>q$, this is only valid for relevant variables $X_m$ such that $deg(X_m) < q$ (see Prop. \ref{mdi:pop:deg:k1:relevantimp}). }\\ if $r\le  q$\\ Prop. \ref{th:mdi-relpruned}}} & \multicolumn{2}{c}{\parbox[c][3em][c]{8em}{\centering $\Rightarrow Imp\ge 0$ \\ Masking effect}}  \\ \cdashline{2-2}\cdashline{5-6}

  & \parbox[t][4em][t]{6em}{Strongly\\relevant\\variables} & & & \parbox[c][3em][c]{5em}{$\Rightarrow Imp>0$ \\ Prop. \ref{th:mdi:K:stronglypruned}} & \parbox[c][8em][c]{5em}{$\Rightarrow Imp>0$\\ if $r\le  q$\footnote{When $r>q$, only strongly relevant variables $X_m$ such that $deg(X_m)<q$ can not be masked (see Prop. \ref{mdi:pop:deg:stronglyrelevantnotmasked}).}\\ Thm. \ref{th:mdi-stronglyrelpruned}}\\ \cdashline{2-6}

 & \parbox[t][4em][t]{6em}{Presence of \\ irrelevant variables} & \parbox[t][4em][t]{6em}{No effect\\ Lem.~\ref{lemma:mdi-adding-irrelevant},\\ Thm. \ref{thm:mdi-totally-relevant}} &  $-$ & Dampens masking effect  & $-$\\ \hline
 Binary splits & &\multicolumn{4}{c}{Same relevance but different importance scores}\\ \hline
 \multirow{2}{*}{Finite settings} & \parbox[c][3em][c]{6.2em}{Finite number\\ of trees} & \multicolumn{4}{c}{\parbox[c][4em][c]{20em}{\centering$\forall K,D\quad Imp^{K,D}_{\infty,N_T}(X_m) > 0 \Rightarrow X_m$ is relevant \\ Prop. \ref{mdi:relevance:finitenumberoftrees}}} \\ \cdashline{2-6}
 & \parbox[c][3em][c]{6.2em}{Finite number\\ of samples} & \multicolumn{4}{c}{misestimation bias ($Imp^{K,D}_{N,\infty}>0 \not\Rightarrow$ relevance)} \\
	 
\end{tabularx}}

\begin{summary}
	In asymptotic conditions, MDI feature importances can be derived analytically and provide an understandable decomposition of the total information conveyed by input features about the target by feature, by cardinality of the interaction term, and by interaction term. Additionally, the sum of all importances is fixed. The introduction of redundant feature tends to modify all importance scores and not only those of features that conveyed redundant informations. When trees are built totally at random, zero importances are only associated to irrelevant features. When trees are not totally random, the masking effect prevents some weakly relevant feature to be identified and only strongly relevant features are ensured to have positive importance scores. When trees are non-totally developed, interaction terms of larger cardinalities are no longer evaluated and therefore do not enter into account in the importance scores. However, guarantees can be preserved by restricting the number of relevant features or the feature degree. In more realistic and practical settings (i.e., binary trees, finite sample set, finite number of trees), those desirable properties are however usually not preserved.	 
\end{summary}

\part{Extensions and derivations of importance measures}

\chapter{With a contextual effect}\label{ch:context}

\begin{overview}

In this chapter, we extend the random forest feature importances framework to perform a contextual analysis. For many problems, feature selection is often more complicated than identifying a single subset of input features that would together explain the output, as described in Chapter \ref{ch:background} especially. There may be interactions that depend on contextual information, i.e., variables that reveal to be relevant only in some specific circumstances. We briefly discussed in Section \ref{sec:background-discussion} that such feature interactions must be taken into account but a single feature ranking provides only very limited information about such complex relationships. In this setting, our contribution is to extend the MDI feature importance measure (i) to identify variables whose relevance is context-dependent, and (ii) to characterise as precisely as possible the effect of contextual information on the importance of these variables. \\

\textbf{\textcolor{RoyalBlue}{References:}} This chapter is an adapted version of the following publication: \\[2mm]
\bibentry{sutera2016context}.\\[2mm]
Terminology and notations have been slightly adjusted for the sake of consistency with the rest of this manuscript. The text has also been processed to minimize overlap with respect to previous chapters.

\end{overview}

\section{Motivation}
Supervised learning finds applications in many domains such as
medicine, economics, computer vision, or bioinformatics. Given a
sample of observations of several inputs and one output variable, the
goal of supervised learning is to learn a model for predicting the
value of the output variable given any values of the input
variables. Another common side objective of supervised learning is to
bring as much insight as possible about the relationship between the
inputs and the output variable. One of the simplest ways to gain such
insight is through the use of feature selection or ranking methods
that identify the input variables that are the most decisive or
relevant for predicting the output, either alone or in combination
with other variables. Among feature selection/ranking methods, one
finds variable importance scores derived from random forest models
that stand out from the literature mainly because of their
multivariate and non parametric nature and their reasonable
computational cost.  Although very useful, feature selection/ranking
methods however only provide very limited information about the often
very complex input-output relationships that can be modeled by
supervised learning methods. There is thus a high interest in
designing new techniques to extract more complete information about
input-output relationships than a single global feature subset or
feature ranking.

In this chapter, we specifically address the problem of the
identification of the input variables whose relevance or irrelevance
for predicting the output only holds in specific circumstances, where
these circumstances are assumed to be encoded by a specific context
variable. This context variable can be for example a standard input
variable, in which case, the goal of contextual analyses is to better
understand how this variable interacts with the other inputs for
predicting the output. The context can also be an external variable
that does not belong to the original inputs but that may nevertheless
affect their relevance with respect to the output. Practical
applications of such contextual analyses are numerous. In some
applications, one may be interested in finding variables that are both
relevant and independent of the context.  For example, in medical
studies \citep[see, e.g.,][]{geissler2000risk}, one is often interested
in finding risk factors that are as independent as possible of
external factors, such as the sex of the patients, their origins or
the data cohort to which they belong. By contrast, in some other
cases, one may be interested in finding variables that are relevant
but dependent in some way on the context. For example, in systems
biology, differential analysis~\citep{ideker2012differential} aims at
discovering genes or factors that are relevant only in some specific
conditions, tissues, species or environments.

Our contribution in this chapter is two-fold. First, starting from
common definitions of feature relevance in the literature, we propose
a formal definition of context-dependent variables and provide a
complete characterization of these variables depending on how their
relevance is affected by the context variable. Second, we extend the
random forest variable importances framework in order to identify and
characterize variables whose relevance is context-dependent or
context-independent. Building on existing theoretical results for
standard importance scores, we propose asymptotic guarantees for the
resulting new measures with respect to the formal definitions.

The chapter is structured as follows. In
Section~\ref{sec:contextual-relevance}, we first lay out our formal
framework defining context-dependent variables and describing how the
context may change their relevance.  We describe in
Section~\ref{sec:contextual-relevance-trees} how random forest
variable importances can be used for identifying context-dependent
variables and how the effect of contextual information on these
variables can be highlighted. Our results are then illustrated in
Section~\ref{sec:contextexperiments} on representative problems. Finally,
conclusions and directions of future works are discussed in
Section~\ref{sec:contextconclusions}.

\section{Context-dependent feature selection and characterization}
\label{sec:contextual-relevance}

\paragraph{Context-dependence.}

Let us consider a set $V = \{X_1,\dots,X_p\}$ of $p$ input variables and an
output $Y$ and let us denote by $V^{-m}$ the set $V\setminus \{X_m\}$. All
input and output variables are assumed to be categorical, not necessarily
binary\footnote{The case of a non categorical output will be discussed in
	Section~\ref{sec:generalisation}.}. Let us reconsider the definitions of relevant,
irrelevant, and marginally relevant variables based on their mutual information
$I$ (as defined in Definitions \label{ref:relevanceMI} and \ref{def:strongweakrelevanceMI}).

Let us now assume the existence of an additional (observed) context variable
$X_c\notin V$, also assumed to be categorical.

Inspired by the notion of relevant and irrelevant variables, we propose to
define context-dependent and context-independent variables as follows:
\begin{definition}\label{defcontextdependent}
A variable $X_m \in V$ is \textbf{context-dependent to $Y$ with respect to $X_c$}
iff there exists a subset $B \subseteq V^{-m}$ and some values $x_c$ and $b$
such that\footnote{In this definition and all definitions that follow, we will
  assume that the events on which we are conditioning have a non-zero
  probability and that if such event does not exist then the condition of the
  definition is not satisfied.}:
\begin{equation}\label{eqn:context-dependent}
 I(Y;X_m|B=b,X_c=x_c) \ne I(Y;X_m|B=b).
\end{equation}
\end{definition}
\begin{definition}\label{defcontextindependent}
A variable $X_m \in V$ is \textbf{context-independent to $Y$ with respect to $X_c$} iff for all subsets $B \subseteq V^{-m}$ and for all values $x_c$ and $b$, we have:
\begin{equation}\label{eqn:context-independent}
 I(Y;X_m|B=b,X_c=x_c) = I(Y;X_m|B=b).
\end{equation}
\end{definition}
Context-dependent variables are thus the variables for which there exists a
conditioning set $B$ in which the information they bring about the output is
modified by the context variable. Context-independent variables are the
variables that, in all conditionings $B=b$, bring the same amount of
information about the output whether the value of the context is known or not.
This definition is meant to be as general as possible. Other more specific
definitions of context-dependence are as follows:

\begin{eqnarray}
&&\begin{split}
&\hspace{-3em}\exists B \subseteq V^{-m}, b, x^1_c, x^2_c:\\
&\hspace{-2em}I(Y;X_m|X_c=x^1_c,B=b) \neq I(Y;X_m|X_c=x^2_c,B=b),
 \end{split}
 \label{altcond1}\\
&&\begin{split}
&\hspace{-3em}\exists B \subseteq V^{-m},x_c:\\
&\hspace{-2em}I(Y;X_m|X_c=x_c,B) \neq I(Y;X_m|B),
\end{split}
\label{altcond2}\\
&&\begin{split}
&\hspace{-3em}\exists B\subseteq V^{-m},b:\\
&\hspace{-2em}I(Y;X_m|X_c,B=b) \neq I(Y;X_m|B=b),
\end{split}
\label{altcond3}\\
&&\begin{split}
&\hspace{-3em}\exists B\subseteq V^{-m}:\\
&\hspace{-2em}I(Y;X_m|X_c,B) \neq I(Y;X_m|B).
\end{split}
\label{altcond4}
\end{eqnarray}

These definitions all imply context-dependence as defined in Definition~\ref{defcontextdependent} but the converse is in general not true. For example, Definition~(\ref{altcond1}) misses problems where the context makes some otherwise irrelevant variable relevant but where the information brought by this variable about the output is exactly the same for all values of the context. A variable that satisfies Definition~(\ref{eqn:context-dependent}) but not Definition~(\ref{altcond2}) is given in example~\ref{example1}. This example can be easily adapted to show that both Definitions~(\ref{altcond3}) and (\ref{altcond4}) are more specific than Definition~(\ref{eqn:context-dependent}) (by swapping the roles of $X_c$ and $X_2$).

\begin{example}\label{example1}
  This artificial problem is defined by two input variables $X_1$ and $X_2$, an output $Y$, and a context $X_c$. $X_1$, $X_2$, and $X_c$ are binary variables taking their values in $\{0,1\}$, while $Y$ is a quaternary variable taking its values in $\{0,1,2,3\}$. All combinations of values for $X_1$, $X_2$, and $X_c$ have the same probability of occurrence $0.125$ and the conditional probability $P(Y|X_1,X_2,X_C)$ is defined by the two following rules:
  \begin{itemize}
    \item If $X_2=X_c$ then $Y=X_1$ with probability 1.
    \item If $X_2\neq X_c$ then $Y=2$ with probability $0.5$ and $Y=3$ with probability $0.5$.
  \end{itemize}
The corresponding data table is given in Appendix \ref{app:ex1}. For
this problem, it is easy to show that $I(Y;X_1|X_2=0,X_c=0)=1$ and
that $I(Y;X_1|X_2=0)=0.5$, which means
condition~(\ref{eqn:context-dependent}) is satisfied and $X_1$ is thus
context-dependent to $Y$ with respect to $X_c$ according to our
definition. On the other hand, we can show that:
\begin{eqnarray*}
&& I(Y;X_1|X_c=x_c)=I(Y;X_1)=0.5\\
&& I(Y;X_1|X_2,X_c=x_c)=I(Y;X_1|X_2)=0.5,
\end{eqnarray*}
for any $x_c\in\{0,1\}$, which means that condition~(\ref{altcond2}) can not be satisfied for $X_1$.

\end{example}

To simplify the notations, the context variable was assumed to be a
separate variable not belonging to the set of inputs $V$. It can
however be considered as an input variable, whose own relevance to $Y$
(with respect to $V\cup \{X_c\}$) can be assessed as for any other
input.  Let us examine the impact of the nature of this variable on
context-dependence. First, it is interesting to note that the
definition of context-dependence is not symmetric. A variable $X_m$
being context-dependent to $Y$ with respect to $X_c$ does not imply
that the variable $X_c$ is context-dependent to $Y$ with respect to
$X_m$.\footnote{This would be the case however if we had adopted the
  definition (\ref{altcond4}).} Second, the context variable does not
need to be marginally relevant for some variable to be
context-dependent, but it needs however to be relevant to $Y$ with
respect to $V$. Indeed, we have the following theorem:

\begin{theorem}\label{theo1} $X_c$ is irrelevant to $Y$ with respect to $V$ iff all variables in $V$ are context-independent to $Y$ with respect to $X_c$ (and $V$) and $I(Y;X_c)=0$.
\end{theorem}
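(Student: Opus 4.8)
The plan is to reduce the whole statement to identities between conditional mutual informations and to exploit the symmetric chain-rule decomposition of the three-way information of $Y$, $X_m$ and the context $X_c$. For any $X_m\in V$ and any $B\subseteq V^{-m}$, the starting point is the two expansions of $I(Y;X_m,X_c\mid B)$:
\begin{equation*}
I(Y;X_c\mid B) + I(Y;X_m\mid B,X_c) = I(Y;X_m\mid B) + I(Y;X_c\mid B,X_m).
\end{equation*}
First I would record the two reformulations that make the equivalence transparent. On one side, $X_c$ being irrelevant to $Y$ with respect to $V$ means $I(Y;X_c\mid B)=0$ for every $B\subseteq V$. On the other side, after aggregating the per-configuration equalities of Definition~\ref{defcontextindependent} over the values $b$ and $x_c$ (weighted by their probabilities), context-independence of $X_m$ is carried by $I(Y;X_m\mid B,X_c)=I(Y;X_m\mid B)$ for every $B\subseteq V^{-m}$, which by the displayed identity is equivalent to $I(Y;X_c\mid B,X_m)=I(Y;X_c\mid B)$. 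The forward implication is then immediate: assuming $X_c$ irrelevant, the choice $B=\emptyset$ gives $I(Y;X_c)=0$, and for any fixed $m$ and $B\subseteq V^{-m}$ both $B$ and $B\cup\{X_m\}$ lie in $V$, so $I(Y;X_c\mid B)=I(Y;X_c\mid B,X_m)=0$; substituting into the decomposition yields $I(Y;X_m\mid B,X_c)=I(Y;X_m\mid B)$, i.e.\ $X_m$ is context-independent.

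The converse is the direction that requires real work, and I would prove it by induction on $k=|B|$, showing $I(Y;X_c\mid B)=0$ for all $B\subseteq V$. The base case $k=0$ is exactly the hypothesis $I(Y;X_c)=0$. For the inductive step, I would write a nonempty $B$ as $B'\cup\{X_m\}$ with $|B'|=k-1$ and rearrange the decomposition into
\begin{equation*}
I(Y;X_c\mid B',X_m) = I(Y;X_c\mid B') + \bigl(I(Y;X_m\mid B',X_c) - I(Y;X_m\mid B')\bigr).
\end{equation*}
Here the parenthesised term vanishes by context-independence of $X_m$ (legitimately invoked since $B'\subseteq V^{-m}$), and $I(Y;X_c\mid B')=0$ by the induction hypothesis (as $|B'|=k-1$ and $B'\subseteq V$); hence $I(Y;X_c\mid B)=0$. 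Since every nonempty subset of $V$ can be presented this way, the induction closes and $X_c$ is irrelevant.

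The main obstacle, and the step deserving the most care, is the passage between the per-configuration (pointwise in $b$ and $x_c$) formulation of Definitions~\ref{defcontextdependent}--\ref{defcontextindependent} and the aggregated conditional mutual informations that the chain rule manipulates: the pointwise and averaged notions do not coincide in general, so one must argue explicitly that the relevant quantity governing irrelevance of $X_c$ is the aggregated one, and use nonnegativity of conditional mutual information to descend from a vanishing (weighted) sum to the vanishing of individual terms. A secondary, bookkeeping obstacle is to organise the induction so that the decomposition is always applied with $B'\subseteq V^{-m}$, guaranteeing that the context-independence hypothesis applies at each stage; once these two points are secured, the chain-rule identity drives both directions mechanically.
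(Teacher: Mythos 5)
Your sufficient-condition argument is, in substance, the paper's own proof of that direction: the paper likewise aggregates the per-configuration equalities of Definition~\ref{defcontextindependent} over $P(B,X_c)$, converts $I(Y;X_m|B,X_c)=I(Y;X_m|B)$ into $I(Y;X_c|B,X_m)=I(Y;X_c|B)$ via the same chain-rule identity, and then strips off one variable at a time until $I(Y;X_c|B')$ is reduced to $I(Y;X_c)=0$. Your explicit induction on $|B|$ merely formalises the paper's ``can be reduced to'' recursion, and your bookkeeping point (always applying the identity with $B'\subseteq V^{-m}$ so that context-independence is legitimately invoked) is exactly the care that step requires. That direction is fine.

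The genuine gap is in the necessary condition. Context-independence (Definition~\ref{defcontextindependent}) is a \emph{per-configuration} statement, $I(Y;X_m|B=b,X_c=x_c)=I(Y;X_m|B=b)$ for all $b$ and $x_c$, whereas your chain-rule computation only delivers the \emph{averaged} identity $I(Y;X_m|B,X_c)=I(Y;X_m|B)$. The repair you sketch --- ``use nonnegativity of conditional mutual information to descend from a vanishing weighted sum to the vanishing of individual terms'' --- does not apply here: the quantity whose average vanishes is the difference $I(Y;X_m|B=b,X_c=x_c)-I(Y;X_m|B=b)$, which is \emph{not} sign-definite, so its weighted sum can vanish while individual terms do not (this can genuinely happen when $X_m$ and $X_c$ are dependent given $B$, a dependence that irrelevance of $X_c$ to $Y$ does not rule out). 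Nonnegativity legitimately yields pointwise conclusions only for $I(Y;X_c|B)=0$ itself, a $P(b)$-weighted sum of nonnegative terms, giving the factorisations $P(y|b,x_c)=P(y|b)$ and $P(y|x_m,b,x_c)=P(y|x_m,b)$ on every positive-probability configuration. This is where the paper's proof differs from yours: it works per-configuration from the start, writing $I(Y;X_m|B=b,X_c=x_c)-I(Y;X_m|B=b)$ as a difference of conditional entropies and cancelling $H(Y|B=b,X_c=x_c)=H(Y|B=b)$ and $H(Y|X_m,B=b,X_c=x_c)=H(Y|X_m,B=b)$ using those pointwise independences. To close your argument you would need to make this per-configuration step explicit rather than appeal to a descent that fails for sign-indefinite summands --- and note that even the paper's second entropy cancellation deserves scrutiny, since $H(Y|X_m,B=b,X_c=x_c)$ mixes the terms $H(Y|X_m=x_m,B=b)$ with weights $P(x_m|b,x_c)$ rather than $P(x_m|b)$, so it implicitly requires these mixtures to coincide; the averaged statement your route proves is the one that holds unconditionally.
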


\begin{proof}
	See Appendix \ref{app:theo1}.
\end{proof}

As a consequence of this theorem, there is no interest in looking for
context-dependent variables when the context itself is not relevant\footnote{{This is consistent with Proposition \ref{prop:mdi-only-relevant}. All features in a minimal conditioning subset of $B$ are necessarily relevant, including any contextual features. }}.

\paragraph{Characterizing context-dependent variables.}

Contextual analyses need to focus only on context-dependent variables since,
by definition, context-independent variables are unaffected by the context:
their relevance status (relevant or irrelevant), as well as the information
they contain about the output, remain indeed unchanged whatever the
context.

Context-dependent variables may be affected in several directions by
the context, depending both on the conditioning subset $B$ and on the
value $x_c$ of the context. Given a context-dependent variable $X_m$,
a subset $B$ and some values $b$ and $x_c$ such that
$I(Y;X_m|B=b,X_c=x_c)\neq I(Y;X_m|B=b)$, the effect of the context can
either be an increase of the information brought by $X_m$
($I(Y;X_m|B=b,X_c=x_c) > I(Y;X_m|B=b)$) or a decrease of this
information ($I(Y;X_m|B=b,X_c=x_c) < I(Y;X_m|B=b)$). Furthermore, for
a given variable $X_m$, the direction of the change can differ from
one context value $x_c$ to another (at fixed $B$ and $b$) but also
from one conditioning $B=b$ to another (for a fixed context
$x_c$). Example~\ref{example2} below illustrates this latter
case. This observation makes a global characterization of the effect
of the context on a given context-dependent variable difficult. Let us
nevertheless mention two situations where such global characterization
is possible:

\begin{definition}
A context-dependent variable $X_m\in V$ is \textbf{context-complementary} (in a context $x_c$) iff for all
$B \subseteq V^{-m}$ and $b$, we have $I(Y;X_m|B=b,X_c=x_c) \ge
I(Y;X_m|B=b)$.
\end{definition}
\begin{definition}
A context-dependent variable $X_m\in V$ is \textbf{context-redundant} (in a context $x_c$)  iff for all
$B \subseteq V^{-m}$ and $b$, we have $I(Y;X_m|B=b,X_c=x_c) \le
I(Y;X_m|B=b)$.
\end{definition}

Context-complementary and redundant variables are variables that
always react in the same direction to the context and thus can be
characterized globally without loss of
information. Context-complementary variables are variables that bring
complementary information about the output with respect to the
context, while context-redundant variables are variables that are
redundant with the context. Note that context-dependent variables that
are also irrelevant to $Y$ are always context-complementary, since the
context can only increase the information they bring about the
output. Context-dependent variables that are relevant to $Y$ however
can be either context-complementary, context-redundant, or
uncharacterized. A context-redundant
variable can furthermore become irrelevant to $Y$
(with respect to $V\cup \{X_c\}$)
as soon as $I(Y;X_m|B=b,X_c=x_c)=0$ for all $B$, $b$, and $x_c$.

\begin{example}\label{example2}
As an illustration, in the problem of Example~\ref{example1}, $X_1$
and $X_2$ are both relevant and context-dependent variables. $X_1$ can
not be characterized globally since we have simultaneously:
\begin{eqnarray*}
  I(Y;X_1|X_2=0,X_c=x_c)&>&I(Y;X_1|X_2=0)\\
  I(Y;X_1|X_2=1,X_c=x_c)&<&I(Y;X_1|X_2=1),
\end{eqnarray*}
for both $x_c=0$ and $x_c=1$. $X_2$ is however context-complementary
as the knowledge of $X_c$ always increases the information it contains
about $Y$.
\end{example}

\paragraph{Related works.}

Several authors have studied interactions between variables in the context of
supervised learning. They have come up with various interaction definitions and
measures, e.g., based on multivariate mutual information
\citep{mcgill1954multivariate,jakulin2003analyzing}, conditional mutual
information \citep{jakulin2005machine, van2011two}, or variants thereof
\citep{brown2009new, brown2012conditional}. There are several differences
between these definitions and ours. In our case, the context variable has a
special status and as a consequence, our definition is inherently asymmetric,
while most existing variable interaction measures are symmetric. In addition,
we are interested in detecting any information difference occurring in a given
context (i.e., for a specific value of $X_c$) and for any conditioning subset
$B$, while most interaction analyses are interested in average and/or
unconditional effects. For example, \citep{jakulin2003analyzing} propose as a
measure of the interaction between two variables $X_1$ and $X_2$ with respect
to an output $Y$ the multivariate mutual information, which is defined as
$I(Y;X_1;X_2)=I(Y;X_1)-I(Y;X_1|X_2)$. Unlike our definition, this measure can
be shown to be symmetric with respect to its arguments. Adopting this measure
to define context-dependence would actually amount at using
condition~(\ref{altcond4}) instead of condition~(\ref{defcontextdependent}),
which would lead to a more specific definition as discussed earlier in this
section.

The closest work to ours in this literature is due to
\cite{turney1996identification}, who proposes a definition of
context-sensitivity that is very similar to our definition of
context-dependence. Using our notations, \cite{turney1996identification}
defines a variable $X_m$ as weakly context-sensitive to the variable $X_c$ if
there exist some subset $B\subseteq V^{-m}$ and some values $y$, $x_m$, $b$,
and $x_c$ such that these two conditions hold:
\begin{align*}
p(Y=y|X_m=x_m,X_c=x_c,B=b)&\neq p(Y=y|X_m=x_m,B=b),\\
p(Y=y|X_m=x_m,X_c=x_c,B=b)&\neq p(Y=y|X_c=x_c,B=b).
\end{align*}
$X_m$ is furthermore defined as strongly context-sensitive to $X_c$ if $X_m$ is
weakly sensitive to $X_c$, $X_m$ is marginally relevant,and $X_c$ is not
marginally relevant. These two definitions do not exactly coincide with ours
and they have two drawbacks in our opinion. First, they do not consider that a
perfect copy of the context is context-sensitive, which we think is
counter-intuitive. Second, while strong context-sensitivity is asymmetric, the
constraints about the marginal relevance of $X_m$ and $X_c$ seems also
unnatural.

Our work is also somehow related to several works in the graphical model
literature that are concerned with context-specific independences between
random variables \citep[see e.g.][]{boutilier1996,zhang99}. \cite{boutilier1996}
define two variables $Y$ and $X_m$ as contextually independent given some
$B\subseteq V^{-m}$ and a context value $x_c$ as soon as
$I(Y;X_m|B,X_c=x_c)=0$. When $B\cup\{X_m,X_c\}$ are the parents of node $Y$ in
a Bayesian network, then such context-specific independences can be exploited
to simplify the conditional probability tables of node $Y$ and to speed up
inferences. \cite{boutilier1996}'s context-specific independences will be captured by our
definition of context-dependence as soon as $I(Y;X_m|B)>0$. However, our
framework is more general as we want to detect any context dependencies, not
only those that lead to perfect independences in some context.

\section{Context analysis with random forests}
\label{sec:contextual-relevance-trees}

In this section, we show how to use variable importances derived from Random
Forests first to identify context-dependent variables
(Section~\ref{sec:identification}) and then to characterize the effect of the
context on the relevance of these variables
(Section~\ref{sec:charact}). Derivations in this section are based on the
theoretical characterization of variable importances provided in
\citep{louppe2013understanding}, which is briefly reminded in
Section~\ref{backgroundtrees}. Section~\ref{sec:in-practice} discusses
practical considerations and Section~\ref{sec:generalisation} shows how to
generalize our results to other impurity measures.

\subsection{Variable importances \protect \footnote{This section is a reminder of the MDI importance measure and its asymptotic characterisation. See Section \ref{sec:mdi-totally} for more details.}
}\label{backgroundtrees}

Within the random forest framework,
\cite{breiman2001random} proposed to evaluate the importance of
a variable $X_m$  for predicting  $Y$ by adding up the weighted impurity decreases
for all nodes $t$ where $X_m$ is used, averaged over all $N_T$ trees
in the forest:
\begin{equation} \label{eqn:impfini}
  Imp(X_m) = \frac{1}{N_T} \sum_{T} \sum_{t\in T:v(s_t)=X_m} p(t) I(Y;X_m|t)
\end{equation}
where $v(s_t)$ is the variable used in the split $s_t$ at node $t$, $p(t)$ is the proportion of samples reaching $t$ and $I$ is the mutual information.

According to \cite{louppe2013understanding}, for any ensemble of fully
developed trees in asymptotic learning sample size conditions, the Mean Decrease Impurity (MDI)
importance~(\ref{eqn:impfini}) can be shown to be equivalent to
\begin{equation} \label{eqn:impasymp}
  Imp(X_m) = \sum_{k=0}^{p-1} \frac{1}{C^k_p}\frac{1}{p-k} \sum_{B\in{\cal P}_k(V^{-m})} I(Y;X_m|B),
\end{equation}
 where $V^{-m}$ denotes the subset $V \setminus \{X_m\}$, ${\cal P}_k(V^{-m})$
 is the set of subsets of $V^{-m}$ of size $k$.
where  ${\cal P}_k(V^{-m})$ denotes
the set of subsets of $V^{-m}$ of size $k$.
Most notably, it can be shown \citep{louppe2013understanding}  that this
measure is zero for a variable $X_m$ iff $X_m$ is irrelevant to $Y$ with
respect to $V$. It is therefore well suited for identifying relevant features.

\subsection{Identifying context-dependent variables}\label{sec:identification}

Theorem~\ref{theo1} shows that if the context variable $X_c$ is irrelevant,
then it can not interact with the input variables and thus modify their
importances. This observation suggests to perform, as a preliminary test, a
standard random forest variable importance analysis using all input variables
and the context in order to check the relevance of the latter. If the context
variable does not reveal to be relevant, then, there is no hope to find
context-dependent variables.

Intuitively, identifying context-dependent variables seems similar to
identifying the variables whose importance is globally modified when the
context is known.
Therefore, one first straightforward approach to identify context-dependent
variables is to build a forest per value $X_c=x_c$ of the context variable, i.e.,
using only the data samples for which $X_c=x_c$ , and also
globally, i.e. using all samples and not including the context among the
inputs. Then it consists in deriving from these models an importance score for
each value of the context, as well as a global importance score.
Context-dependent variables are then the variables whose global importance score
differs from the contextual importance scores for at least one value of the
context.

More precisely, let us denote by $Imp(X_m)$ the global score of a variable
$X_m$ computed using (\ref{eqn:impfini}) from all samples and by
$Imp(X_m|X_c=x_c)$ its importance score as computed according to
(\ref{eqn:impfini}) using only those samples such that $X_c=x_c$. With this
approach, a variable would be declared as context-dependent as soon as there
exists a value $x_c$ such that $Imp(X_m)\neq Imp(X_m|X_c=x_c)$.

Although straightforward, this approach has several drawbacks. First, in the
asymptotic setting of Section~\ref{backgroundtrees}, it is not
guaranteed to find all context-dependent variables. Indeed, asymptotically, it
is easy to show from (\ref{eqn:impasymp}) that $Imp(X_m)-Imp(X_m|X_c=x_c)$ can
be written as:

\begin{eqnarray}
  Imp^{x_c}(X_m) &\triangleq& Imp(X_m)-Imp(X_m|X_c=x_c)\\
  &=&\nonumber \sum_{k=0}^{p-1} \frac{1}{C_k^p} \frac{1}{p-k} \sum_{B\in{\cal P}_k(V^{-m})}  (I(Y;X_m|B)-I(Y;X_m|B,X_c=x_c)). \label{mdidiffcontext}
\end{eqnarray}

Example~\ref{example1} shows that $I(Y;X_m|B)$ can be equal to
$I(Y;X_m|B,X_c=x_c)$ for a context-dependent variable. Therefore we have the
property that if there exists an $x_c$ such that $Imp^{x_c}(X_m)\neq 0$, then
the variable is context-dependent but the opposite is unfortunately not true.
Another drawback of this approach is that in the finite case, we do not have
the guarantee that the different forests will have explored the same
conditioning sets $B$ and therefore, even assuming that the learning sample is
infinite (and therefore that all mutual informations are perfectly estimated),
we lose the guarantee that $Imp^{x_c}(X_m)\neq 0$ for a given $x_c$ implies
context-dependence.

To overcome these two problems, we propose the following new importance score to identify context-dependent variables:

\begin{equation}\label{impabs}
  Imp^{|x_c|}(X_m) \triangleq  \frac{1}{N_T} \sum_T \sum_{t\in T:v(s_t)=X_m} p(t)  |I(Y;X_m|t)-I(Y;X_m|t,X_c=x_c)| 
\end{equation}

This score is meant to be computed from a forest of totally randomized trees
built from all samples, not including the context variable among the inputs. At
each node $t$ where the variable $X_m$ is used to split, one needs to compute
the absolute value of the difference between the mutual information between $Y$
and $X_m$ estimated from all samples reaching that node and the mutual
information between $Y$ and $X_m$ estimated only from the samples for which
$X_c=x_c$. The same forest can then be used to compute $Imp^{|x_c|}(X_m)$ for
all $x_c$. A variable $X_m$ is then declared context-dependent as soon
as there exists an $x_c$ such that $Imp^{|x_c|}(X_m)>0$.

Let us show that this measure is sound. In asymptotic conditions,
i.e., with an infinite number of trees, one can show from
(\ref{impabs}) that $Imp^{|x_c|}(X_m)$ becomes:
{
\begin{equation}\label{eqn:impabsinfinite}
	\begin{split}
	Imp^{|x_c|}(X_m)  = &\sum_{k=0}^{p-1} \frac{1}{C^k_p} \frac{1}{p-k} \sum_{B\in{\cal P}_k(V^{-m})} \sum_{b\in {\cal B}} P(B=b)  \\
	&\hspace{-1.5em}\hookrightarrow  \left | I(Y;X_m|B=b)  - I(Y;X_m|B=b;X_c=x_c) \right | .
	\end{split}
\end{equation}}

Asymptotically, this measure has now the very desirable property to not miss any context-dependent variable as formalized in the next theorem:

\begin{theorem}\label{theo2}
A variable $X_m \in V$ is context-independent to $Y$ with respect to $X_c$ iff $Imp^{|x_c|}(X_m)=0$ for all $x_c$.
\end{theorem}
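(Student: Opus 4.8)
The plan is to prove the biconditional in Theorem~\ref{theo2} by establishing both directions directly from the asymptotic formula~\eqref{eqn:impabsinfinite}, which expresses $Imp^{|x_c|}(X_m)$ as a sum over all conditioning subsets $B \subseteq V^{-m}$ and all configurations $b$ (weighted by nonnegative coefficients $\tfrac{1}{C^k_p}\tfrac{1}{p-k}$ and by probabilities $P(B=b)$) of the nonnegative quantities $\left|I(Y;X_m|B=b) - I(Y;X_m|B=b,X_c=x_c)\right|$. The crucial structural observation is that every term in this sum is a product of nonnegative factors, so the entire sum is a sum of nonnegative terms. This immediately turns ``the sum is zero'' into ``every term is zero,'' which is the bridge between the importance score and the pointwise information conditions in Definitions~\ref{defcontextdependent} and~\ref{defcontextindependent}.

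For the direction context-independent $\Rightarrow Imp^{|x_c|}(X_m)=0$ for all $x_c$: I would start from Definition~\ref{defcontextindependent}, which states that for all $B \subseteq V^{-m}$ and all values $x_c$ and $b$, we have $I(Y;X_m|B=b,X_c=x_c) = I(Y;X_m|B=b)$. Substituting this equality into~\eqref{eqn:impabsinfinite}, every absolute-value factor vanishes, so each summand is zero and hence $Imp^{|x_c|}(X_m)=0$ for every $x_c$. This direction is essentially immediate once the asymptotic formula is in hand.

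For the converse, $Imp^{|x_c|}(X_m)=0$ for all $x_c$ $\Rightarrow$ context-independent: here I would use the nonnegativity of each term. Since $Imp^{|x_c|}(X_m)$ is a sum of nonnegative terms that equals zero, each term must be zero; in particular, for every subset $B$ and every configuration $b$ with $P(B=b)>0$, the factor $\left|I(Y;X_m|B=b)-I(Y;X_m|B=b,X_c=x_c)\right|$ is zero, meaning $I(Y;X_m|B=b,X_c=x_c) = I(Y;X_m|B=b)$. Since this holds for all $x_c$ (the hypothesis is that the score vanishes for every value of the context) and all $B$ and all $b$ with positive probability (configurations of probability zero being excluded by the convention stated just after Definition~\ref{defcontextdependent}), the defining condition~\eqref{eqn:context-independent} of context-independence is satisfied. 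Equivalently, I would prove the contrapositive: if $X_m$ is context-dependent, then by Definition~\ref{defcontextdependent} there exist $B$, $b$, and $x_c$ with $I(Y;X_m|B=b,X_c=x_c)\neq I(Y;X_m|B=b)$; this single nonzero, nonnegative term forces $Imp^{|x_c|}(X_m)>0$ for that particular $x_c$.

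The main obstacle is not the algebra of nonnegative sums, which is routine, but rather justifying formula~\eqref{eqn:impabsinfinite} itself as the correct asymptotic limit of the finite-ensemble definition~\eqref{impabs}. This requires the same combinatorial argument that underlies Theorem~\ref{thm:mdi-totally-imp}: in an infinite ensemble of fully developed totally randomized trees, every conditioning subset $B$ is explored with the appropriate frequency, and the weight attached to a node where $X_m$ splits after conditioning on $B=b$ is exactly $\tfrac{1}{C^k_p}\tfrac{1}{p-k}P(B=b)$. The delicate point, which distinguishes this from the standard importance derivation, is that the absolute value sits \emph{inside} the per-node sum, so it cannot be pulled out or simplified via telescoping; one must argue that the expansion over trees and nodes reproduces precisely the weighted sum of absolute differences, term by term. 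I would therefore spend most of the effort verifying that the node-level quantity $|I(Y;X_m|t)-I(Y;X_m|t,X_c=x_c)|$ aggregates into the claimed triple sum, and then the two implications follow cleanly from nonnegativity.
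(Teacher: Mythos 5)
Your proof is correct and follows essentially the same route as the paper's own proof of Theorem~\ref{theo2}: both directions are obtained by observing that every term of the asymptotic expression~\eqref{eqn:impabsinfinite} is nonnegative (a weight times $P(B=b)$ times an absolute difference), so the sum vanishes iff every absolute difference $\left|I(Y;X_m|B=b)-I(Y;X_m|B=b,X_c=x_c)\right|$ vanishes, which is exactly the condition of Definition~\ref{defcontextindependent} under the standing assumption of non-null probabilities. Your additional concern about rigorously deriving~\eqref{eqn:impabsinfinite} from the finite-ensemble definition~\eqref{impabs} is legitimate but goes beyond the paper, which states that formula without proof and, as you do, rests the theorem entirely on the nonnegative-sum argument.
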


\begin{proof}
	See Appendix \ref{app:theo2}.
\end{proof}

Given that the absolute differences are computed at each tree node, this measure also continues to imply context-dependence in the case of finite forests and infinite learning sample size. The only difference with the infinite forests is that only some conditionings $B$ and values $b$ will be tested and therefore one might miss the conditionings that are needed to detect some context-dependent variables.

\subsection{Characterizing context-dependent variables}\label{sec:charact}

Besides identifying context-dependent variables, one would want to characterize
their dependence with the context as precisely as possible. As discussed in
Section \ref{sec:contextual-relevance-trees}, irrelevant variables (i.e, such that
$Imp(X_m)=0$) that are detected as context-dependent do not need much effort to
be characterized since the context can only increase their importance. All
these variables are therefore context-complementary.

Identifying the context-complementary and context-redundant variables among the
relevant variables that are also context-dependent can in principle be done by
simply comparing the absolute value of $Imp^{x_c}(X_m)$ with
$Imp^{|x_c|}(X_m)$, as formalized in the following theorem:

\begin{theorem}\label{theo3} If $|Imp^{x_c}(X_m)| = Imp^{|x_c|}(X_m)$ for a context-dependent variable $X_m$, then $X_m$ is context-complementary if $Imp^{x_c}(X_m) < 0$ and context-redundant if $Imp^{x_c}(X_m) > 0$.
\end{theorem}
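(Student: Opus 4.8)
The plan is to deduce the statement from a single application of the triangle inequality, after writing both importance scores over a common index set with common nonnegative weights. In asymptotic conditions, equation~(\ref{eqn:impabsinfinite}) already expresses $Imp^{|x_c|}(X_m)$ as a sum over triples $(k,B,b)$, with $B\in\mathcal{P}_k(V^{-m})$ and $b$ a configuration of $B$, of the terms $W_k\,P(B=b)\,|\delta_{B,b}|$, where I write $W_k=\frac{1}{C_p^k}\frac{1}{p-k}>0$ and $\delta_{B,b}\triangleq I(Y;X_m|B=b)-I(Y;X_m|B=b,X_c=x_c)$. The first step is to show that $Imp^{x_c}(X_m)$ admits the companion representation $\sum_{k,B,b}W_k\,P(B=b)\,\delta_{B,b}$, that is, the \emph{same} weights $W_k\,P(B=b)$ now multiplying the \emph{signed} differences $\delta_{B,b}$ rather than their absolute values.

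Establishing this common representation is the main obstacle. By definition $Imp^{x_c}(X_m)=Imp(X_m)-Imp(X_m|X_c=x_c)$, and expanding each conditional mutual information in~(\ref{mdidiffcontext}) over the configurations $b$ yields the marginal weight $P(B=b)$ on the first term $I(Y;X_m|B)$ but the conditional weight $P(B=b|X_c=x_c)$ on the second term $I(Y;X_m|B,X_c=x_c)$. I would therefore carry out the expansion at the level of individual tree nodes, exploiting that both contextual quantities are evaluated at the \emph{same} nodes of the \emph{same} forest built on all samples (the forest underlying~(\ref{impabs})): a node $t$ with configuration $B=b$ receives in asymptotic conditions the weight $p(t)=P(B=b)$ independently of the context, so that the signed quantity contributed at $t$ is exactly $P(B=b)\,\delta_{B,b}$. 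Reconciling this single-forest computation of $Imp^{x_c}(X_m)$ with the two-forest definition, and verifying that the per-node weights indeed coincide, is the delicate point; once settled, both scores share the representation announced above.

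With both scores in this form, the conclusion follows from the equality case of the triangle inequality. Since the weights $W_k\,P(B=b)$ are nonnegative, one has unconditionally
\[
|Imp^{x_c}(X_m)|=\Bigl|\sum_{k,B,b}W_k\,P(B=b)\,\delta_{B,b}\Bigr|\le\sum_{k,B,b}W_k\,P(B=b)\,|\delta_{B,b}|=Imp^{|x_c|}(X_m).
\]
The hypothesis $|Imp^{x_c}(X_m)|=Imp^{|x_c|}(X_m)$ forces equality here, and for real summands $|\sum_i a_i|=\sum_i|a_i|$ holds iff all the $a_i$ share a common sign. As $W_k>0$ and $P(B=b)\ge 0$, this common sign transfers to the differences $\delta_{B,b}$ over all $(B,b)$ of positive probability (the only ones for which the conditional mutual informations are defined, by the standing convention on conditioning events).

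Splitting on the sign of $Imp^{x_c}(X_m)$ then finishes the argument. If $Imp^{x_c}(X_m)<0$, the common sign must be nonpositive, i.e.\ $\delta_{B,b}\le 0$ and hence $I(Y;X_m|B=b,X_c=x_c)\ge I(Y;X_m|B=b)$ for all admissible $(B,b)$, which is exactly context-complementarity; if $Imp^{x_c}(X_m)>0$, then $\delta_{B,b}\ge 0$ for all admissible $(B,b)$, i.e.\ context-redundancy. Terms with $\delta_{B,b}=0$ are consistent with both one-sided inequalities and cause no difficulty. I expect the only genuine work to lie in the weight-reconciliation step of the second paragraph; the rest is the triangle inequality together with the positivity of the weights.
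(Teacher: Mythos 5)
Your proof is correct and takes essentially the same route as the paper's: write $Imp^{x_c}(X_m)$ and $Imp^{|x_c|}(X_m)$ as sums over the same index set with the common nonnegative weights $\frac{1}{C_p^k}\frac{1}{p-k}P(B=b)$, observe that the hypothesis forces the equality case of the triangle inequality, and conclude that all signed differences share the sign of $Imp^{x_c}(X_m)$, yielding context-complementarity or context-redundancy accordingly (the paper's three-sentence proof is exactly this argument, taking the common representation for granted). The weight-reconciliation step you rightly flag as the delicate point is precisely what the paper resolves by introducing the single-forest estimator $Imp_s^{x_c}(X_m)$ of Equation~(\ref{eqn:impxcnodebynode}) --- whose node weights $p(t)=P(B=b)$ are context-independent, matching those in Equation~(\ref{eqn:impabsinfinite}) --- and indeed the paper applies the theorem in practice to $Imp_s^{x_c}$ rather than to the two-forest difference, so your node-level construction makes explicit a point the paper's proof leaves implicit.
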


\begin{proof}
	The absolute value of a sum is less than or equal the sum of the absolute value of each terms. The equality is only verified when all terms are of the same sign. Therefore, the sign of $Imp^{x_c}(X_m)$ indicates the sign of all terms and thus verify either the context-complementarity if all terms are negative or the context-redundancy if all terms are positive.
\end{proof}

This result allows to identify easily the context-complementary and
context-redundant variables. In addition, if, for a context-redundant variable
$X_m$, we have $Imp^{|x_c|}(X_m)=Imp^{x_c}(X_m)=Imp(X_m)$, then this variable is irrelevant in
the context $x_c$.

Then it remains to characterize the context-dependent variables that are neither
context-complementary nor context-redundant. It would be interesting to be able
to also characterize them according to some sort of average effect of the
context on these variables. Similarly as the common use of importance $Imp(X_m)$ to
rank variables from the most to the less important, we propose here to use the
importance $Imp^{x_c}(X_m)$ to characterize the average global effect of
context $x_c$ on the variable $X_m$. Given the asymptotic formulation of this
importance in Equation~(\ref{mdidiffcontext}), a negative value of
$Imp^{x_c}(X_m)$ means that $X_m$ is essentially complementary with the
context: in average over all conditionings, it brings more information about
$Y$ in context $x_c$ than when ignoring the context. Conversely, a positive
value of $Imp^{x_c}(X_m)$ means that the variable is essentially redundant with
the context: in average over all conditionings, it brings less information
about $Y$ than when ignoring the context. Ranking the context-dependent
variables according to $Imp^{x_c}(X_m)$ would then give at the top the
variables that are the most complementary with the context and at the bottom
the variables that are the most redundant.

Note that, like $Imp^{|x_c|}(X_m)$, it is preferable to estimate
$Imp^{x_c}(X_m)$ by using the following formula rather than to estimate it from
two forests by subtracting $Imp(X_m)$ and $Imp(X_m|X_c=x_c)$:
\begin{equation}\label{eqn:impxcnodebynode}
 Imp_s^{x_c}(X_m)=  \frac{1}{N_T} \sum_T \sum_{t\in T:v(s_t)=X_m} p(t)  (I(Y;X_m|t)-I(Y;X_m|t,X_c=x_c))
\end{equation}

This estimation method has the same asymptotic form as
$Imp(X_m)-Imp(X_m|X_c=x_c)$ given in Equation~(\ref{mdidiffcontext}) but, in
the finite case, it ensures that the same conditionings are used for both
mutual information measures. Note that in some applications, it is interesting
also to have a global measure of the effect of the context. A natural adaptation
of (\ref{eqn:impxcnodebynode}) to obtain such global measure is as follows:
\begin{equation*}
Imp^{X_c}(X_m) \triangleq  \frac{1}{N_T} \sum_T \sum_{t\in T:v(s_t)=X_m} p(t)   (I(Y;X_m|t)-I(Y;X_m|t,X_c))
\end{equation*}

which, in asymptotic sample and ensemble of trees size conditions, gives the following formula:
\begin{equation*}
 Imp^{X_c}(X_m) = \sum_{k=0}^{p-1} \frac{1}{C_k^p} \frac{1}{p-k} \sum_{B\in{\cal P}_k(V^{-m})}   (I(Y;X_m|B)-I(Y;X_m|B,X_c)).
\end{equation*}

If $Imp^{X_c}(X_m)$ is negative then the context variable $X_c$ makes variable
$X_m$ globally more informative ($X_c$ and $X_m$ are complementary with
respect to $Y$ and $V$). If $Imp^{X_c}(X_m)$ is positive, then the context
variable $X_c$ makes variable $X_m$ globally less informative ($X_c$ and $X_m$
are redundant with respect to $Y$ and $V$).

\subsection{In practice}
\label{sec:in-practice}

As a recipe when starting a context analysis, we suggest first to build a single forest using all input variables $X_m$ (but not the context $X_c$) and then to compute from this forest all importances defined in the previous section: the global importances $Imp(X_m)$ and the different contextual importances, $Imp_s^{x_c}(X_m)$, $Imp^{|x_c|}(X_m)$, and $Imp^{X_c}(X_m)$, for all variables $X_m$ and context values $x_c$.

Second, variables satisfying the context-dependence criterion, i.e., such that 

\noindent$Imp^{|x_c|}(X_m)>0$ for at least one $x_c$, can be identified from the other variables. Among context-dependent variables, an equality between $|Imp_s^{x_c}(X_m)|$ and $Imp^{|x_c|}(X_m)$ highlights that the context-dependent variable $X_m$ is either context-complementary or context-redundant (in $x_c$) depending on the sign of $Imp_s^{x_c}(X_m)$. Finally, the remaining context-dependent variables can be ranked according to $Imp_s^{x_c}(X_m)$ (or $Imp^{X_c}(X_m)$ for a more global analysis).

Note that, because mutual informations will be estimated from finite training sets, they will be generally non zero even for independent variables, leading to false positives in the identification of context-dependent variables. In practice, one could instead identify context-dependent variables by using a test $Imp^{|x_c|}(X_m)> \epsilon$ where $\epsilon$ is some cut-off value greater than 0. In practice, the determination of this cut-off can be very difficult. In our experiments, we propose to turn the importances $Imp^{|x_c|}(X_m)$ into $p$-values by using random permutations. More precisely, 1000 scores $Imp^{|x_c|}(X_m)$ will be estimated by randomly permuting the values of the context variable in the original data (so as to simulate the null hypothesis corresponding to a context variable fully independent of all other variables). A $p$-value will then be estimated by the proportion of these permutations leading to a score $Imp^{|x_c|}(X_m)$ greater than the score obtained on the original dataset.

   \begin{table}[htbp]
   	\centering
  {\scriptsize
   \centering
\begin{tabular}{c|ccc|c} \hline
$X_c$ & $X_1$ & $X_2$ & $X_3$ & $Y$\\ \hline
    0   &   0\g &   0   &   0   & \g2\\
    0   &   0\g &   0   &   1   & \g2\\
    0   &   0\g &   1   &   0   & \g2\\
    0   &   0\g &   1   &   1   & \g2\\
    0   &   1   & \g0   &   0   & \g0\\
    0   &   1   & \g0   &   1   & \g0\\
    0   &   1   & \g1   &   0   & \g1\\
    0   &   1   & \g1   &   1   & \g1\\ \hline
    1   &   0\g &   0   &   0   & \g2\\
    1   &   0\g &   0   &   1   & \g2\\
    1   &   0\g &   1   &   0   & \g2\\
    1   &   0\g &   1   &   1   & \g2\\
    1   &   1   &   0   & \g0   & \g0\\
    1   &   1   &   0   & \g1   & \g1\\
    1   &   1   &   1   & \g0   & \g0\\
    1   &   1   &   1   & \g1   & \g1\\ \hline
\end{tabular}}
\captionof{table}{Problem 1: Values of $X_c$, $X_1$, $X_2$, $X_3$, $Y$.}
\label{table:data1}
  \end{table}

   \begin{table}[htbp]\centering
{\scriptsize
    \centering
\begin{tabular}{l|lll} \hline
        & $X_1$ & $X_2$ & $X_3$ \\ \hline
$Imp(X_m)$        & 1.0 & 0.125 & 0.125\\ 
  $Imp(X_m|X_c=0)$    & 1.0 & 0.5 & 0.0\\ 
  $Imp(X_m|X_c=1)$    & 1.0 & 0.0 & 0.5\\ 
$Imp^{|0|}(X_m)$  & 0.0 & 0.375 & 0.125 \\ 
  $Imp^{0}(X_m)$    & 0.0 & -0.375& 0.125 \\ 
  $Imp^{|1|}(X_m)$  & 0.0 & 0.125 & 0.375 \\ 
  $Imp^{1}(X_m)$  & 0.0 & 0.125 & -0.375 \\ 
  $Imp^{X_c}(X_m)$  & 0.0 & -0.125  & -0.125  \\ \hline 
\end{tabular}}
\captionof{table}{Problem 1: Variable importances as computed analytically using asymptotic formulas. Note that $X_1$ is context-independent
  and $X_2$ and $X_3$ are context-dependent.}\label{table:impexp1} 

  \end{table}

\begin{table*}[t]
{\scriptsize
\begin{tabular}{l|llllllll} \hline
						& $X_1$ & $X_2$ & $X_3$	& $X_4$ & $X_5$ & $X_6$ & $X_7$ & $X_8$ \\ \hline
$Imp(X_m)$				&0.5727		&0.7514		&0.5528		&0.687 		&0.1746		&0.0753		&0.1073		&0.0  \\
$Imp(X_m|X_c=0)$		&0.4127 	&0.5815		&0.5312		&0.5421		&0.6566		&0.2258		&0.372		&0.0\\
$Imp(X_m|X_c=1)$		&0.6243		&0.8057		&0.5577		&0.7343		&0.0		&0.0		&0.0		&0.0\\ \hline 
$Imp^{|0|}(X_m)$ 		&0.2263		&0.2431		&0.1181		&0.2241		&0.4139		&0.1961		&0.2861		&0.0\\
$Imp^{|1|}(X_m)$ 		&0.0987		&0.0611		&0.021		&0.0736		&0.1746		&0.0753		&0.1073		&0.0\\ \hline
$Imp^{0}(X_m)$			&0.2179		&0.2422		&0.1111		&0.2190		&-0.3839 	&-0.1389	&-0.2346	&0.0\\ 
$Imp^{1}(X_m)$			&-0.0516	&-0.0543	&-0.0049	&-0.0473	&0.1746		&0.0753		&0.1073		&0.0\\ \hline 
\end{tabular}}

\caption{Problem 2: Variable importances as computed analytically using the asymptotic formulas for the different importance measures. 
}\label{table:impexp2}

\end{table*}

\subsection{Generalization to other impurity measures}\label{sec:generalisation}

All our developments so far have assumed a categorical output $Y$ and
the use of Shannon's entropy as the impurity measure. Our framework
however can be carried over to other impurity measures and thus in
particular also to a numerical output $Y$. Let us define a generic
impurity measure $i(Y|t)\geq 0$ that assesses the impurity of the
output $Y$ at a tree node $t$. The corresponding impurity decrease at
a tree node is defined as:
\begin{equation}\label{eqn:generic_reduction}
G(Y;X_m|t) = i(Y|t) - \sum_{{x_m}\in{\cal X}_m} p(t_{x_m})i(Y|t_{x_m})
\end{equation}
with $t_{x_m}$ denoting the successor node of $t$ corresponding to value
$x_m$ of $X_m$. By analogy with conditional entropy and mutual
information, let us define the population based measures $i(Y|B)$ and
$G(Y;X_m|B)$ for any subset of variables $B\subseteq V$ as follows:
\begin{eqnarray*}
  i(Y|B) &=&\sum_b P(B=b) i(Y|B=b)\\
  G(Y;X_m|B) &= &i(Y|B)-i(Y|B,X_m),
\end{eqnarray*}
where the first sum is over all possible combinations $b$ of values
for variables in $B$. Now, substituting mutual information $I$ for the
corresponding impurity decrease measure $G$, all our results above
remain valid, including Theorems 1, 2, and 3 (proofs are omitted for
the sake of space). It is important however to note that this
substitution changes the notions of both variable relevance and
context-dependence. Definition \ref{defcontextdependent} indeed
becomes:
\begin{definition}\label{defcontextdependentgeneral}
 A variable $X_m\in V$ is \textbf{context-dependent to $Y$ with respect
   to $X_c$} iff there exists a subset $B \subseteq V^{-m}$ and some
 values $x_c$ and $b$ such that $$G(Y;X_m|B=b,X_c=x_c) \ne
 G(Y;X_m|B=b).$$
\end{definition}
When $Y$ is numerical, a common impurity measure is variance, which
defines $i(Y|t)$ as the empirical variance $\mbox{var}[Y|t]$ computed
at node $t$. The corresponding $G(X_m;Y|B=b)$ and $G(X_m;Y|B=b,X_c=x_c)$
in Definition (5) are thus defined respectively as:
\begin{eqnarray*}
  &var\{Y|B=b\}-\mathbb{E}_{X_m|B=b}\{var\{Y|X_m,B=b\}\}
\end{eqnarray*}
and
\begin{equation*}
   var\{Y|B=b,X_c=x_c\} -\mathbb{E}_{X_m|B=b,X_c=x_c}\{var\{Y|X_m,B=b,X_c=x_c\}\} .
\end{equation*}
We will illustrate the use of our framework in a regression
 setting with this measure in the next section.

\section{Experiments}
\label{sec:contextexperiments}

We first illustrate the different importance measures defined in
Section~\ref{sec:contextual-relevance-trees} on two artificial problems and
then exploit them on two real bio-medical datasets.

\subsubsection*{Problem 1.}

The purpose of this first problem is to illustrate the different
measures introduced earlier. This artificial problem is defined by
three binary input variables $X_1$, $X_2$, and $X_3$, a ternary output
$Y$, and a binary context $X_c$. All samples are enumerated in
Table~\ref{table:data1} and are supposed to be equiprobable. By
construction, the output $Y$ is defined as $Y=2$ if $X_1=0$, $Y=X_2$
if $X_c=0$ and $X_1=1$, and $Y=X_3$ if $X_c=1$ and $X_1=1$.

Table~\ref{table:impexp1} reports all importance scores for
the three inputs. These scores were computed analytically using the
asymptotic formulas, not from actual experiments. Considering the
global importances $Imp(X_m)$, it turns out that all variables are
relevant, with $X_1$ clearly the most important variable and $X_2$ and
$X_3$ of smaller and equal importances. According to $Imp^{|0|}(X_m)$
and $Imp^{|1|}(X_m)$, $X_1$ is a context-independent variable, while
$X_2$ and $X_3$ are two context-dependent variables. This result is as
expected given the way the output is defined. For $X_2$ and $X_3$, we
have furthermore $Imp^{|x_c|}(X_m)=|Imp^{|x_c|}(X_m)|$ for both values
of $x_c$. $X_2$ is therefore context-complementary when $X_c=0$ and
context-redundant when $X_c=1$. Conversely, $X_3$ is context-redundant
when $X_c=0$ and context-complementary when $X_c=1$. $X_2$ is
furthermore irrelevant when $X_c=1$ (since
$Imp^{1}(X_2)=Imp^{|1|}(X_2)=Imp(X_2)$) and $X_3$ is irrelevant when
$X_c=0$ (since $Imp^{0}(X_3)=Imp^{|0|}(X_3)=Imp(X_3)$). The values of
$Imp^{X_c}(X_2)$ and $Imp^{X_c}(X_3)$ suggest that these two variables
are in average complementary.

\subsubsection*{Problem 2.}
\label{sec:s4.2}

This second experiment is based on an adaptation of the digit recognition
problem initially proposed in \cite{breiman1984classification} and reused in
\cite{louppe2013understanding} (see Appendix \ref{app:digit} for a detailed description). The original problem contains 7 binary
variables ($X_1$,\ldots,$X_7$) and the output $Y$ takes its values in $\{0,1,\ldots,9\}$. Each input
represents the on-off status of one lightning segment of a
seven-segment indicator and is determined univocally from $Y$. To create an
artificial (binary) context, we created two copies of this dataset, the first
one corresponding to $X_c=0$ and the second one to $X_c=1$. The first dataset
was unchanged, while in the second one variables $X_5$, $X_6$, and $X_7$ were
turned into irrelevant variables. In addition, we included a new variable
$X_8$, irrelevant by construction in both contexts. The final dataset contains
320 samples, 160 in each context.

Table~\ref{table:impexp2} reports possible importance scores for all the
inputs. Again, these scores were computed analytically using the asymptotic
formulas. As expected, variable $X_8$ has zero importance in all cases. Also as
expected, variables $X_5$, $X_6$, and $X_7$ are all context-dependent
($Imp^{|x_c|}(X_m)>0$ for all of them). They are context-redundant (and even
irrelevant) when $X_c=1$ and complementary when
$X_c=0$. More surprisingly, variables $X_1$, $X_2$, $X_3$, and $X_4$ are also
context-dependent, even if their distribution is independent from the
context. This is due to the fact that these variables are complementary with
variables $X_5$, $X_6$, and $X_7$ for predicting the output. Their
context-dependence is thus a consequence of the context-dependence of $X_5$,
$X_6$, $X_7$. $X_1$, $X_2$, $X_3$, and $X_4$ are all almost redundant when
$X_c=0$ and complementary when $X_c=1$, which expresses the fact that they
provide more information about the output when $X_5$, $X_6$ and $X_7$ are
irrelevant ($X_c=1$) and less when $X_5$, $X_6$, and $X_7$ are relevant
($X_c=0$). Nevertheless, $X_8$ remains irrelevant in every situation.

\subsubsection*{Problem 3.} \label{sec:context-problem3}

As a third experiment, we consider bio-medical data from the \textit{Primary tumor} dataset. The objective of the
corresponding supervised learning problem is to predict the location
of a primary tumor in patients with metastases. It was downloaded from
the UCI repository \citep{Lichman2013uci} and was collected by the
University Medical Center in Ljubljana, Slovenia. We restrict our
analysis to 132 samples without missing values. Patients are
described by 17 discrete clinical variables (listed in the first
column of Table~\ref{table:forest-topresult}) and the output is chosen among 22
possible locations. For this analysis, we use the patient gender as the
context variable.

Table~\ref{table:forest-topresult} reports variable importances computed
with 1000 totally randomized trees and their corresponding p-values.
According to the p-values of $Imp^{|x_c|}(X_m)$, two variables are
clearly emphasized for each context: importances of \textit{histologic-type} and
\textit{neck} both significantly decrease in the first context ($female$) and
importances of \textit{peritoneum} and \textit{abdominal} both significantly
decrease in the second context ($male$). While the biological relevance of
these finding needs to be verified, such dependences could not have
been highlighted from standard random forests importances.

Note that the same importances computed using the asymptotic formulas
are provided in Table~\ref{table:asymptotic-realresult}.  Importance
values are very similar, highlighting that finite forests provide good
enough estimates for this problem.

\begin{table*}[htbp]
	{\tiny
		\begin{tabular}{c|c|c|cc|cc:cc|cc:cc} \hline
			& &             $Imp(X_m)$  & \multicolumn{2}{c|}{$Imp(X_m|X_c=x_c)$}   &  \multicolumn{4}{c|}{$Imp^{|x_c|}(X_m)$} & \multicolumn{4}{c}{$Imp_s^{x_c}(X_m)$}\\
			m &             & -     & $x_c=0$   & $x_c=1$             &  $x_c=0$ & pval   & $x_c=1$ & pval  & $x_c=0$ & pval & $x_c=1$ & pval \\ \hline
			0 & age              & 0.2974 & 0.2942 & 0.2900 & 0.1505 &   0.899 & 0.1717 &   0.417 &  0.0032  &   0.938 &  0.0074 &   0.846 \\
			1 & histologic-type  & 0.3513 & 0.1354 & 0.4005 & 0.2265 &\g 0.000 & 0.1183 &   0.121 &  0.2159  &\g 0.000 & -0.0492 &   0.331 \\
			2 & degree-of-diffe  & 0.4415 & 0.3725 & 0.4070 & 0.1827 &   0.680 & 0.1724 &   0.689 &  0.0690  &   0.102 &  0.0345 &   0.398 \\
			3 & bone             & 0.2452 & 0.2342 & 0.2220 & 0.1088 &   0.396 & 0.0845 &   0.904 &  0.0110  &   0.717 &  0.0232 &   0.410 \\
			4 & bone-marrow      & 0.0188 & 0.0190 & 0.0131 & 0.0128 &   0.892 & 0.0105 &   0.980 & -0.0001  &   0.994 &  0.0057 &   0.682 \\
			5 & lung             & 0.1677 & 0.1837 & 0.1420 & 0.1134 &   0.448 & 0.1079 &   0.397 & -0.0160  &   0.605 &  0.0257 &   0.373 \\
			6 & pleura           & 0.1474 & 0.1132 & 0.1127 & 0.0613 &   1.000 & 0.1026 &   0.097 &  0.0342  &   0.179 &  0.0348 &   0.165 \\
			7 & peritoneum       & 0.3171 & 0.2954 & 0.2084 & 0.0939 &   0.968 & 0.1516 &\g 0.000 &  0.0216  &   0.710 &  0.1087 &\g 0.000\\
			8 & liver            & 0.2300 & 0.1844 & 0.2784 & 0.0888 &   0.966 & 0.1382 &   0.053 &  0.0456  &   0.134 & -0.0483 &   0.100 \\
			9 & brain            & 0.0466 & 0.0334 & 0.0566 & 0.0403 &   0.173 & 0.0279 &   0.814 &  0.0131  &   0.693 & -0.0101 &   0.751 \\
			10 & skin            & 0.0679 & 0.0310 & 0.0786 & 0.0426 &   0.922 & 0.0420 &   0.841 &  0.0369  &   0.107 & -0.0107 &   0.663 \\
			11 & neck            & 0.2183 & 0.0774 & 0.2255 & 0.1562 &\g 0.000 & 0.0710 &   0.575 &  0.1409  &\g 0.000 & -0.0071 &   0.764 \\
			12 & supraclavicular & 0.1701 & 0.1807 & 0.1344 & 0.0942 &   0.379 & 0.0738 &   0.884 & -0.0106  &   0.695 &  0.0357 &   0.136 \\
			13 & axillar         & 0.1339 & 0.1236 & 0.0846 & 0.0748 &   0.214 & 0.0663 &   0.388 &  0.0103  &   0.795 &  0.0493 &   0.194 \\
			14 & mediastinum     & 0.1826 & 0.1752 & 0.1613 & 0.1129 &   0.266 & 0.0867 &   0.853 &  0.0074  &   0.767 &  0.0213 &   0.404 \\
			15 & abdominal       & 0.2558 & 0.2883 & 0.1512 & 0.1419 &   0.139 & 0.1526 &\g 0.028 & -0.0325  &   0.368 &  0.1046 &\g 0.003\\ \hline
	\end{tabular}}
	\caption{Problem 3: Importances as computed with a forest of 1000 totally randomized trees. The context is defined by the binary context feature \textit{Sex} ($Sex=0$ denotes \textit{female} and $Sex=1$ denotes \textit{male}). \textit{P-values} were estimated using 1000 permutations of the context variable. Grey cells highlight \textit{p-values} under the 0.05 threshold.}
	\label{table:forest-topresult}
\end{table*}

\subsubsection*{Problem 4.} \label{sec:context-problem4}

\begin{figure*}[h]
\centering

\subfloat[$Imp^{|x_c=Mesenchymal|}$]{\label{imp4-a}\includegraphics[width=0.35\linewidth]{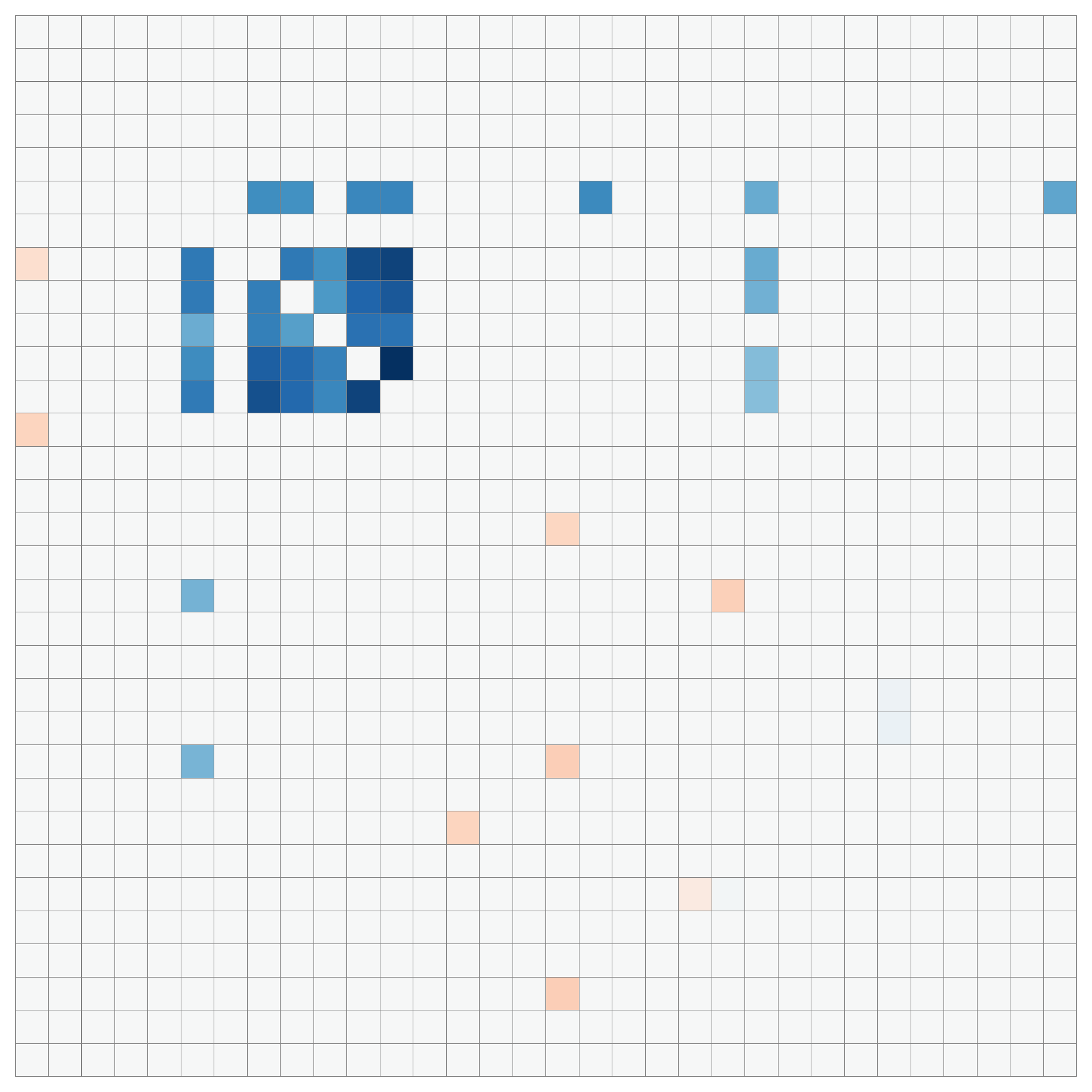}}\hspace{2em}
\subfloat[$Imp^{|x_c=Proneural|}$]{\label{imp4-b}\includegraphics[width=0.35\linewidth]{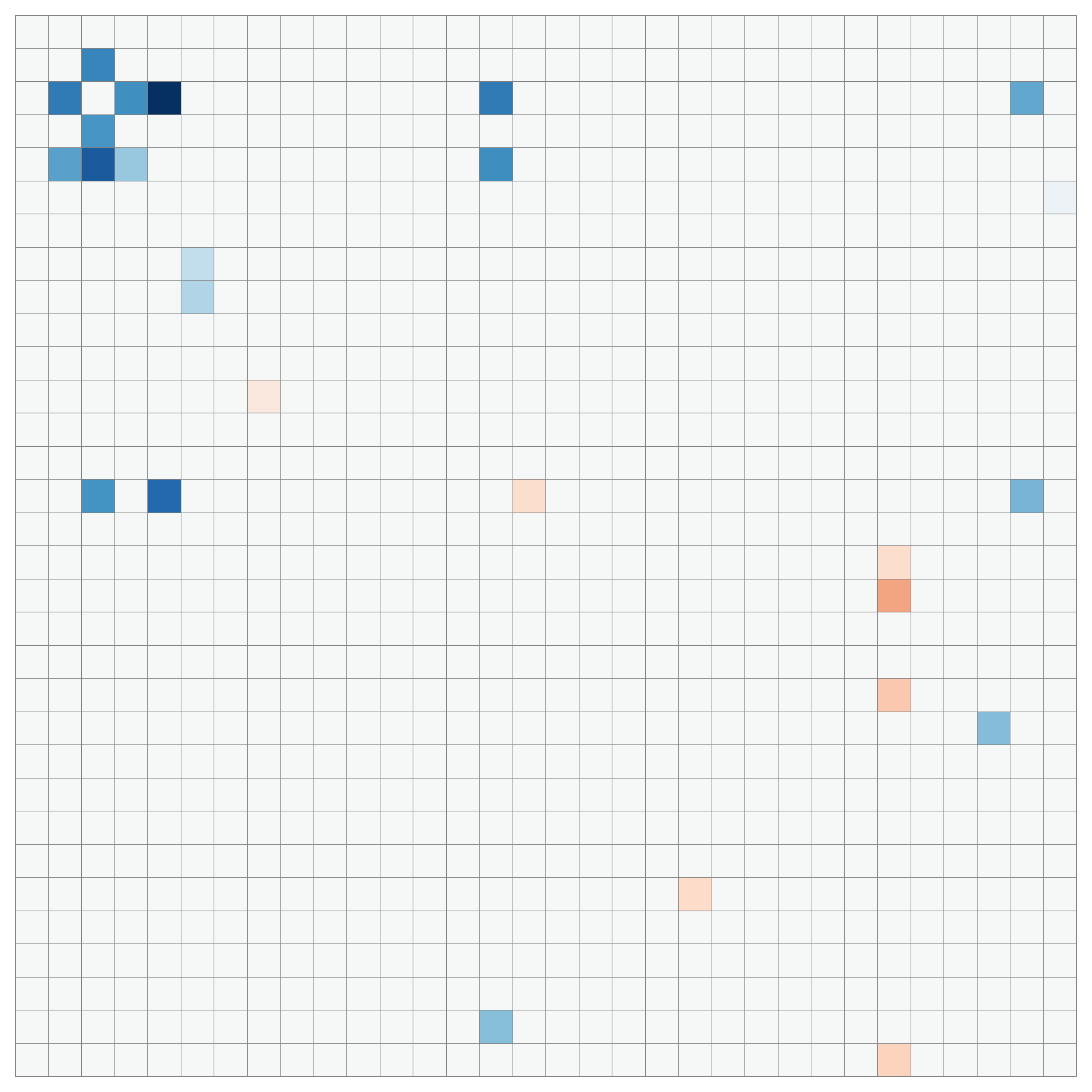}}\includegraphics[width=0.067\linewidth]{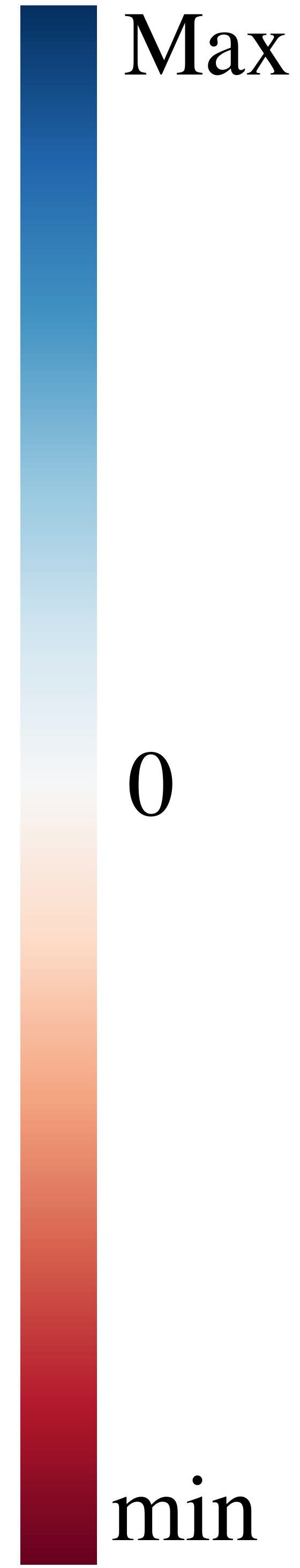}\\ 
\subfloat[$Imp^{x_c=Mesenchymal}$]{\label{imp4-c}\includegraphics[width=0.35\linewidth]{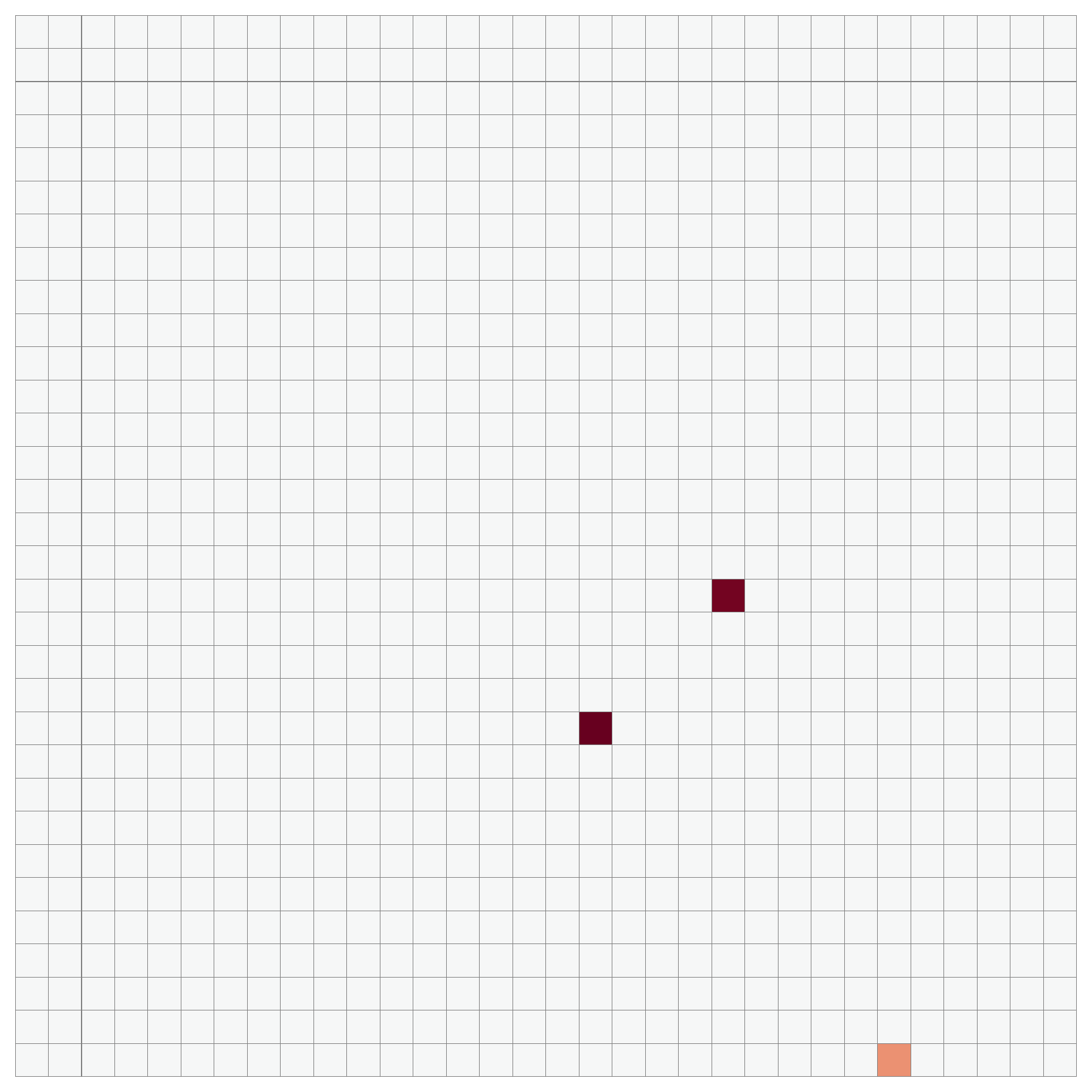}}\hspace{2em}
\subfloat[$Imp^{x_c=Proneural}$]{\label{imp4-d}\includegraphics[width=0.35\linewidth]{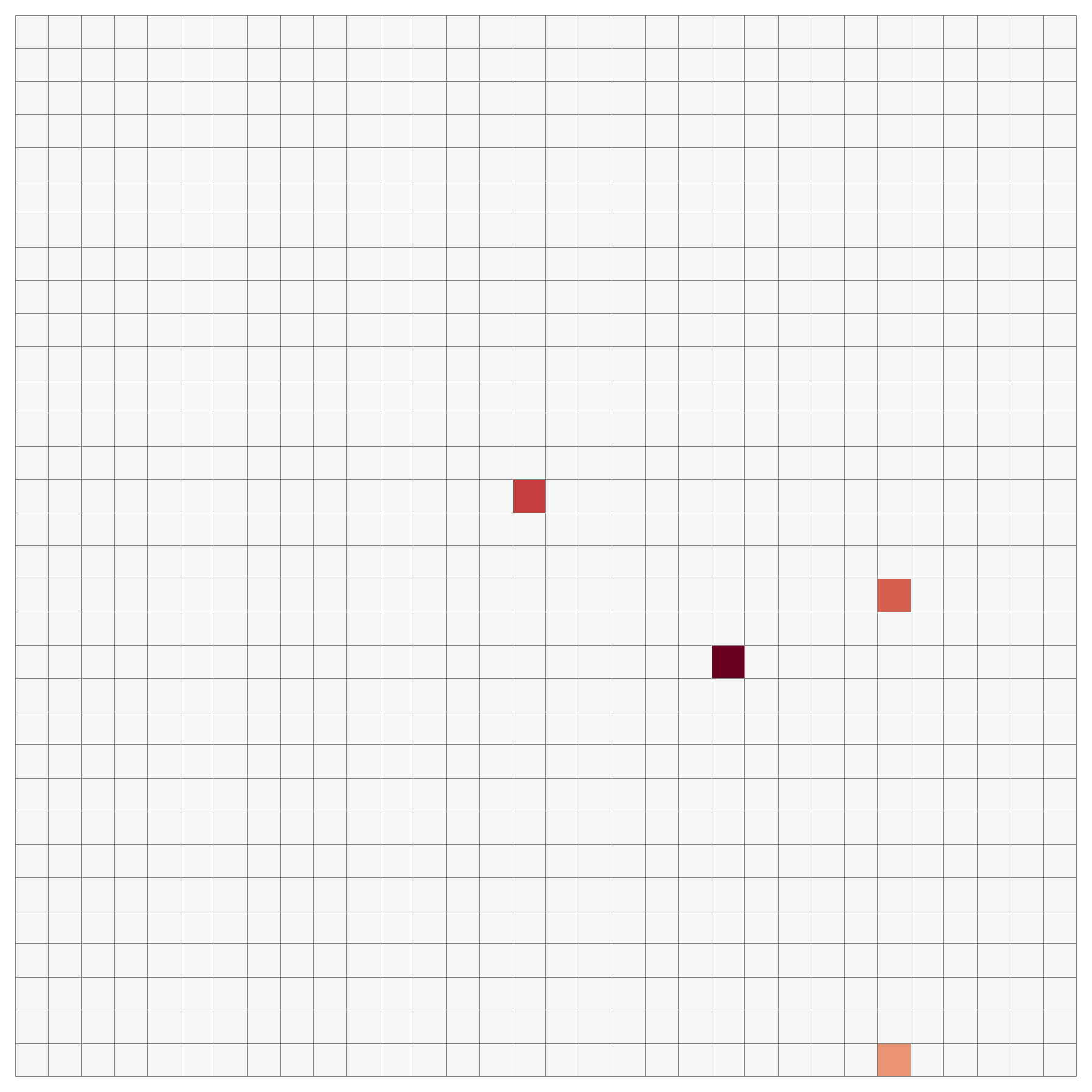}} 
\includegraphics[width=0.067\linewidth]{images/images_context/colorbar_legend.pdf}

\caption{Results for Problem 4. Each matrix represents significant
	context-dependent gene-gene interactions as found using $Imp^{|x_c|}$ in
	(a)(b) and $Imp^{x_c}$ in (c)(d), in GBM sub-type Mesenschymal in (a)(c) and
	Proneural in (b)(d). In (a) and (b), cells are colored according to
	$Imp_s^{x_c}$. In (c) and (d), cells are colored according to
	$Imp^{x_c}$. Positive (resp negative) values are in blue (resp. red) and
	highlight context-redundant (resp. context-complementary)
	interactions. Higher absolute values are darker.}
\label{fig:imp4}
\end{figure*}

As a last experiment, we consider a publicly available brain cancer
gene expression dataset \citep{verhaak2010integrated}. This dataset
collects measurements of mRNA expression levels of 11861 genes in 220
tissue samples from patients suffering from glioblastoma multiforme
(GBM), the most common form of malignant brain cancer in
adults. Samples are classified into four GBM sub-types: Classical,
Mesenchymal, Neural and Proneural. The interest of this dataset is to
identify the genes that play a central role in the development and
progression of the cancer and thus improve our understanding of this
disease. In our experiment, our aim is to exploit importance scores
to identify interactions between genes that are significantly affected
by the cancer sub-type considered as our context variable. This
dataset was previously exploited by \citet{mohan2014node}, who used it
to test a method based on Gaussian graphical models for detecting
genes whose global interaction patterns with all the other genes vary
significantly between the subtypes. This latter method can be
considered as gene-based, while our approach is link-based.

Following \citep{mohan2014node}, we normalized the raw data
using Multi-array Average (RMA) normalization. Then, the data was corrected for
batch effects using the software \textrm{ComBat} \citep{johnson2007adjusting}
and then $log_2$ transformed. Following \citep{mohan2014node}, we focused our
analysis on only two GBM sub-types, Proneural (57 tissue samples) and
Mesenchymal (56 tissue samples), and on a particular set of 32 genes, which are
all genes involved in the TCR signaling pathway as defined in the Reactome
database \citep{matthews2009reactome}. The final dataset used in the
experiments below thus contains 113 samples, 57 and 56
for both context values respectively, and 32 variables.

To identify gene-gene interactions affected by the context, we performed a
contextual analysis as described in Section
\ref{sec:contextual-relevance-trees} for each gene in turn, considering each
time a particular gene as the target variable $Y$ and all other genes as the
set of input variables $V$. This procedure is similar to the procedure adopted
in the Random forests-based gene network inference method called GENIE3
\citep{huynh2010inferring}, that was the best performer in the DREAM5 network
inference challenge \citep{marbach2012wisdom}. Since gene expressions are
numerical targets, we used variance as the impurity measure (see Section
\ref{sec:generalisation}) and we built ensembles of 1000
totally randomized trees in all experiments.

The matrices in Figure~\ref{fig:imp4} highlight context-dependent interactions
found using different importance measures (detailed below). A cell $(i,j)$ of
these matrices corresponds to the importance of gene $j$ when gene $i$ is the
output (the diagonal is irrelevant). White cells correspond to non significant
context-dependencies as determined by random permutations of the context
variable, using a significance level of 0.05. Significant context-dependent
interactions in Figures~\ref{fig:imp4}(a) and (b) were determined using the
importance $Imp^{|x_c|}$ defined in (\ref{impabs}), which is the measure we
advocate in this paper. As a baseline for comparison, Figures~\ref{fig:imp4}(c)
and (d) show significant interactions as found using the more straightforward
score $Imp^{x_c}$ defined in (\ref{mdidiffcontext}). In
Figures~\ref{fig:imp4}(a) and (b) (resp. (c) and (d)), significant cells are
colored according to the value of $Imp_s^{x_c}$ defined in
(\ref{eqn:impxcnodebynode}). In Figures~\ref{fig:imp4}(c) and (d), they are
colored according to the value of $Imp^{x_c}$ in (\ref{mdidiffcontext})
instead. Blue (resp. red) cells correspond to positive (resp. negative) values
of $Imp^{x_c}$ or $Imp_s^{x_c}$ and thus highlight context-redundant
(resp. context-complementary) interactions. The darker the color, the higher
the absolute value of $Imp^{x_c}$ or $Imp_s^{x_c}$.

Respectively 49 and 26 context-dependent interactions are found in
Figures~\ref{fig:imp4}(a) and (b). In comparison, only 3 and 4 interactions are
found respectively in Figures~\ref{fig:imp4}(c) and (d) using the more
straightforward score $Imp^{x_c}$. Only 1 interaction is common between
Figures~\ref{fig:imp4}(a) and (c), while 3 interactions are common between
Figures~\ref{fig:imp4}(b) and (d). The much lower sensitivity of $Imp^{x_c}$
with respect to $Imp^{|x_c|}$ was expected given the discussions in
Section~\ref{sec:identification}. Although more straightforward, the score $Imp^{x_c}(X_m)$,
defined as the difference $Imp(X_m)-Imp(X_m|X_c=x_c)$, indeed suffers from the
fact that $Imp(X_m)$ and $Imp(X_m|X_c=x_c)$ are estimated from different
ensembles and thus do not explore the same conditionings in finite
setting. $Imp^{x_c}$ also does not have the same guarantee as $Imp^{|x_c|}$ to
find all context-dependent variables.

\section{Conclusions and future work}
\label{sec:contextconclusions}

In this chapter, our first contribution is a formal framework defining and
characterizing the dependence to a context variable of the relationship between
the input variables and the output (Section~\ref{sec:contextual-relevance}). As
a second contribution, we have proposed several novel adaptations of random
forests-based variable importance scores that implement these definitions and
characterizations and we have derived performance guarantees for these scores
in asymptotic settings (Section~\ref{sec:contextual-relevance-trees}). The
relevance of these measures was illustrated on several artificial and real
datasets (Section~\ref{sec:contextexperiments}).

There remain several limitations to our framework that we would like to address
as future works.  All theoretical derivations in Sections
\ref{sec:contextual-relevance} and \ref{sec:contextual-relevance-trees} concern
categorical input variables. It would be interesting to adapt our framework to
continuous input variables, and also, probably with more difficulty, to
continuous context variables. Finally, all theoretical derivations are based on
forests of totally randomized trees (for which we have an asymptotic
characterization). It would be interesting to also investigate non totally
randomized tree algorithms (e.g., \cite{breiman2001random}'s standard Random
Forests method) that could provide better trade-offs in finite settings.

\begin{subappendices}
	
\section{Details of Example \ref{example1} \label{app:ex:example1}}
\label{app:ex1}

\begin{table}[h]
	\centering
	\begin{tabular}{c c c | c }
		$X_1$ & $X_2$ & $X_c$ & $Y$ \\ \hline  
		0 & 0 & 0 & 0 \\
		0 & 0 & 0 & 0 \\ 
		0 & 0 & 1 & 2 \\
		0 & 0 & 1 & 3 \\ 
		0 & 1 & 0 & 2 \\
		0 & 1 & 0 & 3 \\ 
		0 & 1 & 1 & 0 \\
		0 & 1 & 1 & 0 \\ 
		1 & 0 & 0 & 1 \\
		1 & 0 & 0 & 1 \\ 
		1 & 0 & 1 & 2 \\
		1 & 0 & 1 & 3 \\ 
		1 & 1 & 0 & 2 \\
		1 & 1 & 0 & 3 \\
		1 & 1 & 1 & 1 \\
		1 & 1 & 1 & 1 \\ \hline
	\end{tabular}
	\caption{Values of $X_1$, $X_2$, $X_c$ and $Y$.}
	\label{table:full-example1}
\end{table}

\section{Proof of Theorem \ref{theo1} \label{app:thm:theo1}}
\label{app:theo1}

\begin{theorem*}
\textit{$X_c$ is irrelevant to $Y$ with respect to $V$ iff all variables in $V$ are context-independent to $Y$ with respect to $X_c$ (and $V$) and $I(Y;X_c)=0$.}
\end{theorem*}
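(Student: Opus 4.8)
The statement is an iff, so the plan is to prove both implications. Let me restate what needs to be shown. We must prove that $X_c$ is irrelevant to $Y$ with respect to $V$ if and only if (i) every $X_m \in V$ is context-independent to $Y$ with respect to $X_c$, and (ii) $I(Y;X_c)=0$. Recall that $X_c$ irrelevant to $Y$ with respect to $V$ means (by Definition~\ref{def:relevanceMI}) that for every subset $B \subseteq V$ we have $I(Y;X_c|B)=0$, and context-independence of $X_m$ (Definition~\ref{defcontextindependent}) means $I(Y;X_m|B=b,X_c=x_c)=I(Y;X_m|B=b)$ for all $B\subseteq V^{-m}$, all $b$, and all $x_c$.

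\medskip

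\textbf{Forward direction ($X_c$ irrelevant $\Rightarrow$ (i) and (ii)).} First I would note that (ii) is immediate, since $X_c$ irrelevant means $I(Y;X_c|B)=0$ for \emph{all} $B$, including the empty set, giving $I(Y;X_c)=0$. For (i), the key tool is the multivariate mutual information identity from Equation~\eqref{eqn:decomp-multivariateMI}, namely $I(Y;X_m;X_c|B) = I(Y;X_m|B) - I(Y;X_m|B\cup\{X_c\})$, together with its symmetry. The plan is to fix an arbitrary $X_m$ and a subset $B\subseteq V^{-m}$, and show $I(Y;X_m|B) = I(Y;X_m|B\cup\{X_c\})$; averaging this equality appropriately over the values of $X_c$ will give the pointwise context-independence condition. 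To get this equality, I would apply the chain rule to expand $I(Y;\{X_m,X_c\}|B)$ in the two possible orders: $I(Y;X_c|B) + I(Y;X_m|B\cup\{X_c\}) = I(Y;X_m|B) + I(Y;X_c|B\cup\{X_m\})$. Since $X_c$ is irrelevant, both $I(Y;X_c|B)=0$ and $I(Y;X_c|B\cup\{X_m\})=0$, so the two remaining terms are equal. The subtlety is that context-independence as defined is a \emph{pointwise} (per-value $x_c$) condition, not just an averaged one, so I must argue that the per-$x_c$ equality holds and not merely the average over $x_c$. I expect to handle this by using that $I(Y;X_m|B\cup\{X_c\}) = \sum_{x_c} P(X_c=x_c|\ldots) I(Y;X_m|B,X_c=x_c)$ and showing each term matches $I(Y;X_m|B=b)$ using the conditional independence $Y\indep X_c | B$ and $Y \indep X_c | B\cup\{X_m\}$ directly at the level of distributions.

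\medskip

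\textbf{Reverse direction ((i) and (ii) $\Rightarrow$ $X_c$ irrelevant).} Here I would prove the contrapositive or argue directly by induction on the size of the conditioning set. Suppose (i) and (ii) hold; I must show $I(Y;X_c|B)=0$ for every $B\subseteq V$. The base case $B=\emptyset$ is exactly (ii). For the inductive step, I would take $B = B'\cup\{X_m\}$ and again use the chain-rule identity $I(Y;X_c|B') + I(Y;X_m|B'\cup\{X_c\}) = I(Y;X_m|B') + I(Y;X_c|B'\cup\{X_m\})$. By the induction hypothesis $I(Y;X_c|B')=0$, and by context-independence of $X_m$ (condition (i)) the terms $I(Y;X_m|B'\cup\{X_c\})$ and $I(Y;X_m|B')$ are equal, forcing $I(Y;X_c|B'\cup\{X_m\})=0$. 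This closes the induction and establishes irrelevance.

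\medskip

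\textbf{Main obstacle.} The main difficulty I anticipate is the pointwise-versus-averaged gap in the forward direction: the definition of context-independence requires equality of conditional mutual informations for each individual value $x_c$, whereas the chain-rule manipulations naturally produce statements about $I(Y;X_m|B\cup\{X_c\})$, which is an average over $x_c$. Since mutual information is nonnegative, if an average of nonnegative differences is zero then each term must vanish — so I would need to express $I(Y;X_m|B=b) - I(Y;X_m|B=b,X_c=x_c)$ as a quantity of definite sign, or show it equals a KL-type divergence that is zero exactly when the relevant conditional distributions coincide. Establishing that the per-$x_c$ difference is nonnegative (or reduces to a divergence) is the delicate point; once that sign argument is in place, the vanishing of the average yields the pointwise equality and the forward direction follows cleanly.
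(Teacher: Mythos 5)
Your proposal is correct and takes essentially the same route as the paper: for the forward direction the paper does exactly what you sketch as your fallback, namely it uses that $I(Y;X_c|B)=0$ for all $B\subseteq V$ means $Y\indep X_c\,|\,B$ holds at the level of distributions, so $H(Y|B=b,X_c=x_c)=H(Y|B=b)$ and $H(Y|X_m,B=b,X_c=x_c)=H(Y|X_m,B=b)$ pointwise and the per-$(b,x_c)$ mutual informations cancel term by term — your alternative ``definite-sign/KL'' patch is unnecessary and would not work in general, since $I(Y;X_m|B=b)-I(Y;X_m|B=b,X_c=x_c)$ has no definite sign. Your reverse direction matches the paper's verbatim: average the pointwise context-independence over $P(B,X_c)$, use the chain-rule identity to swap it into $I(Y;X_c|B\cup\{X_m\})=I(Y;X_c|B)$, and peel off conditioning variables recursively down to $I(Y;X_c)=0$.
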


\paragraph{Necessary condition.}
\begin{proof}
	If $X_c$ is irrelevant to $Y$ w.r.t. $V$, we have, by definition, that $I(Y;X_c|B)=0$ for all subset $B \subseteq V$. Hence, we have $I(Y;X_c)=0$ as a special case.
	
	A variable $X_m\in V$ is context-independent if for all $B \subseteq V^{-m}$ and for all $x_c \in {\cal X}_c$, $b \in {\cal B}$, we have
	\begin{equation*}
	I(Y;X_m|B=b,X_c=x_c)-I(Y;X_m|B=b)=0.
	\end{equation*}
	
	Let us proof this:
	\begin{eqnarray*}
		&& \hspace{-1cm}  I(Y;X_m|B=b,X_c=x_c)-I(Y;X_m|B=b)\\
		&=& H(Y|B=b,X_c=x_c)-H(Y|X_m,B=b,X_c=x_c) \\ 
		&& \hspace{1cm}\hookrightarrow -H(Y|B=b)+H(Y|X_m,B=b)\\
		&=& H(Y|B=b)-H(Y|X_m,B=b)\\
		&& \hspace{1cm}\hookrightarrow -H(Y|B=b)+H(Y|X_m,B=b)\\
		&=& 0,
	\end{eqnarray*}
	where $H(Y|B=b,X_c=x_c)=H(Y|B=b)$ and $H(Y|X_m,B=b,X_c=x_c)=H(Y|X_m,B=b)$ are consequences of $I(Y;X_c|B)=0$ for all $B$ if we assume that $p(B=b)\neq 0 $ ($\forall b\in {\cal B}$) and $p(X_c=x_c,B=b)\neq 0$ ($\forall x_c\in {\cal X}_c$ and $\forall b\in {\cal B}$).
\end{proof}

\paragraph{Sufficient condition.}
\begin{proof}
	If all variables are context-independent, we have that for all
	$X_m \in V$, $B \subseteq V^{-m}$, $b \in {\cal B}$, and $x_c \in {\cal X}_c$:
	$$I(Y;X_m|B=b,X_c=x_c)=I(Y;X_m|B=b).$$
	By averaging the left- and right-hand sides of this equality over $P(B,X_c)$, we get:
	$$I(Y;X_m|B,X_c)=I(Y;X_m|B).$$
	From this, one can derive \citep{louppe2013understanding}:
	$$I(Y;X_c|B,X_m)=I(Y;X_c|B).$$ Since this equality is valid for all $B$,
	including $B=\emptyset$, and all $X_m$, we have that for all $B'\subseteq V$,
	$I(Y;X_c|B')$ can be reduced to $I(Y;X_c)$, which is equal to zero by
	hypothesis. The variable $X_c$ is thus irrelevant to $Y$ with respect to $V$.
\end{proof}

\section{Proof of Theorem \ref{theo2}}
\label{app:theo2}

\begin{theorem*}
\textit{A variable $X_m \in V$ is context-independent to $Y$ with respect to $X_c$ iff $Imp^{|x_c|}(X_m)=0$ for all $x_c$.}
\end{theorem*}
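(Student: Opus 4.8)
The plan is to work from the asymptotic (population) expression for $Imp^{|x_c|}(X_m)$ given in Equation~\eqref{eqn:impabsinfinite}, and to exploit the fact that it is a sum of manifestly non-negative terms. Concretely, every summand of \eqref{eqn:impabsinfinite} is a product of a strictly positive combinatorial weight $\frac{1}{C^k_p}\frac{1}{p-k}$, a non-negative probability $P(B=b)$, and an absolute value $|I(Y;X_m|B=b)-I(Y;X_m|B=b,X_c=x_c)|\ge 0$. Hence $Imp^{|x_c|}(X_m)$ is non-negative, and the whole argument reduces to the elementary principle that a sum of non-negative reals vanishes if and only if each of its terms vanishes.

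For the necessary condition (context-independence $\Rightarrow Imp^{|x_c|}(X_m)=0$ for all $x_c$), I would simply invoke Definition~\ref{defcontextindependent}: context-independence states precisely that $I(Y;X_m|B=b,X_c=x_c)=I(Y;X_m|B=b)$ for every $B\subseteq V^{-m}$ and all values $b,x_c$. Substituting this equality into \eqref{eqn:impabsinfinite} makes every absolute value equal to zero, so each summand is zero and therefore $Imp^{|x_c|}(X_m)=0$ for every $x_c$.

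For the sufficient condition ($Imp^{|x_c|}(X_m)=0$ for all $x_c \Rightarrow$ context-independence), I would fix an arbitrary $x_c$ and use the non-negativity just established: since the sum in \eqref{eqn:impabsinfinite} is zero and all its terms are $\ge 0$, every term must be exactly zero. For each subset $B\subseteq V^{-m}$ and each value $b$ with $P(B=b)>0$, this forces $|I(Y;X_m|B=b)-I(Y;X_m|B=b,X_c=x_c)|=0$, i.e. the two conditional mutual informations coincide. As $x_c$ was arbitrary and the decomposition \eqref{eqn:impabsinfinite} ranges over all subsets $B$ (of every cardinality $k$) and all configurations $b$, this recovers exactly the universally quantified equality of Definition~\ref{defcontextindependent}, and $X_m$ is context-independent.

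The only genuinely delicate point — and the step I would treat most carefully — is matching the range of the summation in \eqref{eqn:impabsinfinite} to the universal quantifier in Definition~\ref{defcontextindependent}. I must argue that the asymptotic decomposition enumerates \emph{all} conditioning subsets $B$ and \emph{all} configurations $b$, so that term-by-term vanishing is genuinely equivalent to the definitional equality holding for every triple $(B,b,x_c)$; this is where the completeness of the totally-randomised-tree decomposition (inherited from Equation~\eqref{eqn:impasymp}) is essential. I would also dispose of the conditioning events of probability zero: when $P(B=b)=0$ the corresponding summand is automatically zero and carries no information, but this is harmless since the convention adopted in the paper (see the footnote to Definition~\ref{defcontextdependent}) restricts the defining conditions to events of non-zero probability. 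With these caveats in place, the positivity argument closes both directions.
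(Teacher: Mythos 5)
Your proposal is correct and follows essentially the same route as the paper's own proof: both directions rest on the asymptotic expansion \eqref{eqn:impabsinfinite} being a sum of non-negative terms, with the necessary condition obtained by substituting the definitional equality and the sufficient condition by forcing each summand to vanish (the paper, like you, disposes of the degenerate conditionings by assuming non-null probabilities). Your extra remark that the decomposition enumerates \emph{all} pairs $(B,b)$ — so term-by-term vanishing matches the universal quantifier in Definition~\ref{defcontextindependent} — is a point the paper leaves implicit, but it introduces no new machinery.
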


\paragraph{Necessary condition.}
\begin{proof}
	
	By definition of context-independence, we have
	\begin{equation}
	\begin{split}
	I(Y;X_m|B=b,X_c=x_c) - I(Y;X_m|B=b) = 0 \\ \qquad \forall B \subseteq V^{-m}, \forall x_c \in {\cal X}_c , \forall b \in {\cal B}. \end{split}
	\end{equation}
	
	Given that each term $$\left | I(X_m;Y|B=b) -  I(X_m;Y|B=b;X_c=x_c) \right |$$ of $Imp^{|x_c|}(X_m)$ (Equation~(\ref{eqn:impabsinfinite})) is equal to $0$, the sum is thus also equal to $0$.
\end{proof}

\paragraph{Sufficient condition.}
\begin{proof}
	
	Given the definition of $Imp^{|x_c|}(X_m)$:
	
	\begin{equation}
	\begin{split}
	Imp^{|x_c|}(X_m) = \sum_{k=0}^{p-1} \frac{1}{C^k_p} \frac{1}{p-k} & \sum_{B\in{\cal P}_k(V^{-m})} \sum_{b\in {\cal B}} P(B=b) \\
	&\hspace{-3.5cm}\hookrightarrow \left | I(X_m;Y|B=b) -  I(X_m;Y|B=b;X_c=x_c) \right |,\end{split}
	\end{equation}
	appears to be a sum of positive terms (because of the absolute value). As in Theorem~\ref{theo1}, we assume that probabilities are non-null and therefore, we have that the only way to have the sum equal to zero is to have each term of the sum equal to $0$. Hence, we have $\left | I(X_m;Y|B=b) -  I(X_m;Y|B=b;X_c=x_c) \right | = 0$ for all $x_c$, $B$ and $b$ which verifies the definition of context-independence for $X_m$.
	
\end{proof}

\section{Proof of Theorem \ref{theo3}}
\label{app:theo3}

\begin{theorem*}
\textit{If $|Imp^{x_c}(X_m)| = Imp^{|x_c|}(X_m)$ for a context-dependent variable $X_m$, then $X_m$ is context-complementary if $Imp^{x_c}(X_m) < 0$ and context-redundant if $Imp^{x_c}(X_m) > 0$.}
\end{theorem*}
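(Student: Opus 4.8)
The plan is to recognise that, in asymptotic conditions, both $Imp^{x_c}(X_m)$ and $Imp^{|x_c|}(X_m)$ are sums over exactly the same index set---the interaction degree $k$, the subsets $B\in{\cal P}_k(V^{-m})$, and the configurations $b$---carrying identical non-negative weights, the only difference being that the absolute value is taken inside the sum for one and outside for the other. Reducing the statement to this observation turns the theorem into a direct application of the triangle inequality together with its equality condition.

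First I would expand the conditional mutual informations in (\ref{mdidiffcontext}) using $I(Y;X_m|B)=\sum_b P(B=b)\,I(Y;X_m|B=b)$, so that $Imp^{x_c}(X_m)$ is written in the same node-by-node form as (\ref{eqn:impabsinfinite}). Introducing the shorthand
$$\delta(B,b) = I(Y;X_m|B=b) - I(Y;X_m|B=b,X_c=x_c)$$
and the weights $w(k,B,b) = \frac{1}{C_p^k}\,\frac{1}{p-k}\,P(B=b)\geq 0$, the two measures become $Imp^{x_c}(X_m) = \sum_{k,B,b} w(k,B,b)\,\delta(B,b)$ and $Imp^{|x_c|}(X_m) = \sum_{k,B,b} w(k,B,b)\,|\delta(B,b)|$.

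Next I would apply the triangle inequality $|\sum_j a_j|\leq \sum_j |a_j|$ to the terms $a_{k,B,b}=w(k,B,b)\,\delta(B,b)$, giving $|Imp^{x_c}(X_m)|\leq Imp^{|x_c|}(X_m)$ in general, with equality precisely when all terms of non-zero weight share a common sign. Since, under the standing convention that conditioning events have non-zero probability, $w(k,B,b)>0$, the sign of each $a_{k,B,b}$ coincides with the sign of $\delta(B,b)$. The hypothesis $|Imp^{x_c}(X_m)|=Imp^{|x_c|}(X_m)$ therefore forces all the $\delta(B,b)$ to share the same sign, and the global sign of $Imp^{x_c}(X_m)$ reveals which one: if $Imp^{x_c}(X_m)<0$ then $\delta(B,b)\leq 0$ for every $B$ and $b$, i.e. $I(Y;X_m|B=b,X_c=x_c)\geq I(Y;X_m|B=b)$, which is exactly the definition of context-complementary; if $Imp^{x_c}(X_m)>0$ then $\delta(B,b)\geq 0$ everywhere, yielding context-redundant.

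The only delicate point, and thus the main obstacle, is matching the equality condition of the triangle inequality (which constrains the non-zero terms) to the definitions of context-complementarity and context-redundancy, which are stated with weak inequalities over \emph{all} $B$ and $b$: terms with $\delta(B,b)=0$ are harmless since they satisfy both weak inequalities simultaneously, and configurations with $P(B=b)=0$ are excluded by the same non-zero probability convention already used in Theorems \ref{theo1} and \ref{theo2}. The case $Imp^{x_c}(X_m)=0$ need not be treated, as the statement only concerns the two strict-sign cases.
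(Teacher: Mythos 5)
Your proposal is correct and follows essentially the same route as the paper's own proof, which is exactly the triangle-inequality argument stated tersely: the absolute value of a sum equals the sum of absolute values only when all terms share the same sign, and the sign of $Imp^{x_c}(X_m)$ then identifies that common sign, yielding context-complementarity when negative and context-redundancy when positive. Your additional care — writing both quantities as weighted sums over the common index set $(k,B,b)$ with strictly positive weights (implicitly identifying $Imp^{x_c}$ with its node-by-node form $Imp_s^{x_c}$, as the paper itself does in practice), and noting that terms with $\delta(B,b)=0$ are compatible with both weak inequalities — merely makes explicit what the paper's one-paragraph proof leaves implicit.
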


\begin{proof}
	The absolute value of a sum is less than or equal the sum of the absolute value of each terms. The equality is only verified when all terms are of the same sign. Therefore, the sign of $Imp^{x_c}(X_m)$ indicates the sign of all terms and thus verify either the context-complementarity if all terms are negative or the context-redundancy if all terms are positive.
\end{proof}

\section{Results for Problem \hyperref[sec:context-problem3]{3}}
\label{app:results-problem3}

\begin{table}[htbp]
	{\small
		\begin{tabular}{c|c|c|cc|cc|cc} \hline
			& &           $Imp(X_m)$ & \multicolumn{2}{c|}{$Imp(X_m|X_c=x_c)$} &  \multicolumn{2}{c|}{$Imp^{|x_c|}(X_m)$} & \multicolumn{2}{c}{$Imp_s^{x_c}(X_m)$}\\
			m & & - & $x_c=0$ & $x_c=1$ &  $x_c=0$ & $x_c=1$  & $x_c=0$ & $x_c=1$ \\ \hline
			0 & age               & 0.2958 & 0.3386 & 0.2885 & 0.1382 & 0.1505 & -0.0095  & -0.0156 \\
			1 & histologic-type   & 0.3522 & 0.1389 & 0.4366 & 0.2087 & 0.114  &  0.1988  & -0.0569 \\
			2 & degree-of-diffe   & 0.4413 & 0.4175 & 0.4208 & 0.1653 & 0.158  &  0.0561  &  0.0157 \\
			3 & bone              & 0.2429 & 0.2502 & 0.2367 & 0.0933 & 0.0755 & -0.0043  &  0.0165 \\
			4 & bone-marrow       & 0.0192 & 0.0201 & 0.0148 & 0.0126 & 0.0101 &  0.0009  &  0.0041 \\
			5 & lung              & 0.1627 & 0.2059 & 0.1370 & 0.1038 & 0.0949 & -0.0259  &  0.0172 \\
			6 & pleura            & 0.1485 & 0.1496 & 0.1015 & 0.0590 & 0.09   &  0.0313  &  0.0234 \\
			7 & peritoneum        & 0.3184 & 0.3459 & 0.1979 & 0.0861 & 0.138  &  0.0147  &  0.0956 \\
			8 & liver             & 0.2285 & 0.2138 & 0.2630 & 0.0786 & 0.1279 &  0.0375  & -0.0602 \\
			9 & brain             & 0.0465 & 0.0349 & 0.0548 & 0.0378 & 0.0254 &  0.0114  & -0.0104 \\
			10 & skin             & 0.0677 & 0.0362 & 0.0923 & 0.0314 & 0.0403 &  0.0252  & -0.0133 \\
			11 & neck             & 0.2215 & 0.0690 & 0.2582 & 0.1466 & 0.0692 &  0.1316  & -0.0081 \\
			12 & supraclavicular  & 0.1676 & 0.1915 & 0.1448 & 0.0845 & 0.067  & -0.0198  &  0.0269 \\
			13 & axillar          & 0.1393 & 0.1457 & 0.1068 & 0.0655 & 0.0629 & -0.0067  &  0.0447 \\
			14 & mediastinum      & 0.1838 & 0.2050 & 0.1716 & 0.1016 & 0.0806 & -0.0059  &  0.0140 \\
			15 & abdominal        & 0.2553 & 0.3296 & 0.1372 & 0.1346 & 0.1379 & -0.0330  &  0.0898 \\ \hline
	\end{tabular}}
	\caption{Importances as computed analytically using asymptotic formulas. The context is defined by the binary context feature \textit{Sex} ($Sex=0$ denotes \textit{female} and $Sex=1$ denotes \textit{male}).}
	\label{table:asymptotic-realresult}
\end{table}

\end{subappendices}
\chapter{In very high dimensions} \label{ch:SRS}

\begin{overview}

Dealing with datasets of very high dimension is a major challenge in machine learning. This chapter considers the problem of feature selection in applications where the memory is not large enough to contain all features. In this setting, we propose a novel tree-based feature selection approach that builds a sequence of randomised trees on small sub-samples of variables mixing both variables already identified as relevant by previous models and variables randomly selected among the other variables. As our main contribution, we provide an in-depth theoretical analysis of this method in infinite sample setting. In particular, we study its soundness with respect to common definitions of feature relevance and its convergence speed under various variable dependence scenarios. We also provide some preliminary empirical results highlighting the potential of this approach.\\

\textbf{\textcolor{RoyalBlue}{References:}} This chapter  is an adapted version of the following publication: \\[2mm]\bibentry{sutera2018random}. \\[2mm]
We do not reproduce Section 2 of this paper which provides background material already given in the preceding chapters of this manuscript.
\end{overview}

\section{Motivation}
We consider supervised learning and more specifically feature selection in
applications where the memory is not large enough to contain all data. Such
memory constraints can be due either to the large volume of available training
data or to physical limits of the system on which training is performed (eg.,
mobile devices). A straightforward, but often efficient, way to handle such
memory constraint is to build and average an ensemble of models, each trained
on only a random subset of samples and/or features that can fit into
memory. Such simple ensemble approaches have the advantage to be applicable to
any batch learning algorithm, considered as a black-box, and they have been
shown empirically to be very effective in terms of predictive performance, in
particular when combined with trees, and even when samples and/or features are
selected uniformly at random \citep[see,
  eg.,][]{chawla2004,louppe2012ensembles}.  In particular, and independently of
any considerations about memory constraints, feature subsampling has been shown
in several works to be a very effective way to introduce randomization when
building ensembles of models \citep{ho1998random,kuncheva2010random}. The idea
of feature subsampling has also been investigated in the context of feature
selection, where several authors have proposed to repeatedly apply a
multivariate feature selection technique on random subsets of features and then
to aggregate the results obtained on these subsets \citep[see,
  eg.,][]{draminski2008monte,lai2006random,konukoglu2014approximate,nguyen2015new,draminski2016discovering}.

In this chapter, focusing on feature subsampling, we adopt a simplistic setting
where we assume that only q input features (among $p$ in total, with typically
$q \ll p$) can fit into memory. In this setting, we study ensembles of
randomized decision trees trained each on a random subset of $q$ features. In
particular, we are interested in the properties of variable importance scores
derived from these models and their exploitation to perform feature selection.
In contrast to a purely uniform sampling of the features, we propose in
Section~\ref{sec:srs} a modified sequential random subspace (SRS) approach that
biases the random selection of the features at each iteration towards features
already found relevant by previous models. As our main contribution, we perform
in Section~\ref{sec:analysis} an in-depth theoretical analysis of this method
in infinite sample size condition. In particular, we show that (i) this
algorithm provides some interesting asymptotic guarantees to find all
(strongly) relevant variables, (ii) that accumulating previously found variables
can reduce the number of trees needed to find relevant variables by several
orders of magnitudes with respect to the standard random subspace method in
some scenarios, and (iii) that these scenarios are relevant for a large class of
(PC) distributions. As an important additional contribution, our analysis also
sheds some new light on both the popular random subspace and random forests
methods that are special cases of the SRS algorithm. Finally,
Section~\ref{sec:empirical} presents some preliminary empirical results with
the approach on several artificial and real datasets.

\section{Sequential random subspace}\label{sec:srs}

In this chapter, we consider a simplistic memory-constrained setting where it is
assumed that only $q$ input features can fit into memory at once, with
typically $q$ small with respect to $p$. Under this hypothesis,
Algorithm~\ref{algo:SRS} describes the proposed sequential random subspace (SRS)
algorithm to build an ensemble of randomized trees, which generalizes the
Random Subspace (RS) method \citep{ho1998random} (presented in Section \ref{sec:trees-methods}). The idea of this method is to
bias the random selection of the features at each iteration towards features
that have already been found relevant by the previous trees. A parameter
$\alpha$ is introduced that controls the degree of accumulation of previously
identified features. When $\alpha=0$, SRS reduces to the standard RS
method. When $\alpha=1$, all previously found features are kept while when
$\alpha<1$, some room in memory is left for randomly picked features, which
ensures some permanent exploration of the feature space. Further randomization
is introduced in the tree building step through the parameter $K\in [1,q]$,
ie. the number of variables sampled at each tree node for splitting. Variable
importance is assumed to be the MDI importance. This algorithm returns both an
ensemble of trees and a subset $F$ of variables, those that get an importance
(significantly) greater than 0 in at least one tree of the ensemble. Importance
scores for the variables can furthermore be derived from the final ensemble
using Equation \ref{imp:eqn:mdi-imp}. In what follows, we will denote by $F^{K,\alpha}_{q,T}$
and $Imp^{K,\alpha}_{q,T}(X)$ resp. the set of features and the importance of
feature $X$ obtained from an ensemble grown with SRS with parameters
$K$, $\alpha$, $q$ and $T$.

\begin{algorithm}[t]
  
  \textbf{Inputs:}\\
  \underline{Data}: $Y$ the output and $V$, the set of all input variables (of size $p$).\\
  \underline{Algorithm}: $q$, the subspace size, and $T$ the number of iterations, $\alpha\in[0,1]$, the percentage of memory devoted to previously found features.\\
  \underline{Tree}: $K$, the tree randomization parameter\\
  \textbf{Output:} An ensemble of $T$ trees and a subset $F$ of features\\
  \textbf{Algorithm:}\vspace{-0mm}
  \begin{enumerate}[1.]
    \setlength\itemsep{0em}
    \item $F=\emptyset$
    \item Repeat $T$ times:
    \begin{enumerate}[(a)]

      \item Let $Q=R\cup C$, with $R$ a subset of $\min\{\lfloor \alpha
      q\rfloor, |F|\}$ features randomly picked in $F$ without
      replacement and $C$ a subset of $q-|R|$ features randomly
      selected in $V\setminus R$.
      \item Build a decision tree $\cal T$ from $Q$ using randomization parameter $K$.
      \item Add to $F$ all features from $Q$ that get an importance
      greater than zero in $\cal T$.
    \end{enumerate}
  \end{enumerate}

\captionof{algorithm}{Sequential Random Subspace algorithm}
\label{algo:SRS}

\end{algorithm}

The modification of the RS algorithm is actually motivated by Propositions
\ref{prop:mdi-only-relevant} and \ref{prop:only-degree}, stating that the relevance
of high degree features can be determined only when they are analysed jointly
with other relevant features of equal or lower degree. From this result, one
can thus expect that accumulating previously found features will fasten the
discovery of higher degree features on which they depend through some snowball
effect. In the next section, we provide a theoretical asymptotic analysis of
the SRS method that confirms and quantifies this effect.

Note that the SRS method can also be motivated from the perspective of
accuracy.  When $q\ll p$ and the number of relevant features $r$ is also much
smaller than the total number of features $p$ ($r\ll p$), many trees with
standard RS are grown from subsets of features that contain only very few, if
any, relevant features and are thus expected not to be better than random
guessing \citep{kuncheva2010random}. In such setting, RS ensembles are thus
expected not to be very accurate.
\begin{example} With $p=10000$, $r=10$ and $q=50$, the proportion of trees in a RS ensemble grown from only irrelevant variables is $C^{q}_{p-r}/C^{q}_{p} = 0.95$.
\end{example}
With SRS (and $\alpha>0$), we ensure that more and more relevant variables are given to the tree
growing algorithm as iterations proceed and therefore we reduce the chance to
include totally useless trees in the ensemble. Note however that in finite
settings, there is a potential risk of overfitting when accumulating the
variables. The parameter $\alpha$ thus controls a new bias-variance tradeoff
and should be tuned appropriately. We will study the impact of SRS on accuracy
empirically in Section \ref{sec:empirical}.

\section{Theoretical analysis}
\label{sec:analysis}
In this section, we carry out a theoretical analysis of the
proposed method when seen as a feature selection technique. This analysis is
performed in asymptotic sample size condition, assuming that all features,
including the output, are discrete, and using Shannon entropy as the impurity
measure. We proceed in two steps. First, we study the soundness of the
algorithm, ie., its capacity to retrieve the relevant variables when the number
of trees is infinite. Second, we study its convergence properties, ie. the
number of trees needed to retrieve all relevant variables in different
scenarios.

\subsection{Soundness}
\label{sec:soundness}

Our goal in this section is to characterize the sets of features
$F^{K,\alpha}_{q,\infty}$ that are identified by the SRS algorithm, depending
on the value of its parameters $q$, $\alpha$, and $K$, in an asymptotic
setting, ie. assuming an infinite sample size and an infinite forest
($T=\infty$). Note that in asymptotic setting, a variable is relevant as soon
as its importance in one of the tree is strictly greater than zero and we thus
have the following equivalence for all variables $X\in V$:
$$X\in F^{K,\alpha}_{q,\infty} \Leftrightarrow Imp^{K,\alpha}_{q,\infty}(X)>0$$
Furthermore, in infinite sample size setting, irrelevant variables always get a
zero importance and thus, whatever the parameters, we have the
following property for all $X\in V$:
$$X\mbox{ irrelevant} \Rightarrow X\notin F^{K,\alpha}_{q,\infty} \mbox{ (and }
Imp^{K,\alpha}_{q,\infty}(X)=0\mbox{)}.$$ The method parameters thus only affect
the number and nature of the relevant variables that can be found. Denoting by $r$ ($\leq p$) the
number of relevant variables, we will analyse separately the case $r\leq
q$ (all relevant variables can fit into memory) and the case $r>q$ (all relevant
variables can not fit into memory).

\paragraph{All relevant variables can fit into memory ($r\leq q$).}

Let us first consider the case of the RS method ($\alpha=0$). In this case,
\citet{louppe2013understanding} have shown the following asymptotic formula for the
importances computed with totally randomized trees ($K=1$):
\begin{eqnarray}
Imp_{q,\infty}^{1,0}(X) = \sum_{k=0}^{q-1} \dfrac{1}{C_p^k} \sum_{B\in {\cal
    P}_k(V^{-m})} I(X;Y|B),
\end{eqnarray} 
where ${\cal P}_k(V^{-m})$ is the set of subsets of $V^{-m}=V\setminus \{X_m\}$
of cardinality $k$. Given that all terms are positive, this sum will be
strictly greater than zero if and only if there exists a subset $B\subseteq
V$ of size at most $q-1$ such that $Y\nindep X|B$ ($\Leftrightarrow I(X;Y|B)>0$), or equivalently if $deg(X)< q$. When $\alpha=0$, RS with $K=1$ will thus find all and only the
relevant variables of degree at most $q-1$. Given Proposition
\ref{prop:mdi-only-relevant}, the degree of a variable $X$ can not be larger than
$r-1$ and thus as soon as $r\leq q$, we have the guarantee that RS with $K=1$
will find all and only the relevant variables. Actually, this result remains
valid when $\alpha>0$. Indeed, asymptotically, only relevant variables
will be selected in the $F$ subset by SRS and given that all relevant variables
can fit into memory, cumulating them will not impact the ability of SRS to
explore all conditioning subsets $B$ composed of relevant variables. We thus
have the following result:
\begin{proposition}
  $\forall \alpha$, if $r\leq q$: \quad $X \in F_{q,\infty}^{1,\alpha} \mbox{ iff }X\mbox{ is relevant}.$
\end{proposition}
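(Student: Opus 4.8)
The plan is to prove the two implications separately, observing that the forward (easy) direction holds for every choice of parameters, whereas the reverse direction is where both the hypothesis $r\le q$ and the treatment of $\alpha$ genuinely enter.

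For the direction $X\in F^{1,\alpha}_{q,\infty}\Rightarrow X$ relevant, I would simply invoke the asymptotic property recalled just above the statement: in infinite sample size an irrelevant variable receives zero importance in \emph{every} tree, so it is never added to $F$ in step 2(c). Contrapositively, any variable appearing in $F$ must be relevant. This requires no assumption on $\alpha$, $K$ or $r$, and it will also be reused as a lemma in the converse.

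For the converse $X$ relevant $\Rightarrow X\in F^{1,\alpha}_{q,\infty}$, the strategy is to exhibit, for each relevant $X$, a single tree of the infinite ensemble in which $X$ gets strictly positive importance. First, by Proposition \ref{prop:mdi-only-relevant} a minimal conditioning set $B$ with $Y\nindep X|B$ contains only relevant variables, so $deg(X)\le r-1\le q-1$; I fix such a minimal $B$, which satisfies $|B|\le q-1$, $I(X;Y|B)>0$, and $B\cup\{X\}$ composed solely of relevant variables. If a subspace $Q$ drawn at some iteration contains $B\cup\{X\}$, then the totally randomized tree built on $Q$ has, by the importance formula of Theorem \ref{thm:mdi-totally-imp} applied to the reduced feature set $Q$, an expected importance for $X$ equal to a positive combination of terms $I(X;Y|B')$ over $B'\subseteq Q\setminus\{X\}$; since $I(X;Y|B)>0$ this expectation is strictly positive, hence with positive probability a single such tree grants $X$ a strictly positive importance and adds it to $F$. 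It then remains to show that a subspace $Q\supseteq B\cup\{X\}$ is drawn with probability bounded below at every iteration, uniformly over the past history. Here the accumulation must be controlled: because irrelevant variables never enter $F$ (first direction), the kept part $R$ of any subspace is always a set of relevant variables, so $R\cup B\cup\{X\}$ is a set of relevant variables of size at most $r\le q$. The key counting step is $|R\cup(B\cup\{X\})|\le r\le q$, which rearranges to $|(B\cup\{X\})\setminus R|\le q-|R|=|C|$, guaranteeing that the uniformly sampled completion $C$ has enough room to contain every element of $B\cup\{X\}$ not already in $R$, with strictly positive probability. Since $V$ is finite there are finitely many possible states of $F$, so this probability is bounded below by a constant $\epsilon>0$ independent of the iteration; with $T=\infty$ and a conditional Borel--Cantelli argument (each iteration adding $X$ with conditional probability at least $\epsilon$ given the past), $X$ is found almost surely.

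The main obstacle is precisely this last room/counting argument in the case $\alpha>0$: one must rule out that accumulating previously found variables could crowd $R$ with relevant variables foreign to $B$ and thereby prevent $B\cup\{X\}$ from ever fitting into a single subspace of size $q$. The resolution hinges on the fact that the total number of relevant variables is $r\le q$, so the union of $R$ with $B\cup\{X\}$ can never exceed the memory budget; this is exactly where the hypothesis $r\le q$ is used beyond the degree bound, and it is what makes the statement hold for every $\alpha$ rather than only for $\alpha=0$.
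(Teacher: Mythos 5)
Your proof is correct and follows essentially the same route as the paper, which justifies this proposition informally by the very same two ingredients: the degree bound $deg(X)\le r-1\le q-1$ obtained from Proposition \ref{prop:mdi-only-relevant} together with the asymptotic importance formula on the drawn subspace, and the observation that since irrelevant variables never receive positive importance they never enter $F$, so the accumulated set $R$ contains only relevant variables and $|R\cup B\cup\{X\}|\le r\le q$ guarantees the needed conditioning always fits into memory whatever $\alpha$. You merely make explicit the probabilistic bookkeeping (the uniform lower bound on the probability of drawing $Q\supseteq B\cup\{X\}$, the positive probability that a single totally randomized tree then scores $X$, and the Borel--Cantelli step over infinitely many iterations) that the paper leaves implicit.
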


In the case of non-totally randomized trees ($K>1$), we lose the guarantee to
find all relevant variables even when $r\leq q$. Indeed, there is potentially a
masking effect due to $K>1$ that might prevent the conditioning needed for a
given variable to be relevant to appear in a tree branch.  However, we have the
following general result:
\begin{theorem}\label{th:stronglyrelpruned}
  $\forall \alpha, K$, if $r\leq q$:\quad
$X\mbox{ strongly relevant} \Rightarrow X \in F_{q,\infty}^{K,\alpha}$ 

\end{theorem}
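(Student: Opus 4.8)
The plan is to show that, in the infinite-forest limit, the SRS procedure builds with positive probability at least one tree whose feature subspace $Q$ contains all relevant variables, and that any strongly relevant variable necessarily receives a strictly positive importance in such a tree. Combining these two facts gives $X\in F^{K,\alpha}_{q,\infty}$.

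First I would establish that strong relevance is inherited by feature subsets that retain all relevant variables. Concretely, let $V_R$ denote the set of relevant variables and let $Q$ be any subset with $V_R\subseteq Q\subseteq V$ and $X\in Q$. Writing $B=Q\setminus\{X\}$ and $I=V\setminus Q$, every variable in $I$ is irrelevant, so by the chain rule $I(I;Y\mid B)=I(I;Y\mid B\cup\{X\})=0$; expanding $I(X,I;Y\mid B)$ in the two possible orders then yields $I(X;Y\mid B)=I(X;Y\mid V\setminus\{X\})$. Hence $X$ strongly relevant with respect to $V$ implies $X$ strongly relevant with respect to $Q$. This lets me treat $Q$ as a stand-alone feature set of size $q$ and invoke Proposition \ref{th:mdi:K:stronglypruned} (applied to $Q$ in place of $V$): for every $K$, a strongly relevant variable of a $q$-feature problem gets a strictly positive importance in at least one fully developed tree grown on $Q$. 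Since the trees SRS builds on a subspace $Q$ are exactly fully developed trees on $Q$ with randomisation $K\le q=|Q|$, this produces, with positive probability over the tree randomisation, a tree in which $X$ gets a positive importance.

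Next I would argue that SRS reaches such a subspace $Q\supseteq V_R$. The key observation is that, in asymptotic sample size, only relevant variables ever enter $F$, so at every iteration $F\subseteq V_R$ and therefore $|F|\le r\le q$. Consequently the number $\min\{\lfloor\alpha q\rfloor,|F|\}$ of accumulated features never exceeds $|F|$, leaving at least $q-|F|\ge r-|F|=|V_R\setminus F|$ free slots for the randomly drawn part $C$. Thus, whatever the current $F$ (hence whatever $\alpha$ and $K$), there is a uniformly positive probability that $C$ picks up exactly the still-missing relevant variables and that $Q=R\cup C\supseteq V_R$. In particular the first iteration, where $F=\emptyset$ and $Q$ is a uniform random $q$-subset, already satisfies $P(Q\supseteq V_R)=C_{p-r}^{q-r}/C_p^{q}>0$.

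Finally I would combine the two steps into an almost-sure statement. Each iteration, conditionally on the whole past, grants $X$ a positive importance with probability bounded below by a fixed $\delta\sigma>0$, where $\delta$ lower-bounds $P(Q\supseteq V_R)$ and $\sigma$ lower-bounds the probability that the tree built on such a $Q$ scores $X$ positively. Over the infinite ensemble this forces $X$ to be found with probability one, so $X\in F^{K,\alpha}_{q,\infty}$. I expect the main obstacle to be the probabilistic bookkeeping of the adaptive, $\alpha$-biased subspace sampling: one must check that the accumulation mechanism never starves exploration (which is exactly what $F\subseteq V_R$ and $r\le q$ guarantee), so that a good subspace keeps having positive probability at every reachable state rather than only at the first iteration.
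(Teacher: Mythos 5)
Your argument is correct, but it reaches the theorem by a genuinely different route than the paper's own proof. The paper is self-contained at the tree level: inside the infinite ensemble it isolates the family $\tau_R$ of trees grown on all $r$ relevant variables (plus $q-r$ irrelevant ones) in which relevant variables are tested before irrelevant ones along every branch --- a family always explored because, asymptotically, only relevant features accumulate in $F$ and $r\leq q$ --- then uses Proposition \ref{prop:mdi-only-relevant} to obtain an assignment of the remaining relevant variables under which $Y$ depends on $X$, and finally runs a recursive exchange argument along the compatible path (swapping the split on $X$ for another relevant variable, pushing $X$ one level down each time) to exhibit a tree in which $X$ is tested last and hence scores positively. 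You instead modularize: (i) a chain-rule lemma showing that strong relevance with respect to $V$ is inherited by the subproblem on any $Q\supseteq V_R$ (the paper obtains the analogous reduction implicitly from Proposition \ref{prop:mdi-only-relevant}); (ii) an appeal to Proposition \ref{th:mdi:K:stronglypruned} applied to $Q$, which is legitimate because, conditionally on $Q$ being drawn, an SRS tree has exactly the law of a tree in a fixed-$Q$ ensemble, and because positive mean of the nonnegative per-tree importance gives a positive-probability set of witnessing trees; note this is non-circular in the thesis's ordering, since that proposition is established beforehand via Theorem \ref{th:mdi-stronglyrelpruned} --- although its proof is this very exchange argument, so you are packaging it rather than avoiding it; and (iii) explicit probabilistic bookkeeping with a uniform lower bound $\delta\sigma>0$ over the finitely many reachable states $F\subseteq V_R$, where the paper contents itself with asserting that ``these trees are always explored whatever $\alpha$ and $K$''. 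Your route buys reuse and a clean separation of the sampling ingredient from the no-masking ingredient; the paper's buys self-containedness and an explicit description of which trees witness the positive importance. One harmless bookkeeping slip: the number of free slots is $q-|R|$, not $q-|F|$, and $C$ must cover $V_R\setminus R$, of size $r-|R|$; when $\lfloor\alpha q\rfloor<|F|$ one has $|R|<|F|$ and your displayed inequality compares mismatched quantities, but the correct comparison $q-|R|\geq r-|R|$ follows from $r\leq q$ in all cases, so the conclusion stands.
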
 
\begin{proof}
See Appendix \ref{app:proof:stronglyrelpruned}
  \end{proof}

There is thus no masking effect possible for the strongly relevant features
when $K>1$ as soon as the number of relevant features is lower than $q$. For a
given $K$, the features found by SRS will thus include all strongly relevant
variables and some (when $K>1$) or all (when $K=1$) weakly relevant ones. It is easy to
show that increasing $K$ can only decrease the number of weakly relevant
variables found. Using $K=1$ will thus provide a solution for the \textbf{all-relevant}
problem, while increasing $K$ will provide a better and better approximation of
the \textbf{minimal-optimal} problem in the case of strictly positive distributions (see
Section \ref{sec:fsproblems} for definitions of those problems).

Interestingly, Theorem \ref{th:stronglyrelpruned} remains true when $q=p$, ie.,
when forests are grown without any feature sampling. It thus extends Theorem \ref{thm:mdi-totally-irrelevant} (also \citep[Theorem 3]{louppe2013understanding}) for arbitrary $K$ in the case of
standard random forests.

\paragraph{All relevant variables can not fit into memory ($r>q$).}

When all relevant variables can not fit into memory, we do not have the
guarantee anymore to explore all minimal conditionings required to find all
(strongly or not) relevant variables, whatever the values of $K$ and
$\alpha$. When $\alpha=0$, we have the guarantee however to identify the
relevant variables of degree strictly lower than $q$. When $\alpha>1$, some
space in memory will be devoted to previously found variables that will
introduce some further masking effect. We nevertheless have the following
general results (without proof):
\begin{proposition} \begin{eqnarray}
& \forall X: \quad X\mbox{ relevant and } \nonumber \\ & deg(X)<(1-\alpha)q \Rightarrow X\in F^{1,\alpha}_{q,\infty}. \nonumber\end{eqnarray}
\end{proposition}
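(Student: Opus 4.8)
The plan is to show that, almost surely over the infinite run of the algorithm, the variable $X$ receives a strictly positive importance in at least one tree of the ensemble, which in the asymptotic setting ($K=1$, infinite sample size) is exactly equivalent to $X\in F^{1,\alpha}_{q,\infty}$. Since $X$ is relevant with $deg(X)=d$, there exists by definition a minimal subset $B$ with $|B|=d$ and $I(X;Y|B)>0$; by Proposition \ref{prop:mdi-only-relevant}, $B$ consists only of relevant variables. The whole argument then reduces to proving that with probability one there is at least one iteration at which the drawn subspace $Q$ contains $\{X\}\cup B$ \emph{and} the totally randomized tree grown on $Q$ assigns $X$ a strictly positive impurity decrease. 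I would stress at the outset that $B$ need not have been discovered previously: it suffices that $\{X\}\cup B$ be drawn \emph{freshly} into $Q$ in a single iteration, so that, unlike in the convergence analysis, the accumulation set $F$ plays no role in the guarantee itself (it only consumes room).

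First I would bound the fresh room. At every iteration the number of freshly sampled features is $|C|=q-|R|$ with $|R|\le \lfloor\alpha q\rfloor$, hence $|C|\ge q-\lfloor\alpha q\rfloor$. Since $q-\lfloor\alpha q\rfloor\ge q-\alpha q=(1-\alpha)q>d$, and $q-\lfloor\alpha q\rfloor$ is an integer exceeding the integer $d$, we get $|C|\ge d+1=|\{X\}\cup B|$. Thus there is always enough room in the fresh part alone to hold $X$ together with all of $B$. Consequently the event $\{Q\supseteq\{X\}\cup B\}$ has, at every iteration, a probability bounded below by a constant $\epsilon>0$ independent of the iteration: this probability depends on the current state only through $|F|$ (which fixes $|R|$) and through $R\cap(\{X\}\cup B)$, and since $F$ ranges over the finite collection of subsets of the relevant variables, the finitely many possible values of this probability are all strictly positive.

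Next I would handle the tree. Conditionally on $Q\supseteq\{X\}\cup B$, and since the tree randomization is independent of the subspace draw, with some probability $\delta>0$ the totally randomized tree built on $Q$ contains a branch that splits exactly on the variables of $B$ first, reaching (there is room in depth as $d<q$) a node $t$ whose learning subset satisfies $B=b$ for a value $b$ with $P(B=b)>0$ and $I(X;Y|B=b)>0$, and then splits on $X$. Such a $b$ exists because $I(X;Y|B)>0$ is the $P(B=b)$-weighted average of the terms $I(X;Y|B=b)$. At node $t$ the asymptotic impurity decrease equals $I(X;Y|B=b)>0$, so $X$ obtains a strictly positive importance in that tree and is added to $F$. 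Combining the two steps, each iteration places $X$ into $F$ with conditional probability at least $\epsilon\delta>0$, uniformly in the past.

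The main obstacle, and the point demanding most care, is that the successive iterations are \emph{not} independent, since $F$ (and hence the law of $R$) evolves over time; this is exactly why the \emph{uniform} lower bound $\epsilon\delta$ on the per-iteration conditional success probability is essential. With it, the probability of failing at all of the first $T$ iterations is at most $(1-\epsilon\delta)^{T}\to 0$, so by a conditional Borel--Cantelli (L\'evy) argument $X$ enters $F$ almost surely, giving $X\in F^{1,\alpha}_{q,\infty}$. It is instructive to contrast this with the pure random subspace case ($\alpha=0$, threshold $deg(X)<q$): accumulation lowers the guaranteed degree threshold to $(1-\alpha)q$ precisely because it consumes memory, yet since the proof exploits only the fresh slots it never requires $B$ to be pre-identified, which is why the benefit of accumulation lies in convergence speed rather than in soundness.
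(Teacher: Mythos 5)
Your proof is correct, but note that the paper itself states this proposition \emph{without proof}, offering only the remark that $(1-\alpha)q$ is the amount of memory that always remains available for exploring variables not yet found relevant; the analogous results that the paper does prove go through either the closed-form asymptotic MDI expression for random subspaces with $K=1$ (a sum over all conditionings $B$ of size at most $q-1$, which is strictly positive iff one term is) or, for strongly relevant variables, a combinatorial argument over a distinguished family of trees in which relevant variables are tested before irrelevant ones. Your route is genuinely different and in effect supplies the missing formalization of the paper's memory intuition: the integer inequality $|C| \geq q-\lfloor\alpha q\rfloor \geq \lceil (1-\alpha)q\rceil \geq d+1$ shows the fresh slots alone always accommodate $\{X\}\cup B$, giving a per-iteration success probability bounded below by $\epsilon\delta>0$ uniformly over the (finitely many, since $F$ is confined to subsets of the relevant variables in the infinite-sample regime) possible states, after which conditional Borel--Cantelli absorbs the dependence created by the evolving $F$. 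What your approach buys is robustness -- it never requires $B$ to have been pre-identified, cleanly separating soundness from convergence speed -- whereas the formula-based approach buys exact quantitative information (which $I(X;Y|B)$ terms appear with which weights). Two small points you should make explicit: first, you need that the fully developed tree actually reaches the node where $X$ splits, i.e.\ that no node along the path $B=b$ is pure; this holds because purity at any prefix of $b$ would force $Y$ to be constant given $B=b$ and hence $I(X;Y|B=b)=0$, contradicting your choice of $b$. Second, a lighter alternative to Borel--Cantelli is available: since $F$ is nondecreasing and contained in the finite set of relevant variables, it stabilizes almost surely after finitely many iterations, after which iterations are i.i.d.\ and your positive-probability event occurs almost surely in the infinite ensemble; but your uniform-bound argument is equally valid.
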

\begin{proposition} \begin{eqnarray} \forall K,X: \quad  X\mbox{ strongly relevant and } \nonumber \\ deg(X)<(1-\alpha)q \Rightarrow X\in F^{K,\alpha}_{q,\infty}. \nonumber \end{eqnarray}
\end{proposition}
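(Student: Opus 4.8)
The plan is to combine a \emph{subspace-capture} argument (showing that with positive probability, and hence infinitely often since $T=\infty$, the SRS algorithm draws a subspace containing $X$ together with a suitable conditioning set) with a \emph{no-masking} argument inside that subspace, reusing Theorem~\ref{th:stronglyrelpruned}. First I would fix a witness conditioning: since $X$ is strongly relevant it is in particular relevant, so by Definition~\ref{def:mdi:degree} there is a minimal subset $B\subseteq V^{-m}$ with $|B|=deg(X)=:d$ and $Y\nindep X|B$; by Proposition~\ref{prop:mdi-only-relevant}, $B$ contains only relevant variables, and by Proposition~\ref{prop:only-degree} every variable of $B$ has degree at most $d$. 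The objective is to have $X$ tested, in at least one tree, at a node whose conditioning realises $I(X;Y|B=b^*)>0$ for the value $b^*$ witnessing the dependence.

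The second step is the subspace capture. At any iteration the drawn subspace is $Q=R\cup C$ with $|C|=q-|R|\ge q-\lfloor \alpha q\rfloor\ge \lceil(1-\alpha)q\rceil$. Since $d<(1-\alpha)q$ forces $d+1\le \lceil(1-\alpha)q\rceil$, the fresh-exploration budget $C$ alone is large enough to hold all of $B\cup\{X\}$. Hence at the first iteration ($F=\emptyset$, so $R=\emptyset$ and $C$ is a uniform $q$-subset of $V$) one has $Q\supseteq B\cup\{X\}$ with strictly positive probability; more generally the same holds at every iteration whatever the content of $R$, since it suffices that $B\cup\{X\}$ avoid $R$ and be drawn into $C$, an event of positive probability. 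Because $T=\infty$, such a subspace is generated infinitely often.

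The third step, where I expect the real difficulty, is to guarantee that inside such a subspace $X$ actually collects a strictly positive importance in some tree, despite the masking that $K>1$ may introduce. The clean route is to note that, for a fixed subspace $Q$ of size $q$, the trees grown on $Q$ form a random forest whose feature universe is $Q$ itself; within that universe the number of relevant variables is trivially at most $|Q|=q$, so the hypothesis of Theorem~\ref{th:stronglyrelpruned} is automatically satisfied and variables that are strongly relevant \emph{with respect to} $Q$ cannot be masked. It therefore suffices to exhibit one capturable $Q\supseteq B\cup\{X\}$ with $Y\nindep X|Q\setminus\{X\}$. I would obtain this by completing $B\cup\{X\}$ with irrelevant fillers: using the chain rule and the definition of irrelevance, conditioning on irrelevant variables leaves $I(X;Y|B)$ unchanged, so that $I(X;Y|Q\setminus\{X\})=I(X;Y|B)>0$ when $Q\setminus\{X\}=B\cup\{\text{irrelevant features}\}$. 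Such a $Q$ is captured with positive probability by the previous step, and Theorem~\ref{th:stronglyrelpruned} then yields $Imp>0$ for $X$ in at least one tree, hence $X\in F^{K,\alpha}_{q,\infty}$.

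The main obstacle is precisely this last step in the regime where $V$ contains too few irrelevant variables to serve as fillers (in particular when $r$ is close to $p$): one can then no longer guarantee a capturable subspace in which $X$ stays strongly relevant, because adding \emph{other relevant} variables to the conditioning may destroy the dependence, independence being non-monotone in the conditioning set. Resolving this case would require a finer argument — either extending $B$ along a chain of conditionings that preserves $Y\nindep X$ up to size $q-1$, or adapting the tie-randomisation / ``last-relevant-variable'' recursion underlying Theorem~\ref{th:stronglyrelpruned} so that it terminates at the minimal witness $B$ rather than at the full conditioning $Q\setminus\{X\}$ — and this is the part on which I would concentrate the effort.
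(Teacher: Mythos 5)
You should first be aware that the paper states this proposition \emph{without proof}: the only guidance given is the remark that $(1-\alpha)q$ is the amount of memory always left for exploring variables not yet found relevant, together with the machinery of Theorem~\ref{th:stronglyrelpruned}. Your architecture matches that intended route: the witness set $B$ of size $deg(X)$ via Definition~\ref{def:mdi:degree} and Propositions~\ref{prop:mdi-only-relevant}--\ref{prop:only-degree}, the budget inequality $|C|\ge q-\lfloor\alpha q\rfloor\ge\lceil(1-\alpha)q\rceil\ge deg(X)+1$, and your chain-rule observation that conditioning on irrelevant variables leaves $I(X;Y|B)$ unchanged (the population analogue of Lemma~\ref{lemma:mdi-adding-irrelevant}) are all correct; the latter does make $X$ strongly relevant with respect to a subspace of the form $B\cup\{X\}\cup\{\mbox{irrelevant fillers}\}$, on which the push-down recursion of Theorem~\ref{th:stronglyrelpruned} transfers.

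The genuine gap is in your third step, and it is not the one you flag. Your capture argument establishes only that $B\cup\{X\}\subseteq Q$ has positive probability at every iteration; step 3 needs the strictly stronger event $Q\setminus\{X\}=B\cup\{\mbox{irrelevant variables}\}$, and this event is \emph{not} reachable ``whatever the content of $R$''. Asymptotically $F$ contains only relevant variables, and $|R|=\min\{\lfloor\alpha q\rfloor,|F|\}$; hence as soon as $|F|\ge\lfloor\alpha q\rfloor$ and $\lfloor\alpha q\rfloor>deg(X)+1$ --- perfectly compatible with $deg(X)<(1-\alpha)q$, e.g. $q=100$, $\alpha=0.5$, $deg(X)=10$ --- every drawn subspace necessarily contains at least $\lfloor\alpha q\rfloor-(deg(X)+1)$ found relevant variables outside $B\cup\{X\}$. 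Your filler subspace then has positive probability only at the finitely many early iterations where $F$ is still small, so the ``$T=\infty$, hence infinitely often'' argument does not apply; and this failure occurs however many irrelevant variables $V$ contains, so your diagnosis that the obstacle arises only ``when $V$ contains too few irrelevant variables'' is too narrow. What is actually missing is a proof that $X$ is strongly relevant with respect to some subspace that remains reachable infinitely often, i.e. that $Y\nindep X|B\cup R'$ for some admissible $R'\subseteq F$: extra \emph{relevant} conditioning variables can destroy the dependence $Y\nindep X|B$ (independence is non-monotone, as you yourself note), and for $K=q$ tie-randomisation alone cannot steer the tree into testing exactly $B$ above $X$, since a strictly better-scoring relevant variable cannot be displaced. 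Your closing alternative --- adapting the recursion of Theorem~\ref{th:stronglyrelpruned} so that it terminates at the minimal witness $B$ rather than at $Q\setminus\{X\}$, or exhibiting a chain of dependence-preserving conditionings up to a reachable $Q$ --- is therefore exactly where the effort must go; the paper offers no proof to fall back on here.
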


In these propositions, $(1-\alpha)q$ is simply the amount of memory that
always remains available for the exploration of variables not yet found
relevant.

\paragraph{Discussion.}

Results in this section show that SRS is a sound approach for feature selection
as soon as either the memory is large enough to contain all relevant variables
or the degree of the relevant variables is not too high. In this latter case,
the approach will be able to detect all strongly relevant variables whatever
its parameters ($K$ and $\alpha$) and the total number of features $p$. Of
course, these parameters will have a potentially strong influence on the number
of trees needed to reach convergence (see the next section) and the performance
in finite setting.

\subsection{Convergence}\label{sec:convergence}

Results in the previous section show that accumulating relevant variables has
no impact on the capacity at finding relevant variables asymptotically (when
$r\leq q$). It has however a potentially strong impact on the convergence speed
of the algorithm, as measured for example by the expected number of trees
needed to find all relevant variables. Indeed, when $\alpha=0$ and $q\ll p$,
the number of iterations/trees needed to find relevant variables of high degree
can be huge as finding them requires to sample them together with all features
in their conditioning. Given Proposition 2, we know that a minimum subset $B$
such that $X\nindep Y|B$ for a relevant variable $X$ contains only relevant
variables. This suggests that accumulating previously found relevant features
can improve significantly the convergence, as each time one relevant variable
is found it increases the chance to find a relevant variable of higher degree
that depends on it. In what follows, we will quantify the effect of
accumulation on convergence speed in different best-case and worst-case
scenarios and under some simplifications of the tree building procedure. We
will conclude by a theorem highlighting the interest of the SRS method in the
general class of PC distributions.

\paragraph{Scenarios and assumptions.}

The convergence speed is in general very much dependent on the data
distribution. We will study here the following three specific scenarios (where
features $\{X_1,\ldots,X_r\}$ are the only relevant features):\vspace{-0.5em}
\begin{itemize}
\item \textbf{Chaining:} The only and minimal conditioning that makes variable
  $X_i$ relevant is $\{X_1,\ldots,X_{i-1}\}$ (for $i=1,\ldots,r$). We
  thus have $deg(X_i)=i-1$. This scenario should correspond to the most
  favorable situation for the SRS algorithm.\vspace{-0.5em}
\item \textbf{Clique:} The only and minimal conditioning that makes variable
  $X_i$ relevant is $\{X_1,\ldots,X_{i-1},X_{i+1},\ldots,X_r\}$ (for
  $i=1,\ldots,r$). We thus have $deg(X_i)=r-1$ for all $i$. This is a rather
  defavorable case for both RS and SRS since finding a relevant variable implies
  to draw all of them at the same iteration. \vspace{-0.5em}
\item \textbf{Marginal-only:} All variables are marginally relevant. We will
  furthermore make the assumption that these variables are all strongly
  relevant. They can not be masked mutually.  This scenario is the most
  defavorable case for SRS (versus RS) since accumulating relevant variables is
  totally useless to find the other relevant variables and it should actually
  slow down the convergence as it will reduce the amount of memory left for
  exploration.
\end{itemize}
In Appendix \ref{sec:avgtime}, we provide explicit
formulation of the expected number of iterations needed to find all
$r$ relevant features in the chaining and clique scenarios both when
$\alpha=0$ (RS) and $\alpha=1$ (SRS). In Appendix \ref{sec:mc}, we
provide order 1 Markov chains that model the evolution through the
iterations of the number of variables found in the three scenarios
when $\alpha=0$ and $\alpha=1$. These chains can be used to compute
numerically the expected number of relevant variables found through
the iterations (and in the case of the marginal-only setting, the
expected number of iterations to find all variables).  These
derivations are obtained assuming $r\leq q$, $K=q$, and under the following
additional simplifying assumptions.

Below, we compute analytically the average number of trees needed to
  find all relevant variables in the chaining and clique scenarios and
  we derive transition matrices of Markov chains that model the
  evolution of the number of variables found through the iterations in
  the three scenarios. These results are obtained assuming $K=q$ and
  $r\leq q$, and with either $\alpha=0$ (RS) or $\alpha=1$ (SRS).

  To make these derivations possible and independent of a particular
  data distribution, one needs furthermore to simplify the decision tree
  growing algorithm in the case of the chaining and clique
  scenarios. In what follows, trees are thus assumed to be grown such
  that a unique variable is selected at each tree level and this
  variable is selected at random among all variables $X$ such that
  $Y\nindep X|B$ where $B$ is the set of all variables tested at
  previous levels.

  In the clique scenario, this assumption implies that only one
  variable of the clique will get a non-zero importance when all
  clique variables are selected at one iteration of RS/SRS (since only
  the last variable of the clique tested along a tree branch can get a
  non-zero score and this variable is the same in each branch given
  our tree growing assumption). This corresponds to a pessimistic
  scenario. Indeed, with standard unconstrained trees, several
  relevant variables could be found at one iteration given that the
  ordering of the variables, and thus the last variable of the clique
  tested, might differ from one tree branch to another.  As a
  consequence, the tree growing assumption will lead to an
  overestimation of the number of trees needed to reach
  convergence. In the chaining scenario, the simplified tree growing
  algorithm implies that all relevant variables selected at one
  iteration of RS/SRS together with their minimal conditioning will
  get a non-zero importance. This corresponds this time to an
  optimistic scenario, as, with unconstrained trees, such variable
  might not be detected at one iteration depending on the exact data
  distribution. This will thus lead this time to an underestimation of
  the number of trees needed to reach convergence. Note however that,
  in both cases, these over/under-estimations will affect both RS and
  SRS in the same proportion and thus our assumption will not impact
  their relative performance.

  Note that in the marginal-only scenario, given that all relevant
  variables are marginally and strongly relevant, they will always get
  a non-zero importance as soon as they are selected at one
  iteration. Our estimations below are thus not impacted by the
  simplification of the tree growing algorithm.

\paragraph{Results and discussion.}

Tables \ref{tab:timechaining}, \ref{tab:timeclique}, and \ref{tab:timemarginal}
show the expected number of iterations needed to find all relevant variables
for various configurations of the parameters $p$, $q$, and $r$, in the three
scenarios. Figure \ref{fig:evolution} plots the expected number of variables
found at each iteration both for RS and SRS in the three scenarios for some
particular values of the parameters.

\begin{table*}[htb]
\small
\centering
\caption{Expected number of iterations needed to find all relevant variables
  for various configurations of parameters $p$, $q$ and $r$ with RS
  ($\alpha=0$) and SRS ($\alpha=1$) in the three scenarios.}

\subfloat[Chaining.\label{tab:timechaining}]{
  \begin{tabular}{l|cc}
    Config (p,q,r) & RS & SRS\\
    \hline
    $10^4
    ,100,1$ & 100 & 100\\
    $10^4
    ,100,2$ & 10100 & 200\\
    $10^4
    ,100,3$ & $>10^6$ & 301\\
    $10^4
    ,100,5$ & $>10^{10}$ & 506\\
    $10^5
    ,100,3$ & $>10^9$ & 3028\\
  \end{tabular} 
}
\hspace{1.5em}
\subfloat[Clique.\label{tab:timeclique}]{
  \begin{tabular}{l|cc}
    Config (p,q,r) & RS & SRS\\
    \hline
    $10^4,100,1$ & 100 & 100\\
    $10^4,100,2$ & 30300 & 10302\\
    $10^4,100,3$ & $5\cdot 10^6$ & $10^6$\\
    $10^4,100,4$ & $9\cdot  10^{8}$ & $10^8$\\
    $10^4,10^3,4$ & $83785$ & $11635$\\
  \end{tabular}  
}
\hspace{1.5em}
\subfloat[Marginal-only.\label{tab:timemarginal}]{  
\begin{tabular}{l|cc}
    Config (p,q,r) & RS & SRS\\
    \hline
    $10^4
    ,100,10$ & 291 & 312\\
    $10^4
    ,100,50$ & 448 & 757\\
    $10^4
    ,100,90$ & 506 & 2797\\
    $10^4
    ,100,100$ & 1123 & 16187\\
    $25000
    ,100,50$ & 1123 & 1900\\
  \end{tabular}
}
\end{table*}

\begin{figure*}[htbp]
  \centering
  \includegraphics[width=0.7\linewidth]{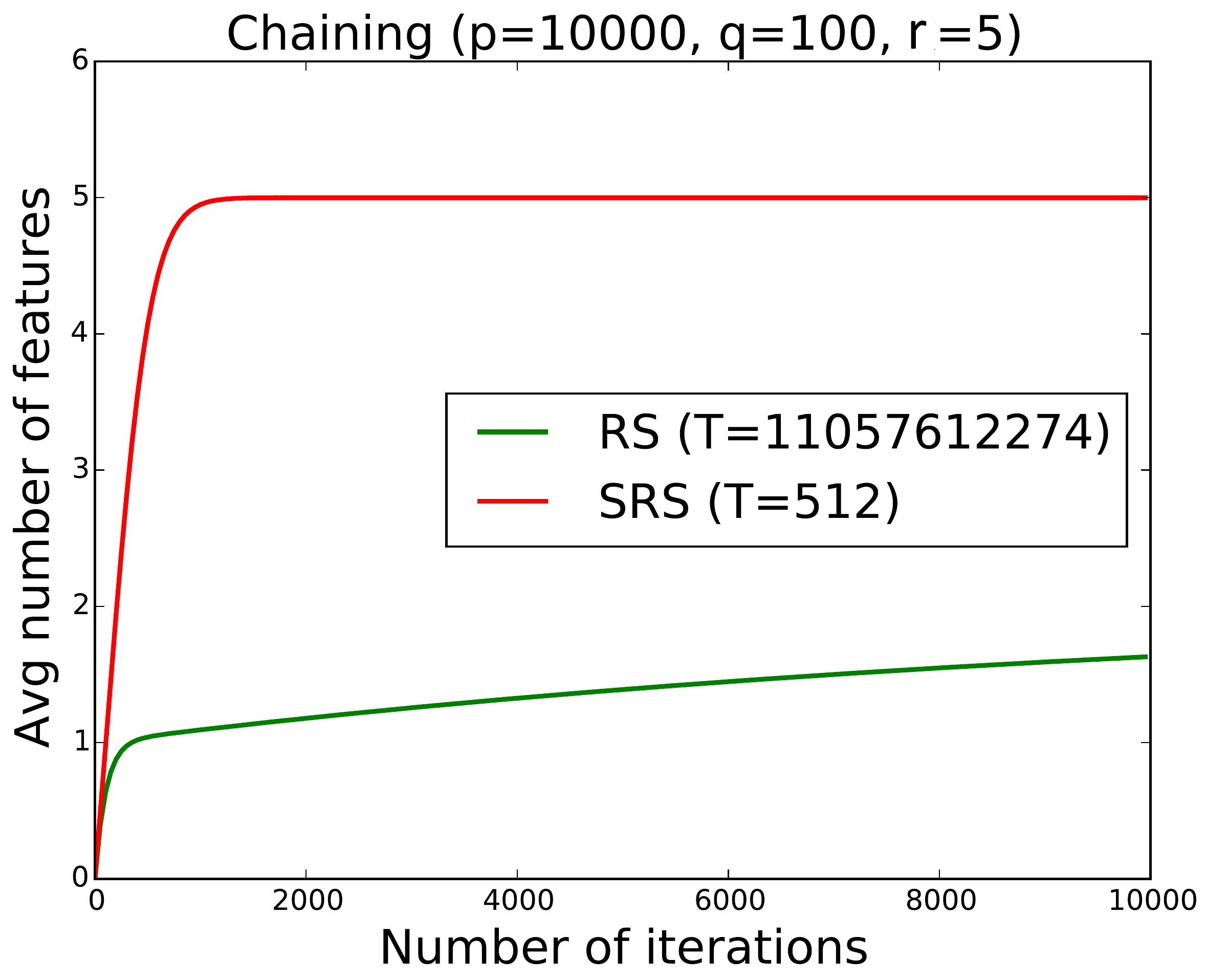}\\
  \includegraphics[width=0.7\linewidth]{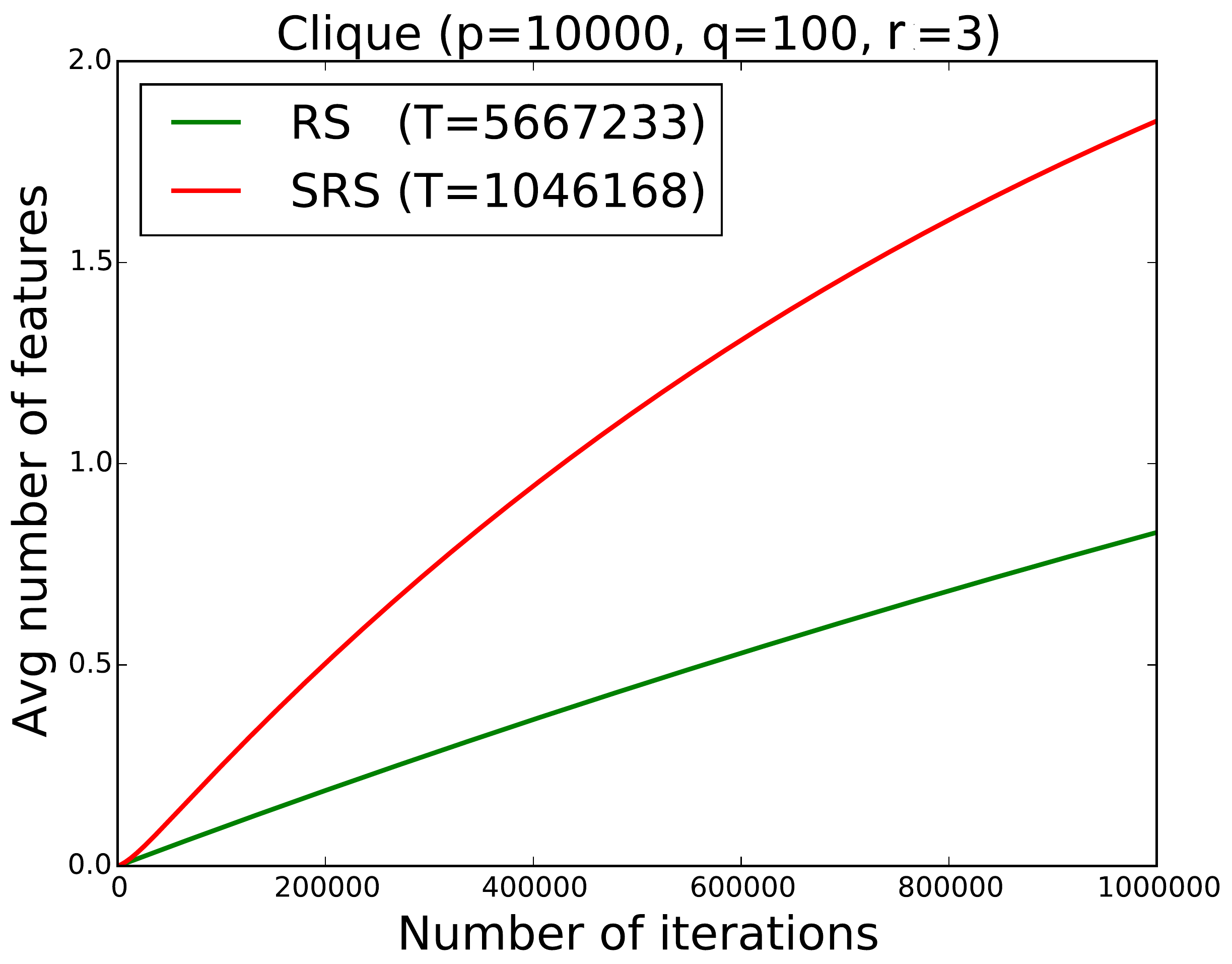}\\
 \includegraphics[width=0.7\linewidth]{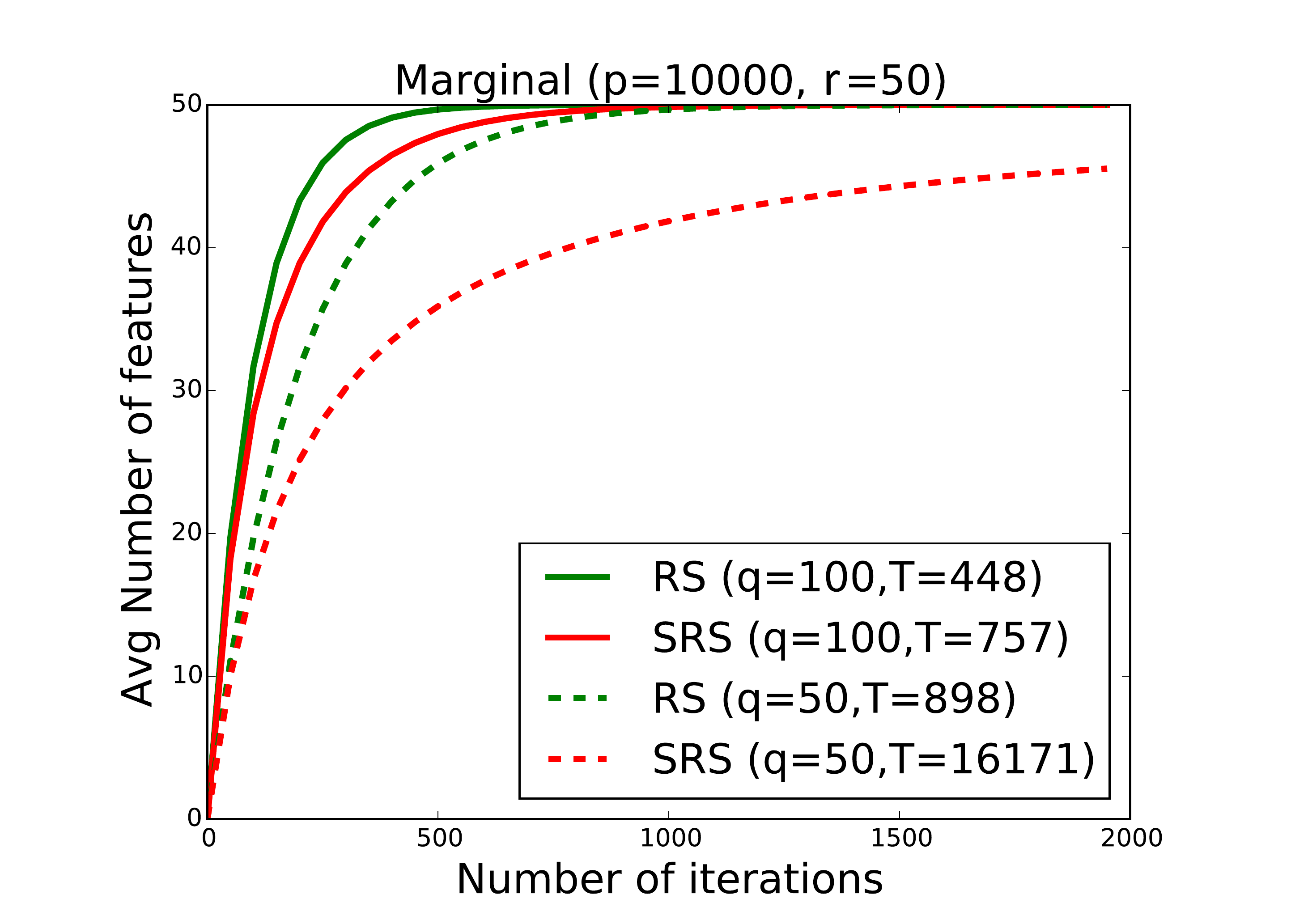}
  \caption{Evolution of the number of selected features in the different scenarios.\label{fig:evolution}}
\end{figure*}

From these results, we can draw several conclusions. In all cases, expected times (ie., number of iterations/trees to find all relevant
variables) depend mostly on the ratio $\frac{q}{p}$, not on absolute values of
$q$ and $p$. The larger this ratio, the faster the convergence. Parameter $r$
has a strong impact on convergence speed in all three scenarios.

The most impressive improvements with SRS are obtained in the \textbf{chaining}
hypothesis, where convergence is improved by several orders of magnitude (Table
\ref{tab:timechaining} and Figure \ref{fig:evolution}a) . At fixed $p$ and $q$,
the time needed by RS indeed grows exponentially with $r$ ($\simeq
(\frac{p}{q})^r$ if $r\ll q$), while time grows linearly with $r$ for the SRS
method ($\simeq r\frac{p}{q}$ if $r\ll q$) (see Eq.  (\ref{ref:naive}) and
(\ref{ref:smart}) in Appendix \ref{sec:avgtime}).

In the case of \textbf{cliques}, both RS and SRS need many iterations to find all
features from the clique (see Table \ref{tab:timeclique} and Figure
\ref{fig:evolution}b). SRS goes faster than RS but the improvement is not as
important as in the chaining scenario. This can be explained by the fact that
SRS can only improve the speed when the first feature of the clique has been
found. Since the number of iterations needed to find the $r$ features from the
clique for RS is close to $r$ times the number of iterations needed to find one
feature from the clique, SRS can only decrease at best the number of iterations
by approximately a factor $r$ (see Eq.  (\ref{eqn:naiveclique}) and
(\ref{eqn:smartclique}) in Appendix \ref{sec:avgtime}).
  
In the \textbf{marginal-only} setting, SRS is actually slower than RS because the
only effect of cumulating the variables is to leave less space in memory for
exploration. The decrease of computing times is however contained when $r$ is
not too close to $q$ (see Table \ref{tab:timemarginal} and Figure
\ref{fig:evolution}c).

Since we can obtain very significant improvement in the case of the chaining
and clique scenarios and we only increase moderately the number of iterations
in the marginal-only scenario (when $r$ is not too close from $q$), we can
reasonably expect improvement in general settings that mix these scenarios.

\paragraph{PC distributions and chaining.}

Chaining is the most interesting scenario in terms of convergence improvement
through variable accumulation. In this scenario, SRS makes it possible to find
high degree relevant variables with a reasonable amount of trees, when finding
these variables would be mostly untractable for RS. We provide below two
theorems that show the practical relevance of this scenario in the specific
case of PC distributions.

A PC distribution is defined as a strictly positive (P) distribution that
satisfies the composition (C) property stated as follows
\cite{nilsson2007consistent}:
\begin{property}
  For any disjoint sets of variables $R , S, T, U \subseteq V\cup\{Y\}$:
  $$S\indep T|R\mbox{ and }S\indep U|R \Rightarrow S\indep T\cup U | R$$
\end{property}
The composition property prevents the occurence of cliques and is preserved
under marginalization. PC actually represents a rather large class of
distributions that encompasses for example jointly Gaussian distributions and
DAG-faithful distributions \cite{nilsson2007consistent}.

The composition property allows to make Proposition \ref{prop:only-degree} more
stringent in the case of PC:
\begin{proposition} \label{prop:only-degree-pc}
Let $B$ denote a minimal subset $B$ such that $Y\nindep X|B$ for a relevant
variable $X$. If the distribution $P$ over $V\cup\{Y\}$ is PC, then for all
$X'\in B$, $deg(X')<|B|$. 
\end{proposition}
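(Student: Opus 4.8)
The plan is to sharpen the bound $deg(X')\le |B|$ of Proposition \ref{prop:only-degree} into the strict inequality $deg(X')<|B|$ by showing that, for every $X'\in B$, the conditioning set $B^-=B\setminus\{X'\}$ — which has size exactly $|B|-1$ — already makes $X'$ dependent on the output. Note first that $X'$ is indeed relevant (by Proposition \ref{prop:mdi-only-relevant}, $B$ contains only relevant variables), so $deg(X')$ is well defined. Since $B^-\subseteq V$ and $X'\notin B^-$, establishing $Y\nindep X'\mid B^-$ would immediately yield $deg(X')\le |B|-1<|B|$ by Definition \ref{def:mdi:degree}, which is exactly the claim.

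First I would argue by contradiction and assume $Y\indep X'\mid B^-$. I would then invoke the minimality of $B$: because $B^-$ is a proper subset of $B$ and $B$ is minimal with $Y\nindep X\mid B$, we automatically have $Y\indep X\mid B^-$. A preliminary observation here is that $X\notin B$ (otherwise $Y\indep X\mid B$ would hold trivially, contradicting the definition of $B$), so that $Y$, $X$, $X'$ and $B^-$ are pairwise disjoint — the structural hypothesis required to apply the graphoid-type properties.

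Next I would combine these two independences using the composition property, which is available precisely because $P$ is PC: from $Y\indep X\mid B^-$ and $Y\indep X'\mid B^-$ we obtain $Y\indep \{X,X'\}\mid B^-$. Applying the weak union property (see the side note on page \pageref{sn:distributionproperties}) then peels $X'$ back into the conditioning set, giving $Y\indep X\mid B^-\cup\{X'\}=B$, i.e. $Y\indep X\mid B$. This contradicts the defining property $Y\nindep X\mid B$ of $B$. Hence the assumption $Y\indep X'\mid B^-$ is untenable, so $Y\nindep X'\mid B^-$ and the conclusion follows for this arbitrary $X'\in B$.

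The argument is short, and I expect the main (and only real) subtlety to be bookkeeping the disjointness conditions needed by the composition and weak union properties — namely that $Y$, $X$, $X'$ and $B^-$ are pairwise disjoint and that $X'\notin B^-$ so the exhibited conditioning is legitimate for the degree definition. The conceptual crux is that composition is exactly the ingredient unavailable in the general setting of Proposition \ref{prop:only-degree}; it is what forces the minimal conditioning of $X'$ to shrink strictly below $|B|$ and thereby upgrades the non-strict bound to a strict one.
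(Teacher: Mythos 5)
Your proof is correct, and it takes a genuinely more direct route than the paper's. Both arguments hinge on the same deductive core — combine $Y\indep X\mid B\setminus\{X'\}$ (from minimality of $B$) with $Y\indep X'\mid B\setminus\{X'\}$ via composition, then apply weak union to conclude $Y\indep X\mid B$, contradicting the choice of $B$ — but you obtain the second independence simply as the hypothesis to be refuted, whereas the paper obtains it by assuming some $X'\in B$ has degree exactly $|B|$, restricting the feature set to $V'=B\cup\{X\}$, arguing that the minimal conditioning set $B'$ of $X'$ inside $V'$ can only be $(B\setminus\{X'\})\cup\{X\}$, and then invoking minimality of that $B'$. Your framing sidesteps this restriction-and-uniqueness detour entirely (the only delicate step in the paper's version, which leans on the proof of Proposition \ref{prop:only-degree} to keep $X'$ relevant within $V'$), and it proves something slightly stronger: for every $X'\in B$ the explicit set $B\setminus\{X'\}$ is a witness, i.e.\ $Y\nindep X'\mid B\setminus\{X'\}$, from which $deg(X')\le |B|-1$ follows at once. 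The bookkeeping you flag — $X\notin B$ (by self-conditioning), hence the disjointness of $Y$, $X$, $X'$ and $B\setminus\{X'\}$ — is handled correctly, and like the paper you use only the composition half of the PC assumption, strict positivity playing no role in either argument. What the paper's longer route buys is mainly continuity with the machinery of Propositions \ref{prop:mdi-only-relevant} and \ref{prop:only-degree}; your version is the cleaner proof of the same fact.
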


\begin{proof}
  Proposition \ref{prop:only-degree} proves that the degree of all features in
  $B$ is $\leq |B|$ in the general case. Let us assume that there exists a
  feature $X'\in B$ of degree $|B|$ in the case of PC distribution. Since this
  property remain true when the set of features $V$ is reduced to a subset
  $V'=B\cup\{X\}$, the minimal $B'$ of $X'$ can only be
  $(B\setminus\{X_i\})\cup\{X\}$. We thus have the following two properties:
    $$Y \indep X|B\setminus\{X'\}$$
    $$Y \indep X'|B'\setminus\{X\},$$
    because $B$ and $B'$ are minimal. Together, by the composition property, they should imply that
    $$Y \indep \{X,X_i\}|B\setminus\{X_i\},$$ which implies, by weak union:
    $Y\indep X|B,$ which contradicts the hypothesis.
  \end{proof}

In addition, one has the following result:
\begin{theorem}\label{th:pcchain}
  For any PC distribution, let us assume that there exists a non empty
  minimal subset $B=\{X_1,\ldots,X_k\}\subset V\setminus \{X\}$ of
  size $k$ such that $X\nindep Y |B$ for a relevant variable $X$.
  Then, variables $X_1$ to $X_k$ can be ordered into a sequence
  $\{X'_1,\ldots,X'_k\}$ such that $deg(X'_i)<i$ for all
  $i=1,\ldots,k$. 
\end{theorem}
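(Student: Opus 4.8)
The plan is to construct the required ordering greedily, building an increasing chain of subsets $C_0\subsetneq C_1\subsetneq\cdots\subsetneq C_k=B$, where $C_i=\{X'_1,\ldots,X'_i\}$, and choosing at step $i$ a variable $X'_i$ whose degree is at most $i-1$. The two ingredients I would combine are the minimality of $B$ (which controls what happens when we condition on a proper subset of $B$) and the composition property available because the distribution is PC. Note that I do not expect to need any induction on a reduced or conditioned distribution: working directly in $V$ and exploiting that every conditioning set $C_{i-1}$ exhibited along the way is a subset of $V$ of size $i-1$ will immediately bound $deg(X'_i)$.

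The core of the argument is the following invariant. At step $i$ the set $C_{i-1}$ has size $i-1<k$, hence $C_{i-1}\subsetneq B$, and by minimality of $B$ no proper subset makes $X$ dependent on $Y$, so $Y\indep X|C_{i-1}$. Let $R_i=B\setminus C_{i-1}$, which is non-empty. I would then argue by contradiction: suppose that $Y\indep v|C_{i-1}$ for every $v\in R_i$. Applying the composition property repeatedly (with fixed left argument $Y$ and fixed conditioning set $C_{i-1}$) I can merge these statements into $Y\indep R_i|C_{i-1}$, and then combine this once more with $Y\indep X|C_{i-1}$ to obtain $Y\indep (R_i\cup\{X\})|C_{i-1}$ (here $X\notin B$ guarantees all the sets involved are disjoint). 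The weak union property then yields $Y\indep X|C_{i-1}\cup R_i=B$, contradicting $Y\nindep X|B$. Consequently there exists $v\in R_i$ with $Y\nindep v|C_{i-1}$; I set $X'_i=v$ and $C_i=C_{i-1}\cup\{v\}$.

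Because $C_{i-1}$ is a conditioning set of size $i-1$ witnessing $Y\nindep X'_i|C_{i-1}$, we get $deg(X'_i)\le i-1<i$ directly in the full variable set $V$. Iterating for $i=1,\ldots,k$ exhausts $B$ (since $|C_i|=i$ and $C_k=B$) and produces the desired sequence $\{X'_1,\ldots,X'_k\}$. The base step $i=1$ specializes to $C_0=\emptyset$, where the argument shows that at least one variable of $B$ is marginally relevant, and the final step $i=k$ recovers Proposition~\ref{prop:only-degree-pc} as a particular case.

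I expect the main obstacle, and the step to phrase most carefully, to be the contradiction argument: one must apply the composition property with the conditioning kept fixed on the \emph{variable set} $C_{i-1}$ (not on a particular value), verify the disjointness hypotheses it requires, and check that the weak union property is invoked in its exact stated form. The essential structural insight—what makes the whole construction go through—is the choice of invariant ``$C_{i-1}\subsetneq B$'', which keeps $Y\indep X|C_{i-1}$ available at every step and is precisely where the minimality of $B$ and the strict positivity underlying composition interact; the rest is routine.
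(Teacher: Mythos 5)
Your proof is correct, and it takes a genuinely different route from the paper's. The paper orders the variables of $B$ by nondecreasing degree and argues by contradiction: assuming some $X'_i$ has $deg(X'_i)\geq i$, it picks the largest such index $l$, considers a minimal conditioning $B'$ of $X'_l$ inside $B\setminus\{X'_l\}$, and contradicts the chosen ordering via Proposition~\ref{prop:only-degree-pc}. You instead build the ordering greedily with nested prefixes $C_{i-1}=\{X'_1,\ldots,X'_{i-1}\}$, using only the minimality of $B$ (which gives $Y\indep X|C_{i-1}$ since $C_{i-1}\subsetneq B$), repeated composition (to merge $Y\indep v|C_{i-1}$ over $v\in B\setminus C_{i-1}$ together with $Y\indep X|C_{i-1}$), and weak union (to reach $Y\indep X|B$, contradicting the hypothesis); the disjointness these properties require does hold since $X\notin B$, and your inductive invariant is maintained because $|C_i|=i<k$ until the last step. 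Your route buys two things. First, it is self-contained: it never invokes Proposition~\ref{prop:only-degree-pc} and indeed recovers it as the case $i=k$, as you note, and it sidesteps a delicate point in the paper's argument, namely the claim that a minimal conditioning of $X'_l$ can be taken inside $B\setminus\{X'_l\}$, which implicitly requires restricting the variable set. Second, it proves something strictly stronger than the stated theorem: not merely $deg(X'_i)<i$, but the nested witness property $Y\nindep X'_i|\{X'_1,\ldots,X'_{i-1}\}$, which is exactly the chaining structure exploited in the SRS convergence analysis of Section~\ref{sec:convergence}, whereas the degree bounds alone would not guarantee that the witnessing conditionings line up along a single chain. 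One small conceptual nit: your closing remark attributes composition to ``strict positivity'', but in the PC class positivity and composition are separate assumptions; your argument uses only composition and weak union (the latter holding for any distribution), so it is in fact valid for any distribution satisfying composition, strictly positive or not.
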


\begin{proof}
 Let us denote by $\{X'_1,X'_2,\ldots,X'_k\}$ the variables in $B$ ordered
 according to their degree, ie., $deg(X'_i)\leq deg(X'_{i+1})$, for
 $i=1,\ldots,k-1$. Let us show that $deg(X'_i)<i$ for all $i=1,\ldots,k$. If
 this property is not true, then there exists at least one $X'_i\in B$ such
 that $deg(X'_i)\geq i$. Let us denote by $l$ the largest $i$ such that
 $deg(X_i)\geq i$. Using a similar argument as in the proof of Proposition
 \ref{prop:only-degree-pc}, there exists some minimal subset $B'\subseteq
 B\setminus\{X_l\}$ such that $Y\nindep X_l|B'$. Given that $deg(X_l)\geq l$,
 this subset $B$ should contain $l$ variables or more from $B\setminus
 \{X_l\}$. It thus contains at least one variable $X_m$ with $l<m\leq k$, and
 this variable is such that $deg(X_m)<m$. Given Proposition
 \ref{prop:only-degree-pc}, if $B'$ is minimal and contains $X_m$, then for a
 PC distribution, $deg(X_m)$ should be strictly smaller than $|B'|\geq l$, which
 contradicts the fact that $X_m$ is after $X_l$ in the ordering and proves the
 theorem.
 \end{proof}

 This theorem shows that, when the data distribution is PC, for all relevant
 variables of degree $k$, the $k$ variables in its minimal conditioning form a
 chain of variables of increasing degrees (at worst). For such distribution, we
 thus have the guarantee that SRS finds all relevant variables
 with a number of iterations that grows almost only linearly with the maximum
 degree of relevant variables (see Eq.\ref{ref:smart} in Appendix
 \ref{sec:avgtime}), while RS would be unable to find relevant variables of
 even small degree.

\section{Experiments}\label{sec:empirical}

\begin{figure}[htbp]
\centering
  \subfloat[SRS with $q=0.05 \times p $ on a dataset with $p=50000$ features and $r=20$ relevant features.]{\includegraphics[width=0.85\linewidth]{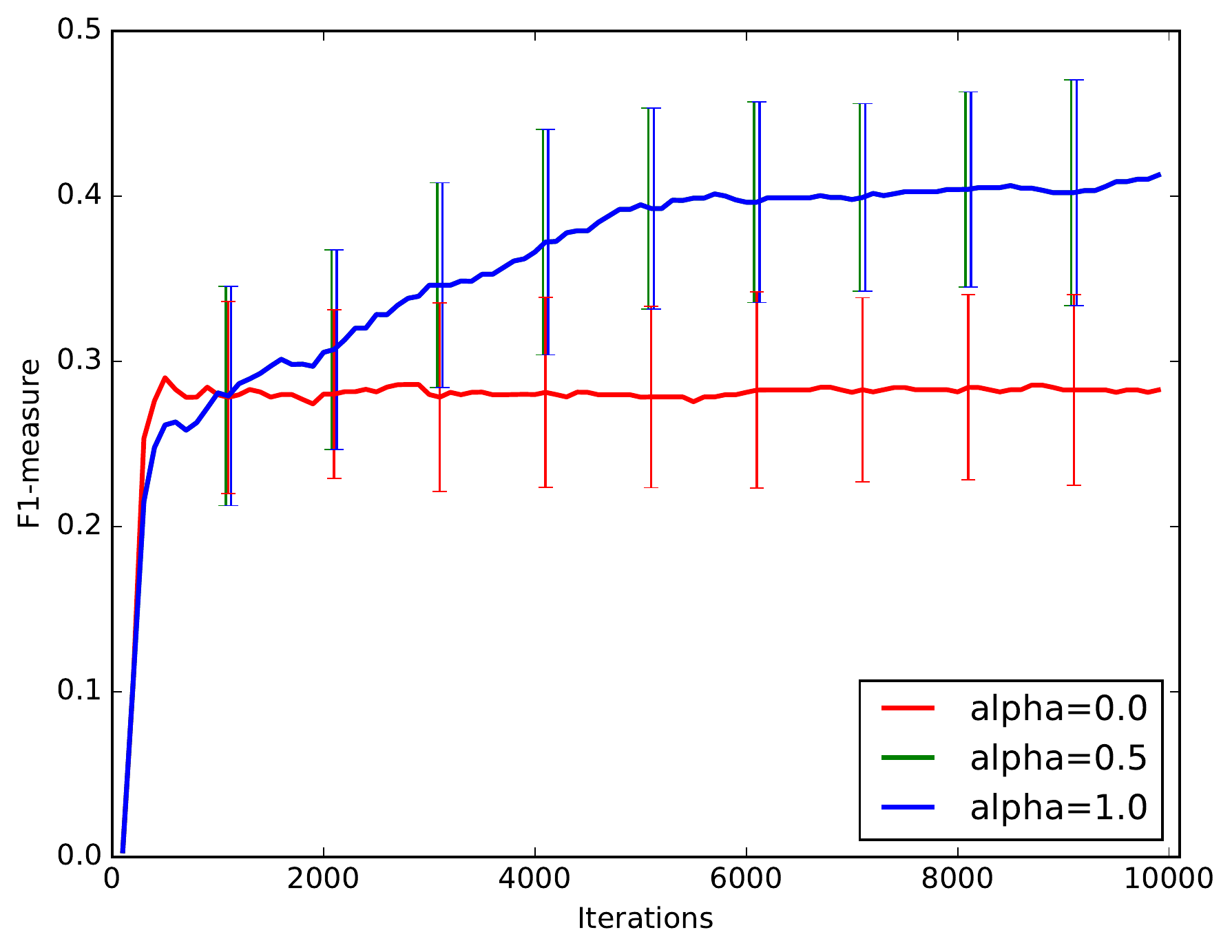}}
  \\
  \subfloat[SRS with $q=0.005 \times p$ on a dataset with $p=50000$ features and $r=20$ relevant features.]{\includegraphics[width=0.85\linewidth]{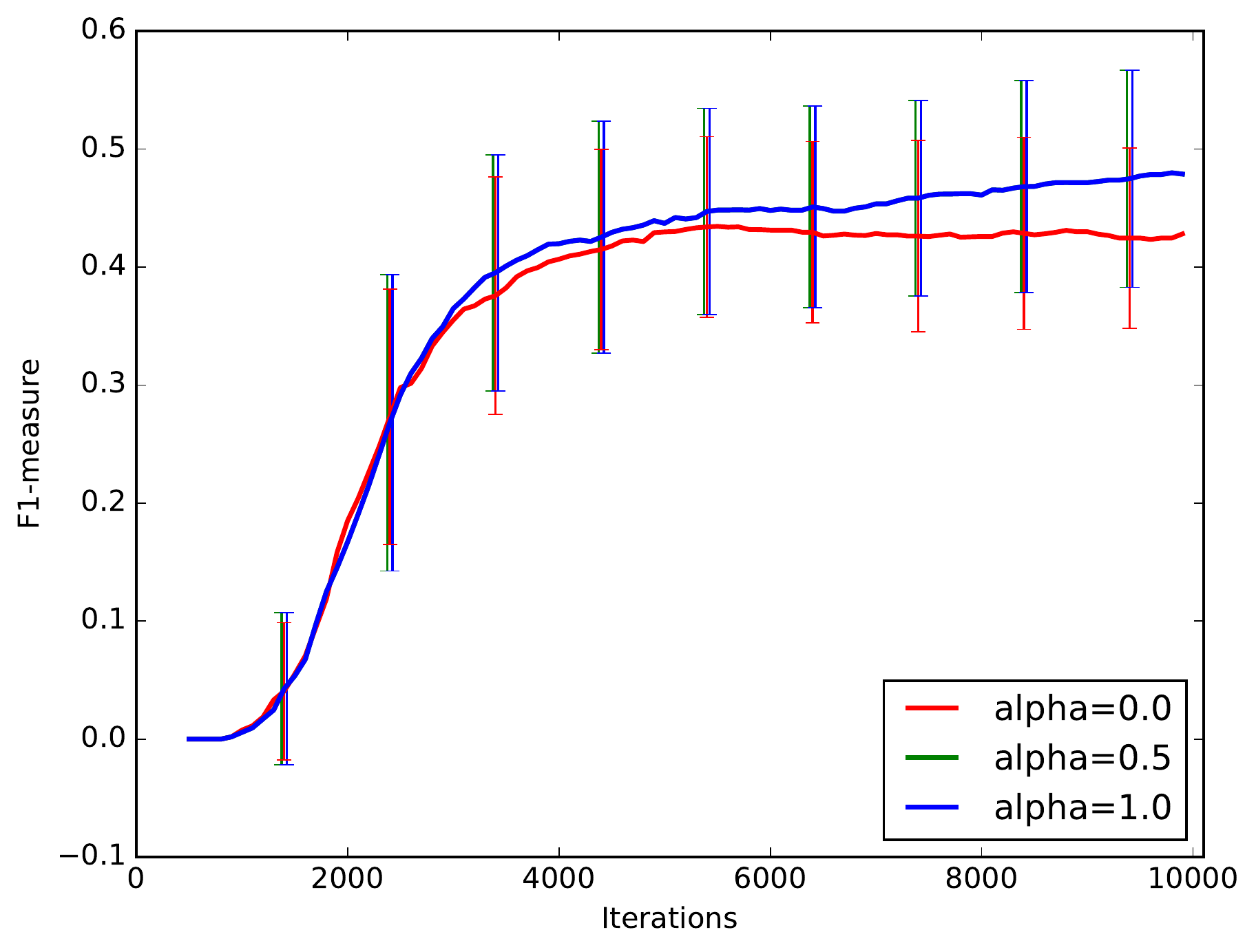}}
  \caption{Evolution of the evaluation of the feature subset found by RS and SRS using the F1-measure computed with respect to relevant features. A higher value means that more relevant features have been found. This experiment was computed on an artificial dataset (similar to madelon) of 50000 features with 20 relevant features and for two sizes of memory.}
  \label{fig:pr} \vspace{-0em}
\end{figure}

Although our main contribution is the theoretical analysis in
asymptotic setting of the previous section, we present here a few
preliminary experiments in finite setting as a first illustration of
the potential of the method.
One of the main difficulties to implement the SRS algorithm as presented in
Algorithm \ref{algo:SRS} is step 2(c) that decides which variable should be
incorporated in $F$ at each iteration. In infinite sample size
setting, a variable with a non-zero importance in a single tree is guaranteed
to be truly relevant. Mutual informations estimated from finite samples however
will always be greater than 0 even for irrelevant variables. One should thus
replace step 2(c) by some statistical significance tests to avoid the
accumulation of irrelevant variables that would jeopardize the convergence of
the algorithm. In our experiments here, we use a random probe (ie., an
artificially created irrelevant variable) to derive a statistical measure
assessing the relevance of a variable \cite{stoppiglia03}. Details about this
test are given in Appendix~\ref{app:results}.

Figure \ref{fig:pr} evaluates the feature selection ability of SRS for three values
of $\alpha$ (including $\alpha=0$) and two memory sizes (250 and 2500) on an
artificial dataset with 50000 features, among which only 20 are relevant (see
Appendix~\ref{app:results} for more details). The two plots show the evolution
of the F1-score comparing the selected features (in $F$) with the truly
relevant ones as a function of the number of iterations. As expected, SRS
($\alpha>0$) is able to find better feature subsets than RS ($\alpha=0$) for
both memory sizes and both values of $\alpha>0$.

Additional results are provided in Appendix~\ref{app:results} that compare the
accuracy of ensembles grown with SRS for different values of $\alpha$ and on 13
classification problems. These comparisons clearly show that accumulating the
relevant variables is beneficial most of the time (eg., SRS with $\alpha=0.5$
is significantly better than RS on 7 datasets, comparable on 5, and
significantly worse on only 1). Interestingly, SRS ensembles with $\alpha=0.5$ are also most of
the time significantly better than ensembles of trees grown without memory
constraint (see Appendix~\ref{app:results} for more details).

\section{Conclusions and future work} 

Our main contribution is a theoretical analysis of the SRS (and RS) methods in
infinite sample setting. This analysis showed that both methods provide some
guarantees to identify all relevant (or all strongly relevant) variables as
soon as the number of relevant variables or their degree is not too high with
respect to the memory size. Compared to RS, SRS can reduce very strongly the
number of iterations needed to find high degree variables in particular in the
case of PC distributions. We believe that our results shed some new light on random
subspace methods for feature selection in general as well as on tree-based
methods, which should help designing better feature selection procedures.

Some preliminary experiments were provided that support the
theoretical analysis, but more work is clearly needed to evaluate the approach
empirically on controlled and real high-dimensional problems. We believe
that the statistical test used to decide which feature to include in the
relevant set should be improved with respect to our first implementation based
on the introduction of a random probe. One drawback of the SRS method with
respect to RS is that it can not be parallelized anymore because of its
sequential nature. It would be interesting to design and study variants of the
method that are allowed to grow parallel ensembles at each iteration instead of
single trees. Finally, relaxing the main hypotheses of our theoretical analysis
would be also of course of great interest.

\begin{subappendices}
	\section{Proof of Theorem \ref{th:stronglyrelpruned}} \label{app:proof:stronglyrelpruned}

	\begin{theorem*}
		$\forall \alpha, K$, if $r\leq q$:\quad
		$X\mbox{ strongly relevant} \Rightarrow X \in F_{q,\infty}^{K,\alpha}$ 
		
	\end{theorem*} 
	\begin{proof}
		By definition, $X$ belonging to $F_{q,\infty}^{K,\alpha}$ means that there is
		at least one tree (grown with parameters $q$, $\alpha$ and $K$) in which $X$
		receives a strictly positive score for its split, i.e. such that $Y$ depends on
		$X$ conditionally to the variable assignement defined by the path from the root
		node to the node where $X$ is used to split. Let us show that one such tree always
		exists whatever $K$ and $\alpha$ when $X$ is strongly relevant and $r\leq q$.
		
		Within the infinite ensemble, let us consider only the trees grown using all
		$r$ relevant variables (and $q-r$ irrelevant ones randomly selected). Given that
		$r\leq q$ and given that only relevant features can be kept in memory, these
		trees are always explored whatever the value of $\alpha\geq 0$. Among these
		trees, let us furthermore only consider those such that irrelevant variables
		are tested in each branch only when all relevant variables (including $X$) are
		exhausted. These trees are always explored whatever the value of $K$. This
		derives from the fact that a relevant variable can always be picked with non
		zero probability at any tree node, except if all relevant variables have been
		tested above that node. Indeed, except in this latter case, the $K$ tested
		variables can always include at least one relevant variable. If some relevant
		variable gets a non zero score, one relevant variable will be automatically
		used to split since irrelevant variables can only get zero scores. Even when
		all tested relevant variables get a zero score, one of them can still be
		selected instead of an irrelevant one given that tie are resolved by
		randomization.
		
		Let us denote by $\tau_R$ the set of trees as just defined and let us show that
		$X$ gets a non zero score in at least one tree in $\tau_R$.
		
		By definition 2 and property 1, $X$ strongly relevant implies that there exists
		at least one assignement of values to all relevant variables but $X$ such that
		conditionally to this assignement, $Y$ is dependent on $X$. In each tree in
		$\tau_R$, there is a path from the root node to a node where $X$ is used to
		split that is compatible with this assignement. Let us assume that $X$ always
		gets a zero score in all these compatible paths and show that this leads to a
		contradiction.
		
		If all relevant variables are tested above $X$ in a compatible path then $X$
		should receive a non zero score at its node, which would contradict our
		hypothesis. Thus, $X$ can only be tested in a compatible path before all
		relevant variables have been tested. Given our hypothesis that $X$ only gets
		zero scores, if $X$ is used to split in one compatible path, then there exists
		another tree in $\tau_R$ with the same splits above $X$ in the compatible path
		and with the split on $X$ replaced by a split on another relevant variables
		(because of tie randomization or because of the randomization due to $K<q$). In
		this new tree, $X$ is thus used to split at least one level below in the
		compatible path. Applying this argument recursively, one can thus show that
		there is at least one tree in $\tau_R$ where $X$ is the last variable used to
		split in the compatible path. In this tree, $X$ thus gets a non zero score,
		which contradicts the hypothesis and therefore concludes the theorem.
	\end{proof}
	
	\section{Convergence analysis}
	
	\subsection{Simplifying assumptions}
	
	\label{app:assumption-convergence}
	
	Below, we compute analytically the average number of trees needed to
	find all relevant variables in the chaining and clique scenarios and
	we derive transition matrices of Markov chains that model the
	evolution of the number of variables found through the iterations in
	the three scenarios. These results are obtained assuming $K=q$ and
	$r\leq q$, and with either $\alpha=0$ (RS) or $\alpha=1$ (SRS).
	
	To make these derivations possible and independent of a particular
	data distribution, one needs furthermore to simplify the decision tree
	growing algorithm in the case of the chaining and clique
	scenarios. In what follows, trees are thus assumed to be grown such
	that a unique variable is selected at each tree level and this
	variable is selected at random among all variables $X$ such that
	$Y\nindep X|B$ where $B$ is the set of all variables tested at
	previous levels.
	
	In the clique scenario, this assumption implies that only one
	variable of the clique will get a non-zero importance when all
	clique variables are selected at one iteration of RS/SRS (since only
	the last variable of the clique tested along a tree branch can get a
	non-zero score and this variable is the same in each branch given
	our tree growing assumption). This corresponds to a pessimistic
	scenario. Indeed, with standard unconstrained trees, several
	relevant variables could be found at one iteration given that the
	ordering of the variables, and thus the last variable of the clique
	tested, might differ from one tree branch to another.  As a
	consequence, the tree growing assumption will lead to an
	overestimation of the number of trees needed to reach
	convergence. In the chaining scenario, the simplified tree growing
	algorithm implies that all relevant variables selected at one
	iteration of RS/SRS together with their minimal conditioning will
	get a non-zero importance. This corresponds this time to an
	optimistic scenario, as, with unconstrained trees, such variable
	might not be detected at one iteration depending on the exact data
	distribution. This will thus lead this time to an underestimation of
	the number of trees needed to reach convergence. Note however that,
	in both cases, these over/under-estimations will affect both RS and
	SRS in the same proportion and thus our assumption will not impact
	their relative performance.
	
	Note that in the marginal-only scenario, given that all relevant
	variables are marginally and strongly relevant, they will always get
	a non-zero importance as soon as they are selected at one
	iteration. Our estimations below are thus not impacted by the
	simplification of the tree growing algorithm.
	
	\subsection{Average times}
	\label{sec:avgtime}
	
	\paragraph{Chaining.}

	Let us denote by $T^{RS}_{chain}(i,p,q)$ ($1\leq i\leq r$) the
	average number of iterations needed to find the feature $X_{i}$ of
	degree $i-1$ and by $T^{SRS}_{chain}(i,p,q)$ the average number of
	iterations needed to find the same feature with the SRS algorithm
	(that forces the selection of already found relevant
	variables). Given our assumptions above, each tree will be able to
	identify all relevant variables $X$ it gets as soon as it gets also
	the relevant variables in its minimal conditioning. Note that
	$T^{RS/SRS}_{chain}(i,p,q)$ can also be interpreted as the average
	time needed to find the first $i$ relevant features, given that one
	can not find $X_i$ without finding all features $X_{j}$ with $1\leq
	j < i$. $T^{RS/SRS}_{chain}(r,p,q)$ also represents the average
	number of iterations needed to find all relevant variables under the
	chain assumption.
	
	\begin{theorem}
		Under our assumptions, the $T^{RS}_{chain}$ function can be computed as follows:
		\begin{equation}\label{ref:naive}
		T^{RS}_{chain}(i,p,q) = \prod_{l=0}^{i-1} \frac{p-l}{q-l}
		\end{equation}
	\end{theorem}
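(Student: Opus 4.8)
The plan is to observe that, under the chaining scenario together with the simplified tree-growing model of Appendix~\ref{app:assumption-convergence}, discovering $X_i$ at a given iteration of RS ($\alpha=0$) is \emph{equivalent} to the event that the size-$q$ subspace $Q$ drawn uniformly at that iteration contains the entire set $\{X_1,\dots,X_i\}$. Indeed, $X_i$ has degree $i-1$ with unique minimal conditioning $\{X_1,\dots,X_{i-1}\}$, so a tree can assign $X_i$ a strictly positive importance only if all of $X_1,\dots,X_{i-1}$ are available to condition on; conversely, whenever $\{X_1,\dots,X_i\}\subseteq Q$, the chain unfolds ($X_1$ marginally relevant, then $X_2$ relevant given $X_1$, and so on), and each of these features, including $X_i$, receives a non-zero score. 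Since $\alpha=0$ means nothing is carried over between iterations, the successive draws are independent and identically distributed, so the number of iterations until $X_i$ is first found is a geometric waiting time.

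Next I would compute the per-iteration success probability $P_i = \Pr[\{X_1,\dots,X_i\}\subseteq Q]$. Counting the size-$q$ subsets of the $p$ features that contain a fixed $i$-element set gives
\[
P_i = \frac{C_{p-i}^{q-i}}{C_p^q} = \prod_{l=0}^{i-1}\frac{q-l}{p-l},
\]
where the product form follows by cancelling factorials in the binomial ratio. A geometric random variable with success probability $P_i$ has mean $1/P_i$, hence
\[
T^{RS}_{chain}(i,p,q) = \frac{1}{P_i} = \prod_{l=0}^{i-1}\frac{p-l}{q-l},
\]
which is exactly the claimed formula~\eqref{ref:naive}.

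The delicate point — and the step I expect to require the most care — is the equivalence established in the first paragraph, rather than the arithmetic of the second. It relies on two facts that must be stated cleanly: first, that $X_i$ genuinely cannot be scored without its complete minimal conditioning present, which is where the chaining hypothesis (a single, forced ordering of degrees) and Proposition~\ref{prop:only-degree} are used; and second, that the simplified tree-growing assumption guarantees detection of every chain feature whose conditioning lies in $Q$, so that the event is an exact characterisation and not merely a necessary or a sufficient condition. A related subtlety worth noting explicitly is that the first iteration containing $\{X_1,\dots,X_i\}$ simultaneously reveals $X_1,\dots,X_{i-1}$, so the expected time to discover $X_i$ coincides with the expected time to discover the first $i$ relevant features; this is what lets us read $T^{RS}_{chain}(i,p,q)$ either way without ambiguity. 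Once these points are secured, the i.i.d.\ structure of the RS draws makes the geometric-waiting-time argument immediate.
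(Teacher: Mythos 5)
Your proof is correct and follows essentially the same route as the paper's: the paper likewise treats the discovery time as a geometric random variable whose success probability is that of drawing $X_i$ together with its full conditioning $\{X_1,\dots,X_{i-1}\}$ in one size-$q$ draw, namely $C_{p-i}^{q-i}/C_p^q = \prod_{l=0}^{i-1}(q-l)/(p-l)$, and then takes the reciprocal. Your additional care in verifying that this event exactly characterises detection under the simplified tree-growing assumption is a point the paper leaves implicit, but it does not change the argument.
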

	\begin{proof}
		Indeed, $T^{RS}_{chain}(i,p,q)$ is the mean of a geometric
		distributed random variable with a probability of success defined as
		the probability of drawing the $i-1$ variables in $X_{i}$'s
		conditioning and $X_{i}$ at the same time, which is given by:
		\begin{equation}
		\frac{\binom{p-i}{q-i}}{\binom{p}{q}} = \prod_{l=0}^{i-1} \frac{q-l}{p-l}.
		\end{equation}
	\end{proof}
	
	\begin{theorem}
		Under the same assumption, $T^{SRS}_{chain}(i,p,q)$ can be computed as follows:
		\begin{equation}\label{ref:smart}
		T^{SRS}_{chain}(i,p,q) = \sum_{l=0}^{i-1} \frac{p-l}{q-l} - (i-1)
		\end{equation}
	\end{theorem}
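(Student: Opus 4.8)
The plan is to compute the expected number of iterations of the SRS algorithm to find all first $i$ relevant variables in the chaining scenario, under the simplifying assumptions ($K=q$, $r\le q$, $\alpha=1$, and the simplified tree-growing rule that a single variable is selected per level). The key structural fact, which I would exploit throughout, is that in the chaining scenario the variables have a strict ordering $X_1,\ldots,X_r$ by degree, where $\deg(X_i)=i-1$ and the \emph{only} minimal conditioning making $X_i$ relevant is $\{X_1,\ldots,X_{i-1}\}$. Consequently, the SRS algorithm finds the relevant variables strictly in order: $X_i$ can only be discovered once $X_1,\ldots,X_{i-1}$ have already been found and are being accumulated in $F$ (via $\alpha=1$).

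First I would decompose the total time as a telescoping sum. Let $\tau_i$ denote the number of additional iterations needed to find $X_i$ once $X_1,\ldots,X_{i-1}$ are already in $F$. Then $T^{SRS}_{chain}(i,p,q)=\sum_{l=1}^{i}\mathbb{E}\{\tau_l\}$, so it suffices to compute each $\mathbb{E}\{\tau_l\}$ separately and sum. This decomposition is exactly where the advantage of accumulation enters: because $\alpha=1$, once $X_1,\ldots,X_{l-1}$ are found, they are \emph{always} forced back into memory at each subsequent iteration (there are $l-1\le q$ of them, so they fit). Therefore, to find $X_l$ at a given iteration, one no longer needs to redraw its whole conditioning by chance; one only needs the single new variable $X_l$ to be drawn among the $q-(l-1)$ remaining free slots out of the $p-(l-1)$ not-yet-accumulated features.

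Next I would compute $\mathbb{E}\{\tau_l\}$ as the mean of a geometric random variable. Given that $X_1,\ldots,X_{l-1}$ occupy $l-1$ memory slots deterministically, the probability of drawing $X_l$ into the remaining $q-(l-1)$ slots from the $p-(l-1)$ free features is $\frac{q-(l-1)}{p-(l-1)}$. Writing $l'=l-1$ this is $\frac{q-l'}{p-l'}$, so $\mathbb{E}\{\tau_l\}=\frac{p-(l-1)}{q-(l-1)}$. Summing over $l=1,\ldots,i$ and reindexing with $l'=l-1$ from $0$ to $i-1$ gives
\begin{equation}
T^{SRS}_{chain}(i,p,q)=\sum_{l'=0}^{i-1}\frac{p-l'}{q-l'}.
\end{equation}
This already resembles the target but overshoots: the claimed formula subtracts $(i-1)$. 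I would reconcile this by noting that the \emph{first} relevant variable $X_1$ (degree $0$) is found in $\frac{p}{q}$ iterations just as in RS, but for each \emph{subsequent} variable the accumulation means the ``useful'' iteration discovering $X_{l-1}$ can be re-used — i.e. the iteration that completes the discovery of one variable is the same iteration from which the search for the next effectively restarts, so each of the $i-1$ transitions double-counts one iteration. Subtracting this overlap yields $\sum_{l'=0}^{i-1}\frac{p-l'}{q-l'}-(i-1)$, matching Equation~\eqref{ref:smart}.

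The main obstacle I anticipate is the careful bookkeeping of this ``$-(i-1)$'' correction: making rigorous the claim that the discovery of $X_{l-1}$ and the start of the effective search for $X_l$ coincide on a single iteration, rather than being two disjoint waiting periods. This requires arguing precisely, under the simplified tree-growing model, that whenever an iteration draws $X_{l-1}$ together with its conditioning $\{X_1,\ldots,X_{l-2}\}$ and additionally happens to contain $X_l$, both $X_{l-1}$ and $X_l$ are identified simultaneously — so the geometric waiting times for consecutive variables are not independent but share boundary iterations. The cleanest route is probably to set up the exact order-1 Markov chain on the number of found variables (as referenced for the companion derivations in Appendix~\ref{sec:mc}), compute its expected absorption time, and verify the telescoped closed form against it; the combinatorial geometric argument above then serves as the intuition, with the Markov chain supplying the rigorous accounting of the shared iterations.
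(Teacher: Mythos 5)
Your proposal is correct in substance and arrives at the right formula, but it takes a genuinely different route from the paper. The paper proves the theorem by induction on $i$, conditioning on the \emph{first} iteration: with probability $q/p$ the draw contains $X_1$, in which case the remaining expected work is $T^{SRS}_{chain}(i-1,p-1,q-1)$ — crucially with no ``$+1$'', because that same iteration simultaneously serves as the first trial of the reduced problem with $p-1$ features and $q-1$ free slots — and otherwise the iteration is wasted, yielding the recursion $T^{SRS}_{chain}(i,p,q)=T^{SRS}_{chain}(i-1,p-1,q-1)+(\frac{p}{q}-1)$, from which the closed form follows by a short induction. You instead telescope forward over discovery epochs, $T=\sum_{l}\mathbb{E}\{\tau_l\}$ with $\tau_l=N_l-N_{l-1}$. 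One point to tighten: your intermediate claim $\mathbb{E}\{\tau_l\}=\frac{p-(l-1)}{q-(l-1)}$ is not correct for the $\tau_l$ you defined, and you do not actually need the Markov-chain fallback to repair it. The clean statement, which makes your $-(i-1)$ correction rigorous, is that for $l\geq 2$, conditional on $X_{l-1}$ being drawn at iteration $N_{l-1}$, the remaining $q-l+1$ free slots of that \emph{same} iteration form a uniform draw from the $p-l+1$ not-yet-accumulated features, so iteration $N_{l-1}$ already constitutes the first Bernoulli trial for $X_l$; hence $\tau_l$ is the number of \emph{failures} before the first success of a Bernoulli$\bigl(\frac{q-l+1}{p-l+1}\bigr)$ sequence, with mean $\frac{p-l+1}{q-l+1}-1$, and this also handles simultaneous discoveries ($\tau_l=0$) transparently. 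Summing $\frac{p}{q}+\sum_{l=2}^{i}\bigl(\frac{p-l+1}{q-l+1}-1\bigr)$ gives exactly Equation~(\ref{ref:smart}). Both arguments hinge on the same shared-iteration observation — encoded in the paper as the missing ``$+1$'' in the success branch of the recursion, and in yours as the $-1$ per transition — but your decomposition buys a direct probabilistic reading of every term of the closed form, whereas the paper's backward recursion is algebraically shorter.
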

	\begin{proof}
		Let us show this by induction on $i$. The base case corresponds to $i=1$. In this case, we have:
		$$T^{SRS}_{chain}(1,p,q) = T^{RS}_{chain}(1,p,q) =
		\frac{p}{q},$$ which satisfies Eqn (\ref{ref:smart}). Let us assume
		that Eqn. (\ref{ref:smart}) is satisfied for $i<i'$ and let us show
		that it is satisfied for $i=i'$. $T^{SRS}_{chain}(i',p,q)$ can be
		defined as follows:{\small
			\begin{eqnarray}\label{ref:recdeftsmart}
			\begin{split}
			T^{SRS}_{chain}(i',p,q) =  \frac{q}{p} T^{SRS}_{chain}(i'-1,p-1,q-1) + \\(1-\frac{q}{p}) (1 + T^{SRS}_{chain}(i',p,q)).
			\end{split}
			\end{eqnarray}}
		One can indeed distinguish two cases:
		\begin{itemize}
			\item $X_1$ is selected at the first iteration (this happens with
			probability $q/p$): the average time needed to find feature
			$X_{i'}$ of degree $i'-1$ then becomes the time needed to find a
			feature of degree $i'-2$ when one is allowed to draw $q-1$
			features among $p-1$, which is $T^{SRS}_{chain}(i'-1,p-1,q-1)$
			\item $X_1$ is not selected at the first iteration (this happens
			with probability $1-q/p$): in this case, the first iteration is
			useless and thus the number of iterations needed will be
			$1+T^{SRS}_{chain}(i',p,q)$.
		\end{itemize}
		Eqn. (\ref{ref:recdeftsmart}) can be used to compute $T^{SRS}_{chain}$ recursively:
		\begin{eqnarray}\label{ref:recdeftsmart2}
		T^{SRS}_{chain}(i',p,q)  =  T^{SRS}_{chain}(i'-1,p-1,q-1) + (\frac{p}{q}-1).
		\end{eqnarray}
		Deriving Eqn. (\ref{ref:smart}) from Eqn. (\ref{ref:recdeftsmart2}) is then straightforward, which concludes the proof by induction.
	\end{proof}
	Eqn. (\ref{ref:smart}) shows that the average time needed to find the
	$i$ first features is equal to the sum of the time needed to find all
	features individually minus the number of features. This last term
	takes into account the fact that by chance, one might find several
	features at once.

	\paragraph{Clique.}

	Let us denote by $T^{RS}_{cl}(i,p,q)$ and $T^{SRS}_{cl}(i,p,q)$, the
	average time needed to find $i$ features (among $r$) from the clique
	respectively with the RS and the SRS algorithm. Given our
	assumptions above, when the tree growing algorithm is given all $r$ relevant
	features, it will be able to identify one (and only one) feature
	from the clique at random. If it has already found $i$ features from
	the clique, the chance to get a new one, when all $r$ features are
	selected among the $q$ ones, will thus be $(r-i)/r$, i.e., the probability to test one of the $r-i$ not yet found features after all other $r$ features from the clique.
	
	\begin{theorem}
		\begin{equation}\label{eqn:naiveclique}
		T^{RS}_{cl}(i,p,q) = \left( \prod_{l=0}^{r-1} \frac{p-l}{q-l} \right) \cdot \left(  \sum_{l=0}^{i-1} \frac{r}{r-l}\right) \\
		\end{equation}
	\end{theorem}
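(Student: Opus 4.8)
The plan is to exploit the memoryless nature of the RS sampling (the case $\alpha=0$), together with the simplifying assumptions on tree growing described in Appendix~\ref{app:assumption-convergence}, in order to decompose the total waiting time into a sum of independent geometric waiting times, one for each newly discovered clique feature. The arithmetic is light; the work is in setting up the right renewal structure.

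First I would compute the probability that a single RS iteration draws all $r$ clique features into its subspace of size $q$. Since the $q$ features are sampled uniformly without replacement among the $p$ features, this probability is
$$P_{all} = \frac{\binom{p-r}{q-r}}{\binom{p}{q}} = \prod_{l=0}^{r-1}\frac{q-l}{p-l},$$
so the expected number of iterations needed to realize one such ``complete'' draw is $1/P_{all} = \prod_{l=0}^{r-1}\frac{p-l}{q-l}$, which is exactly the first factor of the claimed formula.

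Next I would analyse the per-iteration success probability conditioned on the number $l$ of clique features already found. By the clique assumption, no clique feature can receive a strictly positive importance unless all $r$ of them appear simultaneously in the subspace (each one is relevant only conditionally on all the others), and by the simplified tree-growing rule exactly one clique feature, chosen uniformly at random among the $r$, is identified when such a complete draw occurs. Hence, given that $l$ features have been found, the probability that an iteration yields a genuinely new one is $P_{all}\cdot\frac{r-l}{r}$: the product of the complete-draw probability and the chance that the randomly selected feature lies among the $r-l$ not-yet-found ones. Because $\alpha=0$, the subspaces drawn at successive iterations are mutually independent and the accumulated set $F$ does not bias the sampling, so the number of iterations needed to pass from $l$ to $l+1$ found features is geometric with this success probability, and these stage durations are independent.

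Then, by linearity of expectation, the expected time to reach $i$ found features is the sum over the transitions $l\to l+1$ for $l=0,\dots,i-1$:
$$T^{RS}_{cl}(i,p,q) = \sum_{l=0}^{i-1}\frac{1}{P_{all}\cdot\frac{r-l}{r}} = \frac{1}{P_{all}}\sum_{l=0}^{i-1}\frac{r}{r-l},$$
and substituting $1/P_{all} = \prod_{l=0}^{r-1}\frac{p-l}{q-l}$ gives the stated expression. The main obstacle I anticipate is not the computation but the careful justification of the decomposition into independent geometric stages: I must argue that, under $\alpha=0$, the only relevant state variable is the current count $l$ of discovered clique features, that the per-iteration success events are independent across iterations, and that the ``one feature at random per complete draw'' consequence of the simplified tree model indeed produces the factor $\frac{r-l}{r}$ rather than permitting several clique features to be found in one iteration. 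Once this Markov/renewal structure is secured, the sum of geometric means follows immediately.
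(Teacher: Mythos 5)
Your proof is correct and follows essentially the same route as the paper's: the first factor is the inverse of the complete-draw probability $\binom{p-r}{q-r}/\binom{p}{q}$, each summand $\frac{r}{r-l}$ is the inverse of the probability $\frac{r-l}{r}$ that the single clique feature identified by a complete draw is a new one, and the total expected time is the sum of the resulting stage expectations. The only difference is that you make the decomposition into independent geometric stages (the Markov/renewal structure, valid because $\alpha=0$ makes successive subspace draws independent given the count $l$) explicit, whereas the paper leaves it implicit in its proof and relegates the corresponding transition probabilities to the Markov-chain appendix.
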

	\begin{proof}
		The first factor in Eqn.(\ref{eqn:naiveclique}) is the inverse of
		the probability of selecting all $r$ relevant features at once. Each
		term of the sum in the second factor corresponds to the inverse of
		the probability of testing a new relevant variables, not yet found,
		at the bottom of the tree. As discussed above, this probability is
		$\frac{r-l}{r}$ when we have already found $l$ features from the
		clique.
	\end{proof}
	
	\begin{theorem}
		\begin{equation}\label{eqn:smartclique}
		T^{SRS}_{cl}(i,p,q) = \sum_{l=0}^{i-1} \frac{r}{r-l} \prod_{m=l}^{r-1} \frac{p-m}{q-m}
		\end{equation}
	\end{theorem}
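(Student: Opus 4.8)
The plan is to decompose the waiting time $T^{SRS}_{cl}(i,p,q)$ into a sum of independent geometric waiting times between consecutive discoveries of clique features, exactly as in the chaining analysis but with a stage-dependent success probability. Suppose that at some point SRS has already identified $l$ features of the clique ($0 \le l < i$). Under the $\alpha=1$ accumulation rule and the assumption $r \le q$, these $l$ features are systematically reloaded into memory at every subsequent iteration, and the remaining $q-l$ memory slots are filled by features drawn uniformly at random (without replacement) among the $p-l$ features not yet found. Because the forced set depends only on which features have already been found and the random draws are independent across iterations, the number of iterations needed to pass from $l$ to $l+1$ discovered clique features is geometrically distributed, and its expectation is the reciprocal of the per-iteration success probability.

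Next I would compute that per-iteration success probability as a product of two factors. First, a new clique feature can be detected only if all $r$ clique features are simultaneously present in memory; since $l$ of them are forced, this requires the remaining $r-l$ clique features to fall among the $q-l$ random slots, an event of probability
\begin{equation*}
p_l = \frac{\binom{p-r}{q-r}}{\binom{p-l}{q-l}},
\end{equation*}
obtained by the same hypergeometric counting used for $T^{RS}_{cl}$ in Eqn.~\eqref{eqn:naiveclique}. Second, conditioned on all $r$ clique features being present, the tree-growing assumption of Appendix~\ref{app:assumption-convergence} guarantees that exactly one clique feature receives a non-zero importance and that it is uniform among the $r$ clique variables; hence it is a genuinely new one with probability $(r-l)/r$, precisely the factor already isolated in the RS analysis. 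The expected number of iterations to go from $l$ to $l+1$ discoveries is therefore $\frac{r}{r-l}\cdot\frac{1}{p_l}$.

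Summing these expectations over $l=0,\dots,i-1$ by linearity of expectation yields
\begin{equation*}
T^{SRS}_{cl}(i,p,q) = \sum_{l=0}^{i-1} \frac{r}{r-l}\,\frac{\binom{p-l}{q-l}}{\binom{p-r}{q-r}},
\end{equation*}
and it remains only to rewrite the binomial ratio in product form. Expanding both binomials as ratios of factorials gives $\binom{p-l}{q-l}/\binom{p-r}{q-r} = \frac{(p-l)!\,(q-r)!}{(p-r)!\,(q-l)!} = \prod_{m=l}^{r-1}\frac{p-m}{q-m}$, which substituted above produces exactly the claimed expression.

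The main obstacle I anticipate is not the algebra, which is routine, but the careful justification that the inter-discovery waiting times are genuinely independent and geometric with the stated success probability. This relies on a memorylessness argument: at the start of each iteration the only state that matters is the set of already-found clique features (its cardinality $l$), the forced reload makes the conditional success probability depend on $l$ alone, and the fresh random draws at each iteration are mutually independent. One must also confirm that $l \le q$ throughout (guaranteed by $i \le r \le q$) so that accumulation never exceeds the memory budget, and that, as in the RS case, the pessimistic tree-growing convention does not overcount discoveries within a single iteration.
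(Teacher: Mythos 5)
Your proposal is correct and follows essentially the same route as the paper's proof: both decompose $T^{SRS}_{cl}(i,p,q)$ into expected waiting times between consecutive discoveries, with per-iteration success probability equal to the probability of drawing the $r-l$ missing clique features into the $q-l$ free slots times the factor $(r-l)/r$ from the tree-growing convention. Your hypergeometric expression $\binom{p-r}{q-r}/\binom{p-l}{q-l}$ is just the binomial form of the paper's product $\prod_{m=l}^{r-1}\frac{q-m}{p-m}$, and your extra care about memorylessness and the bound $l\le q$ only makes explicit what the paper leaves implicit.
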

	\begin{proof}
		Each term of the sum represents the average time needed to find a
		new clique feature given that we have already found $l$
		features. This time is equal to one over the probability of finding
		a new feature when we have already found $l$ of them. This latter is
		the probability of selecting among $q$ the $r-l$ missing relevant
		features (i.e., $\prod_{m=l}^r\frac{q-m}{p-m}$) times the
		probability of testing one of the missing relevant features at the
		bottom of the tree (i.e., $(r-l)/r$).
	\end{proof}
	
	When $i=1$, $T^{SRS}_{cl}(1,p,q) =
	T^{RS}_{cl}(1,p,q)$. Intuitively, it indeed takes the same time for
	the RS and the SRS algorithms to find the first relevant
	features. When $i$ increases however, the SRS algorithm becomes
	faster and faster than the RS algorithm. Indeed, the RS
	algorithm always needs to find all $r$ clique features, while the
	SRS one only needs to find the $r-i$ missing relevant features.

	\subsection{Markov chain interpretation}\label{sec:mc}
	
	Let us denote by $N^{X,Y}_{t}$ the number of variables found for $t$
	iterations, with $X=c$, $X=g$, and $X=m$ respectively for the chain
	hypothesis, the clique hypothesis and the marginal only hypothesis
	(as defined in the first section of this document) and $Y=n$ and
	$Y=s$ respectively for the RS and SRS algorithms. All these random
	variables follow order 1 Markov chains. The transition probabilities
	are provided below for each chain (without proof), under the
	assumptions given in Section~\ref{app:assumption-convergence}.
	
	\paragraph{Chain hypothesis.}
	
	{\small
		\begin{equation}
		P(N_t^{c,n}=l_1|N_t^{c,n}=l_2) = \left\{
		\begin{array}{ll}
		0 & \mbox{if } l_1<l_2\\
		\frac{\binom{p-r}{q-l_1}}{\binom{p}{q}} & \mbox{if } l_1>l_2\\
		1-\sum_{i=l_2+1}^r \frac{\binom{p-r}{q-i}}{\binom{p}{q}} &\mbox{if } l_1=l_2
		\end{array}
		\right.
		\end{equation}
		
		\begin{equation}
		P(N_t^{c,s}=l_1|N_t^{c,s}=l_2) = \left\{
		\begin{array}{ll}
		0 & \mbox{if } l_1<l_2\\
		\frac{\binom{p-r}{q-l_1}}{\binom{p-l_2}{q-l_2}} & \mbox{if } l_1>l_2\\
		1-\sum_{i=l_2+1}^r \frac{\binom{p-r}{q-i}}{\binom{p-l_2}{q-l_2}} &\mbox{if } l_1=l_2
		\end{array}
		\right.
		\end{equation}
		
		\paragraph{Clique hypothesis.}
		
		\begin{equation}
		P(N_t^{g,n}=l_1|N_t^{g,n}=l_2) = \left\{
		\begin{array}{ll}
		0 & \mbox{if } l_1<l_2\\
		1-\frac{\binom{p-r}{q-r}}{\binom{p}{q}} \frac{r-l_2}{r} &\mbox{if } l_1=l_2\\
		\frac{\binom{p-r}{q-r}}{\binom{p}{q}} \frac{r-l_2}{r} & \mbox{if } l_1=l_2+1\\
		0 &\mbox{if } l_1>l_2+1
		\end{array}
		\right.
		\end{equation}
		
		\begin{equation}
		P(N_t^{g,s}=l_1|N_t^{g,s}=l_2) = \left\{
		\begin{array}{ll}
		0 & \mbox{if } l_1<l_2\\
		1-\frac{\binom{p-r}{q-r}}{\binom{p-l_2}{q-l_2}} \frac{r-l_2}{r} &\mbox{if } l_1=l_2\\
		\frac{\binom{p-r}{q-r}}{\binom{p-l_2}{q-l_2}} \frac{r-l_2}{r} & \mbox{if } l_1=l_2+1\\
		0 &\mbox{if } l_1>l_2+1
		\end{array}
		\right.
		\end{equation}
	}
	\paragraph{Marginal only hypothesis.}
	
	\begin{equation}
	P(N_t^{m,n}=l_1|N_t^{m,n}=l_2) = \left\{
	\begin{array}{ll}
	0 & \mbox{if } l_1<l_2\\
	\frac{\binom{r-l_2}{l_1-l_2} \binom{p-r+l_2}{q-l_1+l_2}}{\binom{p}{q}} &\mbox{if } l_1>l_2\\
	\frac{\binom{p-r+l_2}{q}}{\binom{p}{q}} &\mbox{if }l_1=l_2\\
	\end{array}
	\right.
	\end{equation}
	
	\begin{equation}
	P(N_t^{m,s}=l_1|N_t^{m,s}=l_2) = \left\{
	\begin{array}{ll}
	0 & \mbox{if } l_1<l_2\\
	\frac{\binom{r-l_2}{l_1-l_2} \binom{p-r}{q-l_1}}{\binom{p-l_2}{q-l_2}} &\mbox{if } l_1>l_2\\
	\frac{\binom{p-r}{q-l_2}}{\binom{p-l_2}{q-l_2}} &\mbox{if }l_1=l_2\\
	\end{array}
	\right.
	\end{equation}

	\section{Details for Section \ref{sec:empirical}}
	\label{app:results}
	
	In this section, we give more details about our practical implementation of SRS and performed experiments.
	
	\subsection{On the use of a random probe to distinguish relevant features from irrelevant features.}
	
	As explained in Section~\ref{sec:empirical}, we add an artificial irrelevant feature in data as a random probe. By comparison with that probe of importances scores, one can distinguish relevant features (better than the probe) from irrelevant features. Through iterations, we can compute a \textit{p-value} score which is the percentage of times a variable has been better than the probe. If the \textit{p-value} is above a given threshold $\beta$ then the feature is likely relevant. Moreover, a variable has to be sampled more than $L$ times in $Q$ sets to insure that the \textit{p-value} is reliable. Then at each iteration, the variables that satisfy the two criteria are added to $F$. In the following experiments, we choose arbitrarily $L=10$ and $\beta=95\%$.
	
	\subsection{On the datasets and on the protocol}
	
	We evaluate the accuracy of all these methods on a list of both artificial and real classifications problems (all but madelon are real data) described in Table~\ref{table:datasets} and publicly available in the UCI machine learning repository \cite{Lichman2013uci}. For each dataset, we separate it into two random partitions of the same size (i.e., the same number of samples) to have a training set and a test set. There is no optimization of the parameters. For all datasets, the procedure was repeated 50 times, using the same random partitions between all methods. Following results are averages over those 50 runs. 
	
	\subsection{Detailed results}

	Table~\ref{table:accuracy} is average accuracy scores obtained on all datasets for each method for some parameters. We consider different sizes of memory (i.e., parameter $q$) and different value for the parameter $\alpha$ for the SRS algorithm. This allows to consider every behaviour of the SRS algorithm : without memory ($\alpha=0$) which is equivalent to the Random Subspace method, with a full memory ($\alpha=1$) and a non-full memory ($\alpha=0.5$). For both methods (RS and SRS), a single extra-tree is build at each iteration. The randomization parameter of the extra-tree is set to its maximal value (ie., all features).
	For the tree-based ensemble methods, we consider different values for the randomization parameter. This parameter reduces the ability to consider the whole dataset in once and in that it relates in a way to the size of the memory of SRS. We choose for that parameter values of 0.01, 0.1 and 1 corresponding to considering respectively 1\%, 10\%, 100\% of all features at each node.

	\begin{table}
		\centering
		\scriptsize
		\begin{tabular}{c|cc} \hline
			Dataset & \# samples & \# features \\ \hline
			arcene & 100 & 10000\\
			breast2 & 295 & 24496 \\
			cina0 & 16033 & 132\\
			isolet & 7797 & 617 \\
			madelon & 2000 & 500 \\
			marti0 & 500 & 1024\\
			reged0 & 500 & 999\\
			secom & 1567 & 591\\
			mnist & 70000 & 784 \\
			mnist3v8 & 13966 & 784 \\
			mnist4v9 & 13782 & 784 \\
			sido0 & 12678 & 4932 \\
			tis & 13375 & 927 \\ \hline
		\end{tabular}
		\caption{Dataset specifications}
		\label{table:datasets}
	\end{table}
	
	\begin{table*}
		\centering
		\tiny
		\begin{tabular}{|l|ccc|ccc|ccc||ccc|ccc|} \hline
			& \multicolumn{9}{c||}{SRS} & \multicolumn{6}{c|}{Tree-based ensemble methods} \\ \hline
			& \multicolumn{3}{c|}{q=0.01} & \multicolumn{3}{c|}{q=0.05} & \multicolumn{3}{c||}{q=0.1} & \multicolumn{3}{c|}{RF} & \multicolumn{3}{c|}{ET}  \\ \hline
			& \multicolumn{9}{c||}{$\alpha$} & \multicolumn{6}{c|}{Randomization parameter $K$}  \\ \hline
			&  0.0  & 0.5  & 1.0  & 0.0  & 0.5  & 1.0  & 0.0  & 0.5  &  1.0  & 0.01  &  0.1  & 1  & 0.01  & 0.1  & 1  \\ \hline
			arcene      &0.743 &0.717 &0.717 &0.743 &0.743 &0.743 &0.732 &0.732 &0.732 &0.717 &0.706 &0.678 &0.739 &0.729 &0.701 \\ 
			breast2     &0.649 &0.647 &0.647 &0.651 &0.651 &0.650 &0.654 &0.654 &0.654 &0.646 &0.649 &0.649 &0.650 &0.654 &0.651 \\ 
			cina0       &0.755 &0.755 &0.777 &0.809 &0.929 &0.873 &0.931 &0.933 &0.921 &0.933 &0.939 &0.939 &0.931 &0.934 &0.934 \\ 
			isolet      &0.906 &0.899 &0.336 &0.944 &0.945 &0.766 &0.949 &0.950 &0.817 &0.936 &0.940 &0.912 &0.943 &0.951 &0.943 \\ 
			madelon     &0.558 &0.689 &0.745 &0.639 &0.858 &0.861 &0.673 &0.845 &0.845 &0.620 &0.700 &0.754 &0.608 &0.690 &0.815 \\ 
			marti0      &0.881 &0.881 &0.881 &0.874 &0.874 &0.874 &0.870 &0.870 &0.870 &0.878 &0.870 &0.866 &0.879 &0.868 &0.854 \\ 
			reged0      &0.880 &0.966 &0.939 &0.885 &0.974 &0.974 &0.898 &0.974 &0.974 &0.882 &0.963 &0.960 &0.881 &0.948 &0.978 \\ 
			secom       &0.935 &0.935 &0.930 &0.935 &0.931 &0.931 &0.934 &0.932 &0.932 &0.935 &0.933 &0.929 &0.935 &0.930 &0.928 \\ 
			mnist       &0.564 &0.823 &0.525 &0.959 &0.966 &0.905 &0.968 &0.970 &0.938 &0.964 &0.966 &0.953 &0.966 &0.971 &0.968 \\ 
			mnist3v8    &0.910 &0.941 &0.828 &0.980 &0.986 &0.958 &0.987 &0.989 &0.975 &0.980 &0.985 &0.978 &0.981 &0.988 &0.987 \\ 
			mnist4v9    &0.889 &0.957 &0.848 &0.981 &0.986 &0.960 &0.986 &0.988 &0.974 &0.983 &0.984 &0.974*&0.985 &0.987 &0.984*\\ 
			sido0       &0.970 &0.972 &0.953 &0.973 &0.968 &0.968 &0.974 &0.969 &0.969 &0.972 &0.973 &0.973*&0.973 &0.974 &0.960*\\ 
			tis         &0.751 &0.751 &0.757 &0.753 &0.887 &0.888 &0.844 &0.917 &0.915 &0.854 &0.916 &0.913*&0.856 &0.906 &0.914*\\  \hline
		\end{tabular}
		\caption{Average accuracy scores for all methods with specified parameters on original datasets. SRS and RS were computed with 10000 iterations and RF/ET with 10000 trees.}
		\label{table:accuracy}
	\end{table*}
	
	\begin{table*}
		\tiny \centering

		\subfloat[$q=0.1 \times p $]{
			\begin{tabular}{|c| c c c c c| } \hline 
				& \multicolumn{1}{c|}{RS} & \multicolumn{2}{c|}{SRS} 
				& \multicolumn{2}{c|}{ET}\\
				& \multicolumn{1}{c|}{$q = 0.1$} & $q=0.1$ & \multicolumn{1}{c|}{$q=0.1$} & 
				$k=0.1$ & $k=1.0$\\
				$q=0.1$& \multicolumn{1}{c|}{}& $\alpha=0.5$ & \multicolumn{1}{c|}{$\alpha=1.0$} & 
				& \\ \hline
				$\text{RS}$ & $-$ & $1/5/7$ & $\mathbf{6/4/3}$ 
				& $\mathbf{7/2/4}$ & $\mathbf{6/3/4}$\\
				$\text{SRS}_{\alpha=0.5}$ & $\mathbf{7/5/1}$ & $-$ & $5/8/0$ 
				& $\mathbf{9/2/2}$ & $\mathbf{10/2/1}$\\
				$\text{SRS}_{\alpha=1.0}$ & $3/4/6$ & $0/8/5$ & $-$ 
				& $5/2/6$ & $\mathbf{6/2/5}$ \\ \hline
		\end{tabular}} \hspace{1em}
		\subfloat[$q=0.01 \times p $]{
			\begin{tabular}{|c| c c c| } \hline 
				& \multicolumn{1}{c|}{RS} & \multicolumn{2}{c|}{SRS} \\
				& \multicolumn{1}{c|}{$q = 0.01$} & $q=0.01$ & \multicolumn{1}{c|}{$q=0.01$} \\
				$q=0.01$& \multicolumn{1}{c|}{}& $\alpha=0.5$ & \multicolumn{1}{c|}{$\alpha=1.0$}\\ \hline
				$\text{RS}$ & $-$ & $2/2/9$ & $5/2/6$\\
				$\text{SRS}_{\alpha=0.5}$ & $\mathbf{9/2/2}$ & $-$ &  $\mathbf{7/3/3}$\\
				$\text{SRS}_{\alpha=1.0}$ & $\mathbf{6/2/5}$& $3/3/7$& $-$ \\ \hline
		\end{tabular}}
		
		\caption{Pairwise t-test (with a significance level of $0.05$) comparisons : each element on line \textrm{$i$} and column \textrm{$j$} of the table in terms of Win/Draw/Loss is the result of the comparison for method $i$ vs. method $j$: the tree values indicate respectively on how many datasets method $i$ is significantly better / not significantly different / significantly worse than method $j$. All methods were computed with 10000 iterations or trees on all 14 datasets (from Table~\ref{table:datasets}) with parameters specified on columns. In \textbf{bold} when the first value is greater than other values.}
		\label{table:comparison}
	\end{table*}
	
\end{subappendices}

\chapter{Network Inference and Connectomics Challenge}
\label{ch:connectomics}

\begin{overview}

This chapter considers a specific machine learning task consisting in reconstructing a network from data. We first present the principle of GENIE3, originally designed to infer gene regulatory networks from samples of gene expression levels.
Then we propose a simple yet effective solution to the problem of connectome inference in calcium imaging data. The proposed algorithm consists of two steps. First, processing the raw signals to detect neural peak activities. Second, inferring the degree of association between neurons from partial correlation statistics. Section \ref{sec:connectomics-paper} summarises the methodology that led us to win the Connectomics Challenge, proposes a simplified version of our  method, and finally compares our result with respect to other inference methods.\\

\textbf{\textcolor{RoyalBlue}{References:}} Section \ref{sec:connectomics-paper} reproduces the following publication:\\[2mm]
 \bibentry{sutera2015simple}.\\[2mm] These results have also been published afterwards as a book chapter: \bibentry{sutera2017simple}. \\[2mm]

Note that Section \ref{sec:connectomics:preamble} was not in the original publication and aims at putting the proposed method in perspective with tree-based network inference techniques.
\end{overview}

\section{Motivation}

In systems biology, networks provide a natural representation for complex feature interactions (where features are biological entities such as genes, proteins, ... ) \citep{schrynemackers2013protocols}. Network inference consists in the reconstruction of such biological networks from high-throughput data \citep{de2010advantages}. 
Concretely, given a set of $p$ input variables $V=\{X_1,\dots, X_p\}$, it aims at inferring (or completing\footnote{Some network inference techniques use a priori knowledge including known interactions.}) a directed graph with $p$ nodes, where each node represents a variable, and an edge directed from one variable $X_i$ to another variable $X_j$ indicates a direct (causal) influence of $X_i$ on $X_j$ \citep{huynh2010inferring,louppe2014understanding}.  Sometimes, targeted networks are undirected and only represent interactions (i.e., conditional dependencies) between variables without any causal interpretation of the edge direction \citep{louppe2014understanding}. Figure \ref{fig:networks} illustrates two possible networks: Figure \ref{fig:networks(a)} is a network of (causal) influences represented through a directed graph while Figure \ref{fig:networks(b)} is a network of statistical dependencies represented by an undirected graph.

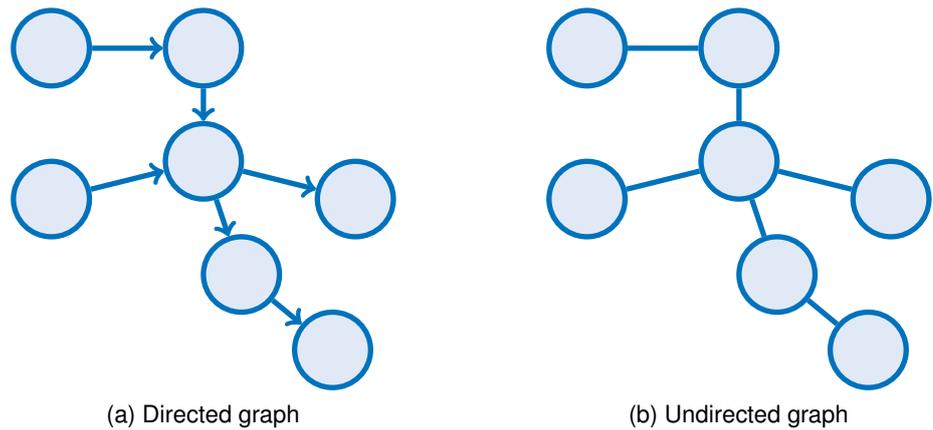
\begin{figure}
	\centering 
	\hfill
	\subfloat[Directed graph]{\begin{tikzpicture}

	 \node[circle,draw,RoyalBlue,fill=RoyalBlue!10!white, line width=2pt,minimum size=1cm] (z) at (0,0) {};
	 \node[circle,draw,RoyalBlue,fill=RoyalBlue!10!white, line width=2pt,minimum size=1cm] (a) at (2,0) {};
	 \node[circle,draw,RoyalBlue,fill=RoyalBlue!10!white, line width=2pt,minimum size=1cm] (b) at (0,-2) {};
	 \node[circle,draw,RoyalBlue,fill=RoyalBlue!10!white, line width=2pt,minimum size=1cm] (c) at (2,-1.5) {};
	 \node[circle,draw,RoyalBlue,fill=RoyalBlue!10!white, line width=2pt,minimum size=1cm] (d) at (4,-2) {};
	 \node[circle,draw,RoyalBlue,fill=RoyalBlue!10!white, line width=2pt,minimum size=1cm] (e) at (2.5,-3) {};
	 \node[circle,draw,RoyalBlue,fill=RoyalBlue!10!white, line width=2pt,minimum size=1cm] (f) at (3.7,-4) {};
	 \draw[RoyalBlue, line width=2pt,->] (z) -- (a);
	 \draw[RoyalBlue, line width=2pt,->] (a) -- (c);
	 \draw[RoyalBlue, line width=2pt,->] (b) -- (c);
	 \draw[RoyalBlue, line width=2pt,->] (c) -- (d);
	 \draw[RoyalBlue, line width=2pt,->] (c) -- (e);
	 \draw[RoyalBlue, line width=2pt,->] (e) -- (f);
	\end{tikzpicture}\label{fig:networks(a)}} \hfill
	\subfloat[Undirected graph]{\begin{tikzpicture}

	 \node[circle,draw,RoyalBlue,fill=RoyalBlue!10!white, line width=2pt,minimum size=1cm] (z) at (0,0) {};
\node[circle,draw,RoyalBlue,fill=RoyalBlue!10!white, line width=2pt,minimum size=1cm] (a) at (2,0) {};
	\node[circle,draw,RoyalBlue,fill=RoyalBlue!10!white, line width=2pt,minimum size=1cm] (b) at (0,-2) {};
	\node[circle,draw,RoyalBlue,fill=RoyalBlue!10!white, line width=2pt,minimum size=1cm] (c) at (2,-1.5) {};
	\node[circle,draw,RoyalBlue,fill=RoyalBlue!10!white, line width=2pt,minimum size=1cm] (d) at (4,-2) {};
	\node[circle,draw,RoyalBlue,fill=RoyalBlue!10!white, line width=2pt,minimum size=1cm] (e) at (2.5,-3) {};
	\node[circle,draw,RoyalBlue,fill=RoyalBlue!10!white, line width=2pt,minimum size=1cm] (f) at (3.7,-4) {};
	\draw[RoyalBlue, line width=2pt] (z) -- (a);
	\draw[RoyalBlue, line width=2pt] (a) -- (c);
	\draw[RoyalBlue, line width=2pt] (b) -- (c);
	\draw[RoyalBlue, line width=2pt] (c) -- (d);
	\draw[RoyalBlue, line width=2pt] (c) -- (e);
	\draw[RoyalBlue, line width=2pt] (e) -- (f);
	\end{tikzpicture}\label{fig:networks(b)}} \hfill
	\caption{Examples of inferred networks.}
	\label{fig:networks}
\end{figure}

Biological network inference consists in reconstructing a network in which biological entities interact (e.g., genes, proteins, cells, neurons, ...) \citep{tieri2016network} and two  applications in particular will be of interest in the rest of this chapter.

 In genomics, \textit{gene regulatory networks} represent interactions between genes and transcription factors\footnote{Transcription factors are proteins that regulates gene expression \citep{huynh2012machine}.} \citep{huynh2012machine,louppe2014understanding,tieri2016network}. The inference of such a network is based on gene expression levels. 

In neuroscience, the \textit{connectome} represents the neural connectivity, i.e., the interaction between neurons \citep{de2018connectivity,panagopoulos2018review}. Inferring the connectome from neural activity gives insights on effective brain structure. The effective brain structure gathers the structural (anatomical) connectivity (referring to physical connections between neurons, i.e., synapses) and the functional connectivity (referring to patterns of neuron activation regardless of spatiality, that are specific to a brain function and may change over time) and represent directional effects of neural elements on others \citep{sporns2007brain}.  The activation of a neuron (i.e., an action potential) is characterised by a sudden change in membrane potential by opening $Ca$ channels for instance \citep{simons1988calcium,tian2009imaging}. Calcium imaging can thus be used to record the neuronal activity by means of fluorescent marker. Calcium fluorescent levels are converted into neural activation times series that are in turn used for connectome inference \citep{panagopoulos2018review}

The interest in (biological) network inference has lead to many studies in the literature\footnote{Exhaustive lists of methods are given in \citep{de2018connectivity,panagopoulos2018review} for connectome inference and in \citep{huynh2010inferring,marbach2012wisdom} for gene regulatory inference.} involving models based on statistical measures (e.g., mutual information and (cross- and partial-)correlation) and probabilistic models (e.g., Bayesian networks and Gaussian graphical models) for example.

\section{Tree-based network inference based on variable importances}

Tree-based models have been also developed for network inference because they advantageously do not make any assumption about the target function, deal with non-linearity and take into account feature dependencies \citep{huynh2010inferring,schrynemackers2015classifying}. Both supervised and unsupervised approaches have been proposed for network inference and aim at deriving a score expressing the confidence for a pair of nodes to interact. In (tree-based) supervised approaches, a (tree-based) supervised model is usually constructed using a partial knowledge of the network and then used to assess the remaining untested pairs \citep{schrynemackers2013protocols}. In tree-based unsupervised methods, variable importances are derived and used to estimate the degree of association between two variables \citep{huynh2010inferring,louppe2014understanding}. We focus here on these latter methods.

\subsection{GENIE3} \label{sec:genie3}

\textit{GENIE3} \citep{huynh2010inferring} is an approach that aims at inferring a network of $p$ nodes by decomposing it into $p$ independent supervised learning problems.

Given a set of variables $V=\{X_1,\dots,X_p\}$, the $i^{th}$ sub-problem consists in learning a tree-based ensemble method (e.g., Random Forests or Extra-Trees) in order to predict the value 
of the variable $X_i$  from all remaining $p-1$ variables $X_j$ (with $j\ne i$). The contribution of $X_j$ in the prediction of $X_i$ gives an indication of the confidence level $p_{j,i}$ for the putative edge from $X_j$ to $X_i$ in the network (i.e., the degree of association between node $j$ and node $i$). Aggregating the confidence levels of all pairs of nodes allows to reconstruct the whole network by selecting the top-ranked interactions (i.e., above a given threshold of confidence level) for example.

In the case of tree-based ensemble models, confidence scores are given by the variable importance scores\footnote{$p_{j,i}$ is given by the importance of $X_j$ in the sub-problem in which $X_i$ is the target variable.}.  However, the aggregation of importance scores resulting from the $p$ sub-problems should be done cautiously if the variables are (i) of different scale, (ii) of different variability, or (iii) vary in the number of categories.

Indeed, \cite{huynh2010inferring} and \cite{louppe2014understanding} point out\footnote{It can also be retrieved in Chapter \ref{ch:importances}, especially from Equation \ref{eqn:mdiall}.} a positive bias in  the upper bound of (the sum of) all variable importances which depends on the target variable. In other words, if variables differ from each other on (i), (ii) or (iii), importance scores are not directly comparable without an appropriate normalisation\footnote{Let us consider a model learnt on a learning set $LS$ using Shannon entropy (respectively, variance/gini index) as impurity measure and MDI importance scores. One may normalise the target variable by its entropy (respectively, variance) estimated on $LS$  so that all variables have unit-entropy (resp., unit-variance) making importance scores comparable to each others. With MDA, one should consider a normalised accuracy metric.} before their aggregation.

\subsection{Direct interaction}

Network inference only considers direct connections between variables. Indirect effect must therefore be filtered out. However, neither relevance/usefulness nor importance score rankings may help to discriminate direct effects from indirect ones. Indeed, in all generality, variable importances do not guarantee that the importance of a feature \textit{indirectly} related to the target has a lower importance score than any other feature \textit{directly} related to the target. Moreover, one can imagine that several paths actually connect an input feature to the target (e.g., $X_1\rightarrow X_2 \rightarrow X_4$ and $X_1 \rightarrow X_4$ in Figure \ref{fig:networksdirect}) and so importance scores may reflect simultaneously direct and indirect interactions. 

Regardless of the degree of association between nodes, one should aim at discovering the Markov boundary of a node in order to identify all its direct neighbours \citep{aliferis2010local}. With strictly positive distributions (as seen in Section \ref{sec:MB}), it corresponds to identify all strongly relevant variables. 

By definition, a strongly relevant feature $X_m$ is such that $I(X_m;Y | V^{-m})>0$. Therefore, importance measures can, in theory, be adjusted to only detect strongly relevant features by only considering the deepest level of fully developed trees (corresponding to $B=V^{-m}$): \begin{eqnarray}\label{eqn:mdiXmstrong}
Imp^{mdi,last}_{\infty}(X_m) =  \sum_{v^{-m}} P(V^{-m}=v^{-m}) I(X_m;Y|V^{-m}=v^{-m})
\end{eqnarray}
where the sum of $v^{-m}$ is a sum of over all possible value configurations of the set of variables $V^{-m}$. A feature such that $Imp^{mdi,last}_{\infty}(X_m)>0$ is strongly relevant. However, the deepest level of fully developed trees is also the one where impurity decrease is estimated with the less samples and thus not reliable under other circumstances than infinite sample size.

In practice, a good heuristic to filter out as much as much as possible indirect interactions (and thus mainly focusing on strongly relevant variables) is to accentuate the masking effect (as strongly relevant variables can not be masked) by setting $K > 1$ (ideally, $K=p$) \citep{louppe2014understanding}. Additionally, an adequate stopping criterion may help to mitigate impurity miss-estimation effects (by avoiding estimation on too few samples) \citep{louppe2014understanding}.

\begin{figure}
	\centering 
	\begin{tikzpicture}

		\node[circle,draw,RoyalBlue,fill=RoyalBlue!10!white, line width=2pt,minimum size=1cm] (z) at (0,0) {$X_1$};
		\node[circle,draw,RoyalBlue,fill=RoyalBlue!10!white, line width=2pt,minimum size=1cm] (a) at (2,0) {$X_2$};
		\node[circle,draw,RoyalBlue,fill=RoyalBlue!10!white, line width=2pt,minimum size=1cm] (b) at (0,-2) {$X_3$};
		\node[circle,draw,RoyalBlue,fill=RoyalBlue!10!white, line width=2pt,minimum size=1cm] (c) at (2,-1.5) {$X_4$};
		\node[circle,draw,RoyalBlue,fill=RoyalBlue!10!white, line width=2pt,minimum size=1cm] (d) at (4,-2) {};
		\node[circle,draw,RoyalBlue,fill=RoyalBlue!10!white, line width=2pt,minimum size=1cm] (e) at (2.5,-3) {};
		\node[circle,draw,RoyalBlue,fill=RoyalBlue!10!white, line width=2pt,minimum size=1cm] (f) at (3.7,-4) {};
		\draw[RoyalBlue, line width=2pt,->] (z) -- node [above] {$p_{1,2}$} (a);
		\draw[RoyalBlue, line width=2pt,->] (a) -- node [right] {$p_{2,4}$}  (c);
			\draw[RoyalBlue, line width=2pt,->,dashed] (z) -- node [above] {$p_{1,4}$}  (c);
		\draw[RoyalBlue, line width=2pt,->] (b) -- node [above] {$p_{3,4}$}  (c);
		\draw[RoyalBlue, line width=2pt,->] (c) -- (d);
		\draw[RoyalBlue, line width=2pt,->] (c) -- (e);
		\draw[RoyalBlue, line width=2pt,->] (e) -- (f);
		\end{tikzpicture}
	\caption{Direct interaction may be outscored by indirect ones. Solid arrows represent direct interactions while dashed arrows represent indirect effects. $p_{1,3}$ may be numerically higher than $p_{3,4}$ while being associated to an indirect effect.}
	\label{fig:networksdirect}
\end{figure}
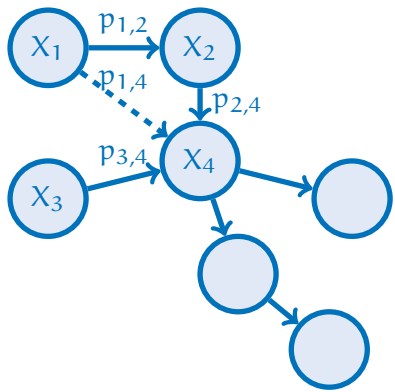

\subsection{Edge orientation}

In GENIE3, there is no explicit edge orientation despite that the importance score is usually asymmetrical ($p_{i,j} \ne p_{j,i}$) in opposition to symmetrical measures such as correlation or mutual information. In the method, only edges with confident scores above a given threshold\footnote{In their experiments, \cite{huynh2010inferring} set the threshold such that the number of inferred edges corresponds to the number of edges in the gold standard.} are considered. Only one confidence score can be above the threshold implying a seemingly edge orientation while, on the contrary, both $p_{i,j}$ and $p_{j,i}$ can be kept making the edge undirected. \cite{huynh2010inferring} further analyse the ability of GENIE3 to correctly deduce the edge orientation including a comparison between $p_{i,j}$ and $p_{j,i}$ (i.e., $p_{i,j} > p_{j,i}$ implying the edge $i\rightarrow j$, or vice versa). Despite relatively symmetrical inferred networks (i.e., only few edges are only directed), GENIE3 seems to infer fairly correctly the edge orientation at least considering $p_{i,j} > p_{j,i}$ when an edge is such that $i\rightarrow j$. More recently, \cite{bloebaum2018cause} investigate the asymmetry in the mean-squared errors of predicting the cause from the effect and the effect from the cause in order to determine the causal direction between two variables. Such researches are promising to infer edge orientation from observational data.

\section{Connectomics challenge} \label{sec:connectomics-paper}

\subsection{Preamble}\label{sec:connectomics:preamble}

In the previous section, we introduced network inference and GENIE3, a tree-based method to infer a gene regulatory network. We also presented the questions of the direct effect identification and edge orientation. 

Based on variable importances, GENIE3 provides excellent results in the context of gene regulatory network inference (best performer in the DREAM4 \textit{In Silico Multifactorial} challenge in 2009 and in the DREAM5 \textit{Network Inference} challenge in 2010). 
We however noticed that variable importances as usually used do not filter out indirect effects. Edge orientation seems promising but GENIE3-inferred networks are relatively symmetric and only few edges are undoubtedly oriented.

This section summarises contributions made in the scope of the competition "Neural Connectomics Challenge" organised in the context of 2014 ECML/PKDD conference \citep{battaglia17neuralbook}, consisting in inferring a connectome from fluorescent calcium data. In what follows, we present our solution, which was the winning solution of the challenge.

In the context of connectome inference, neural networks seem to consist of fewer edges than gene regulatory networks (proportionally to the number of nodes). Subsequently, the number of indirect effects should be higher and thus it is even more crucial to identify direct interactions (actual edges). The edge orientation is however comparable with gene regulatory network.

The GENIE3 approach suggests to decompose the inference of a network of $p$ nodes into $p$ independent sub-problems. In order to identify direct effects, one should consider ensemble methods with fully developed trees as the learning algorithm for each sub-problem.

At first sight, GENIE3 seems to be a good candidate for connectome inference. We however noticed that running the learning algorithm $p$ times (typically, $p=1000$) was computationally too expensive under time constraints pertaining to a machine learning challenge. We therefore opt for another learning algorithm - based on partial correlation - that is computationally advantageous\footnote{Especially for a fast development and parameter tuning.}. Conversely with GENIE3, partial correlation based approach aims at finding explicitly only direct interactions.

\subsection{Connectome inference}\label{sec:intro}

The human brain is a complex biological organ made of about 100 billion of
neurons, each connected to, on average, 7,000 other neurons
\citep{pakkenberg2003aging}. Unfortunately, direct observation of the
connectome, the wiring diagram of the brain, is not yet technically feasible.
Without being perfect, calcium imaging currently allows for real-time and
simultaneous observation of neuron activity from thousands of neurons,
producing individual time-series representing their fluorescence intensity.
From these data, the connectome inference problem amounts to retrieving the
synaptic connections between neurons on the basis of the fluorescence time-series. This problem is difficult to solve because of experimental issues,
including masking effects (i.e., some of the neurons are not observed or
confounded with others), the low sampling rate of the optical device with
respect to the neural activity speed, or the slow decay of fluorescence.

Formally, the connectome can be represented as a directed graph $G=(V,E)$,
where $V$ is a set of $p$ nodes representing neurons, and $E \subseteq
\left\{(i, j) \in V \times V\right\}$ is a set of edges representing direct
synaptic connections between neurons. Causal interactions are expressed by the
direction of edges: $(i, j) \in E$ indicates that the state of neuron $j$ might
be caused by the activity of neuron $i$. In those terms,  the connectome
inference problem is formally stated as follows:  \textit{Given the sampled
observations $\{ x^t_i \in \mathbb{R} | i \in V, t = 1, \dots, T \}$ of $p$
neurons for $T$ time intervals, the goal is to infer the set $E$ of connections
in $G$.}

In this section, we present a simplified - and almost as good - version of the
winning method\footnote{Code available at \url{https://github.com/asutera/kaggle-connectomics}} of the Connectomics
Challenge\footnote{\url{http://connectomics.chalearn.org}}, as a simple and
theoretically grounded approach based on signal processing techniques and
partial correlation statistics. The rest of this chapter is structured as follows:
Section~\ref{sec:filter} describes the signal processing methods applied on
fluorescent calcium time-series; Section \ref{sec:inference} then presents the
proposed approach and its theoretical properties; Section~\ref{sec:connectomicsresults}
provides an empirical analysis and comparison with other network inference
methods, while finally, in Section~\ref{sec:conclusion} we discuss our work and
provide further research directions. Additionally,
Appendix~\ref{app:optimized} further describes, in full detail, our actual
winning method which gives slightly better results than the method presented in
this paper, at the cost of parameter tuning. Appendix~\ref{app:supp} provides supplementary results on other datasets.

\subsection{Signal processing} \label{sec:filter}

Under the simplifying assumption that neurons are on-off units, characterised
by short periods of intense activity, or peaks, and longer periods of
inactivity, the first part of our algorithm consists of cleaning the raw
fluorescence data.
More specifically, time-series are processed using standard
signal processing filters in order to : (i) remove noise mainly due to fluctuations independent of calcium, calcium fluctuations independent of spiking activity, calcium fluctuations in nearby tissues that have been mistakenly captured, or simply by the imaging process ; (ii) to account for fluorescence low decay ; and (iii) to reduce the importance of
high global activity in the network. The overall process is illustrated in
Figure~\ref{fig:filtered-signal}.

\begin{figure}
\centering
\subfloat[Raw signal]{\includegraphics[width=0.45\textwidth]{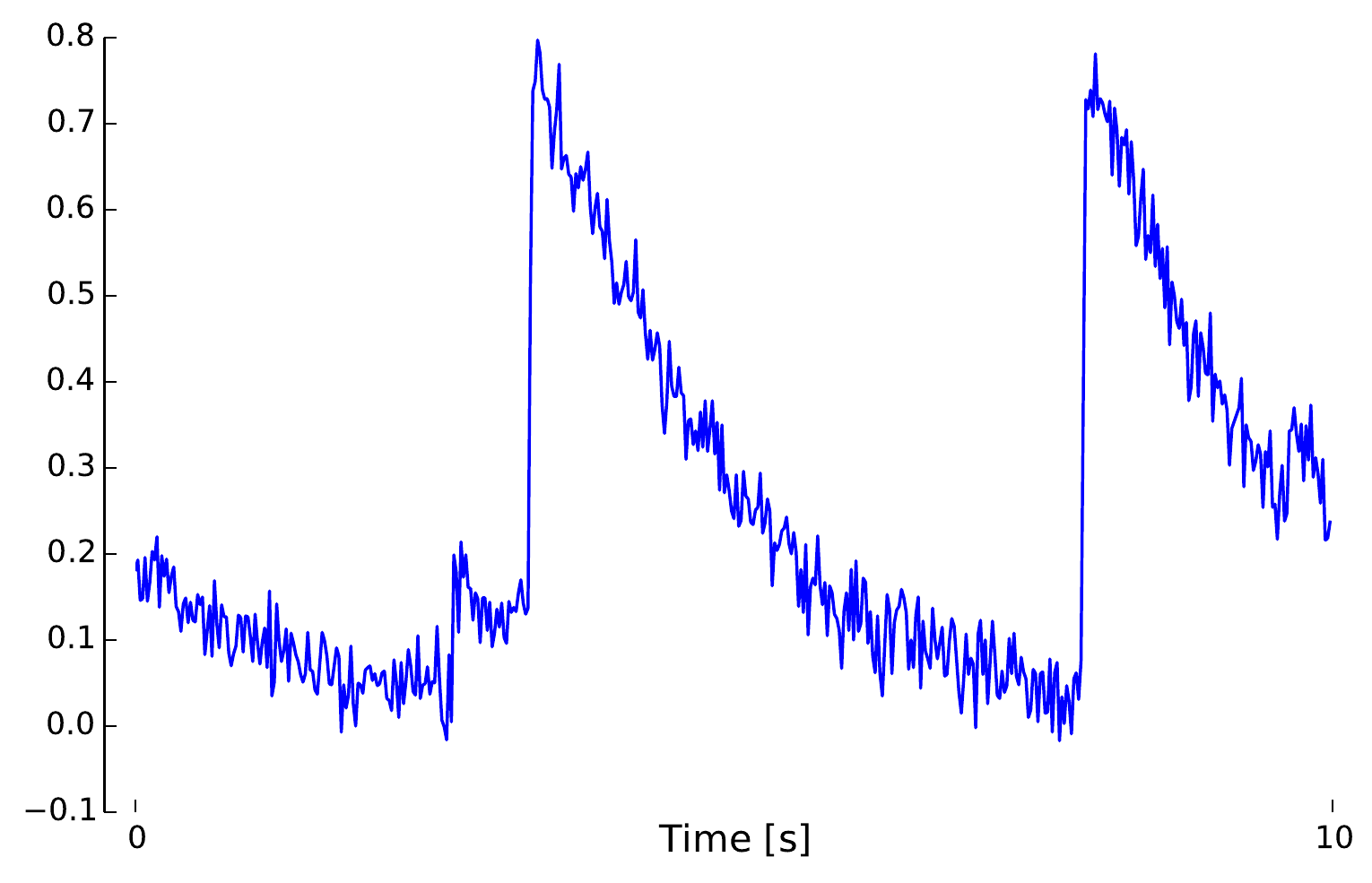} \label{fig:original_curve}}\\
\subfloat[Low-pass filter $f_1$]{\includegraphics[width=0.45\textwidth]{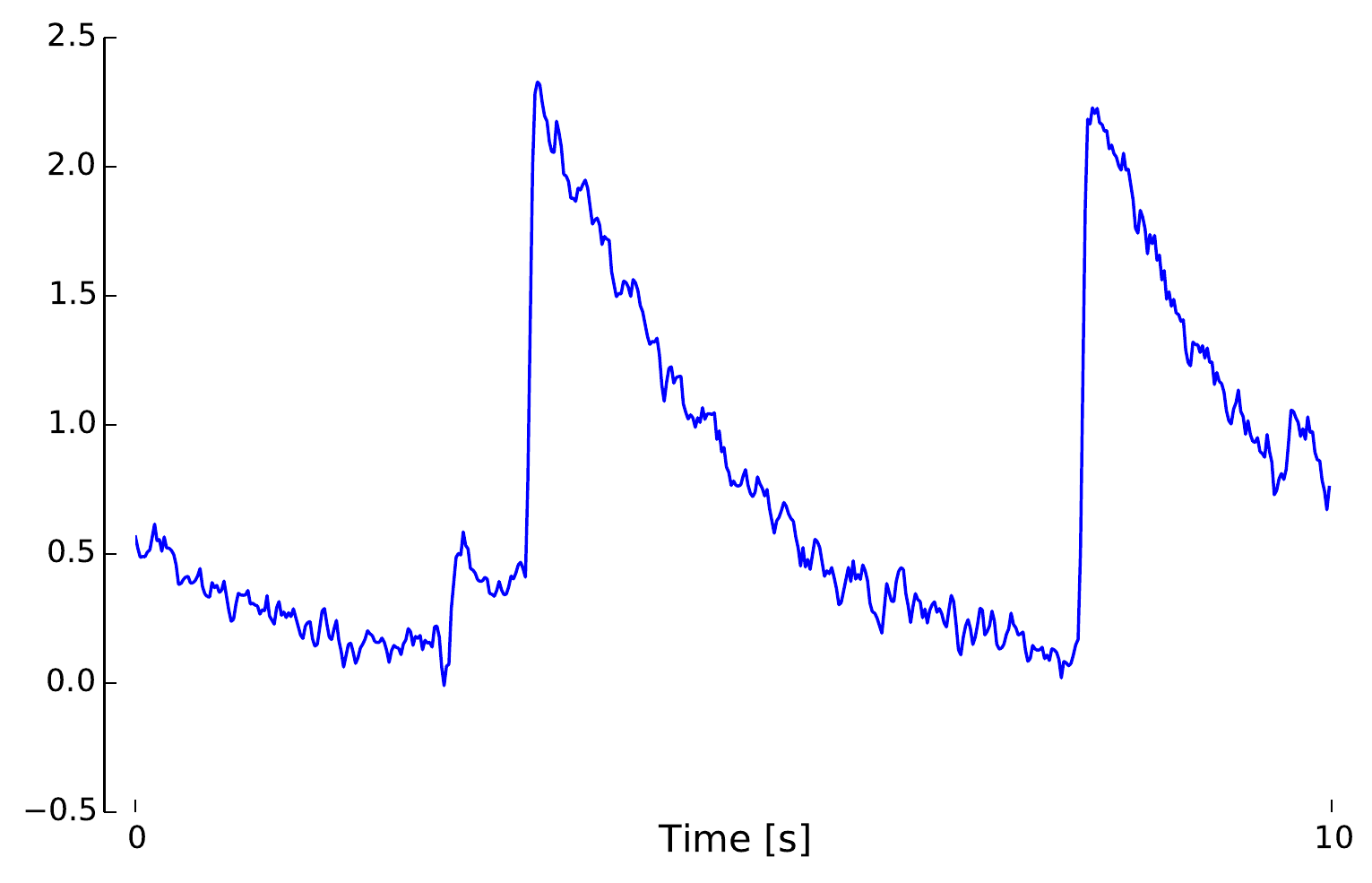} \label{fig:lp_curve}}\\
\subfloat[High-pass filter $g$]{\includegraphics[width=0.45\textwidth]{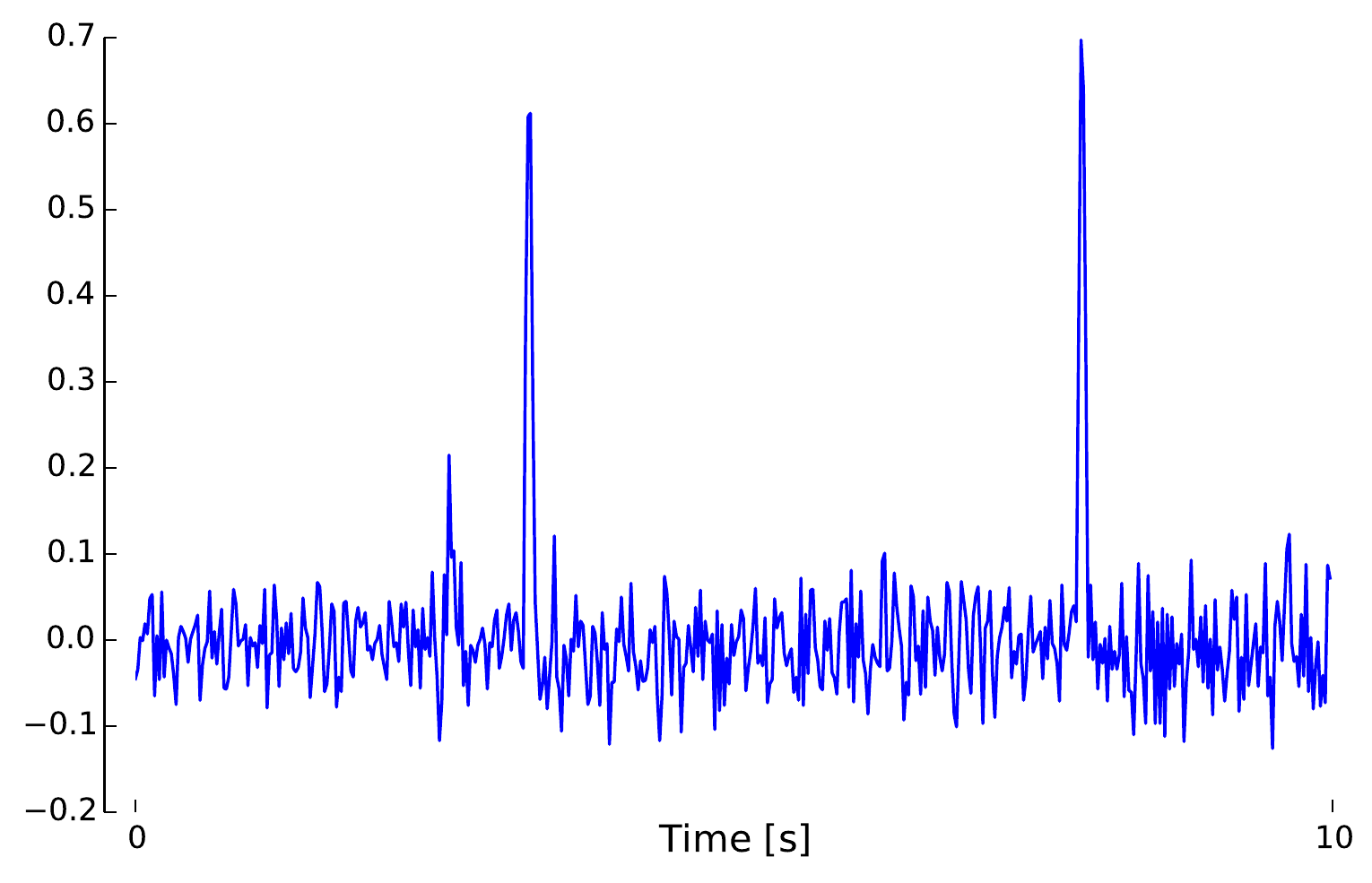} \label{fig:hp_curve}}\\
\subfloat[Hard-threshold filter $h$]{\includegraphics[width=0.45\textwidth]{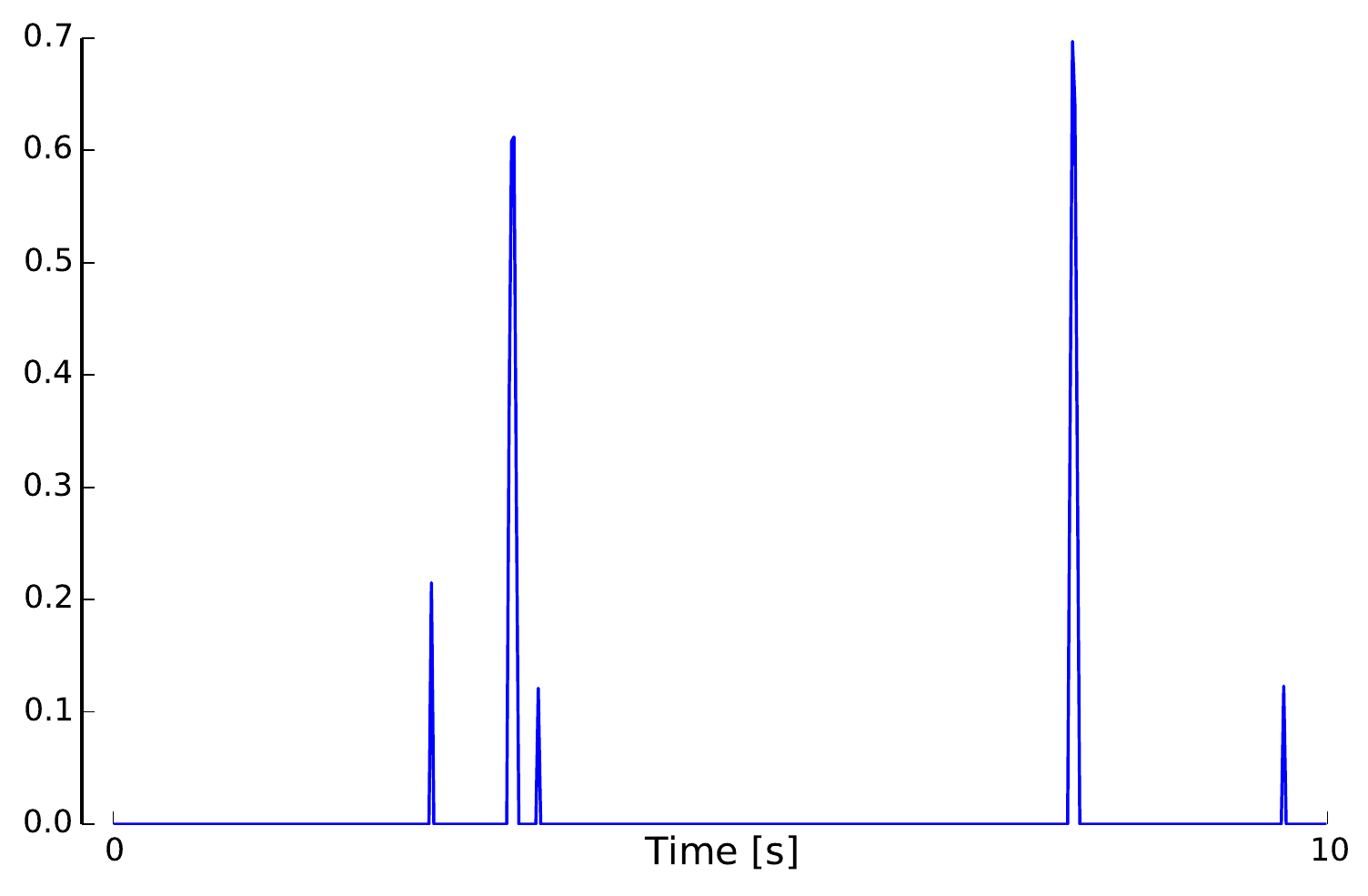} \label{fig:threshold_curve}}\\
\subfloat[Global regularization $w$]{\includegraphics[width=0.45\textwidth]{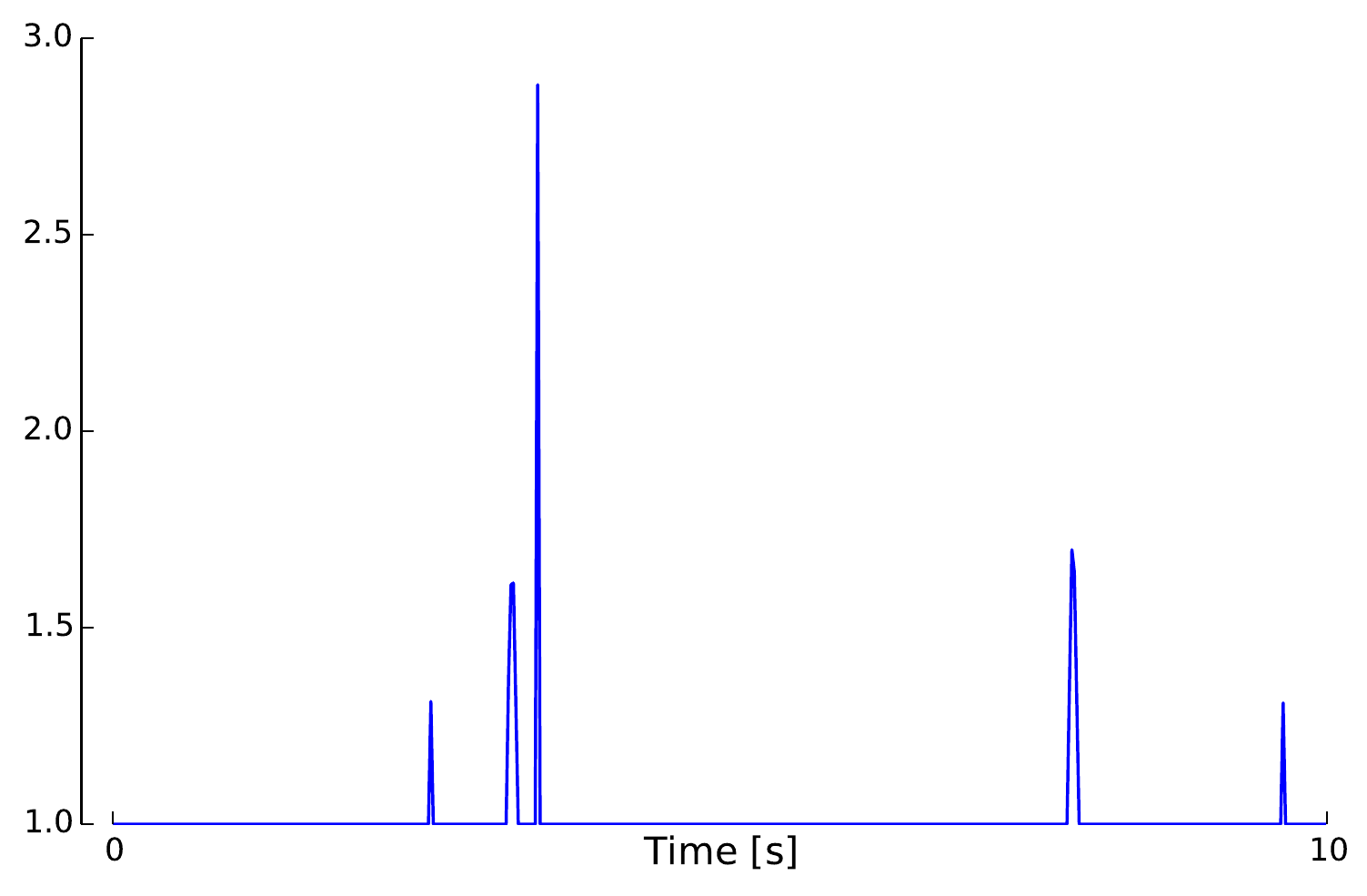} \label{fig:weight_curve}}\\
\caption{Signal processing pipeline for extracting peaks from the raw fluorescence data.}
\label{fig:filtered-signal}
\end{figure}

As Figure~\ref{fig:original_curve} shows, the raw
fluorescence signal is very noisy due to light scattering artifacts that
usually affect the quality of the recording~\citep{lichtman2011big}.
Accordingly, the first step of our pipeline is to smooth the signal, using
one of the following low-pass filters for filtering out high frequency noise:
\begin{align}
f_1(x^t_i) &= x^{t-1}_i + x^t_i + x^{t+1}_i, \label{eq:symetric-median} \\
f_2(x^t_i) &= 0.4 x^{t-3}_i + 0.8 x^{t-2}_i + x^{t-1}_i + x_i^t.
\label{eq:weighted-asymetric-median}
\end{align}
These filters are standard in the signal processing field \citep{kaiser1977data, oppenheim1983signals}. For the purposes of illustration, the effect of the filter $f_1$ on the signal
is shown in Figure \ref{fig:lp_curve}.

Furthermore, short spikes, characterized by a high
frequency, can be seen as an indirect indicator of neuron communication, while low frequencies of the signal mainly correspond to the slow
decay of fluorescence. To have a signal that only has high magnitude around instances where the spikes occur, the second step of our pipeline transforms the time-series into its backward
difference
\begin{align}
g(x^{t}_{i}) &= x^{t}_i - x^{t-1}_i, \label{eq:high-pass-filter}
\end{align}
as shown in Figure \ref{fig:hp_curve}.

To filter out small variations in the signal obtained after applying the
function $g$, as well as to eliminate negative values, we use the following
hard-threshold filter
\begin{align}\label{eqn:hfilter}
h(x^{t}_i) &= x^{t}_i \mathbb{1}(x^{t}_i \geq \tau) \text{ with } \tau > 0,
\end{align}
yielding Figure \ref{fig:threshold_curve} where $\tau$ is the threshold parameter and $\mathbb{1}$ is the indicator function.
As can be seen, the processed signal only contains clean spikes.

The objective of the last step of our filtering procedure is to decrease the
importance of spikes that occur when there is high global activity in the
network with respect to  spikes that occur during normal activity. Indeed, we
have conjectured that when a large part of the network is firing, the rate at
which observations are made is not high enough to be able to detect
interactions, and that it would therefore be preferable to lower their
importance by changing their magnitude appropriately. Additionally, it is
well-known that neurons may also spike because of a high global activity
\citep{stetter2012model}. In such  context, detecting pairwise neuron
interactions from the firing activity is meaningless. As such,
the signal output by $h$ is finally applied to the following function
\begin{align}
 w(x^{t}_i) &= (x^{t}_i + 1 )^{1 + \frac{1}{\sum_{j} x^{t}_j}}, \label{eq:magnify-filter}
\end{align}
whose effect is to magnify the importance of spikes that occur in cases of low
global activity (measured by $\sum_{j} x^{t}_j$), as observed, for instance,
around $t=4\text{s}$ in Figure~\ref{fig:weight_curve}. Note the particular case where there
is no activity, i.e., $\sum_{j} x^{t}_j = 0$, is solved by setting $w(x^{t}_i)
= 1$.

To summarise, the full signal processing pipeline of our simplified approach is defined by the composed function $w \circ h \circ g \circ
f_1$ (resp. $f_2$). When applied to the raw signal of Figure
\ref{fig:original_curve}, it outputs the signal shown in Figure
\ref{fig:weight_curve}.

\subsection{Connectome inference from partial correlation statistics}
\label{sec:inference}

Our procedure to infer connections between neurons first assumes that
the (filtered) fluorescence concentrations of all $p$ neurons at each
time point can be modelled as a set of random variables $X = \{X_1,
\dots, X_p\}$ that are independently drawn from the same time-invariant
joint probability distribution $P_X$. 
As a consequence, our inference method does not exploit the time-ordering of the observations (although time-ordering is exploited by
the filters).

Given this assumption, we then propose to use as a measure of the
strength of the connection between two neurons $i$ and $j$, the
\textit{Partial correlation} coefficient $p_{i,j}$ between their corresponding
random variables $X_i$ and $X_j$, defined by:
\begin{equation}
p_{i,j} =
-\frac{\Sigma^{-1}_{ij}}{\sqrt{\Sigma^{-1}_{ii} \Sigma^{-1}_{jj}}}, \label{eq:inverse}
\end{equation}
where $\Sigma^{-1}$, known as the precision or concentration matrix, is the inverse of the covariance matrix $\Sigma$ of $X$. 
Assuming that the distribution $P_X$ is a multivariate Gaussian
distribution ${\cal N}(\mu,\Sigma)$, it can be shown that $p_{i,j}$ is
zero if and only if $X_i$ and $X_j$ are independent given all other
variables in $X$, i.e., $X_i \perp X_j|X^{-i,j}$ where $X^{-i,j}= X
\setminus\{X_i,X_j\}$. Partial correlation (illustrated by Figure \ref{fig:partialcorrelation}) thus measures conditional
dependencies between variables ; therefore it should naturally only detect direct associations
between neurons and filter out spurious indirect effects.  The interest
of partial correlation as an association measure has already been
shown for the inference of gene regulatory networks
\citep{de2004discovery,Schafer2005shrinkage}.
Note that the partial correlation statistic is symmetric
(i.e. $p_{i,j}=p_{j,i}$). Therefore, our approach cannot identify the
direction of the interactions between neurons. We will see in
Section~\ref{sec:connectomicsresults} why this only slightly affects its
performance, with respect to the metric used in the Connectomics
Challenge. 

Practically speaking, the computation of all $p_{i,j}$ coefficients using Equation
\ref{eq:inverse} requires the estimation of the covariance matrix $\Sigma$
and then computing its inverse. Given that typically we have more
samples than neurons, the covariance matrix can be inverted in a
straightforward way. We nevertheless obtained some improvement by
replacing the exact inverse with an approximation using only the $M$
first principal components \citep{bishop2006pattern} (with
$M=0.8 p$ in our experiments, see Appendix~\ref{app:pca}). 

Finally, it should be noted that the performance of our simple method appears to
be quite sensitive to the values of parameters (e.g., choice of $f_1$ or $f_2$
or the value of the threshold $\tau$) in the combined function of the
filtering and inferring processes. One approach, further referred
to as \textit{Averaged Partial correlation} statistics, for improving
its robustness is to average correlation statistics over various
values of the parameters, thereby reducing the variance of its
predictions. Further details about parameter selection are provided in
Appendix~\ref{app:optimized}.

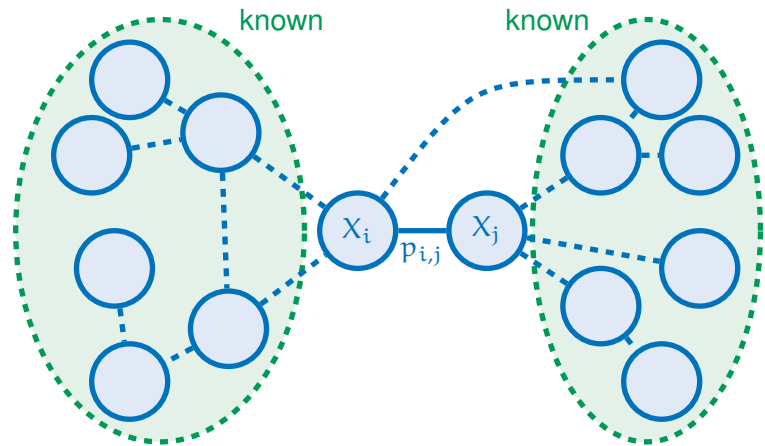
\begin{figure}
	\centering
	\begin{tikzpicture}
	\draw[line width=2pt, ForestGreen,dashed,fill=ForestGreen!10] (1.9,3) ellipse (1.9 and 2.8);
	\draw[line width=2pt, ForestGreen,dashed,fill=ForestGreen!10] (8.3,3) ellipse (1.5 and 2.8);
	\node[ForestGreen] at (3.5,5.8) {known};
	\node[ForestGreen] at (7,5.8) {known};
	\node[draw,circle,RoyalBlue,fill=RoyalBlue!10!white, line width=2pt,minimum size=1cm] (c1) at (4.5,3) {$X_i$};
	\node[draw,circle,RoyalBlue,fill=RoyalBlue!10!white, line width=2pt,minimum size=1cm] (c2) at (6.2,3) {$X_j$};
	\draw[line width=2pt,RoyalBlue] (c1) -- node[below] {$p_{i,j}$} (c2);

	\node[draw,circle,RoyalBlue,fill=RoyalBlue!10!white, line width=2pt,minimum size=1cm] (l1) at (1.5,5) {};
	\node[draw,circle,RoyalBlue,fill=RoyalBlue!10!white, line width=2pt,minimum size=1cm] (l2) at (2.7,4.3) {};
	\node[draw,circle,RoyalBlue,fill=RoyalBlue!10!white, line width=2pt,minimum size=1cm] (l3) at (1,4) {};
	\node[draw,circle,RoyalBlue,fill=RoyalBlue!10!white, line width=2pt,minimum size=1cm] (l4) at (1.3,2.5) {};
	\node[draw,circle,RoyalBlue,fill=RoyalBlue!10!white, line width=2pt,minimum size=1cm] (l5) at (2.8,1.7) {};
	\node[draw,circle,RoyalBlue,fill=RoyalBlue!10!white, line width=2pt,minimum size=1cm] (l6) at (1.5,1) {};
	\draw[dashed,line width=2pt,RoyalBlue] (l1) -- (l2);
	\draw[dashed,line width=2pt,RoyalBlue] (l3) -- (l2);
	\draw[dashed,line width=2pt,RoyalBlue] (l2) -- (c1);	
	\draw[dashed,line width=2pt,RoyalBlue] (l2) -- (l5);	
	\draw[dashed,line width=2pt,RoyalBlue] (l5) -- (c1);	
	\draw[dashed,line width=2pt,RoyalBlue] (l4) -- (l6);	
	\draw[dashed,line width=2pt,RoyalBlue] (l5) -- (l6);	
	
	\node[draw,circle,RoyalBlue,fill=RoyalBlue!10!white, line width=2pt,minimum size=1cm] (r1) at (8.5,5) {};
	\node[draw,circle,RoyalBlue,fill=RoyalBlue!10!white, line width=2pt,minimum size=1cm] (r2) at (9,4) {};
	\node[draw,circle,RoyalBlue,fill=RoyalBlue!10!white, line width=2pt,minimum size=1cm] (r3) at (7.7,4) {};
	\node[draw,circle,RoyalBlue,fill=RoyalBlue!10!white, line width=2pt,minimum size=1cm] (r4) at (9,2.5) {};
	\node[draw,circle,RoyalBlue,fill=RoyalBlue!10!white, line width=2pt,minimum size=1cm] (r5) at (7.7,2) {};
	\node[draw,circle,RoyalBlue,fill=RoyalBlue!10!white, line width=2pt,minimum size=1cm] (r6) at (8.5,1) {};
	\draw[dashed,line width=2pt,RoyalBlue] (r1) -- (r3);
	\draw[dashed,line width=2pt,RoyalBlue] (r3) -- (r2);
	\draw[dashed,line width=2pt,RoyalBlue] (r3) -- (c2);	
	\draw[dashed,line width=2pt,RoyalBlue] (c2) -- (r4);	
	\draw[dashed,line width=2pt,RoyalBlue] (c2) -- (r5);	
	\draw[dashed,line width=2pt,RoyalBlue] (r5) -- (r6);	
	\draw[dashed,line width=2pt,RoyalBlue] (c1) .. controls (6,5) .. (r1);
	\end{tikzpicture}
	\caption{Partial correlation coefficient $p_{i,j}$ measures the degree of direct association between $X_i$ and $X_j$ given all other nodes (in green areas).}
	\label{fig:partialcorrelation}
\end{figure}

\subsection{Experiments} \label{sec:connectomicsresults}
\paragraph{Data and evaluation metrics.}

We report here experiments on the \textit{normal-1,2,3}, and \textit{4}
datasets provided by the organisers of the Connectomics Challenge (see
Appendix~\ref{app:supp} for experiments on other datasets). Each of
these datasets is obtained from the simulation \citep{stetter2012model} of
different neural networks of 1,000 neurons and approximately 15,000 edges (i.e., a
network density of about 1.5\%). Each neuron is described by a calcium
fluorescence time-series of length $T=179500$. All inference methods compared
here provide a ranking of all pairs of neurons according to some association score. To assess the quality of this ranking, we compute both ROC and
precision-recall curves against the ground-truth network, which are represented by
the area under the curves and respectively  denoted AUROC and AUPRC. Only
the AUROC score was used to rank the challenge participants, but the precision-recall curve has been shown to be a more sensible metric for network
inference, especially when network density is small (see e.g.,
\cite{schrynemackers2013protocols}). Since neurons are not self-connected in
the ground-truth networks (i.e., $(i, i) \not \in E, \forall i \in V$), we
have manually set the score of such edges to the minimum possible association
score before computing ROC and PR curves.

\paragraph{Evaluation of the method.}

The top of Table \ref{tab:comparison} reports AUROC and AUPRC for all
four networks using, in each case, partial correlation with different
filtering functions. Except for the last two rows that use PCA, the
exact inverse of the covariance matrix was used in each case. These
results clearly show the importance of the filters. AUROC increases in
average from 0.77 to 0.93. PCA does not really affect AUROC scores, but
it significantly improves AUPRC scores. Taking the average over
various parameter settings gives an improvement of 10\% in AUPRC but
only a minor change in AUROC. The last row (``Full method'') shows the
final performance of the method specifically tuned for the challenge
(see Appendix \ref{app:optimized} for all details). Although this
tuning was decisive to obtain the best performance in the challenge,
it does not significantly improve either AUROC or AUPRC.

\begin{table}[t]
\centering
\small
\begin{tabular}{| l | c c c c | c c c c |}
\hline
& \multicolumn{4}{c|}{AUROC} & \multicolumn{4}{c|}{AUPRC} \\
\textit{Method} $\backslash$ \textit{normal-} & \textit{1} & \textit{2} & \textit{3} & \textit{4} & \textit{1} & \textit{2} & \textit{3} & \textit{4} \\
\hline
\hline
No  filtering       					& 0.777 & 0.767 & 0.772 & 0.774 & 0.070 & 0.064 & 0.068 & 0.072\\
$ h \circ g \circ f_1$                  & 0.923 & 0.925 & 0.923 & 0.922 & 0.311 & 0.315 & 0.313 & 0.304\\
$ w \circ h \circ g \circ f_1$          & 0.931 & 0.929 & 0.928 & 0.926 & 0.326 & 0.323 & 0.319 & 0.303\\
+ PCA         							& 0.932 & 0.930 & 0.928 & 0.926 & 0.355 & 0.353 & 0.350 & 0.333\\
Averaging           					& 0.937 & 0.935 & 0.935 & 0.931 & 0.391 &  0.390 &  0.385 & 0.375\\
Full method           					& \textbf{0.943} & \textbf{0.942} & \textbf{0.942} & \textbf{0.939} & \textbf{0.403} & \textbf{0.404} & \textbf{0.398} & \textbf{0.388}\\
\hline
PC & 0.886 & 0.884 & 0.891 &  0.877 & 0.153 & 0.145 & 0.170 & 0.132\\
GTE & 0.890 & 0.893 & 0.894 & 0.873 & 0.171 & 0.174 & 0.197 & 0.142\\
GENIE3 & 0.892 & 0.891 & 0.887 & 0.887 & 0.232 & 0.221 & 0.237 & 0.215 \\
\hline
\end{tabular}
\caption{Top: Performance on \textit{normal-1,2,3,4} with partial correlation and different filtering functions.
	Bottom: Performance on \textit{normal-1,2,3,4} with different methods.}
\label{tab:comparison}
\end{table}

\paragraph{Comparison with other methods.}

At the bottom of Table \ref{tab:comparison}, we provide as a comparison the
performance of three other methods: standard (Pearson) correlation (PC),
generalised transfer entropy (GTE), and GENIE3. ROC and PR curves on the
\textit{normal-2} network are shown for all methods in Figure~\ref{fig:curves}. Pearson correlation measures the unconditional linear
(in)dependence between variables and it should thus not be able to filter out
indirect interactions between neurons. GTE \citep{stetter2012model} was
proposed as a baseline for the challenge. This method builds on Transfer
Entropy to measure the association between two neurons. Unlike our approach, it
can predict the direction of the edges. GENIE3 \citep{huynh2010inferring} is
a gene regulatory network inference method that was the best performer in the
DREAM5 challenge \citep{marbach2012wisdom} (more details are given in Section \ref{sec:genie3}). When transposed to neural networks, this
method uses the importance score of variable $X_i$ in a Random Forest model trying to
predict $X_j$ from all variables in $X\setminus X_j$ as a confidence score for the edge going from neuron $i$ to neuron
$j$. However, to reduce the
computational cost of this method, we had to limit each tree in the
Random Forest model to a maximum depth of 3. This constraint has a potentially
severe effect on the performance of this method with respect to the use of
fully-grown trees. PC and GENIE3 were applied to the time-series filtered using the functions $w\circ h\circ g$ and $h\circ g\circ f_1$ (which
gave the best performance), respectively. For GENIE3, we built 10,000 trees per neuron and we
used default settings for all other parameters (except for the maximal tree
depth). For GTE, we reproduced the exact same setting (conditioning level and
pre-processing) that was used by the organisers of the challenge.

Partial correlation and averaged partial correlation clearly outperform all
other methods on all datasets (see Table \ref{tab:comparison} and Appendix \ref{app:supp}). The
improvement is more important in terms of AUPRC than in terms of AUROC. As
expected, Pearson correlation performs very poorly in terms of AUPRC. GTE and
GENIE3 work much better, but these two methods are nevertheless clearly below
partial correlation. Among these two methods, GTE is slightly better in terms
of AUROC, while GENIE3 is significantly better in terms of AUPRC. Given that we
had to limit this latter method for computational reasons, these results are
very promising and a comparison with the full GENIE3 approach is certainly part
of our future works.

The fact that our method is unable to predict edge directions does not seem to
be a disadvantage with respect to GTE and GENIE3. Although partial correlation
scores each edge, and its opposite, similarly, it can reach precision values
higher than 0.5 (see Figure \ref{fig:curves}(b)), suggesting that it mainly ranks high
pairs of neurons that interact in both directions.  It is interesting also to
note that, on \textit{normal-2}, a method that perfectly predicts the
undirected network (i.e., that gives a score of $1$ to each pair $(i,j)$ such that
$(i,j)\in E$ or $(j,i)\in E$, and $0$ otherwise) already reaches an AUROC as high
as $0.995$ and an AUPRC of $0.789$.
\begin{figure}[t]
\centering
\subfloat[ROC curves]{\includegraphics[width=0.49\textwidth]{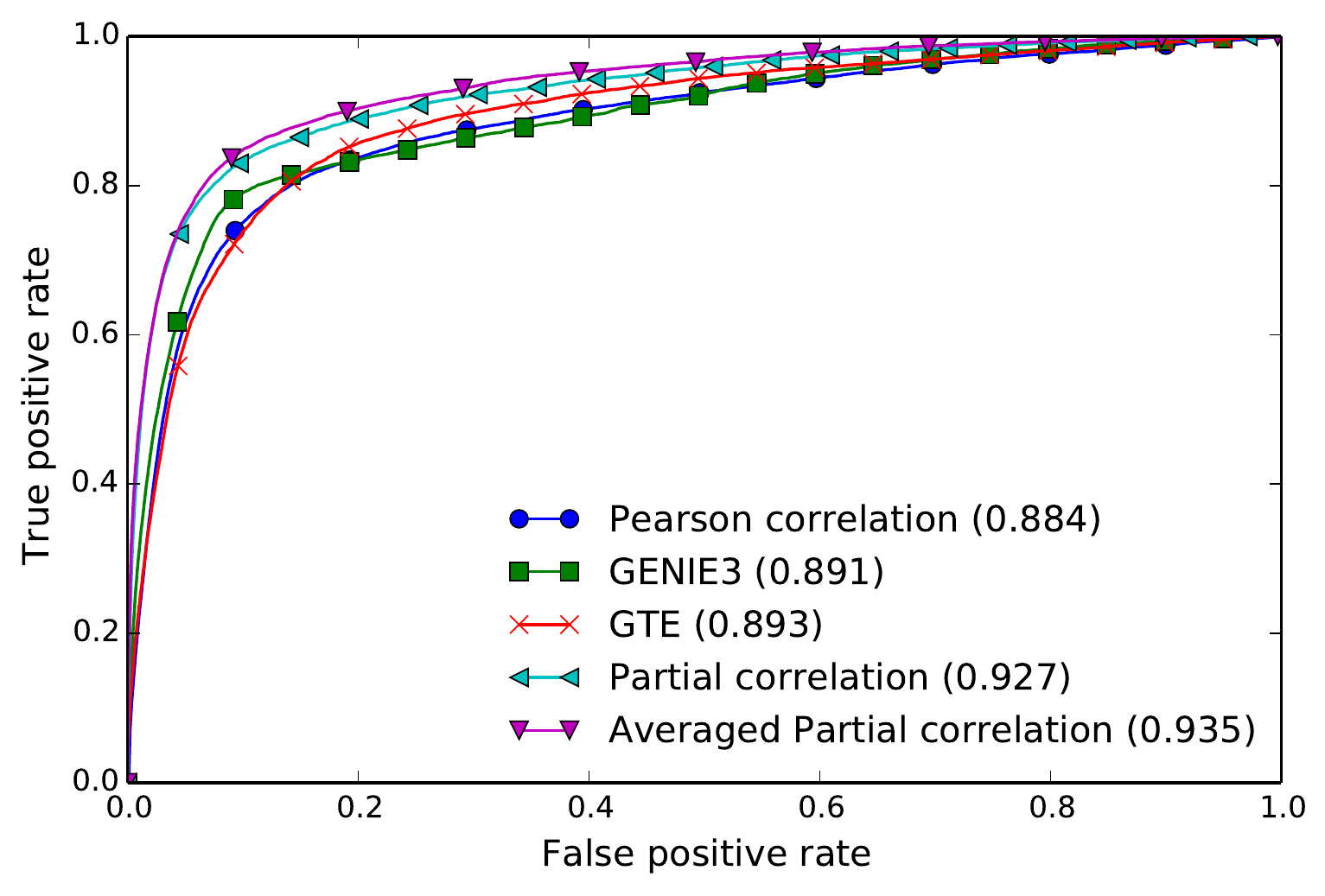} \label{fig:roc_curve}}
\subfloat[Precision-recall curves]{\includegraphics[width=0.49\textwidth]{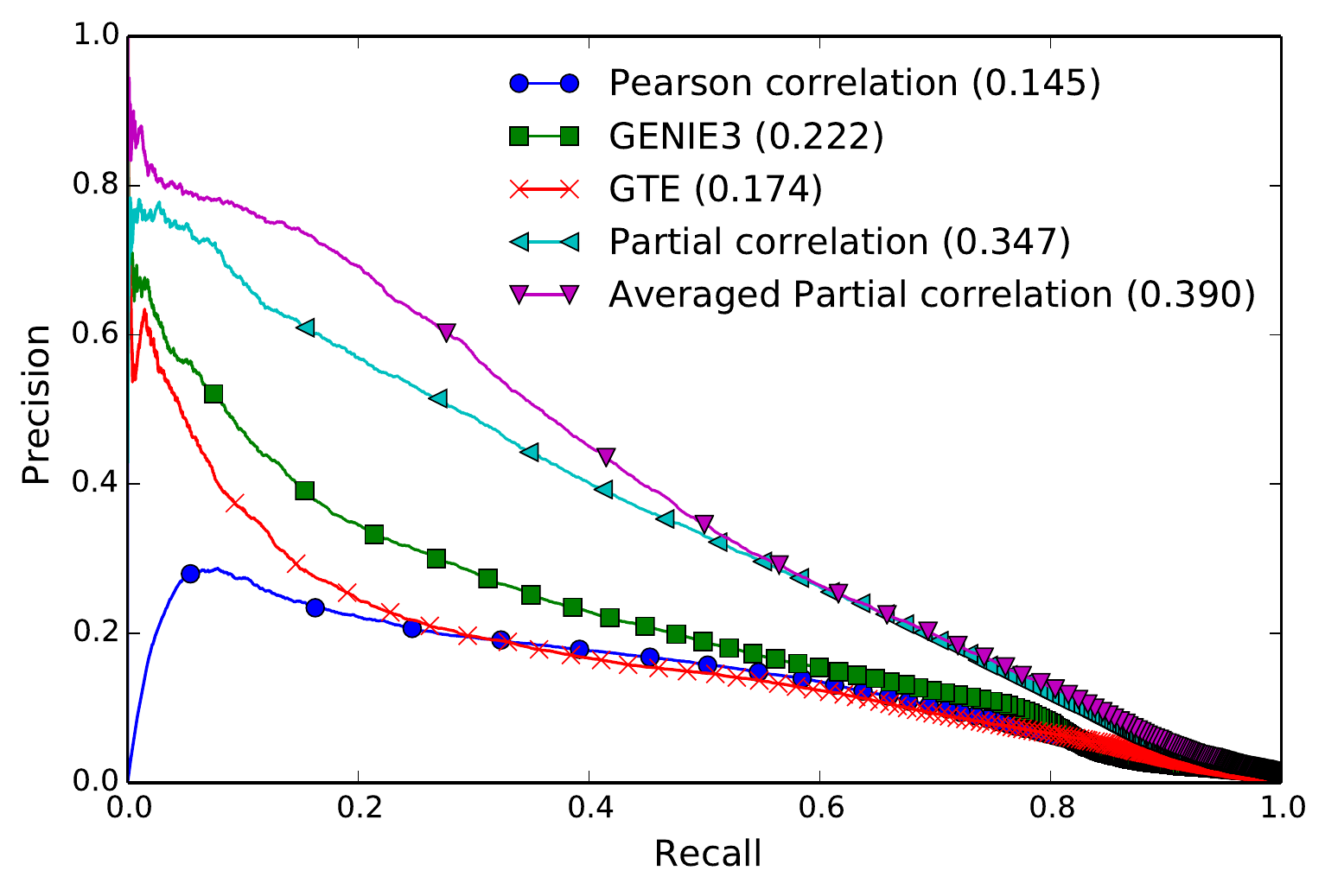} \label{fig:pr_curve}}
\caption{ROC (left) and PR (right) curves on \textit{normal-2} for the compared methods. Areas under the curves are reported in the legend.}
\label{fig:curves}
\end{figure}

\subsection{Conclusion for connectome inference} \label{sec:conclusion}

In this section, we outlined a simple but efficient methodology for the problem
of connectome inference from calcium imaging data. Our approach consists of two
steps: (i) processing fluorescence data to detect neural peak activities; and
(ii) inferring the degree of association between neurons from partial
correlation statistics. Its simplified variant outperforms other
network inference methods while its optimized version proved to be the best method
on the Connectomics Challenge. Given its simplicity and good performance, we
therefore believe that the methodology presented in this work
would constitute a solid and easily-reproducible baseline for further work in
the field of connectome inference.

\begin{subappendices}
	\section{Description  of the ``Full method''}
	\label{app:optimized}
	
	This section provides a detailed description of the method specifically tuned
	for the Connectomics Challenge. We restrict our description to the
	differences with respect to the simplified method presented in the main
	paper. Most parameters were tuned so as to maximize AUROC on the
	\textit{normal-1} dataset and our design choices were validated by monitoring
	the AUROC obtained by the 145 entries we submitted during the
	challenge. Although the tuned method performs better than the simplified one on
	the challenge dataset, we believe that the tuned method clearly overfits the
	simulator used to generate the challenge data and that the simplified method
	should work equally well on new independent datasets. We nevertheless provide
	the tuned method here for reference purposes. Our implementation of the tuned
	method is available at \url{https://github.com/asutera/kaggle-connectomics}.
	
	This appendix is structured as follows: Section~\ref{sapp:signal} describes
	the differences in terms of signal processing. Section~\ref{sapp:averaging}
	then provides a detailed presentation of the averaging approach.
	Section~\ref{sapp:connectome} presents an approach to correct the $p_{i,j}$
	values so as to take into account the edge directionality. Finally,
	Section~\ref{sapp:results} presents some experimental results to validate the
	different steps of our proposal.
	
	\subsection{Signal processing}
	\label{sapp:signal}
	
	In Section~\ref{sec:filter}, we introduced four filtering functions ($f$, $g$,
	$h$, and $w$) that are composed in sequence (i.e., $w \circ h \circ g \circ
	f$) to provide the signals from which to compute partial correlation
	statistics. Filtering is modified as follows in the tuned method:
	
	\begin{itemize}
		\item In addition to $f_1$ and $f_2$ (Equations \ref{eq:symetric-median} and
		\ref{eq:weighted-asymetric-median}), two alternative low-pass filters $f_3$
		and $f_4$ are considered:
		\begin{align}
		f_3(x^t_i) &= x^{t-1}_i + x^{t}_i + x^{t+1}_i + x^{t+2}_i, \label{eq:asymetric-median-forward} \\
		f_4(x^t_i) &=  x_i^t + x^{t+1}_i  + x^{t+2}_i + x^{t+3}_i. \label{eq:asymetric-median}
		\end{align}
		\item An additional filter $r$ is applied to smoothe differences in peak magnitudes
		that might remain after the application of the hard-threshold filter $h$:
		\begin{align}
		r(x^t_i) = (x_i^t)^c,
		\end{align}
		with $c=0.9$.
		\item Filter $w$ is replaced by a more complex filter $w^*$ defined as:
		\begin{align}
		w^*(x^{t}_i) &= {(x^{t}_i + 1 )^{\left (1 + \frac{1}{\sum_{j} x^{t}_j}\right )}}^{k(\sum_{j} x^{t}_j)}
		\end{align}
		where the function $k$ is a piecewise linear function optimised separately for
		each filter $f_1$, $f_2$, $f_3$ and $f_4$ (see the implementation for full
		details). Filter $w$ in the simplified method is a special case of $w^*$ with
		$k(\sum_j x_j^t)=1$.
	\end{itemize}
	The pre-processed time-series are then obtained by the application of the
	following function: $w^*\circ r \circ h \circ g \circ f_i$ (with $i=1$, 2, 3, or 4).
	
	\subsection{Weighted average of partial correlation statistics}
	\label{sapp:averaging}
	
	As discussed in Section \ref{sec:inference}, the performance of the method (in
	terms of AUROC) is sensitive to the value of the parameter $\tau$ of the
	hard-threshold filter $h$ (see Equation \ref{eqn:hfilter}), and to the choice
	of the low-pass filter (among $\{f_1, f_2, f_3, f_4\}$).
	As in the simplified method, we have averaged the partial correlation statistics obtained for all the pairs $(\tau,\mbox{low-pass filter}) \in \{0.100,0.101,\ldots,0.210\}\times \{f_1, f_2, f_3, f_4\}$.
	
	Filters $f_1$ and $f_2$ display similar performances and thus were given similar
	weights (i.e., resp. $0.383$ and $0.345$). These weights were chosen equal to the weights selected for the simplified method. In contrast, filters $f_3$
	and $f_4$ turn out, individually, to be less competitive and were therefore given
	less importance in the weighted average (i.e., resp. $0.004$ and $0.268$). Yet, as further shown in
	Section~\ref{sapp:results}, combining all $4$ filters proves to marginally
	improve performance with respect to using only $f_1$ and $f_2$.
	
	\subsection{Prediction of edge orientation}
	\label{sapp:connectome}
	
	Partial  correlation  statistics is  a  symmetric  measure, while  the
	connectome is a directed graph. It  could thus be beneficial to try to
	predict edge orientation. In this section, we present an heuristic that
	modifies the  $p_{ij}$ computed  by the  approach described  before which
	takes into account directionality.
	
	This approach is based on the following
	observation. The rise of fluorescence of a neuron indicates its
	activation. If another neuron is activated after a slight delay, this
	could be a consequence of the activation of the first neuron and
	therefore indicates a directed link in the connectome from the first to
	the second neuron.  Given this observation, we have computed the following term for every
	pair $(i,j)$:
	\begin{align}
	s_{i,j} = \sum_{t=1}^{T - 1} \mathbb{1}((x_j^{t+1} - x_i^t) \in \left[\phi_1, \phi_2\right])
	\end{align}
	that could be interpreted as an image of the  number of times
	that neuron $i$ activates neuron $j$. $\phi_1$ and $\phi_2$ are
	parameters whose values have been chosen in our experiments equal to
	$0.2$ and $0.5$, respectively. Their role is to
	define when the difference between $x_j^{t+1}$  and $x_i^t$ can
	indeed be assimilated to an event for which neuron $i$ activates neuron
	$j$.
	
	Afterwards, we have computed the difference between $s_{i,j}$ and
	$s_{j,i}$, that we call $z_{i,j}$, and used this difference to modify  $p_{i,j}$ and
	$p_{j,i}$ so as to take into account directionality. Naturally, if
	$z_{i,j}$ is greater  (smaller) than $0$, we may conclude that should there  be an
	edge between $i$ and $j$, then this edge would have to be oriented
	from $i$ to $j$ ($j$ to $i$).
	
	This suggests the new association matrix $r$:
	\begin{align}
	r_{i,j} =  \mathbb{1}(z_{i,j} > \phi_3)  *  p_{i,j}
	\end{align}
	where $\phi_3 >0$ is another parameter. We discovered that this new
	matrix $r$ was not providing good results, probably due to the fact that
	directivity was not rewarded well enough in the challenge.
	
	This has lead us to investigate other ways for exploiting the
	information about directionality contained in the matrix $z$. One of
	those ways that gave good performance was to use as an association
	matrix:
	\begin{align}
	q_{i,j} = weight * p_{i,j} + (1-weight) * z_{i,j}
	\label{eqn:qij}
	\end{align}
	with  $weight$ chosen close to 1 ($weight=0.997$). Note that with
	values for $weight$ close to 1,   matrix $q$ only uses the
	information to a minimum about directivity contained in $z$ to modify the  partial
	correlation matrix $p$. We tried smaller values for $weight$ but those
	provided poorer results.
	
	It was  this association matrix $q_{i,j}$ that actually led to the
	best results of the challenge, as shown in Table \ref{tab:directivity}
	of Section~\ref{sapp:results}.

	\subsection{Experiments}
	\label{sapp:results}
	
	\paragraph{On the interest of low-pass filters $f_3$ and $f_4$.}
	
	As reported in Table~\ref{tab:f3f4}, averaging over all low-pass filters leads
	to better AUROC scores than averaging over only two low-pass filters, i.e., $f_1$ and
	$f_2$. However this slightly reduces AUPRC.
	
	\begin{table}[ht]
		\caption{Performance on \textit{normal-1, 2, 3, or 4} with partial correlation with different averaging approaches.}
		\label{tab:f3f4}
		\centering
		\small
		\begin{tabular}{| l | c c c c | c c c c |}
			\hline
			& \multicolumn{4}{c|}{AUROC} & \multicolumn{4}{c|}{AUPRC} \\
			\textit{Averaging} $\backslash$ \textit{normal-} & \textit{1} & \textit{2} & \textit{3} & \textit{4} & \textit{1} & \textit{2} & \textit{3} & \textit{4} \\
			\hline
			\hline
			with $f_1$, $f_2$ & 0.937 & 0.935 & 0.935 & 0.931 & 0.391 &  \textbf{0.390} &  0.385 & \textbf{0.375}  \\
			with $f_1$, $f_2$, $f_3$, $f_4$ & \textbf{0.938} & \textbf{0.936} & \textbf{0.936} & \textbf{0.932} & 0.391 & 0.389 & 0.385 & 0.374\\
			\hline
		\end{tabular}
	\end{table}
	
	\paragraph{On the interest of using matrix $q$ rather than $p$ to take into account directivity.}
	
	Table~\ref{tab:directivity} compares AUROC and AUPRC with or without correcting the $p_{i,j}$ values according to Equation \ref{eqn:qij}. Both AUROC and AUPRC are (very slightly) improved by using information about directivity.
	
	\begin{table}[ht]
		\caption{Performance on \textit{normal-1,2,3,4} of ``Full Method'' with and
			without using information about directivity.}
		\label{tab:directivity}
		
		\centering
		\small
		\begin{tabular}{| l | c c c c | c c c c |}
			\hline
			& \multicolumn{4}{c|}{AUROC} & \multicolumn{4}{c|}{AUPRC} \\
			\textit{Full method} $\backslash$ \textit{normal-} & \textit{1} & \textit{2} & \textit{3} & \textit{4} & \textit{1} & \textit{2} & \textit{3} & \textit{4} \\
			\hline
			\hline
			Undirected & 0.943 & 0.942 & 0.942 & 0.939 & 0.403 & 0.404 & 0.398 & 0.388  \\
			Directed & \textbf{0.944} & \textbf{0.943} & 0.942 & \textbf{0.940} & \textbf{0.404} & \textbf{0.405} & \textbf{0.399} & \textbf{0.389}\\
			\hline
		\end{tabular}
	\end{table}

	\section{Supplementary results} \label{app:supp}
	
	In this appendix we report the performance of the different methods compared
	in the paper on 6 additional datasets provided by the Challenge
	organisers. These datasets, corresponding each to networks of 1,000 neurons, are similar to
	the \textit{normal} datasets except for one feature:
	\begin{description}
		\item[lowcon:] Similar network but on average with a lower number of connections per neuron.
		\item[highcon:] Similar network but on average with a higher number of connections per neuron.
		\item[lowcc:] Similar network but on average with a lower clustering coefficient.
		\item[highcc:] Similar network but on average with a higher clustering coefficient.
		\item[normal-3-highrate:] Same topology as \textit{normal-3} but with a higher firing frequency, i.e., with highly active neurons.
		\item[normal-4-lownoise:] Same topology as \textit{normal-4} but with a better signal-to-noise ratio.
	\end{description}
	
	The results of several methods applied to these 6 datasets are provided in
	Table~\ref{tab:results_appendix}. They confirm what we observed on the
	\textit{normal} datasets. Average partial correlation and its tuned variant,
	i.e.,``Full method'', clearly outperform other network inference methods on all
	datasets. PC is close to GENIE3 and GTE, but still slightly worse. GENIE3
	performs better than GTE most of the time. Note that the "Full method" reported in this table does not use Equation \ref{eqn:qij} to slightly correct the values of $p_{i,j}$  to take into account directivity.

	\begin{table}[h]
		\caption{Performance (top: AUROC, bottom: AUPRC) on specific datasets with different methods.}
		\label{tab:results_appendix}
		\centering
		\small
		\begin{tabular}{| l | c c c c c c |}
			\hline
			& \multicolumn{6}{c|}{AUROC}\\
			\textit{Method} $\backslash$ \textit{normal-} & \textit{lowcon} & \textit{highcon} & \textit{lowcc} & \textit{highcc} & \textit{3-highrate} & \textit{4-lownoise} \\
			\hline
			\hline
			Averaging     & 0.947 & 0.943 & 0.920 & 0.942 & 0.959 & 0.934 \\
			Full method   & \textbf{0.955} & \textbf{0.944} &  \textbf{0.925} & \textbf{0.946} & \textbf{0.961} & \textbf{0.941} \\
			PC & 0.782 & 0.920 &  0.846 & 0.897  & 0.898  & 0.873 \\
			GTE & 0.846 & 0.905 & 0.848 & 0.899 & 0.905 & 0.879\\
			GENIE3 & 0.781 &  0.924 & 0.879 & 0.902 & 0.886 &  0.890 \\ \hline \hline
			& \multicolumn{6}{c|}{AUPRC}\\ \hline
			Averaging     & 0.320 & 0.429 & 0.262 & 0.478 & 0.443 & 0.412 \\
			Full method   & \textbf{0.334} & \textbf{0.413} &  \textbf{0.260} & \textbf{0.486} & \textbf{0.452} & \textbf{0.432}\\
			PC & 0.074 & 0.218 & 0.082 & 0.165  & 0.193 & 0.135 \\
			GTE & 0.094 & 0.211 & 0.081 & 0.165 & 0.210 & 0.144\\
			GENIE3 & 0.128 & 0.273 & 0.116 & 0.309 & 0.256 & 0.224\\ \hline
		\end{tabular}
	\end{table}
	
	\section{On the selection of the number of principal components}
	\label{app:pca}
	
	The (true) network, seen as a matrix, can be decomposed through a singular value decomposition (SVD) or principal component
	analysis (PCA), so as to respectively determine a set of independent linear combinations of the
	variable \citep{alter2000singular}, or a reduced set of linear
	combinations combine, which then maximize the explained variance of the data
	\citep{jolliffe2005principal}. Since SVD and PCA are related, they can be defined by the same goal: both aim at finding a reduced set of neurons, known as components, whose activity can explain the rest of the network.
	
	The distribution of compoment eigen values obtained from PCA and SVD decompositions can be studied by sorting them in descending order of magnitude, as illustrated in Figure~\ref{fig:pca}. It can be seen that some component eigen values are zero, implying that the behaviour of the network could be explained by a subset of neurons because of the 
	redundancy and relations between the neurons. For all datasets, the eigen value distribution is exactly the same. 
	
	In the context of the challenge, we observe that only $800$ components seem to be necessary and we exploit this when computing partial correlation statistics. Therefore, the value of the parameter $M$ is immediate and should be clearly set to $800$ ($=0.8p$).
	
	Note that if the true network is not available, similar decomposition analysis could be carried on the inferred network, or on the data directly.

	\begin{figure}[t]
		\centering
		\subfloat[PCA]{\includegraphics[width=0.75\textwidth]{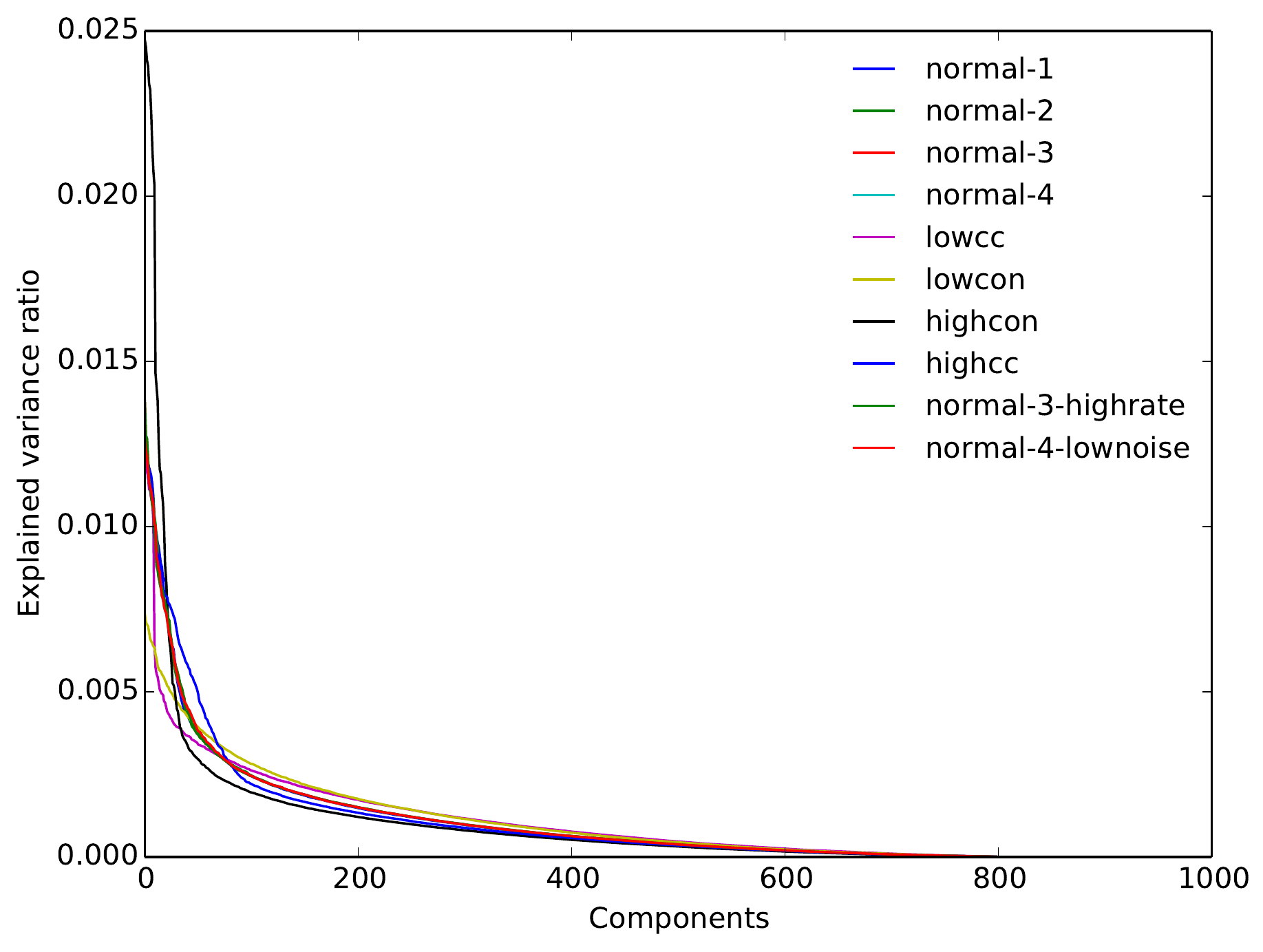} \label{fig:PCA}}\\
		\subfloat[SVD]{\includegraphics[width=0.75\textwidth]{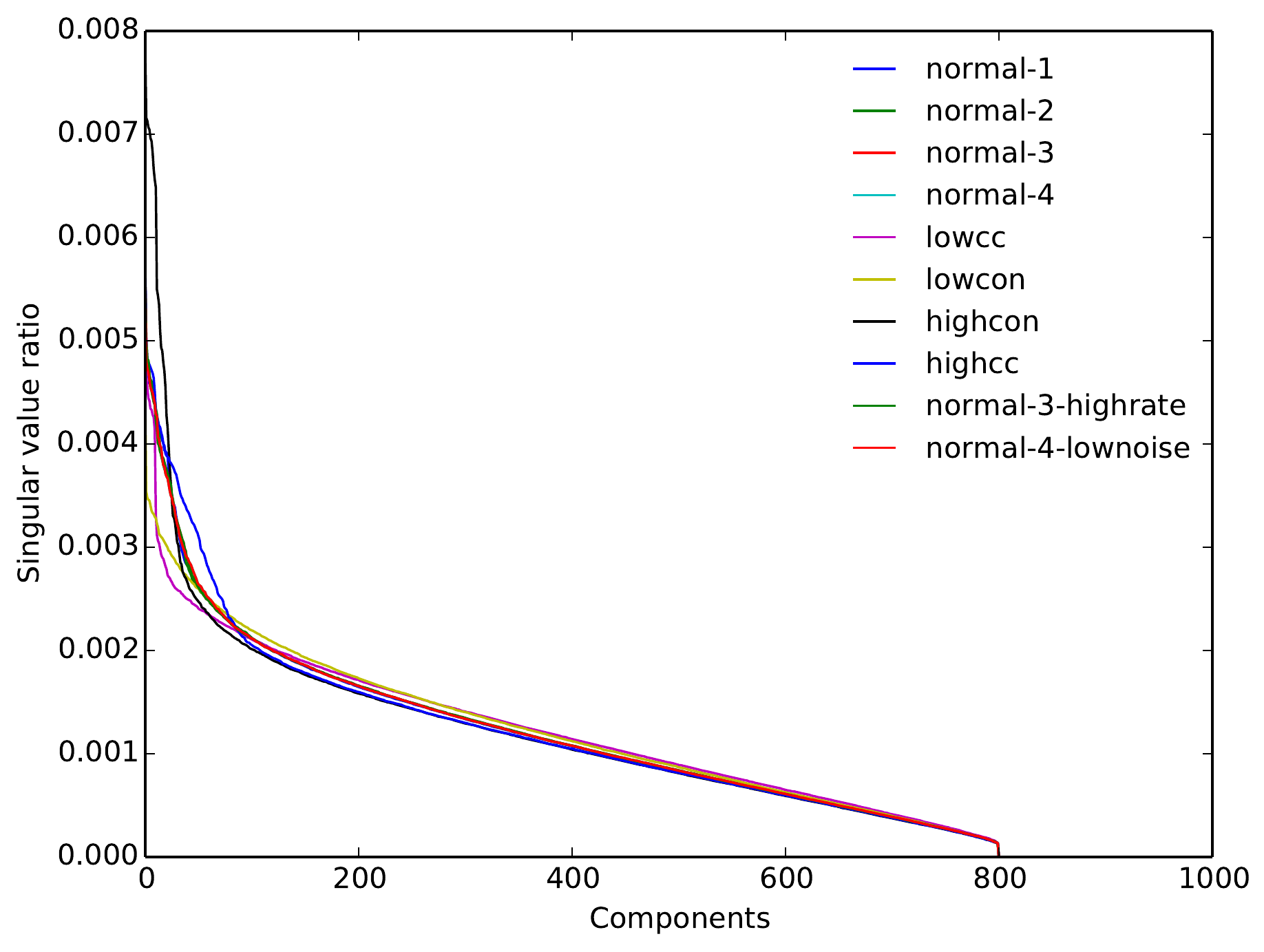} \label{fig:SVD}}
		\caption{Explained variance ratio by number of principal components (left) and singular value ratio by number of principal components (right) for all networks.}
		\label{fig:pca}
	\end{figure}
\end{subappendices}


\makeatletter
\def\toclevel@chapter{-1}
\makeatother
\chapter{Conclusion}

\begin{overview}
	The objective of this thesis was to better understand and characterise the properties of tree-based feature importance measures. Indeed, despite numerous works from either empirical or theoretical points of view, these importance measures are not yet fully understood. We are convinced that a more in depth understanding of the various properties of those measures would help to foster the scientific community to more systematically exploit these measures within the context of a wide variety of problems and methods. 
Within this context, we have mostly focused our study on the so-called \emph{Mean Decrease of Impurity} type of importance measure. In this chapter we summarise our findings and discuss directions for further research.
\end{overview}

\section{Main findings}

In the first part of this thesis we gave the background for the subsequent chapters. In particular, we introduced various notions of feature relevance and redundancy between features and described various feature selection problems in Chapter \ref{ch:background}, and we presented all relevant notions and algorithms pertaining to tree-based methods in Chapter \ref{ch:trees}.

\paragraph{}

Our first step towards a better understanding of tree-based feature importance measures consisted of a survey of the literature about  this topic, provided in Chapter \ref{ch:importances}. We proposed a framework of the MDA approach that is not tree-specific. In asymptotic conditions, i.e. infinite sample size $N$ and infinite ensemble size $N_{T}$, we gathered analytical formulations of both MDA and MDI importance measures and highlighted their main properties, in particular in the presence of correlated or redundant features. From a more practical point of view, we discussed their main biases. Despite many desirable properties, it emerged from empirical analyses that tree-based parameters, and feature characteristics and dependencies, may strongly impact the measured importance scores. In particular, the split randomisation parameter $K$ (i.e., the number of features considered at each node as split variable candidates) introduces the so-called masking effect, that prevents some relevant features to appear as important to the eyes of a random forest model. We also noticed a preference for smaller groups of correlated features, worth to take into account in the context of high dimensional applications where features often come in groups of correlated features of variable sizes. All those observations should help to analyse more cautiously importance scores. 

Another downside of tree-based importance measures is that they do not provide an explicit way to distinguish important features from non-important ones, for example by providing meaningful thresholds on feature importances. However, many approaches have been proposed to circumvent this issue. In particular, we investigated permutation schemes and pointed out that the conditional permutation scheme proposed in \cite{strobl2008conditional} focuses only on strongly relevant features, those conveying unique information about the output variable (in contrast with weakly relevant features).

\paragraph{}

In Chapter~\ref{ch:mdi} we focused on the MDI importance measure, and extended its characterisation from totally randomised trees ($K=1$) to more realistic random forest algorithms in asymptotic conditions. While all relevant features receive positive MDI values when using totally randomised trees, non-totally randomised trees ($K>1$) guarantee non zero importance scores only for strongly relevant features. Depending on the value of $K$, more or fewer weakly relevant features might be missed. In case of non-totally developed trees, we related these properties to the maximal tree depth, the feature degree of interaction, and the number of relevant features. 

However, in all these theoretical analyses, trees with multiway splits were considered, while in practice binary trees are generally preferred. We therefore transposed results obtained for multiway trees to binary ones.
Relaxing the asymptotic conditions, we discussed the implications of a finite setting. Limiting the size of the forest increases the number of ``missed'' features. In addition to masked features, some features are never evaluated, or not often enough to estimate their true importance. Importances derived from a finite sample  suffer from a positive bias that makes all features, including the irrelevant ones, have strictly positive importances. 

\paragraph{}

In many problems, feature selection is usually more complicated than identifying a single subset of input features that would together explain the output. Therefore we proposed in  Chapter \ref{ch:context} a methodological contribution that takes into account the context (i.e., the circumstances that form the setting for the experiment) in feature importance evaluations. The characterisation considers both contextual and non-contextual relevance of features and is based on several importance scores derived from tree-based methods. This approach was also illustrated on two artificial and two real biomedical problems.

\paragraph{}

When facing high-dimensional datasets, most approaches suffer from the curse of dimensionality. Chapter~\ref{ch:SRS} proposed an improved tree-based method that handles large datasets while being computationally tractable and able to identify relevant features efficiently. Marginally relevant features can be easily identified, even by univariate approaches, however some features are only relevant in the context of others, and their identification requires sophisticated methods that handle feature dependencies. The key idea of our method is that all features that make the others appear as relevant are necessarily relevant too. We used this simple result to propose a sequential approach that keeps in memory some already identified relevant features to speed up the identification of others. We observed that this approach is particularly interesting in case of highly dependent and structured features. 

\paragraph{}

The last chapter of this thesis is devoted to a specific machine learning task consisting in reconstructing a network from data. The first part of Chapter \ref{ch:connectomics} recalled the principle of GENIE3, a tree-based network inference technique designed a few years ago in our research group. Then, we proposed a method to infer the connectome from calcium imaging data using partial correlation statistics and we put it in perspective with GENIE3-like techniques.

\section{Limitations and future work}

We believe that this thesis provides additional steps towards a better understanding of tree-based feature importance measures.
 However, there still remain several limitations to the frameworks proposed along this thesis that are all potential directions of improvements. 

\section*{Extending our characterisation of the MDI importance to continuous features}
All theoretical derivations from Chapters \ref{ch:importances}, \ref{ch:mdi} and \ref{ch:context} concern categorical input variables which are the keystones of our characterisation of the measure. It would be interesting to adapt our framework to continuous input variables, and also, probably with more difficulty, to continuous context variables. 

\section*{Feature importance estimation in non asymptotic conditions}
Another key assumption of our characterisations was asymptotic conditions.  In practice sample sizes are finite, as are the number of trees in an ensemble. We believe that a very significant step towards a full understanding of importance measures would be the derivation of statistical distributions of importance scores depending on feature characteristics as well as sample and ensemble size. 

\section*{MDA vs. MDI}
The study conducted in Chapter \ref{ch:importances} initiated a comparison between the two tree-based importance measures. Both methods can be used for classification and regression problems and yield similar results while being intrinsically different on several aspects. In what follows, we give an outline of some elements of comparison that may be subject to future studies.

MDA exploits out-of-bag (OOB) samples (i.e., not used to learn the model) to compute an error-rate evaluating the impact of the removal (by permutation) of a feature. MDI assesses the importance of a feature based on its average contribution in the impurity reduction in the tree-ensemble learning. Unlike MDA, MDI therefore uses the same samples for learning the model and evaluating the importance of features. Future research could examine if importance scores evaluated in this way and those computed using MDI on  independent samples (e.g., OOB samples or a holdout test set) are similar. 

\paragraph{}

Another point of comparison is that MDA depends explicitly on the loss function used, whereas MDI depends explicitly on the impurity measure used. However, MDA also depends indirectly on the tree structure and hence on the impurity measure used to grow the tree. Permuting a feature that is not used in the tree model obviously does not impact the OOB error-rate of this tree. This suggests that both importance measures will identify approximately the same set of important features. Some differences can however be pointed out. Let us consider a two-class classification problem. A feature that slightly changes the output value distributions in the tree leaves would be seen as important by the MDI importance measure. If this change is too subtle to change the predicted class of the tree, the MDA importance using a loss function that is not sensitive enough (e.g., the zero-one loss function) would miss such a feature.  In presence of two variables in a XOR configuration with respect to the output, MDI is only able to identify one feature (the second one) per tree, whereas MDA can detect the importance of both features in a single tree. Indeed, permuting the values of the first variable used induces that samples reaching nodes using the second variable are mixed up. This necessarily impacts the error-rate of the tree and thus makes the first variable appear as important in the eyes of MDA. 

\paragraph{}

Future studies could investigate if it is possible to move both importance closer to each other by considering some specific loss function (e.g., that would have the same properties as the impurity measure used).

\section*{Empirical evaluation and improvements of the Sequential Random Subspace method}
Despite a strong theoretical motivation, some more work is clearly needed to evaluate the Sequential Random Subspace method empirically, on controlled and real high-dimensional problems. In this context, it would probably be necessary to overcome one of the main drawbacks of the sequential random subspace method with respect to the random subspace method which is that it can not be parallelised. One possible approach is to grow ensemble of trees at each iteration instead of single trees. Such a variant is clearly an improvement for our proposed method. We finally believe that another possible improvement to our algorithm is the statistical test based on the introduction of a random probe used to decide which feature to include in the relevant set. \\

\section{Open research questions}

Alongside future work resulting directly from this thesis, we propose in this section some new (open-)research questions that go beyond the scope of this thesis but that should be investigated by further studies to complete the understanding of tree-based feature importance measures.

\section*{Feature importance characterisation for tree boosting}
Boosting approaches were not discussed in this thesis. They however constitute powerful and well performing ensemble algorithms. Concretely, in tree-based boosting ensemble methods, trees are not built independently but sequentially in order to correct predictions of previous trees. Each tree is therefore weighted according to its contribution to the total model performance. Given the state-of-the-art performance of these methods, it would be interesting to compute feature importances from these ensembles of trees by extending our formulation to take into account tree weights, and examine to what extent asymptotic guarantees are still valid.

\section*{Improving interpretability of other state-of-the-art machine learning models}

This thesis was devoted to tree-based methods only. However, all machine learning algorithms could benefit from more interpretability of their induced models, particularly deep learning methods. Taking inspiration from feature importance derived from tree-based methods, it would interesting to evaluate the MDA approach (that is not tree-specific) on other machine learning algorithms and compare it to other importance measures (e.g., individual feature importance measures that assess the importance of features for a single prediction, and which were not discussed in this thesis).

\section*{Causality}
The main advantage of our partial correlation approach is to filter out indirect links that the tree-based inference method GENIE3 is unable to do. Future work might investigate the relationship between direct links and strong relevance, and evaluate to what extent it may be possible to reduce the number of indirect links that are actually kept in the final reconstructed network. Causality in tree-based methods  has been considered in only few works (see, e.g., \citep{li2017causal}) and still remains an open question to date.

\chapter*{References}

\begingroup
    \def\chapter*#1{}
    \bibliographystyle{abbrvnat}
    \renewcommand{\bibname}{}
    \label{app:bibliography}
    \bibliography{bibliography}

\begin{thebibliography}{283}
\providecommand{\natexlab}[1]{#1}
\providecommand{\url}[1]{\texttt{#1}}
\expandafter\ifx\csname urlstyle\endcsname\relax
  \providecommand{\doi}[1]{doi: #1}\else
  \providecommand{\doi}{doi: \begingroup \urlstyle{rm}\Url}\fi

\bibitem[Abeel et~al.(2009)Abeel, Helleputte, Van~de Peer, Dupont, and
  Saeys]{abeel2009robust}
T.~Abeel, T.~Helleputte, Y.~Van~de Peer, P.~Dupont, and Y.~Saeys.
\newblock Robust biomarker identification for cancer diagnosis with ensemble
  feature selection methods.
\newblock \emph{Bioinformatics}, 26\penalty0 (3):\penalty0 392--398, 2009.

\bibitem[Aliferis et~al.(2003)Aliferis, Tsamardinos, and
  Statnikov]{aliferis2003hiton}
C.~F. Aliferis, I.~Tsamardinos, and A.~Statnikov.
\newblock Hiton: a novel markov blanket algorithm for optimal variable
  selection.
\newblock In \emph{AMIA Annual Symposium Proceedings}, volume 2003, page~21.
  American Medical Informatics Association, 2003.

\bibitem[Aliferis et~al.(2010)Aliferis, Statnikov, Tsamardinos, Mani, and
  Koutsoukos]{aliferis2010local}
C.~F. Aliferis, A.~Statnikov, I.~Tsamardinos, S.~Mani, and X.~D. Koutsoukos.
\newblock Local causal and markov blanket induction for causal discovery and
  feature selection for classification part i: Algorithms and empirical
  evaluation.
\newblock \emph{Journal of Machine Learning Research}, 11\penalty0
  (Jan):\penalty0 171--234, 2010.

\bibitem[Almuallim and Dietterich(1991{\natexlab{a}})]{almuallim1991efficient}
H.~Almuallim and T.~G. Dietterich.
\newblock Efficient algorithms for identifying relevant features.
\newblock In \emph{Proc. of the 9th Canadian Conference on Artificial
  Intelligence}, pages 38--45. Citeseer, 1991{\natexlab{a}}.

\bibitem[Almuallim and Dietterich(1991{\natexlab{b}})]{almuallim1991learning}
H.~Almuallim and T.~G. Dietterich.
\newblock Learning with many irrelevant features.
\newblock In \emph{AAAI}, volume~91, pages 547--552. Citeseer,
  1991{\natexlab{b}}.

\bibitem[Almuallim and Dietterich(1994)]{almuallim1994learning}
H.~Almuallim and T.~G. Dietterich.
\newblock Learning boolean concepts in the presence of many irrelevant
  features.
\newblock \emph{Artificial Intelligence}, 69\penalty0 (1-2):\penalty0 279--305,
  1994.

\bibitem[Altay et~al.(2011)Altay, Asim, Markowetz, and
  Neal]{altay2011differential}
G.~Altay, M.~Asim, F.~Markowetz, and D.~E. Neal.
\newblock Differential c3net reveals disease networks of direct physical
  interactions.
\newblock \emph{BMC bioinformatics}, 12\penalty0 (1):\penalty0 296, 2011.

\bibitem[Alter et~al.(2000)Alter, Brown, and Botstein]{alter2000singular}
O.~Alter, P.~O. Brown, and D.~Botstein.
\newblock Singular value decomposition for genome-wide expression data
  processing and modeling.
\newblock \emph{Proceedings of the National Academy of Sciences}, 97\penalty0
  (18):\penalty0 10101--10106, 2000.

\bibitem[Altmann et~al.(2010)Altmann, Tolo{\c{s}}i, Sander, and
  Lengauer]{altmann2010permutation}
A.~Altmann, L.~Tolo{\c{s}}i, O.~Sander, and T.~Lengauer.
\newblock Permutation importance: a corrected feature importance measure.
\newblock \emph{Bioinformatics}, 26\penalty0 (10):\penalty0 1340--1347, 2010.

\bibitem[Amaratunga et~al.(2008)Amaratunga, Cabrera, and
  Lee]{amaratunga2008enriched}
D.~Amaratunga, J.~Cabrera, and Y.-S. Lee.
\newblock Enriched random forests.
\newblock \emph{Bioinformatics}, 24\penalty0 (18):\penalty0 2010--2014, 2008.

\bibitem[Ambroise and McLachlan(2002)]{ambroise2002selection}
C.~Ambroise and G.~J. McLachlan.
\newblock Selection bias in gene extraction on the basis of microarray
  gene-expression data.
\newblock \emph{Proceedings of the national academy of sciences}, 99\penalty0
  (10):\penalty0 6562--6566, 2002.

\bibitem[Amit and Geman(1997)]{amit1997shape}
Y.~Amit and D.~Geman.
\newblock Shape quantization and recognition with randomized trees.
\newblock \emph{Neural computation}, 9\penalty0 (7):\penalty0 1545--1588, 1997.

\bibitem[Ananth and Schisterman(2017)]{ananth2017confounding}
C.~V. Ananth and E.~F. Schisterman.
\newblock Confounding, causality, and confusion: the role of intermediate
  variables in interpreting observational studies in obstetrics.
\newblock \emph{American journal of obstetrics and gynecology}, 217\penalty0
  (2):\penalty0 167--175, 2017.

\bibitem[Archer and Kimes(2008)]{archer2008empirical}
K.~Archer and R.~Kimes.
\newblock Empirical characterization of random forest variable importance
  measures.
\newblock \emph{Computational Statistics \& Data Analysis}, 52\penalty0
  (4):\penalty0 2249--2260, 2008.

\bibitem[Arnold et~al.(2007)Arnold, Nallapati, and
  Cohen]{arnold2007comparative}
A.~Arnold, R.~Nallapati, and W.~W. Cohen.
\newblock A comparative study of methods for transductive transfer learning.
\newblock In \emph{Data Mining Workshops, 2007. ICDM Workshops 2007. Seventh
  IEEE International Conference on}, pages 77--82. IEEE, 2007.

\bibitem[Auret and Aldrich(2011)]{auret2011empirical}
L.~Auret and C.~Aldrich.
\newblock Empirical comparison of tree ensemble variable importance measures.
\newblock \emph{Chemometrics and Intelligent Laboratory Systems}, 105\penalty0
  (2):\penalty0 157--170, 2011.

\bibitem[Battaglia et~al.(2017)Battaglia, Guyon, Lemaire, Orlandi, Ray, and
  Soriano]{battaglia17neuralbook}
D.~Battaglia, I.~Guyon, V.~Lemaire, J.~Orlandi, B.~Ray, and J.~Soriano,
  editors.
\newblock \emph{Neural Connectomics Challenge}.
\newblock Springer, 2017.

\bibitem[Beirlant et~al.(1997)Beirlant, Dudewicz, Gy{\"o}rfi, and Van~der
  Meulen]{beirlant1997nonparametric}
J.~Beirlant, E.~J. Dudewicz, L.~Gy{\"o}rfi, and E.~C. Van~der Meulen.
\newblock Nonparametric entropy estimation: An overview.
\newblock \emph{International Journal of Mathematical and Statistical
  Sciences}, 6\penalty0 (1):\penalty0 17--39, 1997.

\bibitem[Belgiu and Dr{\u{a}}gu{\c{t}}(2016)]{belgiu2016random}
M.~Belgiu and L.~Dr{\u{a}}gu{\c{t}}.
\newblock Random forest in remote sensing: A review of applications and future
  directions.
\newblock \emph{ISPRS Journal of Photogrammetry and Remote Sensing},
  114:\penalty0 24--31, 2016.

\bibitem[Bell and Wang(2000)]{bell2000formalism}
D.~A. Bell and H.~Wang.
\newblock A formalism for relevance and its application in feature subset
  selection.
\newblock \emph{Machine learning}, 41\penalty0 (2):\penalty0 175--195, 2000.

\bibitem[Biau(2012)]{biau2012analysis}
G.~Biau.
\newblock Analysis of a random forests model.
\newblock \emph{Journal of Machine Learning Research}, 13\penalty0
  (Apr):\penalty0 1063--1095, 2012.

\bibitem[Biau and Scornet(2016)]{biau2016random}
G.~Biau and E.~Scornet.
\newblock A random forest guided tour.
\newblock \emph{Test}, 25\penalty0 (2):\penalty0 197--227, 2016.

\bibitem[Biau et~al.(2008)Biau, Devroye, and Lugosi]{biau2008consistency}
G.~Biau, L.~Devroye, and G.~Lugosi.
\newblock Consistency of random forests and other averaging classifiers.
\newblock \emph{Journal of Machine Learning Research}, 9\penalty0
  (Sep):\penalty0 2015--2033, 2008.

\bibitem[Bishop(2006)]{bishop2006pattern}
C.~M. Bishop.
\newblock \emph{Pattern recognition and machine learning}, volume~1.
\newblock Springer New York, 2006.

\bibitem[Bloebaum et~al.(2018)Bloebaum, Janzing, Washio, Shimizu, and
  Schoelkopf]{bloebaum2018cause}
P.~Bloebaum, D.~Janzing, T.~Washio, S.~Shimizu, and B.~Schoelkopf.
\newblock Cause-effect inference by comparing regression errors.
\newblock In \emph{International Conference on Artificial Intelligence and
  Statistics}, pages 900--909, 2018.

\bibitem[Blum and Langley(1997)]{blum1997selection}
A.~L. Blum and P.~Langley.
\newblock Selection of relevant features and examples in machine learning.
\newblock \emph{Artificial intelligence}, 97\penalty0 (1-2):\penalty0 245--271,
  1997.

\bibitem[Bol{\'o}n-Canedo et~al.(2015)Bol{\'o}n-Canedo, S{\'a}nchez-Maro{\~n}o,
  and Alonso-Betanzos]{bolon2015recent}
V.~Bol{\'o}n-Canedo, N.~S{\'a}nchez-Maro{\~n}o, and A.~Alonso-Betanzos.
\newblock Recent advances and emerging challenges of feature selection in the
  context of big data.
\newblock \emph{Knowledge-Based Systems}, 86:\penalty0 33--45, 2015.

\bibitem[Botta(2013)]{botta2013walk}
V.~Botta.
\newblock \emph{A walk into random forests: adaptation and application to
  Genome-Wide Association Studies}.
\newblock PhD thesis, Universit{\'e} de Li{\`e}ge, Li{\`e}ge, Belgique, 2013.

\bibitem[Botta et~al.(2014)Botta, Louppe, Geurts, and
  Wehenkel]{botta2014exploiting}
V.~Botta, G.~Louppe, P.~Geurts, and L.~Wehenkel.
\newblock Exploiting snp correlations within random forest for genome-wide
  association studies.
\newblock \emph{PloS one}, 9\penalty0 (4):\penalty0 e93379, 2014.

\bibitem[Boulesteix and Slawski(2009)]{boulesteix2009stability}
A.-L. Boulesteix and M.~Slawski.
\newblock Stability and aggregation of ranked gene lists.
\newblock \emph{Briefings in bioinformatics}, 10\penalty0 (5):\penalty0
  556--568, 2009.

\bibitem[Boulesteix et~al.(2011)Boulesteix, Bender, Lorenzo~Bermejo, and
  Strobl]{boulesteix2011random}
A.-L. Boulesteix, A.~Bender, J.~Lorenzo~Bermejo, and C.~Strobl.
\newblock Random forest gini importance favours snps with large minor allele
  frequency: impact, sources and recommendations.
\newblock \emph{Briefings in Bioinformatics}, 13\penalty0 (3):\penalty0
  292--304, 2011.

\bibitem[Boulesteix et~al.(2012)Boulesteix, Janitza, Kruppa, and
  K{\"o}nig]{boulesteix2012overview}
A.-L. Boulesteix, S.~Janitza, J.~Kruppa, and I.~R. K{\"o}nig.
\newblock Overview of random forest methodology and practical guidance with
  emphasis on computational biology and bioinformatics.
\newblock \emph{Wiley Interdisciplinary Reviews: Data Mining and Knowledge
  Discovery}, 2\penalty0 (6):\penalty0 493--507, 2012.

\bibitem[Bousquet(2002)]{bousquet2002transductive}
O.~Bousquet.
\newblock Transductive learning: Motivation, models, algorithms.
\newblock \emph{University of New Mexico, Albuquerque, USA}, 2002.

\bibitem[Boutilier et~al.(1996)Boutilier, Friedman, Goldszmidt, and
  Koller]{boutilier1996}
C.~Boutilier, N.~Friedman, M.~Goldszmidt, and D.~Koller.
\newblock Context-specific independence in bayesian networks.
\newblock In \emph{Proceedings of the Twelfth International Conference on
  Uncertainty in Artificial Intelligence}, UAI'96, pages 115--123, San
  Francisco, CA, USA, 1996. Morgan Kaufmann Publishers Inc.
\newblock ISBN 1-55860-412-X.
\newblock URL \url{http://dl.acm.org/citation.cfm?id=2074284.2074298}.

\bibitem[Braga-Neto and Dougherty(2004)]{braga2004cross}
U.~M. Braga-Neto and E.~R. Dougherty.
\newblock Is cross-validation valid for small-sample microarray classification?
\newblock \emph{Bioinformatics}, 20\penalty0 (3):\penalty0 374--380, 2004.

\bibitem[Breiman(1996{\natexlab{a}})]{breiman1996bagging}
L.~Breiman.
\newblock Bagging predictors.
\newblock \emph{Machine learning}, 24\penalty0 (2):\penalty0 123--140,
  1996{\natexlab{a}}.

\bibitem[Breiman(1996{\natexlab{b}})]{breiman1996heuristics}
L.~Breiman.
\newblock Heuristics of instability and stabilization in model selection.
\newblock \emph{The Annals of Statistics}, pages 2350--2383,
  1996{\natexlab{b}}.

\bibitem[Breiman(1996{\natexlab{c}})]{breiman1996out}
L.~Breiman.
\newblock Out-of-bag estimation, 1996{\natexlab{c}}.

\bibitem[Breiman(2000)]{breiman2000some}
L.~Breiman.
\newblock Some infinity theory for predictor ensembles.
\newblock Technical report, Technical Report 579, Statistics Dept. UCB, 2000.

\bibitem[Breiman(2001)]{breiman2001random}
L.~Breiman.
\newblock Random forests.
\newblock \emph{Machine learning}, 45\penalty0 (1):\penalty0 5--32, 2001.

\bibitem[Breiman(2002)]{breiman2002manual}
L.~Breiman.
\newblock Manual on setting up, using, and understanding random forests v3. 1.
\newblock \emph{Statistics Department University of California Berkeley, CA,
  USA}, 1, 2002.

\bibitem[Breiman(2004)]{breiman2004consistency}
L.~Breiman.
\newblock Consistency for a simple model of random forests.
\newblock Technical report, Berkeley, 2004.

\bibitem[Breiman and Cutler(2003)]{breiman2003random}
L.~Breiman and A.~Cutler.
\newblock Random forests manual v4.
\newblock In \emph{Technical report}. UC Berkel, 2003.

\bibitem[Breiman and Cutler(2008)]{breiman2008random}
L.~Breiman and A.~Cutler.
\newblock Random forests—classification manual.
\newblock \emph{URL http://www. math. usu. edu/\~{} adele/forests}, 2008.

\bibitem[Breiman et~al.(1984)Breiman, Friedman, Olshen, and
  Stone]{breiman1984classification}
L.~Breiman, J.~Friedman, R.~Olshen, and C.~Stone.
\newblock \emph{{Classification and Regression Trees}}.
\newblock Wadsworth and Brooks, Monterey, CA, 1984.

\bibitem[Brown(2009)]{brown2009new}
G.~Brown.
\newblock A new perspective for information theoretic feature selection.
\newblock In \emph{International conference on artificial intelligence and
  statistics}, pages 49--56, 2009.

\bibitem[Brown et~al.(2012)Brown, Pocock, Zhao, and
  Luj{\'a}n]{brown2012conditional}
G.~Brown, A.~Pocock, M.-J. Zhao, and M.~Luj{\'a}n.
\newblock Conditional likelihood maximisation: a unifying framework for
  information theoretic feature selection.
\newblock \emph{The Journal of Machine Learning Research}, 13\penalty0
  (1):\penalty0 27--66, 2012.

\bibitem[Bureau et~al.(2005)Bureau, Dupuis, Falls, Lunetta, Hayward, Keith, and
  Van~Eerdewegh]{bureau2005identifying}
A.~Bureau, J.~Dupuis, K.~Falls, K.~L. Lunetta, B.~Hayward, T.~P. Keith, and
  P.~Van~Eerdewegh.
\newblock Identifying snps predictive of phenotype using random forests.
\newblock \emph{Genetic Epidemiology: The Official Publication of the
  International Genetic Epidemiology Society}, 28\penalty0 (2):\penalty0
  171--182, 2005.

\bibitem[Cardie(1993)]{cardie1993using}
C.~Cardie.
\newblock Using decision trees to improve case-based learning.
\newblock In \emph{Proceedings of the tenth international conference on machine
  learning}, pages 25--32, 1993.

\bibitem[Carlson(2008)]{carlson2008snps}
B.~Carlson.
\newblock Snps-a shortcut to personalized medicine.
\newblock \emph{Genetic Engineering \& Biotechnology News}, 28\penalty0
  (12):\penalty0 12--12, 2008.

\bibitem[Chandrashekar and Sahin(2014)]{chandrashekar2014survey}
G.~Chandrashekar and F.~Sahin.
\newblock A survey on feature selection methods.
\newblock \emph{Computers \& Electrical Engineering}, 40\penalty0 (1):\penalty0
  16--28, 2014.

\bibitem[Chawla et~al.(2004)Chawla, Hall, Bowyer, and Kegelmeyer]{chawla2004}
N.~V. Chawla, L.~O. Hall, K.~W. Bowyer, and W.~P. Kegelmeyer.
\newblock Learning ensembles from bites: A scalable and accurate approach.
\newblock \emph{J. Mach. Learn. Res.}, 5:\penalty0 421--451, Dec. 2004.
\newblock ISSN 1532-4435.

\bibitem[Cover and Thomas(2012)]{cover2012elements}
T.~M. Cover and J.~A. Thomas.
\newblock \emph{Elements of information theory}.
\newblock John Wiley \& Sons, 2012.

\bibitem[Cover and Van~Campenhout(1977)]{cover1977possible}
T.~M. Cover and J.~M. Van~Campenhout.
\newblock On the possible orderings in the measurement selection problem.
\newblock \emph{IEEE Trans. Systems, Man, and Cybernetics}, 7\penalty0
  (9):\penalty0 657--661, 1977.

\bibitem[Cutler and Zhao(2001)]{cutler2001pert}
A.~Cutler and G.~Zhao.
\newblock Pert-perfect random tree ensembles.
\newblock \emph{Computing Science and Statistics}, 33:\penalty0 490--497, 2001.

\bibitem[Cutler et~al.(2007)Cutler, Edwards, Beard, Cutler, Hess, Gibson, and
  Lawler]{cutler2007random}
D.~R. Cutler, T.~C. Edwards, K.~H. Beard, A.~Cutler, K.~T. Hess, J.~Gibson, and
  J.~J. Lawler.
\newblock Random forests for classification in ecology.
\newblock \emph{Ecology}, 88\penalty0 (11):\penalty0 2783--2792, 2007.

\bibitem[de~Abril et~al.(2018)de~Abril, Yoshimoto, and
  Doya]{de2018connectivity}
I.~M. de~Abril, J.~Yoshimoto, and K.~Doya.
\newblock Connectivity inference from neural recording data: Challenges,
  mathematical bases and research directions.
\newblock \emph{Neural Networks}, 2018.

\bibitem[De~La~Fuente et~al.(2004)De~La~Fuente, Bing, Hoeschele, and
  Mendes]{de2004discovery}
A.~De~La~Fuente, N.~Bing, I.~Hoeschele, and P.~Mendes.
\newblock Discovery of meaningful associations in genomic data using partial
  correlation coefficients.
\newblock \emph{Bioinformatics}, 20\penalty0 (18):\penalty0 3565--3574, 2004.

\bibitem[De~Smet and Marchal(2010)]{de2010advantages}
R.~De~Smet and K.~Marchal.
\newblock Advantages and limitations of current network inference methods.
\newblock \emph{Nature Reviews Microbiology}, 8\penalty0 (10):\penalty0 717,
  2010.

\bibitem[Del~Campo et~al.(2012)Del~Campo, Mollenhauer, Bertolotto, Engelborghs,
  Hampel, Simonsen, Kapaki, Kruse, Le~Bastard, Lehmann,
  et~al.]{del2012recommendations}
M.~Del~Campo, B.~Mollenhauer, A.~Bertolotto, S.~Engelborghs, H.~Hampel, A.~H.
  Simonsen, E.~Kapaki, N.~Kruse, N.~Le~Bastard, S.~Lehmann, et~al.
\newblock Recommendations to standardize preanalytical confounding factors in
  alzheimer's and parkinson's disease cerebrospinal fluid biomarkers: an
  update.
\newblock \emph{Biomarkers in medicine}, 6\penalty0 (4):\penalty0 419--430,
  2012.

\bibitem[Deng and Runger(2012)]{deng2012feature}
H.~Deng and G.~Runger.
\newblock Feature selection via regularized trees.
\newblock In \emph{Neural Networks (IJCNN), The 2012 International Joint
  Conference on}, pages 1--8. IEEE, 2012.

\bibitem[Deng and Runger(2013)]{deng2013gene}
H.~Deng and G.~Runger.
\newblock Gene selection with guided regularized random forest.
\newblock \emph{Pattern Recognition}, 46\penalty0 (12):\penalty0 3483--3489,
  2013.

\bibitem[Deng et~al.(2013)Deng, Geng, and Luo]{deng2013identifiability}
W.~Deng, Z.~Geng, and P.~Luo.
\newblock Identifiability of intermediate variables on causal paths.
\newblock \emph{Frontiers of Mathematics in China}, 8\penalty0 (3):\penalty0
  517--539, 2013.

\bibitem[Denil et~al.(2014)Denil, Matheson, and De~Freitas]{denil2014narrowing}
M.~Denil, D.~Matheson, and N.~De~Freitas.
\newblock Narrowing the gap: Random forests in theory and in practice.
\newblock In \emph{International conference on machine learning}, pages
  665--673, 2014.

\bibitem[Devijver and Kittler(1982)]{devijver1982pattern}
P.~A. Devijver and J.~Kittler.
\newblock \emph{Pattern recognition: A statistical approach}.
\newblock Prentice hall, 1982.

\bibitem[D{\'\i}az-Uriarte and De~Andres(2006)]{diaz2006gene}
R.~D{\'\i}az-Uriarte and S.~A. De~Andres.
\newblock Gene selection and classification of microarray data using random
  forest.
\newblock \emph{BMC bioinformatics}, 7\penalty0 (1):\penalty0 3, 2006.

\bibitem[Diciotti et~al.(2013)Diciotti, Ciulli, Mascalchi, Giannelli, and
  Toschi]{diciotti2013peeking}
S.~Diciotti, S.~Ciulli, M.~Mascalchi, M.~Giannelli, and N.~Toschi.
\newblock The \guillemotleft peeking\guillemotright effect in supervised
  feature selection on diffusion tensor imaging data.
\newblock \emph{American Journal of Neuroradiology}, 34\penalty0 (9):\penalty0
  E107--E107, 2013.

\bibitem[Dietterich(2000)]{dietterich2000experimental}
T.~G. Dietterich.
\newblock An experimental comparison of three methods for constructing
  ensembles of decision trees: Bagging, boosting, and randomization.
\newblock \emph{Machine learning}, 40\penalty0 (2):\penalty0 139--157, 2000.

\bibitem[Dietterich and Kong(1995)]{dietterich1995machine}
T.~G. Dietterich and E.~B. Kong.
\newblock Machine learning bias, statistical bias, and statistical variance of
  decision tree algorithms.
\newblock Technical report, Technical report, Department of Computer Science,
  Oregon State University, 1995.

\bibitem[Dobra and Gehrke(2001)]{dobra2001bias}
A.~Dobra and J.~Gehrke.
\newblock Bias correction in classification tree construction.
\newblock In \emph{Proceedings of the Eighteenth International Conference on
  Machine Learning}, pages 90--97. Morgan Kaufmann Publishers Inc., 2001.

\bibitem[Domingos(1996)]{domingos1996exploiting}
P.~Domingos.
\newblock Exploiting context in feature selection.
\newblock In \emph{Workshop on Learning in Context-Sensitive Domains at the
  13th International Conference on Machine Learning (ICML96)}, pages 15--20.
  Bari, Italy, 1996.

\bibitem[Doshi-Velez and Kim(2017)]{doshi2017towards}
F.~Doshi-Velez and B.~Kim.
\newblock Towards a rigorous science of interpretable machine learning.
\newblock \emph{arXiv preprint arXiv:1702.08608}, 2017.

\bibitem[Drami{\'n}ski et~al.(2008)Drami{\'n}ski, Rada-Iglesias, Enroth,
  Wadelius, Koronacki, and Komorowski]{draminski2008monte}
M.~Drami{\'n}ski, A.~Rada-Iglesias, S.~Enroth, C.~Wadelius, J.~Koronacki, and
  J.~Komorowski.
\newblock Monte carlo feature selection for supervised classification.
\newblock \emph{Bioinformatics}, 24\penalty0 (1):\penalty0 110--117, 2008.

\bibitem[Drami{\'n}ski et~al.(2016)Drami{\'n}ski, Dabrowski, Diamanti,
  Koronacki, and Komorowski]{draminski2016discovering}
M.~Drami{\'n}ski, M.~J. Dabrowski, K.~Diamanti, J.~Koronacki, and
  J.~Komorowski.
\newblock Discovering networks of interdependent features in high-dimensional
  problems.
\newblock In \emph{Big Data Analysis: New Algorithms for a New Society}, pages
  285--304. Springer, 2016.

\bibitem[Efron and Tibshirani(1994)]{efron1994introduction}
B.~Efron and R.~J. Tibshirani.
\newblock \emph{An introduction to the bootstrap}.
\newblock CRC press, 1994.

\bibitem[Ernst et~al.(2005)Ernst, Geurts, and Wehenkel]{ernst2005tree}
D.~Ernst, P.~Geurts, and L.~Wehenkel.
\newblock Tree-based batch mode reinforcement learning.
\newblock \emph{Journal of Machine Learning Research}, 6\penalty0
  (Apr):\penalty0 503--556, 2005.

\bibitem[Ewers and Didham(2006)]{ewers2006confounding}
R.~M. Ewers and R.~K. Didham.
\newblock Confounding factors in the detection of species responses to habitat
  fragmentation.
\newblock \emph{Biological reviews}, 81\penalty0 (1):\penalty0 117--142, 2006.

\bibitem[Fr{\'e}nay et~al.(2013)Fr{\'e}nay, Doquire, and
  Verleysen]{frenay2013mutual}
B.~Fr{\'e}nay, G.~Doquire, and M.~Verleysen.
\newblock Is mutual information adequate for feature selection in regression?
\newblock \emph{Neural Networks}, 48:\penalty0 1--7, 2013.

\bibitem[Friedman(2001)]{friedman2001greedy}
J.~H. Friedman.
\newblock Greedy function approximation: a gradient boosting machine.
\newblock \emph{Annals of statistics}, pages 1189--1232, 2001.

\bibitem[Gama(2004)]{gama2004functional}
J.~Gama.
\newblock Functional trees.
\newblock \emph{Machine Learning}, 55\penalty0 (3):\penalty0 219--250, 2004.

\bibitem[Ganz et~al.(2015)Ganz, Greve, Fischl, Konukoglu, Initiative,
  et~al.]{ganz2015relevant}
M.~Ganz, D.~N. Greve, B.~Fischl, E.~Konukoglu, A.~D.~N. Initiative, et~al.
\newblock Relevant feature set estimation with a knock-out strategy and random
  forests.
\newblock \emph{NeuroImage}, 122:\penalty0 131--148, 2015.

\bibitem[Geissler et~al.(2000)Geissler, H{\"o}lzl, Marohl, Kuhn-R{\'e}gnier,
  Mehlhorn, S{\"u}dkamp, and de~Vivie]{geissler2000risk}
H.~J. Geissler, P.~H{\"o}lzl, S.~Marohl, F.~Kuhn-R{\'e}gnier, U.~Mehlhorn,
  M.~S{\"u}dkamp, and E.~R. de~Vivie.
\newblock Risk stratification in heart surgery: comparison of six score
  systems.
\newblock \emph{European Journal of Cardio-thoracic surgery}, 17\penalty0
  (4):\penalty0 400--406, 2000.

\bibitem[Gennari et~al.(1989)Gennari, Langley, and Fisher]{gennari1989models}
J.~H. Gennari, P.~Langley, and D.~Fisher.
\newblock Models of incremental concept formation.
\newblock \emph{Artificial intelligence}, 40\penalty0 (1-3):\penalty0 11--61,
  1989.

\bibitem[Genuer et~al.(2010)Genuer, Poggi, and
  Tuleau-Malot]{genuer2010variable}
R.~Genuer, J.-M. Poggi, and C.~Tuleau-Malot.
\newblock Variable selection using random forests.
\newblock \emph{Pattern Recognition Letters}, 31\penalty0 (14):\penalty0
  2225--2236, 2010.

\bibitem[Geurts(2002)]{geurts2002contributions}
P.~Geurts.
\newblock \emph{Contributions to decision tree induction: bias/variance
  tradeoff and time series classification}.
\newblock PhD thesis, University of Li{\`e}ge Belgium, 2002.

\bibitem[Geurts and Saeys(2011)]{geurts2011exploring}
P.~Geurts and Y.~Saeys.
\newblock Exploring signature multiplicity in microarray data using ensembles
  of randomized trees.
\newblock In \emph{5th International workshop on Machine Learning in Systems
  Biology (MLSB'11)}, pages 24--28. Technical University M{\"u}nchen, 2011.

\bibitem[Geurts et~al.(2006)Geurts, Ernst, and Wehenkel]{geurts2006extremely}
P.~Geurts, D.~Ernst, and L.~Wehenkel.
\newblock Extremely randomized trees.
\newblock \emph{Machine learning}, 63\penalty0 (1):\penalty0 3--42, 2006.

\bibitem[Geurts et~al.(2009)Geurts, Irrthum, and
  Wehenkel]{geurts2009supervised}
P.~Geurts, A.~Irrthum, and L.~Wehenkel.
\newblock Supervised learning with decision tree-based methods in computational
  and systems biology.
\newblock \emph{Molecular Biosystems}, 5\penalty0 (12):\penalty0 1593--1605,
  2009.

\bibitem[Ghimire et~al.(2010)Ghimire, Rogan, and Miller]{ghimire2010contextual}
B.~Ghimire, J.~Rogan, and J.~Miller.
\newblock Contextual land-cover classification: incorporating spatial
  dependence in land-cover classification models using random forests and the
  getis statistic.
\newblock \emph{Remote Sensing Letters}, 1\penalty0 (1):\penalty0 45--54, 2010.

\bibitem[Gini(1912)]{gini1912variabilita}
C.~Gini.
\newblock Variabilit{\`a} e mutabilit{\`a}.
\newblock \emph{Reprinted in Memorie di metodologica statistica (Ed. Pizetti E,
  Salvemini, T). Rome: Libreria Eredi Virgilio Veschi}, 1912.

\bibitem[Goebel et~al.(2005)Goebel, Dawy, Hagenauer, and
  Mueller]{goebel2005approximation}
B.~Goebel, Z.~Dawy, J.~Hagenauer, and J.~C. Mueller.
\newblock An approximation to the distribution of finite sample size mutual
  information estimates.
\newblock In \emph{Communications, 2005. ICC 2005. 2005 IEEE International
  Conference on}, volume~2, pages 1102--1106. IEEE, 2005.

\bibitem[Golub et~al.(1999)Golub, Slonim, Tamayo, Huard, Gaasenbeek, Mesirov,
  Coller, Loh, Downing, Caligiuri, et~al.]{golub1999molecular}
T.~R. Golub, D.~K. Slonim, P.~Tamayo, C.~Huard, M.~Gaasenbeek, J.~P. Mesirov,
  H.~Coller, M.~L. Loh, J.~R. Downing, M.~A. Caligiuri, et~al.
\newblock Molecular classification of cancer: class discovery and class
  prediction by gene expression monitoring.
\newblock \emph{science}, 286\penalty0 (5439):\penalty0 531--537, 1999.

\bibitem[Gregorutti et~al.(2017)Gregorutti, Michel, and
  Saint-Pierre]{gregorutti2017correlation}
B.~Gregorutti, B.~Michel, and P.~Saint-Pierre.
\newblock Correlation and variable importance in random forests.
\newblock \emph{Statistics and Computing}, 27\penalty0 (3):\penalty0 659--678,
  2017.

\bibitem[Gr{\"o}mping(2009)]{gromping2009variable}
U.~Gr{\"o}mping.
\newblock Variable importance assessment in regression: linear regression
  versus random forest.
\newblock \emph{The American Statistician}, 63\penalty0 (4):\penalty0 308--319,
  2009.

\bibitem[Guyon and Elisseeff(2003)]{guyon2003introduction}
I.~Guyon and A.~Elisseeff.
\newblock An introduction to variable and feature selection.
\newblock \emph{Journal of machine learning research}, 3\penalty0
  (Mar):\penalty0 1157--1182, 2003.

\bibitem[Guyon and Elisseeff(2006)]{guyon2006introduction}
I.~Guyon and A.~Elisseeff.
\newblock An introduction to feature extraction.
\newblock In \emph{Feature extraction}, pages 1--25. Springer, 2006.

\bibitem[Hapfelmeier and Ulm(2013)]{hapfelmeier2013new}
A.~Hapfelmeier and K.~Ulm.
\newblock A new variable selection approach using random forests.
\newblock \emph{Computational Statistics \& Data Analysis}, 60:\penalty0
  50--69, 2013.

\bibitem[Hardin et~al.(2004)Hardin, Tsamardinos, and
  Aliferis]{hardin2004theoretical}
D.~Hardin, I.~Tsamardinos, and C.~F. Aliferis.
\newblock A theoretical characterization of linear svm-based feature selection.
\newblock In \emph{Proceedings of the twenty-first international conference on
  Machine learning}, page~48. ACM, 2004.

\bibitem[Hastie et~al.(2005)Hastie, Tibshirani, Friedman, and
  Franklin]{hastie2005elements}
T.~Hastie, R.~Tibshirani, J.~Friedman, and J.~Franklin.
\newblock The elements of statistical learning: data mining, inference and
  prediction.
\newblock \emph{The Mathematical Intelligencer}, 27\penalty0 (2):\penalty0
  83--85, 2005.

\bibitem[Hastie et~al.(2009)Hastie, Tibshirani, and
  Friedman]{friedman2009elements}
T.~Hastie, R.~Tibshirani, and J.~Friedman.
\newblock \emph{The elements of statistical learning: data mining, inference,
  and prediction, 2nd Edition}, volume~1 of \emph{Springer series in
  statistics}.
\newblock Springer, 2009.

\bibitem[He and Yu(2010)]{he2010stable}
Z.~He and W.~Yu.
\newblock Stable feature selection for biomarker discovery.
\newblock \emph{Computational biology and chemistry}, 34\penalty0 (4):\penalty0
  215--225, 2010.

\bibitem[Heath et~al.(1993)Heath, Kasif, and Salzberg]{heath1993induction}
D.~Heath, S.~Kasif, and S.~Salzberg.
\newblock Induction of oblique decision trees.
\newblock In \emph{IJCAI}, volume 1993, pages 1002--1007, 1993.

\bibitem[Hern{\'a}ndez-Lobato et~al.(2013)Hern{\'a}ndez-Lobato,
  Mart{\'\i}Nez-Mu{\~n}Oz, and Su{\'a}rez]{hernandez2013large}
D.~Hern{\'a}ndez-Lobato, G.~Mart{\'\i}Nez-Mu{\~n}Oz, and A.~Su{\'a}rez.
\newblock How large should ensembles of classifiers be?
\newblock \emph{Pattern Recognition}, 46\penalty0 (5):\penalty0 1323--1336,
  2013.

\bibitem[Ho(1998)]{ho1998random}
T.~K. Ho.
\newblock The random subspace method for constructing decision forests.
\newblock \emph{Pattern Analysis and Machine Intelligence, IEEE Transactions
  on}, 20\penalty0 (8):\penalty0 832--844, 1998.

\bibitem[Hua et~al.(2004)Hua, Xiong, Lowey, Suh, and Dougherty]{hua2004optimal}
J.~Hua, Z.~Xiong, J.~Lowey, E.~Suh, and E.~R. Dougherty.
\newblock Optimal number of features as a function of sample size for various
  classification rules.
\newblock \emph{Bioinformatics}, 21\penalty0 (8):\penalty0 1509--1515, 2004.

\bibitem[Hua et~al.(2009)Hua, Tembe, and Dougherty]{hua2009performance}
J.~Hua, W.~D. Tembe, and E.~R. Dougherty.
\newblock Performance of feature-selection methods in the classification of
  high-dimension data.
\newblock \emph{Pattern Recognition}, 42\penalty0 (3):\penalty0 409--424, 2009.

\bibitem[Huang et~al.(2005)Huang, Pan, Grindle, Han, Chen, Park, Miller, and
  Hall]{huang2005comparative}
X.~Huang, W.~Pan, S.~Grindle, X.~Han, Y.~Chen, S.~J. Park, L.~W. Miller, and
  J.~Hall.
\newblock A comparative study of discriminating human heart failure etiology
  using gene expression profiles.
\newblock \emph{BMC bioinformatics}, 6\penalty0 (1):\penalty0 205, 2005.

\bibitem[Huynh-Thu(2012)]{huynh2012machine}
V.~A. Huynh-Thu.
\newblock \emph{Machine learning-based feature ranking: statistical
  interpretation and gene network inference}.
\newblock PhD thesis, Universit{\'e} de Li{\`e}ge, 2012.

\bibitem[Huynh-Thu et~al.(2008)Huynh-Thu, Wehenkel, and
  Geurts]{huynh2008exploiting}
V.~A. Huynh-Thu, L.~Wehenkel, and P.~Geurts.
\newblock Exploiting tree-based variable importances to selectively identify
  relevant variables.
\newblock In \emph{JMLR: Workshop and Conference proceedings}, volume~4, pages
  60--73. Microtome Publishing, 2008.

\bibitem[Huynh-Thu et~al.(2010)Huynh-Thu, Irrthum, Wehenkel, and
  Geurts]{huynh2010inferring}
V.~A. Huynh-Thu, A.~Irrthum, L.~Wehenkel, and P.~Geurts.
\newblock Regulatory networks from expression data using tree-based methods.
\newblock \emph{PLoS ONE}, 5\penalty0 (9):\penalty0 e12776, 2010.

\bibitem[Huynh-Thu et~al.(2012)Huynh-Thu, Saeys, Wehenkel, and
  Geurts]{huynh2012statistical}
V.~A. Huynh-Thu, Y.~Saeys, L.~Wehenkel, and P.~Geurts.
\newblock Statistical interpretation of machine learning-based feature
  importance scores for biomarker discovery.
\newblock \emph{Bioinformatics}, 28\penalty0 (13):\penalty0 1766--1774, 2012.

\bibitem[Ideker and Krogan(2012)]{ideker2012differential}
T.~Ideker and N.~J. Krogan.
\newblock Differential network biology.
\newblock \emph{Molecular systems biology}, 8\penalty0 (1), 2012.

\bibitem[Ishwaran(2007)]{ishwaran2007variable}
H.~Ishwaran.
\newblock Variable importance in binary regression trees and forests.
\newblock \emph{Electronic Journal of Statistics}, 1:\penalty0 519--537, 2007.

\bibitem[Ishwaran and Lu(2018)]{ishwaran2018standard}
H.~Ishwaran and M.~Lu.
\newblock Standard errors and confidence intervals for variable importance in
  random forest regression, classification, and survival.
\newblock \emph{Statistics in medicine}, 2018.

\bibitem[Jain and Zongker(1997)]{jain1997feature}
A.~Jain and D.~Zongker.
\newblock Feature selection: Evaluation, application, and small sample
  performance.
\newblock \emph{IEEE transactions on pattern analysis and machine
  intelligence}, 19\penalty0 (2):\penalty0 153--158, 1997.

\bibitem[Jain et~al.(2000)Jain, Duin, and Mao]{jain2000statistical}
A.~K. Jain, R.~P. Duin, and J.~Mao.
\newblock Statistical pattern recognition: A review.
\newblock \emph{IEEE Transactions on pattern analysis and machine
  intelligence}, 22\penalty0 (1):\penalty0 4--37, 2000.

\bibitem[Jakulin(2005)]{jakulin2005machine}
A.~Jakulin.
\newblock \emph{Machine learning based on attribute interactions}.
\newblock PhD thesis, Univerza v Ljubljani, 2005.

\bibitem[Jakulin and Bratko(2003{\natexlab{a}})]{jakulin2003analyzing}
A.~Jakulin and I.~Bratko.
\newblock \emph{Analyzing attribute dependencies}.
\newblock Springer, 2003{\natexlab{a}}.

\bibitem[Jakulin and Bratko(2003{\natexlab{b}})]{jakulin2003quantifying}
A.~Jakulin and I.~Bratko.
\newblock Quantifying and visualizing attribute interactions.
\newblock \emph{arXiv preprint cs/0308002}, 2003{\natexlab{b}}.

\bibitem[Janecek et~al.(2008)Janecek, Gansterer, Demel, and
  Ecker]{janecek2008relationship}
A.~Janecek, W.~Gansterer, M.~Demel, and G.~Ecker.
\newblock On the relationship between feature selection and classification
  accuracy.
\newblock In \emph{New Challenges for Feature Selection in Data Mining and
  Knowledge Discovery}, pages 90--105, 2008.

\bibitem[Janikow(1998)]{janikow1998fuzzy}
C.~Z. Janikow.
\newblock Fuzzy decision trees: issues and methods.
\newblock \emph{IEEE Transactions on Systems, Man, and Cybernetics, Part B
  (Cybernetics)}, 28\penalty0 (1):\penalty0 1--14, 1998.

\bibitem[Janitza et~al.(2013)Janitza, Strobl, and Boulesteix]{janitza2013auc}
S.~Janitza, C.~Strobl, and A.-L. Boulesteix.
\newblock An auc-based permutation variable importance measure for random
  forests.
\newblock \emph{BMC bioinformatics}, 14\penalty0 (1):\penalty0 119, 2013.

\bibitem[Janitza et~al.(2015)Janitza, Celik, and
  Boulesteix]{janitza2015computationally}
S.~Janitza, E.~Celik, and A.-L. Boulesteix.
\newblock A computationally fast variable importance test for random forests
  for high-dimensional data.
\newblock \emph{Advances in Data Analysis and Classification}, pages 1--31,
  2015.

\bibitem[Jiang and Wang(2016)]{jiang2016efficient}
S.-y. Jiang and L.-x. Wang.
\newblock Efficient feature selection based on correlation measure between
  continuous and discrete features.
\newblock \emph{Information Processing Letters}, 116\penalty0 (2):\penalty0
  203--215, 2016.

\bibitem[Johnson et~al.(2007)Johnson, Li, and Rabinovic]{johnson2007adjusting}
W.~E. Johnson, C.~Li, and A.~Rabinovic.
\newblock Adjusting batch effects in microarray expression data using empirical
  bayes methods.
\newblock \emph{Biostatistics}, 8\penalty0 (1):\penalty0 118--127, 2007.

\bibitem[Jolliffe(2005)]{jolliffe2005principal}
I.~Jolliffe.
\newblock \emph{Principal component analysis}.
\newblock Wiley Online Library, 2005.

\bibitem[Jolliffe(2011)]{jolliffe2011principal}
I.~Jolliffe.
\newblock Principal component analysis.
\newblock In \emph{International encyclopedia of statistical science}, pages
  1094--1096. Springer, 2011.

\bibitem[Joly(2017)]{joly2017exploiting}
A.~Joly.
\newblock \emph{Exploiting random projections and sparsity with random forests
  and gradient boosting methods-Application to multi-label and multi-output
  learning, random forest model compression and leveraging input sparsity}.
\newblock PhD thesis, Universit{\'e} de Li{\`e}ge, Li{\`e}ge, Belgique, 2017.

\bibitem[Kaiser and Reed(1977)]{kaiser1977data}
J.~Kaiser and W.~Reed.
\newblock Data smoothing using low-pass digital filters.
\newblock \emph{Review of Scientific Instruments}, 48\penalty0 (11):\penalty0
  1447--1457, 1977.

\bibitem[Kalousis et~al.(2007)Kalousis, Prados, and
  Hilario]{kalousis2007stability}
A.~Kalousis, J.~Prados, and M.~Hilario.
\newblock Stability of feature selection algorithms: a study on
  high-dimensional spaces.
\newblock \emph{Knowledge and information systems}, 12\penalty0 (1):\penalty0
  95--116, 2007.

\bibitem[Kamangar(2012)]{kamangar2012confounding}
F.~Kamangar.
\newblock Confounding variables in epidemiologic studies: basics and beyond.
\newblock \emph{Arch Iran Med}, 15\penalty0 (8):\penalty0 508--16, 2012.

\bibitem[Kim and Loh(2001)]{kim2001classification}
H.~Kim and W.-Y. Loh.
\newblock Classification trees with unbiased multiway splits.
\newblock \emph{Journal of the American Statistical Association}, 96\penalty0
  (454):\penalty0 589--604, 2001.

\bibitem[Kira and Rendell(1992{\natexlab{a}})]{kira1992feature}
K.~Kira and L.~A. Rendell.
\newblock The feature selection problem: Traditional methods and a new
  algorithm.
\newblock In \emph{Aaai}, volume~2, pages 129--134, 1992{\natexlab{a}}.

\bibitem[Kira and Rendell(1992{\natexlab{b}})]{kira1992practical}
K.~Kira and L.~A. Rendell.
\newblock A practical approach to feature selection.
\newblock In \emph{Machine Learning Proceedings 1992}, pages 249--256.
  Elsevier, 1992{\natexlab{b}}.

\bibitem[Kittler(1978)]{kittler1978feature}
J.~Kittler.
\newblock Feature set search algorithms.
\newblock \emph{Pattern recognition and signal processing}, 1978.

\bibitem[Kohavi and John(1997)]{kohavi1997wrappers}
R.~Kohavi and G.~H. John.
\newblock Wrappers for feature subset selection.
\newblock \emph{Artificial intelligence}, 97\penalty0 (1-2):\penalty0 273--324,
  1997.

\bibitem[Koller and Sahami(1996)]{koller1996toward}
D.~Koller and M.~Sahami.
\newblock Toward optimal feature selection.
\newblock Technical report, Stanford InfoLab, 1996.

\bibitem[Konukoglu and Ganz(2014)]{konukoglu2014approximate}
E.~Konukoglu and M.~Ganz.
\newblock Approximate false positive rate control in selection frequency for
  random forest.
\newblock \emph{arXiv preprint arXiv:1410.2838}, 2014.

\bibitem[Kuncheva(2007)]{kuncheva2007stability}
L.~I. Kuncheva.
\newblock A stability index for feature selection.
\newblock In \emph{Artificial intelligence and applications}, pages 421--427,
  2007.

\bibitem[Kuncheva and Rodr{\'\i}guez(2018)]{kuncheva2018feature}
L.~I. Kuncheva and J.~J. Rodr{\'\i}guez.
\newblock On feature selection protocols for very low-sample-size data.
\newblock \emph{Pattern Recognition}, 81:\penalty0 660--673, 2018.

\bibitem[Kuncheva et~al.(2010)Kuncheva, Rodr{\'\i}guez, Plumpton, Linden, and
  Johnston]{kuncheva2010random}
L.~I. Kuncheva, J.~J. Rodr{\'\i}guez, C.~O. Plumpton, D.~E. Linden, and S.~J.
  Johnston.
\newblock Random subspace ensembles for fmri classification.
\newblock \emph{Medical Imaging, IEEE Transactions on}, 29\penalty0
  (2):\penalty0 531--542, 2010.

\bibitem[Kursa and Rudnicki(2011)]{kursa2011all}
M.~B. Kursa and W.~R. Rudnicki.
\newblock The all relevant feature selection using random forest.
\newblock \emph{arXiv preprint arXiv:1106.5112}, 2011.

\bibitem[Kwok and Carter(1990)]{kwok1990multiple}
S.~W. Kwok and C.~Carter.
\newblock Multiple decision trees.
\newblock In \emph{Machine Intelligence and Pattern Recognition}, volume~9,
  pages 327--335. Elsevier, 1990.

\bibitem[Lai et~al.(2006)Lai, Reinders, and Wessels]{lai2006random}
C.~Lai, M.~J. Reinders, and L.~Wessels.
\newblock Random subspace method for multivariate feature selection.
\newblock \emph{Pattern recognition letters}, 27\penalty0 (10):\penalty0
  1067--1076, 2006.

\bibitem[Langs et~al.(2011)Langs, Menze, Lashkari, and
  Golland]{langs2011detecting}
G.~Langs, B.~H. Menze, D.~Lashkari, and P.~Golland.
\newblock Detecting stable distributed patterns of brain activation using gini
  contrast.
\newblock \emph{NeuroImage}, 56\penalty0 (2):\penalty0 497--507, 2011.

\bibitem[Latinne et~al.(2001)Latinne, Debeir, and
  Decaestecker]{latinne2001limiting}
P.~Latinne, O.~Debeir, and C.~Decaestecker.
\newblock Limiting the number of trees in random forests.
\newblock In \emph{International Workshop on Multiple Classifier Systems},
  pages 178--187. Springer, 2001.

\bibitem[Lee~Rodgers and Nicewander(1988)]{lee1988thirteen}
J.~Lee~Rodgers and W.~A. Nicewander.
\newblock Thirteen ways to look at the correlation coefficient.
\newblock \emph{The American Statistician}, 42\penalty0 (1):\penalty0 59--66,
  1988.

\bibitem[Li et~al.(2017)Li, Ma, Le, Liu, and Liu]{li2017causal}
J.~Li, S.~Ma, T.~Le, L.~Liu, and J.~Liu.
\newblock Causal decision trees.
\newblock \emph{IEEE Transactions on Knowledge and Data Engineering},
  29\penalty0 (2):\penalty0 257--271, 2017.

\bibitem[Li et~al.(2011)Li, Rakitsch, and Borgwardt]{li2011ccsvm}
L.~Li, B.~Rakitsch, and K.~Borgwardt.
\newblock ccsvm: correcting support vector machines for confounding factors in
  biological data classification.
\newblock \emph{Bioinformatics}, 27\penalty0 (13):\penalty0 i342--i348, 2011.

\bibitem[Liaw et~al.(2002)Liaw, Wiener, et~al.]{liaw2002classification}
A.~Liaw, M.~Wiener, et~al.
\newblock Classification and regression by randomforest.
\newblock \emph{R news}, 2\penalty0 (3):\penalty0 18--22, 2002.

\bibitem[Lichman(2013)]{Lichman2013uci}
M.~Lichman.
\newblock {UCI} machine learning repository, 2013.
\newblock URL \url{http://archive.ics.uci.edu/ml}.

\bibitem[Lichtman and Denk(2011)]{lichtman2011big}
J.~W. Lichtman and W.~Denk.
\newblock The big and the small: challenges of imaging the brain{\'s} circuits.
\newblock \emph{Science}, 334\penalty0 (6056):\penalty0 618--623, 2011.

\bibitem[Lipton(2016)]{lipton2016mythos}
Z.~C. Lipton.
\newblock The mythos of model interpretability.
\newblock \emph{arXiv preprint arXiv:1606.03490}, 2016.

\bibitem[Liu and Yu(2005)]{liu2005toward}
H.~Liu and L.~Yu.
\newblock Toward integrating feature selection algorithms for classification
  and clustering.
\newblock \emph{IEEE Transactions on knowledge and data engineering},
  17\penalty0 (4):\penalty0 491--502, 2005.

\bibitem[Liu and Wu(2012)]{liu2012supervised}
Q.~Liu and Y.~Wu.
\newblock Supervised learning.
\newblock In \emph{Encyclopedia of the Sciences of Learning}, pages 3243--3245.
  Springer, 2012.

\bibitem[Liu and Zhao(2017)]{liu2017variable}
Y.~Liu and H.~Zhao.
\newblock Variable importance-weighted random forests.
\newblock \emph{Quantitative Biology}, 5\penalty0 (4):\penalty0 338--351, 2017.

\bibitem[Louppe(2014)]{louppe2014understanding}
G.~Louppe.
\newblock \emph{Understanding random forests: From theory to practice}.
\newblock PhD thesis, Universit{\'e} de Li{\`e}ge, Li{\`e}ge, Belgique, 2014.

\bibitem[Louppe and Geurts(2012)]{louppe2012ensembles}
G.~Louppe and P.~Geurts.
\newblock Ensembles on random patches.
\newblock In \emph{Joint European Conference on Machine Learning and Knowledge
  Discovery in Databases}, pages 346--361. Springer, 2012.

\bibitem[Louppe et~al.(2013)Louppe, Wehenkel, Sutera, and
  Geurts]{louppe2013understanding}
G.~Louppe, L.~Wehenkel, A.~Sutera, and P.~Geurts.
\newblock Understanding variable importances in forests of randomized trees.
\newblock In \emph{Advances in neural information processing systems}, pages
  431--439, 2013.

\bibitem[Lundberg and Lee(2017)]{lundberg2017consistent}
S.~M. Lundberg and S.-I. Lee.
\newblock Consistent feature attribution for tree ensembles.
\newblock \emph{arXiv preprint arXiv:1706.06060}, 2017.

\bibitem[Lundberg et~al.(2018)Lundberg, Erion, and Lee]{lundberg2018consistent}
S.~M. Lundberg, G.~G. Erion, and S.-I. Lee.
\newblock Consistent individualized feature attribution for tree ensembles.
\newblock \emph{arXiv preprint arXiv:1802.03888}, 2018.

\bibitem[Lunetta et~al.(2004)Lunetta, Hayward, Segal, and
  Van~Eerdewegh]{lunetta2004screening}
K.~L. Lunetta, L.~B. Hayward, J.~Segal, and P.~Van~Eerdewegh.
\newblock Screening large-scale association study data: exploiting interactions
  using random forests.
\newblock \emph{BMC genetics}, 5\penalty0 (1):\penalty0 32, 2004.

\bibitem[Lu{\v{s}}trek et~al.(2016)Lu{\v{s}}trek, Gams,
  Martin{\v{c}}i{\'c}-Ip{\v{s}}i{\'c}, et~al.]{luvstrek2016makes}
M.~Lu{\v{s}}trek, M.~Gams, S.~Martin{\v{c}}i{\'c}-Ip{\v{s}}i{\'c}, et~al.
\newblock What makes classification trees comprehensible?
\newblock \emph{Expert Systems with Applications}, 62:\penalty0 333--346, 2016.

\bibitem[Marbach et~al.(2012)Marbach, Costello, K\"uffner, Vega, Prill,
  Camacho, Allison, Consortium, Kellis, Collins, and
  Stolovitzky]{marbach2012wisdom}
D.~Marbach, J.~C. Costello, R.~K\"uffner, N.~Vega, R.~J. Prill, D.~M. Camacho,
  K.~R. Allison, T.~D. Consortium, M.~Kellis, J.~J. Collins, and
  G.~Stolovitzky.
\newblock Wisdom of crowds for robust network inference.
\newblock \emph{Nature methods}, 9\penalty0 (8):\penalty0 794--804, 2012.

\bibitem[Margaritis and Thrun(2000)]{margaritis2000bayesian}
D.~Margaritis and S.~Thrun.
\newblock Bayesian network induction via local neighborhoods.
\newblock In \emph{Advances in neural information processing systems}, pages
  505--511, 2000.

\bibitem[Marill and Green(1963)]{marill1963effectiveness}
T.~Marill and D.~Green.
\newblock On the effectiveness of receptors in recognition systems.
\newblock \emph{IEEE transactions on Information Theory}, 9\penalty0
  (1):\penalty0 11--17, 1963.

\bibitem[Matthews et~al.(2009)Matthews, Gopinath, Gillespie, Caudy, Croft,
  de~Bono, Garapati, Hemish, Hermjakob, Jassal, et~al.]{matthews2009reactome}
L.~Matthews, G.~Gopinath, M.~Gillespie, M.~Caudy, D.~Croft, B.~de~Bono,
  P.~Garapati, J.~Hemish, H.~Hermjakob, B.~Jassal, et~al.
\newblock Reactome knowledgebase of human biological pathways and processes.
\newblock \emph{Nucleic acids research}, 37\penalty0 (suppl 1):\penalty0
  D619--D622, 2009.

\bibitem[McGill(1954)]{mcgill1954multivariate}
W.~J. McGill.
\newblock Multivariate information transmission.
\newblock \emph{Psychometrika}, 19\penalty0 (2):\penalty0 97--116, 1954.

\bibitem[Meinshausen and B{\"u}hlmann(2010)]{meinshausen2010stability}
N.~Meinshausen and P.~B{\"u}hlmann.
\newblock Stability selection.
\newblock \emph{Journal of the Royal Statistical Society: Series B (Statistical
  Methodology)}, 72\penalty0 (4):\penalty0 417--473, 2010.

\bibitem[Meyer and Bontempi(2013)]{meyer2013information}
P.~E. Meyer and G.~Bontempi.
\newblock Information-theoretic gene selection in expression data.
\newblock \emph{Biological Knowledge Discovery Handbook: Preprocessing, Mining,
  and Postprocessing of Biological Data}, pages 399--420, 2013.

\bibitem[Meyer et~al.(2008)Meyer, Schretter, and
  Bontempi]{meyer2008information}
P.~E. Meyer, C.~Schretter, and G.~Bontempi.
\newblock Information-theoretic feature selection in microarray data using
  variable complementarity.
\newblock \emph{IEEE Journal of Selected Topics in Signal Processing},
  2\penalty0 (3):\penalty0 261--274, 2008.

\bibitem[Miller(1990)]{miller1990subset}
A.~J. Miller.
\newblock Subset selection in regression. number 40 in monographs on statistics
  and applied probability, 1990.

\bibitem[Moddemeijer(1989)]{moddemeijer1989estimation}
R.~Moddemeijer.
\newblock On estimation of entropy and mutual information of continuous
  distributions.
\newblock \emph{Signal processing}, 16\penalty0 (3):\penalty0 233--248, 1989.

\bibitem[Mohan et~al.(2014)Mohan, London, Fazel, Witten, and
  Lee]{mohan2014node}
K.~Mohan, P.~London, M.~Fazel, D.~Witten, and S.-I. Lee.
\newblock Node-based learning of multiple gaussian graphical models.
\newblock \emph{The Journal of Machine Learning Research}, 15\penalty0
  (1):\penalty0 445--488, 2014.

\bibitem[Molinaro et~al.(2005)Molinaro, Simon, and
  Pfeiffer]{molinaro2005prediction}
A.~M. Molinaro, R.~Simon, and R.~M. Pfeiffer.
\newblock Prediction error estimation: a comparison of resampling methods.
\newblock \emph{Bioinformatics}, 21\penalty0 (15):\penalty0 3301--3307, 2005.

\bibitem[M{\o}ller et~al.(2000)M{\o}ller, Knudsen, Loft, and
  Wallin]{moller2000comet}
P.~M{\o}ller, L.~E. Knudsen, S.~Loft, and H.~Wallin.
\newblock The comet assay as a rapid test in biomonitoring occupational
  exposure to dna-damaging agents and effect of confounding factors.
\newblock \emph{Cancer Epidemiology and Prevention Biomarkers}, 9\penalty0
  (10):\penalty0 1005--1015, 2000.

\bibitem[Murthy and Salzberg(1995{\natexlab{a}})]{murthy1995growing}
K.~V.~S. Murthy and S.~L. Salzberg.
\newblock \emph{On growing better decision trees from data}.
\newblock PhD thesis, Citeseer, 1995{\natexlab{a}}.

\bibitem[Murthy and Salzberg(1995{\natexlab{b}})]{murthy1995lookahead}
S.~Murthy and S.~Salzberg.
\newblock Lookahead and pathology in decision tree induction.
\newblock In \emph{IJCAI}, pages 1025--1033. Citeseer, 1995{\natexlab{b}}.

\bibitem[Nayak et~al.(2016)Nayak, Dash, and Majhi]{nayak2016brain}
D.~R. Nayak, R.~Dash, and B.~Majhi.
\newblock Brain mr image classification using two-dimensional discrete wavelet
  transform and adaboost with random forests.
\newblock \emph{Neurocomputing}, 177:\penalty0 188--197, 2016.

\bibitem[Nembrini et~al.(2018)Nembrini, K{\"o}nig, and
  Wright]{nembrini2018revival}
S.~Nembrini, I.~R. K{\"o}nig, and M.~N. Wright.
\newblock The revival of the gini importance?
\newblock \emph{Bioinformatics}, 2018.

\bibitem[Nguyen et~al.(2015)Nguyen, Zhao, Huang, Nguyen, and Li]{nguyen2015new}
T.-T. Nguyen, H.~Zhao, J.~Z. Huang, T.~T. Nguyen, and M.~J. Li.
\newblock A new feature sampling method in random forests for predicting
  high-dimensional data.
\newblock In \emph{Advances in Knowledge Discovery and Data Mining}, pages
  459--470. Springer, 2015.

\bibitem[Nicodemus and Malley(2009)]{nicodemus2009predictor}
K.~Nicodemus and J.~Malley.
\newblock Predictor correlation impacts machine learning algorithms:
  implications for genomic studies.
\newblock \emph{Bioinformatics}, 25\penalty0 (15):\penalty0 1884--1890, 2009.

\bibitem[Nicodemus(2011)]{nicodemus2011letter}
K.~K. Nicodemus.
\newblock Letter to the editor: On the stability and ranking of predictors from
  random forest variable importance measures.
\newblock \emph{Briefings in bioinformatics}, 12\penalty0 (4):\penalty0
  369--373, 2011.

\bibitem[Nicodemus et~al.(2010)Nicodemus, Malley, Strobl, and
  Ziegler]{nicodemus2010behaviour}
K.~K. Nicodemus, J.~D. Malley, C.~Strobl, and A.~Ziegler.
\newblock The behaviour of random forest permutation-based variable importance
  measures under predictor correlation.
\newblock \emph{BMC bioinformatics}, 11\penalty0 (1):\penalty0 110, 2010.

\bibitem[Nilsson et~al.(2007)Nilsson, Pe{\~n}a, Bj{\"o}rkegren, and
  Tegn{\'e}r]{nilsson2007consistent}
R.~Nilsson, J.~M. Pe{\~n}a, J.~Bj{\"o}rkegren, and J.~Tegn{\'e}r.
\newblock Consistent feature selection for pattern recognition in polynomial
  time.
\newblock \emph{The Journal of Machine Learning Research}, 8:\penalty0
  589--612, 2007.

\bibitem[Olaru and Wehenkel(2003)]{olaru2003complete}
C.~Olaru and L.~Wehenkel.
\newblock A complete fuzzy decision tree technique.
\newblock \emph{Fuzzy sets and systems}, 138\penalty0 (2):\penalty0 221--254,
  2003.

\bibitem[Olivier et~al.(2018)Olivier, Sutera, Geurts, Fonteneau, and
  Ernst]{olivier2018phase}
F.~Olivier, A.~Sutera, P.~Geurts, R.~Fonteneau, and D.~Ernst.
\newblock Phase identification of smart meters by clustering voltage
  measurements.
\newblock In \emph{Proceedings of the 20th Power Systems Computation Conference
  (PSCC 2018)}, 2018.

\bibitem[Oppenheim et~al.(1983)Oppenheim, Willsky, and
  Nawab]{oppenheim1983signals}
A.~V. Oppenheim, A.~S. Willsky, and S.~H. Nawab.
\newblock \emph{Signals and systems}, volume~2.
\newblock Prentice-Hall Englewood Cliffs, NJ, 1983.

\bibitem[Paja(2018)]{paja2018decision}
W.~Paja.
\newblock A decision rule based approach to generational feature selection.
\newblock In \emph{Industrial Conference on Data Mining}, pages 230--239.
  Springer, 2018.

\bibitem[Pakkenberg et~al.(2003)Pakkenberg, Pelvig, Marner, Bundgaard,
  Gundersen, Nyengaard, and Regeur]{pakkenberg2003aging}
B.~Pakkenberg, D.~Pelvig, L.~Marner, M.~J. Bundgaard, H.~J.~G. Gundersen, J.~R.
  Nyengaard, and L.~Regeur.
\newblock Aging and the human neocortex.
\newblock \emph{Experimental gerontology}, 38\penalty0 (1):\penalty0 95--99,
  2003.

\bibitem[Panagopoulos(2018)]{panagopoulos2018review}
G.~Panagopoulos.
\newblock A review of network inference techniques for neural activation time
  series.
\newblock \emph{arXiv preprint arXiv:1806.08212}, 2018.

\bibitem[Pang et~al.(2006)Pang, Lin, Holford, Enerson, Lu, Lawton, Floyd, and
  Zhao]{pang2006pathway}
H.~Pang, A.~Lin, M.~Holford, B.~E. Enerson, B.~Lu, M.~P. Lawton, E.~Floyd, and
  H.~Zhao.
\newblock Pathway analysis using random forests classification and regression.
\newblock \emph{Bioinformatics}, 22\penalty0 (16):\penalty0 2028--2036, 2006.

\bibitem[Paninski(2003)]{paninski2003estimation}
L.~Paninski.
\newblock Estimation of entropy and mutual information.
\newblock \emph{Neural computation}, 15\penalty0 (6):\penalty0 1191--1253,
  2003.

\bibitem[Patterson(2009)]{patterson2009molecular}
D.~Patterson.
\newblock Molecular genetic analysis of down syndrome.
\newblock \emph{Human Genetics}, 126\penalty0 (1):\penalty0 195--214, Jul 2009.
\newblock ISSN 1432-1203.
\newblock \doi{10.1007/s00439-009-0696-8}.
\newblock URL \url{https://doi.org/10.1007/s00439-009-0696-8}.

\bibitem[Paul et~al.(2012)Paul, Verleysen, and Dupont]{paul2012stability}
J.~Paul, M.~Verleysen, and P.~Dupont.
\newblock The stability of feature selection and class prediction from ensemble
  tree classifiers.
\newblock In \emph{ESANN}, 2012.

\bibitem[Paul et~al.(2013)Paul, Verleysen, and Dupont]{paul2013identification}
J.~Paul, M.~Verleysen, and P.~Dupont.
\newblock Identification of statistically significant features from random
  forests.
\newblock In \emph{ECML workshop on Solving Complex Machine Learning Problems
  with Ensemble Methods}, pages 69--80, 2013.

\bibitem[Pearl(1988)]{pearl1988probabilistic}
J.~Pearl.
\newblock \emph{Probabilistic Reasoning in Intelligent Systems: Networks of
  Plausible Inference}.
\newblock Morgan Kaufmann, 1988.

\bibitem[Pearl(2001)]{pearl2001direct}
J.~Pearl.
\newblock Direct and indirect effects.
\newblock In \emph{Proceedings of the seventeenth conference on uncertainty in
  artificial intelligence}, pages 411--420. Morgan Kaufmann Publishers Inc.,
  2001.

\bibitem[Pearl(2009{\natexlab{a}})]{pearl2009causality}
J.~Pearl.
\newblock \emph{Causality}.
\newblock Cambridge university press, 2009{\natexlab{a}}.

\bibitem[Pearl(2009{\natexlab{b}})]{pearl2009simpson}
J.~Pearl.
\newblock \emph{Simpson's Paradox, Confounding, and Collapsibility}, pages
  173--200.
\newblock Cambridge University Press, 2009{\natexlab{b}}.
\newblock \doi{10.1017/CBO9780511803161.008}.

\bibitem[Pearson(1896)]{pearson1896mathematical}
K.~Pearson.
\newblock Mathematical contributions to the theory of evolution. iii.
  regression, heredity, and panmixia.
\newblock \emph{Philosophical Transactions of the Royal Society of London.
  Series A, containing papers of a mathematical or physical character},
  187:\penalty0 253--318, 1896.

\bibitem[Pedregosa et~al.(2011)Pedregosa, Varoquaux, Gramfort, Michel, Thirion,
  Grisel, Blondel, Prettenhofer, Weiss, Dubourg, et~al.]{pedregosa2011scikit}
F.~Pedregosa, G.~Varoquaux, A.~Gramfort, V.~Michel, B.~Thirion, O.~Grisel,
  M.~Blondel, P.~Prettenhofer, R.~Weiss, V.~Dubourg, et~al.
\newblock Scikit-learn: Machine learning in python.
\newblock \emph{Journal of Machine Learning Research}, 12\penalty0
  (Oct):\penalty0 2825--2830, 2011.

\bibitem[Peng et~al.(2005)Peng, Long, and Ding]{peng2005feature}
H.~Peng, F.~Long, and C.~Ding.
\newblock Feature selection based on mutual information criteria of
  max-dependency, max-relevance, and min-redundancy.
\newblock \emph{IEEE Transactions on pattern analysis and machine
  intelligence}, 27\penalty0 (8):\penalty0 1226--1238, 2005.

\bibitem[Pereira et~al.(2009)Pereira, Mitchell, and
  Botvinick]{pereira2009machine}
F.~Pereira, T.~Mitchell, and M.~Botvinick.
\newblock Machine learning classifiers and fmri: a tutorial overview.
\newblock \emph{Neuroimage}, 45\penalty0 (1):\penalty0 S199--S209, 2009.

\bibitem[Pudil et~al.(1994)Pudil, Novovi{\v{c}}ov{\'a}, and
  Kittler]{pudil1994floating}
P.~Pudil, J.~Novovi{\v{c}}ov{\'a}, and J.~Kittler.
\newblock Floating search methods in feature selection.
\newblock \emph{Pattern recognition letters}, 15\penalty0 (11):\penalty0
  1119--1125, 1994.

\bibitem[Qi et~al.(2006)Qi, Bar-Joseph, and
  Klein-Seetharaman]{qi2006evaluation}
Y.~Qi, Z.~Bar-Joseph, and J.~Klein-Seetharaman.
\newblock Evaluation of different biological data and computational
  classification methods for use in protein interaction prediction.
\newblock \emph{Proteins: Structure, Function, and Bioinformatics}, 63\penalty0
  (3):\penalty0 490--500, 2006.

\bibitem[Quinlan(1986)]{quinlan1986induction}
J.~R. Quinlan.
\newblock Induction of decision trees.
\newblock \emph{Machine learning}, 1\penalty0 (1):\penalty0 81--106, 1986.

\bibitem[Quinlan(2014)]{quinlan2014c4}
J.~R. Quinlan.
\newblock \emph{C4. 5: programs for machine learning}.
\newblock Elsevier, 2014.

\bibitem[Raschka(2016)]{raschka2016model}
S.~Raschka.
\newblock Model evaluation, model selection, and algorithm selection in machine
  learning: Part ii - bootstrapping and uncertainties [blog post], 2016.
\newblock URL
  \url{https://sebastianraschka.com/blog/2016/model-evaluation-selection-part2.html}.
\newblock Accessed: 28 Oct. 2018.

\bibitem[Raudys and Jain(1991)]{raudys1991small}
S.~J. Raudys and A.~K. Jain.
\newblock Small sample size effects in statistical pattern recognition:
  Recommendations for practitioners.
\newblock \emph{IEEE Transactions on Pattern Analysis \& Machine Intelligence},
  13\penalty0 (3):\penalty0 252--264, 1991.

\bibitem[Reunanen(2003)]{reunanen2003overfitting}
J.~Reunanen.
\newblock Overfitting in making comparisons between variable selection methods.
\newblock \emph{Journal of Machine Learning Research}, 3\penalty0
  (Mar):\penalty0 1371--1382, 2003.

\bibitem[Richiardi et~al.(2010)Richiardi, Eryilmaz, Schwartz, Vuilleumier, and
  Van De~Ville]{richiardi2010brain}
J.~Richiardi, H.~Eryilmaz, S.~Schwartz, P.~Vuilleumier, and D.~Van De~Ville.
\newblock Brain decoding of fmri connectivity graphs using decision tree
  ensembles.
\newblock In \emph{Biomedical Imaging: From Nano to Macro, 2010 IEEE
  International Symposium on}, pages 1137--1140. IEEE, 2010.

\bibitem[Rodenburg et~al.(2008)Rodenburg, Heidema, Boer, Bovee-Oudenhoven,
  Feskens, Mariman, and Keijer]{rodenburg2008framework}
W.~Rodenburg, A.~G. Heidema, J.~M. Boer, I.~M. Bovee-Oudenhoven, E.~J. Feskens,
  E.~C. Mariman, and J.~Keijer.
\newblock A framework to identify physiological responses in microarray-based
  gene expression studies: selection and interpretation of biologically
  relevant genes.
\newblock \emph{Physiological genomics}, 33\penalty0 (1):\penalty0 78--90,
  2008.

\bibitem[Rodriguez et~al.(2006)Rodriguez, Kuncheva, and
  Alonso]{rodriguez2006rotation}
J.~J. Rodriguez, L.~I. Kuncheva, and C.~J. Alonso.
\newblock Rotation forest: A new classifier ensemble method.
\newblock \emph{IEEE transactions on pattern analysis and machine
  intelligence}, 28\penalty0 (10):\penalty0 1619--1630, 2006.

\bibitem[Rohrbach et~al.(2013)Rohrbach, Ebert, and
  Schiele]{rohrbach2013transfer}
M.~Rohrbach, S.~Ebert, and B.~Schiele.
\newblock Transfer learning in a transductive setting.
\newblock In C.~J.~C. Burges, L.~Bottou, M.~Welling, Z.~Ghahramani, and K.~Q.
  Weinberger, editors, \emph{Advances in Neural Information Processing Systems
  26}, pages 46--54. Curran Associates, Inc., 2013.
\newblock URL
  \url{http://papers.nips.cc/paper/5209-transfer-learning-in-a-transductive-setting.pdf}.

\bibitem[Rokach(2008)]{rokach2008data}
L.~Rokach.
\newblock Data mining with decision trees: theory and applications. series in
  machine perception and artificial intelligence: Volume 69. vol. 69, 2008.

\bibitem[Rudnicki et~al.(2006)Rudnicki, Kierczak, Koronacki, and
  Komorowski]{rudnicki2006statistical}
W.~R. Rudnicki, M.~Kierczak, J.~Koronacki, and J.~Komorowski.
\newblock A statistical method for determining importance of variables in an
  information system.
\newblock In \emph{International Conference on Rough Sets and Current Trends in
  Computing}, pages 557--566. Springer, 2006.

\bibitem[Saeys et~al.(2007)Saeys, Inza, and Larra{\~n}aga]{saeys2007review}
Y.~Saeys, I.~Inza, and P.~Larra{\~n}aga.
\newblock A review of feature selection techniques in bioinformatics.
\newblock \emph{bioinformatics}, 23\penalty0 (19):\penalty0 2507--2517, 2007.

\bibitem[Saeys et~al.(2008{\natexlab{a}})Saeys, Abeel, and
  de~Peer]{saeys2008towards}
Y.~Saeys, T.~Abeel, and Y.~de~Peer.
\newblock Towards robust feature selection techniques.
\newblock In \emph{Proceedings of Benelearn}, pages 45--46. Citeseer,
  2008{\natexlab{a}}.

\bibitem[Saeys et~al.(2008{\natexlab{b}})Saeys, Abeel, and Van~de
  Peer]{saeys2008robust}
Y.~Saeys, T.~Abeel, and Y.~Van~de Peer.
\newblock Robust feature selection using ensemble feature selection techniques.
\newblock In \emph{Joint European Conference on Machine Learning and Knowledge
  Discovery in Databases}, pages 313--325. Springer, 2008{\natexlab{b}}.

\bibitem[Sandri and Zuccolotto(2008)]{sandri2008bias}
M.~Sandri and P.~Zuccolotto.
\newblock A bias correction algorithm for the gini variable importance measure
  in classification trees.
\newblock \emph{Journal of Computational and Graphical Statistics}, 17\penalty0
  (3):\penalty0 611--628, 2008.

\bibitem[Saporta(2006)]{saporta2006probabilites}
G.~Saporta.
\newblock \emph{Probabilit{\'e}s, analyse des donn{\'e}es et statistique}.
\newblock Editions Technip, 2006.

\bibitem[Sch\"afer and Strimmer(2005)]{Schafer2005shrinkage}
J.~Sch\"afer and K.~Strimmer.
\newblock A shrinkage approach to large-scale covariance matrix estimation and
  implications for functional genomics.
\newblock \emph{Statistical applications in genetics and molecular biology},
  4\penalty0 (32):\penalty0 1175, 2005.

\bibitem[Schrynemackers(2015)]{schrynemackers2015supervised}
M.~Schrynemackers.
\newblock \emph{Supervised inference of biological networks with trees:
  Application to genetic interactions in yeast}.
\newblock PhD thesis, Universit{\'e} de Li{\`e}ge, 2015.

\bibitem[Schrynemackers et~al.(2013)Schrynemackers, K{\"u}ffner, and
  Geurts]{schrynemackers2013protocols}
M.~Schrynemackers, R.~K{\"u}ffner, and P.~Geurts.
\newblock On protocols and measures for the validation of supervised methods
  for the inference of biological networks.
\newblock \emph{Frontiers in genetics}, 4, 2013.

\bibitem[Schrynemackers et~al.(2015)Schrynemackers, Wehenkel, Babu, and
  Geurts]{schrynemackers2015classifying}
M.~Schrynemackers, L.~Wehenkel, M.~M. Babu, and P.~Geurts.
\newblock Classifying pairs with trees for supervised biological network
  inference.
\newblock \emph{Molecular BioSystems}, 11\penalty0 (8):\penalty0 2116--2125,
  2015.

\bibitem[Sch{\"u}rmann(2004)]{schurmann2004bias}
T.~Sch{\"u}rmann.
\newblock Bias analysis in entropy estimation.
\newblock \emph{Journal of Physics A: Mathematical and General}, 37\penalty0
  (27):\penalty0 L295, 2004.

\bibitem[Scornet(2016)]{scornet2016random}
E.~Scornet.
\newblock Random forests and kernel methods.
\newblock \emph{IEEE Transactions on Information Theory}, 62\penalty0
  (3):\penalty0 1485--1500, 2016.

\bibitem[Scornet et~al.(2015)Scornet, Biau, Vert,
  et~al.]{scornet2015consistency}
E.~Scornet, G.~Biau, J.-P. Vert, et~al.
\newblock Consistency of random forests.
\newblock \emph{The Annals of Statistics}, 43\penalty0 (4):\penalty0
  1716--1741, 2015.

\bibitem[Shannon and Weaver(1949)]{shannon1949mathematical}
C.~E. Shannon and W.~Weaver.
\newblock \emph{The Mathematical Theory of Communication}.
\newblock Urbana, 1949.

\bibitem[Sima and Dougherty(2006)]{sima2006should}
C.~Sima and E.~R. Dougherty.
\newblock What should be expected from feature selection in small-sample
  settings.
\newblock \emph{Bioinformatics}, 22\penalty0 (19):\penalty0 2430--2436, 2006.

\bibitem[Simons(1988)]{simons1988calcium}
T.~J. Simons.
\newblock Calcium and neuronal function.
\newblock \emph{Neurosurgical review}, 11\penalty0 (2):\penalty0 119--129,
  1988.

\bibitem[Simpson(1951)]{simpson1951interpretation}
E.~H. Simpson.
\newblock The interpretation of interaction in contingency tables.
\newblock \emph{Journal of the Royal Statistical Society. Series B
  (Methodological)}, pages 238--241, 1951.

\bibitem[Smialowski et~al.(2009)Smialowski, Frishman, and
  Kramer]{smialowski2009pitfalls}
P.~Smialowski, D.~Frishman, and S.~Kramer.
\newblock Pitfalls of supervised feature selection.
\newblock \emph{Bioinformatics}, 26\penalty0 (3):\penalty0 440--443, 2009.

\bibitem[Somol et~al.(1999)Somol, Pudil, Novovi{\v{c}}ov{\'a}, and
  Pacl{\i}k]{somol1999adaptive}
P.~Somol, P.~Pudil, J.~Novovi{\v{c}}ov{\'a}, and P.~Pacl{\i}k.
\newblock Adaptive floating search methods in feature selection.
\newblock \emph{Pattern recognition letters}, 20\penalty0 (11-13):\penalty0
  1157--1163, 1999.

\bibitem[Sporns(2007)]{sporns2007brain}
O.~Sporns.
\newblock Brain connectivity.
\newblock \emph{Scholarpedia}, 2\penalty0 (10):\penalty0 4695, 2007.
\newblock \doi{10.4249/scholarpedia.4695}.
\newblock revision \#91084.

\bibitem[Statnikov and Aliferis(2010)]{statnikov2010analysis}
A.~Statnikov and C.~F. Aliferis.
\newblock Analysis and computational dissection of molecular signature
  multiplicity.
\newblock \emph{PLoS computational biology}, 6\penalty0 (5):\penalty0 e1000790,
  2010.

\bibitem[Statnikov et~al.(2008)Statnikov, Wang, and
  Aliferis]{statnikov2008comprehensive}
A.~Statnikov, L.~Wang, and C.~F. Aliferis.
\newblock A comprehensive comparison of random forests and support vector
  machines for microarray-based cancer classification.
\newblock \emph{BMC bioinformatics}, 9\penalty0 (1):\penalty0 319, 2008.

\bibitem[Statnikov et~al.(2013)Statnikov, Lytkin, Lemeire, and
  Aliferis]{statnikov2013algorithms}
A.~Statnikov, N.~I. Lytkin, J.~Lemeire, and C.~F. Aliferis.
\newblock Algorithms for discovery of multiple markov boundaries.
\newblock \emph{Journal of Machine Learning Research}, 14\penalty0
  (Feb):\penalty0 499--566, 2013.

\bibitem[Stearns(1976)]{stearns1976selecting}
S.~Stearns.
\newblock On selecting features for pattern classifiers.
\newblock In \emph{Proceedings of the 3rd International Conference on Pattern
  Recognition (ICPR 1976)}, pages 71--75, 1976.

\bibitem[Stetter et~al.(2012)Stetter, Battaglia, Soriano, and
  Geisel]{stetter2012model}
O.~Stetter, D.~Battaglia, J.~Soriano, and T.~Geisel.
\newblock Model-free reconstruction of excitatory neuronal connectivity from
  calcium imaging signals.
\newblock \emph{PLoS computational biology}, 8\penalty0 (8):\penalty0 e1002653,
  2012.

\bibitem[Stoppiglia et~al.(2003{\natexlab{a}})Stoppiglia, Dreyfus, Dubois, and
  Oussar]{stoppiglia03}
H.~Stoppiglia, G.~Dreyfus, R.~Dubois, and Y.~Oussar.
\newblock Ranking a random feature for variable and feature selection.
\newblock \emph{Journal of Machine Learning Research}, 3:\penalty0 1399--1414,
  2003{\natexlab{a}}.

\bibitem[Stoppiglia et~al.(2003{\natexlab{b}})Stoppiglia, Dreyfus, Dubois, and
  Oussar]{stoppiglia2003ranking}
H.~Stoppiglia, G.~Dreyfus, R.~Dubois, and Y.~Oussar.
\newblock Ranking a random feature for variable and feature selection.
\newblock \emph{Journal of machine learning research}, 3\penalty0
  (Mar):\penalty0 1399--1414, 2003{\natexlab{b}}.

\bibitem[Strobl and Zeileis(2008)]{strobl2008danger}
C.~Strobl and A.~Zeileis.
\newblock Danger: High power!--exploring the statistical properties of a test
  for random forest variable importance.
\newblock Technical report, Department of Statistics, University of Munich,
  2008.

\bibitem[Strobl et~al.(2007{\natexlab{a}})Strobl, Boulesteix, and
  Augustin]{strobl2007unbiased}
C.~Strobl, A.-L. Boulesteix, and T.~Augustin.
\newblock Unbiased split selection for classification trees based on the gini
  index.
\newblock \emph{Computational Statistics \& Data Analysis}, 52\penalty0
  (1):\penalty0 483--501, 2007{\natexlab{a}}.

\bibitem[Strobl et~al.(2007{\natexlab{b}})Strobl, Boulesteix, Zeileis, and
  Hothorn]{strobl2007bias}
C.~Strobl, A.-L. Boulesteix, A.~Zeileis, and T.~Hothorn.
\newblock Bias in random forest variable importance measures: Illustrations,
  sources and a solution.
\newblock \emph{BMC bioinformatics}, 8\penalty0 (1):\penalty0 25,
  2007{\natexlab{b}}.

\bibitem[Strobl et~al.(2008)Strobl, Boulesteix, Kneib, Augustin, and
  Zeileis]{strobl2008conditional}
C.~Strobl, A.-L. Boulesteix, T.~Kneib, T.~Augustin, and A.~Zeileis.
\newblock Conditional variable importance for random forests.
\newblock \emph{BMC bioinformatics}, 9\penalty0 (1):\penalty0 307, 2008.

\bibitem[Studeny(2006)]{studeny2006probabilistic}
M.~Studeny.
\newblock \emph{Probabilistic conditional independence structures}.
\newblock Springer Science \& Business Media, 2006.

\bibitem[Sutera et~al.(2015)Sutera, Joly, Fran{\c{c}}ois-Lavet, Qiu, Louppe,
  Ernst, and Geurts]{sutera2015simple}
A.~Sutera, A.~Joly, V.~Fran{\c{c}}ois-Lavet, A.~Qiu, G.~Louppe, D.~Ernst, and
  P.~Geurts.
\newblock Simple connectome inference from partial correlation statistics in
  calcium imaging.
\newblock In \emph{Neural Connectomics Workshop}, pages 23--35, 2015.

\bibitem[Sutera et~al.(2016)Sutera, Louppe, Huynh-Thu, Wehenkel, and
  Geurts]{sutera2016context}
A.~Sutera, G.~Louppe, V.~A. Huynh-Thu, L.~Wehenkel, and P.~Geurts.
\newblock Context-dependent feature analysis with random forests.
\newblock In \emph{Uncertainty In Artificial Intelligence: Proceedings of the
  Thirty-Second Conference}, 2016.

\bibitem[Sutera et~al.(2017)Sutera, Joly, Fran{\c{c}}ois-Lavet, Qiu, Louppe,
  Ernst, and Geurts]{sutera2017simple}
A.~Sutera, A.~Joly, V.~Fran{\c{c}}ois-Lavet, Z.~A. Qiu, G.~Louppe, D.~Ernst,
  and P.~Geurts.
\newblock Simple connectome inference from partial correlation statistics in
  calcium imaging.
\newblock In \emph{Neural Connectomics Challenge}, pages 23--36. Springer,
  2017.

\bibitem[Sutera et~al.(2018)Sutera, Ch{\^a}tel, Louppe, Wehenkel, and
  Geurts]{sutera2018random}
A.~Sutera, C.~Ch{\^a}tel, G.~Louppe, L.~Wehenkel, and P.~Geurts.
\newblock Random subspace with trees for feature selection under memory
  constraints.
\newblock In A.~Storkey and F.~Perez-Cruz, editors, \emph{Proceedings of the
  Twenty-First International Conference on Artificial Intelligence and
  Statistics}, volume~84 of \emph{Proceedings of Machine Learning Research},
  pages 929--937, Playa Blanca, Lanzarote, Canary Islands, 09--11 Apr 2018.
  PMLR.
\newblock URL \url{http://proceedings.mlr.press/v84/sutera18a.html}.

\bibitem[Svetnik et~al.(2003)Svetnik, Liaw, Tong, Culberson, Sheridan, and
  Feuston]{svetnik2003random}
V.~Svetnik, A.~Liaw, C.~Tong, J.~C. Culberson, R.~P. Sheridan, and B.~P.
  Feuston.
\newblock Random forest: a classification and regression tool for compound
  classification and qsar modeling.
\newblock \emph{Journal of chemical information and computer sciences},
  43\penalty0 (6):\penalty0 1947--1958, 2003.

\bibitem[Tang et~al.(2009)Tang, Sinnwell, Li, Rider, de~Andrade, and
  Biernacka]{tang2009identification}
R.~Tang, J.~P. Sinnwell, J.~Li, D.~N. Rider, M.~de~Andrade, and J.~M.
  Biernacka.
\newblock Identification of genes and haplotypes that predict rheumatoid
  arthritis using random forests.
\newblock In \emph{BMC proceedings}, volume~3, page S68. BioMed Central, 2009.

\bibitem[Taralla et~al.(2016)Taralla, Qiu, Sutera, Fonteneau, and
  Ernst]{taralla2016decision}
D.~Taralla, Z.~Qiu, A.~Sutera, R.~Fonteneau, and D.~Ernst.
\newblock Decision making from confidence measurement on the reward growth
  using supervised learning: A study intended for large-scale video games.
\newblock In \emph{Proceedings of the 8th International Conference on Agents
  and Artificial Intelligence (ICAART 2016)-Volume 2}, pages 264--271, 2016.

\bibitem[Tian et~al.(2009)Tian, Hires, Mao, Huber, Chiappe, Chalasani,
  Petreanu, Akerboom, McKinney, Schreiter, et~al.]{tian2009imaging}
L.~Tian, S.~A. Hires, T.~Mao, D.~Huber, M.~E. Chiappe, S.~H. Chalasani,
  L.~Petreanu, J.~Akerboom, S.~A. McKinney, E.~R. Schreiter, et~al.
\newblock Imaging neural activity in worms, flies and mice with improved gcamp
  calcium indicators.
\newblock \emph{Nature methods}, 6\penalty0 (12):\penalty0 875, 2009.

\bibitem[Tieri et~al.(2016)Tieri, Farina, Petti, Astolfi, Paci, and
  Castiglione]{tieri2016network}
P.~Tieri, L.~Farina, M.~Petti, L.~Astolfi, P.~Paci, and F.~Castiglione.
\newblock Network inference and reconstruction in bioinformatics.
\newblock \emph{Network Inference and Reconstruction in Bioinformatics.}, 2016.

\bibitem[Tolo{\c{s}}i and Lengauer(2011)]{tolocsi2011classification}
L.~Tolo{\c{s}}i and T.~Lengauer.
\newblock Classification with correlated features: unreliability of feature
  ranking and solutions.
\newblock \emph{Bioinformatics}, 27\penalty0 (14):\penalty0 1986--1994, 2011.

\bibitem[Tsamardinos and Aliferis(2003)]{tsamardinos2003towards}
I.~Tsamardinos and C.~Aliferis.
\newblock Towards principled feature selection: Relevancy, filters and
  wrappers.
\newblock In \emph{in Proceedings of the Ninth International Workshop on
  Artificial Intelligence and Statistics}. Citeseer, 2003.

\bibitem[Tsamardinos et~al.(2003{\natexlab{a}})Tsamardinos, Aliferis, and
  Statnikov]{tsamardinos2003time}
I.~Tsamardinos, C.~F. Aliferis, and A.~Statnikov.
\newblock Time and sample efficient discovery of markov blankets and direct
  causal relations.
\newblock In \emph{Proceedings of the ninth ACM SIGKDD international conference
  on Knowledge discovery and data mining}, pages 673--678. ACM,
  2003{\natexlab{a}}.

\bibitem[Tsamardinos et~al.(2003{\natexlab{b}})Tsamardinos, Aliferis,
  Statnikov, and Statnikov]{tsamardinos2003algorithms}
I.~Tsamardinos, C.~F. Aliferis, A.~R. Statnikov, and E.~Statnikov.
\newblock Algorithms for large scale markov blanket discovery.
\newblock In \emph{FLAIRS conference}, volume~2, pages 376--380,
  2003{\natexlab{b}}.

\bibitem[Turing(1950)]{turing1950computing}
A.~M. Turing.
\newblock Computing machinery and intelligence, 1950.
\newblock URL \url{http://cogprints.org/499/}.
\newblock One of the most influential papers in the history of the cognitive
  sciences: http://cogsci.umn.edu/millennium/final.html.

\bibitem[Turney(1996)]{turney1996identification}
P.~Turney.
\newblock The identification of context-sensitive features: A formal definition
  of context for concept learning.
\newblock In \emph{13th International Conference on Machine Learning (ICML96),
  Workshop on Learning in Context-Sensitive Domains}, pages 60--66, 1996.

\bibitem[Tuv et~al.(2006)Tuv, Borisov, and Torkkola]{tuv2006feature}
E.~Tuv, A.~Borisov, and K.~Torkkola.
\newblock Feature selection using ensemble based ranking against artificial
  contrasts.
\newblock In \emph{Neural Networks, 2006. IJCNN'06. International Joint
  Conference on}, pages 2181--2186. IEEE, 2006.

\bibitem[Van~de Cruys(2011)]{van2011two}
T.~Van~de Cruys.
\newblock Two multivariate generalizations of pointwise mutual information.
\newblock In \emph{Proceedings of the Workshop on Distributional Semantics and
  Compositionality}, pages 16--20. Association for Computational Linguistics,
  2011.

\bibitem[Van~der Laan(2006)]{vanderlaan2006statistical}
M.~J. Van~der Laan.
\newblock Statistical inference for variable importance.
\newblock \emph{The International Journal of Biostatistics}, 2\penalty0 (1),
  2006.

\bibitem[Verhaak et~al.(2010)Verhaak, Hoadley, Purdom, Wang, Qi, Wilkerson,
  Miller, Ding, Golub, Mesirov, et~al.]{verhaak2010integrated}
R.~G. Verhaak, K.~A. Hoadley, E.~Purdom, V.~Wang, Y.~Qi, M.~D. Wilkerson, C.~R.
  Miller, L.~Ding, T.~Golub, J.~P. Mesirov, et~al.
\newblock Integrated genomic analysis identifies clinically relevant subtypes
  of glioblastoma characterized by abnormalities in pdgfra, idh1, egfr, and
  nf1.
\newblock \emph{Cancer cell}, 17\penalty0 (1):\penalty0 98--110, 2010.

\bibitem[Wang and Liang(2016)]{wang2016efficient}
F.~Wang and J.~Liang.
\newblock An efficient feature selection algorithm for hybrid data.
\newblock \emph{Neurocomputing}, 193:\penalty0 33--41, 2016.

\bibitem[Watanabe(1960)]{watanabe1960information}
S.~Watanabe.
\newblock Information theoretical analysis of multivariate correlation.
\newblock \emph{IBM Journal of research and development}, 4\penalty0
  (1):\penalty0 66--82, 1960.

\bibitem[Wehenkel(2018)]{wehenkel2018characterization}
M.~Wehenkel.
\newblock \emph{Characterization of neurodegenerative diseases with tree
  ensemble methods: the case of Alzheimer's disease}.
\newblock PhD thesis, Universit{\'e} de Li{\`e}ge, Li{\`e}ge, Belgique, 2018.

\bibitem[Wehenkel et~al.(2017)Wehenkel, Bastin, Phillips, and
  Geurts]{wehenkel2017tree}
M.~Wehenkel, C.~Bastin, C.~Phillips, and P.~Geurts.
\newblock Tree ensemble methods and parcelling to identify brain areas related
  to alzheimer{\' s} disease.
\newblock In \emph{Pattern Recognition in Neuroimaging (PRNI), 2017
  International Workshop on}, pages 1--4. IEEE, 2017.

\bibitem[Wehenkel et~al.(2018)Wehenkel, Sutera, Bastin, Geurts, and
  Phillips]{wehenkel2018random}
M.~Wehenkel, A.~Sutera, C.~Bastin, P.~Geurts, and C.~Phillips.
\newblock Random forests based group importance scores and their statistical
  interpretation: application for alzheimer's disease.
\newblock \emph{Frontiers in Neuroscience - Brain Imaging Methods}, 2018.

\bibitem[White et~al.(2011)White, Chalak, and Lu]{white2011linking}
H.~White, K.~Chalak, and X.~Lu.
\newblock Linking granger causality and the pearl causal model with settable
  systems.
\newblock In \emph{NIPS Mini-Symposium on Causality in Time Series}, pages
  1--29, 2011.

\bibitem[Whitney(1971)]{whitney1971direct}
A.~W. Whitney.
\newblock A direct method of nonparametric measurement selection.
\newblock \emph{IEEE Transactions on Computers}, 100\penalty0 (9):\penalty0
  1100--1103, 1971.

\bibitem[Wienholt and Sendhoff(1996)]{wienholt1996determine}
W.~Wienholt and B.~Sendhoff.
\newblock How to determine the redundancy of noisy chaotic time series.
\newblock \emph{International Journal of Bifurcation and Chaos}, 6\penalty0
  (01):\penalty0 101--117, 1996.

\bibitem[Wold et~al.(1984)Wold, Ruhe, Wold, and Dunn]{wold1984collinearity}
S.~Wold, A.~Ruhe, H.~Wold, and W.~Dunn, III.
\newblock The collinearity problem in linear regression. the partial least
  squares (pls) approach to generalized inverses.
\newblock \emph{SIAM Journal on Scientific and Statistical Computing},
  5\penalty0 (3):\penalty0 735--743, 1984.

\bibitem[Wolpert and Macready(1999)]{wolpert1999efficient}
D.~H. Wolpert and W.~G. Macready.
\newblock An efficient method to estimate bagging's generalization error.
\newblock \emph{Machine Learning}, 35\penalty0 (1):\penalty0 41--55, 1999.

\bibitem[Wu et~al.(2018)Wu, Wang, Cao, Chen, and Xing]{wu2018fair}
Z.~Wu, H.~Wang, M.~Cao, Y.~Chen, and E.~P. Xing.
\newblock Fair deep learning prediction for healthcare applications with
  confounder filtering.
\newblock \emph{arXiv preprint arXiv:1803.07276}, 2018.

\bibitem[Yu and Liu(2004)]{yu2004efficient}
L.~Yu and H.~Liu.
\newblock Efficient feature selection via analysis of relevance and redundancy.
\newblock \emph{Journal of machine learning research}, 5\penalty0
  (Oct):\penalty0 1205--1224, 2004.

\bibitem[Zaklouta et~al.(2011)Zaklouta, Stanciulescu, and
  Hamdoun]{zaklouta2011traffic}
F.~Zaklouta, B.~Stanciulescu, and O.~Hamdoun.
\newblock Traffic sign classification using kd trees and random forests.
\newblock In \emph{Neural Networks (IJCNN), The 2011 International Joint
  Conference on}, pages 2151--2155. IEEE, 2011.

\bibitem[Zhang and Poole(1999)]{zhang99}
N.~L. Zhang and D.~L. Poole.
\newblock On the role of context-specific independence in probabilistic
  inference.
\newblock In \emph{Proceedings of the Sixteenth International Joint Conference
  on Artificial Intelligence, {IJCAI} 99, Stockholm, Sweden, July 31 - August
  6, 1999. 2 Volumes, 1450 pages}, pages 1288--1293, 1999.

\bibitem[Zhao(2000)]{zhao2000new}
G.~Zhao.
\newblock \emph{A new perspective on classification}.
\newblock PhD thesis, Utah State University, Department of Mathematics and
  Statistics, 2000.

\bibitem[Zhu et~al.(2015)Zhu, Zeng, and Kosorok]{zhu2015reinforcement}
R.~Zhu, D.~Zeng, and M.~R. Kosorok.
\newblock Reinforcement learning trees.
\newblock \emph{Journal of the American Statistical Association}, 110\penalty0
  (512):\penalty0 1770--1784, 2015.

\end{thebibliography}
    
\endgroup

%
\part{Appendices}
\appendix

\chapter{Notations and symbols}

We collect below the most important and most frequently used notation. All symbols and notations are nevertheless defined precisely in the first place when they are introduced in the main text.

In Section 2.4 and all subsequent sections, we use uppercase letters to denote both individual random variables and sets of random variables, and we reserve lower case letters to denote values of variables or configurations of subsets of variables (unless explicitly specified differently). In order to lighten the presentation, we assume that all considered random variables are discrete unless explicitly specified differently.

\section*{List of notations}\vspace{-1em}
\begin{tabularx}{\linewidth}{ l X }
$\mathcal{A}$& a learning algorithm\\
$\alpha$ & the percentage of memory devoted to previously found features in the sequential random subspace algorithm\\
$\alpha_j$ & the coefficient of $X_j$ in a linear combination of variables\\
$B$ & a subset $B \subseteq V$ of variables\\
$C_p^k=\dfrac{p!}{k!(p-k)!}$ & the number of combinations of $k$ elements from a set of $p$ elements\\
$cov(A,B)$ & the covariance of $A$ and $B$\\
$c_j$ & a class\\
$C$ & the number of classes\\
$d$ & the size of the selected feature subset in heuristic search methods ($d\le p$)\\
& the maximal depth parameter in tree-based methods\\
$D$ & the maximal depth parameter in tree-based methods\\
$\mathbf{D}=\{\mathbf{o}^i\}^{N}_{i=1}$ & a dataset of $N$ observations\\
$\mathcal{D}$ & a sample of input-output pairs $(x,y)$\\
$\tilde{\mathcal{D}}_m$ & a modified sample obtained from $\mathcal{D}$ by permuting the values of the variable $X_m$ randomly\\
$deg(X_m)$ & degree of variable $X_m$\\
$\Delta i (s,t)$ & the impurity reduction of the split $s$ at node $t$\\
$\Delta i(s^*,t)$ & the impurity reduction of the best split $s^*$ at node $t$\\
$\Delta i_{min}$ & the minimal impurity reduction\\
$Err(f)$ & the generalisation error of $f$\\
$Err(f_B)$ & the residual error, i.e., the generalisation error of the Bayes model\\
$\widehat{Err}(\hat{f}_\mathbf{LS}, \mathbf{LS})$ & the training error or the empirical risk\\
$\widehat{Err}(\hat{f}_\mathbf{LS}, \mathbf{LS}')$ & the average prediction error\\
$\widehat{Err}^{oob}$ & the out-of-bag error estimate \\
$\mathbb{E}_{X}\{X\}$ & expectation value of $X$\\
$\mathbb{E}_{X}\{f(X)\}$ & expectation of a function of a random variable $X$\\
$\mathbb{E}_{X,Y}\{f(X,Y)\}$ & expectation of a function of random variables $X$ and $Y$\\
$\mathbb{E}_{X|Y}\{f(X,Y)\}$ & expectation of a function of a random variables $X$ given $Y$\\
$\hat{f}_{\mathbf{LS}}$ & a model learnt from a learning set $\mathbf{LS}$\\
$f(\mathbf{x})$ & prediction of a model $f$ for an input vector $\mathbf{x}$\\
$f_B$ & Bayes model\\
$F$ & the subset of selected features \\
$G_T$ & a tree structure\\
$G(\cdot)$ & the impurity decrease for a generic impurity measure\\
$H(X)$ & entropy of $X$\\
$H(X,Y)$ & joint entropy of $X$ and $Y$\\
$H(X|Y)$ & conditional entropy of $X$ given $Y$\\
$Imp(X_m)$ & the mean decrease of impurity importance of $X_m$\\
$Imp^{x_c}$, $Imp^{x_c}_s$, $Imp^{|x_c|}$ & contextual mean decrease of impurity importances in the context $x_c$\\
$Imp^{X_c}(X_m)$ & contextual mean decrease of impurity importance of $X_m$ given the contextual variable $X_c$ in asymptotic conditions\\
$Imp^{freq}(X_m)$ & the feature selection frequency importance of $X_m$\\
$Imp^{mdi}(X_m)$ & the mean decrease of impurity importance of $X_m$\\
$Imp_{\infty}^{mdi}(X_m)$ & the mean decrease of impurity importance of $X_m$ in asymptotic conditions\\
$Imp_{\infty}^{mdi,K}(X_m)$ & the mean decrease of impurity importance of $X_m$ in asymptotic conditions as computed by an ensemble of trees with randomisation parameter $K$\\
$Imp_{N,N_T}^{K,D}$ & the mean decrease of impurity importance of $X_m$ as computed by an ensemble of $N_T$ trees with parameters $K$ and $D$ from a learning set of $N$ samples.\\
$Imp^{K,\alpha}_{q,\infty}(X_m)$ & the mean decrease of impurity importance of $X_m$ as computed in asymptotic conditions in the context of the sequential random subspace algorithm with parameters $\alpha,q$ \\
$Imp_f^{mda}(X_m,f,\mathcal{D},\tilde{\mathcal{D}}_m)$ & the mean decrease of accuracy estimate of $X_m$ in $f$ over $\mathcal{D}$ for a particular permutation $\mathcal{D}_m$\\
$Imp_f^{mda}(X_m,f,\mathcal{D})$ & the mean decrease of accuracy estimate of $X_m$ in $f$ over $\mathcal{D}$\\
$Imp_{Algo}^{mda}(X_m,Algo,\mathbf{LS})$ & the mean decrease of accuracy importance of $X_m$\\
$Imp_{\infty}^{mda}(X_m)$ & the mean decrease of accuracy importance of $X_m$ in asymptotic conditions\\
$Imp_{z}^{mda}(X_m,Algo,\mathbf{LS})$ & the z-score of $X_m$\\
$Imp_f^{infl}(X_j,f)$ & the relative influence of $X_j$ in $f$\\
$i(t)$ & the impurity of node $t$\\
$i_h(t)$ & the Shannon impurity of node $t$\\
$i_g(t)$ & the Gini impurity of node $t$ \\
$i_v(t)$ &  the variance estimate impurity of node $t$\\
$i_{min}$ & the minimal impurity\\
$I(X;Y)$ & mutual information of $X$ and $Y$\\
$I(X_1,\dots,X_p;Y)$ & mutual information of $X_1,\dots,X_p$ and $Y$\\
$I(X;Y|Z)$ & conditional mutual information of $X$ and $Y$ given $Z$\\
$I(X;Y;Z)$ & multivariate mutual information of $X$,$Y$,$Z$ \\
$I(X;Y;Z|B)$ & multivariate mutual information of $X$,$Y$,$Z$ given $B$ \\
$X\indep Y$  & $X$ is independent of $Y$ (the same as $X\indep \{Y\}$)\\
$X\indep Y | Z $ & $X$ is conditionally independent of $Y$ given $Z$\\
$X\nindep Y$ & $X$ is dependent on $Y$\\
$X\nindep Y |Z$ & $X$ is dependent on $Y$ given $Z$\\
$\mathbb{1}(\cdot)$ & the indicator function which equals $1$ when its argument is true, $0$ otherwise\\
$K$ & the number of folds\\
& the number of input variables drawn at each node for finding a split\\
& the number of input variables drawn for each tree in random subspace and random patches methods\\
$\mathbf{LS}$ &  a learning set (of size $N \times p$)\\
$\mathbf{LS}_{train}$ & a training set\\
$\mathbf{LS}_{test}$ & a test set\\
$\mathbf{LS}_t$ & the learning set associated to node $t$, i.e., the set of all learning samples reaching node $t$\\
$\mathbf{LS^B}$ & a bootstrap sample set \\
$\mathbf{LS^B_i}$ & a bootstrap sample set for $T_i$ \\
$\mathbf{LS^{oob}}$ & an out-of-bag sample set\\
$\mathbf{LS^{oob}_i} = \mathbf{LS} \setminus \mathbf{LS^B_i}$ & the out-of-bag sample set for $T_i$\\
$\mathcal{L}$ & a set of elements\\
$\mathcal{L}^i$ & a subset $\mathcal{L}^i \subseteq \mathcal{L}$ of elements\\
$L(f(\mathbf{x}),y)$ & a loss function\\
$L$ & the number of samples drawn for each tree in random subspace and random patches methods\\
$L^{0-1}$ & the zero-one ($0-1$) loss function\\
$L^{ae}$ & the absolute error loss function\\
$L^{se}$ & the squared error loss function\\
$M$ & a Markov boundary or Markov blanket\\
$m$ & cardinality of a (m-ary) variable\\
$\mu_A$ & the mean of $A$ (the same as $\mathbb{E}_A\{A\}$)\\
$N$ & the number of samples or observations\\

$N_t$ & the number of samples reaching node $t$ \\
$N_T$ & the number of trees in an ensemble of trees or forest\\
$n_{min}$ & the minimal number of samples required to split a node\\
$n_{leaf}$ & the minimal number of samples required in child nodes after the split\\
$N_{nodes}$ & the maximal number of nodes\\
$N_{leaf}$ & the maximal number of leaves \\
$\mathbf{o}=(o_1,\dots,o_p)$& an observation, sample, example\\
$\mathbf{o}^i=(o^i_1,\dots,o^i_p)$& the $i^{th}$ observation in $\mathbf{D}$\\
$p$ & the number of features\\
$p_{j,i}$ & the confidence level for the putative edge from $X_j$ to $X_i$ \\
& partial correlation between $X_j$ and $X_i$\\
$p(t)=\frac{N_t}{N}$ & the ratio of samples reaching node $t$\\
$p(c_j|t)$ & the proportion of samples in $\mathbf{LS}_t$ such that $y=c_j$ \\
$P_{X,Y,Z}$ & joint probability density of variables $X,Y,Z$\\
$P_{X,Y,Z}(x,y,z)$ & the value of the joint probability density $P_{X,Y,Z}$ for a combination of values of variables $X,Y,Z$\\
$P_{X,Y|Z}$ & conditional joint probability density of $X$ and $Y$ given $Z$\\
$P_{X,Y|Z}(x,y|z)$ & the value of the conditional joint density $P_{X,Y|Z}$ for a combination of values of variables $X,Y,Z$ \\
$\varphi$ & a partitioning of $\mathbf{LS}$ provided by $T^{\mathbf{LS}}$\\
$\varphi^*$ & the optimal partitioning of $\mathbf{LS}$ \\
$\mathcal{P}_{k}(V^{-m})$ & the set of all subsets of cardinality $k$ of $V^{-m}$\\
$q$ & the number of features in a subspace\\
$r$ & the number of relevant features\\
$\rho$ & the Pearson correlation coefficient\\
$\rho(A,B)$ & Pearson correlation between $A$ and $B$\\
$\sigma_A$ & the standard deviation of $A$ \\
$\Sigma$ & the covariance matrix\\
$\Sigma^{-1}$ & the precision or concentration matrix\\
$\Sigma_{ij}$ & the element $(i,j)$ of the covariance matrix $\Sigma$\\
$s$ & a split\\
$s_t$ & the split associated to node $t$\\
$s_t^*$ & the best split $s_t$ in $S_t$\\
$|s_t|$ & the cardinality of a split, i.e., the number of created subsets or the number of children of node $t$\\
$S_t$ & the set of all candidate splitting function for node $t$ (on any feature) \\ 
$S_{t,m}$ & the set of all candidate splitting function for node $t$ on feature $X_m$ \\
$t$ & a node in a decision tree\\
$t_0$ & the root node\\
$t_L$ & the left child of a node $t$ in a binary decision tree\\
$t_R$ & the right child of a node $t$ in a binary decision tree\\
$t_{x_m}$ & the successor node of $t$ corresponding to value $x_m$ of $X_m$ \\
$T$ & a decision tree model\\
& the number of iterations in the sequential random subspace algorithm\\
$T^{\mathbf{LS}}$ & a decision tree model learnt on $\mathbf{LS}$ \\
$T^*$ & the best subtree $T^* \subseteq T$\\
$\mathbf{T}=\{T_i|i=1,\dots,N_T\}$ & a random forest model made of a set of $N_T$ different trees $T_i$ \\
$\tau $ & cut-point, split value, or threshold value of a split\\
$V=\{X_1,\dots,X_p\}$ & set of all input features\\
$V^{-i}=V\setminus \{X_i\}$ & set of all input features $V$ without $X_i$\\
$|V|$ & cardinality of a set of variables, i.e., the number of variables in $V$\\
$v(s_t)$ & a split variable, i.e., the variable used for the split $s_t$\\
$var\{Y|B=b\}$ & empirical variance of $Y$ given $B=b$\\
$X$ & an input feature or variable\\ 
$X_i$ & the $i^{th}$ input feature or variable (of $V$)\\
$X_c$ & a context variable \\
$|X|$ & cardinality of a variable, i.e., the number of possible values for $X$\\
$\mathcal{X}$& the input space\\
$\mathcal{X}_i$& an input subspace (i.e., $\mathcal{X}_i \subseteq \mathcal{X}$)\\
$\mathcal{X}_t$& the input subspace associated to node $t$\\
$\mathcal{X}^s$& the part of the input subspace that satisfies the test $s$\\
$\mathcal{X}^{\bar{s}} = \mathcal{X} \setminus \mathcal{X}^s$& the part of the input subspace that does not satisfy the test $s$\\
$\mathbf{x}=\{x_1,\dots,x_p\}$& a value of the vector of input variables\\
$\mathbf{x^i}$ & $i^{th}$ sample of a learning set $\mathbf{LS}$\\
$Y$ & output feature, target variable\\
$\mathcal{Y}$& output space\\
$y$ & a value of the output variable $Y$\\
$y^i$ & the value of variable $Y$ for the $i^{th}$ sample \\
$\hat{y}$ & approximated value of $y$\\
$\hat{y}_t$ & the value associated to node $t$\\
\end{tabularx}

\vspace{1em}
\section*{List of symbols}
\vspace{-1em}
\begin{tabularx}{\linewidth}{ l X }
	$\cup$ & union\\
	$\cap$ & intersection\\
	$\setminus$ & difference\\
	$\neg$ & logical not\\
	$\oplus$ & logical exclusive-or (xor)\\
	$\hat{}$ & estimation, approximation of a quantity\\
	$\indep$ & independence\\
	$\nindep$ & dependence\\
	$\mathbb{1}$ & indicator function\\
\end{tabularx}
\chapter{Notations, and definitions of entropies and mutual information}\label{app:information_theory}

To be self-contained, we first recall several definitions from information theory (see \cite{cover2012elements}, for further properties).

We suppose that we are given a probability space $(\Omega, {\cal E}, \mathbb{P})$ and  consider random variables
defined on it taking a finite number of possible values. We use upper case letters to denote such random variables (e.g. $X, Y, Z, W \ldots$)  and calligraphic letters (e.g. $\cal X, Y, Z, W \ldots$) to denote their image sets (of finite cardinality), and lower case letters (e.g. $x, y, z, w \ldots$) to denote one of their possible values.
For a (finite) set of (finite) random variables $ X = \{X_{1}, \ldots , X_{i}\}$, we denote by $P_{X}(x) = P_{X}(x_{1}, \ldots , x_{i})$ the probability $\mathbb{P}(\{ \omega \in \Omega \mid  \forall \ell : 1, \ldots, i: X_{\ell}(\omega) =x_{\ell}\})$, and by ${\cal X} = {\cal X}_{1} \times \cdots \times {\cal X}_{i}$ the set of joint configurations of these random variables. Given two sets of random variables, $X = \{X_{1}, \ldots , X_{i}\}$ and $Y=\{Y_{1}, \ldots , Y_{j}\}$, we denote by $P_{X \mid Y}(x \mid y) = {P_{X, Y} (x,  y)}/ {P_{Y}(y)}$ the conditional density of $X$ with respect to $Y$.\footnote{To avoid problems, we suppose that all probabilities are strictly positive, without fundamental limitation.}

With these notations, the joint (Shannon) entropy of a set of random variables $X =\{X_{1}, \ldots , X_{i}\}$ is thus defined by
\begin{equation*}
H(X)  = - \sum_{x \in {\cal X}}P_{X} (x)\log_{2}P_{X }(x),
\end{equation*}
while the mean conditional entropy of a set of random variables $X = \{X_{1}, \ldots , X_{i}\}$, given the values of another set of random variables $Y=\{Y_{1}, \ldots , Y_{j}\}$ is defined by
\begin{equation*}
H(X\mid Y) = - \sum_{x \in {\cal X}} \sum_{y \in {\cal Y}} P_{X, Y} (x, y) \log_{2} P_{X \mid Y} (x  \mid y).
 \end{equation*}
The mutual information among the set of random variables $X =\{X_{1}, \ldots , X_{i}\}$ and the set of random variables $Y=\{Y_{1}, \ldots , Y_{j}\}$ is defined by
 \begin{eqnarray*}
 I(X; Y) & = &- \sum_{x \in {\cal X}} \sum_{y \in {\cal Y}} P_{X, Y} (x, y) \log_{2} \frac{P_{X}(x) P_{Y}(y)}{P_{X,Y}(x,y)} \\
 & = & H(X) - H(X \mid Y) \\
 & = &  H(Y) - H(Y \mid X).
 \end{eqnarray*}
 The mean conditional mutual information among the set of random variables $X =\{X_{1}, \ldots , X_{k}\}$ and the set of random variables $Y=\{Y_{1}, \ldots , Y_{j}\}$, given the values of a third set of random variables $Z=\{Z_{1}, \ldots , Z_{i}\}$, is defined by
 \begin{eqnarray*}
 I(X; Y \mid Z) &= &H(X \mid Z) - H(X \mid Y, Z)\\
 & = & H(Y \mid Z) - H(Y \mid X, Z)\\
& = & - \sum_{x \in {\cal X}} \sum_{y \in {\cal Y}} \sum_{z \in {\cal Z}} P_{X, Y, Z} (x, y, z) \log_{2} \frac{P_{X \mid Z}(x \mid z) P_{Y\mid Z}(y \mid z)}{P_{X,Y \mid Z}(x,y \mid z)} .
\end{eqnarray*}

We also recall the chaining rule
\begin{equation*}
I(X, Z ; Y \mid W ) = I(X; Y \mid W  ) + I( Z ; Y \mid W, X),
\end{equation*}
and the symmetry of the (conditional) mutual information among sets of random variables
\begin{equation*}
I(X ; Y \mid Z) = I(Y ;  X  \mid Z).
\end{equation*}
\chapter{Digit recognition problem}
\label{app:digit}

The problem of digit recognition was introduced in \citep{breiman1984classification} and is used in several occasions in this thesis for illustrating variable importances computed from tree-based methods.

It models a seven-segment display displaying numerals using horizontal and vertical lights in on-off combinations, as illustrated in Figure \ref{fig:app;digit:numerals}. 

\begin{figure}[htbp]
	\centering
\begin{tikzpicture}[scale=0.75]
\foreach \x in{0,...,4} {
	\coordinate (L\x) at({(\x-0)*2.5} ,-3*1);
	\SSGNb{L\x}{\x}
}
\end{tikzpicture}\\[1ex]
\begin{tikzpicture}[scale=0.75]
\foreach \x in{5,...,9} {
	\coordinate (L\x) at({(\x-0)*2.5} ,-3*1);
	\SSGNb{L\x}{\x}
}

\end{tikzpicture}
\caption{Numerals as represented by a 7-segment display.}
\label{fig:app;digit:numerals}
\end{figure}
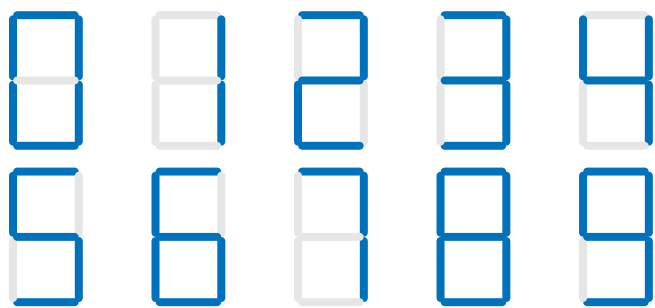

Variables of this problem are defined as follows: Let $Y$ be a random variable taking its value in $\{0, 1, \dots, 9\}$ with equal probability and let $X_1, \dots, X_7$ be binary variables, each representing the on-off state of one segment as shown in Figure \ref{fig:app:digit:X1toX7},  whose values are each determined univocally given the corresponding value of $Y$ in Table \ref{tab:app:digit:values}.

\begin{figure*}[htbp]
	\begin{minipage}[b]{\textwidth}
			\begin{minipage}[b]{0.4\textwidth}
			\centering
			\begin{tikzpicture}[scale=1.5]
			\coordinate(A)at(-6em,0);
			\SSGLeg{A}
			
			\end{tikzpicture}
			\captionof{figure}{Correspondence between segments and input variables.}
			\label{fig:app:digit:X1toX7}
			
		\end{minipage}
		\hfill
		\begin{minipage}[b]{0.4\textwidth}
				\centering
				\begin{tabular}{ c | c c c c c c c }
					\hline
					$y$ & $x_1$ & $x_2$ & $x_3$ & $x_4$ & $x_5$ & $x_6$ & $x_7$ \\
					\hline
					0 & 1 & 1 & 1 & 0 & 1 & 1 & 1 \\
					1 & 0 & 0 & 1 & 0 & 0 & 1 & 0 \\
					2 & 1 & 0 & 1 & 1 & 1 & 0 & 1 \\
					3 & 1 & 0 & 1 & 1 & 0 & 1 & 1 \\
					4 & 0 & 1 & 1 & 1 & 0 & 1 & 0 \\
					5 & 1 & 1 & 0 & 1 & 0 & 1 & 1 \\
					6 & 1 & 1 & 0 & 1 & 1 & 1 & 1 \\
					7 & 1 & 0 & 1 & 0 & 0 & 1 & 0 \\
					8 & 1 & 1 & 1 & 1 & 1 & 1 & 1 \\
					9 & 1 & 1 & 1 & 1 & 0 & 1 & 1 \\
					\hline
				\end{tabular}
				\captionof{table}{Values of $Y, X_1, ..., X_7$.}
				\label{tab:app:digit:values}
		\end{minipage}
	\hfill
	\hfill
	\end{minipage}
\end{figure*}

\end{document}